\providecommand{\fref}[1]{Figure~\ref{fig:#1}}
\newcommand{\charfn}{\mathds{1}}
\renewcommand{\P}{\mathbb{P}}
\newcommand{\E}{\mathbb{E}}
\newcommand{\Var}{\text{Var}}
\newcommand{\cN}{\mathcal{N}}
\newcommand{\Z}{\mathbb{Z}}
\newcommand{\R}{\mathbb{R}}
\newcommand{\C}{\mathbb{C}}
\newcommand{\N}{\mathbb{N}}
\renewcommand{\S}{\mathbb{S}}
\newcommand{\eps}{\varepsilon} 
\def\id{{\mathbf I}}
\newcommand{\<}{\langle}
\renewcommand{\>}{\rangle}
\newcommand{\diag}{\text{diag}}
\newcommand{\op}{{\rm op}}
\def\sT{{\mathsf T}}
\def\bzero{{\boldsymbol 0}}
\DeclareMathOperator*{\argmin}{arg\,min}
\def\simiid{{\stackrel{i.i.d.}{\sim}}}
\newtheorem{theorem}{Theorem}
\newtheorem*{theorem*}{Theorem}
\newtheorem{lemma}{Lemma}
\newtheorem{assumption}{Assumption}
\newtheorem{definition}{Definition}
\newtheorem{proposition}{Proposition}
\theoremstyle{definition}
\newtheorem{remark}{Remark}[section]
\DeclareSymbolFont{rsfs}{U}{rsfs}{m}{n}
\DeclareSymbolFontAlphabet{\mathscrsfs}{rsfs}
\def\bA{{\boldsymbol A}}
\def\bB{{\boldsymbol B}}
\def\bC{{\boldsymbol C}}
\def\bD{{\boldsymbol D}}
\def\bE{{\boldsymbol E}}
\def\bF{{\boldsymbol F}}
\def\bG{{\boldsymbol G}}
\def\bH{{\boldsymbol H}}
\def\bK{{\boldsymbol K}}
\def\bL{{\boldsymbol L}}
\def\bP{{\boldsymbol P}}
\def\bQ{{\boldsymbol Q}}
\def\bR{{\boldsymbol R}}
\def\bS{{\boldsymbol S}}
\def\bT{{\boldsymbol T}}
\def\bU{{\boldsymbol U}}
\def\bV{{\boldsymbol V}}
\def\bW{{\boldsymbol W}}
\def\bX{{\boldsymbol X}}
\def\bZ{{\boldsymbol Z}}
\def\ba{{\boldsymbol a}}
\def\be{{\boldsymbol e}}
\def\bg{{\boldsymbol g}}
\def\bj{{\boldsymbol j}}
\def\bk{{\boldsymbol k}}
\def\bu{{\boldsymbol u}}
\def\bv{{\boldsymbol v}}
\def\bw{{\boldsymbol w}}
\def\bx{{\boldsymbol x}}
\def\by{{\boldsymbol y}}
\def\bf{{\boldsymbol f}}
\def\bmu{{\boldsymbol \mu}}
\def\bbeta{{\boldsymbol \beta}}
\def\beps{{\boldsymbol \eps}}
\def\bpsi{{\boldsymbol \psi}}
\def\bphi{{\boldsymbol \phi}}
\def\btheta{{\boldsymbol \theta}}
\def\bchi{{\boldsymbol \chi}}
\def\bvartheta{{\boldsymbol \vartheta}}
\def\bDelta{{\boldsymbol \Delta}}
\def\bLambda{{\boldsymbol \Lambda}}
\def\bPsi{{\boldsymbol \Psi}}
\def\bPhi{{\boldsymbol \Phi}}
\def\bSigma{{\boldsymbol \Sigma}}
\def\bTheta{{\boldsymbol \Theta}}
\def\bOmega{{\boldsymbol \Omega}}
\def\bPi{{\boldsymbol \Pi}}
\def\hba{{\hat {\boldsymbol a}}}
\def\hf{{\hat f}}
\def\de{{\rm d}}
\def\Trace{{\rm Tr}}
\def\de{{\rm d}}
\def\Unif{{\rm Unif}}
\def\RF{{\rm RF}}
\def\cV{{\mathcal V}}
\def\cG{{\mathcal G}}
\def\cO{{\mathcal O}}
\def\cC{{\mathcal C}}
\def\cQ{{\mathcal Q}}
\def\cL{{\mathcal L}}
\def\cF{{\mathcal F}}
\def\cE{{\mathcal E}}
\def\cS{{\mathcal S}}
\def\cV{{\mathcal V}}
\def\cG{{\mathcal G}}
\def\cO{{\mathcal O}}
\def\cH{{\mathcal H}}
\def\cA{{\mathcal A}}
\def\Unif{{\sf Unif}}
\def\normal{{\sf N}}
\def\proj{{\mathsf P}}
\def\RF{{\sf RF}}
\def\NN{{\sf NN}}
\def\reals{{\mathbb R}}
\def\naturals{{\mathbb N}}
\def\normal{{\sf N}}
\def\proj{{\mathsf P}}
\def\Unif{{\sf Unif}}
\def\normal{{\sf N}}
\def\proj{{\mathsf P}}
\def\RF{{\sf RF}}
\def\NN{{\sf NN}}
\def\reals{{\mathbb R}}
\def\naturals{{\mathbb N}}
\def\proj{{\mathsf P}}
\def\App{{\rm App}}
\def\hba{{\hat {\boldsymbol a}}}
\def\hf{{\hat f}}
\def\cE{{\mathcal E}}
\def\cX{{\mathcal X}}
\def\cF{{\mathcal F}}
\def\cS{{\mathcal S}}
\def\He{{\rm He}}
\def\Trace{{\rm Tr}}
\def\de{{\rm d}}
\def\Unif{{\rm Unif}}
\def\RF{{\rm RF}}
\def\cE{{\mathcal E}}
\def\normal{{\sf N}}
\def\bDelta{{\boldsymbol \Delta}}
\def\cX{{\mathcal X}}
\def\RF{{\rm RF}}
\def\bA{{\boldsymbol A}}
\def\btheta{{\boldsymbol \theta}}
\def\bTheta{{\boldsymbol \Theta}}
\def\bLambda{{\boldsymbol \Lambda}}
\def\Tr{{\rm Tr}}
\def\cV{{\mathcal V}}
\def\bP{{\boldsymbol P}}
\def\diag{{\rm diag}}
\def\bS{{\boldsymbol S}}
\def\bD{{\boldsymbol D}}
\def\bPsi{{\boldsymbol \Psi}}
\def\bL{{\boldsymbol L}}
\def\tbu{\Tilde \bu}
\def\tbZ{\Tilde \bZ}
\def\tbpsi{\Tilde \bpsi}
\def\tbD{\Tilde \bD}
\def\bsigma{{\boldsymbol \sigma}}
\def\cbpsi{\breve \bpsi}
\def\hbpsi{\hat \bpsi}
\def\hbPsi{\hat \bPsi}
\def\hf{\hat f}
\def\bR{{\boldsymbol R}}
\def\bpsi{{\boldsymbol \psi}}
\def\bC{{\boldsymbol C}}
\def\tx{\Tilde x}
\def\bgamma{\boldsymbol{\gamma}}
\def\tbC{\Tilde \bC}
\def\tbeta{\Tilde \beta}
\def\tbbeta{\Tilde \bbeta}
\def\cB{\mathcal{B}}
\def\Im{{\rm Im}}
\def\bq{{\boldsymbol q}}
\def\tbPhi{\Tilde \bPhi}
\def\tbPsi{\Tilde \bPsi}
\def\cbPsi{\breve \bPsi}
\def\opnorm#1{\lVert#1\rVert_{\op}}%
\newcommand{\taumeasure}{\tau_{d,1}}
\def\usp{q}
\def\tusp{\Tilde q}
\def\tbR{\Tilde \bR}
\def\tbx{\Tilde \bx}
\def\tbw{\Tilde \bw}
\def\tbW{\Tilde \bW}
\def\tbX{\Tilde \bX}
\def\tbB{\Tilde \bB}
\def\tbC{\Tilde \bC}
\def\tbD{\Tilde \bD}
\def\tbE{\Tilde \bE}
\def\tbS{\Tilde \bS}
\def\tbH{\Tilde \bH}
\def\tM{\Tilde M}
\def\tbw{\Tilde \bw}
\def\bbmu{\bar \bmu}
\def\hbA{\hat \bA}
\def\diag{{\rm diag}}
\def\tlambda{\Tilde \lambda}
\def\tbU{\Tilde \bU}
\def\RF{{\sf RF}}
\def\KRR{{\sf KRR}}
\def\App{{\sf App}}
\def\barlambda{\bar \lambda}
\def\train{{\sf train}}
\def\test{{\sf test}}
\def\lnorm{\sf norm}
\def\cR{\mathcal{R}}
\def\teq#1{\overset{#1}{=}}%
\def\tleq#1{\overset{#1}{\leq}}%
\def\gratio{\theta}
\def\sfL{{\sf L}}
\def\sfR{{\sf R}}
\def\sfB{{\sf B}}
\def\sfV{{\sf V}}
\colorlet{linkequation}{blue}
\title{Asymptotics of Random Feature Regression \\Beyond the Linear Scaling Regime}
\author{Hong Hu\thanks{Department of Statistics and Data Science, Wharton School, University of Pennsylvania},\;\; Yue M. Lu\thanks{John A. Paulson School of Engineering and Applied Sciences, Harvard University}, \;\;Theodor Misiakiewicz\thanks{Toyota Technological Institute at Chicago}
}
\date{}
\begin{document}

\maketitle

\begin{abstract}

Recent advances in machine learning have been achieved by using overparametrized models trained until near interpolation of the training data. It was shown, e.g., through the double descent phenomenon, that the number of parameters is a poor proxy for the model complexity and generalization capabilities. This leaves open the question of understanding the impact of parametrization on the performance of these models. How does model complexity and generalization depend on the number of parameters $p$? How should we choose $p$ relative to the sample size $n$ to achieve optimal test error? 

In this paper, we investigate the example of random feature ridge regression (RFRR). This model can be seen either as a finite-rank approximation to kernel ridge regression (KRR), or as a simplified model for neural networks trained in the so-called lazy regime. 
We consider covariates uniformly distributed on the $d$-dimensional sphere and compute sharp asymptotics for the RFRR test error in the high-dimensional polynomial scaling, where $p,n,d \to \infty$ while $p/ d^{\kappa_1}$ and $n / d^{\kappa_2}$ stay constant, for all $\kappa_1 , \kappa_2 \in \R_{>0}$. These asymptotics precisely characterize the impact of the number of random features and regularization parameter on the test performance. In particular, RFRR exhibits an intuitive trade-off between approximation and generalization power. For $n = o(p)$, the sample size $n$ is the bottleneck and RFRR achieves the same performance as KRR (which is equivalent to taking $p = \infty$). On the other hand, if $p = o(n)$, the number of random features $p$ is the limiting factor and RFRR test error matches the approximation error of the random feature model class (akin to taking $n = \infty$). Finally, a double descent appears at $n= p$, a phenomenon that was previously only characterized in the linear scaling $\kappa_1 = \kappa_2 = 1$.  This completes the picture initiated in \cite{ghorbani2021linearized,mei2022generalization,mei2022generalizationRF,xiao2022precise}.
\end{abstract}

\tableofcontents

\section{Introduction}

Consider the supervised learning problem in which we collect $n$ i.i.d.~training data points $\{  y_i , \bx_i \}_{i \leq n}$, from a common probability distribution on $\R \times \cX$. The goal is to learn a model $\hf : \cX \to \R$ which, given a new covariate vector $\bx_{\text{new}}$, predicts the response $y_{\text{new}}$ using $\hf (\bx_{\text{new}})$. To solve this problem, a typical approach 
proceeds as follows. First, select a parametric class of models  $\cF := \{ f ( \cdot ; \btheta ): \btheta \in \bTheta \}$ parameterized by a $p$-dimensional vector $\btheta \in \bTheta \subseteq \R^p$. Second, fit a predictor $\hat f \equiv f (\cdot ; \hat \btheta)$ on the $n$ training samples by minimizing a (possibly regularized) empirical risk over $\btheta \in \bTheta$, often via gradient descent or its variants. When choosing the model class (in particular the number $p$ of parameters) and the training algorithm (e.g., regularization or learning schedule), the statistician  has to keep two goals in mind: 1) the class of models must be expressive enough to approximate the relationship between covariate and response; and 2) the predictor needs to generalize to new data. 

A classical approach to managing these two goals---the \textit{uniform convergence} paradigm developed in the second half of the 20th century \cite{vapnik1999overview}---recommends to control the model complexity (e.g., the number $p$ of parameters or the norm $\| \btheta\|_2$) in order to balance the approximation and generalization errors. More precisely, consider the square loss and denote the test and training errors
\begin{equation*}
    R (f) := \E_{(\bx,y)} \Big[ \big(y - f (\bx) \big)^2\Big]\, , \qquad \hat R_n (f) := \frac{1}{n} \sum_{i \in [n]} \big( y_i - f(\bx_i) \big)^2 \, .
\end{equation*}
We consider a family of nested model classes $\{ \cF (B) \}_{B \in \R_{>0}}$, i.e., $\cF(B) \subseteq \cF (B')$ for $B < B'$. We can think about $\cF (B)$ as containing the set of models with complexity bounded by $B$. Denote $\hf_B$ the model obtained by minimizing the empirical risk $\hat R_n$ over functions in $\cF(B)$.  A classical decomposition \cite{bottou2007tradeoffs} yields the following upper bound on the test error
\begin{equation}\label{eq:approx_gen_decomposition}
    R ( \hf_B ) \leq \inf_{f \in \cF_B} R (f) + 2 \sup_{f \in \cF(B)} \Big\vert \hat R_n ( f) - R (f) \Big\vert \, .
\end{equation}
The first term corresponds to the \textit{approximation error}, which measures how well we can approximate the response with functions in $\cF (B)$. The second term corresponds to the \textit{generalization error}, which measures the uniform deviation between the empirical and population risks over $\cF(B)$. The two terms are respectively decreasing and increasing in $B$, and the classical recommendation selects $B$ such as to balance the two.

On the other hand, recent successes in machine learning have been achieved using highly overparametrized models, namely multi-layer neural networks. Such models operate in a regime that is very different than the classical uniform convergence paradigm. These models are very expressive, with a number of parameters much larger than the number of training samples, and can perfectly fit the training labels, even when they are replaced by pure noise \cite{zhang2021understanding}.  And yet, they show excellent performance on test data
despite being trained with no apparent model complexity control, e.g., until they interpolate the training data $\hat R_n (\hf ) =0$. This is strikingly illustrated by the \textit{double descent} phenomenon, which was pointed out in a number of models including neural networks, random feature models, and random forests \cite{belkin2018understand,belkin2019reconciling,belkin2019does,advani2020high,hastie2022surprises,mei2022generalizationRF}. As the number of parameters increases, the test error first follows the classical U-shaped bias-variance curve, with an initial decrease due to a reduction in model misspecification, followed by an increase due to variance explosion as it approaches the interpolation threshold $p = n$, i.e., the threshold above which the training error vanishes. However, after a peak at the interpolation threshold, the test error decreases again and often becomes much smaller than the minimum test error achieved in the underparametrized regime. This phenomenon illustrates how the number of parameters $p$ is a poor proxy for model complexity and generalization capabilities in overparametrized models. 
This leaves open two fundamental questions:
\begin{itemize}
    \item[(1)] How does the parametrization impact the performance of these models? In particular, how does the model complexity and the generalization error depend on finite $p$?
    \item[(2)] How should we choose $p$ relative to the sample size $n$ to achieve optimal test error?
\end{itemize}

In this paper, we consider the class of random feature models \cite{neal1995bayesian,rahimi2008random,huang2006extreme} and provide precise answers to both questions in the high-dimensional regime. Random feature models are given by
\begin{equation}\label{eq:RF_model_class}
\cF_{\RF} ( \bW) := \Big\{ h_{\RF} ( \bx ; \ba  ) = \frac{1}{\sqrt{p}} \sum_{j \in [p]} a_j \sigma ( \< \bx , \bw_j \> ) : \,\,\, a_j \in \R, \forall j \in [p] \Big\} \, ,
\end{equation}
where $\bW = [ \bw_1 , \ldots , \bw_p ] \in \R^{p \times d}$ is a weight matrix whose $j^{\text{th}}$ row $\bw_j$ is chosen randomly and independently of the data. By analogy with neural networks, we will call $\sigma$ the activation function. To learn the coefficients $\ba = (a_j )_{j \in [p]}$, we perform ridge regression with respect to the random feature model class:
\begin{equation}\label{eq:def_RFRR}
\hat \ba_{ \lambda} = \argmin_{\ba \in \R^p} \Big\{ \sum_{i \in [n]} \big( y_i - h_{\RF} ( \bx_i ; \ba  ) \big)^2 + \lambda \| \ba \|_2^2 \Big\}\, .
\end{equation}
We will refer to this scheme as \textit{random feature ridge regression} (RFRR).

The random feature model \eqref{eq:RF_model_class} can be viewed either as 1) a finite-dimensional approximation of kernel methods; or as 2) a stylized model for two-layer neural networks trained in the linear (lazy) regime. It is useful to detail these connections below.

\vspace{0.3cm}

\noindent\emph{Approximation to kernel methods.} In the case of kernel methods, models belong to $\cH$ a reproducing kernel Hilbert space (RKHS). We denote $\< \cdot , \cdot\>_{\cH}$ and $\| \cdot \|_{\cH}$ the associated scalar product and norm. The RKHS $\cH$ is often defined implicitly via either a positive definite kernel $K : \cX \times \cX \to \R$, or a feature map $\psi : \cX \to ( \cF , \< \cdot , \cdot \>_{\cF} )$ that embeds data in a Hilbert space, with the correspondence $K (\bx , \bx ' ) = \< \psi (\bx) , \psi (\bx') \>_{\cF}$. 
The kernel ridge regression (KRR) estimator is given by
\begin{equation}
    \hf_{\lambda} := \argmin_{f \in \cH} \Big\{ \sum_{i \in [n]} \big( y_i - f(\bx_i) \big)^2 + \lambda \| f \|_{\cH}^2 \Big\} \, .
\end{equation}
We can view RFRR as a specific KRR with mapping $\psi (\bx) = \frac{1}{\sqrt{p}} [ \sigma ( \< \bw_1 , \bx \>) , \ldots , \sigma (\< \bw_p , \bx \>) ] \in \R^p$ and associated kernel
\begin{equation}\label{eq:finite_width_kernel}
K_p ( \bx , \bx ' ) = \< \psi (\bx_1 ) , \psi (\bx_2) \> = \frac{1}{p} \sum_{j \in [p]} \sigma (\< \bx , \bw_j \>) \sigma ( \< \bw_j ,  \bx ' \>) \, .
\end{equation}
The kernel $K_p$ is random, because of the randomness in the weights $\bw_1 , \ldots , \bw_p$, and finite rank (at most) $p$. By law of large number, however, this kernel function concentrates for large $p$ on its deterministic expectation 
\begin{equation}\label{eq:infinite_width_kernel}
K ( \bx , \bx') := \E_{\bw} [K_p ( \bx , \bx ' ) ] = \E_{\bw} [ \sigma (\< \bx , \bw \>) \sigma ( \< \bw ,  \bx ' \>) ]\, .
\end{equation}
Hence the solution of RFRR converges to the deterministic (conditional on the data) KRR solution with kernel \eqref{eq:infinite_width_kernel} as $p \to \infty$. Nonetheless, if $p \ll n$, then RFRR has much lower computational complexity as it deals with matrices $n \times p$, which was the original motivation for RFRR \cite{rahimi2008random}.

\vspace{0.3cm}

\noindent\emph{Neural networks in the linear regime.} A line of research has shown that neural networks trained in a certain optimization regime can be well approximated by their first order Taylor expansion around their random initialization \cite{jacot2018neural,li2018learning,du2018gradient,lee2019wide,du2019gradient,allen2019convergence2,arora2019fine}. Specifically, consider $\bx \mapsto f_{\NN} (\bx ; \btheta)$ a neural network with weights $\btheta \in \R^p$ trained using gradient descent from a random initialization $\btheta^0$. It was shown that for a certain scaling of the parameters at initialization and sufficiently wide neural networks, the weights stay close to their initialization throughout the dynamics. Subsequently, the neural network can be effectively replaced by its linearization around $\btheta^0$
\[
f_{\NN} (\bx ; \btheta) \approx f_{\NN} (\bx ; \btheta^0) + \< \btheta - \btheta^0 , \nabla_{\btheta}  f_{\NN} (\bx ; \btheta^0)  \> \, .
\]
For simplicity, we can set $f_{\NN} (\bx ; \btheta^0) =0$ (this term is not trained and only play the role of an offset) and the linearized model can be written as $\bx \mapsto \<\ba , \nabla_{\btheta}  f_{\NN} (\bx ; \btheta^0) \>$, which is known as the \textit{neural tangent} (NT) model. Hence neural networks trained in the linear regime converge to the KRR solution associated to the feature map $\psi (\bx) =  \nabla_{\btheta}  f_{\NN} (\bx ; \btheta^0)$ and with regularization parameter $\lambda \to 0^+$, the minimum RKHS-norm interpolating solution.  The RF model can be seen as a simplified NT model, associated to a two-layer neural network where the gradient is only taken with respect to the second layer weights $\ba$. 

While RF models are much simpler models than neural networks, where both $\ba, \bW$ are trained jointly, they share some of the key surprising behavior: double descent \cite{belkin2019reconciling,mei2022generalizationRF}, benign overfitting \cite{mei2022generalization} and multi-phase learning curves \cite{mei2022generalization,xiao2022precise}. Furthermore, it was shown in \cite{ghorbani2021linearized,montanari2022interpolation} that some properties of the RF model generalizes to NT models provided that we match the number of parameters in the two models.

\vspace{0.3cm}

In this paper, we consider data uniformly distributed on the sphere $\S^{d-1} (\sqrt{d}) := \{ \bx \in \R^d : \| \bx \|_2 = \sqrt{d} \} $ of radius $\sqrt{d}$ in $\R^d$. Our goal is to learn a target function $f_* \in L^2 (\S^{d-1}(\sqrt{d}))$ given i.i.d.~data $\{ \bx_i, y_i \}_{i \in [n]}$, with $\bx_i \sim_{iid} \Unif(\S^{d-1} (\sqrt{d}))$ and $y_i = f_*(\bx_i) + \eps_i$, where $\eps_i$ are independent noise with $\E [ \eps_i ] = 0$, $\E[\eps_i^2] = \rho_\eps^2$, and $\E [ \eps_i^4] < \infty$. We fit this data using RFRR \eqref{eq:def_RFRR}, where the weights are fixed i.i.d.~uniformly at random on the unit sphere, i.e., $\bw_j \sim_{iid} \Unif ( \S^{d-1} (1))$. We will be interested in the excess test error
\begin{equation}\label{eq:test_error_RF}
    R_{\test} ( f_* ; \bX , \bW , \beps , \lambda ) =\E_{\bx} \Big[ \big( f_* (\bx) -  h_{\RF} (\bx ; \hat \ba_{\lambda} ) \big)^2 \Big] \, ,
\end{equation}
where we made explicit the dependency of the test error on the training data $\bX := [\bx_1 , \ldots , \bx_n]^\sT \in \R^{n \times d}$, the feature weights $\bW := [\bw_1 , \ldots , \bw_p]^\sT \in \R^{p \times d}$, the label noise $\beps := (\eps_1 , \ldots , \eps_n) \in \R^n $ and the regularization parameter $\lambda > 0 $. Following previous works, we will consider for simplicity target functions where we randomize the high-frequency coefficients. Our statements will hold with high probability over the randomness in the target function (equivalently, our statements will hold for `typical' functions in this function class). We conjecture that our results hold for any fixed target function $f_* \in L^2 (\S^{d-1}(\sqrt{d}))$, and leave it to future work.

Our main result is an asymptotic characterization of the RFRR test error \eqref{eq:test_error_RF} in the high-dimensional \textit{polynomial scaling}, where $p,n,d \to \infty$ with
\[
p/d^{\kappa_1} \to \theta_1\, \;\;\;\; \text{ and } \;\;\;\; n / d^{\kappa_2}\to \theta_2 \, ,
\]
for all $\kappa_1 , \kappa_2, \theta_1, \theta_2 \in \R_{>0}$. The convergence holds in probability over the randomness in $\bX,\bW,\beps, f_*$.
The interpolating solution, and the connection to neural networks in the linear regime, corresponds to taking $\lambda \to 0^+$. In our formal results, we will require $\lambda >0$ and can only consider $\lambda \to 0^+$ after $p,n,d \to \infty$. We leave to future work to show that  the limits $\lambda \to 0^+$ and $p,n,d \to \infty$ commute. Note that in the overparametrized regime, the training error is $O(\lambda^2)$: we can set $\lambda$ small such that the training error is much smaller than the test error, i.e., outside the uniform convergence paradigm \eqref{eq:approx_gen_decomposition}. In particular, our results hold for fixed $\lambda$ and do not require $\lambda$ carefully tuned as in previous work \cite{caponnetto2007optimal,rahimi2008weighted,rudi2017generalization,wainwright2019high}.

This setting was previously studied in a string of papers \cite{ghorbani2021linearized,mei2022generalization,mei2022generalizationRF,xiao2022precise}. \cite{ghorbani2021linearized} computed the test error of RFRR in the case of either $n = \infty$ and $\kappa_1 \not\in \naturals$ (approximation error) or $p = \infty$ and $\kappa_2 \not\in \naturals$ (KRR test error), and showed that the risk curves follow a staircase decay where polynomial approximations to the target function of increasing degree are incrementally learned as $n$ or $p$ increases. These results were extended in \cite{mei2022generalization} to $p,n$ both finite with $\kappa_1,\kappa_2 \not\in \naturals $ and $\kappa_1 \neq \kappa_2$. The degree of the polynomial fit to the target function is then given by $\min ( \lfloor \kappa_1 \rfloor , \lfloor \kappa_2 \rfloor )$. The case $\kappa_1 = \kappa_2 = 1$ was investigated separately in \cite{mei2022generalizationRF}, where it was shown that RFRR presents a double descent at the interpolation threshold $n=p$. Finally, the recent work \cite{xiao2022precise} computed the asymptotic risk of KRR ($p = \infty $) at the transition regions $\kappa_2 \in \naturals$, and showed that peaks in the risk curves can appear at these scalings. 

The present paper completes this line of work by computing the RFRR test error when both $\kappa_1,\kappa_2 \in \naturals$ or $\kappa_1 = \kappa_2 = \kappa \in \R_{>0}$. Thus we extend the complete high-dimensional asymptotics from the linear scaling $p,n \asymp d$ from \cite{mei2022generalizationRF} to the polynomial scaling $\log (p),\log(n) \asymp \log (d)$. Note that this is a necessary step in order to study the approximation-statistical trade-off in a more realistic learning setting:  in the linear scaling, RFRR can only fit linear polynomials as $p/d, n/d \to \infty$, while it can fit any target functions $f_* \in L^2$ in the polynomial scaling as $\log(p)/\log(d), \log(n)/\log (d) \to \infty$. In particular, our asymptotics results fully characterize the multi-phase learning as $\kappa_1,\kappa_2$ increase, the non-monotonic behavior at $\kappa_1 > \kappa_2 \in \naturals$ and $\kappa_1 = \kappa_2$, and the optimal overparametrization and regularization parameters.

From a mathematical viewpoint, we present novel characterizations for random matrices with entries polynomial in the weights and covariates. Notably, our results for $\kappa_1 = \kappa_2$ are obtained by deriving the Stieljes transform of a block kernel matrix, analogous to the one studied by \cite{mei2022generalizationRF} in the linear scaling.  However the entries are now given as spherical harmonics of the weights and training samples, and require a more careful leave-one-out analysis which uses an orthonormal expansion in terms of Gegenbauer polynomials. A similar expansion was previously considered in \cite{lu2022equivalence} in the case of a simpler kernel matrix beyond the linear scaling regime.

\subsection{Summary of the RFRR asymptotics in the polynomial scaling}
\label{sec:summary_main_results}

\definecolor{alizarin}{HTML}{d62728}
\definecolor{bluegraph}{HTML}{1f77b4}

\begin{figure}[t!]
\begin{center}
\begin{tikzpicture}

\node[inner sep=0pt] (russell) at (0,0)
    {\includegraphics[width=.7\textwidth]{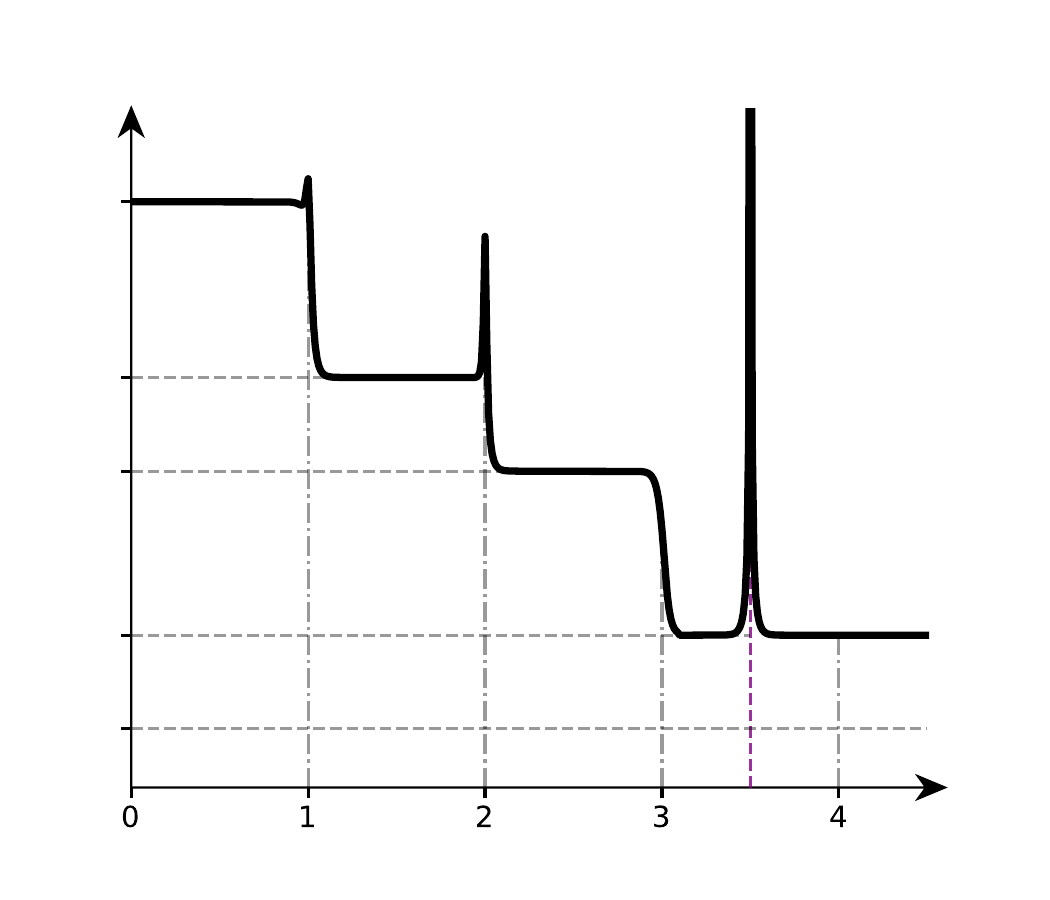}};

\node[inner sep=0pt] (russell) at (-4,-8)
    {\includegraphics[width=.47\textwidth]{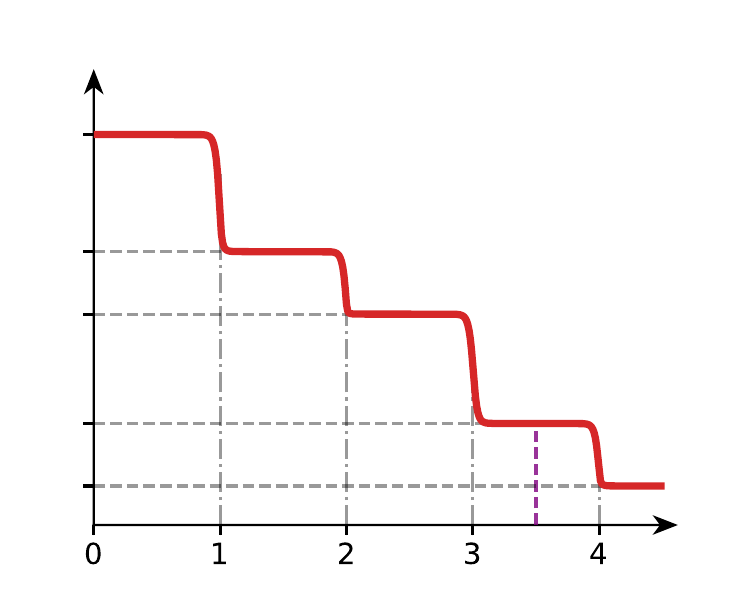}};

\node[inner sep=0pt] (russell) at (4,-8)
    {\includegraphics[width=.47\textwidth]{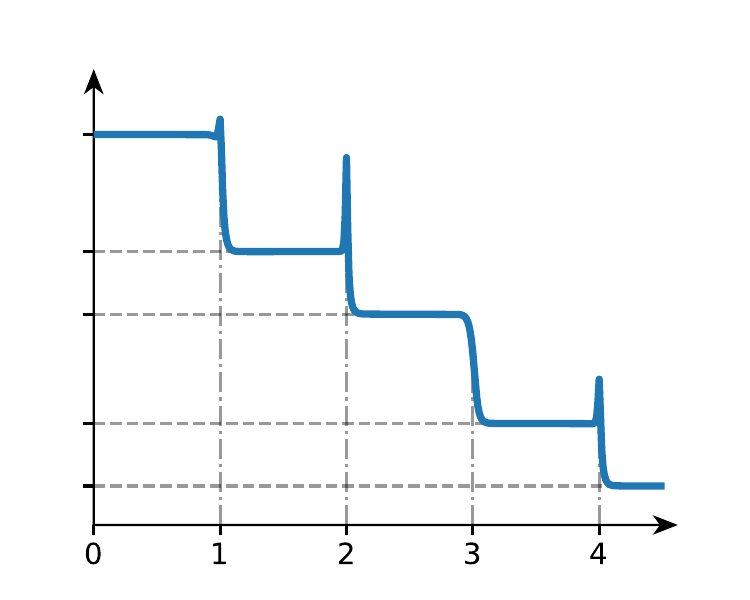}};

\node[rectangle,draw,thick,align=center] (a) at (-1,4) {\large \textbf{Test error of RFRR}}; 
 \node[rectangle,draw=bluegraph,thick,align=center] (a) at (4.6,-5.5) {\large {\color{bluegraph} \textbf{Test error of KRR ($p = \infty$)}}};

\node[rectangle,draw=alizarin,thick,align=center] (a) at (-3.4,-5.5) {\large {\color{alizarin} \textbf{Approximation error ($n = \infty$)}}};

 \node[rectangle] (a) at (-5.4,2.7) {$\| \proj_{>0} f_* \|_{L^2}^2$};
 \node[rectangle] (a) at (-5.4,0.8) {$\| \proj_{>1} f_* \|_{L^2}^2$};
 \node[rectangle] (a) at (-5.4,-0.2) {$\| \proj_{>2} f_* \|_{L^2}^2$};
 \node[rectangle] (a) at (-5.4,-2) {$\| \proj_{>3} f_* \|_{L^2}^2$};
 \node[rectangle] (a) at (-5.4,-3.1) {$\| \proj_{>4} f_* \|_{L^2}^2$};
 
 \node[rectangle] (a) at (5.2,-3.7) {$\frac{\log(n)}{\log(d)}$};

  \node[rectangle] (a) at (7.6,-10.35) {$\frac{\log(n)}{\log(d)}$};

   \node[rectangle] (a) at (-0.4,-10.35) {$\frac{\log(p)}{\log(d)}$};
   
 \node[rectangle] (a) at (0,-4.4) {$\kappa_2$};
  \node[rectangle] (a) at (2.5,-4.1) {$n=p$};

   \node[rectangle] (a) at (-4,-11.05) {$\kappa_1$};

    \node[rectangle] (a) at (4,-11.05) {$\kappa_2$};

\end{tikzpicture}

\end{center}
\caption{Cartoon illustration of the test error of RFRR in the high-dimensional polynomial scaling $p/d^{\kappa_1} \to \theta_1$ and $n/d^{\kappa_2} \to \theta_2 $ as $p,n,d \to \infty$, for $\kappa_1,\kappa_2,\theta_1 , \theta_2 \in \R_{>0}$. \emph{Top:} test error of RFRR versus $\log(n)/\log(d)$ for fixed $p$. \emph{Bottom left:} approximation error ($n= \infty$) of random feature models  versus $\log(p)/\log(d)$. \emph{Bottom right:} test error of KRR ($p = \infty$) versus $\log (n) / \log(d)$. The approximation error (resp.~KRR test error) follows a staircase decay where each time $\log(p)/\log(d)$ (resp.~$\log(n)/\log(d)$) crosses an integer value, the RF model fits one more degree polynomial approximation to the target function. Peaks can appear in the KRR risk curve at $n = d^\ell/\ell!, \ell \in \naturals$, depending on some effective regularization and effective signal-to-noise ratio at that scale. The  RFRR test error first follows the KRR test error for $n \ll p $, then presents a peak at the interpolation threshold $n = p$, before saturating on the approximation error for $n \gg p$.\label{fig:CartoonIllustration}}

\end{figure}

To summarize these asymptotics, it is useful to separate the two limiting factors in the performance of RFRR. First, we have a finite number of random features $p$, which limits the class of functions that RF models can approximate. We define the approximation error
\begin{equation}\label{eq:App_def}
\begin{aligned}
R_{\App} (f_*; \bW) = &~ \inf_{\ba\in \R^p} \E_{\bx} \Big[ \big(f_* (\bx) -  h_{\RF} ( \bx ; \ba  ) \big)^2 \Big] 
\end{aligned}
\end{equation}
as the best fit to the target function using $p$ features with random weights $\bW$. We can think of $R_{\App}$ as being the test error of RFRR if we had access to an infinite number of training samples $n = \infty$. Second, we have a `statistical error' due to the finite number of training data points $n$. We isolate this contribution by considering $p =\infty$ (which can approximate any $f_* \in L^2$) and define
\begin{equation}\label{eq:KRR_def}
\begin{aligned}
R_{\KRR} (f_*; \bX, \beps, \lambda) =&~ \E_{\bx} \Big[ \big(f_* (\bx) -  \hat{h}_{\KRR, \lambda} ( \bx   ) \big)^2 \Big]\, , \\
\hat h_{\KRR , \lambda} =&~ \argmin_h \Big\{ \sum_{i\in [n]} (y_i - h (\bx_i) )^2 + \lambda \| h \|_{\cH}^2 \Big\} \, ,
\end{aligned}
\end{equation}
where $\hat h_{\KRR}$ is the solution of the kernel ridge regression problem with $\| \cdot \|_{\cH}$ the RKHS norm associated to kernel 
\begin{equation}\label{eq:kernel_asymp}
K (\bx , \bx') = \E_{\bw \sim \Unif ( \S^{d-1} (1) ) } [\sigma (\< \bx , \bw \>) \sigma (\< \bx' , \bw \>) ]\, .
\end{equation}
Note that by rotational invariance of $\bw$ and using that the covariates are normalized $\| \bx \|_2 = \sqrt{d}$, $K$ is an inner-product kernel and can be written as $K ( \bx , \bx') = h_d ( \< \bx , \bx ' \> / d)$, where $h_d : [-1,1] \to \R$.

We summarize the asymptotic predictions for the RFRR test error using a cartoon illustration in Figure \ref{fig:CartoonIllustration}. We plot the RFRR test error versus $\log(n)/\log(d)$ for a fixed $p$ (top), the approximation error versus $\log(p)/\log(d)$ (bottom left) and the KRR test error versus $\log(n) /\log (d)$ (bottom right). In the figure, we denoted $\proj_{>\ell} f_*$ the projection of $f_*$ orthogonal to the subspace of polynomials of degree at most $\ell$ with respect to the uniform measure on the sphere. In particular, if the risk is given by $\| \proj_{>\ell} f_* \|_{L^2}^2$, this implies that we fit the best degree-$\ell$ polynomial approximation to $f_*$ and none of its higher frequency components.

The precise formulas for these curves can be found in Section \ref{sec:MainResults}. Below we discuss some of the key features of Figure \ref{fig:CartoonIllustration}:
\begin{description}
\item[Overparametrized regime $\kappa_2 < \kappa_1$:] We have $n \ll p $ and RFRR achieves the same test error as KRR
    \[
    R_{\test} ( f_* ; \bX , \bW , \beps , \lambda ) = R_{\KRR} (f_*; \bX, \beps, \lambda) + o_{d,\P} (1) \, .
    \]
    In this regime, the number of samples is the bottleneck for learning $f_*$, and RFRR behaves as if we had $p =\infty$ random features. The test error of KRR in the polynomial scaling was characterized in \cite{ghorbani2021linearized,xiao2022precise}. If $d^{\ell} \ll n \ll d^{\ell+1}$, then the test error is given by $\| \proj_{>\ell} f_* \|_{L^2}^2$, i.e., KRR fits the best degree-$\ell$ polynomial approximation to $f_*$. At the critical scalings $n \asymp d^\ell$, where KRR transition from fitting all degree-$(\ell-1)$ polynomials to all degree-$\ell$ polynomials, a finite-sized peak can appear in the risk curve at $n = (1 + o_d(1))d^\ell / \ell!$, depending on an effective regularization and effective signal-to-noise ratio at that scale (see \cite{xiao2022precise} for a detailed discussion). This peak is due to the degeneracy of the eigenvalues associated to degree-$\ell$ spherical harmonics in the eigendecomposition of the kernel \eqref{eq:kernel_asymp}.

\item[Underparametrized regime $\kappa_2 > \kappa_1$:] In this case, $n \gg p$ and RFRR test error matches the approximation error
    \[
    R_{\test} ( f_* ; \bX , \bW , \beps , \lambda ) = R_{\App} (f_*; \bW) + o_{d,\P} (1) \, .
    \]
    The number of features is now the bottleneck for learning $f_*$, and RFRR achieves the best approximation error over the random feature model class $\cF_{\RF} (\bW)$ akin to having $n= \infty$. If $d^{\ell} \ll p \ll d^{\ell+1}$, RF models can approximate any degree-$\ell$ polynomials, and the approximation error is given by $\| \proj_{>\ell} f_* \|_{L^2}^2$.

\item[Critical parametrization regime $\kappa_1 = \kappa_2$:] In this regime, RFRR test error interpolates between the KRR test error for $n/p = \theta_2 /\theta_1 \to 0$ and the approximation error for $n/p = \theta_2 /\theta_1 \to \infty$, with a peak at the interpolation threshold $n =p$ (when the RF model has enough parameters to interpolate the $n$ training data points) that diverges as $\lambda \to 0$. This peak is due to the divergence of the conditioning number of the feature matrix $\bZ = (\sigma (\< \bx_i , \bw_j \>) )_{i \in [n], j \in [p]} \in \R^{n \times p}$ as $p$ approaches $n$ and $\bZ$ becomes a square matrix.

This critical regime was characterized in \cite{mei2022generalizationRF} in the linear scaling $\kappa_1 = \kappa_2 = 1$. We show that the test error follows the same form  for all $\kappa_1 = \kappa_2 = \ell \in \naturals$, and only depends on $\theta_1 \ell!, \theta_2\ell!$, noise level $\rho_\eps^2$ and $\| \proj_\ell \sigma (\< \be , \cdot \>) \|_{L^2}$, $\| \proj_{>\ell} \sigma (\< \be , \cdot \>) \|_{L^2}$, $\| \proj_\ell f_* \|_{L^2}$ and $\| \proj_{>\ell} f_* \|_{L^2}$.
\end{description}

Denoting $R(n,p)$ the test error with $n$ training samples and $p$ random features, we can summarize these results with the following heuristic
\[
R(n,p) \approx \max \{ R(n,\infty) , R(\infty , p) \} \, .
\]
Hence the performance of RFRR follows a simple trade-off between approximation and statistical errors. If $p = o(n)$, the approximation error dominates: the performance of RFRR is limited by the number of random features $p$ and matches the best approximation error achieved by the random feature class with $n = \infty$. On the other hand, if $n = o(p)$, the statistical error dominates: the sample size is now the limiting factor and RFRR matches the performance of KRR with $p = \infty$. In particular, for $p\asymp d^{\kappa_1}, n \asymp d^{\kappa_2}$, RFRR fits the best degree-$\min (\lfloor \kappa_1 \rfloor, \lfloor \kappa_2 \rfloor )$ polynomial approximation to the target function. This intuitively matches a dimension-counting heuristic lower-bound: the space of degree-$\ell$ polynomials has dimension $\Theta (d^\ell)$ and we need $n = \Omega (d^\ell)$ samples and $p = \Omega (d^\ell)$ parameters to fit this subspace.

From these asymptotic predictions of the RFRR test error, several insights can be gained on random feature models which we summarize below:

\begin{description}
\item[Effect of parametrization:] 
For linear models with ridge penalty, the norm $\| \hba_\lambda \|_2$ is a better complexity measure than counting the number of parameters \cite{vapnik1999overview,ng2000cs229,hastie2022surprises}. We can indeed verify that when plotting the test error versus $\| \hba_\lambda \|_2$, we recover a U-shaped curve instead of the double-descent curve. 

For completeness, we include Figure \ref{fig:testerr_RKHSnorm} for RFRR (see \cite[Figure 8.12]{ng2000cs229} for a similar plot in the case of standard linear regression). When $p$ increases, $\| \hba_\lambda \|_2$ first increases until the interpolation threshold $n=p$ and then decreases ---instead of diverging--- before concentrating on $\| \hf_\lambda \|_{\cH}$ (after proper normalization) as $p \to \infty$.
 As for the test error, it exhibits the double-descent behavior with a peak at $p=n$ where $\| \hba_\lambda \|_2/n$ is maximized. In particular, when $p\to\infty$, the test error does not detoriate and converges to the KRR test error. At the same time the training error stays much lower than the noise level (the training error is of order $10^{-6}$  for $p \gtrsim n$ in Figure \ref{fig:testerr_RKHSnorm}). This benign overfitting phenomenon can be understood as follows: as $p \to \infty$, the high-frequency part of the activation function behaves as an additive self-induced regularization, and the effective ridge regularization of RFRR is bounded away from $0$ (see \cite{mei2022generalization,mei2022generalizationRF,xiao2022precise} for further discussions). 

 We further note from Figure \ref{fig:testerr_RKHSnorm} that the minimum test error is achieved in the overparametrized regime, when $p/n = \theta_1/\theta_2 \to \infty$. Increasing parametrization in the random feature model allows to approximate a growing class of functions until it approximates the KRR solution and the test error saturates on the KRR test error.

\begin{figure}[t]
\begin{tikzpicture} 
\node[inner sep=0pt] (russell) at (0,0.5)
{\includegraphics[width=.32\textwidth]{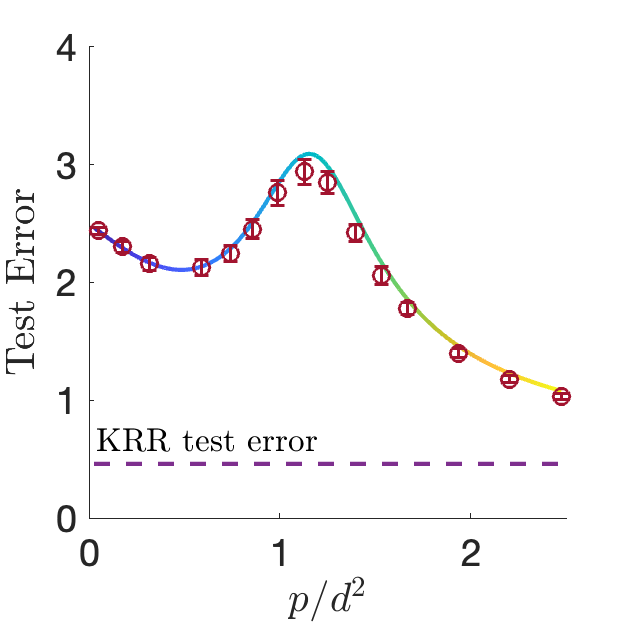}};

\node[inner sep=0pt] (russell) at (5.1,0.48)
{\includegraphics[width=.32\textwidth]{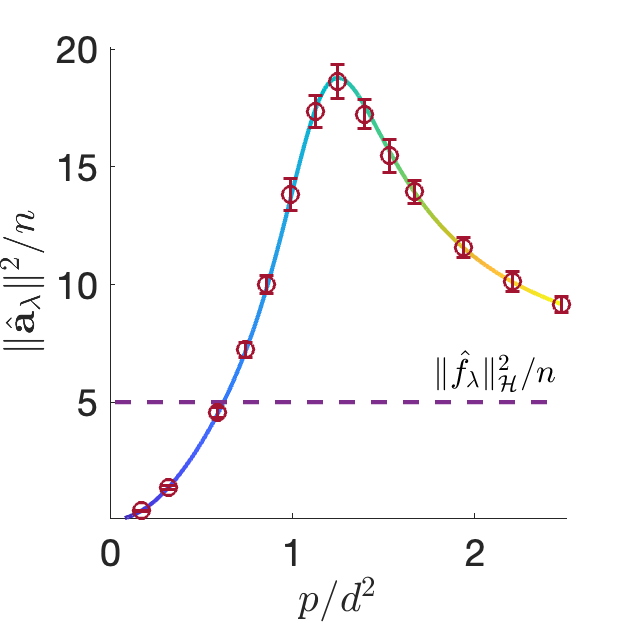}};

\node[inner sep=0pt] (russell) at (10.85,0.48)
{\includegraphics[width=.38\textwidth]{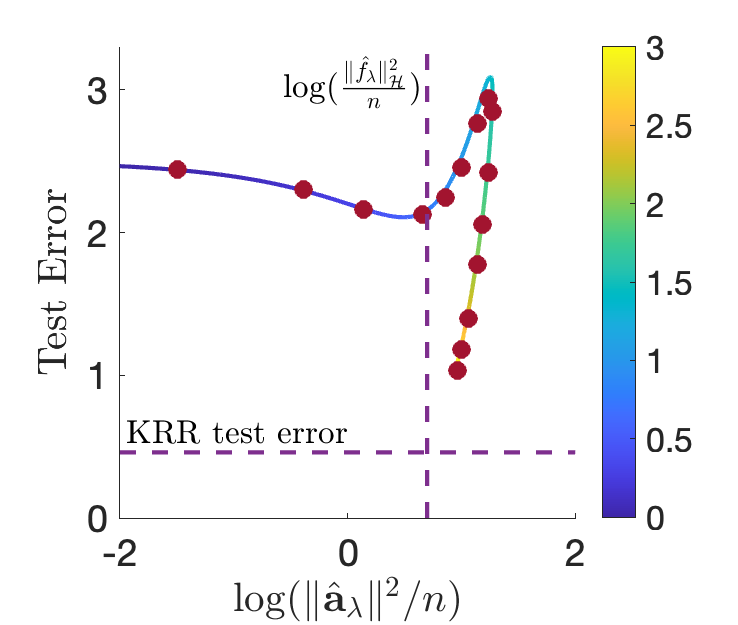}};
\end{tikzpicture}
\caption{Test error and $\|\hba_\lambda\|^2/n$ in the polynomial scaling $\kappa_1 = \kappa_2 = 2$. We fix $d=100$, $n = 1.25 d^2$ and vary the number of random features $p$. Here, $f_{*,d}(\bx) = 1.5 q_2^{(d)}(\bx^\sT \bbeta) + 0.5 q_3^{(d)}(\bx^\sT \bbeta)$ with $\|\bbeta\| = 1$, $\sigma(x) = 0.5 q_2^{(d)}(x) + 0.3 q_3^{(d)}(x)$, $\rho_\varepsilon = 0.2$ and $\lambda = 2.5 \times 10^{-3}$. 
The error bars and dots are the empirical results averaged over 100 independent trials and the solid curves are the theoretical predictions given in the main theorem (Theorem \ref{thm:main_theorem_RF}).
\textit{Left and middle:} the dashed curves are the theoretical predictions for the asymptotic test error and squared RKHS norm of KRR, when $d\to\infty$ and $n/d^2 \to 1.25$ (given in Theorem \ref{thm:KRR_asymptotics}). \textit{Right figure:} the numbers on the colorbar correspond to the values of $p/d^2$.
} 
\label{fig:testerr_RKHSnorm}
\end{figure}


    \item[Optimal number of features:] From a practical point of view, it is interesting to ask the following: how small can we take $p$ to achieve optimal test error? Our results show that taking $p/n = \theta_1/\theta_2 \to \infty$ (after $p,n,d \to \infty$) is enough to achieve the KRR test error and larger overparametrization scalings (taking $\kappa_1 > \kappa_2$) do not improve the test error. At the same time, $p/n = O(1)$ can result in sub-optimal performance. We precisely capture this suboptimality as a function of the target function, activation function and scalings $\kappa_1,\kappa_2,\theta_1,\theta_2$. We further note that, in some under-regularized cases (i.e., when $\lambda$ is chosen too small and there is a large effective variance contribution in the KRR test error), overparametrization can hurt and optimal test error is achieved in the underparametrized regime $p \ll n$.

    \item[Optimal regularization:] For $\kappa_1 \neq \kappa_2$ with $\kappa_1 , \kappa_2 \not\in \naturals$, the interpolating solution $\lambda \to 0^+$ achieves optimal test error (which is given by $\| \proj_{>\min (\lfloor \kappa_1 \rfloor , \lfloor \kappa_2 \rfloor)} f_* \|_{L^2}^2$), and taking $\lambda $ larger can result in sub-optimal performance. In the overparametrized $\kappa_1 > \kappa_2 \in \naturals$ and critical $\kappa_1 = \kappa_2$ regimes, the test error can be non-monotonic with respect to $n$ or $p$ under a given $\lambda$ and  peaks can appear in the risk curve. An illustration of this phenomenon in the $\kappa_1 = \kappa_2$ regime can be found in Figure \ref{fig:optlambda}. We can see near these peaks, the optimal test error is achieved at non-zero regularization parameter, while the risk curve with $\lambda$ chosen optimally at each point is monotonically decreasing in $p, n, \text{and SNR}$. 
    On the other hand, we can also find that $0^+$ regularization tends to be optimal when (1) $p$ is much smaller or larger than $n$ and (2) SNR is high.

    \item[Asymptotic equivalence with a Gaussian model:] The activation function can be diagonalized as $\sigma (\< \bx, \bw \>) = \< \phi ( \bx), \bSigma \phi (\sqrt{d} \bw) \>_{\ell_2}$, where $\phi (\bx) = (\phi_j (\bx) )_{j \geq 1}$ are spherical harmonics that form a complete orthonormal basis of $L^2 (\S^{d-1} (\sqrt{d}))$, and $\bSigma$ is a diagonal matrix that contains the singular values (with signs) of $\sigma$. While $\phi (\bx)$ and $\phi (\sqrt{d} \bw)$ have entries that are not independent or subgaussian, we note that the asymptotics of ridge regression with the random feature model is the same as ridge regression in a simpler Gaussian model where $\phi (\bx_i)$ and $\phi (\sqrt{d}\bw_j)$ are replaced by iid Gaussian vectors $\bg_i$ and $\bf_j$ with matching first two moments with $\phi$. This equivalence was already noticed in \cite{mei2022generalizationRF} in the linear scaling $\kappa_1 = \kappa_2 = 1$ (with a simplified Gaussian model). Here we show that this equivalence holds more generally in the entire polynomial scaling. In the  KRR limit, this Gaussian covariate model simplify and was described in \cite{misiakiewicz2022spectrum,hu2022sharp}.
\end{description}

We provide additional discussion on each of these points in Section \ref{sec:Interpretations}.

The rest of the paper is organized as follows. We discuss related work in Section \ref{sec:RelatedWork} and introduce notations in Section \ref{sec:notations}. Section \ref{sec:MainResults} provides the complete set of asymptotics for RFRR in the polynomial scaling and states our main theorem and assumptions. In Section \ref{sec:GaussianEquivalence}, we present the equivalence of the asymptotic test error between the RF model and a simpler Gaussian covariate model in the polynomial scaling. Finally, we outline the proof of the main results in Section \ref{sec:ProofMainResults}. We defer some of the most technical parts to the appendices.

\begin{figure}[t]
\begin{tikzpicture} 
\node[inner sep=0pt] (russell) at (0,0.5)
{\includegraphics[width=.33\textwidth]{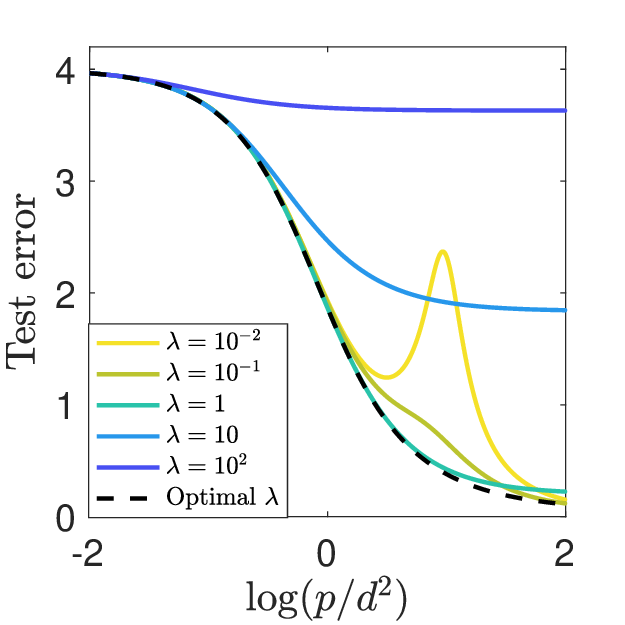}};

\node[inner sep=0pt] (russell) at (5.6,0.48)
{\includegraphics[width=.335\textwidth]{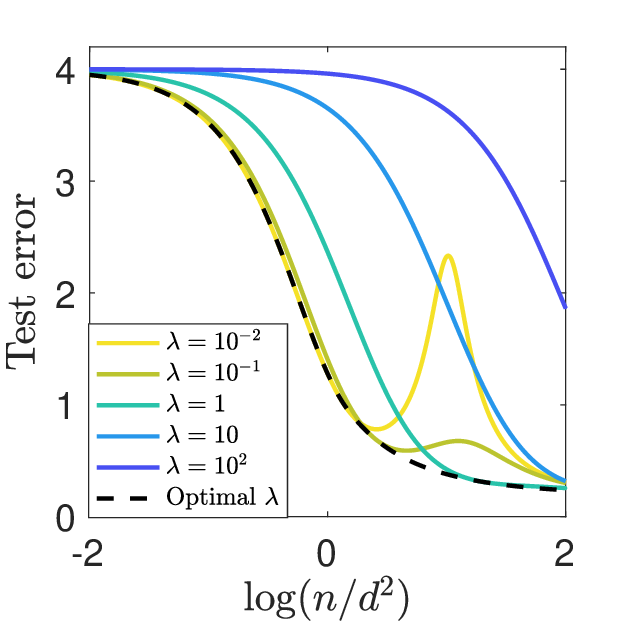}};

\node[inner sep=0pt] (russell) at (11.05,0.45)
{\includegraphics[width=.338\textwidth]{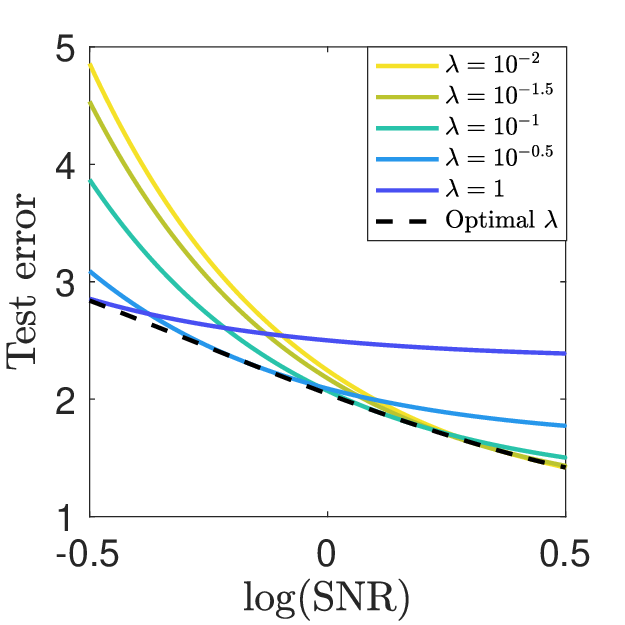}};
\end{tikzpicture}
\caption{Test error for different regularization parameters $\lambda$ in the polynomial scaling $\kappa_1 = \kappa_2 = \ell = 2$. Here,  $f_{*,d}(\bx) = 2 q_2^{(d)}(\bx^\sT \bbeta)$ with $\|\bbeta\|=1$, $\sigma(x) = 0.5 q_2^{(d)}(x) + 0.5 q_3^{(d)}(x)$ and
$\text{SNR}:= \| f_* \|_{L^2}^2 / \rho_\varepsilon^2$. 
\textit{Left figure:} $n/d^2=10$ and $\text{SNR} = 5$. \textit{Middle figure:} $p/d^2=10$ and $\text{SNR} = 5$. \textit{Right figure:} $p/d^2=10$ and $n/d^2=1$.
} 
\label{fig:optlambda}
\end{figure}

\subsection{Related work}
\label{sec:RelatedWork}

Classical statistical theory has sought to study the approximation and generalization properties of neural networks decoupled from computational questions. This approach typically proceeds in two steps. First, it bounds the number of neurons and the norm of the weights needed to approximate the class of target functions \cite{barron1993universal,maiorov1999best,mhaskar1996neural,pinkus1999approximation}. Second, it postulates a neural network that minimizes a regularized empirical risk and bounds the statistical complexity for this estimator via uniform convergence \cite{bach2017breaking,schmidt2020nonparametric}. However, this approach does not provide efficient algorithms to construct these neural networks. On the other hand, theory and practice have shown that overparametrization, and having much more neurons than the minimal width needed for approximation, can make gradient-based optimization much easier \cite{bartlett2021deep}, e.g., by linearizing the landscape in the lazy regime \cite{jacot2018neural}. Thus, while regularized ERM can inform on the optimal number of parameters and samples needed to learn a class of target functions, it provides limited insights on the interplay between approximation, generalization and regularization in practical neural networks trained by gradient descent. In this paper, we focus on a limited class of gradient-trained neural networks, where only the second-layer weights are learned, and provide a complete picture for \emph{efficiently trained networks} in this restricted setting.

The random feature model was introduced by Rahimi and Recht \cite{rahimi2008random} to lower the computational complexity of kernel methods via a randomized finite-rank approximation of kernel functions. \cite{rahimi2008random} showed that the empirical kernel $K_p (\bx , \bx ')$ (Equation \eqref{eq:finite_width_kernel}) converges to the limiting kernel $K (\bx, \bx')$ (Equation \eqref{eq:infinite_width_kernel}) uniformly over compact sets. The approximation and generalization errors of random feature models were later studied in \cite{rahimi2008weighted,rudi2017generalization,bach2017equivalence,bach2017breaking,ma2020towards}. In particular, \cite{rudi2017generalization} proved that roughly $ p  \gtrsim \sqrt{n}$ random features are sufficient to match the performance of kernel ridge regression, in contrast to $p \gtrsim n$ in the present paper. The setting of \cite{rudi2017generalization} is fairly different to the one considered here: they require the target function to be in a fixed RKHS (fixed $d$) and compute error rates that are minimax optimal up to a multiplicative constant, while this paper considers a high-dimensional regime and prove pointwise test errors that hold up to a vanishing additive constant for more general square integrable functions (see \cite{mei2022generalization} for a discussion contrasting these two settings). 

Recently, the random feature model has attracted renewed interest due to its connection to neural networks, either via the neural tangent kernel \cite{jacot2018neural} or Gaussian process \cite{novak2018bayesian,matthews2018gaussian} theories of wide neural networks. In particular, it was argued that the random feature model shares some key surprising behavior with deep learning: double descent phenomenon \cite{belkin2019reconciling,belkin2018understand,hastie2022surprises,belkin2020two} and benign overfitting \cite{bartlett2020benign,liang2020just,muthukumar2020harmless}. To capture these phenomena, several papers considered computing the precise asymptotics of the test error of RFRR in the linear high-dimensional scaling $p/d \to \theta_1$ and $n/d \to \theta_2$ \cite{mei2022generalizationRF,adlam2020neural,liao2020random}. In this regime, RFRR can fit at most a linear approximation to the target function. A second line of work has studied a more general polynomial scaling with $p/d^{\kappa_1} \to \theta_1$ and $n/d^{\kappa_2} \to \theta_2$ \cite{ghorbani2020neural,ghorbani2021linearized,mei2022generalization,xiao2022precise}. These works revealed a staircase decay of the risk curves where polynomials of growing degree are progressively fitted as $\kappa_1,\kappa_2$ increase \cite{mei2022generalization}, and a multiple descent behavior where peaks can appear at each $\kappa_2 \in \naturals$ \cite{xiao2022precise}.

From a technical aspect, our analysis require to control empirical kernel matrices with inner-product kernels. The paper \cite{el2010spectrum} considered matrices of the form $\bK_n := f(\bX\bX^\sT/d)$ where $\bX \in \R^{n \times d}$ is a random matrix with i.i.d.~entries, and showed that $\bK_n$ can be well approximated by its linearization in the linear scaling $n/d \to \theta$, and its spectrum converges to a scaled Marchenko-Pastur law. On the other hand, considering a different scaling $\Tilde{\bK}_n := f(\bX\bX^\sT/\sqrt{d})/\sqrt{n}$, the papers \cite{cheng2013spectrum,fan2019spectral} showed that the spectrum of $\Tilde{\bK}_n$ converges to the free convolution of a Marchenko-Pastur law and a semi-circular law. These results were generalized to the polynomial scaling $n/d^\ell \to \theta$ for $\bK_n$ in \cite{misiakiewicz2022spectrum,xiao2022precise} and for $\Tilde{\bK}_n$ in \cite{lu2022equivalence}, where it was shown that the kernel is well approximated by its degree-$\ell$ polynomial approximation plus an independent noise matrix coming from the higher-degree terms. Finally, note that the asymmetric case $\bZ := f(\bX \bW^\sT / \sqrt{d})/\sqrt{n}$ was studied in the linear scaling in \cite{pennington2017nonlinear,louart2018random}. These work focused on the asymptotic spectrum, while the asymptotic risk of RFRR also depends on the singular vectors of $\bZ$. The paper \cite{mei2022generalizationRF} showed in the linear scaling how the test error can be obtained as derivatives of the log-determinant of a block matrix. To derive their asymptotics, they first compute the Stieltjes transform of the block matrix using a leave-one-out analysis, and then integrate this Stieltjes transform to obtain the formula for the log-determinant. In the present paper, we apply the same strategy to the polynomial regime. Note that while the block matrix is well approximated by matrices with iid entries in the linear scaling, the entries are given by degree-$\ell$ spherical harmonics in the polynomial scaling which requires a more involved leave-on-out analysis.

Finally, our work shows that the asymptotic risk of RFRR in the polynomial scaling is the same as a simpler Gaussian covariate model. This was first noted in \cite{mei2022generalizationRF} for RFRR in the linear scaling. Following work \cite{goldt2022gaussian,hu2022universality,montanari2022universality} proved that in fact, this universality phenomenon holds for random feature models and more general loss function and regularization in the linear scaling. The present paper shows that this Gaussian equivalence remains valid for the square loss and $\ell_2 $ regularization in the polynomial scaling. However, we note that this equivalence will not be true in general for other losses and regularization functions beyond the linear regime.

\subsection{Notations}
\label{sec:notations}

Let $\text{Re} (z)$ and $\text{Im} (z)$ denote the real and imaginary parts of a complex number $z \in \C$. We further denote $\C_+ = \{ z \in \C : \text{Im} (z) >0 \}$ the upper half-plane of complex numbers with positive imaginary part.
For a positive integer $n$, let $[n]$ be the set $\{1, 2 , \ldots , n\}$. For vectors $\bu,\bv \in \R^n$, we denote the standard euclidean scalar product $\< \bu, \bv \> = u_1v_1 + \ldots + u_d v_d$, and $\ell_2$ norm $\| \bu \|_2 = \< \bu , \bu \>^{1/2}$. Let $\S^{d-1} (r) = \{ \bu \in \R^d : \| \bu\|_2 = r\}$ be the sphere of radius $r$ in $d$ dimensions. For the unit ball, we will sometimes simply write $\S^{d-1} := \S^{d-1}(1)$.

For a matrix $\bA \in \R^{n \times d}$, we denote by $\| \bA \|_\op = \max_{\| \bu \|_2 = 1} \| \bA \bu \|_2$ its operator norm and $\| \bA \|_F = \big( \sum_{i,j} A_{ij}^2 \big)^{1/2}$ its Frobenius norm. For a square matrix $\bA \in \R^{n \times n}$, we denote by $\Tr(\bA) = \sum_{i \in[n]} A_{ii}$ its trace. For a measurable function $h : \R \to \R$ and a matrix $\bA \in \R^{n \times d}$, we denote by $h(\bA) = (h(A_{ij})_{i\in[n],j\in[d]}$ the elementwise application of $h$ to the entries of $\bA$.

Throughout the proofs, we use $O_d(\cdot)$ (resp.~$o_d(\cdot)$) for the standard big-O (resp.~little-o) relations, where the subscript $d$ is the asymptotic variables. We will further write $f = \Omega_d (g)$ if $g(d) = O_d(f(d))$, $f =\omega_d(g)$ if $g(d) = o_d(f(d))$, and $f = \Theta_d (g)$ if we have both $f = O_d(g)$ and $g = O_d(f)$. We will denote $O_{d,\P} (\cdot)$ (resp.~$o_{d,\P} (\cdot)$) the big-O (rep.~little-o) in probability relations. Recall that for two sequences of random variables $X_1(d)$ and $X_2(d)$, we have $X_1(d) = O_{d,\P}(X_2(d))$ if for any $\eps >0$, there exists $C_\eps >0$ and $d_\eps \in \N$, such that
\[
\P ( | X_1(d) / X_2(d) | > C_\eps) \leq \eps \, , \qquad \forall d \geq d_\eps \, ,
\]
and $X_1 (d) = o_{d,\P}(X_2(d))$ if $X_1(d)/X_2(d)$ converges to $0$ in probability. Similarly, we denote $X_1(d) = \Omega_{d,\P} (X_2(d))$ if $X_2(d) = O_{d,\P} (X_1(d))$, $X_1 (d) = \omega_{d,\P} (X_2(d))$ if $X_2(d) = o_{d,\P} (X_1(d))$, and $X_2(d) = \Theta_{d,\P} (X_1(d))$ if we have both $X_2 (d) = O_{d,\P}(X_1(d))$ and $X_1(d) = O_{d,\P} (X_2(d))$.

Finally, for two sequences of nonnegative random variables $X_1(d)$ and $X_2(d)$, we say that $X_1(d)$ is \textit{stochatiscally dominated} by $X_2(d)$, if for any $\eps >0$ and $D > 0$, there exists $d_{\eps,D}$ such that 
\[
\P (X_1(d) > d^\eps X_2(d) ) \leq d^{-D}  \, , \qquad \forall d \geq d_{\eps, D}\, .
\]
We denote by $X_1(d) \prec X_2(d)$ if $X_1(d)$ is stochastically dominated by $X_2(d)$. Moreover, if $| X_1(d) | \prec X_2(d)$, we also write $X_1(d) = O_{d,\prec} (X_2(d))$, or simply $X_1(d) = O_{\prec} (X_2(d))$ with $d$ clear from context.

\section{Main Results}
\label{sec:MainResults}

In this section, we present our results on the asymptotics of random feature ridge regression in the polynomial scaling. We begin in Section \ref{sec:FunctionSphere} by introducing some notations and reviewing some basic properties of the functional space over the sphere. We then describe our assumptions in Section \ref{sec:Assumptions_RFRR} and state our main theorem (Theorem \ref{thm:main_theorem_RF}) in Section \ref{sec:MainTheorem}. Finally, we discuss some key features of these asymptotics in Section \ref{sec:Interpretations}.

 \subsection{Functional space over the sphere}\label{sec:FunctionSphere}

 We start by introducing some notations and technical background relevant to our study. In this paper, we focus on the setting of data uniformly distributed on the sphere $\S^{d-1}(\sqrt{d})$ of radius $\sqrt{d}$ in $\R^d$. Let $\tau_d$ represent the uniform probability measure over $\S^{d-1}(\sqrt{d})$. Throughout, we will assume all functions to be elements of $L^2(\S^{d-1}(\sqrt{d})) := L^2(\S^{d-1}(\sqrt{d}), \tau_d)$, the space of square-integrable functions over $\S^{d-1}(\sqrt{d})$ with respect to $\tau_d$. We denote by $\< \cdot , \cdot \>_{L^2}$ the scalar product and $\| \cdot \|_{L^2}$ the norm in $L^2 ( \S^{d-1} (\sqrt{d}))$, where
\[
\< f,g\>_{L^2} := \int_{\S^{d-1} (\sqrt{d})} f(\bx) g(\bx) \tau_{d} (\de \bx) \, .
\]
We will write $\E_\bx$ the expectation over $\bx \sim \tau_d$.

Our results crucially depend on the following orthogonal decomposition of  $L^2(\S^{d-1}(\sqrt{d}))$ \cite{dai2013approximation}:
\[
L^2(\S^{d-1}(\sqrt{d})) = \bigoplus_{k=0}^{\infty} V_{d,k} \, ,
\]
where $V_{d,k}$ is the subspace of all degree-$k$ polynomials that are orthogonal (with respect to $\<\cdot , \cdot \>_{L^2}$) to all polynomials with degree less than $k$. For each $k \in \Z_{\geq 0}$, let $N_k := N_{d,K} = \dim(V_{d,k})$ be the dimension of the subspace $V_{d,k}$, and $\big\{Y_{ks}^{(d)}\big\}_{s \in [N_k]}$ be an orthonormal basis of $V_{d,k}$ of degree-$k$ spherical harmonics on $\S^{d-1} (\sqrt{d})$. We will denote by $\proj_k$ the orthogonal projection onto $V_{d,k}$ in $L^2(\S^{d-1} (\sqrt{d}))$, which can be written explicitly as
\[
\proj_k f(\bx) := \sum_{l=1}^{N_k} \< f , Y^{(d)}_{ks} \>_{L^2}Y^{(d)}_{ks} (\bx) \, .
\]
We further introduce $\proj_{\leq k} := \sum_{l=0}^k \proj_k$ and $\proj_{>k} := \id - \proj_{\leq k}$. We denote $N_{\leq k} = N_0  + \ldots + N_k$ the dimension of $\bigoplus_{l=0}^{k} V_{d,l}$, the subspace spanned by all polynomials of degree at most $k$ in $L^2(\S^{d-1}(\sqrt{d}))$.

We will also work with the one-dimensional functional spaces $L^2([-\sqrt{d},\sqrt{d}], \tau_{d,1})$ and $L^2(\R,\tau_{\text{g}})$. Here, $\tau_{d,1}$ denotes the marginal distribution of $x_1$ when $\bx \sim \tau_{d}$, and $\tau_{\text{g}}$ denotes the standard Gaussian measure, i.e., $\tau_{\text{g}} (\de x) = e^{-x^2/2} \de x / \sqrt{2\pi}$. The set of Gegenbauer polynomials $\big\{q_{k}^{(d)}\big\}_{k=0}^{\infty}$ and Hermite polynomials $\{ \He_{k}\}_{k=0}^{\infty}$ form orthonormal bases for $L^2([-\sqrt{d},\sqrt{d}], \tau_{d,1})$ and $L^2(\R,\tau_g)$ respectively. Note that since $\tau_{d,1}$ converges weakly to $\tau_{\text{g}}$, 
the coefficients of the polynomial $q_k^{(d)}$ converge to the coefficients of $\He_k$ as $d \to \infty$.

A brief review of some key properties of $\{Y_{ks}^{(d)}(\bx)\}_{s=1}^{N_k}$, $\{q_{k}^{(d)}\}_{k=0}^{\infty}$ and $\{ \He_{k}\}_{k=0}^{\infty}$ can be found in Appendix \ref{sec:Spherical-Harmonics}, and we refer to \cite{szeg1939orthogonal,chihara2011introduction,dai2013approximation} for a more complete exposition.
Note that $Y_{ks}^{(d)}$ and $q_{k}^{(d)}$ both depend on the dimension $d$.
For simplicity, we will drop the superscript $d$ when the dimension is clear from the context, and write $Y_{ks} := Y_{ks}^{(d)}$ and $q_{k} := q_{k}^{(d)}$.

 \subsection{Assumptions}\label{sec:Assumptions_RFRR}

Recall that we consider the high-dimensional polynomial scaling, where all $d$, $p:= p(d)$ and $n := n(d)$ diverge as $d \to \infty$ while staying polynomially related to each other. 
Specifically, we assume that there exist $\kappa_1,\kappa_2,\theta_1,\theta_2 >0$ such that
\begin{equation}\label{eq:def_theta_12}
\lim_{d \to \infty} \frac{p}{d^{\kappa_1}} = \theta_1\, , \qquad \lim_{d \to \infty} \frac{n}{d^{\kappa_2}} = \theta_2 \, .
\end{equation}
We introduce $\ell = \lceil \min(\kappa_1 , \kappa_2 ) \rceil$.
 
 We begin by describing our assumption on the activation function $\sigma$. 
 
 \begin{assumption}[Assumptions on $\sigma$ at level $\ell \in \naturals$]\label{ass:sigma} Let $\sigma: \R \to \R$ be an activation function. For all $k \in \Z_{\geq 0}$, define
 \[
 \mu_k := \mu_k (\sigma) = \E [\sigma (G) \He_k (G)]\, , \qquad \mu_{>k}^2 := \mu_{>k} (\sigma)^2 = \E [ \sigma (G)^2] - \sum_{j = 0}^k \mu_j^2\, ,
 \]
 where the expectation is with respect to $G \sim \normal (0,1)$ and $\He_k$ is the degree-$k$ normalized Hermite polynomial (i.e., $\E_G [ \He_k (G) \He_l (G) ] = \delta_{kl}$). The coefficient $\mu_k$ corresponds to the (normalized) $k$-th Hermite coefficient of $\sigma $ and $\mu_{>k}^2$ is the squared norm of the projection of $\sigma$ orthogonal to polynomials of degree at most $k$ in $L^2(\R , \tau_{\text{g}})$. 

 We assume that the following hold.
 \begin{itemize}
     \item[(a)] There exists a constant $C >0$ such that $| \sigma (x)| \leq C (1 + |x| )^C$ for all $x \in \R$.
     
     \item[(b)] For $k = 0, \ldots , \ell$, we have $\mu_k \neq 0$.
     
     \item[(c)]  We have $\mu_{>\ell}^2 > 0$, meaning that $\sigma $ is not a polynomial of degree $\ell$. 
 \end{itemize}
 \end{assumption}
 
 Let us comment on these conditions. Assumption \ref{ass:sigma}.(b) amounts to a universality condition: the non-linearity $\sigma$ can approximate any polynomials of degree at most $\ell$. Assumption \ref{ass:sigma}.(c) requires that the high-degree part of the non-linearity $\sigma$ is non-vanishing, and therefore induces an \textit{implicit regularization} (see discussions in \cite{ghorbani2021linearized,mei2022generalization}). While we state our assumption for a given level $\ell \in \naturals$, we note that $\sigma$ satisfies Assumption \ref{ass:sigma} at all level if $\mu_k \neq 0$ for all $k \in \Z_{\geq 0}$ (i.e., $\sigma$ is universal in $L^2 (\R,\tau_{\text{g}})$). In particular, this assumption will be satisfied by most commonly-used activations, such as sigmoid functions or shifted ReLus $\sigma (x) = (x - c)_+$ for $c \in \R \setminus \{0\}$\footnote{Note that the unshifted ReLu is not universal in this setting, as $\E [ \sigma(G) \He_k(G) ] = 0$ for $k \geq 3$ odd.}. 

 Recall that for each $d$, we consider labels $y_i = f_{*,d} (\bx_i) + \eps_i$ with $\E [ \eps_i | \bx ]=0$ and $f_{*,d} \in L^2 (\S^{d-1} (\sqrt{d}))$. We will assume the following on the sequence of target functions $\{ f_{*,d}  \}_{d \geq 1}$.

\begin{assumption}[Target function with random high-degree coefficients at level $\ell \in \naturals$]\label{ass:random_f_star}
Consider a sequence of target functions $\{ f_{*,d} \in  L^2 (\S^{d-1} (\sqrt{d})) \}_{d \geq 1}$. We assume that there exist constants $C, F_\ell, F_{>\ell}$, such that for each $d \geq 1$, and writing the decomposition of $f_{*,d}$ in the orthonormal basis of spherical harmonics
\[
f_{*,d} (\bx) = \sum_{k = 0}^{\ell-1} \sum_{s \in [N_k]} \beta^*_{d,ks} Y_{ks} (\bx) + \sum_{k \geq \ell} \sum_{s\in [N_k]} \tbeta_{d,ks} Y_{ks} (\bx) \, ,
\]
the following hold.
\begin{itemize}
    \item[(a)] The vector $\bbeta^*_d = (\beta^*_{d,ks})_{k \in [\ell - 1], s\in [N_k]} \in \R^{N_{\leq \ell -1}}$ is deterministic with $\| \proj_{<\ell} f_{*,d} \|_{L^2} = \| \bbeta^*_d \|_2 \leq C$.

    \item[(b)] The higher-degree coefficients $\tbbeta_d  = (\tbeta_{d,ks})_{k \geq \ell, s\in [N_k]}$ are zero-mean independent random variables with
    \[
\E \left[ \tbeta^2_{d,ks} \right] = \frac{F_{k}^2}{N_k}\, , \qquad \E \left[ \tbeta^4_{d,ks}\right] \leq C \frac{F_{k}^4}{N_k^2}\, ,
\]
such that the coefficients $\{ F_k\}_{k> \ell}$ satisfy 
\[
\E \left[ \| \proj_{>\ell} f_{*,d} \|_{L^2}^2 \right] = \sum_{k \geq \ell+1} F^2_k = F_{>\ell}^2 \, .
 \]
\end{itemize}
\end{assumption}

 Assumption \ref{ass:random_f_star} with random high-frequency coefficients was used in \cite{mei2022generalizationRF,misiakiewicz2022spectrum,xiao2022precise} and serves to simplify the derivation: it reduces the computation to controlling the trace of the resolvent, i.e., the Stieljes transform of the covariance of the high-frequency part of the features. We will show that the test and training error converges in $L^2$ to the asymptotic test and training risks over the randomness of the target functions. In particular, this implies that these asymptotic results hold for typical functions in this function class. On the other hand, \cite{mei2022generalization} showed in the case $\kappa_1 \neq \kappa_2$ with $\kappa_1, \kappa_2 \not\in \naturals$, that the convergence hold \textit{pointwise}, i.e., for any deterministic sequence $\{ f_{*,d} \}_{d \geq 1}$. It is an interesting open problem to relax Assumption \ref{ass:random_f_star} and show the convergence pointwise, which would require to compute deterministic equivalents for our random matrix functionals \cite{couillet2022random,cheng2022dimension}. 

We will denote $\E_{f_*}$ the expectation over the random coefficients $\tbbeta_d$ in Assumption \ref{ass:random_f_star}.

 \subsection{Statement of the main theorem}\label{sec:MainTheorem}

We begin by introducing a set of fixed points that will be used to state our asymptotics for RFRR. Recall that $p/d^{\kappa_1} \to \theta_1$ and $n/d^{\kappa_2} \to \theta_2$, and we defined $\ell =  \lceil \min (\kappa_1 , \kappa_2 ) \rceil$. Let $\psi_1, \psi_2 \in \R_{\geq 0} \cup \{ + \infty\}$ be given by
\begin{equation}\label{eq:def_psi_12}
  \psi_1 := \lim_{d\to\infty} \frac{p}{d^\ell/\ell!}\, , \qquad 
  \psi_2 := \lim_{d\to\infty} \frac{n}{d^\ell/\ell!} \, .
\end{equation}
For example, if $\kappa_1 = \kappa_2 = \ell$, then $\psi_1 = \theta_1 / \ell!$ and $\psi_2 = \theta_2 /\ell!$, and if $\kappa_1 < \ell < \kappa_2$, then $\psi_1 = 0$ and $\psi_2 = +\infty$. We further introduce the following normalized Hermite coefficients and regularization parameter:
 \begin{align}\label{eq:def_zeta_ell}
     \zeta := \zeta_\ell = \frac{\mu_{\ell}}{\mu_{>\ell}}\, , \qquad
     \barlambda := \barlambda_\ell = \frac{\lambda}{\mu_{>\ell}^2} \, .
 \end{align}

 We define two functions $\tau_1 , \tau_2 : \C_+ \to \C_+$ implicitly as the solutions of a system of two degree-$3$ polynomial equations that can be easily evaluated numerically.
 
 \begin{definition}[Fixed points at level $\ell \in \naturals$]\label{def:fixedPointsTau}  We define $\tau_1 , \tau_2 : \C^+ \to \C^+$ as follow.

 \begin{itemize}
     \item[\emph{(1)}] When $\kappa_1 = \kappa_2 = \ell$, $\tau_1 , \tau_2$ are the unique functions such that $(i)$ $\tau_1,\tau_2$ are analytic functions in the upper half-plane $\C_+$; $(ii)$ $\tau_1 (z) , \tau (z)$ are solutions of the system of polynomial equations:
 \begin{equation}\label{eq:FixedPoints}
 \begin{aligned}
 \zeta^2 \tau_1 \tau_2 (z \tau_1 - 1) + \frac{\psi_1}{\psi_2} \left[ \zeta^2 \tau_1 \tau_2 + (\tau_2 - \tau_1) \cdot \frac{1}{\psi_2} \right] =&~ 0 \, , \\
  \zeta^2 \tau_1 \tau_2 ( z \tau_1 - 1)  + (\tau_1 - \tau_2 ) ( \tau_1 + \zeta^2 \tau_2 ) \cdot \frac{1}{\psi_2} = &~ 0 \, ,
 \end{aligned}
 \end{equation}
 where $\psi_1,\psi_2$, and $\zeta$ are defined as per Eqs.~\eqref{eq:def_psi_12} and \eqref{eq:def_zeta_ell}. 

\item[\emph{(2)}] When $\kappa_1 = \kappa_2 < \ell$, we have the following explicit analytical formula with $\tau_2 = \tau_1$ and 
\begin{equation}\label{eq:FixedPoints_1}
\begin{aligned}
    \tau_1 (z) = \frac{1}{2 z} \left\{ \left( 1 - \gamma - \frac{\gamma z}{1+\zeta^2} \right) + \sqrt{ \left( 1 - \gamma - \frac{\gamma z}{1+\zeta^2}  \right)^2 + \frac{4\gamma z}{1+\zeta^2} } \right\} \, ,
\end{aligned}    
\end{equation}
where $\gamma = \theta_1/\theta_2$ with $\theta_1,\theta_2$ defined as per Eq.~\eqref{eq:def_theta_12}. 
 \end{itemize}
 \end{definition}

The fixed points in Definition \ref{def:fixedPointsTau}.(1) are the same as the ones already introduced in \cite{adlam2020neural} in the case $\ell = 1$. As explained in \cite{adlam2020neural}, $\tau_1$ and $\tau_2$ correspond to the asymptotic traces of two functionals of the data and weight matrices. Further, note that the fixed points \eqref{eq:FixedPoints_1} can be obtained as the limit of the fixed point \eqref{eq:FixedPoints} when $\psi_1, \psi_2 \to 0$, while $\psi_1/\psi_2 \to \gamma$. We defer to Section \ref{sec:Interpretations} and Appendix \ref{sec:explicit_formula} for a discussion on the interpretation of these fixed points.

Equipped with these definitions, we can now state the formula for the asymptotic test and training errors of RFRR. Recall that we get $n$ i.i.d.~samples $(\bx_i,y_i)_{i\in [n]}$ and $p$ random features $(\bw_j)_{j \in [p]}$, where $\bx_i \sim \Unif (\S^{d-1} (\sqrt{d}))$, $\bw_j \sim \Unif (\S^{d-1} (1))$, and $y_i = f_{*,d} (\bx_i) + \eps_i$, with independent label noise $\E[\eps_i] = 0$, $\E[ \eps_i^2] = \rho_\eps^2$ and $\E[\eps_i^4] < \infty$. Denote the covariate matrix $\bX = [ \bx_1 , \ldots , \bx_n]^\sT \in \R^{n \times d}$, the weight matrix $\bW = [\bw_1 , \ldots , \bw_p]^\sT \in \R^{p \times d}$ and the label noise vector $\beps = (\eps_1 , \ldots , \eps_n) \in \R^n$. The test/training errors and the (normalized) squared $\ell_2$ norm of $\hba_\lambda$ are given respectively by
\begin{align}
R_{\test} ( f_* ; \bX , \bW , \beps , \lambda ) =&~ \E_{\bx} \Big[ \big( f_* (\bx) -  h_{\RF} (\bx ; \hat \ba_{\lambda} ) \big)^2 \Big] \, ,\\
R_{\train} (f_* ; \bX , \bW , \beps, \lambda) =&~ \frac{1}{n} \sum_{i \in [n]} (y_i - h_{\RF} ( \bx_i ; \hat \ba_{\lambda}  ) )^2 \, 
\end{align}
and
\begin{align}
    L_{\lnorm} (f_* ; \bX , \bW , \beps, \lambda) =&~ 
    \begin{cases}
        \frac{ \| \hba_\lambda \|_2^2 }{n}, ~~~\text{when } n = \cO_d(p),  \\
        \frac{ \| \hba_\lambda \|_2^2 }{p}, ~~~\text{when } p = o_d(n).
    \end{cases}
\end{align}

We show that the test/training errors and the (normalized) squared norm of $\hba_\lambda$ converge in $L^1$ (and therefore in probability) over the randomness over $\bX,\bW,\beps, f_*$, to
\begin{align}\label{eq:asymptotic_formula}
      R_{\test} ( f_{*,d} ; \bX, \bW , \beps, \lambda) =&~ \sfR_{\test} + o_{d,\P} (1) \, , \qquad 
      R_{\train} (f_* ; \bX , \bW , \beps, \lambda) =  \sfR_{\train} + o_{d,\P}(1) \, ,
\end{align}
and
\begin{align}
    L_{\lnorm} (f_* ; \bX , \bW , \beps, \lambda) = \sfL_{\lnorm} + o_{d,\P}(1),
\end{align}
where $\sfR_{\test}$, $\sfR_{\train}$ and $\sfL_{\lnorm}$ are defined below.

\begin{definition}[Asymptotic formulas for  RFRR]\label{def:asymptotic_formula_RFRR} Recall that we denote $\ell =  \lceil \min (\kappa_1 , \kappa_2 ) \rceil$. The asymptotic test/training errors and (normalized) squared  
 $\ell_2$ norm of $\hba_\lambda$ are given by
    \begin{align}
        \sfR_{\test} = &~ \left( F_\ell^2 \cdot \cB_{\test} +  F_{>\ell}^2 \right) + \left(F_{>\ell}^2 + \rho_\eps^2 \right) \cdot \cV_{\test}  \, , \\
        \sfR_{\train} =&~  \alpha_c \left\{ \left( F_\ell^2 \cdot \cB_{\test} +  F_{>\ell}^2 \right) + \left(F_{>\ell}^2 + \rho_\eps^2 \right) \cdot \cV_{\test} + \rho_\eps^2 \right\} \, ,
    \end{align}
    and
    \begin{align}
        \sfL_{\lnorm} = F_\ell^2 \cdot \cB_{\lnorm} + (F_{>\ell}^2 + \rho_\varepsilon^2) \cdot \cV_{\lnorm}, 
    \end{align}
    where $(\cB_{\test}, \cV_{\test}, \alpha_c)$ and $(\cB_{\lnorm}, \cV_{\lnorm} )$ are defined as follow.
\begin{itemize}
    \item[\emph{(1)}] \underline{Critical regime $\kappa_1 = \kappa_2$:} let $\tau_1,\tau_2$ be the fixed points defined in Definition \ref{def:fixedPointsTau}. Then
     \begin{equation}
    \label{eq:B_V_test_1}
    \cB_{\test} = -\frac{\tau_2 ' (\barlambda)}{\tau_1^2 (\barlambda)}\, , \qquad 
     \cV_{\test} = -\frac{\tau_1 ' (\barlambda)}{\tau_1^2 (\barlambda)} - 1\, , \qquad 
     \alpha_c = \barlambda^2 \tau_1^2 (\barlambda)\, ,
 \end{equation}
 and
 \begin{equation}
 \label{eq:B_V_norm_1}
     \cB_{\lnorm} = \frac{\tau_2(\barlambda) + \barlambda \tau_2'(\barlambda)}{\mu_{>\ell}^2}\, , \qquad 
     \cV_{\lnorm} = \frac{\tau_1(\barlambda) + \barlambda \tau_1'(\barlambda)}{\mu_{>\ell}^2},
 \end{equation}
 where $\barlambda$ is defined per Eq.~\eqref{eq:def_zeta_ell}. If $\kappa_1 = \kappa_2 = \ell$, then $\cB_{\test}$, $\cV_{\test}$, and $\alpha_c$ only depend on $(\psi_1,\psi_2,\zeta,\barlambda)$. If $\kappa_1 = \kappa_2 < \ell$, then $\cB_{\test}$, $\cV_{\test}$, and $\alpha_c$ only depend on $(\theta_1/\theta_2,\zeta,\barlambda)$.

 \item[\emph{(2)}] \underline{Overparametrized regime $\kappa_1 > \kappa_2$:} let $\eta :=  ( \barlambda + 1) / \zeta^2$ and define
 \begin{equation}
 \label{eq:defchi}
     \vartheta :=~ 
   \begin{cases}
   \frac{[(\psi_2 + \eta +1)^2 - 4\psi_2]^{1/2} + \psi_2 - 1 - \eta}{2 \eta \psi_2} \, , &\text{if }\;\kappa_2 = \ell\, ,\\
   \frac{1}{1+\eta}\, , &\text{if }\;\kappa_2 < \ell \, .
   \end{cases}
 \end{equation}
Then
 \begin{equation}
      \label{eq:B_V_test_2}
     \begin{aligned}
      \cB_{\test} = &~  \frac{\psi_2^2 \eta^2 \vartheta^3 + (\psi_2 \eta^2 +\psi_2 \eta  - \psi_2^2 \eta) \vartheta^2 + 1 - \psi_2}{\eta \psi_2 \vartheta^2 +1}\, , \\ 
      \cV_{\test} = &~ \frac{\psi_2 \vartheta - \psi_2 \eta \vartheta^2}{\eta \psi_2 \vartheta^2 +1}\, , \\
      \alpha_c = &~ \frac{\barlambda^2 \vartheta^2}{\zeta^4}\, ,
     \end{aligned}
     \end{equation}
     and
     \begin{equation}
     \label{eq:B_V_norm_2}
     \begin{aligned}
         \cB_{\lnorm} = \frac{1}{\mu_{>\ell}^2} \Big(\frac{1 - \eta\vartheta}{ \zeta^2 } - \frac{\barlambda \cB_{\test} \vartheta^2}{ \zeta^4 }\Big) \, ,\qquad
         \cV_{\lnorm} = \frac{1}{\mu_{>\ell}^2} \Big( \frac{\vartheta}{\zeta^2} - \frac{\barlambda (\cV_{\test} + 1)\vartheta^2}{\zeta^4} \Big).
     \end{aligned}    
     \end{equation}
     If $\kappa_2 = \ell$, then $\cB_{\test}$, $\cV_{\test}$, and $\alpha_c$ only depend on $(\psi_2, \zeta , \barlambda)$. If $\kappa_2 < \ell$, then  $\cB_{\test} =1 $, $\cV_{\test}= 0$, and $\alpha_c$ only depends on $( \zeta , \barlambda)$.

  \item[\emph{(3)}]  \underline{Underparametrized regime $\kappa_1 < \kappa_2$:} in this regime, 
  \begin{equation}
  \label{eq:B_V_test_3}
 \begin{aligned}
\cB_{\test} = &~ \begin{cases}
\frac{1}{2} \Big[ 1 - \psi_1 - \zeta^{-2} + \sqrt{(1 + \psi_1 + \zeta^{-2})^2 - 4\psi_1} \Big] & \text{if }\;\kappa_1 = \ell \, , \\
1& \text{if }\;\kappa_1 < \ell\, ,
\end{cases} \\
\cV_{\test} = &~ 0 \, , \\
\alpha_c = &~ 1\, ,
\end{aligned}
\end{equation}
and
\begin{equation}
\label{eq:B_V_norm_3}
\begin{aligned}
 \cB_{\lnorm} = \frac{1}{\mu_{>\ell}^2} \cdot \frac{ \vartheta \zeta^2 (1-\psi_1)\psi_1 + \vartheta^2 \zeta^2 \psi_1^2 }{ 1 + \zeta^2(1-\psi_1+2\psi_1\vartheta) } \, ,\qquad
 \cV_{\lnorm} = 0.    
\end{aligned}    
\end{equation}
If $\kappa_1 = \ell$, then $\cB_{\test}$ only depends on $(\psi_1,\zeta)$. Otherwise $\cB_{\test}$, $\cV_{\test}$, and $\alpha_c$ are constants independent of the parameters of the problem.
\end{itemize}
\end{definition}

For convenience, Definition \ref{def:asymptotic_formula_RFRR} includes the explicit formulas in the overparametrized and underparametrized regimes. However, these equations can be unified using only Equation~\eqref{eq:B_V_test_1}, where the functions $\tau_1,\tau_2$ are now obtained as the limiting solutions of the fixed point equations \eqref{eq:FixedPoints} where we replace $\psi_1$ and $\psi_2$ by $p/(d^\ell \ell!)$ and $n/(d^\ell \ell!)$ before taking $p,n,d \to \infty$. Additional details can be found in Appendix \ref{sec:specialregime}.

We are now ready to state our main theorem.
\begin{theorem}[RFRR asymptotics]\label{thm:main_theorem_RF} 
 Assume $(p(d),n(d))_{d \geq 1}$ are two sequences of integers such that $p/d^{\kappa_1} \to \gratio_1$ and $n / d^{\kappa_2} \to \gratio_2$ for some $\kappa_1,\kappa_2,\theta_1,\theta_2 \in \R_{>0}$, and denote $\ell = \lceil \min (\kappa_1 , \kappa_2) \rceil$. Let $\{ f_{*,d} \in L^2 (\S^{d-1} (\sqrt{d}))\}_{d \geq 1}$ be a sequence of functions that satisfy Assumption \ref{ass:random_f_star} at level $\ell$, and $\sigma :\R \to \R$ be an activation function satisfying Assumption \ref{ass:sigma} at level $\ell$.
 
 Let $\bX = [ \bx_1 , \ldots , \bx_n ]^\sT \in \R^{n \times d}$ and $\bW = [\bw_1 , \ldots , \bw_p ]^\sT \in \R^{p \times d}$ with $(\bx_i)_{i \in [n]} \sim_{iid} \Unif (\S^{d-1} (\sqrt{d} ) ) $ and $(\bw_j)_{j \in [p]} \sim_{iid} \Unif (\S^{d-1})$ independently. Let $y_i = f_{*,d} (\bx_i) + \eps_i$ with $\eps_i$ independent noise with $\E [ \eps_i] = 0$, $\E [ \eps_i^2] = \rho_\eps^2$ and $\E [ \eps_i^4 ] < \infty$. 
 
 Then for any regularization $\lambda >0$, the asymptotic test/training errors and the (normalized) squared $\ell_2$ norm of the minimizer of random feature ridge regression (RFRR) satisfy
 \begin{align}
    \label{eq:asymp_testerr}
     \E_{\bX,\bW,\beps,f_*} \Big\vert R_{\test} ( f_{*,d} ; \bX , \bW , \beps, \lambda) - \Big[( F_\ell^2 \cdot \cB_{\test} + F_{>\ell}^2) + (F_{>\ell}^2 + \rho_\eps^2 ) \cdot \cV_{\test} \Big]  \Big\vert =&~ o_d(1) \, ,\\
 \label{eq:asymp_trainerr}
  \E_{\bX,\bW,\beps,f_*} \Big\vert R_{\train} ( f_{*,d} ; \bX , \bW , \beps, \lambda) - \alpha_c \Big[ (F_\ell^2 \cdot \cB_{\test} + F_{>\ell}^2) + (F_{>\ell}^2 + \rho_\eps^2 ) \cdot \cV_{\test} +  \rho_\eps^2 \Big]  \Big\vert =&~ o_d(1) \, ,   \;\;\;\;\;
 \end{align}
 and
 \begin{align}
     \label{eq:asymp_anorm}
     \E_{\bX,\bW,\beps,f_*} \Big\vert  L_{\lnorm} (f_* ; \bX , \bW , \beps, \lambda) - \Big[ F_\ell^2 \cdot \cB_{\lnorm} + (F_{>\ell}^2 + \rho_\varepsilon^2) \cdot \cV_{\lnorm} \Big] \Big\vert = o_d(1),
 \end{align}
 where $(\cB_{\test}, \cV_{\test}, \alpha_c)$ and $(\cB_{\lnorm}, \cV_{\lnorm})$ are defined as in [\eqref{eq:B_V_test_1}, \eqref{eq:B_V_test_2}, \eqref{eq:B_V_test_3}] and [\eqref{eq:B_V_norm_1}, \eqref{eq:B_V_norm_2}, \eqref{eq:B_V_norm_3}], in the case of (1) $\kappa_1 = \kappa_2$, (2) $\kappa_1 > \kappa_2$, and (3) $\kappa_1 < \kappa_2$ respectively.

 \end{theorem}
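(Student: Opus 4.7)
The plan is to reduce the three target quantities (test error, training error, and normalized squared norm of $\hba_\lambda$) to traces and bilinear forms of the resolvent of a suitable random matrix, and then to compute their limits via the Stieltjes transform of a block kernel matrix, extending the leave-one-out strategy of \cite{mei2022generalizationRF} from the linear scaling to the polynomial scaling. Writing $\bZ = \tfrac{1}{\sqrt{p}}\sigma(\bX\bW^\top)$ and $\bv(\bx) = \tfrac{1}{\sqrt{p}}\sigma(\bW\bx)$, the closed form $\hba_\lambda = (\bZ^\top\bZ + \lambda \id_p)^{-1}\bZ^\top\by$ yields
\[
R_{\test} = \|f_*\|_{L^2}^2 - 2\,\by^\top \bZ\, \bR(\lambda)\, \bu + \by^\top \bZ\, \bR(\lambda)\, \bK\, \bR(\lambda)\, \bZ^\top \by,
\]
where $\bR(\lambda) = (\bZ^\top\bZ + \lambda\id_p)^{-1}$, $\bK = \E_\bx[\bv(\bx)\bv(\bx)^\top]$, $\bu = \E_\bx[f_*(\bx)\bv(\bx)]$, with analogous representations for $R_{\train}$ and $L_{\lnorm}$. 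Averaging over the random high-frequency coefficients $\tilde{\bbeta}_d$ of $f_*$ (Assumption \ref{ass:random_f_star}) and over the label noise $\beps$ converts these bilinear forms into traces of the type $\Tr(\bK\bR(\lambda))$, $\Tr(\bR(\lambda))$, and $\Tr(\bR(\lambda)^2)$, with the relevant coefficients being $F_\ell^2$, $F_{>\ell}^2$, $\rho_\eps^2$ and $\|\proj_{<\ell}f_*\|_{L^2}^2$.

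Next, I would decompose the activation along Gegenbauer polynomials, $\sigma(\langle \bx,\bw\rangle) = \sum_{k=0}^{\ell}\mu_k q_k^{(d)}(\langle \bx,\bw\rangle) + \sigma_{>\ell}(\langle \bx,\bw\rangle)$, and use the addition formula so that the degree-$k$ component factors as a low-rank product of spherical-harmonic matrices of $\bX$ and $\bW$. The truncation $\bZ_{>\ell}$ associated to $\sigma_{>\ell}$ has the key property that, at the polynomial scale $n,p \asymp d^\ell$, its Gram matrix concentrates on $\mu_{>\ell}^2 \id$ plus a controllable low-rank / Marchenko–Pastur-type correction; this follows from the inner-product-kernel concentration estimates developed in \cite{misiakiewicz2022spectrum,lu2022equivalence}, suitably adapted from the symmetric case $\bX\bX^\top$ to the asymmetric case $\bX\bW^\top$. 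This motivates a Gaussian equivalent model in which the degree-$\leq\ell$ harmonic blocks $\bPhi_{\leq\ell}(\bX), \bPhi_{\leq\ell}(\bW)$ are replaced by i.i.d.~Gaussian matrices with matching first two moments, and $\bZ_{>\ell}$ is replaced by an independent Gaussian matrix of variance $\mu_{>\ell}^2/p$. A Lindeberg / resolvent-swap argument then shows that all the trace functionals identified above agree, up to $o_{d,\P}(1)$, between the RFRR model and this Gaussian equivalent.

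On the Gaussian equivalent, I would compute the Stieltjes transforms $\tau_1(z), \tau_2(z)$ of a block matrix built from $\tbZ$ (whose Schur complement recovers $\bR(z)$) by a leave-one-out analysis carried out jointly over individual rows of $\bX$ and $\bW$. Substituting the low-rank-plus-noise structure of $\tbZ$ and taking $d\to\infty$ produces the two coupled cubic equations \eqref{eq:FixedPoints} when $\kappa_1=\kappa_2=\ell$; differentiating in $z$ yields $\tau_1',\tau_2'$ and hence the formulas \eqref{eq:B_V_test_1}-\eqref{eq:B_V_norm_1}, while the over/underparametrized cases are recovered as limits $\psi_1/\psi_2 \to \infty$ or $0$ (Appendix \ref{sec:specialregime}), in which the cubic system degenerates to a quadratic and produces the explicit formulas \eqref{eq:B_V_test_2}-\eqref{eq:B_V_norm_3}. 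The principal obstacle is the leave-one-out step at the polynomial scale: unlike the linear scaling, rows of $\bZ$ carry correlated degree-$\ell$ and high-frequency components through the same sample $\bx_i$, so controlling the $d^{-1/2}$-order fluctuations of leave-one-out residuals requires sharp anisotropic concentration estimates for products of Gegenbauer polynomials of degree $\ell$ evaluated at $n,p \asymp d^\ell$ random arguments, extending the one-sided analysis of \cite{lu2022equivalence} to the doubly-random setting involving both $\bX$ and $\bW$; the same bounds underpin the universality step in the previous paragraph.
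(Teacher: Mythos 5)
Your overall skeleton matches the paper's: reduce the three quantities to resolvent traces, average over $\beps$ and the random high-frequency coefficients of $f_*$, expand $\sigma$ in Gegenbauer polynomials via the addition formula, exploit concentration of the high-degree Gram matrix on $\mu_{>\ell}^2\id$ plus corrections, and compute the critical-regime traces as derivatives of the log-determinant of a block matrix whose Stieltjes transform is obtained by leave-one-out. However, there is a genuine gap at the step you make load-bearing: you propose to first pass to a Gaussian equivalent model by a ``Lindeberg / resolvent-swap argument'' and only then run the leave-one-out on the Gaussian surrogate. No such swap is available at the polynomial scale. The coordinates of the degree-$\ell$ feature block $\bpsi_\ell(\bx) = (Y_{\ell s}(\bx))_{s\in[N_\ell]}$ are $N_\ell \asymp d^\ell$ strongly dependent polynomials of the same $d$-dimensional point; they are neither independent nor subgaussian, so coordinate-wise (or row-wise) Lindeberg exchanges of the kind used in the linear-scaling universality literature do not apply, and the paper deliberately avoids this route --- it proves the Gaussian equivalence (Theorem \ref{thm:gauss_equivalence}) only \emph{a posteriori}, by computing both asymptotics separately. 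The actual argument performs the leave-one-out directly on the spherical-harmonic block matrix, using the conditional representation of $\bx_a$ and $\bw_i$ given one removed direction together with the Gegenbauer product expansion (Proposition 1 of \cite{lu2022equivalence}) to isolate the weak correlation between the removed column and the minor's resolvent. Treating the universality step as routine hides the hardest part of the proof.

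Two secondary points. First, the over- and underparametrized regimes are not obtained by ``taking $\psi_1/\psi_2 \to \infty$ or $0$ in the cubic system'': when $\kappa_1 \neq \kappa_2$ the block-matrix construction itself does not apply (one of $p/d^\ell$, $n/d^\ell$ diverges), and the paper instead reduces the traces to KRR-type functionals of $\bQ_\ell^{\bX}$ (resp.\ approximation-type functionals of $\bQ_\ell^{\bW}$) via separate operator-norm concentration estimates, then invokes the known asymptotics of those symmetric kernel matrices; the coincidence of the resulting formulas with the limits of the critical-regime fixed points is checked afterwards, not used as a proof. Second, you never show that the degree-$<\ell$ component of $f_*$ is fitted exactly; this requires a dedicated argument (in the paper, an SVD of the feature matrix separating the top $N_{<\ell}$ singular values, which diverge like $\sqrt{r d^{-(\ell-1)}}$, from the bounded bulk) and cannot be absorbed into the trace computations.
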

 \begin{figure}[t]
\begin{tikzpicture} 
\node[inner sep=0pt] (russell) at (0,-1)
    {\includegraphics[width=.36\textwidth]{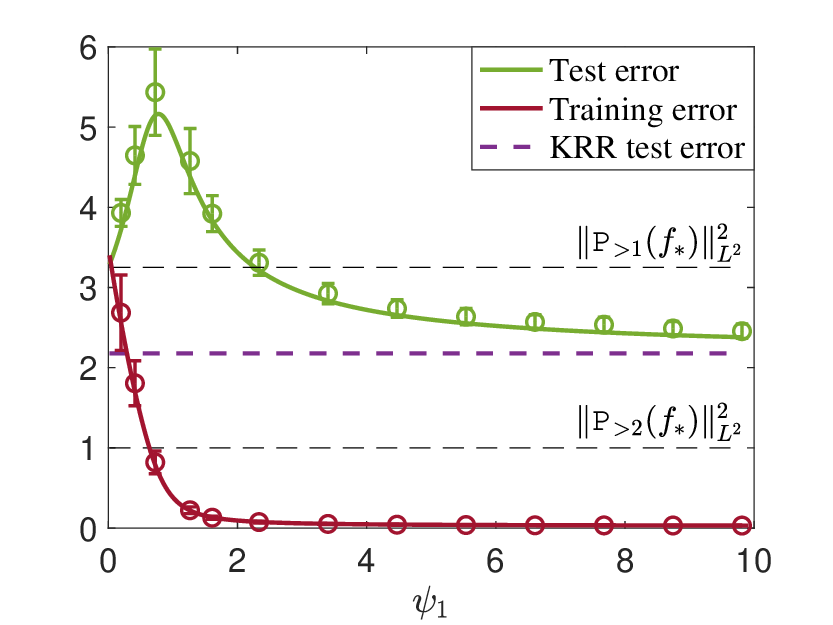}};
\node[inner sep=0pt] (russell) at (5.5,-1)
    {\includegraphics[width=.36\textwidth]{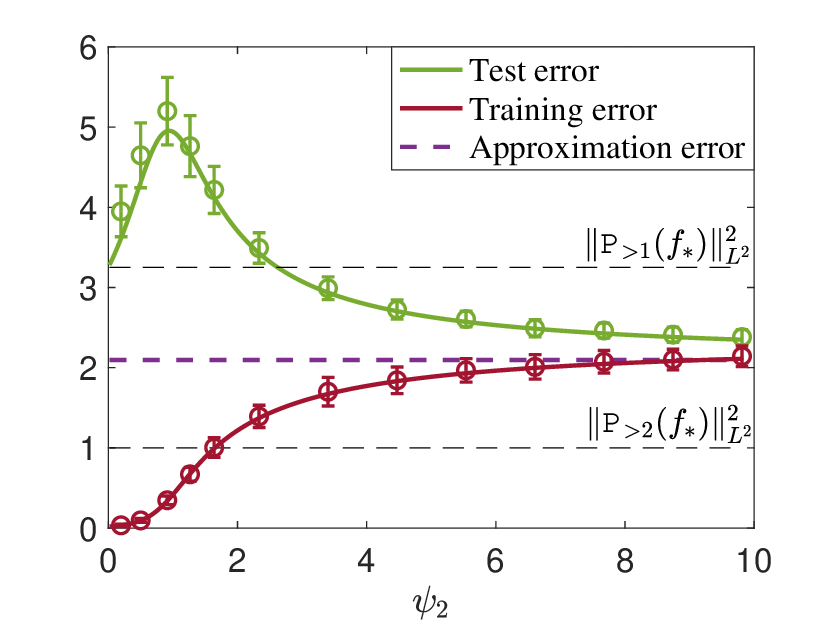}};
\node[inner sep=0pt] (russell) at (11.0,-1)
    {\includegraphics[width=.36\textwidth]{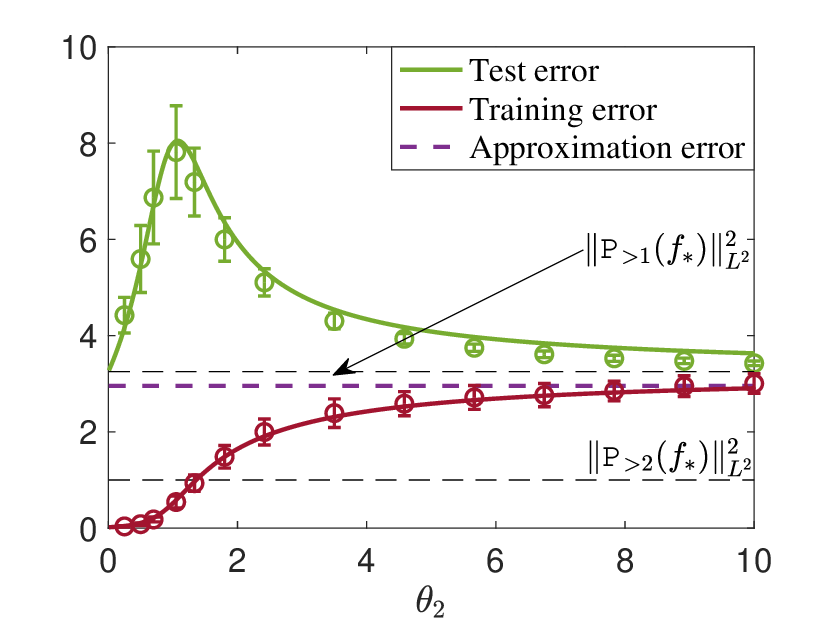}}; 
\end{tikzpicture}
\caption{Test error and training error of RFRR in the critical regime $\kappa_1 = \kappa_2$. We choose the target function to be $f_{*,d}(\bx) = 0.5 \bbeta^\sT\bx + 1.5(\bbeta^\sT\bx)^2 + (\bbeta^\sT\bx)^3$ with $\|\bbeta\|_2=1$, and the activation function $\sigma(x) = 1.5x + 3x^2 + 2x^3$. We set $\lambda = 1.0$ and $\rho_\eps^2 = 0.25$. The solid lines correspond to the analytical predictions for the test and training errors obtained in Theorem \ref{thm:main_theorem_RF}, the purple dashed line to the analytical predictions for the KRR test error, and the grey dashed lines to the values of the projections $\| \proj_{>1} f_* \|_{L^2}^2$ and $\| \proj_{>2} f_* \|_{L^2}^2$.  The dots are the empirical results with $d = 50$, and the mean and error bars are computed over 100 independent runs. \textit{On the left:} we set $\kappa_1 = \kappa_2 = 2$ and $\psi_2= 2n/d^2 =1$ and plot the errors versus $\psi_1 = 2p/d^2$. \textit{In the middle:} we set $\kappa_1 = \kappa_2 = 2$ and $\psi_1 = 2p/d^2=1$, and plot the errors versus $\psi_2 = 2n/d^2$. \textit{On the right:} we set $\kappa_1 = \kappa_2 = 1.5$ and $\theta_1 = p/d^{1.5} =  1$, and plot the errors versus $\theta_2 = n / d^{1.5}$. }

\label{fig:test_train_err}
\end{figure}
\begin{figure}[t]
\begin{tikzpicture} 
\node[inner sep=0pt] (russell) at (0,0)
    {\includegraphics[width=.5\textwidth]{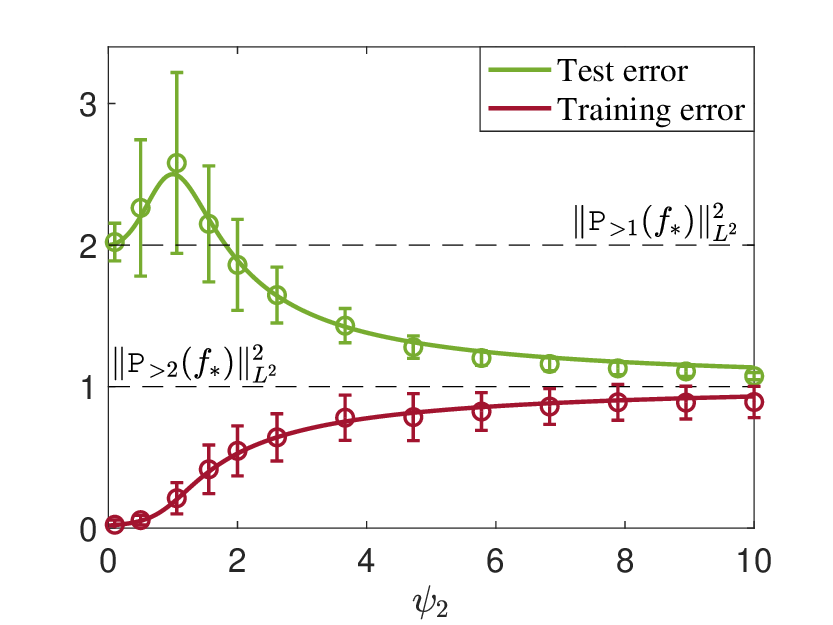}};
\node[inner sep=0pt] (russell) at (8,0)
    {\includegraphics[width=.5\textwidth]{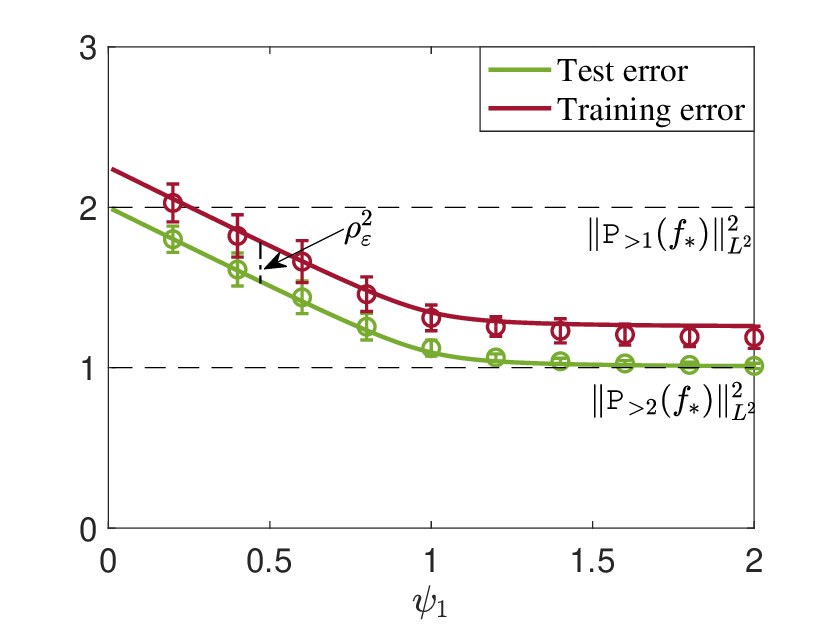}};  
\end{tikzpicture}
\caption{Test error and training error of RFRR in the overparametrized regime $\kappa_1 > \kappa_2$ (left) and underparametrized regime $\kappa_1 < \kappa_2$ (right). We choose the target function to be $f_{*,d}(\bx) = \bbeta^\sT\bx + (\bbeta^\sT\bx)^2$, $\| \bbeta \|_2 = 1$, and the activation function $\sigma(x) = x + 0.1 x^2$. The solid lines correspond to the analytical predictions for the test and training errors obtained in Theorem \ref{thm:main_theorem_RF}, and the grey dashed lines to the values of the projections $\| \proj_{>0} f_* \|_{L^2}^2$ and $\| \proj_{>1} f_* \|_{L^2}^2$. The dots are the empirical results with $d = 50$, and the mean and error bars are computed over 100 independent runs. \textit{On the left:} we set $\kappa_1 = 2$, $\kappa_2 = 1$, and $\psi_1 = 2$, and plot the errors versus $\psi_2 = n/d$. \textit{On the right:} we set $\kappa_1 = 1$, $\kappa_2 = 2$, and $\psi_2 = 2$, and plot the errors versus $\psi_1 = p/d$.  }
\label{fig:test_train_err_2}
\end{figure}

The proof of this theorem can be found in Section \ref{sec:ProofMainResults}, with some of the more technical details deferred to the appendices. In the following, we comment on two key features of Theorem \ref{thm:main_theorem_RF} and defer a longer discussion to Section \ref{sec:Interpretations}. 
\begin{description}
    \item[Staircase decay:] From Eq.~\eqref{eq:asymp_testerr} and Definition \ref{def:asymptotic_formula_RFRR},  we see that for $\ell =  \lceil \min (\kappa_1 , \kappa_2) \rceil$, RFRR fits completely $\proj_{\leq \ell -1} f_*$, the degree-$(\ell-1)$ polynomial approximation to $f_*$, and none of its higher degree part $\proj_{>\ell} f_*$. Furthermore, if $\min ( p /d^\ell, n/d^\ell) \to \infty$, then RFRR fits completely $\proj_\ell f_*$, whereas if $\min ( p /d^\ell, n/d^\ell) \to 0$, RFRR does not fit $\proj_{\ell} f_*$ at all. Thus, as $n,p \to \infty$, RFRR incrementally fits polynomial approximations of $f_*$ of increasing degree, with $\proj_\ell f_*$ learnt as soon as $n/d^\ell$ and $p/d^\ell \to \infty$.

    \item[Invariance of asymptotics:] The form of the asymptotics is the same for all polynomial scaling $\ell \in \naturals$. In particular, they only depend on the activation function through $\mu_\ell^2$ and $\mu_{>\ell}^2$. For example, for $\kappa_1 = \kappa_2 = \ell$, the training and test errors only depend on $\psi_1 , \psi_2$, $\zeta^2 = \mu_\ell^2/\mu_{>\ell}^2$, $\barlambda = \lambda / \mu_{>\ell}^2$, $\| \proj_{\ell} f_*\|_{L^2}$, and $\| \proj_{>\ell} f_* \|_{L^2}$, and the asymptotics take the same form as the ones in \cite{mei2022generalizationRF} which only considered the case $\ell = 1$.
\end{description}

We illustrate Theorem \ref{thm:main_theorem_RF} in  \fref{test_train_err} and \fref{test_train_err_2} where we compare numerically the training and test errors with their asymptotic predictions in 4 different regimes: (1) $\kappa_1 = \kappa_2 = \ell$, (2) $\kappa_1 = \kappa_2 < \ell$, (3) $\kappa_1 > \kappa_2$, (4) $\kappa_1 < \kappa_2$.  
 \fref{test_train_err} considers the critical regimes $\kappa_1 = \kappa_2 \in \{1.5, 2\}$. In this regime, the test error exhibits a non-monotonic behavior with a peak at the interpolation threshold $n/p = \theta_2/\theta_1 = 1$. This corresponds to the double-descent phenomenon, which was previously characterized by \cite{mei2022generalizationRF} in the linear scaling $\kappa_1 = \kappa_2 = 1$. Furthermore, when the sample size is fixed and the number of parameters changes (left plot), the test error goes from $\| \proj_{>\ell-1} f_* \|_{L^2}^2$ as $\psi_1 \to 0$, to the test error of KRR as $\psi_1 \to \infty$. On the other hand, when the number of parameters is fixed and the sample size changes (middle plot), the test error goes from $\| \proj_{>\ell-1} f_* \|_{L^2}^2$ as $\psi_2 \to 0$, to the approximation error as $\psi_2 \to \infty$. Finally when $\kappa_1 = \kappa_2 < \ell$ (right plot), the KRR test error and approximation error are both equal to $\| \proj_{>\ell-1} f_* \|_{L^2}^2$ and thus $R_{\test} \to \| \proj_{>\ell-1} f_* \|_{L^2}^2$ under both limits $\theta_2 \to 0$ and $\theta_2 \to \infty$.
 \fref{test_train_err_2} considers the overparametrized regime $\kappa_1 > \kappa_2$ (left plot) and underparametrized regime $\kappa_1 < \kappa_2$ (right plot). We see that, in the overparametrized regime $\kappa_1 > \kappa_2$, the errors are the same as those of KRR and when $\kappa_1 < \kappa_2$, the errors converge to the approximation error of the RF model class.

 \subsection{Discussion}
\label{sec:Interpretations}

In this section, we discuss Theorem \ref{thm:main_theorem_RF} and provide additional intuitions on the form of the asymptotics in Definition \ref{def:asymptotic_formula_RFRR}. 
 
\paragraph*{Bias-variance decomposition and staircase decay.} 
The asymptotic test error can be decomposed into the classical bias and variance terms with respect to the label noise $(\eps_i)_{i \in [n]}$, as $\sfR_{\test} = \sfB_{\test} + \sfV_{\test}$ where 
\begin{align}\label{eq:Bias_Var_decompo_asymptotics}
    \sfB_{\test} &= F_\ell^2 \cdot \cB_{\test} + F_{>\ell}^2 \cdot (1 + \cV_{\test})\, , \qquad \sfV_{\test} = \rho_\epsilon^2 \cdot \cV_{\test} \, .
\end{align}
The high-frequency part of the target function $\| \proj_{>\ell} f_* \|_{L^2}^2 = F_{>\ell}^2$ effectively plays the role of and additive noise in this high-dimensional setting. In other words, $\proj_{>\ell} f_* (\bx_i) + \eps_i$ behaves as an effective additive noise to the labels $\proj_{\leq \ell} f_* (\bx_i)$. For $\kappa_1 \neq \kappa_2$, the bias term $\cB_{\test}$ is monotonically nonincreasing in $p$ and $n$, while it presents a peak at the interpolation threshold $n = p$ for $\lambda$ small enough due to the conditioning number of the feature matrix $\bZ = (\sigma (\< \bx_i , \bw_j\>)_{i \in [n], j \in [p]} $ diverging. Hence, RFRR can display a double descent at the interpolation threshold even when $F_{<\ell} = \rho_\eps = 0$. The variance term $\cV_{\test}$ present peaks not only at $n = p$, but also at $n = (1 + o_d(1) ) d^\ell/ \ell!$ and $\kappa_1\geq \kappa_2$ for $\barlambda$ and $1/\zeta$ large enough. These additional peaks are due to the degeneracy of the singular values of $\sigma$ associated to degree-$\ell$ spherical harmonics, and will appear in the test error only if $F_{>\ell}$ or $\rho_\eps \neq 0$.

It is instructive to decompose the test error into the contributions for estimating each frequency of the target function
\[
\sfR_{\test} = \sum_{k =0}^\infty \sfR_{\test,k} = \sum_{k =0}^\infty \sfB_{\test,k} + \sfV_{\test,k}\, ,
\]
where $\sfR_{\test,k} $ is the asymptotic of $\| \proj_k (f_* - h_{\RF} (\cdot ; \hat \ba_{\lambda} ) ) \|_{L^2}^2$ (the contribution of subspace $V_{d,k}$ to the test error), and $\sfB_{\test,k}$ and $ \sfV_{\test,k}$ the associated bias and variance terms. We have
\begin{align}
    \sfB_{\test,k} = 
    \begin{cases}
        0\, ,\\
        F_\ell^2 \cdot \cB_{\test} + F_{>\ell}^2  \cdot \cV_{\test}\, ,\\
        F_k^2\, ,\\
    \end{cases} \qquad     \sfV_{\test,k} = 
    \begin{cases}
        0\, ,& k < \ell \, , \\
        \rho_\varepsilon^2 \cdot \cV_{\test}\, ,& k = \ell\, , \\
        0\, ,& k > \ell \, .
    \end{cases}
\end{align}
This decomposition offers a particularly simple explanation for the staircase decay of the test error. For $k < \ell$, the signal on these subspaces are fitted perfectly with $\sfB_{\test,k} = \sfV_{\test,k} = 0$, while for $k >\ell$, it is not fitted at all with $\sfB_{\test,k} = \| \proj_k f_* \|_{L^2}^2$ and $\sfV_{\test,k} =0$. A richer phenomenology happens on the subspace of critical degree $\ell$, where the error can exhibit non-monotone behavior. We illustrate this discussion in \fref{bias-var-psi1change} and \fref{bias-var-ratiofixed}, where we plot the analytical formula for the bias, variance and test errors at different scaling.

 \begin{figure}[t!]
\begin{tikzpicture} 
\node[inner sep=0pt] (russell) at (0,0.5)
    {\includegraphics[width=.35\textwidth]{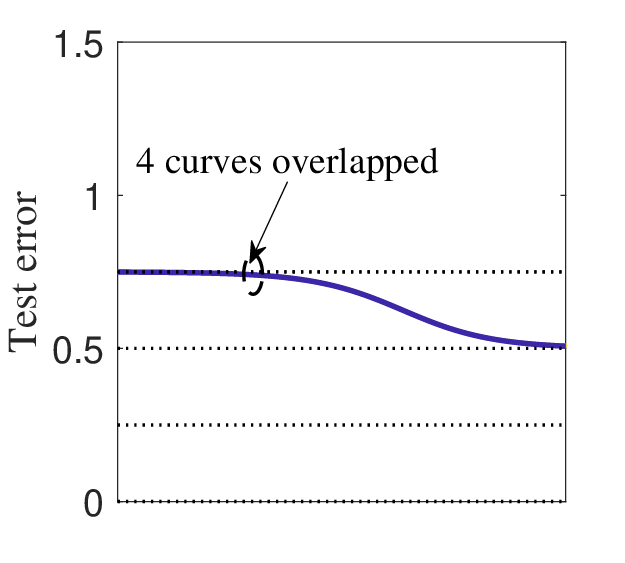}};
\node[inner sep=0pt] (russell) at (5.3,0.5)
    {\includegraphics[width=.35\textwidth]{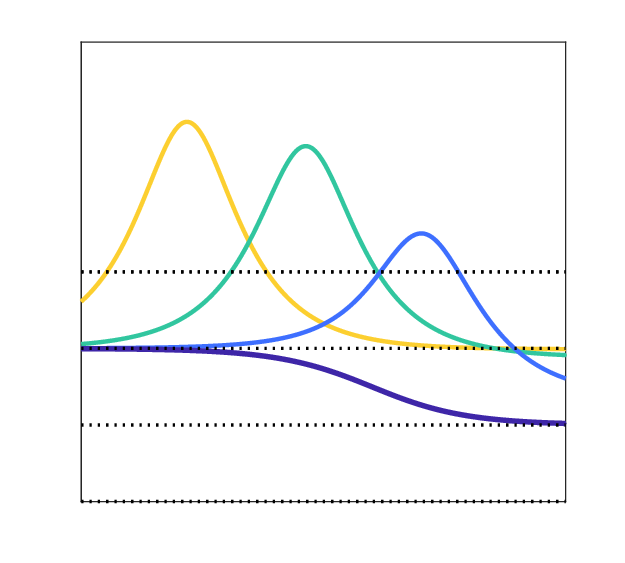}}; 
\node[inner sep=0pt] (russell) at (10.6,0.5)
    {\includegraphics[width=.35\textwidth]{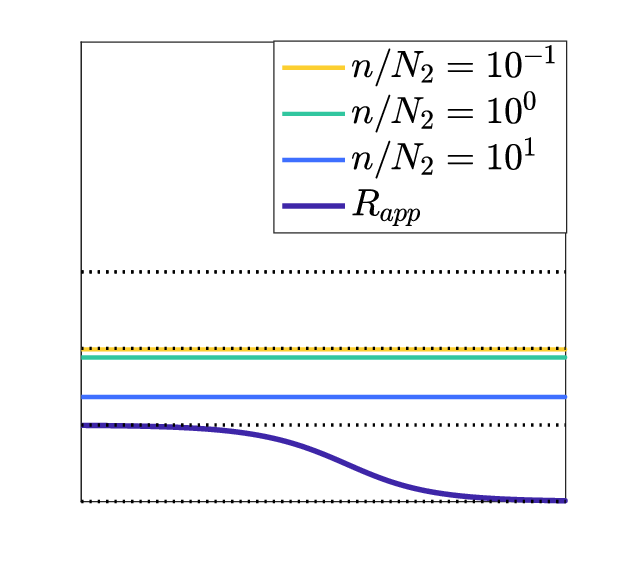}};  
\node[inner sep=0pt] (russell) at (0,-4.6)
    {\includegraphics[width=.35\textwidth]{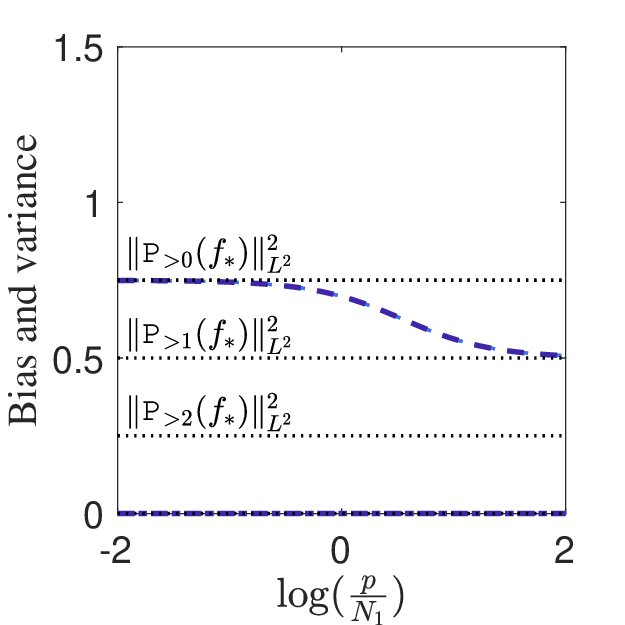}};
\node[inner sep=0pt] (russell) at (5.3,-4.6)
    {\includegraphics[width=.35\textwidth]{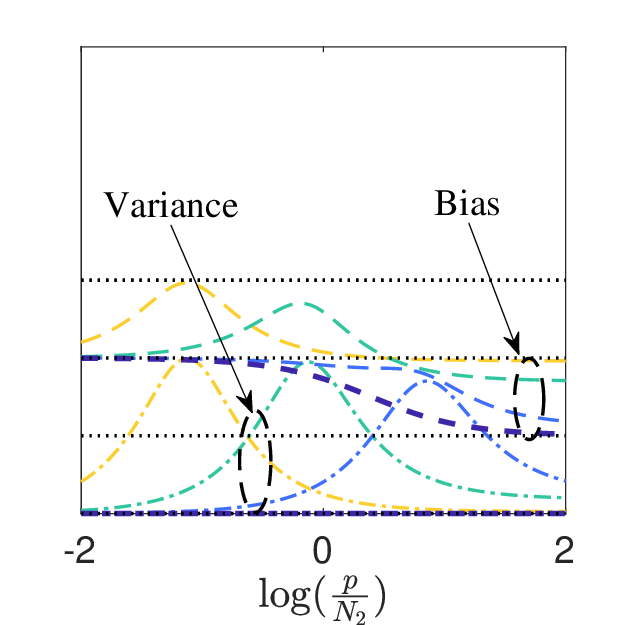}}; 
\node[inner sep=0pt] (russell) at (10.6,-4.6)
    {\includegraphics[width=.35\textwidth]{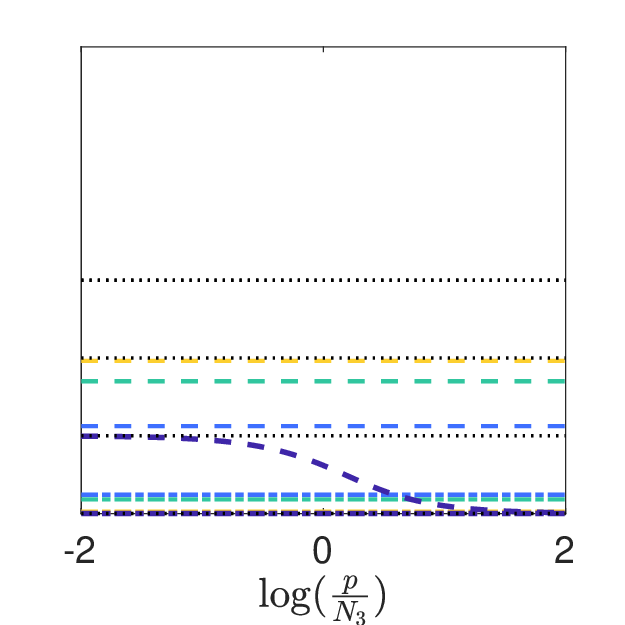}}; 
\end{tikzpicture}
\caption{Illustration of the bias (dash curves) and variance (dash-dot curves) decomposition and the incremental learning process of RFRR. We plot the analytical asymptotic predictions from Definition \ref{def:asymptotic_formula_RFRR} versus $p$, while $n$ is kept fixed with $\kappa_2 = 2$ and three values of $\psi_2= 2n/d^2 \in \{0.1, 1, 10\}$.  We set $\lambda = \rho_\varepsilon^2 =1$, $\sigma(x)=x+x^2+x^3+x^4$ and $f_*(\bx) = 0.5 \bbeta^\sT \bx + 0.5 (\bbeta^\sT \bx)^2 + 0.5 (\bbeta^\sT \bx)^3 $. The dotted lines correspond to  the squared norm of each of the frequencies of $f_*$. Recall that $R_{\App}$ denotes the approximation error. }
\label{fig:bias-var-psi1change}
\end{figure}

 \paragraph*{Optimal parametrization and regularization parameter.} From these asymptotics, we see that for $p/n \to \infty$, RFRR test error concentrates on the KRR test error, while for $n/p \to \infty$, it concentrates on the approximation error. Hence, in general, the optimal test error will be achieved for $p/n$ sufficiently large (matching the test error of KRR), and larger overparametrization scaling (taking $\kappa_1 > \kappa_2$) will not improve the test error. However, we can construct cases where the optimal test error is sometimes achieved in the underparametrized regime $p \ll n$. For example, in under-regularized case where KRR presents a peak at $n = (1 + o_d(1))d^\ell / \ell!$, then the test error of KRR is bigger than $\|\proj_{\geq \ell} f_* \|_{L^2}^2$ while taking  $p/d^{\ell - 1} \to \theta_1$ with $\theta_1$ sufficiently large will achieve the approximation error $\|\proj_{\geq \ell} f_* \|_{L^2}^2$. For convenience, we provide a brief overview on the asymptotics of KRR and approximation errors of RFRR in Appendix \ref{sec:kernel_approx}.

 At the interpolation threshold $n=p$, the peak diverges as $\lambda \to 0^+$, while taking an optimally-tuned regularization eliminates the double descent: with this choice, the test error becomes monotone decreasing in $p$ for fixed $n$ (and in $n$ for fixed $p$). On the other hand, $\lambda \to 0^+$ is often optimal (i.e., achieved lowest test error) away from the interpolation threshold thanks to the additive self-induced regularization $\mu^2_{>\ell}$ coming from the high frequency-part of the activation function. Intuitively, we have with high probability $\bZ \bZ^\sT \succeq \mu_{>\ell}^2 \id /2 $ (overparametrized regime) and $(p/n)\cdot \bZ^\sT \bZ \succeq \mu_{>\ell}^2 \id /2 $ (underparametrized regime). We refer the reader to \cite{mei2022generalization,mei2022generalizationRF,xiao2022precise} for additional discussions on the self-induced regularization and the optimality of interpolation.

 \paragraph*{Generalized cross validation.} Consider the prediction error with label noise 
 \[
 R^P(h_{\RF}) = \E_{\bx} [ (y - h_{\RF} (\bx ; \hat \ba_{\lambda} ) )^2 ] =   R ( f_* ; \bX , \bW , \beps , \lambda ) + \rho_\eps^2 \, ,
 \]
 which converges to $\sfR^P_\test := \sfR_\test + \rho_\eps^2$. From Theorem \ref{thm:main_theorem_RF}, we see that the asymptotic prediction error is proportional to the asymptotic training error, with $ \sfR^P_\test = \sfR_\train / \alpha_c$. From our proofs, we can check that (see Appendix \ref{sec:alternate} for an explanation)
 \[
 \left( \frac{\lambda}{n} \Tr\left[ ( \bZ \bZ^\sT + \lambda \id_n )^{-1} \right] \right)^2 \to \alpha_c  \, ,
 \]
where the convergence holds in probability. We deduce that
 \[
 \frac{R_{\train} (f_* ; \bX , \bW , \beps, \lambda)}{ \left( \frac{\lambda}{n} \Tr\left[ ( \bZ \bZ^\sT + \lambda \id_n )^{-1} \right] \right)^2 } =   R^P (h_{\RF}) + o_{d,\P}(1) \, .
 \]
 Hence the estimator on the left-hand side converges in probability to the prediction error in the high-dimensional polynomial scaling. This estimator is exactly the Generalized Cross-Validation (GCV) estimator that was introduced in \cite{craven1978smoothing,golub1979generalized}, and our results directly imply the (weak) consistency of the GCV estimator in the polynomial scaling. This was already noted in \cite{adlam2020neural} in the linear scaling.

 \begin{figure}[t]
\begin{tikzpicture} 
\node[inner sep=0pt] (russell) at (0,0.5)
    {\includegraphics[width=.35\textwidth]{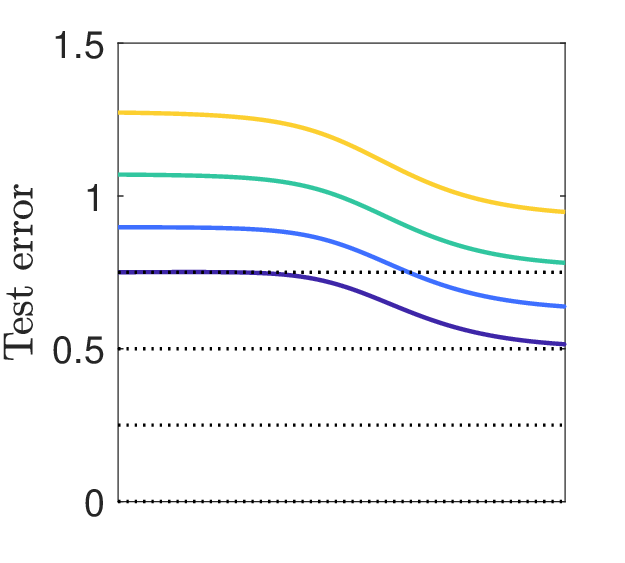}};
\node[inner sep=0pt] (russell) at (5.3,0.5)
    {\includegraphics[width=.35\textwidth]{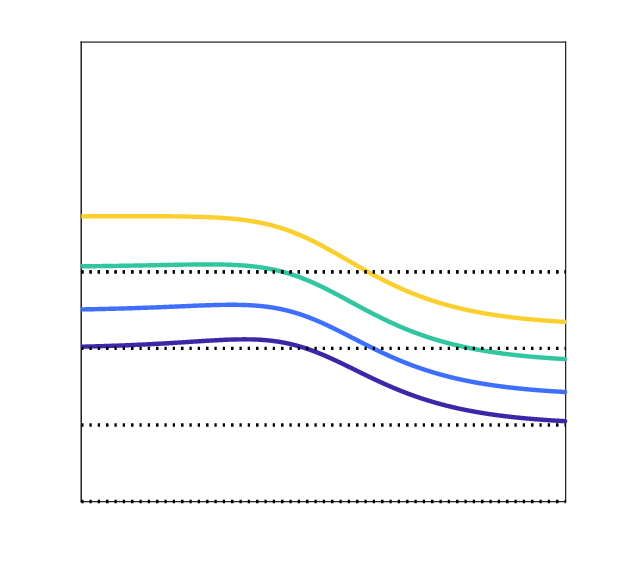}}; 
\node[inner sep=0pt] (russell) at (10.6,0.5)
    {\includegraphics[width=.35\textwidth]{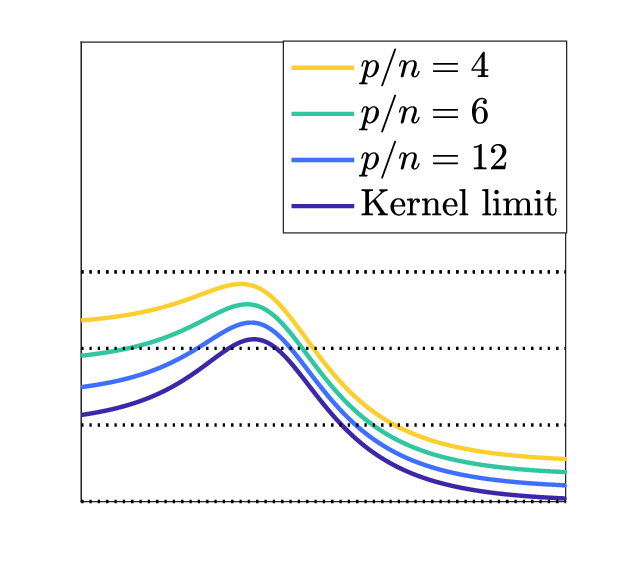}};  
\node[inner sep=0pt] (russell) at (0,-4.6)
    {\includegraphics[width=.35\textwidth]{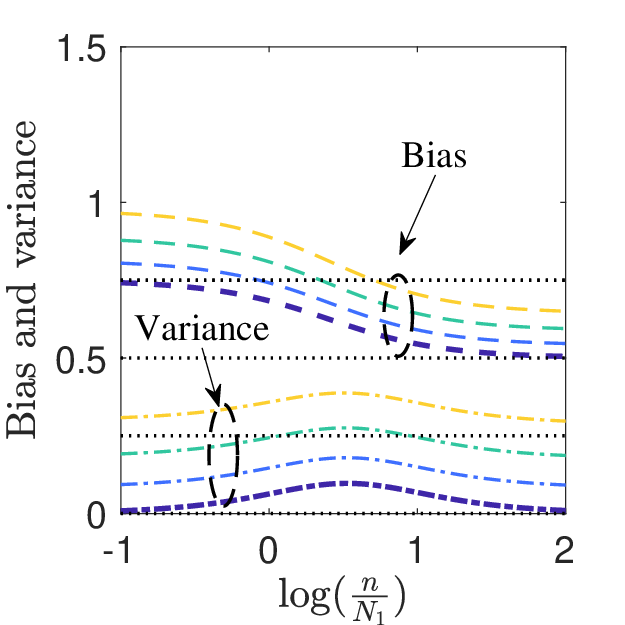}};
\node[inner sep=0pt] (russell) at (5.3,-4.6)
    {\includegraphics[width=.35\textwidth]{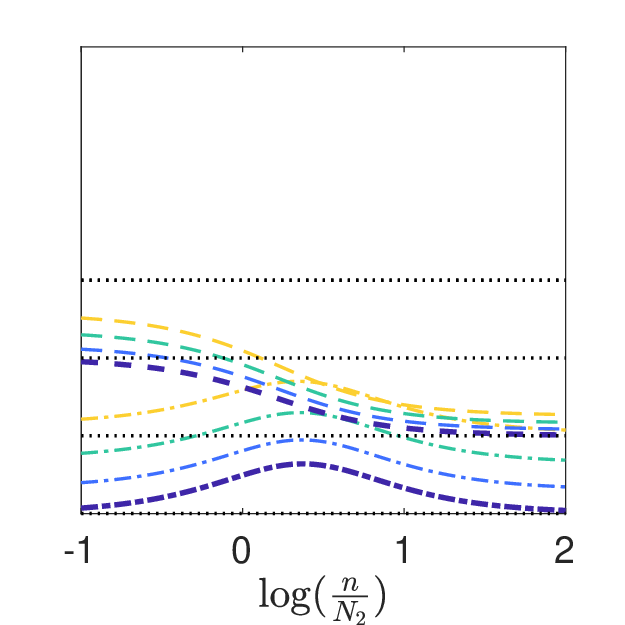}}; 
\node[inner sep=0pt] (russell) at (10.6,-4.6)
    {\includegraphics[width=.35\textwidth]{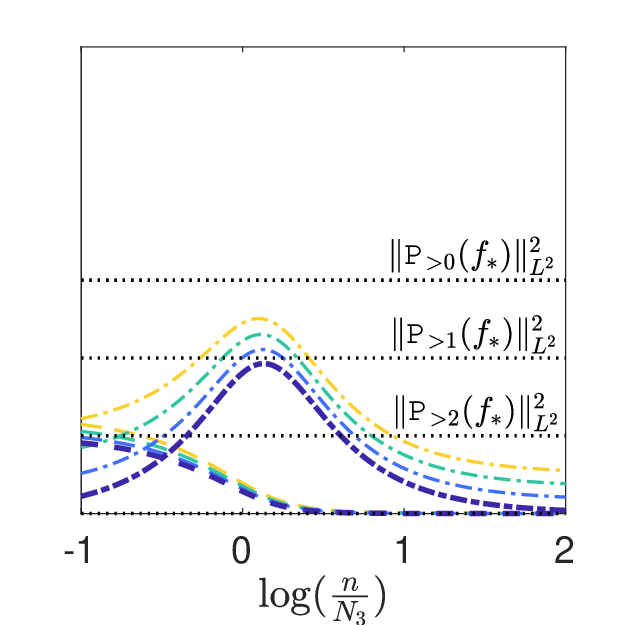}}; 
\end{tikzpicture}
\caption{Illustration of the bias-variance decomposition and the incremental learning process of RFRR, when $n$ and $p$ grows proportionally, i.e., $p/n = \theta_1/\theta_2$ kept constant while $\theta_2$ grows and $\kappa_1 = \kappa_2 \in \{1,2,3\}$. The choices of $\lambda$, $\rho_{\varepsilon}$, $f_{*}(\bx)$ and $\sigma(x)$ is the same as those in Figure \ref{fig:bias-var-psi1change}. }
\label{fig:bias-var-ratiofixed}
\end{figure}

\section{Equivalence with a Gaussian Covariate Model}
\label{sec:GaussianEquivalence}
\begin{figure}[t]
\begin{tikzpicture} 
\node[inner sep=0pt] (russell) at (0,0)
    {\includegraphics[width=.33\textwidth]{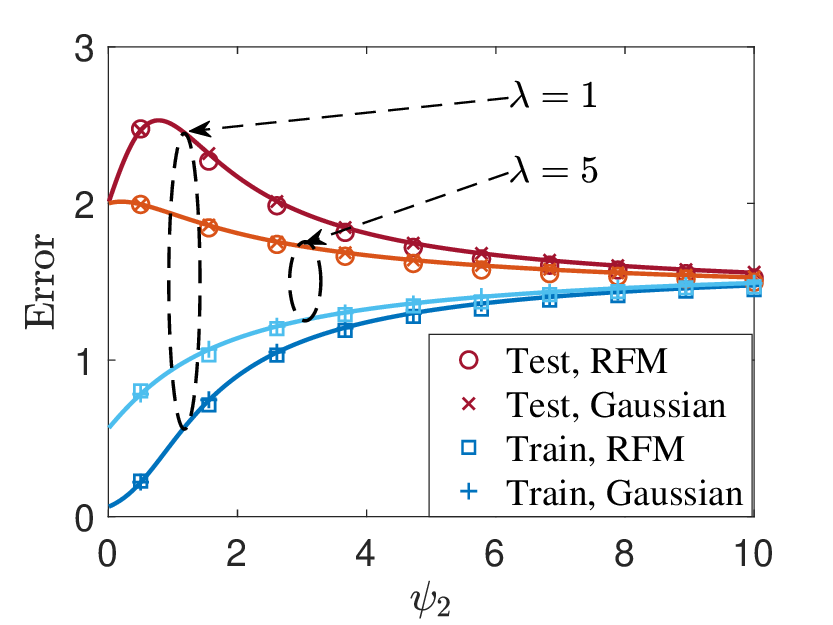}};
\node[inner sep=0pt] (russell) at (5.5,0)
    {\includegraphics[width=.33\textwidth]{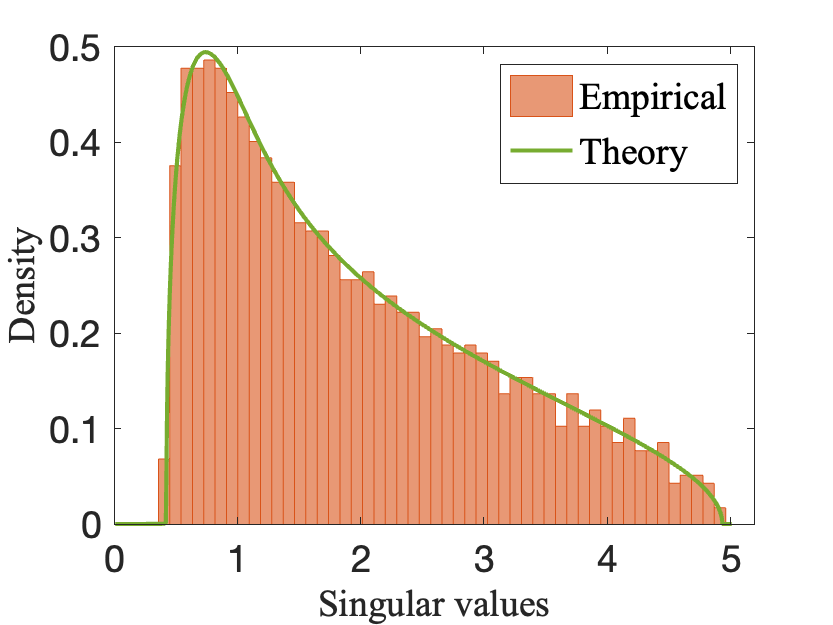}};
\node[inner sep=0pt] (russell) at (11,0)
    {\includegraphics[width=.33\textwidth]{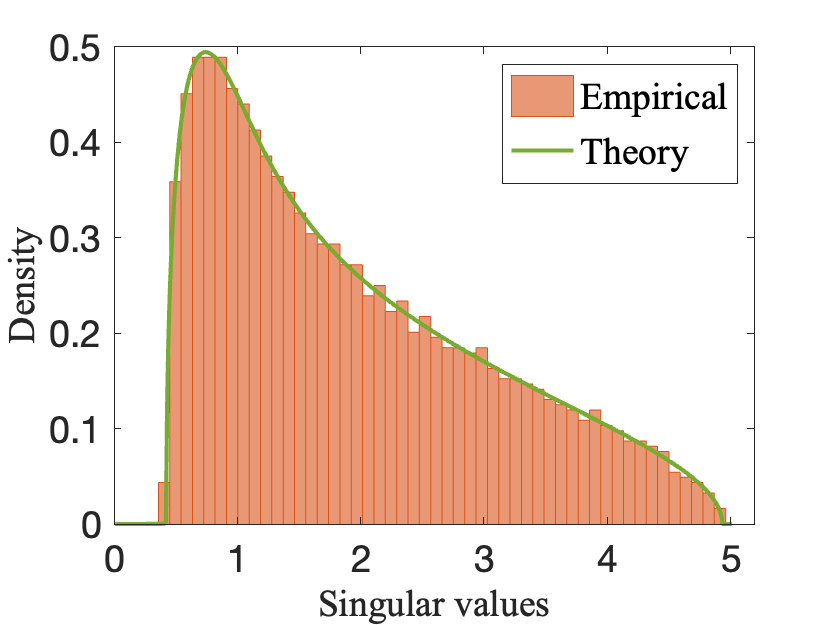}};    
\end{tikzpicture}
\caption{Numerical illustration of the equivalence with the Gaussian covariate model. We set $\kappa_1 = \kappa_2 = 2$ and use $d = 50$ for the numerical experiments. \textit{Left plot:} comparison of the training and test errors between the random feature model and the Gaussian model. We plot the errors versus $\psi_2 = 2n/d^2$, while keeping $\psi_1 =2 p / d^2 =  1$. The continuous lines correspond to the analytical predictions from Theorem \ref{thm:main_theorem_RF}. \textit{Middle and right plots:} singular values distribution of the random feature matrix $\bZ = ( \sigma (\< \bx_i , \bw_j \>) /\sqrt{p} )_{i \in [n], j\in [p]}$ (middle) and the equivalent Gaussian feature matrix $\bZ_G = ( \< \bg_i , \bSigma \bf_j \> / \sqrt{p})_{ i \in [n],j\in[p]}$ (right). We set $\sigma(x) = 2x^2 + x^3$, $\psi_1 = 1$, and $\psi_2 = 2$.}
\label{fig:spectrum_equi}
\end{figure}

The paper \cite{mei2022generalizationRF} noted an intriguing phenomenon in the linear scaling: ridge regression with the random feature model has the same asymptotic risk  as ridge regression with an `equivalent' Gaussian covariate model. The model is linear in the Gaussian covariates, but with a special covariance structure, and much simpler than the non-linear random feature model. Follow-up work \cite{hu2022universality,montanari2022universality} showed that in fact, this universality phenomenon holds for more general loss functions and regularizations in the linear scaling. Our proof reveals that this equivalence to a (more general) Gaussian covariate model continues beyond the linear scaling for ridge regression.

To make the correspondence between the two models more apparent, we first consider the original random feature model with covariates $\bx \sim \Unif (\S^{d-1} (\sqrt{d}))$. We can decompose the target function and the activation function in the orthonormal basis of spherical harmonics (for convenience, we introduce $\bv = \sqrt{d} \cdot \bw$)
\begin{equation}\label{eq:decompo_RF_uni}
\begin{aligned}
    f_* (\bx) =&~ \sum_{k=0}^\infty \sum_{s \in [N_k]} \beta_{*,ks} Y_{ks} (\bx) =: \< \phi (\bx) , \bbeta_* \> \, , \\
    \sigma( \< \bx , \bw \>) =&~ \sum_{k = 0}^\infty \xi_k \sum_{s \in [N_k]} Y_{ks} (\bx) Y_{ks} (\bv) =: \< \phi (\bx) , \bSigma \phi (\bv ) \> \, ,
\end{aligned}
\end{equation}
where we denoted $\phi (\bx) = ( Y_{ks} (\bx) )_{k\geq 0, s \in [N_k]}$, $\bbeta_* = ( \bbeta_{*,k} )_{k\geq 0}$ with $\bbeta_k \in \R^{N_k}$, and\footnote{The vectors and matrices are infinite dimensional here, and to be fully rigorous, we would define (compact) linear operators on $L^2(\S^{d-1})$. However, we keep this presentation for simplicity. Note that $\bSigma^2$ correspond to a trace class self-adjoint operator, where $\Tr (\bSigma^2) = \E_\bx[\sigma (\<\bx,\be_1\>)^2] < \infty$.} 
\[
\bSigma = \diag (\xi_0, \xi_1 \id_{N_1} , \xi_2 \id_{N_2} , \xi_3 \id_{N_3} , \ldots )\, .
\]
In particular, $\| \bbeta_k\|_2^2 = \| \proj_k f_* \|_2^2$. Note that $\E [ \phi (\bx) \phi(\bx)^\sT] = \id$. For example, using these notations, we can write the feature matrix as $\bZ = \phi (\bX) \bSigma \phi (\bW)^\sT / \sqrt{p}$ where
\[
 \phi (\bX) = [ \phi (\bx_1) , \ldots , \phi (\bx_n)]^\sT \in \R^{n \times \infty}\, , \qquad \phi (\bW) = [ \phi (\bv_1) , \ldots , \phi (\bv_p)]^\sT \in \R^{p \times \infty} \, .
\]
By definition, the entries of the feature map $\phi (\bx)$ are uncorrelated, but not independent or even subgaussian (the entries are polynomials of increasing degree). However, our proof indicates that RFRR behaves asymptotically as a model where $\phi (\bx)$ is replaced by a Gaussian vector with matching first two moments. 

Using this intuition, we are now ready to introduce the equivalent Gaussian covariate model:
\begin{itemize}
    \item[(a)] The covariates $\bg = ( g_{ks} )_{k \geq 0, s \in [N_k]}$ are (infinite-dimensional) Gaussian vectors with $\E[\bg] = \E[\phi(\bx)]$ and $\E [ \bg \bg^\sT] = \E[\phi(\bx) \phi (\bx)^\sT]$, i.e., $g_{00} = 1$ and for all $k,k' \geq 1, s\in [N_k], s' \in [N_{k'}]$,
    \[
    \E [ g_{ks} ] = 0 \, , \qquad \E[ g_{ks}g_{k's'}] = \delta_{k =k', s=s'}  \, .
    \]
    \item[(b)] The response $y^G = f^{G}_* ( \bg) + \eps^G$ with a linear target function $f^G_* (\bg) = \< \bbeta_* , \bg \>$ with $\bbeta_* = ( \bbeta_{*,k} )_{k \geq 0}$ defined as per Eq.~\eqref{eq:decompo_RF_uni} and independent noise $\eps^G \sim \normal (0, \rho_\eps^2)$. We denote
    \begin{equation}\label{eq:def_target_Gaussian}
    F_{\ell}^2 =  \| \bbeta_k \|_2^2\, , \qquad F_{>\ell} = \sum_{k = \ell+1}^\infty\| \bbeta_k\|_2^2 \, . 
    \end{equation}

    \item[(c)] The $p$ random features $(\bf_j)_{j \in [p]}$ are iid with same distribution as the covariates $\bg$ and denote $\bF = [ \bf_1 , \ldots , \bf_p ]^\sT \in \R^{p \times \infty}$. The random Gaussian feature model class is defined as 
    \begin{equation}\label{eq:Gaussian_random_feature_class}
    \cF_{\RF}^G (\bF) := \Big\{ h^G_{\RF} (\bg ; \ba ) = \frac{1}{\sqrt{p}} \sum_{j \in [p] } a_j \< \bf_j , \bSigma \bg \> = \frac{1}{\sqrt{p}} \ba^\sT \bF \bSigma \bg : \,\,\, \ba \in \R^p \Big\} \, .
    \end{equation}
\end{itemize}

We get $n$ iid samples $(\bg_i,y^G_i)_{i \in [n]}$ from the linear Gaussian covariate model and we fit this data using ridge regression with respect to the random Gaussian feature model class:
\[
\hat \ba_{\lambda}^G = \argmin_{\ba \in \R^p} \Big\{ \sum_{i \in [n]} \big( y_i^G - h^G_{\RF} (\bg_i ; \ba) \big)^2 + \lambda \| \ba \|_2^2 \Big\} = (\bZ_G^\sT   \bZ_G + \lambda \id_p )^{-1}  \bZ_G^\sT \by^G \, ,
\]
where we defined the feature matrix $\bZ_G = \bG \bSigma \bF^\sT / \sqrt{p}$, with $\bG = [ \bg_1 , \ldots , \bg_n ]^\sT \in \R^{n \times \infty}$. As in the RF model, we are interested in the test/training errors:
\begin{align}
R_{\test}^G ( \bbeta_* ; \bG , \bF , \beps^G , \lambda ) =&~ \E_{\bg} \Big[ \big( f_*^G (\bg) -  h_{\RF}^G (\bg ; \hat \ba_{\lambda}^G ) \big)^2 \Big] = \left\| \bbeta_* - \bSigma \bF^\sT \hat \ba_{\lambda}^G \right\|_2^2 \, ,\\
R_{\train}^G (\bbeta_* ; \bG , \bF , \beps^G, \lambda) =&~ \frac{1}{n} \sum_{i \in [n]} (y_i^G - h_{\RF}^G ( \bg_i ; \hat \ba_{\lambda}^G  ) )^2 \, 
\end{align}
and the normalized squared $\ell_2$ norm of $\hat \ba_{\lambda}^G$:
\begin{align}
    L_{\lnorm}^{G} (\btheta_* ; \bG , \bF , \beps^G, \lambda) =&~ 
    \begin{cases}
        \frac{ \| \hat \ba_{\lambda}^G \|_2^2 }{n}, ~~~\text{when } n = \cO_d(p),  \\
        \frac{ \| \hat \ba_{\lambda}^G \|_2^2 }{p}, ~~~\text{when } p = o_d(n).
    \end{cases}
\end{align}

The following theorem states that above `equivalent' Gaussian model displays the same asymptotics as the original random feature model.
\begin{theorem}[Gaussian equivalent model]
\label{thm:gauss_equivalence}
Consider $(p(d),n(d))_{d \geq 1}$ two sequences of integers such that $p/d^{\kappa_1} \to \gratio_1$ and $n / d^{\kappa_2} \to \gratio_2$ for some $\kappa_1,\kappa_2,\theta_1,\theta_2 \in \R_{>0}$, and denote $\ell = \lceil \min (\kappa_1 , \kappa_2) \rceil$. Assume that $(\xi_k)_{k \geq 0}$ are the singular values of $\sigma$ that satisfy Assumption \ref{ass:sigma} at level $\ell$, and that there exist constants $C,F_\ell^2,F_{>\ell}^2$, such that the target functions satisfy $\| \bbeta_* \|_2 \leq C$ and Equation \eqref{eq:def_target_Gaussian}. Then, for any $\lambda > 0$, we have 
\begin{align}
    \label{eq:asymp_testerr_gauss}
     \E_{\bG,\bF,\beps^G} \Big\vert R_{\test}^G ( \btheta_* ; \bG , \bF , \beps^G , \lambda ) - \Big[ (F_\ell^2 \cdot \cB_{\test} + F_{>\ell}^2) + (F_{>\ell}^2 + \rho_\eps^2 ) \cdot \cV_{\test}  \Big]  \Big\vert =&~ o_d(1) \, ,\\
 \label{eq:asymp_trainerr_gauss}
  \E_{\bG,\bF,\beps^G} \Big\vert R_{\train}^G (\btheta_* ; \bG , \bF , \beps^G, \lambda)  - \alpha_c \Big[ (F_\ell^2 \cdot \cB_{\test} + F_{>\ell}^2)  + (F_{>\ell}^2 + \rho_\eps^2 ) \cdot \cV_{\test} + \rho_\eps^2 \Big]  \Big\vert =&~ o_d(1) \, 
 \end{align}
 and
 \begin{align}
     \label{eq:asymp_anorm_gauss}
     \E_{\bG,\bF,\beps,f_*} \Big\vert  L_{\lnorm}^{G} (\btheta_* ; \bG , \bF , \beps^G, \lambda) - \Big[ F_\ell^2 \cdot \cB_{\lnorm} + (F_{>\ell}^2 + \rho_\varepsilon^2) \cdot \cV_{\lnorm} \Big] \Big\vert = o_d(1).
 \end{align}
\end{theorem}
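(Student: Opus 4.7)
My strategy is to run the same analytic pipeline that yields Theorem~\ref{thm:main_theorem_RF} directly on the Gaussian covariate model, and then check that it produces the same fixed-point equations \eqref{eq:FixedPoints}. Since the Gaussian model is linear in $\bg$ with $\E[\bg\bg^\sT]=\id$ and genuinely Gaussian entries, many of the concentration and leave-one-out estimates that are delicate for the non-linear RF model become standard here. The result is in fact conceptually simpler: one only needs to show that in the polynomial scaling, the asymptotic risk is a function of the first two moments of the feature vector and the eigenvalues $(\xi_k)_{k\geq 0}$ of $\sigma$, which are precisely matched by the Gaussian surrogate.

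\textbf{Key steps.} First, I would split the diagonal operator $\bSigma^2$ into a low-frequency block $\bSigma_{\leq\ell}^2$ of dimension $N_{\leq\ell}=\Theta(d^\ell)$ containing the singular values $\xi_k$ for $k\leq\ell$, and a high-frequency tail $\bSigma_{>\ell}^2$ of finite trace $\mu_{>\ell}^2$. Correspondingly write $\bZ_G = \bZ_G^{\leq\ell} + \bZ_G^{>\ell}$ with $\bZ_G^{\leq\ell}=\bG\bSigma_{\leq\ell}\bF^\sT/\sqrt{p}$ and similarly for the tail. For the tail, I would show that $\bZ_G^{>\ell}(\bZ_G^{>\ell})^\sT$ concentrates in operator norm on a scalar multiple of $\id_n$ (and similarly for $(\bZ_G^{>\ell})^\sT \bZ_G^{>\ell}$ on $\id_p$) via standard Gaussian matrix concentration, because the tail features live in an essentially infinite-dimensional subspace with bounded total variance $\mu_{>\ell}^2$. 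This allows me to absorb the tail into an effective additional regularization via the Woodbury identity, reducing the problem to a finite-rank Gaussian ridge problem with feature matrix $\bG_{\leq\ell}\bSigma_{\leq\ell}\bF_{\leq\ell}^\sT$ of dimensions $n\times N_{\leq\ell}\times p$.

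Second, I would analyze the reduced finite-rank model. This is now a classical two-layer Gaussian equivalent problem of the type considered in \cite{adlam2020neural} in the linear scaling, and its resolvent admits a closed self-consistent description via a leave-one-out argument applied over the rows of $\bG_{\leq\ell}$ and $\bF_{\leq\ell}$ (here there is no dependence between the two factors, which is the main simplification). Expressing the test error, training error and squared norm as suitable derivatives or traces of this resolvent, I would match the resulting asymptotic equations with \eqref{eq:FixedPoints} (and the degenerate limits \eqref{eq:FixedPoints_1} when $\kappa_1=\kappa_2<\ell$), which yields precisely the formulas in Definition~\ref{def:asymptotic_formula_RFRR}. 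The bias/variance structure of the right-hand sides then follows by decomposing $\bbeta_*=\bbeta_{*,\leq\ell-1}+\bbeta_{*,\ell}+\bbeta_{*,>\ell}$, substituting into the quadratic form for the test error, and observing that for $k<\ell$ the signal is perfectly fitted, for $k>\ell$ the mass $F_k^2$ is treated as effective label noise, and only the critical degree $k=\ell$ survives nontrivially.

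\textbf{Main obstacle.} The delicate step is handling the high-frequency tail uniformly in $d$. Although each coordinate $g_{ks}$ with large $k$ carries a tiny variance, there are infinitely many of them and one needs sharp bounds on $\opnorm{\bG_{>\ell}\bSigma_{>\ell}^2\bG_{>\ell}^\sT/p}$ as well as on the cross term $\opnorm{\bG_{>\ell}\bSigma_{>\ell}^2\bF_{>\ell}^\sT/p}$, the latter behaving like a noise term that must vanish in operator norm. I would truncate $\bSigma_{>\ell}$ at a slowly growing frequency $K(d)$, control the truncation error via $\sum_{k>K(d)}\xi_k^2\to 0$, and apply Bai--Yin type operator norm bounds on the resulting truncated Gaussian matrices. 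Once this is done, the remaining argument is a streamlined version of the leave-one-out computation for the RF model in Section~\ref{sec:ProofMainResults}, with all combinatorics involving Gegenbauer polynomials and spherical harmonics replaced by exact Gaussian moment calculations, so that the two sets of fixed-point equations coincide term by term.
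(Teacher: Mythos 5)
There is a genuine gap in the reduction step. You claim that $\bZ_G^{>\ell}(\bZ_G^{>\ell})^\sT$ concentrates in operator norm on a scalar multiple of $\id_n$ (and $(\bZ_G^{>\ell})^\sT\bZ_G^{>\ell}$ on a multiple of $\id_p$), so that the tail can be absorbed into an effective ridge via Woodbury. This is true only when $\kappa_1\neq\kappa_2$: because the tail coordinates live in a latent space of dimension $\gg\max(n,p)$ with total variance $\mu_{>\ell}^2$, the matrix $\bZ_G^{>\ell}$ behaves like an $n\times p$ matrix with i.i.d.\ entries of variance $\mu_{>\ell}^2/p$, and an i.i.d.\ Gaussian matrix with $n\asymp p$ has a nondegenerate Marchenko--Pastur spectrum --- its Gram matrices do not concentrate on multiples of the identity (indeed your two claims are mutually inconsistent, since $MM^\sT$ and $M^\sT M$ share nonzero eigenvalues). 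In the critical regime $\kappa_1=\kappa_2$ the tail must be kept as an additive independent noise matrix coupled to the low-frequency block, exactly as in the linearized model \eqref{eq:linear_Gauss_model_2}; this coupling is what produces the self-consistent terms $-\mu_{>\ell}^2 m_2$ and $-\mu_{>\ell}^2 m_1$ in \eqref{eq:F1_def} (rather than a constant ridge shift $-\mu_{>\ell}^2$) and the divergence at the interpolation threshold $n=p$. A leave-one-out analysis run on your reduced low-frequency model with a shifted ridge would therefore yield the wrong fixed point in the critical regime; you need the full block-matrix/leave-one-out analysis of Section~\ref{sec:ProofMainResults} (or the linear pencil of \cite{adlam2020neural}) applied to $\bZ_G^{\leq\ell}+\bZ_G^{>\ell}$ jointly, which is what the paper does --- the Gaussian case only simplifies the concentration lemmas feeding into it, not the fixed-point structure.

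A secondary point: in Theorem~\ref{thm:gauss_equivalence} the high-degree coefficients of $\bbeta_*$ are deterministic (only \eqref{eq:def_target_Gaussian} is assumed), unlike Assumption~\ref{ass:random_f_star}, so you cannot average over the target to turn quadratic forms $\bbeta_{*,k}^\sT\bM\bbeta_{*,k}$ into traces $\frac{F_k^2}{N_k}\Tr(\bM)$ as in the proof of Theorem~\ref{thm:main_theorem_RF}. Your sketch silently performs this reduction. It is repairable here by conditioning and using the rotational invariance of the Gaussian features within each degree block, but this is precisely the step the paper flags as requiring a different (conditioning) argument, and your proposal should address it explicitly.
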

The proof of Theorem \ref{thm:gauss_equivalence} can be done via the same procedure as Theorem \ref{thm:main_theorem_RF} and is omitted here for brevity. Note that here, we do not randomize the high-degree coefficients and can use a conditioning argument instead to replace the different quantities by traces (with slightly different traces, see for example \cite{hu2022sharp}). 

In Figure \ref{fig:spectrum_equi}, we illustrate this equivalence between the asymptotic behavior of the random feature model and the Gaussian covariate model. Intuitively, this equivalence holds in the polynomial scaling regime because the empirical distribution of singular values of the feature matrices in the original random feature model and the random Gaussian feature model are asymptotically the same.
The right two sub-figures in Figure \ref{fig:spectrum_equi} plot the empirical distributions of singular values of random feature matrix $\bZ = \sigma(\bX \bW^\sT) / \sqrt{p}$ and the equivalent Gaussian feature matrix $\bZ_G = \bG \bSigma \bF/\sqrt{p}$. We can see the empirical spectral densities of these two models both match with the same theoretical density curve. In \cite{lu2022equivalence}, a similar Gaussian equivalence principle is established in the case of a symmetric inner-product kernel random matrix. In particular, it is shown that in the polynomial scaling regime, the empirical eigenvalue distribution of this matrix is asymptotically equivalent to a linear combination of independent Wishart matrices.



\section{Proof of Theorem \ref{thm:main_theorem_RF}}
\label{sec:ProofMainResults}

This section presents the proof for the asymptotic test error in Theorem \ref{thm:main_theorem_RF}. The proofs for the training error and the $\ell_2$ norm are very similar and are deferred to Appendix \ref{app:train_err}.
We start by introducing some background and notations in Section \ref{sec:proof_notations}. The proof strategy is outlined in Section \ref{sec:proof_outline}. We defer the proof of some of the technical claims to the appendices.

\subsection{Some background and notations}\label{sec:proof_notations}

We begin with some notations and simple remarks. Denote $\bZ = \sigma(\bX \bW^\sT) / \sqrt{p} \in \R^{n \times p}$ the random feature matrix. For convenience, we will drop the subscript $d$ and simply write $f_* $ for the target function. Let $\bf = (f_*(\bx_1) , \ldots , f_*(\bx_n))$, $\beps = (\varepsilon_1,\ldots, \varepsilon_n)$, and $\by = (y_1 , \ldots , y_n)$, so that $\by = \bf + \beps$. It will be convenient to introduce for each $k \in \Z_{\geq 0}$, the vectors
\[
\bpsi_k (\bx) := ( Y_{ks} (\bx) )_{s \in [N_k]} \in \R^{N_k}\, , \qquad \bphi_k ( \bw ) := ( Y_{ks} (\sqrt{d} \cdot \bw) )_{s \in [N_k]} \in \R^{N_k}\, .
\]
 By addition theorem (see Appendix \ref{sec:Spherical-Harmonics}), we have the following decomposition of $\sigma( \< \bx , \bw \>)$ in the orthonormal basis of spherical harmonics:
\[
\sigma ( \< \bx , \bw \>) = \sum_{k = 0}^\infty \xi_k \bpsi_k ( \bx)^\sT \bphi_k (\bw) \, .
\]
where $\xi_k = \E_\bx [\sigma (\<  \be, \bx\>) q_k (\<\be , \bx \>)]$ with $\be$ an arbitrary unit vector in $\R^d$. In the following, we assume that the target function $f_*$ satisfy Assumption \ref{ass:random_f_star}. Namely, decomposing the target function in the orthonormal basis
\[
f_* (\bx ) = \sum_{k =0}^\infty \bbeta_k^\sT \bpsi_k (\bx) \, ,
\]
the coefficients $\bbeta_k $ are independent random vectors for $k \geq \ell$ with mean zero and covariance matrix $\E [ \bbeta_k \bbeta_k^\sT ] = F_k^2 \id_{N_k} / N_{k}$.

Define $\bD_k = \xi_k \id_{N_k}$ and recall that we denote $N_{\leq \ell} = N_0 + N_1 + \ldots + N_\ell$. We define the following matrices
\[
\begin{aligned}
\bPsi_k =&~ [ \bpsi_k (\bx_1) , \ldots , \bpsi_k (\bx_n) ]^\sT \in \R^{n \times N_k}\, ,\\
\bPhi_k =&~ [ \bphi_k (\bw_1) , \ldots , \bphi_k ( \bw_p) ]^\sT \in \R^{p \times N_k}\, , \\
\bD_{\leq \ell} = &~ \diag ( \bD_0 , \ldots , \bD_\ell ) \in \R^{N_{\leq \ell} \times N_{\leq \ell} }\, ,\\
\bPsi_{\leq \ell } =&~ [ \bPsi_0 , \bPsi_1 , \ldots , \bPsi_\ell] \in \R^{n \times N_{\leq \ell}}\, ,\\
\bPhi_{\leq \ell} = &~ [ \bPhi_0 , \bPhi_1 , \ldots , \bPhi_\ell ] \in \R^{p \times N_{\leq \ell}} \, , \\
\bbeta_{\leq \ell} =&~ (\bbeta_0^\sT, \bbeta_1^\sT , \ldots , \bbeta_\ell^\sT)^\sT \in \R^{N_{\leq \ell}} \, .
\end{aligned}
\]
We will further write $N_{<\ell} := N_{\leq \ell-1}$, $\bD_{<\ell} := \bD_{\leq \ell - 1}$, and similarly for $\bPsi_{< \ell }$, $\bPhi_{<\ell}$, and $\bbeta_{<\ell}$. Using these notations, we have for example 
\[
\begin{aligned}
\bZ =&~ p^{-1/2} \sum_{k = 0}^\infty \xi_k \bPsi_k \bPhi_k^\sT = p^{-1/2} \bPsi_{<\ell} \bD_{<\ell} \bPhi_{<\ell} +  p^{-1/2} \sum_{k = \ell}^\infty \xi_k \bPsi_k \bPhi_k^\sT \, , \\
\bf =&~ \sum_{k = 0}^\infty \bPsi_k \bbeta_k = \bPsi_{<\ell} \bbeta_{<\ell} + \sum_{k = \ell}^\infty \bPsi_k \bbeta_k \, .
\end{aligned}
\]
We also introduce the following Gegenbauer matrices: 
\[
\begin{aligned}
\bQ_k^{\bX} := &~\frac{1}{\sqrt{N_k}} q_k (\bX \bX^\sT) = \bPsi_k \bPsi_k^\sT / N_k \in \R^{n \times n}\, , \\
\bQ_k^\bW :=&~ \frac{1}{\sqrt{N_k}} q_k (\bW \bW^\sT) = \bPhi_k \bPhi_k^\sT / N_k \in \R^{p \times p}\, ,
\end{aligned}
\]
where we used the decomposition of the Gegenbauer polynomials $q_k$ in terms of spherical harmonics of degree $k$ (see Appendix \ref{sec:Spherical-Harmonics}).

In the statement and proof of some of our results, we will adopt the following stochastic domination notation for high-probability bound, used in the random matrix theory literature \cite{erdHos2017dynamical}. Consider two sequences of nonnegative random variables $X = \{ X (d) \}_{d \geq 1}$ and $Y = \{ Y (d) \}_{d \geq 1}$. We say that $X$ is \textit{stochastically dominated} by $Y$, if for any $\varepsilon>0$ and $D>0$, there exists $d_0 := d_0(\eps,D)$ sufficiently large such that
$$
 \P\big( X^{(d)} \geq d^\varepsilon Y^{(d)} \big) \leq d^{-D}\, , \qquad \forall d \geq d_0 \,.
$$
We will denote $ X \prec Y$ if X is stochastically dominated by $Y$. Moreover, if $|X| \prec Y$, we will write $X = O_{d,\prec}(Y)$, or simply $X = O_{\prec} (Y)$ when $d$ is clear from context.

\subsection{Outline of the proof}\label{sec:proof_outline}

Recall that we assume $p / d^{\kappa_1} \to \theta_1$ and $n/d^{\kappa_2} \to \theta_2$, and denote $\ell = \lceil \min ( \kappa_1 , \kappa_2 ) \rceil$. In particular, we will show that RFRR learns completely the degree-$(\ell-1)$ polynomial component of the target function $\proj_{< \ell} f_*$ and none of its components $\proj_{>\ell} f_*$ of degree at least $\ell +1$.

Solving for the coefficients of the random feature ridge regression problem yields
\begin{equation}
\hat\ba_{\lambda}= \argmin_{\ba \in \R^p} \Big\{ \| \by - \bZ \ba \|_2^2 + \lambda \| \ba \|_2^2 \Big\} = (\bZ^\sT \bZ + \lambda \id_p )^{-1} \bZ^\sT \by \, ,
\end{equation}
so that the prediction function at location $\bx$ is given by
\begin{equation}\label{eq:fct_sol_RFRR}
h_{\RF} (\bx ; \hat \ba_{\lambda} ) = \by^\sT \bZ (\bZ^\sT \bZ + \lambda \id_p )^{-1} \bsigma (\bx ) \, ,
\end{equation}
where $\bsigma (\bx) = ( \sigma ( \< \bx , \bw_1 \>) , \ldots , \sigma ( \< \bx , \bw_p \>))/ \sqrt{p} \in \R^p$. 

It will be useful to introduce the following resolvent matrix
\begin{equation}
    \bR : = ( \bZ^\sT \bZ + \lambda \id_p )^{-1} \in \R^{p \times p}\, ,
\end{equation}
so that $\hat\ba_{\lambda} = \bR \bZ^\sT \by$. Using the explicit solution \eqref{eq:fct_sol_RFRR}, we expand the test error and obtain
\begin{equation}\label{eq:decompo_Rtest}
    R_{\test} ( f_{*} ; \bX , \bW , \beps, \lambda) = \E_{\bx} [ f_{*} (\bx)^2 ] - 2 \by^\sT \bZ \bR \bV + \by^\sT \bZ \bR \bU \bR \bZ^\sT \by \, ,
\end{equation}
where we introduced $\bV = (V_1 , \ldots, V_p)^{\sT} \in \R^p$ and $\bU = (U_{ij})_{ij \in [p]} \in \R^{p \times p}$ defined by
\begin{align}
V_{j} = &~ p^{-1/2} \E_{\bx} [ f_{*} (\bx) \sigma ( \< \bx , \bw_j \>)]  \, ,\\
U_{ij} = &~ p^{-1} \E_{\bx} [ \sigma ( \< \bx , \bw_i\>) \sigma ( \< \bx , \bw_j \>) ] \, . \label{eq:def_U}
\end{align}

We first show in the next proposition that we can replace $R_{\test}$ by its expectation over the label noise $\beps$ and the randomness in the target function $f_{*}$.

\begin{proposition}\label{prop:concentration_on_expectation}
Under the same assumptions as in Theorem \ref{thm:main_theorem_RF}, 
we have
\begin{equation}
    \E_{\bX,\bW,\beps,f_*} \Big[ \Big\vert R_{\test} ( f_{*} ; \bX , \bW , \beps, \lambda) - \E_{\beps, f_*} [ R_{\test} ( f_{*} ; \bX , \bW , \beps, \lambda)] \Big\vert \Big] = o_d (1) \, .
\end{equation}
\end{proposition}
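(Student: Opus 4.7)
The plan is to exploit the fact that, conditional on the design $(\bX, \bW)$, the quantity $R_{\test}$ is a polynomial of total degree at most $2$ in the independent mean-zero random variables $\beps$ and $\tbbeta := (\tbeta_{d,ks})_{k\geq\ell,s\in[N_k]}$. Indeed, decomposing $\bf = \bPsi_{<\ell}\bbeta^*_d + \sum_{k\geq\ell}\bPsi_k\tbbeta_{d,k}$ and $\bV = \bV_{<\ell} + \sum_{k\geq\ell}\bV_k$ (with $\bV_{<\ell}$ deterministic given $\bW$ and $\bV_k$ linear in $\tbbeta_{d,k}$), and noting that $\E_\bx[f_*^2] = \|\bbeta^*_d\|_2^2 + \sum_{k\geq\ell}\|\tbbeta_{d,k}\|_2^2$, the representation \eqref{eq:decompo_Rtest} writes $R_{\test}$ as a degree-$2$ polynomial in $(\beps,\tbbeta)$ with matrix coefficients that depend only on $(\bX,\bW)$. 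By Cauchy--Schwarz,
$$\E_{\bX,\bW,\beps,f_*}\big\vert R_{\test} - \E_{\beps,f_*}[R_{\test}]\big\vert \leq \E_{\bX,\bW}\!\left[\Var_{\beps,\tbbeta}(R_{\test})^{1/2}\right],$$
so it is enough to show $\Var_{\beps,\tbbeta}(R_{\test}) = o_{d,\P}(1)$, together with a mild a-priori $L^2$ bound $R_{\test} \leq C d^{O(1)}$ (to handle uniform integrability when passing from convergence in probability to $L^1$).

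Next I would compute the conditional variance by orthogonal decomposition. Since $\beps,\tbbeta_\ell,\tbbeta_{\ell+1},\ldots$ are mutually independent, $\Var_{\beps,\tbbeta}(R_{\test})$ splits into a sum of per-block variances. Each block contributes either (i) a linear-form variance $\sum_i c_i^2\Var(\xi_i)$, controlled by a squared $\ell_2$-norm $\|\bc\|_2^2$ times the scale $\rho_\eps^2$ or $F_k^2/N_k$; or (ii) a quadratic-form variance $\Var(\bxi^{\sT}\bA\bxi)$, which under the finite fourth-moment hypothesis (Assumption~\ref{ass:random_f_star}(b) and $\E[\eps^4]<\infty$) is bounded by a universal constant times $\|\bA\|_F^2$ times the squared second moment (a Hanson--Wright--type inequality for independent entries). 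The coefficient matrices are all built from products of $\bZ,\bR,\bU,\bPsi_k$ and $\bV$.

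The remaining work is to bound these Frobenius and $\ell_2$ norms. Using the deterministic bound $\|\bR\|_{\op}\leq 1/\lambda$, the trivial bound $\|\bU\|_{\op}\leq p^{-1}\sup_\bx\|\bsigma(\bx)\|_2^2$ (which is $O_{\prec}(1)$ by Assumption~\ref{ass:sigma}(a) and standard moment bounds on $\<\bx,\bw_j\>$), the operator-norm bound $\|\bZ\|_{\op} = O_{d,\prec}(1)$ that will be established later in the paper, and the key stochastic bound $\|\bPsi_k\bPsi_k^{\sT}/N_k\|_{\op} = O_{d,\prec}(1+n/N_k)$ for $k\geq\ell$ (a high-degree concentration of Gegenbauer matrices analogous to those in \cite{lu2022equivalence,misiakiewicz2022spectrum}), each of the relevant matrix norms is brought under control. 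Combining these with $\|\bAB\|_F\leq\|\bA\|_{\op}\|\bB\|_F$ gives $\|\bA\|_F^2\lesssim n/N_k$ or $1/N_k$ for the quadratic-form coefficients tied to $\tbbeta_k$, and similarly small bounds for the $\beps$-quadratic coefficient $\bZ\bR\bU\bR\bZ^{\sT}$, all of which vanish in the scaling considered.

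The principal technical obstacle is the last point, namely controlling $\|\bPsi_k\bPsi_k^{\sT}/N_k\|_{\op}$ for the critical level $k=\ell$ where $n/N_k = \Theta(\psi_2 \ell!)$ is $\Theta(1)$; a crude operator-norm bound is not enough, but the required statement only asks for $\Var_{\beps,\tbbeta}(R_{\test})=o_{d,\P}(1)$ rather than sharp constants, so it suffices to use the uniform high-probability bound on $\|\bPsi_\ell\|_{\op}^2/N_\ell$ together with the smallness factor $F_\ell^2/N_\ell$ entering every coefficient. The lower-degree blocks $k<\ell$ contribute no randomness, and the higher-degree blocks $k>\ell$ benefit from an extra $1/N_k$ factor that makes their contribution decay in $d$. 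All the spectral-norm estimates invoked here will be re-established rigorously in the later sections devoted to the resolvent analysis.
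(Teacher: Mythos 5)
Your overall architecture coincides with the paper's: condition on $(\bX,\bW)$, write $R_{\test}$ as a sum of linear and quadratic forms in the independent blocks $\beps,\tbbeta_\ell,\tbbeta_{\ell+1},\dots$, bound each conditional variance by a fourth-moment (Hanson--Wright-type) inequality in terms of Frobenius/$\ell_2$ norms of the coefficient matrices, and then control those norms by matrix concentration. The paper does exactly this, term by term for the decomposition $T_1-2(T_2+T_3)+T_4+2T_5+T_6$, importing the quadratic-form variance inequality as Lemma~C.8 of \cite{mei2022generalizationRF} (see Eq.~\eqref{eq:var_T2_highdegree_bd}). Your reduction to $o_{d,\P}(1)$ plus uniform integrability, rather than bounding $\E[\text{Var}(\cdot\mid\bX,\bW)]$ directly, is a harmless variation.

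The gap is in the norm-bounding step: the estimates you list ($\|\bR\|_\op\le\lambda^{-1}$, $\|\bU\|_\op=O_\prec(1)$, $\|\bZ\|_\op=O_{d,\prec}(1)$, $\|\bQ_k^\bX\|_\op=O_{d,\prec}(1+n/N_k)$, chained via $\|\bA\bB\|_F\le\|\bA\|_\op\|\bB\|_F$) do not close the argument. Two concrete failure points. First, for the pure quadratic forms the Frobenius norm must land on $\bU$ itself: since Assumption~\ref{ass:sigma}.(b) forces $\mu_0\neq0$, the degree-$0$ block of $\bU$ alone gives $\|\bU\|_\op=\Theta(1)$, so estimating $\|\bZ\bR\bU\bR\bZ^\sT\|_F^2$ by $n\|\cdot\|_\op^2$ yields $O_\prec(n)$ and the variance of the $\tbbeta_\ell$-quadratic in $\bf^\sT\bZ\bR\bU\bR\bZ^\sT\bf$ would read $F_\ell^4\cdot O_\prec(n)$, which diverges; what actually saves this term is $\|\bU\|_F^2=O(1/d+1/p)$, i.e.\ the smallness of the eigenvalues $\xi_k^2\asymp\mu_k^2/N_k$ entering through traces rather than operator norms. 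Second, and more seriously, the cross terms between the deterministic low-degree coefficients $\bbeta^*_{<\ell}$ and the random high-degree blocks --- e.g.\ $\text{Var}(\bf_{\geq\ell}^\sT\bZ\bR\bV_{<\ell}\mid\bX,\bW)=p^{-1}(\bbeta^*_d)^\sT\bD_{<\ell}\bPhi_{<\ell}^\sT\bR\bZ^\sT\bH_F\bZ\bR\bPhi_{<\ell}\bD_{<\ell}\bbeta^*_d$ --- cannot be handled by any product of the listed norms: $\|\bPhi_{<\ell}\|_\op^2\asymp p$ cancels the prefactor $1/p$ and $\|\bH_F\|_\op=O_\prec(1)$ in the critical regime, so the crude bound is $O_\prec(1)$, not $o_d(1)$. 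The paper disposes of these through Proposition~\ref{prop:tech_bounds_ell}, whose proof rests on the SVD analysis of Proposition~\ref{prop:SVD_Z} (near-orthogonality of $\bPsi_{<\ell},\bPhi_{<\ell}$ to the bottom singular subspaces of $\bZ$, and $\bR\bZ^\sT$ nearly inverting $\bZ$ on the low-degree subspace); this cancellation is invisible to operator-norm products, and your sketch neither invokes nor replaces it. Relatedly, the obstacle you single out as principal --- $\|\bPsi_\ell\bPsi_\ell^\sT/N_\ell\|_\op$ at the critical level --- is actually the easy part ($O_\prec(1)$ by Gegenbauer-matrix concentration); the genuine difficulties are the two just described.
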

The proof of Proposition \ref{prop:concentration_on_expectation} can be found in Appendix \ref{sec:concentration_on_expectation_proof}. Denote 
\begin{align}
    \overline{R}_{\test} := \E_{\beps, f_*} [ R_{\test} ( f_{*} ; \bX , \bW , \beps, \lambda)] \, .
\end{align}
Thus, it is sufficient to show the convergence to the asymptotic test error directly for $\overline{R}_{\test}$. Our next steps consist in simplifying the expression of $\overline{R}_{\test}$.

\subsubsection{Bias-variance decomposition}

First, we decompose the risk into a bias and a variance term (over the label noise $\beps$)
\begin{equation}
\label{eq:Rtest_bar}
\begin{aligned}
    \overline{R}_{\test} = \E_{f_{*}} \left[\cB ( f_* ; \bX , \bW, \lambda)\right] + \cV ( \bX , \bW, \lambda)\, , 
    \end{aligned}
\end{equation}
where 
\[
\begin{aligned}
\cB ( f_* ; \bX , \bW, \lambda) = &~ \| f_* \|_{L^2}^2 - 2 \bf^\sT \bZ \bR \bV  + \bf^\sT \bZ \bR \bU \bR \bZ^\sT \bf  \, , \\
 \cV ( \bX , \bW, \lambda) = &~ \rho_\eps^2 \cdot \Tr ( \bZ \bR \bU \bR \bZ^\sT ) \, .
\end{aligned}
\]
Let us further decompose the matrix $\bU$ into
\begin{equation}\label{eq:decompo_bU}
\begin{aligned}
    \bU =&~ \frac{1}{p} \sum_{k=0}^\infty \xi_k^2 \bPhi_k \bPhi_k^\sT = : \bU_{<\ell} + \bU_{\geq \ell}\, ,
\end{aligned}
\end{equation}
where $\bU_{<\ell} = p^{-1} \bPhi_{<\ell} \bD_{<\ell}^2 \bPhi_{<\ell}^\sT$ and $\bU_{\geq \ell} = \bU - \bU_{<\ell}$.

The bias and variance terms can be both split into low- and high-degree parts. 

\paragraph*{Bias term.} For the bias term, we can write
\begin{align}
\label{eq:E_bias_fstar}
    E_{f_*} [\cB ( f_* ; \bX , \bW, \lambda) ] : = B_{< \ell} + B_{\geq \ell}\, ,
\end{align}
where, taking the expectation over $\proj_{\geq \ell} f_*$, we defined
\begin{equation}\label{eq:bias_highdegree}
\begin{aligned}
    B_{< \ell} := &~ \| \bbeta_{d,< \ell} \|_2^2 - \frac{2}{\sqrt{p}} \bbeta_{< \ell}^\sT \bPsi_{< \ell}^\sT \bZ \bR \bPhi_{< \ell} \bD_{< \ell} \bbeta_{< \ell} \nonumber \\
    &\hspace{3.5em} + \bbeta_{< \ell}^\sT \bPsi_{< \ell}^\sT \bZ \bR \bU \bR \bZ^\sT \bPsi_{< \ell}\bbeta_{< \ell} + \Tr ( \bH_F \bZ \bR \bU_{< \ell} \bR \bZ^\sT)\, , \\
    B_{\geq \ell} :=&~ F_{\geq \ell}^2 - 2\Tr ( \bZ_F^\sT \bZ \bR) + \Tr ( \bH_F \bZ \bR \bU_{\geq \ell} \bR \bZ^\sT) \, .   
\end{aligned}
\end{equation}
and we introduced the matrices
\[
\begin{aligned}
\bZ_F :=&~ \E_{f_*} [ \bf \bV^\sT ] = \frac{1}{\sqrt{p}}\sum_{k = \ell}^\infty F_{k}^2 \frac{\xi_k}{ N_k} \bPsi_k \bPhi_k^\sT  \, , \\
\bH_F :=&~ \E_{f_*} [ \bf \bf^\sT ] = \sum_{k = \ell}^\infty F_{k}^2 \bQ_k^\bX\, .
\end{aligned}
\]

\paragraph*{Variance term.} Similarly, we can split the variance term into 
\begin{align}
\label{eq:E_variance_epsilon}
    \cV ( \bX , \bW, \lambda) = V_{< \ell} + V_{\geq \ell}\, ,
\end{align}
where
\begin{equation}\label{eq:variance_highdegree}
\begin{aligned}
    V_{< \ell} :=&~ \rho_\varepsilon^2 \cdot \Tr ( \bZ \bR \bU_{< \ell} \bR \bZ^\sT )\, , \\
    V_{\geq \ell} :=&~ \rho_\varepsilon^2  \cdot \Tr ( \bZ \bR \bU_{\geq \ell} \bR \bZ^\sT )\, . 
\end{aligned}
\end{equation}

\subsubsection{Simplifying the bias and variance terms}

We simplify the expressions for the bias and variance terms by using that 1) the terms with spherical harmonics of degree less than $\ell$ involves low-dimensional matrices with diverging eigenvalues, and therefore concentrates to $0$; and 2) the high-degree components of the random matrices  concentrate in operator norm and can be replaced by deterministic matrices.

\paragraph*{Vanishing low-degree part.} Using the decomposition $\bU =p^{-1} \bPhi_{<\ell} \bD_{<\ell}^2 \bPhi_{<\ell}^\sT + \bU_{>\ell}$ in Eq.~\eqref{eq:decompo_bU}, we can rewrite $B_{<\ell}$ as
\begin{equation}\label{eq:decompo_B_leq_ell}
\begin{aligned}
    B_{< \ell} = &~\left\| \left( \id_{B_\ell} - \bD_{< \ell}  \bPhi_{< \ell}^\sT \bR \bZ^\sT \bPsi_{< \ell} / \sqrt{p} \right) \bbeta_{d,< \ell} \right\|_2^2 \\
    &~ + \bbeta_{< \ell}^\sT \bPsi_{< \ell}^\sT \bZ \bR \bU_{\geq \ell} \bR \bZ^\sT \bPsi_{< \ell}\bbeta_{< \ell} + \Tr ( \bH_F \bZ \bR \bU_{< \ell} \bR \bZ^\sT)\, . \\
\end{aligned}
\end{equation}

In Appendix \ref{sec:non-RMT}, we show the following proposition.

\begin{proposition}\label{prop:tech_bounds_ell}
Under the same assumptions and notations as in Theorem \ref{thm:main_theorem_RF}, we have
\begin{align}
    \E \| \id_{N_{<\ell}} - \bPsi_{< \ell}^\sT \bZ \bR \bPhi_{<\ell} \bD_{< \ell} / \sqrt{p} \|_\op = &~ o_{d} (1) \, , \label{eq:low_f_b1} \\
    \E \| \bPsi_{< \ell}^\sT \bZ \bR \bU_{\geq\ell} \bR \bZ^\sT \bPsi_{< \ell} \|_{\op} =&~ o_{d} (1) \, ,\label{eq:low_f_b2} \\
\E |\Tr ( \bH_F \bZ \bR \bU_{< \ell } \bR \bZ^\sT )| =&~ o_{d}(1)\, ,  \label{eq:low_f_b3} \\
\E |\Tr ( \bZ \bR \bU_{< \ell } \bR \bZ^\sT )| =&~ o_{d}(1)\, , \label{eq:low_f_b4}
\end{align}
where the expectations are over $\bX,\bW$.
\end{proposition}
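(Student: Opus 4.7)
The plan is to exploit the rank separation between the low-degree ``signal'' subspace (dimension $N_{<\ell} = O(d^{\ell-1})$) and the high-degree ``bulk'' part of the feature matrix. Since $N_{<\ell}$ is much smaller than both $n \asymp d^{\kappa_2}$ and $p \asymp d^{\kappa_1}$ (because $\ell \leq \min(\kappa_1,\kappa_2) + 1$ with $\ell - 1 < \min(\kappa_1,\kappa_2)$ whenever the bound is nontrivial), the signal singular values of $\bZ$ diverge, and the ridge regularizer is negligible on this subspace. All four quantities are then proved small by showing that the resolvent $\tilde\bR := (\bZ \bZ^\sT + \lambda \id_n)^{-1}$, when sandwiched between low-degree spherical-harmonic matrices, collapses to an explicit deterministic quantity producing the identity in \eqref{eq:low_f_b1} and vanishing contributions in \eqref{eq:low_f_b2}--\eqref{eq:low_f_b4}.

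First I would use the push-through identity $\bR\bZ^\sT = \bZ^\sT \tilde\bR$ and $\bZ\bR = \tilde\bR \bZ$ to express everything in terms of $\tilde\bR$, then decompose $\bZ = \bZ_L + \bZ_H$ with $\bZ_L = p^{-1/2} \bPsi_{<\ell} \bD_{<\ell} \bPhi_{<\ell}^\sT$ (rank $\leq N_{<\ell}$) and $\bZ_H = p^{-1/2}\sum_{k\geq \ell} \xi_k \bPsi_k \bPhi_k^\sT$. The three concentration ingredients I will invoke are: $(a)$ $\|\bPsi_{<\ell}^\sT\bPsi_{<\ell}/n - \id\|_\op \prec d^{-c}$ and $\|\bPhi_{<\ell}^\sT\bPhi_{<\ell}/p - \id\|_\op \prec d^{-c}$ from concentration of low-degree spherical harmonics; $(b)$ operator-norm control and a deterministic equivalent for $\bZ_H\bZ_H^\sT$ establishing $\bM := \bZ_H\bZ_H^\sT + \lambda\id_n \succeq c\id_n$ with $\|\bM^{-1}\|_\op = O_\prec(1)$, which is the key RMT input and the one step I would defer to an appendix (adapting the leave-one-out Gegenbauer expansion of \cite{lu2022equivalence,misiakiewicz2022spectrum}); and $(c)$ near-orthogonality bounds $\|\bPhi_{<\ell}^\sT\bZ_H^\sT\|_\op, \|\bPsi_{<\ell}^\sT\bZ_H\|_\op = o_\prec(1)$ after appropriate normalization.

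The central computation applies Sherman--Morrison--Woodbury to $\tilde\bR$, treating $\bPsi_{<\ell}\bD_{<\ell}^2\bPsi_{<\ell}^\sT$ (the dominant component of $\bZ_L\bZ_L^\sT$ after using (a)) as a low-rank perturbation of $\bM$:
\begin{equation*}
\tilde\bR \;\approx\; \bM^{-1} - \bM^{-1}\bPsi_{<\ell}\bigl(\bD_{<\ell}^{-2} + \bT\bigr)^{-1}\bPsi_{<\ell}^\sT\bM^{-1}, \qquad \bT := \bPsi_{<\ell}^\sT\bM^{-1}\bPsi_{<\ell}.
\end{equation*}
A short algebraic manipulation then gives the key identity
\begin{equation*}
\bPsi_{<\ell}^\sT\tilde\bR\bPsi_{<\ell} \;=\; \bT(\bD_{<\ell}^{-2} + \bT)^{-1}\bD_{<\ell}^{-2} \;=\; \bD_{<\ell}^{-2} + O_\prec(1/n),
\end{equation*}
since $\bT \succeq c n \id$ dominates the order-one matrix $\bD_{<\ell}^{-2}$. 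Equivalently $\bPsi_{<\ell}^\sT\tilde\bR = \bD_{<\ell}^{-2}(\bD_{<\ell}^{-2}+\bT)^{-1}\bPsi_{<\ell}^\sT\bM^{-1}$ has operator norm $O_\prec(1/\sqrt{n})$.

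Applying this to the four bounds: for \eqref{eq:low_f_b1}, write $\bPhi_{<\ell}^\sT\bZ^\sT \approx \sqrt{p}\,\bD_{<\ell}\bPsi_{<\ell}^\sT$ using (a) and the cross-term bound in (c); then $\bD_{<\ell}\bPhi_{<\ell}^\sT\bZ^\sT\tilde\bR\bPsi_{<\ell}/\sqrt p \approx \bD_{<\ell}^2\bPsi_{<\ell}^\sT\tilde\bR\bPsi_{<\ell} \approx \id$. For \eqref{eq:low_f_b2}, use push-through to obtain $\bPsi_{<\ell}^\sT\tilde\bR\bZ\bU_{\geq\ell}\bZ^\sT\tilde\bR\bPsi_{<\ell}$, split $\bZ = \bZ_L + \bZ_H$ in each factor, and observe that (i) the $\bZ_L$ contribution yields $\bPhi_{<\ell}^\sT\bU_{\geq\ell}\bPhi_{<\ell}$ which is small by near-orthogonality of $\bPhi_{<\ell}$ and $\bPhi_k$ ($k \geq \ell$) in the definition of $\bU_{\geq\ell}$, while (ii) the $\bZ_H$ contribution inherits the $O_\prec(1/\sqrt n)$ factor from $\bPsi_{<\ell}^\sT\tilde\bR$. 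Bounds \eqref{eq:low_f_b3} and \eqref{eq:low_f_b4} reduce, after cyclicity of trace and push-through, to computing $\Tr(\bU_{<\ell}^{1/2}\bZ^\sT\tilde\bR\bH_F\tilde\bR\bZ\bU_{<\ell}^{1/2})$ and its analogue, where $\bU_{<\ell}^{1/2}$ restricts the $p$-side to the low-degree $\bPhi_{<\ell}$ subspace; the resulting quantity involves $\bPsi_{<\ell}^\sT\tilde\bR$ (again $O_\prec(1/\sqrt n)$) sandwiched with bounded matrices, times a trace dimension $N_{<\ell} = o(n)$, yielding $o_d(1)$.

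The main obstacle is Step 3$(b)$: establishing the deterministic equivalent and spectral lower bound for $\bM = \bZ_H\bZ_H^\sT + \lambda\id$ in the polynomial scaling. Unlike the linear regime of \cite{mei2022generalizationRF}, here the high-degree block has substantive contributions from each critical spherical-harmonic level $k = \ell$ (and beyond when $\kappa_2 \geq \ell$), so the resolvent is not approximately scalar; the analysis requires a Gegenbauer-basis leave-one-out argument adapted from \cite{lu2022equivalence}. A secondary difficulty is that the leading-order cancellation in Step 5 is exact, so one must track $O_\prec(1/n)$ corrections uniformly, which forces the use of stochastic-domination bounds rather than $L^2$ moments throughout.
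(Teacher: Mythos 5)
Your route is genuinely different from the paper's: you run Sherman--Morrison--Woodbury on the resolvent, treating the low-degree block as a low-rank perturbation of $\bM=\bZ_H\bZ_H^\sT+\lambda\id_n$, whereas the paper (Appendix B.2, via Proposition \ref{prop:SVD_Z}) performs an SVD of the rescaled feature matrix $\tbZ$, uses Weyl's inequality to show that the top $N_{<\ell}$ singular values diverge like $\sqrt{r\,d^{-(\ell-1)}}$ with $r=\min(n,p)$, and bounds the overlap of the low-degree subspaces with the bottom singular vectors. The mechanism is the same, but your execution has two concrete gaps. The first is the cross terms: $\bZ\bZ^\sT=\bZ_H\bZ_H^\sT+\bZ_L\bZ_L^\sT+\bZ_L\bZ_H^\sT+\bZ_H\bZ_L^\sT$, and the cross terms are \emph{not} negligible in operator norm. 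By Lemma \ref{lem:fXWT_fWXT_concentrate}(ii) the correct order of $\|\bZ_H\bPhi_{<\ell}\|_\op/\sqrt{p}$ is $\sqrt{(n+N_{<\ell})/p}$, which is $\Theta(1)$ in the critical regime and diverges in the underparametrized one, so your ingredient $(c)$ is false outside the overparametrized regime and $\|\bZ_H\bZ_L^\sT\|_\op$ can be as large as $\sqrt{n}$. Since these terms have rank at most $N_{<\ell}$, the fix is to fold them into the Woodbury update (a $2N_{<\ell}$-column update), but this destroys the clean identity $\bPsi_{<\ell}^\sT\tilde\bR\,\bPsi_{<\ell}=\bT(\bD_{<\ell}^{-2}+\bT)^{-1}\bD_{<\ell}^{-2}$ and reintroduces exactly the cross-overlap control you were hoping to avoid; the paper's SVD argument sidesteps this because there the cross term only needs to be $O_\prec(1)$ \emph{relative to} the diverging signal singular values.

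The second gap is quantitative and hits the heart of the polynomial scaling: $\bD_{<\ell}^{-2}$ is not an order-one matrix. Since $\xi_k^2\asymp\mu_k^2/N_k$, one has $\|\bD_{<\ell}^{-2}\|_\op\asymp d^{\ell-1}$, so the error in your key identity is $O_\prec(d^{2(\ell-1)}/n)$ rather than $O_\prec(1/n)$, and $\|\bPsi_{<\ell}^\sT\tilde\bR\|_\op$ is $O_\prec(d^{\ell-1}/\sqrt{n})$ rather than $O_\prec(1/\sqrt{n})$ --- both of which diverge for, say, $\ell=2$ and $\kappa_2=1.5$. The conclusions \eqref{eq:low_f_b1}--\eqref{eq:low_f_b4} are still reachable because every occurrence of $\bPsi_{<\ell}^\sT\tilde\bR$ comes multiplied by compensating powers of $\bD_{<\ell}$ (in \eqref{eq:low_f_b1}) or paired with $\Tr(\bD_{<\ell}^2)=O_d(1)$ (in \eqref{eq:low_f_b3}--\eqref{eq:low_f_b4}), which is how the paper arrives at the rate $d^{\ell-1}/r=o_d(1)$ using $r\geq d^{\ell-1+\delta_0}$. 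But tracking this $d^{\ell-1}$ factor is precisely what separates the polynomial regime from the linear one ($\ell=1$, where your bounds would be correct as written), and it is absent from your sketch; as stated, steps $(4)$ and $(5)$ and the treatment of \eqref{eq:low_f_b2}--\eqref{eq:low_f_b4} do not close.
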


Combining these bounds with the expressions \eqref{eq:variance_highdegree} and \eqref{eq:decompo_B_leq_ell} directly implies that 
\begin{align}
\label{eq:test_variance_low_degree_vanish}
    \E |B_{< \ell}| = o_d(1) \, , \;\;\; \text{ and }\;\;\; \E |V_{<\ell}| = o_d(1)\, .
\end{align}

\paragraph*{Concentration of the high-degree part of random matrices.} 
We are left with controlling
\begin{align}\label{eq:Rtest_trace_chi_j}
    \overline{R}_{\test,\geq \ell} :=&~ B_{\geq\ell} + V_{\geq\ell}
    = F_{\geq\ell}^2 -2 \chi_1 + \chi_2 + \rho_{\varepsilon}^2 \cdot \chi_3 \, ,
\end{align}
where
\begin{align*}
    \chi_1 :=& \Tr ( \bZ_F^\sT \bZ \bR) \, ,\\
    \chi_2 :=& \Tr ( \bH_F \bZ \bR \bU_{\geq \ell} \bR \bZ^\sT ) \, ,\\
    \chi_3 :=&  \Tr ( \bZ \bR \bU_{\geq \ell} \bR \bZ^\sT )\, .
\end{align*}
Therefore, it suffices to calculate the limits of the traces $\chi_1$, $\chi_2$ and $\chi_3$.

It is convenient to consider separately the following three cases: (I) $\kappa_1 > \kappa_2$, (II) $\kappa_1 < \kappa_2$, (III) $\kappa_1 = \kappa_2$.
In the following proposition, we  simplify the expression of the traces by using the concentration of the high-degree part of the random matrices. In particular, we will use that 
\[
\begin{aligned}
\| \bQ_k^{\bW} - \id_p \|_\op =&~ O_{d,\prec} \left(d^{\frac{\kappa_1 - k}{2}}\right) \, ,\\
\| \bQ_k^{\bX} - \id_n \|_\op =&~ O_{d,\prec} \left(d^{\frac{\kappa_2 - k}{2}}\right) \, ,\\
\end{aligned}
\]
which implies for example, that for $\kappa_1 > \kappa_2$ and any $f(x) = \sum_{k\geq \ell} \varsigma_k \usp_k(x) $ with $\| f \|_{L^2} < \infty$, we have
\[
\left\| \frac{f(\bX\bW^\sT )f(\bW\bX^\sT)}{p} - \sum_{k\geq 0}\varsigma_k^2 \bQ_k^\bX \right\|_\op = O_{d,\prec} \left(\sqrt{\frac{n}{p}}\right) \, .
\]
We gather the random matrix concentration and spectral bound proofs in Appendix \ref{sec:matrix_concentration_gather}.

\begin{proposition}
\label{prop:high_degree_concentration}
    Under the assumptions of Theorem \ref{thm:main_theorem_RF}, 
    we have:
\begin{itemize}
    \item[\emph{(I)}] If $\kappa_1 > \kappa_2$, then
    \begin{align}
        \E \Big| \chi_1 - \frac{F_\ell^2 \mu_\ell^2}{N_\ell} \Tr ( \bQ_\ell^{\bX} \bG_{\geq \ell}^{\bX} ) \Big| =&~  o_d(1)\, , \label{eq:over_chi1_concentrate}\\
        \E \Big| \chi_2 - \frac{ \mu_\ell^4}{N_\ell} \Tr \Big( (F_\ell^2 \bQ_\ell^{\bX} + \sum_{k>\ell} F_k^2 ) \bG_{\geq \ell}^{\bX} \bQ_\ell^{\bX} \bG_{\geq \ell}^{\bX} \Big) \Big| = &~ o_d(1)\, , \label{eq:over_chi2_concentrate}\\
        \E \Big| \chi_3 - \frac{\mu_\ell^4}{N_\ell} \Tr \big(  \bG_{\geq \ell}^{\bX} \bQ_\ell^{\bX} \bG_{\geq \ell}^{\bX}  \big) \Big| = &~ o_d(1) \, ,\label{eq:over_chi3_concentrate}
    \end{align}
    where $\bG_{\geq \ell}^{\bX} = \big( \mu_\ell^2 \bQ_{\ell}^{\bX} + ( \mu_{>\ell}^2 + \lambda) \id_n \big)^{-1}$.

    \item[\emph{(II)}] If $\kappa_1 < \kappa_2$, then
    \begin{align}
        \E \Big| \chi_1 - \frac{F_\ell^2}{N_\ell} \Tr (  \mu_\ell^2 \bQ_\ell^{\bW} \cdot \bG_{\geq \ell}^{\bW} ) \Big| =&~  o_d(1) \, , \label{eq:under_chi1_concentrate}\\
        \E \Big| \chi_2 - \frac{F_\ell^2 }{N_\ell} \Tr ( \mu_\ell^2 \bQ_\ell^{\bW} \cdot \bG_{\geq \ell}^{\bW} ) \Big| = &~ o_d(1) \, , \label{eq:under_chi2_concentrate}\\
        \E | \chi_3 | = &~ o_d(1)\, , \label{eq:under_chi3_concentrate}
    \end{align}
    with $\bG_{\geq \ell}^{\bW} = ( \mu_\ell^2 \bQ_{\ell}^{\bW} + \mu_{>\ell}^2 \id_p )^{-1}$.

    \item[\emph{(III)}] If $\kappa_1 = \kappa_2$, then
    \begin{align}
        \E \Big| \chi_1 - \frac{F_{\ell}^2 }{ {N_{\ell} }  } \Tr \Big[\frac{ \mu_{\ell} }{ \sqrt{p} } \usp_{\ell}(\bW \bX^\sT) \bZ \bR \Big] \Big| 
        &= o_{d}(1)\, , \label{eq:critical_chi1_concentrate}\\
        \E \Big|\chi_2 - \frac{1}{p}\Tr[\bH_{\geq \ell}^{\bX} \bZ \bR ( \bG_{\geq \ell}^{\bW} )^{-1} \bR \bZ^\sT]\Big| &= o_{d}(1) \, , \label{eq:critical_chi2_concentrate}\\
        \E \Big|\chi_3 - \frac{1}{p} \Tr[\bZ \bR ( \bG_{\geq \ell}^{\bW} )^{-1} \bR \bZ^\sT]\Big| &= o_{d}(1) \, , \label{eq:critical_chi3_concentrate}
    \end{align}
where $\bH_{\geq \ell}^{\bX} = F_{\ell}^2 \bQ_{\ell}^{\bX} + F_{>\ell}^2 \id_n$.   
\end{itemize}
\end{proposition}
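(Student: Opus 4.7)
All three claims follow by replacing the random matrices appearing in $\chi_1,\chi_2,\chi_3$ by simpler effective versions. The analysis rests on two tools, collected in Appendix~\ref{sec:matrix_concentration_gather}: (i) the concentration of the Gegenbauer matrices above their critical scaling,
\[
\|\bQ_k^{\bX} - \id_n\|_{\op} = O_{\prec}(d^{(\kappa_2 - k)/2}) \;\; (k > \kappa_2), \qquad \|\bQ_k^{\bW} - \id_p\|_{\op} = O_{\prec}(d^{(\kappa_1 - k)/2}) \;\; (k > \kappa_1);
\]
and (ii) the orthogonality of spherical harmonics of different degrees, which makes the cross blocks $\bPhi_k^\sT \bPhi_{k'}/p$ (resp.~$\bPsi_k^\sT \bPsi_{k'}/n$) small in operator norm for $k \neq k'$. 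Combined with the asymptotic identity $\xi_k^2 N_k \to \mu_k^2$, these estimates collapse the Gegenbauer expansion of $\sigma$ into finitely many effective terms involving only $\mu_\ell^2$ and $\mu_{>\ell}^2$.

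\textbf{Cases I and II ($\kappa_1 \neq \kappa_2$).} For $\kappa_1 > \kappa_2$, the cornerstone is the operator-norm kernel concentration
\[
\bigl\|\bZ\bZ^\sT - \bigl(\mu_\ell^2 \bQ_\ell^{\bX} + \mu_{>\ell}^2 \id_n\bigr)\bigr\|_{\op} = o_{d,\P}(1),
\]
obtained by expanding $\bZ\bZ^\sT = p^{-1}\sum_{k,k'}\xi_k \xi_{k'}\bPsi_k(\bPhi_k^\sT \bPhi_{k'})\bPsi_{k'}^\sT$: diagonal terms $k=k' \leq \kappa_1$ use the law-of-large-numbers bound $\bPhi_k^\sT \bPhi_k/p \approx \id_{N_k}$; off-diagonal and high-frequency $k > \kappa_1$ contributions are controlled via the orthogonality of harmonics combined with $\bQ_k^{\bX} \approx \id_n$ for $k > \kappa_2$; and degree-$<\ell$ contributions are absorbed as low-rank perturbations via Proposition~\ref{prop:tech_bounds_ell}. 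This yields $(\bZ\bZ^\sT + \lambda\id_n)^{-1} = \bG_{\geq \ell}^{\bX} + o_{d,\P}(1)$. Writing $\bZ\bR = (\bZ\bZ^\sT + \lambda\id_n)^{-1}\bZ$ and applying analogous concentrations to $\bZ\bZ_F^\sT$ and $\bZ\bU_{\geq\ell}\bZ^\sT$ (which collapse onto multiples of $\bQ_\ell^{\bX}/N_\ell$, in agreement with the corresponding limiting $K^{(2)}$-type kernels) produces \eqref{eq:over_chi1_concentrate}--\eqref{eq:over_chi3_concentrate}. Case II is the dual picture: work with $\bR = (\bZ^\sT\bZ + \lambda\id_p)^{-1}$ and use $\bZ^\sT\bZ \approx (n/p)(\mu_\ell^2 \bQ_\ell^{\bW} + \mu_{>\ell}^2 \id_p)$ so that $\bR \approx (p/n)\bG_{\geq \ell}^{\bW}$. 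The factor $p/n = o(1)$ directly forces $\chi_3 = o_{d,\P}(1)$, while the analogous concentration $\bZ_F^\sT\bZ \approx (n/p) F_\ell^2 \mu_\ell^2/N_\ell \cdot \bQ_\ell^{\bW}$ cancels the $p/n$ in $\bR$ and yields the claimed expressions for $\chi_1,\chi_2$.

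\textbf{Case III ($\kappa_1 = \kappa_2$).} Here neither $\bZ\bZ^\sT$ nor $\bZ^\sT\bZ$ simplifies, so $\bR$ must be retained explicitly. The surrounding matrices, however, still simplify independently of $\bR$: using $\bQ_k^{\bW} \to \id_p$ for $k > \ell$ one gets $\bU_{\geq\ell} = \tfrac{1}{p}(\bG_{\geq \ell}^{\bW})^{-1} + o_{d,\P}(1)$ in operator norm; using $\bQ_k^{\bX} \to \id_n$ for $k > \ell$ one gets $\bH_F = \bH_{\geq \ell}^{\bX} + o_{d,\P}(1)$; and a term-by-term Frobenius bound shows that all summands of $\bZ_F$ with $k > \ell$ vanish, leaving only $\tfrac{F_\ell^2 \mu_\ell}{\sqrt{p}\,N_\ell}\,q_\ell(\bX\bW^\sT)$. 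Substituting these approximations into $\chi_1,\chi_2,\chi_3$ and using the a priori bounds $\|\bZ\|_{\op} = O_{d,\P}(1)$ and $\|\bR\|_{\op} \leq 1/\lambda$ to propagate the errors through the traces yields \eqref{eq:critical_chi1_concentrate}--\eqref{eq:critical_chi3_concentrate}.

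\textbf{Main obstacle.} The most delicate step is the operator-norm kernel concentration in Cases I and II, which requires aggregating infinitely many spherical-harmonic contributions. At scales $k > \kappa_1$ (resp.~$k > \kappa_2$), the diagonal blocks $\bPhi_k^\sT \bPhi_k/p$ are not individually close to $\id_{N_k}$ in operator norm and contribute to $\bZ\bZ^\sT$ only through their compression by the other feature matrix. Showing that the aggregate of these high-frequency contributions is small in operator norm, rather than merely in Frobenius or trace norm, is the technical heart of the argument; for Case III the same bookkeeping must also be carried out for $\bU_{\geq\ell}$ and $\bZ_F$ directly, without the safety net of a resolvent bound on $\bR^{-1}$ at the interpolation threshold.
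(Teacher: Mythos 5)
Your overall strategy matches the paper's: concentrate the high-frequency parts of the random matrices in operator norm via the Gegenbauer bounds and $\xi_k^2 N_k \to \mu_k^2$, treat the degree-$<\ell$ part as a low-rank perturbation, and in the critical regime simplify only the matrices surrounding $\bR$ while keeping $\bR$ intact. Cases II and III are described essentially as the paper executes them (the paper controls $\bZ_{F,>\ell}$ by an operator-norm bound $O_\prec(1/N_{\ell+1})$ rather than a Frobenius bound, but either works).

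There is, however, one concretely false step in your plan for Case I (and its dual in Case II). The displayed claim $\|\bZ\bZ^\sT - (\mu_\ell^2\bQ_\ell^{\bX} + \mu_{>\ell}^2\id_n)\|_{\op} = o_{d,\P}(1)$ does not hold: the low-degree block contributes $N_{<\ell}$ eigenvalues of order $\xi_k^2 n \asymp \mu_k^2\, n/N_k \gg 1$, so the left-hand side diverges. Consequently the inference you draw from it, $(\bZ\bZ^\sT + \lambda\id_n)^{-1} = \bG_{\geq\ell}^{\bX} + o_{d,\P}(1)$ in operator norm, is also false — the two resolvents differ by $\Theta(1)$ on the $N_{<\ell}$-dimensional low-degree subspace, where $\bPi := (\bZ\bZ^\sT+\lambda\id_n)^{-1}$ is nearly zero while $\bG_{\geq\ell}^{\bX}$ is of order one. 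What is true, and what the paper proves (Lemma \ref{lem:fXWT_fWXT_concentrate} and Eq.~\eqref{eq:ZgeqlZgeqlT_concentrate}), is the concentration of $\bZ_{\geq\ell}\bZ_{\geq\ell}^\sT$ only; the low-degree contribution must then be removed at the level of the traces, e.g.\ via the resolvent identity $\bPi - \bPi_{\geq\ell} = -\bPi_{\geq\ell}\bZ_{<\ell}\bZ^\sT\bPi - \bPi_{\geq\ell}\bZ_{\geq\ell}\bZ_{<\ell}^\sT\bPi$ together with $\|\bZ_F\|_{\op} = O_\prec(1/N_\ell)$ and $N_{<\ell}/N_\ell = O(1/d)$, which shows the correction to $\chi_1,\chi_2,\chi_3$ is $O_\prec(1/d)$. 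You do gesture at "absorbing" the low-degree part via Proposition \ref{prop:tech_bounds_ell}, but as organized your argument hinges on an operator-norm replacement of the full resolvent that cannot be made; the repair is to restrict the operator-norm concentration to the $\geq\ell$ blocks and carry the low-rank correction through the traces explicitly, as in Lemma \ref{lem:overparameterized}.

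A smaller quantitative point in Case III: to replace $\bU_{\geq\ell}$ inside $\Tr(\bZ\bR\,\bU_{\geq\ell}\,\bR\bZ^\sT)$ you need $\|\bU_{\geq\ell} - \tfrac{1}{p}(\bG_{\geq\ell}^{\bW})^{-1}\|_{\op} = o_{d,\P}(1/p)$ (i.e.\ the bracket $\sum_k \xi_k^2 N_k \bQ_k^{\bW} - \mu_\ell^2\bQ_\ell^{\bW} - \mu_{>\ell}^2\id_p$ must be $o(1)$), since the trace picks up a factor of $p$; an $o_{d,\P}(1)$ operator-norm error as you state it is vacuous because $\|\bU_{\geq\ell}\|_{\op}$ is itself $O(1/N_\ell)$.
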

The proof of Proposition \ref{prop:high_degree_concentration} is given in Appendix \ref{sec:high_degree_concentration_proof}.
From Proposition \ref{prop:high_degree_concentration} and Eq.~\eqref{eq:Rtest_trace_chi_j}, it remains to compute the limits of all the normalized traces of matrices  appearing in Eqs.~\eqref{eq:over_chi1_concentrate}-\eqref{eq:critical_chi3_concentrate}.

\subsubsection{Computing the asymptotics of the traces}

As in Proposition \ref{prop:high_degree_concentration}, we discuss the three regimes separately.

\paragraph*{(I) Overparametrized regime $\kappa_1 > \kappa_2$.} We see that the traces in Proposition \ref{prop:high_degree_concentration}.(I) corresponds exactly to the traces computed in the KRR limit, which were already studied in \cite{misiakiewicz2022spectrum,hu2022sharp,xiao2022precise}. For example, combining Lemma 7 in \cite{misiakiewicz2022spectrum} with Eqs.~\eqref{eq:over_chi1_concentrate}, \eqref{eq:over_chi2_concentrate} and \eqref{eq:over_chi3_concentrate}, we can obtain Eq.~\eqref{eq:asymp_testerr}, with $(\cB_{\test}, \cV_{\test}, \alpha_c)$ given in Eq.~\eqref{eq:B_V_test_2}. See Appendix \ref{sec:kernel_approx} for additional details.

\paragraph*{(II) Underparametrized regime $\kappa_1 < \kappa_2$.}
After substituting Eqs.~\eqref{eq:under_chi1_concentrate}-\eqref{eq:under_chi3_concentrate} into Eq.~\eqref{eq:Rtest_trace_chi_j} and using Eq.~\eqref{eq:test_variance_low_degree_vanish} and Proposition \ref{prop:concentration_on_expectation}, we get
\begin{align}
\label{eq:under_Rtrain_concentrate}
    \E \left\vert {R}_{\test} - \left[ F_{\ell}^2\left( 1 - \frac{1}{ N_\ell } \Tr( \mu_{\ell}^2  \bQ_{\ell}^\bW \cdot \bG_{\geq\ell}^{\bW} ) \right) + F_{>\ell}^2 \right] \right\vert = o_d(1) \, .
\end{align}
Using the definition of $\bG_{\geq\ell}^{\bW}$, we can rewrite the trace as
\begin{align*}
    \frac{1}{ N_\ell } \Tr( \mu_{\ell}^2  \bQ_{\ell}^\bW \cdot \bG_{\geq\ell}^{\bW} ) = \frac{1}{ N_\ell } \Tr\big[ \id_{p} - ( \zeta^2 \bQ_{\ell}^{\bW} + \id_{p} )^{-1} \big] \, ,
\end{align*}
so when $\kappa_1 = \ell$, we can use Theorem 1 in \cite{lu2022equivalence} to compute the limit of the Stieltjes transform $ \frac{1}{p} \Tr [ ( \zeta^2 \bQ_{\ell}^{\bW} + \id_{p} )^{-1} ] $. After substituting the limiting formula into Eq.~\eqref{eq:under_Rtrain_concentrate}, we obtain Eq.~\eqref{eq:asymp_testerr}, with $\cB_{\test}$ and $\cV_{\test}$ given in Eq.~\eqref{eq:B_V_test_3}. On the other hand, when $\kappa_1 < \ell$, we have
\begin{align}
\label{eq:Tr_QW_GW_large_p}
    \frac{1}{ N_\ell } \Tr \left( \mu_{\ell}^2  \bQ_{\ell}^\bW \cdot \bG_{\geq\ell}^{\bW} \right) &= \frac{1}{ N_\ell } \sum_{i=1}^{p} \frac{ \zeta^2 \lambda_{i}( \bQ_{\ell}^{\bW} ) }{ \zeta^2 \lambda_{i}( \bQ_{\ell}^{\bW} ) + 1 } \, ,
\end{align}
where $\lambda_i( \bQ_{\ell}^{\bW} )$ denotes the $i$-th largest eigenvalue of $\bQ_{\ell}^{\bW}$. By Lemma \ref{lem:offdiagonal_kernelmtx_opnormbd} in Appendix \ref{sec:matrix_concentration} and taking $0< \eps< \ell - \kappa_1$, we have $|\lambda_i( \bQ_{\ell}^{\bW} ) - 1| = \cO_{\prec}(d^{-\eps})$ for any $1\leq i \leq p$. Then we can show
\begin{align}
\label{eq:Tr_QW_GW_large_p_1}
    \E\left[ \sup_{1\leq i \leq p} \Big| \frac{ \zeta^2 \lambda_{i}( \bQ_{\ell}^{\bW} ) }{ \zeta^2 \lambda_{i}( \bQ_{\ell}^{\bW} ) + 1 } - \frac{ \zeta^2 }{ \zeta^2 + 1 } \Big| \right] = o_d(1)\, .
\end{align}
Combining Eqs.~\eqref{eq:Tr_QW_GW_large_p}, \eqref{eq:Tr_QW_GW_large_p_1} and \eqref{eq:under_Rtrain_concentrate} and $\frac{p}{N_\ell} = O_d(d^{\kappa_1 - \ell}) =s o_d(1)$, we obtain Eq.~\eqref{eq:asymp_testerr} for $\kappa_1 < \ell$.

\paragraph*{(III) Critical regime $\kappa_1 = \kappa_2$.}
Recall that in this case both $\psi_1, \psi_2 < \infty$ (and $\psi_1 = \psi_2 = 0$ whenever $\kappa_1 = \kappa_2 < \ell$). 
From Eqs.~\eqref{eq:critical_chi1_concentrate}, \eqref{eq:critical_chi2_concentrate} and \eqref{eq:critical_chi3_concentrate}, we can substitute $\chi_1$, $\chi_2$ and $\chi_3$ with traces of some non-commutative rational functions of $\bZ$, ${\usp_{\ell}(\bW \bX^\sT)} $, $\bQ_{\ell}^{\bX}$ and $\bQ_{\ell}^{\bW}$. To compute these traces, we will follow the same proof strategy as in \cite{mei2022generalizationRF}, and apply the same linearization construction (block matrix) in order to reuse their analytical characterization for the asymptotic log determinant. The asymptotic traces are obtained as the derivatives of this log determinant. We detail the steps of this derivation below, and include the analytical formulas from \cite{mei2022generalizationRF} for convenience. 

Let $m = n+p$ and $\psi = \psi_1+\psi_2$. First, we rescale $\bZ$ and $\bR$ as: 
\begin{align}
    \tbZ &= \frac{1}{ \sqrt{m} }  \sigma(\bX \bW^\sT) \, ,\\
    \tbR &= \big( \gratio_1 \gratio^{-1} \lambda \id_p + \tbZ^\sT \tbZ \big)^{-1}\, ,
\end{align}
where
\[
\gratio = \gratio_1 + \gratio_2\, .
\]
Note that we have $\bZ =  \sqrt{\frac{ \gratio }{ \gratio_1 }} \tbZ$ and $\bR = \gratio_1 \gratio^{-1} \tbR$.

For $\bq = (s_1, s_2, t_1, t_2, t)$, define the block matrix as
\begin{align}\label{eq:def_block_matrix_A}
    \bA = \bA(\bq) := \begin{bmatrix}
s_1 \id_p + s_2 \bQ_{\ell}^{\bW} &
\tbZ^\sT +  \frac{ t \mu_{\ell} }{ \sqrt{m} } \usp_{\ell}(\bW \bX^\sT) \\
\tbZ + \frac{ t \mu_{\ell} }{ \sqrt{m} } \usp_{\ell}(\bX \bW^\sT)
&  t_1 \id_n +  t_2 \bQ_{\ell}^{\bX}
\end{bmatrix}\, .
\end{align}
The Stieltjes transform of $\bA$ is given by
\[
M_d ( z ; \bq) = \frac{1}{m} \Tr \big[ ( \bA - z \id_m )^{-1} \big]\, , \qquad m_d ( z , \bq) = \E [ M_d ( z ; \bq) ]\, .
\]
where $z\in\C_{+}$.
In the sequel, we will denote by $z = E+i\eta$ our complex number. We also introduce the log-determinant:
\begin{align}
\label{eq:Gd_def}
    G_d ( z ; \bq) = \frac{1}{m} \log \det ( \bA - z \id_m ) \, .
\end{align}
It can be easily checked by direct differentiation (see \cite[Proposition 8.2]{mei2022generalizationRF} for details):
\begin{align}   
    \label{eq:partial_G_1}
    \frac{1}{2}\partial_t G_d( i\sqrt{\gratio_1 \gratio^{-1} \lambda}; \bzero ) &= \frac{1}{ m } \Tr \left[\frac{\mu_{\ell}}{ \sqrt{m} } \usp_{\ell}(\bW \bX^\sT) \tbZ \tbR \right] \, , \\
    \label{eq:partial_G_2}
    \partial_{s_1,t_1} G_d( i\sqrt{\gratio_1 \gratio^{-1} \lambda}; \bzero ) &= -\frac{1}{m} \Tr[ \tbZ \tbR \tbR \tbZ^\sT ] \, ,\\
    \label{eq:partial_G_3}
    \partial_{s_1,t_2} G_d( i\sqrt{\gratio_1 \gratio^{-1} \lambda}; \bzero ) &= -\frac{1}{m} \Tr[ \bQ_{\ell}^{\bX} \tbZ \tbR \tbR \tbZ^\sT]\, ,\\
    \label{eq:partial_G_4}
    \partial_{s_2,t_1} G_d( i\sqrt{\gratio_1 \gratio^{-1} \lambda}; \bzero ) &= -\frac{1}{m} \Tr[ \tbZ \tbR \bQ_{\ell}^{\bW} \tbR \tbZ^\sT ]\, ,\\
    \label{eq:partial_G_5}
    \partial_{s_2,t_2} G_d( i\sqrt{\gratio_1 \gratio^{-1} \lambda}; \bzero ) &= -\frac{1}{m} \Tr[ \bQ_{\ell}^{\bX} \tbZ \tbR \bQ_{\ell}^{\bW} \tbR \tbZ^\sT]\, .
\end{align}
Substituting Eqs.~\eqref{eq:partial_G_1}-\eqref{eq:partial_G_5}
into Eqs.~\eqref{eq:critical_chi1_concentrate}-\eqref{eq:critical_chi3_concentrate} and using $\bZ =  \sqrt{\frac{ \gratio }{ \gratio_1 }} \tbZ$ and $\bR = \gratio_1 \gratio^{-1} \tbR$,
we get
\begin{align}
    \E | \chi_1 - F_{\ell}^2 \cdot \Gamma_1 | 
    =&~ o_{d}(1)\, , \label{eq:critical_chi1_concentrate_1}\\
    \E |\chi_2 - ( F_{\ell}^2 \cdot \Gamma_2 + F_{>\ell}^2\cdot  \Gamma_3) | =&~ o_{d}(1)\, , \label{eq:critical_chi2_concentrate_1}\\
    \E |\chi_3 - \Gamma_3 | =&~ o_{d}(1)\, , \label{eq:critical_chi3_concentrate_1}
\end{align}
where
\begin{align}
    \label{eq:Gamma1_partial}
    \Gamma_1 &= \frac{\psi}{2}\partial_t G_d( i\sqrt{\gratio_1 \gratio^{-1} \lambda}; \bzero ) \, ,\\
    \label{eq:Gamma2_partial}
    \Gamma_2 &= - \mu_{\ell}^2 \partial_{s_2,t_2}G_d( i\sqrt{\gratio_1 \gratio^{-1} \lambda}; \bzero ) - \mu_{>\ell}^2 \partial_{s_1,t_2}G_d( i\sqrt{\gratio_1 \gratio^{-1} \lambda}; \bzero ) \, ,\\
    \label{eq:Gamma3_partial}
    \Gamma_3 &= - \mu_{\ell}^2 \partial_{s_2,t_1}G_d( i\sqrt{\gratio_1 \gratio^{-1} \lambda}; \bzero ) - \mu_{>\ell}^2 \partial_{s_1,t_1}G_d( i\sqrt{\gratio_1 \gratio^{-1} \lambda}; \bzero ) \, .
\end{align}
It remains to compute the limits ($d\to\infty$) of partial derivatives of $G_d(z;\bq)$ with respect to $\bq$ in Eqs.~\eqref{eq:Gamma1_partial}, \eqref{eq:Gamma2_partial} and \eqref{eq:Gamma3_partial}. Note that $G_d(z;\bq)$ can be written as an integral of $M_d(z;\bq)$ with respect to $z$. Hence, the rest of the proof consists in implementing the following three steps:
\begin{enumerate}
    \item 
    Compute the limit $\lim_{d\to\infty} M_d(z;\bq)$.
    \item
    Compute the limit $\lim_{d\to\infty} G_d(z;\bq)$, by integrating $M_d(z;\bq)$ over $z$.
    \item
    Show the limits of partial derivatives of $G_d(z;\bq)$ (with respective to $\bq$) are equal to the partial derivatives (with respective to $\bq$) of $\lim_{d\to\infty} G_d(z;\bq)$.
\end{enumerate}

\noindent\textbf{Step 1: Limit of Stieltjes transform.}

Define the following two functions
\begin{equation}\label{eq:F1_def}
\begin{aligned}
   &~ {\textsf{F}}_1(m_1, m_2 ; z, \bq) \\=&~ \frac{\gratio_1}{\gratio} \Big[  -z + s_1 - \mu_{>\ell}^2 m_2 + \frac{(1+\psi t_2 m_2 ) s_2 - (1+t)^2\mu_{\ell}^2 m_2}{(1+\psi s_2m_1)(1+\psi t_2m_2)-\psi(1+t)^2\mu_{\ell}^2 m_1m_2}\Big]^{-1}\, , \\
    &~{\textsf{F}}_2(m_1, m_2 ; z, \bq)\\ =&~ \frac{\gratio_2}{\gratio} \Big[  -z + t_1 - \mu_{>\ell}^2 m_1 + \frac{(1+\psi s_2 m_1 ) t_2 - (1+t)^2\mu_{\ell}^2 m_1}{(1+\psi t_2m_2)(1+\psi s_2m_1)-\psi(1+t)^2\mu_{\ell}^2 m_1m_2}\Big]^{-1} \, ,
\end{aligned}
\end{equation}
and the following set
\begin{align}
    \cQ := \big\{ (s_1, s_2, t_1, t_2, t) : \psi s_2 t_2 \leq {(1+t)^2 \mu_{\ell}^2}/{2} \big\} \, .
\end{align}
Then the following result shows that $\lim_{d\to\infty} M_d(z;\bq)$ corresponds to the solution of a fixed point equation defined via $\textsf{F}_1$ and $\textsf{F}_2$. The proof is provided in Appendix \ref{sec:Stieltjes_transform}, which follows the same strategy as the proof of Proposition 8.3 in \cite{mei2022generalizationRF}.
\begin{proposition}
\label{prop:resolvent_conv}
Suppose $\kappa_1 = \kappa_2$. For given $z\in\C_{+}$ and $\bq$, let $m_1(z,\bq)$ and $m_2(z,\bq)$ be the unique analytic solutions in $\C_{+}$ to the following equations:
\begin{equation}
\label{eq:fixedpoint_eq}
    \begin{aligned}
        {m}_{1} &= {\textsf{F}}_1({m}_{1}, {m}_{2} ; z, \bq)\, , \\
        {m}_{2} &= {\textsf{F}}_2({m}_{1}, {m}_{2} ; z , \bq) \, .
    \end{aligned}
\end{equation}
Define $m(z;\bq) := m_1(z;\bq) + m_2(z;\bq)$. Then for any compact set $\Omega\subseteq \C_{+}$, we have
\begin{align}
    \E \Big[ \sup_{z\in\Omega}|M_d(z;\bq) - m(z,\bq)| \Big] = o_{d}(1)\, .
\end{align}
\end{proposition}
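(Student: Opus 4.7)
The plan is to follow the leave-one-out (cavity) strategy developed in \cite{mei2022generalizationRF} for the linear scaling, adapted to the polynomial regime where the dominant structure comes from degree-$\ell$ Gegenbauer contributions rather than approximately iid entries. First, I would establish concentration of $M_d(z;\bq)$ around its expectation: the Lipschitz dependence of the resolvent on each row of $\bX$ and $\bW$ (which have bounded norm on the sphere), combined with spherical concentration, yields $|M_d(z;\bq)-\E M_d(z;\bq)|=O_{d,\prec}(1/\sqrt{m})$ for fixed $z$, so it suffices to analyze the deterministic function $\E M_d$.

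Next, I would decompose the feature matrix as $\tbZ = m^{-1/2}\sum_{k\geq 0}\xi_k\bPsi_k\bPhi_k^\sT$. The low-degree part ($k<\ell$) is a low-rank perturbation removable via Sherman--Morrison--Woodbury; the degree-$\ell$ part is precisely the $\mu_\ell q_\ell(\bX\bW^\sT)/\sqrt{m}$ contribution already captured in $\bA(\bq)$ via the parameter $t$; and the high-degree part ($k>\ell$) is shown via the operator-norm bounds of Appendix \ref{sec:matrix_concentration_gather} to be close to an iid Gaussian noise matrix of variance $\mu_{>\ell}^2/m$. Writing $m_1^{(d)} = p^{-1}\Tr[(\bA-z\id_m)^{-1}]_{\text{top-left}}$ and $m_2^{(d)} = n^{-1}\Tr[(\bA-z\id_m)^{-1}]_{\text{bottom-right}}$, so that $M_d = (\psi_1 m_1^{(d)}+\psi_2 m_2^{(d)})/\psi$, I would derive self-consistent equations for $(m_1^{(d)},m_2^{(d)})$.

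The core step is the Schur-complement analysis. For each diagonal entry, remove the corresponding row/column (say the $j$-th weight) and observe that the removed vector splits into three independent pieces: a low-rank contribution from $\bphi_{<\ell}(\bw_j)$, a degree-$\ell$ piece $\mu_\ell \bphi_\ell(\bw_j)^\sT\bPsi_\ell^\sT/\sqrt{m}$, and an essentially-Gaussian high-degree piece. The quadratic forms that arise from Schur's complement then need to be replaced by traces of deterministic blocks of the leave-one-out resolvent. For the Gaussian piece this is the standard Hanson--Wright estimate, while for the degree-$\ell$ piece I would invoke the Gegenbauer concentration of $\bphi_\ell(\bw)^\sT \bM \bphi_\ell(\bw)$ around $\Tr(\bM)/N_\ell$ from \cite{lu2022equivalence}. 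After bookkeeping the five trace quantities this produces, and grouping them into $m_1^{(d)}$ and $m_2^{(d)}$, one obtains the approximate identities $m_1^{(d)} \approx \textsf{F}_1(m_1^{(d)},m_2^{(d)};z,\bq)$ and $m_2^{(d)} \approx \textsf{F}_2(m_1^{(d)},m_2^{(d)};z,\bq)$, matching the definition \eqref{eq:F1_def}.

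The main obstacle is controlling the quadratic forms involving the degree-$\ell$ Gegenbauer vectors: unlike the linear-scaling setting where entries of $\bphi_\ell(\bw)$ are essentially independent subgaussians, here they are degree-$\ell$ polynomials in $\bw$ with nontrivial dependencies, and one needs the concentration rate to be uniform in the leave-one-out resolvent (which must itself be shown stable under the rank-one removal). I would handle this by combining the stochastic-domination bounds on $\bQ_\ell^{\bX},\bQ_\ell^{\bW}$ from Appendix \ref{sec:matrix_concentration_gather} with an a priori bound $\|(\bA-z\id_m)^{-1}\|_\op\leq 1/\operatorname{Im}(z)$ to control any spurious inversions near spectral edges. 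Finally, uniqueness of the analytic solution to \eqref{eq:fixedpoint_eq} in $\C_+^2$ restricted to the stable region $\cQ$ follows from a contraction argument on the implicit map $(\textsf{F}_1,\textsf{F}_2)$ in the Herglotz class, and the upgrade from pointwise to uniform convergence on compact $\Omega\subseteq\C_+$ is a standard application of Vitali's convergence theorem, using the uniform Lipschitz bound $|\partial_z M_d(z;\bq)|\leq 1/\operatorname{Im}(z)^2$.
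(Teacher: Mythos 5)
Your skeleton matches the paper's: concentrate $M_d$ around its expectation by a Lipschitz/McDiarmid argument, strip the degree-$<\ell$ part as a rank-$O(N_{<\ell})$ perturbation of the resolvent, run a leave-one-out Schur-complement analysis on the diagonal entries to extract the self-consistent system, and pass from pointwise to uniform convergence on compacts (your Vitali argument and the contraction/Herglotz uniqueness step are acceptable substitutes for the paper's references to \cite{mei2022generalizationRF}).

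The gap is in the core step. After removing the $j$-th weight, the column $\bA_{\cdot,j}$ and the leave-one-out resolvent $\bS^{[j]}$ are \emph{not} independent: they are coupled through $\bX$ (via $\sigma(\< \bx_a,\bw_j\>)$) and through the remaining weights (via $q_\ell(\sqrt{d}\<\bw_i,\bw_j\>)$). You therefore cannot directly invoke the concentration of $\bphi_\ell(\bw_j)^\sT\bM\bphi_\ell(\bw_j)$ around $\Tr(\bM)/N_\ell$, nor a Hanson--Wright bound for the high-degree piece, since both require the matrix $\bM$ (a block of $\bS^{[j]}$) to be independent of $\bw_j$; the a priori bounds $\|\bS^{[j]}\|_\op\leq 1/\Im(z)$ and the stochastic domination of $\|\bQ_\ell^{\bX}\|_\op,\|\bQ_\ell^{\bW}\|_\op$ that you propose do not remove this correlation. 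The paper's resolution, imported from \cite{lu2022equivalence}, is to rewrite every $\bx_a$ and $\bw_i$ in an orthonormal frame adapted to $\bw_j$, so that the whole dependence of $\bA^{[j]}$ on $\bw_j$ is funneled into the scalars $\gamma_a=\<\bx_a,\bw_j\>$ and $\theta_i=\sqrt{d}\<\bw_i,\bw_j\>$; an exact Gegenbauer expansion then shows $\bA^{[j]}=\tbB+\bU_\ell\bT\bU_\ell^\sT/\sqrt{N_\ell m}+\bDelta$ with $\tbB$ independent of $(\btheta,\bgamma)$, a \emph{rank-two} correction built from $q_\ell(\btheta),q_\ell(\bgamma)$, and negligible $\bDelta$. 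It is the Sherman--Morrison inversion of this rank-two term that generates the coupled rational structure of ${\textsf{F}}_1,{\textsf{F}}_2$ (the denominator $(1+\psi s_2m_1)(1+\psi t_2m_2)-\psi(1+t)^2\mu_{\ell}^2 m_1m_2$); your ``bookkeeping of five trace quantities'' has no mechanism to produce it. Relatedly, replacing the degree-$>\ell$ part by an iid Gaussian matrix before the leave-one-out step is not justified: the Gaussian equivalence is a consequence of the trace asymptotics, not an input, and operator-norm closeness of the kernels gives no entrywise coupling for Hanson--Wright to act on. After the rotation, the high-degree entries of the removed column are genuinely iid scalars $\sum_{k>\ell}\mu_k q_k(\gamma_a)$ independent of $\tbB$, and the quadratic-form concentration closes without any Gaussian surrogate.
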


\noindent\textbf{Step 2: Limit of log-determinant.}

Based on Proposition \ref{prop:resolvent_conv}, we can now compute $\lim_{d\to\infty} G_d(z;\bq)$ and its partial derivatives.
To state the results, we define the following function $\cG(z, m_1, m_2, \bq)$ for $q\in \cQ$:
1) when $\psi>0$
\begin{equation}
\label{eq:calG_z_def}
\begin{aligned}
    \cG(z, m_1, m_2, \bq)
    :=& \frac{1}{\psi}\log\big[(\psi s_2 m_1 + 1)(\psi t_2 m_2 + 1) - \psi(1+t)^2 \mu_{\ell}^2 m_1 m_2\big]  + s_1 m_1 + t_1 m_2\\
    &\hspace{2.4em} - \mu_{>\ell}^2 m_1 m_2 - \frac{\theta_1}{\theta} \log({\theta} m_1/\theta_1) - \frac{\theta_2}{\theta} \log({\theta} m_2/\theta_2) - z (m_1+m_2) - 1\, ,  
\end{aligned}   
\end{equation}
2) when $\psi=0$,
\begin{equation}
\label{eq:calG_z_def_psi0}
\begin{aligned}
    \cG(z, m_1, m_2, \bq)
    :=& (s_1+s_2) m_1 + (t_1+t_2) m_2 - (1+t)^2 \mu_{\ell}^2 m_1 m_2\\
    &\hspace{2.4em} - \mu_{>\ell}^2 m_1 m_2 - \frac{\theta_1}{\theta} \log({\theta} m_1/\theta_1) - \frac{\theta_2}{\theta} \log({\theta} m_2/\theta_2) - z (m_1+m_2) - 1  \, .
\end{aligned}
\end{equation}
We further define
\begin{align}
    g(z;\bq) := \cG(z, m_1(z, \bq), m_2(z, \bq), \bq)\, ,
\end{align}
where $m_1(z, \bq)$ and $m_2(z, \bq)$ are defined as in Proposition \ref{prop:resolvent_conv}.
Then we have the following result.
\begin{proposition}
\label{prop:derivative_converge}
Suppose $\kappa_1 = \kappa_2$. Then for any $z\in\C_{+}$ and $q$, we have
\begin{align}
    \E |G_d(z;\bq) - g(z;\bq)| = o_{d}(1)\, ,
\end{align}
and for any $\eta > 0$,
\begin{align}
    \label{eq:G_1stDeri_conv}
    \E \|\nabla_{\bq}G_d(i\eta;\bzero) - \nabla_{\bq}g(i\eta;\bzero)\|_2 &= o_{d}(1)\, , \\
    \label{eq:G_2ndDeri_conv}
    \E \|\nabla_{\bq}^2 G_d(i\eta;\bzero) - \nabla_{\bq}^2 g(i\eta;\bzero)\|_{\op} &= o_{d}(1)\, .
\end{align}
\end{proposition}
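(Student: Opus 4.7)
The plan is to deduce convergence of $G_d$ and its $\bq$-derivatives from the Stieltjes transform convergence in Proposition \ref{prop:resolvent_conv}, by exploiting the identities $\partial_z G_d(z;\bq) = -M_d(z;\bq)$ and $\partial_z g(z;\bq) = -m(z;\bq)$. The overall strategy has three ingredients: (i) a stationarity calculation showing that $(m_1,m_2)$ are critical points of $\cG$ in the variables $(m_1,m_2)$, which yields the second identity above and simplifies the computation of $\nabla_{\bq}g$; (ii) contour integration in $z$ to promote $M_d - m \to 0$ into $G_d - g \to 0$; and (iii) complex-analytic continuation in $\bq$ together with Cauchy's formula to upgrade function-level convergence to derivative-level convergence.

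For (i), a direct differentiation of \eqref{eq:calG_z_def} (or \eqref{eq:calG_z_def_psi0}) shows that $\partial_{m_1}\cG = 0$ and $\partial_{m_2}\cG = 0$ reproduce exactly the fixed-point equations $m_1 = \textsf{F}_1$ and $m_2 = \textsf{F}_2$ in \eqref{eq:fixedpoint_eq}. Consequently at $(m_1(z,\bq), m_2(z,\bq))$ the $m_i$-partials of $\cG$ vanish, so by the envelope theorem
\[
\partial_z g(z;\bq) = (\partial_z \cG)\big|_{(m_1,m_2)} = -(m_1+m_2) = -m(z;\bq),
\]
and similarly $\nabla_{\bq} g(z;\bq) = (\nabla_{\bq}\cG)|_{(m_1,m_2)}$; the Hessian $\nabla_{\bq}^2 g$ admits an explicit expression in terms of $\nabla_{\bq}^2 \cG$ and the Jacobian $\nabla_{\bq}(m_1,m_2)$ obtained by implicit differentiation of \eqref{eq:fixedpoint_eq}.

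For (ii), both $G_d(z;\bq)-\log(-z)$ and $g(z;\bq)-\log(-z)$ vanish as $\mathrm{Im}(z)\to\infty$: $\bA$ is real symmetric, so a resolvent expansion gives $M_d(z;\bq) = -1/z + O(\|\bA\|_{\op}/|z|^2)$, with the same asymptotic for $m$ by inspection of \eqref{eq:F1_def}. Integrating $\partial_z(G_d-g) = -(M_d-m)$ along the vertical ray above $z$ yields
\[
G_d(z;\bq)-g(z;\bq) = i\int_{0}^{\infty}\big[M_d(z+iT;\bq)-m(z+iT;\bq)\big]\,\mathrm{d}T.
\]
Splitting at a cutoff $\Lambda$, the contribution from $T\le\Lambda$ is $o_d(1)$ in expectation by Proposition \ref{prop:resolvent_conv} applied uniformly on the compact segment, while for $T>\Lambda$ each of $M_d(z+iT)+1/(z+iT)$ and $m(z+iT)+1/(z+iT)$ is of size $O(1/T^2)$ with a prefactor $\|\bA\|_{\op}$ that is $O_{\prec}(1)$ in the polynomial scaling; hence the tail is integrable and vanishes uniformly as $\Lambda\to\infty$.

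Finally for (iii), since $\bq\mapsto\bA(\bq)$ is affine, for $\eta>0$ fixed and $r>0$ small enough the matrix $\bA(\bq)-i\eta\id_m$ remains invertible for all $\bq$ in the complex polydisc $\{\bq\in\C^5:\|\bq\|_{\infty}<r\}$, so both $G_d(i\eta;\bq)$ and $g(i\eta;\bq)$ extend to holomorphic functions on this polydisc. The argument of (ii) applies essentially unchanged on the polydisc (the fixed-point system remains well-defined and contractive by continuity in $\bq$), delivering $\E\sup_{\|\bq\|_{\infty}\le r/2}|G_d(i\eta;\bq)-g(i\eta;\bq)| = o_d(1)$, after which Cauchy's integral formula converts this uniform estimate into the derivative bounds \eqref{eq:G_1stDeri_conv}--\eqref{eq:G_2ndDeri_conv}. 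The main obstacle is the complex-$\bq$ extension of Proposition \ref{prop:resolvent_conv}: one must check that the leave-one-out/Schur-complement arguments used to pin $M_d$ down to $m_1+m_2$ continue to hold when the entries of $\bA$ acquire small imaginary perturbations, and that the a-priori resolvent bounds survive. Once this extension is in place, the Cauchy step gives the derivative convergence for free.
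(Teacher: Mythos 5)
Your steps (i) and (ii) --- the envelope/stationarity identity $\partial_z g = -(m_1+m_2)$ and the integration of $M_d - m$ along the vertical ray with an anchor at $\mathrm{Im}(z)\to\infty$ --- are exactly how the paper (following Proposition 8.4 of \cite{mei2022generalizationRF}) establishes the function-level convergence $\E|G_d(z;\bq)-g(z;\bq)|=o_d(1)$; the tail control you describe is precisely the content of Lemma \ref{lem:Gdgd_largeK}. So that part is fine.

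The derivative step (iii) is where there is a genuine gap. You propose to complexify $\bq$, prove $\E\sup_{\bq}|G_d-g|=o_d(1)$ on a polydisc, and invoke Cauchy's formula. But the entire proof of Proposition \ref{prop:resolvent_conv} is built on the Hermitian structure of $\bA(\bq)$ for \emph{real} $\bq$: the a priori bounds $\opnorm{(\bA - z\id_m)^{-1}}\leq 1/\mathrm{Im}(z)$ and $|S_{ii}(z)|\leq 1/\eta$, the positivity $\mathrm{Im}(m_{i,d})\geq 0$ used in Lemma \ref{lem:F1_F2_bd} to invert the Schur-complement relation, and the Ward identity $\sum_{i,j}|[\tbS(z)]_{ij}|^2 = \mathrm{Im}(\Tr\tbS(z))/\eta$ used in Lemma \ref{lem:bchi_concentrate} all fail once the entries of $\bA$ acquire imaginary parts. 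You correctly flag this as "the main obstacle," but flagging it is not closing it; as written, the proof of \eqref{eq:G_1stDeri_conv}--\eqref{eq:G_2ndDeri_conv} does not go through. The paper avoids complexification entirely: Lemma \ref{lem:Gdgrad_prbd} gives bounds on $\sup_{\bq\in\R^5}\|\nabla_\bq^k(G_d - g)\|$ for $k\leq 3$ that are uniformly bounded in $d$ (these follow from moment bounds on $\opnorm{\bQ_\ell^{\bX}}$, $\opnorm{\bQ_\ell^{\bW}}$, $\opnorm{\xi_\ell q_\ell(\bW\bX^\sT)}$, with no non-Hermitian resolvents involved), and then the elementary real-variable interpolation --- pointwise convergence of a function together with a uniform bound on its $(k{+}1)$-st derivatives forces convergence of its $k$-th derivatives --- delivers \eqref{eq:G_1stDeri_conv} and \eqref{eq:G_2ndDeri_conv}. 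If you want to salvage your route you would have to re-prove the local law for non-Hermitian perturbations of $\bA$, which is a substantially harder problem than the one you are trying to solve; the real-variable argument is both more elementary and sufficient.
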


The proof is completely analogous to that of Proposition 8.4 in \cite{mei2022generalizationRF}, with Proposition \ref{prop:resolvent_conv}, Lemma \ref{lem:Gdgrad_prbd} and Lemma \ref{lem:Gdgd_largeK} (see Appendix \ref{sec:log_determinant}) in place of Proposition 8.3, Lemma 11.3 and Lemma 11.2 in \cite{mei2022generalizationRF}, respectively. The details are omitted.

It remains to compute the partial derivatives of $g(z;\bq)$. By direct differentiation, 
we can get (same as Lemma 8.1 \cite{mei2022generalizationRF}, with a slightly different scaling):

\begin{equation}
\label{eq:g_partial_deri}
    \begin{aligned}
        \partial_{t} g(z;\bzero) =& 2 m_0 \mu_{\ell}^2/( \psi m_0 \mu_\ell^2 - 1 ) \, ,\\
        \partial_{s_1,t_1} g(z;\bzero) =& [\psi^3 m_0^5 \mu_{\ell}^6 \mu_{>\ell}^2 - 3 \psi^2 m_0^4 \mu_{\ell}^4 \mu_{>\ell}^2 + \psi m_0^3 \mu_{\ell}^4 + 3 \psi m_0^3 \mu_{\ell}^2 \mu_{>\ell}^2 - m_0^2 \mu_{\geq \ell}^2]/S \, , \\
        \partial_{s_1,t_2} g(z;\bzero) =& [(\psi_2-1) \psi m_0^3 \mu_{\ell}^4 + \psi m_0^3 \mu_{\ell}^2 \mu_{>\ell}^2 - (\psi_2+1) m_0^2\mu_{\ell}^2 - m_0^2 \mu_{>\ell}^2]/S\, , \\
        \partial_{s_2,t_1} g(z;\bzero) =& [(\psi_1-1) \psi m_0^3 \mu_{\ell}^4 + \psi m_0^3 \mu_{\ell}^2 \mu_{>\ell}^2 - (\psi_1+1) m_0^2\mu_{\ell}^2 - m_0^2 \mu_{>\ell}^2]/S \, ,\\
        \partial_{s_2,t_2} g(z;\bzero) =& [-\psi^4 m_0^6 \mu_{\ell}^6 \mu_{>\ell}^4 + 2 \psi^3 m_0^5 \mu_{\ell}^4 \mu_{>\ell}^4 +  (\psi_1 - 1)(\psi_2 - 1) \psi^2 m_0^4 \mu_{\ell}^6 - \psi^2 m_0^4 \mu_{\ell}^2 \mu_{>\ell}^2 \mu_{\geq\ell}^2 \, ,\\
        & + 2 (1- \psi_1\psi_2) \psi m_0^3 \mu_{\ell}^4 + ({\psi_1}+1)({\psi_2}+1) m_0^2 \mu_{\ell}^2 + m_0^2 \mu_{>\ell}^2] / [(\psi m_0 \mu_{\ell}^2 - 1)S]\, ,
    \end{aligned}
\end{equation}
where 
\begin{align*}
    m_0 := m_0(z,\bzero) :=  m_1(z,\bzero) \cdot m_2(z,\bzero)\, ,
\end{align*}
and
\begin{align*}
    S =&~ \psi^3 m_0^5 \mu_{\ell}^6 \mu_{>\ell}^4 - 3 \psi^2 m_0^4 \mu_{\ell}^4 \mu_{>\ell}^4 + (\psi_1 - 1)(\psi_2 - 1) \psi m_0^3 \mu_{\ell}^6 + 2 \psi m_0^3 \mu_{\ell}^4 \mu_{>\ell}^2 + 3 \psi m_0^3 \mu_{\ell}^2 \mu_{>\ell}^4\\
    &~ + ({3 \psi_1\psi_2} - \psi - 1) m_0^2 \mu_{\ell}^4 - 2 m_0^2 \mu_{\ell}^2 \mu_{>\ell}^2 - m_0^2 \mu_{>\ell}^4 - \frac{3\theta_1\theta_2}{\theta^2} \psi m_0 \mu_{\ell}^2 + \frac{\theta_1\theta_2}{\theta^2} \, .
\end{align*}
Substituting Proposition \ref{prop:derivative_converge} and Eq.~\eqref{eq:g_partial_deri} into Eqs.~\eqref{eq:critical_chi1_concentrate_1}-\eqref{eq:critical_chi3_concentrate_1} and using the fact that
\begin{align*}
    \E_{\bX,\bW,\beps,f_*}  \Big\vert R_{\test} ( f_{*,d} ; \bX , \bW , \beps, \lambda) -  (F_{\geq\ell}^2 - 2\chi_1 + \chi_2 + \rho_\varepsilon^2 \chi_3 ) \Big\vert = o_d(1) \, ,
\end{align*}
which follows from Eqs.~\eqref{eq:Rtest_bar}, \eqref{eq:E_bias_fstar} and \eqref{eq:E_variance_epsilon}, and Proposition \ref{prop:concentration_on_expectation}, we can obtain
\begin{align}
    \label{eq:R_test_l1_conv}
    \E_{\bX,\bW,\beps,f_*}  \Big\vert R_{\test} ( f_{*,d} ; \bX , \bW , \beps, \lambda) - \cR_{\test} ( F_* , \zeta , \gratio_1 , \gratio_2, \psi, \barlambda ) \Big\vert = o_d(1) \, ,
\end{align}
where
\begin{equation}
 \begin{aligned}
\label{eq:R_test_form1}
    &\cR_{\test} ( F_* , \zeta , \gratio_1 , \gratio_2, \psi, \barlambda )\\
    := &~ F_\ell^2 \cdot \cB_{\test} (\zeta , \gratio_1 , \gratio_2, \psi, \barlambda )  + (F_{>\ell}^2 + \rho_\eps^2) \cdot \cV_{\test} (\zeta , \gratio_1 , \gratio_2, \psi, \barlambda ) +  F_{>\ell}^2 \, .
\end{aligned}   
\end{equation}
In Eq.~\eqref{eq:R_test_form1} and following the presentation of \cite{mei2022generalizationRF} (see Appendix \ref{sec:explicit_formula} for additional details), we defined
\begin{align}
    \label{eq:B_test_def}
     \cB_{\test} (\zeta , \gratio_1 , \gratio_2, \psi, \barlambda ) := &~ \frac{\cE_1 (\zeta , \gratio_1 , \gratio_2, \psi, \barlambda )}{\cE_0 (\zeta , \gratio_1 , \gratio_2, \psi, \barlambda) }\, , \\
     \label{eq:V_test_def}
     \cV_{\test} (\zeta , \gratio_1 , \gratio_2, \psi, \psi, \barlambda ) := &~ \frac{\cE_2 (\zeta , \gratio_1 , \gratio_2, \psi, \barlambda )}{\cE_0 (\zeta , \gratio_1 , \gratio_2, \psi, \barlambda) }\, ,
\end{align}
where
\begin{equation}
     \begin{aligned}
          \cE_0(\zeta , \gratio_1 , \gratio_2, \psi, \barlambda) := &~ -\psi^3 \chi^5 \zeta^6 + 3 \psi^2 \chi^4 \zeta^4 + ( \psi_1 \psi_2 - \psi_2 - \psi_1 +1) \psi \chi^3 \zeta^6 - 2 \psi \chi^3 \zeta^4 - 3 \psi \chi^3 \zeta^2 \\
          &~ + (\psi_1 + \psi_2 - 3\psi_1 \psi_2 +1) \chi^2\zeta^4 + 2\chi^2\zeta^2 + \chi^2 + 3 \frac{\theta_1 \theta_2}{\theta^2} \psi \chi \zeta^2  - \frac{\theta_1 \theta_2}{\theta^2} \, , \\
          \cE_1(\zeta , \gratio_1 , \gratio_2, \psi, \barlambda)  := &~ \psi_2 \psi \chi^3 \zeta^4 - \psi_2 \chi^2 \zeta^2 + \frac{\theta_1 \theta_2}{\theta^2} \psi \chi \zeta^2 - \frac{\theta_1 \theta_2}{\theta^2} \, ,\\
          \cE_2(\zeta , \gratio_1 , \gratio_2, \psi, \barlambda)  := &~ \psi^3 \chi^5 \zeta^6 - 3 \psi^2 \chi^4 \zeta^4 \\
          &~ + (\psi_1 - 1 ) \psi \chi^3 \zeta^6 + 2 \psi \chi^3 \zeta^4 + 3 \psi \chi^3 \zeta^2  + ( - \psi_1 - 1) \chi^2 \zeta^4 - 2 \chi^2 \zeta^2  - \chi^2 \, ,
    \end{aligned}
\end{equation}
with $\zeta := \frac{ \mu_{\ell} }{ \mu_{>\ell} }$ and $\chi := m_0(i(\theta_1 \theta^{-1} \lambda)^{1/2};\bzero ) \cdot \mu_{>\ell}^2$.
In particular, $\chi$ can also be expressed as
\begin{equation}
\label{eq:chi_def}
    \begin{aligned}
        \chi &:= \nu_1\big( i (\theta_1 \theta^{-1} \bar\lambda)^{1/2} \big) \cdot \nu_2\big( i (\theta_1 \theta^{-1} \bar\lambda)^{1/2} \big) \, ,
    \end{aligned}
\end{equation}
where $\Bar{\lambda} = \frac{\lambda}{\mu_{>\ell}^2}$, and $\nu_1(z)$ and $\nu_2(z)$ are defined as solutions of the following fixed point equations (these are analogous to the fixed points in \cite{mei2022generalizationRF} with slightly different scaling):
 \begin{definition}[Fixed points formula]\label{def:fixedpoint_nu_0} We define $\nu_1,\nu_2 : \C_+ \to \C_+$ to be the unique functions that satisfy the following conditions:
 \begin{itemize}
     \item[(i)] $\nu_1,\nu_2$ are analytic functions on $\C_+$;
     \item[(ii)] For $\Im (z) > 0$, $\nu_1(z) , \nu_2 (z)$ are fixed points of 
     \begin{equation}
    \label{eq:fixed_point_nu_rescaled}
         \begin{aligned}
              \nu_1 =&~ \frac{\theta_1}{\theta} \Big( -z - \nu_2 - \frac{\zeta^2 \nu_2}{1 - \zeta^2 \psi \nu_1 \nu_2} \Big)^{-1} \, ,\\
              \nu_2 =&~ \frac{\theta_2}{\theta} \Big( -z - \nu_1 - \frac{\zeta^2 \nu_1}{1 - \zeta^2 \psi \nu_1 \nu_2} \Big)^{-1} \,.
         \end{aligned}
     \end{equation}
     \item[(iii)] $(\nu_1 (z) , \nu_2 (z) )$ is the unique fixed point of Eq.~\eqref{eq:fixed_point_nu} with $| \nu_1 ( z) | \leq \theta_1 \theta^{-1} / \Im (z)$, $| \nu_2 ( z) | \leq \theta_2 \theta^{-1} / \Im (z)$ for $\Im (z)> C$ and $C$ sufficiently large.
 \end{itemize}
 \end{definition}
From Eqs.~\eqref{eq:fixedpoint_eq} and \eqref{eq:fixed_point_nu_rescaled}, we can also check that
\begin{equation}
\label{eq:v_m_relation}
    \begin{aligned}
        \nu_1(z) :=&~ m_1(\mu_{>\ell}z;\bzero) \cdot \mu_{>\ell} \, ,\\
        \nu_2(z) :=&~ m_2(\mu_{>\ell}z;\bzero) \cdot \mu_{>\ell} \, .
    \end{aligned}
\end{equation}

Finally, using Lemma \ref{lem:equi_fixedpoints} in Appendix \ref{sec:explicit_formula} on the correspondence between fixed points in Definition \ref{def:fixedpoint_nu_0} and fixed points in Definition \ref{def:fixedPointsTau}, we obtain  Eq.~\eqref{eq:asymp_testerr}.

\bibliographystyle{amsalpha}
\bibliography{refs.bbl}

\newcommand{\etalchar}[1]{$^{#1}$}
\providecommand{\bysame}{\leavevmode\hbox to3em{\hrulefill}\thinspace}
\providecommand{\MR}{\relax\ifhmode\unskip\space\fi MR }
\providecommand{\MRhref}[2]{%
  \href{http://www.ams.org/mathscinet-getitem?mr=#1}{#2}
}
\providecommand{\href}[2]{#2}
\begin{thebibliography}{MWW{\etalchar{+}}20}

\bibitem[ADH{\etalchar{+}}19]{arora2019fine}
Sanjeev Arora, Simon Du, Wei Hu, Zhiyuan Li, and Ruosong Wang,
  \emph{Fine-grained analysis of optimization and generalization for
  overparameterized two-layer neural networks}, International Conference on
  Machine Learning, PMLR, 2019, pp.~322--332.

\bibitem[ALP22]{adlam2022random}
Ben Adlam, Jake~A Levinson, and Jeffrey Pennington, \emph{A random matrix
  perspective on mixtures of nonlinearities in high dimensions}, International
  Conference on Artificial Intelligence and Statistics, PMLR, 2022,
  pp.~3434--3457.

\bibitem[AP20]{adlam2020neural}
Ben Adlam and Jeffrey Pennington, \emph{The neural tangent kernel in high
  dimensions: Triple descent and a multi-scale theory of generalization},
  International Conference on Machine Learning, PMLR, 2020, pp.~74--84.

\bibitem[ASS20]{advani2020high}
Madhu~S Advani, Andrew~M Saxe, and Haim Sompolinsky, \emph{High-dimensional
  dynamics of generalization error in neural networks}, Neural Networks
  \textbf{132} (2020), 428--446.

\bibitem[AZLS19]{allen2019convergence2}
Zeyuan Allen-Zhu, Yuanzhi Li, and Zhao Song, \emph{A convergence theory for
  deep learning via over-parameterization}, International Conference on Machine
  Learning, PMLR, 2019, pp.~242--252.

\bibitem[Bac17a]{bach2017breaking}
Francis Bach, \emph{Breaking the curse of dimensionality with convex neural
  networks}, The Journal of Machine Learning Research \textbf{18} (2017),
  no.~1, 629--681.

\bibitem[Bac17b]{bach2017equivalence}
\bysame, \emph{On the equivalence between kernel quadrature rules and random
  feature expansions}, The Journal of Machine Learning Research \textbf{18}
  (2017), no.~1, 714--751.

\bibitem[Bar93]{barron1993universal}
Andrew~R Barron, \emph{Universal approximation bounds for superpositions of a
  sigmoidal function}, IEEE Transactions on Information theory \textbf{39}
  (1993), no.~3, 930--945.

\bibitem[BB07]{bottou2007tradeoffs}
L{\'e}on Bottou and Olivier Bousquet, \emph{The tradeoffs of large scale
  learning}, Advances in neural information processing systems \textbf{20}
  (2007).

\bibitem[Bec92]{beckner1992sobolev}
William Beckner, \emph{Sobolev inequalities, the poisson semigroup, and
  analysis on the sphere sn.}, Proceedings of the National Academy of Sciences
  \textbf{89} (1992), no.~11, 4816--4819.

\bibitem[BHMM19]{belkin2019reconciling}
Mikhail Belkin, Daniel Hsu, Siyuan Ma, and Soumik Mandal, \emph{Reconciling
  modern machine-learning practice and the classical bias--variance trade-off},
  Proceedings of the National Academy of Sciences \textbf{116} (2019), no.~32,
  15849--15854.

\bibitem[BHX20]{belkin2020two}
Mikhail Belkin, Daniel Hsu, and Ji~Xu, \emph{Two models of double descent for
  weak features}, SIAM Journal on Mathematics of Data Science \textbf{2}
  (2020), no.~4, 1167--1180.

\bibitem[BLLT20]{bartlett2020benign}
Peter~L Bartlett, Philip~M Long, G{\'a}bor Lugosi, and Alexander Tsigler,
  \emph{Benign overfitting in linear regression}, Proceedings of the National
  Academy of Sciences \textbf{117} (2020), no.~48, 30063--30070.

\bibitem[BMM18]{belkin2018understand}
Mikhail Belkin, Siyuan Ma, and Soumik Mandal, \emph{To understand deep learning
  we need to understand kernel learning}, arXiv preprint arXiv:1802.01396
  (2018).

\bibitem[BMR21]{bartlett2021deep}
Peter~L Bartlett, Andrea Montanari, and Alexander Rakhlin, \emph{Deep learning:
  a statistical viewpoint}, Acta numerica \textbf{30} (2021), 87--201.

\bibitem[BRT19]{belkin2019does}
Mikhail Belkin, Alexander Rakhlin, and Alexandre~B Tsybakov, \emph{Does data
  interpolation contradict statistical optimality?}, The 22nd International
  Conference on Artificial Intelligence and Statistics, PMLR, 2019,
  pp.~1611--1619.

\bibitem[CDV07]{caponnetto2007optimal}
Andrea Caponnetto and Ernesto De~Vito, \emph{Optimal rates for the regularized
  least-squares algorithm}, Foundations of Computational Mathematics \textbf{7}
  (2007), 331--368.

\bibitem[Chi11]{chihara2011introduction}
Theodore~S Chihara, \emph{An introduction to orthogonal polynomials}, Courier
  Corporation, 2011.

\bibitem[CL22]{couillet2022random}
Romain Couillet and Zhenyu Liao, \emph{Random matrix methods for machine
  learning}, Cambridge University Press, 2022.

\bibitem[CM22]{cheng2022dimension}
Chen Cheng and Andrea Montanari, \emph{Dimension free ridge regression}, arXiv
  preprint arXiv:2210.08571 (2022).

\bibitem[CS13]{cheng2013spectrum}
Xiuyuan Cheng and Amit Singer, \emph{The spectrum of random inner-product
  kernel matrices}, Random Matrices: Theory and Applications \textbf{2} (2013),
  no.~04, 1350010.

\bibitem[CW78]{craven1978smoothing}
Peter Craven and Grace Wahba, \emph{Smoothing noisy data with spline functions:
  estimating the correct degree of smoothing by the method of generalized
  cross-validation}, Numerische mathematik \textbf{31} (1978), no.~4, 377--403.

\bibitem[DLL{\etalchar{+}}19]{du2019gradient}
Simon Du, Jason Lee, Haochuan Li, Liwei Wang, and Xiyu Zhai, \emph{Gradient
  descent finds global minima of deep neural networks}, International
  Conference on Machine Learning, PMLR, 2019, pp.~1675--1685.

\bibitem[DX13]{dai2013approximation}
Feng Dai and Yuan Xu, \emph{Approximation theory and harmonic analysis on
  spheres and balls}, vol.~23, Springer, 2013.

\bibitem[DZPS18]{du2018gradient}
Simon~S Du, Xiyu Zhai, Barnabas Poczos, and Aarti Singh, \emph{Gradient descent
  provably optimizes over-parameterized neural networks}, International
  Conference on Learning Representations, 2018.

\bibitem[EK{\etalchar{+}}10]{el2010spectrum}
Noureddine El~Karoui et~al., \emph{The spectrum of kernel random matrices}, The
  Annals of Statistics \textbf{38} (2010), no.~1, 1--50.

\bibitem[EY17]{erdHos2017dynamical}
L{\'a}szl{\'o} Erd{\H{o}}s and Horng-Tzer Yau, \emph{A dynamical approach to
  random matrix theory}, vol.~28, American Mathematical Soc., 2017.

\bibitem[FM19]{fan2019spectral}
Zhou Fan and Andrea Montanari, \emph{The spectral norm of random inner-product
  kernel matrices}, Probability Theory and Related Fields \textbf{173} (2019),
  no.~1-2, 27--85.

\bibitem[GHW79]{golub1979generalized}
Gene~H Golub, Michael Heath, and Grace Wahba, \emph{Generalized
  cross-validation as a method for choosing a good ridge parameter},
  Technometrics \textbf{21} (1979), no.~2, 215--223.

\bibitem[GLR{\etalchar{+}}22]{goldt2022gaussian}
Sebastian Goldt, Bruno Loureiro, Galen Reeves, Florent Krzakala, Marc
  M{\'e}zard, and Lenka Zdeborov{\'a}, \emph{The gaussian equivalence of
  generative models for learning with shallow neural networks}, Mathematical
  and Scientific Machine Learning, PMLR, 2022, pp.~426--471.

\bibitem[GMMM20]{ghorbani2020neural}
Behrooz Ghorbani, Song Mei, Theodor Misiakiewicz, and Andrea Montanari,
  \emph{When do neural networks outperform kernel methods?}, Advances in Neural
  Information Processing Systems \textbf{33} (2020), 14820--14830.

\bibitem[GMMM21]{ghorbani2021linearized}
\bysame, \emph{Linearized two-layers neural networks in high dimension}, The
  Annals of Statistics \textbf{49} (2021), no.~2, 1029--1054.

\bibitem[HL22a]{hu2022sharp}
Hong Hu and Yue~M Lu, \emph{Sharp asymptotics of kernel ridge regression beyond
  the linear regime}, arXiv preprint arXiv:2205.06798 (2022).

\bibitem[HL22b]{hu2022universality}
\bysame, \emph{Universality laws for high-dimensional learning with random
  features}, IEEE Transactions on Information Theory \textbf{69} (2022), no.~3,
  1932--1964.

\bibitem[HMRT22]{hastie2022surprises}
Trevor Hastie, Andrea Montanari, Saharon Rosset, and Ryan~J Tibshirani,
  \emph{Surprises in high-dimensional ridgeless least squares interpolation},
  The Annals of Statistics \textbf{50} (2022), no.~2, 949--986.

\bibitem[HZS06]{huang2006extreme}
Guang-Bin Huang, Qin-Yu Zhu, and Chee-Kheong Siew, \emph{Extreme learning
  machine: theory and applications}, Neurocomputing \textbf{70} (2006),
  no.~1-3, 489--501.

\bibitem[JGH18]{jacot2018neural}
Arthur Jacot, Franck Gabriel, and Cl{\'e}ment Hongler, \emph{Neural tangent
  kernel: Convergence and generalization in neural networks}, Advances in
  neural information processing systems, 2018, pp.~8571--8580.

\bibitem[LCM20]{liao2020random}
Zhenyu Liao, Romain Couillet, and Michael~W Mahoney, \emph{A random matrix
  analysis of random fourier features: beyond the gaussian kernel, a precise
  phase transition, and the corresponding double descent}, Advances in Neural
  Information Processing Systems \textbf{33} (2020), 13939--13950.

\bibitem[LL18]{li2018learning}
Yuanzhi Li and Yingyu Liang, \emph{Learning overparameterized neural networks
  via stochastic gradient descent on structured data}, Advances in Neural
  Information Processing Systems, 2018, pp.~8157--8166.

\bibitem[LLC18]{louart2018random}
Cosme Louart, Zhenyu Liao, and Romain Couillet, \emph{A random matrix approach
  to neural networks}, The Annals of Applied Probability \textbf{28} (2018),
  no.~2, 1190--1248.

\bibitem[LR{\etalchar{+}}20]{liang2020just}
Tengyuan Liang, Alexander Rakhlin, et~al., \emph{Just interpolate: Kernel
  ``ridgeless'' regression can generalize}, Annals of Statistics \textbf{48}
  (2020), no.~3, 1329--1347.

\bibitem[LXS{\etalchar{+}}19]{lee2019wide}
Jaehoon Lee, Lechao Xiao, Samuel Schoenholz, Yasaman Bahri, Roman Novak, Jascha
  Sohl-Dickstein, and Jeffrey Pennington, \emph{Wide neural networks of any
  depth evolve as linear models under gradient descent}, Advances in neural
  information processing systems \textbf{32} (2019), 8572--8583.

\bibitem[LY22]{lu2022equivalence}
Yue~M Lu and Horng-Tzer Yau, \emph{An equivalence principle for the spectrum of
  random inner-product kernel matrices}, arXiv preprint arXiv:2205.06308
  (2022).

\bibitem[Mai99]{maiorov1999best}
Vitaly~E Maiorov, \emph{On best approximation by ridge functions}, Journal of
  Approximation Theory \textbf{99} (1999), no.~1, 68--94.

\bibitem[Mha96]{mhaskar1996neural}
Hrushikesh~N Mhaskar, \emph{Neural networks for optimal approximation of smooth
  and analytic functions}, Neural computation \textbf{8} (1996), no.~1,
  164--177.

\bibitem[Mis22]{misiakiewicz2022spectrum}
Theodor Misiakiewicz, \emph{Spectrum of inner-product kernel matrices in the
  polynomial regime and multiple descent phenomenon in kernel ridge
  regression}, arXiv preprint arXiv:2204.10425 (2022).

\bibitem[MM22]{mei2022generalizationRF}
Song Mei and Andrea Montanari, \emph{The generalization error of random
  features regression: Precise asymptotics and the double descent curve},
  Communications on Pure and Applied Mathematics \textbf{75} (2022), no.~4,
  667--766.

\bibitem[MMM22]{mei2022generalization}
Song Mei, Theodor Misiakiewicz, and Andrea Montanari, \emph{Generalization
  error of random feature and kernel methods: hypercontractivity and kernel
  matrix concentration}, Applied and Computational Harmonic Analysis
  \textbf{59} (2022), 3--84.

\bibitem[MRH{\etalchar{+}}18]{matthews2018gaussian}
Alexander G de~G Matthews, Mark Rowland, Jiri Hron, Richard~E Turner, and
  Zoubin Ghahramani, \emph{Gaussian process behaviour in wide deep neural
  networks}, arXiv preprint arXiv:1804.11271 (2018).

\bibitem[MS22]{montanari2022universality}
Andrea Montanari and Basil~N Saeed, \emph{Universality of empirical risk
  minimization}, Conference on Learning Theory, PMLR, 2022, pp.~4310--4312.

\bibitem[MVSS20]{muthukumar2020harmless}
Vidya Muthukumar, Kailas Vodrahalli, Vignesh Subramanian, and Anant Sahai,
  \emph{Harmless interpolation of noisy data in regression}, IEEE Journal on
  Selected Areas in Information Theory \textbf{1} (2020), no.~1, 67--83.

\bibitem[MWW{\etalchar{+}}20]{ma2020towards}
Chao Ma, Stephan Wojtowytsch, Lei Wu, et~al., \emph{Towards a mathematical
  understanding of neural network-based machine learning: what we know and what
  we don't}, arXiv preprint arXiv:2009.10713 (2020).

\bibitem[MZ22]{montanari2022interpolation}
Andrea Montanari and Yiqiao Zhong, \emph{The interpolation phase transition in
  neural networks: Memorization and generalization under lazy training}, The
  Annals of Statistics \textbf{50} (2022), no.~5, 2816--2847.

\bibitem[Nea95]{neal1995bayesian}
Radford~M Neal, \emph{Bayesian learning for neural networks}, Ph.D. thesis,
  Citeseer, 1995.

\bibitem[Ng00]{ng2000cs229}
Andrew Ng, \emph{Cs229 lecture notes}, CS229 Lecture notes \textbf{1} (2000),
  no.~1, 1--3.

\bibitem[NXL{\etalchar{+}}18]{novak2018bayesian}
Roman Novak, Lechao Xiao, Jaehoon Lee, Yasaman Bahri, Greg Yang, Jiri Hron,
  Daniel~A Abolafia, Jeffrey Pennington, and Jascha Sohl-Dickstein,
  \emph{Bayesian deep convolutional networks with many channels are gaussian
  processes}, arXiv preprint arXiv:1810.05148 (2018).

\bibitem[Pin99]{pinkus1999approximation}
Allan Pinkus, \emph{Approximation theory of the mlp model in neural networks},
  Acta numerica \textbf{8} (1999), 143--195.

\bibitem[PW17]{pennington2017nonlinear}
Jeffrey Pennington and Pratik Worah, \emph{Nonlinear random matrix theory for
  deep learning}, Advances in Neural Information Processing Systems, 2017,
  pp.~2637--2646.

\bibitem[RR08a]{rahimi2008random}
Ali Rahimi and Benjamin Recht, \emph{Random features for large-scale kernel
  machines}, Advances in neural information processing systems, 2008,
  pp.~1177--1184.

\bibitem[RR08b]{rahimi2008weighted}
\bysame, \emph{Weighted sums of random kitchen sinks: Replacing minimization
  with randomization in learning}, Advances in neural information processing
  systems \textbf{21} (2008).

\bibitem[RR17]{rudi2017generalization}
Alessandro Rudi and Lorenzo Rosasco, \emph{Generalization properties of
  learning with random features}, Advances in neural information processing
  systems \textbf{30} (2017).

\bibitem[SH20]{schmidt2020nonparametric}
Johannes Schmidt-Hieber, \emph{Nonparametric regression using deep neural
  networks with relu activation function}, Annals of statistics \textbf{48}
  (2020), no.~4, 1875--1897.

\bibitem[Sze39]{szeg1939orthogonal}
Gabor Szeg, \emph{Orthogonal polynomials}, vol.~23, American Mathematical Soc.,
  1939.

\bibitem[Tro12]{tropp2012user}
Joel~A Tropp, \emph{User-friendly tail bounds for sums of random matrices},
  Foundations of computational mathematics \textbf{12} (2012), no.~4, 389--434.

\bibitem[Vap99]{vapnik1999overview}
Vladimir~N Vapnik, \emph{An overview of statistical learning theory}, IEEE
  transactions on neural networks \textbf{10} (1999), no.~5, 988--999.

\bibitem[Ver18]{vershynin2018high}
Roman Vershynin, \emph{High-dimensional probability: {An} introduction with
  applications in data science}, Cambridge University Press, 2018.

\bibitem[Wai19]{wainwright2019high}
Martin~J Wainwright, \emph{High-dimensional statistics: A non-asymptotic
  viewpoint}, vol.~48, Cambridge university press, 2019.

\bibitem[XHM{\etalchar{+}}22]{xiao2022precise}
Lechao Xiao, Hong Hu, Theodor Misiakiewicz, Yue Lu, and Jeffrey Pennington,
  \emph{Precise learning curves and higher-order scalings for dot-product
  kernel regression}, Advances in Neural Information Processing Systems
  \textbf{35} (2022), 4558--4570.

\bibitem[ZBH{\etalchar{+}}21]{zhang2021understanding}
Chiyuan Zhang, Samy Bengio, Moritz Hardt, Benjamin Recht, and Oriol Vinyals,
  \emph{Understanding deep learning (still) requires rethinking
  generalization}, Communications of the ACM \textbf{64} (2021), no.~3,
  107--115.

\end{thebibliography}

\clearpage

\appendix

\section{Spherical Harmonics and Gegenbauer
Polynomials}
\label{sec:Spherical-Harmonics}

In this appendix, we give a brief overview of some properties of the
spherical harmonics and Gegenbauer polynomials that are frequently
used in our analysis. We refer the reader to \cite{szeg1939orthogonal,chihara2011introduction,dai2013approximation} for more in-depth expositions.

Recall that we denote $\S^{d-1} (r) = \{ \bx \in \R^d : \| \bx \|_2 = r \}$ the sphere of radius $r$ in $\R^d$. Throughout this paper, we choose the normalization $\bx \in \S^{d-1} (\sqrt{d})$ for the covariates and $\bw \in \S^{d-1} (1)$ for the weights, such that the input $\< \bw , \bx \>$ to the activation function is of order $1$. Without loss of generality, we will define the spherical harmonics and Gegenbauer polynomials on $\S^{d-1} (\sqrt{d})$, and simply rescale the inputs $Y_{ks} (\sqrt{d} \cdot \bw)$ and $q_k (\sqrt{d} \cdot \< \bw, \bw'\>)$ for $\bw,\bw' \in \S^{d-1} (1)$.

\paragraph{Spherical harmonics:}
Spherical harmonics are defined as homogeneous harmonic polynomials restricted
to $\S^{d-1}(\sqrt{d})$. Namely, a polynomial $P(\bx)$ with $\bx\in \S^{d-1}(\sqrt{d})$
is a spherical harmonic if and only if (1) $P(t\bx)=t^{k}P(\bx)$,
for any $t\in\R$, (2) $\Delta P=0$, where $\Delta$ is the Laplace
operator. Let $V_{d,k}$ be the space of all degree-$k$
spherical harmonics in $d$ dimension. We denote $N_k := \dim(V_{d,k})$, which is given by
\begin{equation}
N_k=\begin{cases}
1 & k=0\, ,\\
d & k=1\, ,\\
\frac{d+2k-2}{k}{d+k-3 \choose k-1} & k\geq2 \, .
\end{cases}
\label{eq:usdim_def}
\end{equation}
In particular, we have $|N_k/{d \choose k}-1|\leq\frac{C_{k}}{d}$,
where $C_{k}>0$ is some constant that only depends on $k$, i.e.,
$N_k$ is approximately equal to the binomial coefficients $d \choose k$ up
to an $O(d^{-1})$ correction. For each $V_{d,k}$, $k\geq0$,  we choose an orthonormal basis of spherical harmonics that we denote $\{Y_{ks}(\bx)\}_{s=1}^{N_k}$, so that
\begin{equation}
\int_{\S^{d-1}}Y_{ks}(\bx)Y_{ks'}(\bx)\tau_{d}(\de\bx)=\charfn_{s=s'}\, .\label{eq:orthogonal_Y}
\end{equation}
where $\tau_{d}$ denotes the uniform distribution over $\S^{d-1}(\sqrt{d})$.

\paragraph{Gegenbauer polynomials:} Gegenbauer polynomials $\{q_k\}_{k=0}^{\infty}$ are orthonormal polynomials on $L^{2}([-\sqrt{d},\sqrt{d}],\tau_{d,1})$, i.e., $q_k$ is a degree-$k$ polynomial and
\[
\int_{-\sqrt{d}}^{\sqrt{d}}q_k(x)q_{\ell}(x)\tau_{d,1}(\de x)=\charfn_{k=\ell} \, ,
\]
where $\tau_{d,1}$ is the distribution of $\< \bx , \be_{1}\>$
with $\bx\sim\tau_{d}$ and $\be_1$ an arbitrary unit vector. Note that $\tau_{d,1}$ has the explicit form
$\tau_{d,1}(\de x)=\frac{\omega_{d-2}}{\sqrt{d}\omega_{d-1}}(1-x^{2}/d)^{\frac{d-3}{2}}\de x$,
where $\omega_{d-1}=\frac{2\pi^{d/2}}{\Gamma(d/2)}$ is the surface
area of $\mathcal{S}^{d-1}$. The moments of $\tau_{d,1}$
are given by
\begin{equation}
\E_{\tau_{d,1}}X^{m}=\begin{cases}
0 & m=2k+1\, ,\\
\frac{(2k-1)!!}{\prod_{0\leq i<k}(1+2i/d)} & m=2k\, .
\end{cases}\label{eq:moments_of_usp}
\end{equation}
Using Eq.~\eqref{eq:moments_of_usp}, we can explicitly write out the
first three $q_k(x)$ via the Gram-Schmidt procedure:
\begin{align*}
q_0(x) & =1 \, , & q_1(x) & =x\, , & q_2(x) & =\frac{1}{\sqrt{2}}\sqrt{\frac{d+2}{d-1}}(x^{2}-1)\, .
\end{align*}

\paragraph{Addition theorem:} A crucial property of Gegenbauer polynomials is the following correspondence between   the degree-$k$ Gegenbauer polynomial $q_k(x)$ and the degree-$k$ spherical harmonics $\{Y_{ks}(\bx)\}_{s=1}^{N_k}$. For any $\bx,\bx'\in\S^{d-1}(\sqrt{d})$, we have
\begin{equation}
q_k(\< \bx , \bx' \>/\sqrt{d})=\frac{1}{\sqrt{N_k}}\sum_{i=1}^{N_k}Y_{ks}(\bx)Y_{ks}(\bx')\, ,\label{eq:q_Y_relation}
\end{equation}
which is also known as the addition theorem. 
In particular, for any $\bx\in\S^{d-1}$,
\begin{equation}
\frac{1}{\sqrt{N_k}}\sum_{i=1}^{N_k}Y_{ks}(\bx)^{2}=q_k(\sqrt{d})=\sqrt{N_k}\, ,\label{eq:diagonal_Y_constant}
\end{equation}
which follows from the fact that for any $\bx\in\S^{d-1}$,
\begin{align*}
q_k(\sqrt{d}) & =q_k(\|\bx\|^{2}_2/\sqrt{d})\\
 & =\int_{\S^{d-1}}q_k(\|\bx\|^{2}_2/\sqrt{d})\tau_{d}(\de \bx)\\
 & \teq{\text{(a)}}\frac{1}{\sqrt{N_k}}\sum_{s=1}^{N_k}\int_{\S^{d-1}}Y_{ks}(\bx)^{2}\tau_{d}(\de \bx)\\
 & \teq{\text{(b)}}\sqrt{N_k},
\end{align*}
where (a) follows from Eq.~\eqref{eq:q_Y_relation} and (b) follows from
Eq.~\eqref{eq:orthogonal_Y}.

As a consequence of the addition theorem, consider a function $f : [-\sqrt{d},\sqrt{d}] \to \R$ whose decomposition in the orthonormal Gegenbauer polynomial basis is given by
\[
f ( x ) = \sum_{k=0}^\infty \varsigma_k q_k (x) \, , \qquad \varsigma_k = \int_{-\sqrt{d}}^{\sqrt{d}} f(x) q_k (x) \tau_{d,1} (\de x)\, ,
\]
then the function $\Tilde{f} : \S^{d-1} (1) \times \S^{d-1} (\sqrt{d}) \to \R$, $\Tilde{f}(\bw,\bx) = f(\< \bw, \bx \>)$, admits the following eigendecomposition in the (tensor product) spherical harmonic basis
\[
f ( \< \bw , \bx \> ) = \sum_{k=0}^\infty \varsigma_k \sqrt{N_k} \sum_{s \in [N_k]} Y_{ks} (\sqrt{d} \cdot \bw) Y_{ks} ( \bx ) \, .
\]

\paragraph{Hermite polynomial:} Hermite polynomials $\{\He_k\}_{k=0}^{\infty}$ are orthonormal polynomials on $L^2(\R, \tau_{\text{g}})$, where $\tau_{\text{g}}$ is the standard Gaussian measure. 
We can explicitly write out the
first three $q_k(x)$ via the Gram-Schmidt procedure:
\begin{align*}
{\He}_0(x) & =1\, , & {\He}_1(x) & =x\, , & {\He}_2(x) & =\frac{1}{\sqrt{2}}(x^{2}-1)\, .
\end{align*}

Hermite polynomials can be viewed as the limit of Gegenbauer polynomial when $d\to \infty$. Indeed, from Eq.~\eqref{eq:moments_of_usp}, we can see that for any fixed $k$, $\lim_{d\to\infty}\E_{\tau_{d-1,1}}X^{m}=\E_{\tau_{\text{g}}}X^{m}$. Hence, $\tau_{d-1,1}$ converges weakly to $\tau_{\text{g}}$. Consequently, we can show the following result (see for example \cite[Lemma 12]{lu2022equivalence}): for function $f:\R \mapsto \R$ satisfying $|f(x)|\leq c_1 e^{c_2|x|}$ for all $x \in \R$, for some constants $c_1,c_2,c_3 >0$, then it holds for any $k$ that
\begin{align}
\label{eq:q_He_coeff_conv}
\big| \E_{G\sim\tau_{\text{g}}}[f(G) {\He}_k(G)] - \E_{Z\sim\tau_{d,1}}[f(Z) q_k(Z)] \big| = o_d(1)\, .
\end{align}

\clearpage

\section{Proof of Propositions \ref{prop:concentration_on_expectation}, \ref{prop:tech_bounds_ell} and \ref{prop:high_degree_concentration}}

In this appendix, we prove the technical propositions used to simplify the expressions of the test error in Section \ref{sec:proof_outline}. In Appendix \ref{sec:concentration_on_expectation_proof}, we prove the concentration of the test error over the randomness in the target function and label noise (Proposition \ref{prop:concentration_on_expectation}). Appendix \ref{sec:non-RMT} prove the concentration to zero of the low-degree part (Proposition \ref{prop:tech_bounds_ell}). Finally, Appendix \ref{sec:high_degree_concentration_proof} prove concentration of the high-degree part (Proposition \ref{prop:high_degree_concentration}). Throughout, we will use matrix concentration results whose proofs are gathered in Appendix \ref{sec:matrix_concentration}.

\subsection{Proof of Proposition \ref{prop:concentration_on_expectation}}
\label{sec:concentration_on_expectation_proof}

We begin by decomposing $R_{\test} ( f_{*} ; \bX , \bW , \beps, \lambda)$ in \eqref{eq:decompo_Rtest} as follows
    \begin{align*}
        R_{\test} ( f_{*,d} ; \bX , \bW , \beps, \lambda) = T_1 - 2 (T_2 + T_3) + T_4 + 2 T_5 + T_6\, ,
    \end{align*}
    where 
    \begin{align*}
        T_1 =&~ \E_{\bx} [ f_{*}(\bx)^2 ] \, ,\\
        T_2 =&~ \bf^\sT \bZ \bR \bV \, ,\\
        T_3 =&~ \beps^\sT \bZ \bR \bV \, , \\
        T_4 =&~ \beps^\sT \bZ \bR \bU \bR \bZ^\sT \beps \, ,\\
        T_5 =&~ \beps^\sT \bZ \bR \bU \bR \bZ^\sT \bf \, , \\
        T_6 =&~ \bf^\sT \bZ \bR \bU \bR \bZ^\sT \bf \, .
    \end{align*}
    
    It suffices to show $\E[\Var(T_i \mid \bX,\bW)] = o_d(1)$ for each $1 \leq i \leq 6$. In what follows, we present the proof for $T_2$. The proof for the other $T_i$ follows from similar arguments and we omit them for brevity.

\paragraph*{Proof for the term $T_2$.}
    To start, we can decompose $\bf$ and $\bV$ into low-degree and high-degree parts $\bf = \bf_{<\ell} + \bf_{\geq\ell}$ and $\bV = \bV_{<\ell} + \bV_{\geq\ell}$, where
    \[
    \bf_{<\ell} = \bPsi_{< \ell} \bbeta^*_{d}\, , \qquad \bf_{\geq\ell} = \sum_{k\geq\ell} \bPsi_{k} \tbbeta_{d,k}\, , \qquad \bV_{<\ell} = \frac{1}{\sqrt{p} } \bPhi_{<\ell} \bD_{<\ell} \bbeta^*_{d}\, , \qquad \bV_{\geq\ell} = \frac{1}{\sqrt{p} } \sum_{k\geq\ell} \xi_k \bPhi_{k} \tbbeta_{d,k} \, .
    \]
    By Cauchy-Schwarz inequality, we have
    \begin{equation}
    \label{eq:T2_varbd}
        \begin{aligned}
           \E [\Var(T_2 \mid \bX, \bW)] \leq& 3\big\{~ \E[\Var(\bf_{\geq \ell}^\sT \bZ \bR \bV_{<\ell} \mid \bX, \bW)] + \E[\Var(\bf_{<\ell}^\sT \bZ \bR \bV_{\geq\ell} \mid \bX, \bW)] \\
           &\hspace{1em}+ \E[\Var(\bf_{\geq\ell}^\sT \bZ \bR \bV_{\geq\ell} \mid \bX, \bW)] ~\big\} \, ,
        \end{aligned}
    \end{equation}
    where we used that $\bbeta^*_{d}$ is deterministic. Thus, it suffices to bound the three terms on the right-hand side separately. 
    
     For the first term in Eq.~\eqref{eq:T2_varbd}, we can write
    \begin{align}
    \label{eq:T2_firstterm_varbd}
        \Var(\bf_{\geq \ell}^\sT \bZ \bR \bV_{<\ell} \mid \bX, \bW) &= \frac{1}{p} (\bbeta^*_{d})^{\sT} \bD_{<\ell} \bPhi_{<\ell}^{\sT} \bR \bZ^\sT \bH_F \bZ \bR \bPhi_{<\ell} \bD_{<\ell} \bbeta^*_{d}\, ,
    \end{align}
    where we recall that we denote $\bH_F = \E_{f_*} [ \bf \bf^\sT ] = \sum_{k = \ell}^\infty F_{k}^2 \bQ_k^\bX$.
    By Eq.~\eqref{eq:low_f_b2} in Proposition \ref{prop:tech_bounds_ell} and switching the role of $\bX$ and $\bW$, we have
    \begin{align}
    \label{eq:T2_firstterm_varbd_1}
        \frac{1}{p} \E \opnorm{ \bPhi_{<\ell}^{\sT} \bR \bZ^\sT \bH_F \bZ \bR \bPhi_{<\ell} } = o_{d}(1) \, .
    \end{align}
    Besides, $\|\bbeta_{d}^{*}\|_2^2 = O_d(1)$ by Assumption \ref{ass:random_f_star}.(a), and $\opnorm{\bD_{<\ell}} = O_d(1)$ by the assumption that $\| \sigma \|_{L^2} < \infty$. Therefore, together with Eq.~\eqref{eq:T2_firstterm_varbd} we deduce that
    \begin{align}
    \label{eq:T2_firstterm_varbd_00}
        \E[\Var(\bf_{\geq \ell}^\sT \bZ \bR \bV_{<\ell} \mid \bX, \bW)] = o_d(1) \, .
    \end{align}

    For the second term in Eq.~\eqref{eq:T2_varbd}, we have
    \begin{align}
        \Var(\bf_{\geq \ell}^\sT \bZ \bR \bV_{<\ell} \mid \bX, \bW) &= \frac{1}{p} (\bbeta^*_{d})^{\sT} \bPhi_{<\ell}^{\sT} \bZ \bR  \left( \sum_{k\geq \ell} {F_k^2} \xi_k^2 \bQ_{k}^{\bW} \right) \bR \bZ^\sT \bPhi_{<\ell} \bbeta^*_{d}\, .
    \end{align}
    Similar as Eq.~\eqref{eq:T2_firstterm_varbd_1}, we can get
    \begin{align*}
        \frac{N_{\ell} }{p} \E \left\| \bPhi_{<\ell}^{\sT} \bZ \bR  \left( \sum_{k\geq \ell} {F_k^2} \xi_k^2 \bQ_{k}^{\bW} \right) \bR \bZ^\sT \bPhi_{<\ell} \right\|_{\op} = o_{d}(1)\, ,
    \end{align*}
    where we used that $\xi_k^2 = O_d(N_\ell^{-1})$ for $k \geq \ell$. We deduce that
    \begin{align}
        \label{eq:T2_secondterm_varbd_0}
        \E[\Var(\bf_{\geq \ell}^\sT \bZ \bR \bV_{<\ell} \mid \bX, \bW)] = o_d(1).
    \end{align}
    
    Finally, for the third term in Eq.~\eqref{eq:T2_varbd}, we can first apply Lemma C.8 in \cite{mei2022generalizationRF} to get
    \begin{equation}
    \label{eq:var_T2_highdegree_bd}
        \begin{aligned}
            \Var(\bf_{\geq \ell}^\sT \bZ \bR \bV_{\geq \ell} \mid \bX, \bW) 
            \leq&~ 
            C \sum_{k\geq\ell} \frac{\xi_k^2 F_k^4}{p} \left[ \Tr( \bQ_{k}^{\bX} \bZ \bR \bQ_{k}^{\bW} \bR \bZ^\sT )  + \Tr( \tfrac{\bPhi_{k} \bPsi_{k}^\sT}{N_k} \bZ \bR \tfrac{\bPhi_{k} \bPsi_{k}^\sT}{N_k} \bZ \bR )\right] \\
            &~ + \sum_{ \substack{k\neq k' \\ k,k'\geq\ell} } \frac{ \xi_{k'}^2 F_k^2 F_{k'}^2 }{{p} } \Tr( \bQ_{k}^{\bX} \bZ \bR \bQ_{k'}^{\bW} \bR \bZ^\sT   )  \, .
        \end{aligned}
    \end{equation}    
    Then based on Eq.~\eqref{eq:var_T2_highdegree_bd}, we show that 
    \begin{align}
    \label{eq:T2_highdegree_smallvariance}
        \E [ \Var(\bf_{\geq \ell}^\sT \bZ \bR \bV_{\geq \ell} \mid \bX, \bW) ] = \cO(d^{-\ell}) \, .
    \end{align}
    To see this, we discuss over three different cases:

    (i) $\kappa_1 = \kappa_2$. 
    From Eq.~\eqref{eq:var_T2_highdegree_bd} and using Lemma \ref{lem:inprodmtx_supopnorm_bd}, we can get
     \begin{align}
        &~\E [\Var(\bf_{\geq \ell}^\sT \bZ \bR \bV_{\geq \ell} \mid \bX, \bW) ] \nonumber\\
        \leq&~ 
         \sum_{k\geq\ell} \frac{2 C \xi_k^2 F_k^4}{\lambda} \E (\opnorm{\bQ_{k}^{\bX}} \opnorm{\bQ_{k}^{\bW}}) 
         + \sum_{k \geq \ell}  \sum_{k' \neq k}  \frac{\xi_{k'}^2  F_k^2 F_{k'}^2}{\lambda} \E (\opnorm{ \bQ_{k}^{\bX} } \opnorm{ \bQ_{k'}^{\bW} }) \nonumber \\
         \leq& C' \Big( \sum_{k\geq\ell} \xi_k^2 F_k^4 + \sum_{k\geq\ell} F_k^2 \cdot \sum_{k'\geq\ell} \xi_{k'}^2 F_{k'}^2 \Big) \nonumber \\
         =&~ \cO(d^{-\ell}) \, ,
    \end{align}  
    where in the last step, we use Assumption \ref{ass:random_f_star}, which states that $\sum_{k\geq\ell}F_{k}^2 < \infty$ and $\sum_{k\geq\ell}\xi_{k}^2 = \cO(d^{-\ell})$, which follows from the fact $|\sqrt{N_k}\xi_k - \mu_k| = o_d(1)$ (due to \eqref{eq:q_He_coeff_conv} in Appendix \ref{sec:Spherical-Harmonics}), the fact that $N_k \asymp d^{k}$ (c.f. \eqref{eq:usdim_def} in Appendix \ref{sec:Spherical-Harmonics}) and $\sum_{k\geq\ell} \mu_{k}^2 < \infty$ by Assumption \ref{ass:sigma} (a).   
    
    (ii) If $\kappa_1 < \kappa_2$, then by the same argument leading to \eqref{eq:under_chi2_concentrate} in Proposition \ref{prop:high_degree_concentration}, we can get
    \begin{align}
        \label{eq:T2_highdegree_approx_1}
        \E \Big| \frac{1 }{p} \Tr\big(  \bQ_{k}^{\bX} \bZ \bR  \bQ_{k'}^{\bW} \bR \bZ^\sT \big) - \frac{\mu_k^2}{N_k} \Tr ( \bQ_{k}^{\bW} \bG_{\geq \ell}^{\bW} \bQ_{k'}^{\bW} \bG_{\geq \ell}^{\bW} ) \Big| &= o_d(1) \, , \\
        \label{eq:T2_highdegree_approx_2}
        \E \Big| \frac{1 }{p N_k^2} \Tr( {\bPhi_{k} \bPsi_{k}^\sT} \bZ \bR {\bPhi_{k} \bPsi_{k}^\sT} \bZ \bR ) -  \frac{\mu_k^2}{N_k} \Tr ( \bQ_{k}^{\bW} \bG_{\geq \ell}^{\bW} \bQ_{k}^{\bW} \bG_{\geq \ell}^{\bW} ) \Big| &= o_d(1) \, ,
    \end{align}
    where $\bG_{\geq \ell}^{\bW} = ( \mu_\ell^2 \bQ_{\ell}^{\bW} + \mu_{>\ell}^2 \id_p )^{-1}$. Then substituting \eqref{eq:T2_highdegree_approx_1} and \eqref{eq:T2_highdegree_approx_2} into Eq.~\eqref{eq:var_T2_highdegree_bd}, we can get
    \begin{align}
        \E [\Var(\bf_{\geq \ell}^\sT \bZ \bR \bV_{\geq \ell} \mid \bX, \bW)] \leq&~ C \sum_{k\geq\ell} \frac{\xi_k^2 \mu_{k}^2 F_k^4 }{N_k} \E \big[ \Tr ( \bQ_{k}^{\bW} \bG_{\geq \ell}^{\bW} \bQ_{k'}^{\bW} \bG_{\geq \ell}^{\bW} ) \big] \nonumber \\
        &~+ \sum_{ \substack{k\neq k' \\ k,k'\geq\ell} } \frac{ \xi_{k'}^2 \mu_{k}^2 F_k^2 F_{k'}^2 }{N_k} \E \big[  \Tr ( \bQ_{k}^{\bW} \bG_{\geq \ell}^{\bW} \bQ_{k'}^{\bW} \bG_{\geq \ell}^{\bW} ) \big] \nonumber\\
        &~+ \Big( \sum_{k\geq\ell} {\xi_k^2 F_k^4 } + \sum_{k\geq\ell} F_k^2 \cdot \sum_{k' \geq \ell }  {\xi_{k'}^2  F_{k'}^2 } \Big) \cdot o_d(1) \nonumber \\
        =&~ O_d(d^{-\ell})\, ,
    \end{align}
    where in the last step we use $\opnorm{\bG_{\geq\ell}^{\bW} } < \infty$ and 
    $\sup_{k\geq\kappa_1} \{\E \opnorm{ \bQ_{k}^{\bW} }^r \} < \infty$, for any $r \geq 0$ (by Lemma \ref{lem:inprodmtx_supopnorm_bd}). 

    (iii) If $\kappa_1 > \kappa_2$, then we can follow the similar argument as in $\kappa_1 < \kappa_2$ case to obtain Eq.~\eqref{eq:T2_highdegree_smallvariance}. The details are omitted here.
    
    Finally, substituting the bounds \eqref{eq:T2_firstterm_varbd_00}, \eqref{eq:T2_secondterm_varbd_0} and \eqref{eq:T2_highdegree_smallvariance} into Eq.~\eqref{eq:T2_varbd}, we conclude that $\E[\Var(T_2\mid\bX,\bW)] = o_d(1) $.

\subsection{Proof of Proposition \ref{prop:tech_bounds_ell}}
\label{sec:non-RMT}

\subsubsection{Technical lemma on the SVD of the feature matrix}

Before proving Proposition \ref{prop:tech_bounds_ell}, we first show the following lemma which is adapted from Proposition 6 in \cite{mei2022generalization}.

\begin{proposition}[SVD of the feature matrix $\bZ$]\label{prop:SVD_Z}
Follow the assumptions and the notations in the proof of Theorem \ref{thm:main_theorem_RF}. Recall that $n = \Theta_d ( d^{\kappa_1})$, $p = \Theta_d ( d^{\kappa_2} )$ and $\ell = \lceil \min(\kappa_1,\kappa_2) \rceil$. Denote $r = \min(n,p)$ and $s = \max (n,p)$. In particular, there exists $\delta_0 >0$ such that $r \geq d^{\ell - 1 + \delta_0}$.  Consider the singular value decomposition of $\tbZ = (\Tilde Z_{ij})_{i \in [n],j\in[p]}$ with $\Tilde Z_{ij} = \sigma ( \< \bx_i , \bw_j \>) / \sqrt{s}$:
\[
\tbZ = \bP \bLambda \bQ^\sT = [ \bP_1 , \bP_2 ] \diag ( \bLambda_1 , \bLambda_2 ) [ \bQ_1, \bQ_2 ]^\sT \in \R^{n \times p} \, , 
\]
where $\bP \in \R^{n \times r}$ and $\bQ \in \R^{p \times r}$, and $\bP_1 \in \R^{n \times N_{< \ell} }$ and $\bQ_1 \in \R^{p \times N_{< \ell} }$ correspond to the left and right singular vectors associated to the largest $N_{< \ell}$ singular values $\bLambda_1$. Similarly, $\bP_2 \in \R^{n \times (r -N_{< \ell})}$ and $\bQ_2 \in \R^{p \times (r - N_{< \ell})}$ correspond to the left and right singular vectors associated to the last $(r - N_{< \ell})$ smallest singular values $\bLambda_2$.

Then the singular value decomposition has the following properties:
\begin{itemize}
    \item[(a)] 
    There exists $K>0$ such that for any $D>0$ and all large enough $d$,
    \begin{equation}
        \P \big( \sigma_{\min} (\bLambda_1) < K \sqrt{r d^{-(\ell-1)}} \big) \leq d^{-D}\, ,
    \end{equation}
    where $\sigma_{\min} (\bLambda_1) = \min_{i \in [N_{<\ell}]} \sigma_i (\Tilde \bZ)$.
    
    \item[(b)] The left and right singular vectors associated to the $(r - N_{<\ell})$ smallest signular values verify:
    \begin{equation}
    \frac{1}{\sqrt{n}} \| \bPsi_{< \ell}^\sT \bP_2 \|_{\op} = O_{d,\prec} \big(1/ \sqrt{r d^{-(\ell-1)}}\big) \, , \qquad \frac{1}{\sqrt{p}} \| \bPhi_{< \ell}^\sT \bQ_2 \|_\op = O_{d,\prec} \big(1/ \sqrt{r d^{-(\ell-1)}}\big) \, .
    \end{equation}
\end{itemize}
\end{proposition}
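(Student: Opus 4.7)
The plan is to treat $\tbZ$ as a low-rank plus noise decomposition. Write $\tbZ = \tbZ_{<\ell} + \tbZ_{\geq\ell}$ with
\begin{equation*}
\tbZ_{<\ell} = \frac{1}{\sqrt{s}}\bPsi_{<\ell}\bD_{<\ell}\bPhi_{<\ell}^\sT, \qquad \tbZ_{\geq\ell} = \frac{1}{\sqrt{s}}\sum_{k\geq\ell}\xi_k\bPsi_k\bPhi_k^\sT.
\end{equation*}
The matrix $\tbZ_{<\ell}$ has rank at most $N_{<\ell}$, and I will show its nonzero singular values are all of order at least $\sqrt{rd^{-(\ell-1)}}$, while the high-degree residual $\tbZ_{\geq\ell}$ has operator norm of order $1$ with high probability. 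These two facts, combined with Weyl's inequality, will yield part (a); they will also be the key inputs for part (b).

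For the lower bound on the nonzero singular values of $\tbZ_{<\ell}$, I will first invoke concentration for the matrices of spherical harmonics (using hypercontractivity/matrix Bernstein bounds of the type gathered in Appendix \ref{sec:matrix_concentration}): since $N_{<\ell} = O(d^{\ell-1})$ and $n, p \geq d^{\ell-1+\delta_0}$, the Gram matrices $\bPsi_{<\ell}^\sT \bPsi_{<\ell}/n$ and $\bPhi_{<\ell}^\sT \bPhi_{<\ell}/p$ are $\id_{N_{<\ell}} + o_{d,\prec}(1)$ in operator norm. Consequently, with high probability $\sigma_{\min}(\bPsi_{<\ell}) \geq c\sqrt{n}$ and $\sigma_{\min}(\bPhi_{<\ell}) \geq c\sqrt{p}$. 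Since $\bD_{<\ell} = \diag(\xi_0\id, \ldots, \xi_{\ell-1}\id)$ with $|\xi_k| \asymp d^{-k/2}$ by $|\sqrt{N_k}\xi_k - \mu_k| = o_d(1)$ and Assumption \ref{ass:sigma}.(b), the smallest diagonal entry is of order $d^{-(\ell-1)/2}$. Putting these together gives $\sigma_{N_{<\ell}}(\tbZ_{<\ell}) \geq c'\sqrt{np/s}\cdot d^{-(\ell-1)/2} = c'\sqrt{rd^{-(\ell-1)}}$. The bound $\|\tbZ_{\geq\ell}\|_{\op} = O_{d,\prec}(1)$ follows from the polynomial-scaling operator-norm estimates already developed for the high-degree part (the matrix $\tbZ_{\geq\ell}$ behaves like a mean-zero random matrix with entries of order $O(1)$ on an $n \times p$ grid, giving $\|\tbZ_{\geq\ell}\|_{\op} \lesssim \sqrt{\max(n,p)}/\sqrt{s} = O(1)$). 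Since $\sqrt{rd^{-(\ell-1)}} \gg 1$ by the assumption $r \geq d^{\ell-1+\delta_0}$, Weyl's inequality delivers part (a).

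For part (b), I will use the identity $\bP_2^\sT \tbZ = \bLambda_2 \bQ_2^\sT$, which after substituting the decomposition becomes
\begin{equation*}
\frac{1}{\sqrt{s}}\bP_2^\sT \bPsi_{<\ell}\bD_{<\ell}\bPhi_{<\ell}^\sT = \bLambda_2 \bQ_2^\sT - \bP_2^\sT \tbZ_{\geq\ell}.
\end{equation*}
By Weyl applied to $\tbZ = \tbZ_{<\ell} + \tbZ_{\geq\ell}$, the top singular value of $\bLambda_2$ (the $(N_{<\ell}+1)$-th singular value of $\tbZ$) is bounded by $\|\tbZ_{\geq\ell}\|_{\op}$, so the right-hand side has operator norm $O_{d,\prec}(1)$. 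I then right-multiply by $\bPhi_{<\ell}(\bPhi_{<\ell}^\sT\bPhi_{<\ell})^{-1}$ and use $\bD_{<\ell}^{-1}$ on the left to isolate $\bP_2^\sT \bPsi_{<\ell}$. Using $\|\bD_{<\ell}^{-1}\|_{\op} = O(d^{(\ell-1)/2})$ and the earlier estimate $\|\bPhi_{<\ell}(\bPhi_{<\ell}^\sT\bPhi_{<\ell})^{-1}\|_{\op} \leq 1/\sigma_{\min}(\bPhi_{<\ell}) = O_{d,\prec}(1/\sqrt{p})$, we get
\begin{equation*}
\frac{1}{\sqrt{n}}\|\bPsi_{<\ell}^\sT \bP_2\|_{\op} = O_{d,\prec}\!\left(\sqrt{\tfrac{s}{np}}\cdot d^{(\ell-1)/2}\right) = O_{d,\prec}\!\left(1/\sqrt{rd^{-(\ell-1)}}\right),
\end{equation*}
which is the desired bound; the bound for $\bPhi_{<\ell}^\sT \bQ_2$ follows by the symmetric argument (multiply $\tbZ \bQ_2 = \bP_2 \bLambda_2$ on the right by suitable matrices).

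The main technical obstacle is providing a sufficiently sharp operator-norm bound for $\tbZ_{\geq\ell}$ in the polynomial scaling, since the entries are not independent subgaussians but polynomials of correlated spherical harmonics. I will rely on the decomposition of the high-degree kernel into Gegenbauer components and the matrix concentration estimates in Appendix \ref{sec:matrix_concentration} (in particular, Lemma \ref{lem:inprodmtx_supopnorm_bd} and its analogues for the asymmetric matrix $\bPsi_k\bPhi_k^\sT$), which give $\|\bPsi_k\bPhi_k^\sT/\sqrt{N_k^2}\|_{\op} = O_{d,\prec}(\sqrt{\max(n,p)})$ at each degree $k\geq\ell$ together with sufficiently fast decay in $k$ to sum. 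Everything else in the argument is essentially linear algebra and Weyl-type inequalities.
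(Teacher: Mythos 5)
Your proposal is correct, and for part (a) it is essentially identical to the paper's argument: both lower-bound $\sigma_{N_{<\ell}}(\tbZ_{<\ell})$ by $\sigma_{\min}(\bD_{<\ell})\,\sigma_{\min}(\bPsi_{<\ell})\,\sigma_{\min}(\bPhi_{<\ell})/\sqrt{s} \gtrsim \sqrt{r d^{-(\ell-1)}}$ using the Gram-matrix concentration of Lemma~\ref{lem:lowdeg_concentration} together with $|\xi_k|\asymp d^{-k/2}$, and then invoke Weyl with $\|\tbZ_{\geq\ell}\|_{\op}=O_{d,\prec}(1)$ (the paper's Lemma~\ref{lem:featuremtx_opnormbd} is the precise reference for that operator-norm bound, rather than Lemma~\ref{lem:inprodmtx_supopnorm_bd}, but you correctly identified it as the one genuinely nontrivial input and deferred to the appendix for it, exactly as the paper does). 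For part (b) you take a genuinely different and somewhat cleaner route: from $\bP_2^\sT\tbZ=\bLambda_2\bQ_2^\sT$ you isolate $\bP_2^\sT\bPsi_{<\ell}$ exactly by right-multiplying by $\bPhi_{<\ell}(\bPhi_{<\ell}^\sT\bPhi_{<\ell})^{-1}\bD_{<\ell}^{-1}$ (valid on the high-probability event where $\bPhi_{<\ell}^\sT\bPhi_{<\ell}\succeq \tfrac{p}{2}\id$), and the bound follows from $\|\bLambda_2\|_{\op}=O_{d,\prec}(1)$, $\|\bD_{<\ell}^{-1}\|_{\op}=O(d^{(\ell-1)/2})$ and $\sigma_{\min}(\bPhi_{<\ell})\gtrsim\sqrt{p}$; the exponents check out since $s\,d^{\ell-1}/(np)=d^{\ell-1}/r$. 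The paper instead works with the quadratic form $\bu^\sT\bLambda_2^2\bu$ for the leading singular vector $\bu$ of $\bPsi_{<\ell}^\sT\bP_2$, expands $\tbZ\tbZ^\sT$ and extracts $\|\tbD_{<\ell}\tbPsi_{<\ell}^\sT\bP_2\bu\|_2=O_{d,\prec}(1)$ from the resulting inequality. Your version buys a shorter, purely algebraic derivation at the cost of explicitly inverting $\bPhi_{<\ell}^\sT\bPhi_{<\ell}$; the paper's version avoids that inversion but needs the slightly more delicate cross-term bookkeeping. Both yield the same rate, and both ultimately rest on the same two probabilistic inputs.
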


The feature matrix can be decomposed as $\tbZ = \tbZ_{< \ell} + \tbZ_{\geq \ell}$, where 
\[
\tbZ_{< \ell} = \bPsi_{< \ell} \bD_{< \ell} \bPhi_{< \ell}^\sT / \sqrt{s} \, , \qquad \tbZ_{\geq\ell} = \sum_{k \geq\ell} \xi_k \bPsi_k \bPhi_k^\sT / \sqrt{s} \, .
\]
For convenience, we introduce the following normalized matrices:
\[
\tbPsi_{<\ell} = \bPsi_{< \ell} / \sqrt{n} \, , \qquad \tbPhi_{< \ell} = \bPhi_{< \ell} / \sqrt{p} \, , \qquad \tbD_{< \ell} = \sqrt{r} \bD_{< \ell}\, .
\]

\begin{proof}[Proof of Proposition \ref{prop:SVD_Z}]
The proof follows similarly from the proof of \cite[Proposition 6]{mei2022generalization}. We will simply outline the differences. 

First, note that 
\[
\tbZ_{< \ell} \tbZ_{< \ell}^\sT = \tbPsi_{<\ell} \tbD_{< \ell} \tbPhi_{< \ell}^\sT \tbPhi_{< \ell}\tbD_{< \ell}\tbPsi_{<\ell}^\sT \succeq 
\sigma_{\min} (\tbD_{< \ell}^2) \cdot \sigma_{\min} (\tbPhi_{< \ell}^\sT \tbPhi_{< \ell}) \cdot \tbPsi_{<\ell}\tbPsi_{<\ell}^\sT \, .
\]
Hence, $\sigma_{\min} (\bZ_{< \ell}) \geq \sigma_{\min} (\tbD_{< \ell}) \sigma_{\min} (\tbPhi_{< \ell}) \sigma_{\min} (\tbPsi_{<\ell})$. Then by Lemma \ref{lem:lowdeg_concentration} and Assumption \ref{ass:sigma} (b), there exists $K>0$ such that for any $D>0$ and large enough $d$,
$$
\P( \sigma_{\min}(\tbZ_{< \ell}) < K \sqrt{ r d^{-(\ell-1)} } ) \leq d^{-D}.
$$
Furthermore, by Weyl's inequality, we have for $i \in [N_{<\ell}]$,
\[
| \sigma_i ( \tbZ ) - \sigma_i (\tbZ_{< \ell} ) | \leq \| \tbZ_{\geq \ell} \|_{\op}\, .
\]
and by Lemma \ref{lem:featuremtx_opnormbd}, we have $\| \tbZ_{\geq \ell} \|_{\op} = O_{\prec} (1)$. 
We deduce that there exists $K>0$ such that for any $D>0$ and large enough $d$,
$$
\P( \sigma_{\min} (\bLambda_1) < K \sqrt{ r d^{-(\ell-1)} } ) \leq d^{-D}.
$$
On the other hand, applying Weyl's inequality to the rest of the eigenvalues, we get 
\[
\max_{ i = r+1 , \ldots, s} \sigma_i (\tbZ) \leq \| \tbZ_{\geq \ell} \|_{\op} = O_{d,\prec} (1) \, .
\]

Let us now bound $\| \tbPsi_{<\ell}^\sT \bP_2\|_\op$. The proof for $\tbPhi_{< \ell}^\sT \bP_2 $ will follow from the same argument. Let us consider $\bu \in \R^{r - N_{< \ell}}$ the left leading right singular vector, i.e., $\| \bu \|_2 = 1$ and $\|  \tbPsi_{<\ell}^\sT \bP_2 \bu \|_2 = \| \tbPsi_{<\ell}^\sT \bP_2\|_\op$. For convenience, denote $\Tilde \bu =  \tbPsi_{<\ell}^\sT \bP_2 \bu$. We get
\begin{equation}
\begin{aligned}
\bu^\sT \bLambda_2^2 \bu =&~ \bu^\sT \bP_2^\sT \tbZ \tbZ^\sT \bP_2 \bu \\
=&~\bu^\sT \bP_2^\sT (\tbZ_{< \ell} \tbZ_{< \ell}^\sT + 2 \tbZ_{< \ell} \tbZ_{\geq \ell}^\sT + \tbZ_{\geq \ell} \tbZ_{\geq \ell}^\sT ) \bP_2 \bu \\
=&~ \tbu^\sT \tbD_{< \ell} (\id_{N_{< \ell}} + \bDelta_1) \tbD_{< \ell} \tbu + 2  \tbu^\sT \tbD_{< \ell} ( \tbPhi_{< \ell}^\sT \tbZ_{\geq \ell}^\sT \bP_2 \bu ) + \| \tbZ_{\geq \ell}^\sT \bP_2 \bu \|_2^2 \, , 
\end{aligned}
\end{equation}
where $\bDelta_1$ satisfies: for any $\varepsilon, D>0$ and all large $d$, $\P(\opnorm{\bDelta_1} \geq \varepsilon) \leq d^{-D}.$ From the above discussion, we have for any $D$ and large enough $d$,
\[
\begin{aligned}
&\bu^\sT \bLambda_2^2 \bu = O_{d,\prec} (1) \, ,\\
&\P(\tbu^\sT \tbD_{< \ell} (\id_{N_{< \ell}} + \bDelta_1) \tbD_{< \ell} \tbu \geq 2 \| \tbD_{< \ell} \tbu \|_2^2 ) \leq d^{-D}\, ,\\ 
 & \tbu^\sT \tbD_{< \ell} ( \tbPhi_{< \ell}^\sT \tbZ_{\geq \ell}^\sT \bP_2 \bu ) \geq  - \| \tbD_{< \ell} \tbu \|_2 \| \tbPhi_{< \ell} \|_\op \| \tbZ_{\geq \ell}^\sT \|_{\op} = - \| \tbD_{< \ell} \tbu \|_2 \cdot  O_{d,\prec} (1) \, , 
\end{aligned}
\]
Merging these bounds, we get
\[
\|\tbD_{< \ell} \tbu \|_2 = O_{d,\prec} (1)\, ,
\]
which implies $\| \tbu \|_2 = \| \tbPsi_{<\ell}^\sT \bP_2\|_\op = O_{d,\prec} (1/ \sqrt{r d^{-(\ell-1)}})$.

\end{proof}

\subsubsection{Proof of Proposition \ref{prop:tech_bounds_ell}}

The proof follows similarly from the proof of \cite[Proposition 7]{mei2022generalization}. For convenience, we consider the same notations as in the proof of Proposition \ref{prop:SVD_Z}. In particular, recall that $r = \min (n,p)$, $s = \max(n,p)$, $\tbZ = ( \sigma( \< \bx_i , \bw_j \>) / \sqrt{s} )_{i \in [n], j \in [p]}$, $\tbPsi_{<\ell} = \bPsi_{< \ell} / \sqrt{n}$, $\tbPhi_{< \ell} = \bPhi_{< \ell} / \sqrt{p}$ and $\tbD_{< \ell} = \sqrt{r} \bD_{< \ell}$. We further introduce $\tlambda = p \lambda / s$ and 
\[
\tbR = \frac{s}{p} \bR = ( \tbZ^\sT \tbZ + \tlambda \id_p )^{-1}\, .
\]

\noindent
\textbf{Step 1. Proof of Equation \eqref{eq:low_f_b1}.} 

With these notations, this is equivalent to showing that $\E \| \tbPsi_{<\ell}^\sT \tbZ \tbR \tbPhi_{< \ell} \tbD_{< \ell} - \id_{N_{< \ell}} \|_\op = o_{d}(1) $. We use the following identity $\tbPhi_{< \ell} \tbD_{< \ell} = \tbZ_{< \ell}^\sT (\tbPsi_{<\ell}^\sT)^\dagger = ( \tbZ - \tbZ_{\geq \ell} )^\sT (\tbPsi_{<\ell}^\sT)^\dagger$, 
so that
\begin{equation}\label{eq:PZRPD_d1}
    \tbPsi_{<\ell}^\sT \tbZ \tbR \tbPhi_{< \ell} \tbD_{< \ell} = \tbPsi_{<\ell}^\sT \tbZ \tbR \tbZ^\sT (\tbPsi_{<\ell}^\sT)^\dagger  - \tbPsi_{<\ell}^\sT \tbZ \tbR \tbZ_{\geq \ell}^\sT (\tbPsi_{<\ell}^\sT)^\dagger \, .
\end{equation}
We bound the first term using the singular value decomposition described in Proposition \ref{prop:SVD_Z}: 
\begin{align}
\tbPsi_{<\ell}^\sT \tbZ \tbR \tbZ^\sT (\tbPsi_{<\ell}^\sT)^\dagger =&~ \tbPsi_{<\ell}^\sT \bP_1 \frac{\bLambda_1^2}{\bLambda_1^2 + \tlambda}\bP_1^\sT (\tbPsi_{<\ell}^\sT)^\dagger  + \tbPsi_{<\ell}^\sT \bP_2 \frac{\bLambda_2^2}{\bLambda_2^2 + \tlambda}\bP_2^\sT (\tbPsi_{<\ell}^\sT)^\dagger  \\
=&~ \tbPsi_{<\ell}^\sT \bP_1\bP_1^\sT (\tbPsi_{<\ell}^\sT)^\dagger + \tbPsi_{<\ell}^\sT \bP_2\bP_2^\sT (\tbPsi_{<\ell}^\sT)^\dagger + \bDelta\\
=&~ \tbPsi_{<\ell}^\sT (\tbPsi_{<\ell}^\sT)^\dagger + \bDelta \\
=&~ \id_{N_{< \ell}} + \bDelta \, .\label{eq:PZRPD_b1}
\end{align}
where $\bDelta$ satisfies: $\opnorm{\bDelta} =  O_{d,\prec} ( \frac{1}{r d^{-(\ell-1)}} )$ and 
we denoted with a slight abuse of notation $\bLambda^2 / (\bLambda^2 + \tlambda) = \diag ( (\Lambda_i^2 / (\Lambda^2_i + \tlambda))_{i} )$. For the second term in Eq.~\eqref{eq:PZRPD_d1}, we similarly decompose 
\[
\tbPsi_{<\ell}^\sT \tbZ \tbR \tbZ_{\geq \ell}^\sT (\tbPsi_{<\ell}^\sT)^\dagger =  \tbPsi_{<\ell}^\sT \bP_1 \frac{\bLambda_1}{\bLambda_1^2 + \tlambda}\bQ_1^\sT \tbZ_{\geq \ell} (\tbPsi_{<\ell}^\sT)^\dagger + \tbPsi_{<\ell}^\sT \bP_2 \frac{\bLambda_2}{\bLambda_2^2 + \tlambda}\bQ_2^\sT \tbZ_{\geq \ell} (\tbPsi_{<\ell}^\sT)^\dagger \, ,
\]
where 
\[
\begin{aligned}
    \Big\Vert  \tbPsi_{<\ell}^\sT \bP_1 \frac{\bLambda_1}{\bLambda_1^2 + \tlambda}\bQ_1^\sT \tbZ_{\geq \ell} (\tbPsi_{<\ell}^\sT)^\dagger \Big\Vert_\op \leq &~ \| \tbPsi_{<\ell} \|_\op \Big\Vert \frac{\bLambda_1}{\bLambda_1^2 + \tlambda} \Big\Vert_\op \| \tbZ_{\geq \ell} \|_\op \| (\tbPsi_{<\ell}^\sT)^\dagger \|_\op,
\end{aligned}
\]
and
\[
\begin{aligned}
    \Big\Vert  \tbPsi_{<\ell}^\sT \bP_2 \frac{\bLambda_2}{\bLambda_2^2 + \tlambda}\bQ_2^\sT \tbZ_{\geq \ell} (\tbPsi_{<\ell}^\sT)^\dagger \Big\Vert_\op \leq &~ \| \tbPsi_{<\ell}^\sT \bP_2\|_\op \Big\Vert \frac{\bLambda_2}{\bLambda_2^2 + \tlambda} \Big\Vert_\op \| \tbZ_{\geq \ell} \|_\op \| (\tbPsi_{<\ell}^\sT)^\dagger \|_\op.
\end{aligned}
\]
We can control each term on the right-hand sides of two bounds above.
First, by Lemma \ref{lem:lowdeg_concentration}, we can get $\opnorm{\tbPsi_{<\ell}^\sT} = O_{d,\prec } (1)$ and $\opnorm{(\tbPsi_{<\ell}^\sT)^\dagger} = O_{d,\prec } (1)$.
Then we show that $\Big\Vert \frac{\bLambda_2}{\bLambda_2^2 + \tlambda} \Big\Vert_\op = O_{d,\prec } (1)$. On one hand, for $i > N_{<\ell}$,
\begin{align}
    |\sigma_{i}(\tbZ)| &= |\sigma_{i}(\tbZ) - \sigma_{i}(\tbZ_{<\ell})| \leq \opnorm{ \tbZ_{\geq \ell} }
    = O_{d,\prec } (1) \, ,
\end{align}
where the second step follows from  Weyl's inequality and the last step is due to Lemma \ref{lem:featuremtx_opnormbd}. This indicates $\opnorm{\bLambda_2} = O_{d,\prec } (1)$. Hence, when $\kappa_2 \leq \kappa_1$, $\Big\Vert \frac{\bLambda_2}{\bLambda_2^2 + \tlambda} \Big\Vert_\op \leq \tlambda^{-1} \opnorm{\bLambda_2} = O_{d,\prec } (1)$. On the other hand, when $\kappa_2 > \kappa_1$, by Lemma \ref{lem:smallest_eigval_RFM} there exists $C>0$ such that for any $D>0$ and all large enough $d$,
\begin{align}
    \P( \lambda_{\min}(\bLambda_2) < C  ) = \P( \lambda_{\min}(\tbZ) < C  ) \leq d^{-D} \, .
\end{align}
Hence, we still have $\Big\Vert \frac{\bLambda_2}{\bLambda_2^2 + \tlambda} \Big\Vert_\op = O_{d,\prec } (1)$. Now using Proposition \ref{prop:SVD_Z} and Lemma \ref{lem:featuremtx_opnormbd}, we can get 
\begin{align}
    \Big\Vert  \tbPsi_{<\ell}^\sT \bP_1 \frac{\bLambda_1}{\bLambda_1^2 + \tlambda}\bQ_1^\sT \tbZ_{\geq \ell} (\tbPsi_{<\ell}^\sT)^\dagger \Big\Vert_\op 
    = O_{d,\prec} \big((r d^{-(\ell-1)})^{-1/2}\big) \, ,
\end{align}
and
\begin{align}
    \Big\Vert  \tbPsi_{<\ell}^\sT \bP_2 \frac{\bLambda_2}{\bLambda_2^2 + \tlambda}\bQ_2^\sT \tbZ_{\geq \ell} (\tbPsi_{<\ell}^\sT)^\dagger \Big\Vert_\op 
    = O_{d,\prec} \big((r d^{-(\ell-1)})^{-1/2}\big) \, .
\end{align}
The above two displays indicate $\opnorm{\tbPsi_{<\ell}^\sT \tbZ \tbR \tbZ_{\geq \ell}^\sT (\tbPsi_{<\ell}^\sT)^\dagger} = O_{d,\prec} \big((r d^{-(\ell-1)})^{-1/2}\big)$.
Substituting this bound and Eq.~\eqref{eq:PZRPD_b1} in Eq.~\eqref{eq:PZRPD_d1} gives
\begin{align}
\label{eq:concentrate_2_id_1}
    \| \bPsi_{< \ell}^\sT \bZ \bR \bPhi_{<\ell} \bD_{< \ell} / \sqrt{p} - \id_{N_{< \ell}} \|_\op = O_{d,\prec} \big((r d^{-(\ell-1)})^{-1/2}\big) \, .
\end{align}
On the other hand, we can easily get the deterministic bound:
\begin{align}
    \opnorm{ \bPsi_{< \ell}^\sT \bZ \bR \bPhi_{<\ell} \bD_{< \ell} / \sqrt{p} } \leq K N_{<\ell} \sqrt{ n } \, .
\end{align}
where $K$ is some absolute constant. 
The above bound together with Eq.~\eqref{eq:concentrate_2_id_1} and the fact that $r \geq d^{\ell-1+\delta_0}$ for some $\delta_0>0$ directly implies the first bound of Proposition \ref{prop:tech_bounds_ell}.

\noindent
\textbf{Step 2. Proof of Equation \eqref{eq:low_f_b2}. }

Let $\tbU_{\geq\ell} = \frac{np}{s} \bU_{\geq\ell} $, where $\bU$ is defined in Eq.~\eqref{eq:def_U} and we have
\begin{align}
    \bPsi_{< \ell}^\sT \bZ \bR \bU_{\geq\ell} \bR \bZ^\sT \bPsi_{< \ell} = \tbPsi_{< \ell}^\sT \tbZ \tbR \tbU_{\geq\ell} \tbR \tbZ^\sT \tbPsi_{< \ell} \, .
\end{align}
Note that if $n \geq p$ 
\[
 \| \tbU_{\geq\ell} \|_{\op} = \Big\Vert \sum_{k \geq \ell} \xi_k^2 \bPhi_k \bPhi_k^\sT \Big\Vert_\op = O_{d,\prec} (1) \, ,
\]
and if $n < p$, 
\[
 \| \tbU_{\geq\ell} \|_{\op} = \frac{n}{p} \Big\Vert \sum_{k \geq \ell} \xi_k^2 \bPhi_k \bPhi_k^\sT \Big\Vert_\op = O_{d,\prec} (1) \, .
\]
We can now bound Eq.~\eqref{eq:low_f_b2} using the SVD decomposition of Proposition \ref{prop:SVD_Z} and Lemma \ref{lem:kernelmtx_opnormbd}:
\[
\begin{aligned}
    \| \tbPsi_{<\ell}^\sT \tbZ \tbR \tbU_{\geq\ell}^{1/2} \|_{\op} \leq &~ \Big \Vert \tbPsi_{<\ell}^\sT \bP_1 \frac{\bLambda_1}{\bLambda_1^2 +\tlambda} \bQ_1^\sT \tbU_{\geq\ell}^{1/2} \Big\Vert_{\op} + \Big \Vert \tbPsi_{<\ell}^\sT \bP_2 \frac{\bLambda_2}{\bLambda_2^2 +\tlambda} \bQ_2^\sT \tbU_{\geq\ell}^{1/2} \Big\Vert_{\op} \\
    =&~ O_{d,\prec} (1/ \sqrt{r d^{-(\ell-1)}}).\\
    =& O_{d,\prec} (d^{-\delta_0}) \, ,
\end{aligned}
\]
for some $\delta_0 > 0$.
Since $\opnorm{\tbU_{\geq\ell}} \leq \Trace(\tbU_{\geq\ell}) = p \sum_{k\geq\ell} \xi_k^2 N_k \leq Cp$, where $C$ is an absolute constant, $\opnorm{\tbZ \tbR} \leq \sqrt{ \tlambda^{-1} }$ and $\opnorm{\tbPsi_{<\ell}} \leq \sqrt{N_{<\ell}}$, we can get the deterministic bound: 
$$\opnorm{\bPsi_{< \ell}^\sT \bZ \bR \bU_{\geq\ell} \bR \bZ^\sT \bPsi_{< \ell}} \leq C \tlambda^{-1} N_{<\ell} p \, .$$
This together with $\opnorm{\bPsi_{< \ell}^\sT \bZ \bR \bU_{\geq\ell} \bR \bZ^\sT \bPsi_{< \ell}} = O_{d,\prec} (d^{-\delta_0})$ immediately implies Eq.~\eqref{eq:low_f_b2}.

\noindent
\textbf{Step 3. Proof of Equations \eqref{eq:low_f_b3} and \eqref{eq:low_f_b4}. }

First note that 
\[
\| \bH_F \|_\op = \Big\Vert \sum_{k \geq\ell} \frac{F_k^2}{N_k} \bPsi_k \bPsi_k^\sT \Big\Vert_\op = O_{d,\prec} (\frac{n}{N_\ell}) \, .
\]
Hence, 
\[
\begin{aligned}
   \Tr ( \bH_F \bZ \bR \bU_{< \ell } \bR \bZ^\sT )
   =&~ \frac{p}{s} \Tr ( \bH_F \tbZ  \tbR \bU_{< \ell} \tbR \tbZ^\sT ) \\
   =&~ O_{d,\prec} \left(\frac{np}{ N_\ell s } \right) \cdot  \Tr (    \bU_{< \ell} \tbR \tbZ^\sT\tbZ  \tbR ) \\
   =&~ O_{d,\prec} \left( \frac{np}{ N_\ell s} \right) \cdot  \Tr (  \bD_{< \ell}^2 \tbPhi_{< \ell}^\sT \tbR \tbPhi_{< \ell} ) \\
   =&~ O_{d,\prec} \left( \frac{np}{ N_\ell s } \right) \cdot  \Tr (  \bD_{< \ell}^2) \cdot \left( \frac{ 1 }{\tlambda + \sigma_{\min}(\bLambda_1^2)} +  \frac{ \| \tbPhi_{< \ell}^\sT \bQ_2 \|_\op^2 }{\tlambda + \sigma_{\min}(\bLambda_2^2)} \right) \\
   =&~ O_{d,\prec}(1) \cdot \left( \frac{ 1 }{\tlambda + \sigma_{\min}(\bLambda_1^2)} +  \frac{ \| \tbPhi_{< \ell}^\sT \bQ_2 \|_\op^2 }{\tlambda + \sigma_{\min}(\bLambda_2^2)} \right) \\   
   =&~ O_{d,\prec}\left(\frac{d^{\ell-1}}{r}\right) \, ,
\end{aligned}
\]
where the last step follows from Proposition \ref{prop:SVD_Z} and Lemma \ref{lem:smallest_eigval_RFM}.

On the other hand, we can show there exists some $\tau>0$ such that $|\Tr ( \bH_F \bZ \bR \bU_{< \ell } \bR \bZ^\sT )| \leq d^\tau$, for all large $d>0$. Combining the above two bounds and recall the fact that $r \asymp d^{\ell-1+\epsilon}$ for some $\epsilon>0$, we can reach at Eq.~\eqref{eq:low_f_b3}.

Equation \eqref{eq:low_f_b4} follows similarly and the details are omitted here.

\subsection{Proof of Proposition \ref{prop:high_degree_concentration}}
\label{sec:high_degree_concentration_proof}

We prove this proposition separately for each of the three regimes.

\subsubsection{Overparametrized regime $\kappa_1>\kappa_2$}

In the highly overparametrized regime, we can show:
\begin{lemma}
\label{lem:overparameterized}
When $\kappa_1 > \kappa_2$, it holds that
\begin{align}
    \Big|\chi_1 - \Tr ( \sum_{k \geq \ell} \frac{ F_k^2}{N_k}\xi_k^2 \bPsi_k \bPsi_k^\sT \bG_{\geq\ell} )\Big| &= o_{d,\P}(1) \, ,\label{Gamma_1_concentrate}\\
    \Big|\chi_2 - \Tr ( \bH_F \bG_{\geq\ell} \sum_{k\geq\ell}\xi_k^4 \bPsi_k \bPsi_k^\sT \bG_{\geq\ell} )\Big| &= o_{d,\P}\, ,(1)\label{Gamma_2_concentrate}\\
    \Big|\chi_3 - \Tr ( \bG_{\geq\ell} \sum_{k\geq\ell}\xi_k^4 \bPsi_k \bPsi_k^\sT \bG_{\geq\ell} )\Big| &= o_{d,\P}(1)\, ,
    \label{Gamma_3_concentrate}
\end{align}
where $\bG_{\geq\ell} = (\sum_{k\geq\ell}^{\infty} \xi_k^2 \bPsi_k \bPsi_k^\sT + \lambda \id_n)^{-1}.$
\end{lemma}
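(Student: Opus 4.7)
The three identities share a common structure: each $\chi_i$ involves the $p\times p$ resolvent $\bR$, whereas the targets are traces of $n\times n$ spherical-harmonic matrices only. In the overparametrized regime $p\gg n$, the natural move is to push $\bR$ through to the $n$-dimensional side via $\bZ\bR = (\bM+\lambda\id_n)^{-1}\bZ$, with $\bM := \bZ\bZ^\sT$. One must then (i)~concentrate $\bM$ around its conditional expectation $\bK := \E_\bW[\bM\mid\bX] = \bK_{<\ell}+\bK_{\geq\ell}$, where $\bK_{<\ell}=\bPsi_{<\ell}\bD_{<\ell}^2\bPsi_{<\ell}^\sT$ and $\bK_{\geq\ell}=\sum_{k\geq\ell}\xi_k^2\bPsi_k\bPsi_k^\sT$; (ii)~simplify the bilinear factors $\bZ\bZ_F^\sT$ and $\bZ\bU_{\geq\ell}\bZ^\sT$; and (iii)~strip the low-degree block $\bK_{<\ell}$ from the resolvent to reach $\bG_{\geq\ell}=(\bK_{\geq\ell}+\lambda\id_n)^{-1}$.

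Steps (i) and (ii) are matrix-concentration arguments. Since $p\gg n$, the spectral bounds gathered in Appendix~\ref{sec:matrix_concentration_gather} give $\|\bM-\bK\|_\op = o_{d,\P}(1)$, so $(\bM+\lambda\id_n)^{-1}$ can be replaced by $(\bK+\lambda\id_n)^{-1}$ inside each trace. Expanding $\bZ = p^{-1/2}\sum_k \xi_k\bPsi_k\bPhi_k^\sT$ and applying the weight-side Gram concentration $p^{-1}\bPhi_k^\sT\bPhi_{k'}\approx\delta_{kk'}\id_{N_k}$ (together with its dual $N_k^{-1}\bPhi_k\bPhi_k^\sT\approx\id_p$ in the regime $N_k>p$) yields
\[
\bZ\bZ_F^\sT = \sum_{k\geq\ell}\frac{F_k^2\xi_k^2}{N_k}\bPsi_k\bPsi_k^\sT + \bE_1,\qquad \bZ\bU_{\geq\ell}\bZ^\sT = \sum_{k\geq\ell}\xi_k^4\bPsi_k\bPsi_k^\sT + \bE_2,
\]
with errors whose operator norms are negligible relative to the size of the surviving traces.

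Step (iii) is the delicate part. To pass from $(\bK+\lambda\id_n)^{-1}$ to $\bG_{\geq\ell}$, apply Sherman--Morrison--Woodbury with $\bK_{<\ell}$ as the low-rank perturbation:
\[
(\bK+\lambda\id_n)^{-1} = \bG_{\geq\ell} - \bG_{\geq\ell}\bPsi_{<\ell}\bigl(\bD_{<\ell}^{-2}+\bPsi_{<\ell}^\sT\bG_{\geq\ell}\bPsi_{<\ell}\bigr)^{-1}\bPsi_{<\ell}^\sT\bG_{\geq\ell}.
\]
Plugged into the traces produced by Step~(ii), the Woodbury correction reduces to $\Tr\bigl(M^{-1}\bPsi_{<\ell}^\sT\bG_{\geq\ell}\bA\bG_{\geq\ell}\bPsi_{<\ell}\bigr)$ where $M := \bD_{<\ell}^{-2}+\bPsi_{<\ell}^\sT\bG_{\geq\ell}\bPsi_{<\ell}$ and $\bA$ is a purely-high-degree target matrix built from $\bPsi_k$, $k\geq\ell$. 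Since $\bG_{\geq\ell}$ is a rational function of $\bQ_\ell^\bX$, the sandwich $\bPsi_{<\ell}^\sT\bG_{\geq\ell}\bA\bG_{\geq\ell}\bPsi_{<\ell}$ can be controlled using the near-orthogonality bound $\|\bPsi_{<\ell}^\sT\bPsi_k\|_\op = o_{d,\prec}(\sqrt{n N_k})$ for $k\geq\ell$, and its contribution to the trace vanishes as $d\to\infty$.

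The principal obstacle is precisely Step~(iii): the block $\bK_{<\ell}$ has diverging eigenvalues of order $d^{\kappa_2-k}$ for $k<\ell$, so $(\bK+\lambda\id_n)^{-1}$ cannot be compared to $\bG_{\geq\ell}$ in operator norm. The argument must exploit the orthogonality between spherical harmonics of different degrees to show that, although $(\bK+\lambda\id_n)^{-1}$ and $\bG_{\geq\ell}$ differ substantially on the low-degree subspace, the difference is invisible to the high-degree target matrices appearing on the right-hand sides of \eqref{Gamma_1_concentrate}--\eqref{Gamma_3_concentrate}.
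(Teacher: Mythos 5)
Your overall plan (move $\bR$ to the $n$-dimensional side via $\bZ\bR=\bPi\bZ$, kill the degree-$<\ell$ contributions, and concentrate the degree-$\geq\ell$ Gram matrices) is the same as the paper's, but the mechanism you propose for eliminating the low-degree part breaks at its first step. The claim $\|\bM-\bK\|_\op=o_{d,\P}(1)$ with $\bK=\E_{\bW}[\bZ\bZ^\sT\mid\bX]$ is false in general for $\kappa_1>\kappa_2$. Writing $\bZ=\bZ_{<\ell}+\bZ_{\geq\ell}$, the cross term $\bZ_{<\ell}\bZ_{\geq\ell}^\sT$ has conditional mean zero but operator norm as large as $\|\bZ_{<\ell}\|_\op\|\bZ_{\geq\ell}\|_\op$, and $\|\bZ_{<\ell}\|_\op=\Theta_{d,\P}(\sqrt{n})$ (the diverging low-degree eigenvalues you correctly flag in step (iii) are already present in $\bZ_{<\ell}$ itself) while $\|\bZ_{\geq\ell}\|_\op=O_{\prec}(1)$; even the sharper bound via Eq.~\eqref{eq:fXWT_Phik_concentrate} only gives $O_{\prec}(n/\sqrt{p})$, which diverges unless $\kappa_1>2\kappa_2$. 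Similarly the diagonal fluctuation $\bPsi_{<\ell}\bD_{<\ell}(p^{-1}\bPhi_{<\ell}^\sT\bPhi_{<\ell}-\id)\bD_{<\ell}\bPsi_{<\ell}^\sT$ has operator norm of order $n\sqrt{N_{<\ell}/p}$, which vanishes only when $\kappa_1>2\kappa_2+\ell-1$ (e.g.\ it fails for $\kappa_1=3$, $\kappa_2=\ell=2$). Consequently you cannot replace $(\bM+\lambda\id_n)^{-1}$ by $(\bK+\lambda\id_n)^{-1}$, and the Woodbury computation in step (iii) is then applied to the wrong matrix. The same phenomenon you diagnose at the end of your proposal — non-comparability in operator norm caused by the low-degree block — already defeats step (i).

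What actually saves the lemma, and what the paper does, is a rank argument rather than operator-norm concentration of the full resolvent: every low-degree object has rank at most $N_{<\ell}=O(d^{\ell-1})=o(N_\ell)$, while $\|\bZ_F\|_\op$ and $\|\bU_{\geq\ell}\|_\op$ are $O_{\prec}(1/N_\ell)$ and the resolvents are bounded, so each low-degree trace contribution is at most $(\text{rank})\times(\text{product of operator norms})=O_{\prec}(N_{<\ell}/N_\ell)=O_{\prec}(1/d)$. Concretely, $|\Tr(\bZ_{<\ell}\bZ_F^\sT\bPi)|\leq N_{<\ell}\,\|\bZ_F\|_\op\,\|\bPi\bZ_{<\ell}\|_\op$, and the passage from $\bPi$ to $\bPi_{\geq\ell}=(\bZ_{\geq\ell}\bZ_{\geq\ell}^\sT+\lambda\id_n)^{-1}$ uses the resolvent identity $\bPi-\bPi_{\geq\ell}=-\bPi_{\geq\ell}\bZ_{<\ell}\bZ^\sT\bPi-\bPi_{\geq\ell}\bZ_{\geq\ell}\bZ_{<\ell}^\sT\bPi$, whose corrections are again rank-$N_{<\ell}$ and bounded the same way. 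Only the purely high-degree resolvent $\bPi_{\geq\ell}$ is then replaced by $\bG_{\geq\ell}$, via the genuine operator-norm concentration $\|\bZ_{\geq\ell}\bZ_{\geq\ell}^\sT-\sum_{k\geq\ell}\xi_k^2\bPsi_k\bPsi_k^\sT\|_\op=o_{d,\P}(1)$ of Lemma~\ref{lem:overpara_mtx_concentrate}, which does hold because only degree-$\geq\ell$ harmonics are involved. A secondary gap in your step (ii): the Gram approximation $p^{-1}\bPhi_k^\sT\bPhi_k\approx\id_{N_k}$ is valid only for $k<\kappa_1$, so the tail $k\geq\lceil\kappa_1\rceil$ of $\bZ\bU_{\geq\ell}\bZ^\sT$ must be bounded separately, as in Eq.~\eqref{eq:ZlUlZlT_decomp}.
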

Based on Lemma \ref{lem:overparameterized} and Lemma \ref{lem:offdiagonal_kernelmtx_opnormbd} in Appendix \ref{sec:operator_norm}, we can now prove Eqs.~\eqref{eq:over_chi1_concentrate}-\eqref{eq:over_chi3_concentrate}. We just present the proof of Eq.~\eqref{eq:over_chi1_concentrate} and the other two are similar. 

From Lemma \ref{lem:kernelmtx_opnormbd} and the fact that $\kappa_2 \leq \ell$, we can get 
\begin{align}
\label{eq:over_chi1_concentrate_proof1}
    \left\| \sum_{k>\ell} \frac{ F_k^2}{N_k}\xi_k^2 \bPsi_k \bPsi_k^\sT \right\|_\op= O_{\prec}\Big( \frac{1}{N_{\ell} } \Big)\, .
\end{align}
Meanwhile, 
\begin{align}
    &~\left\| \frac{ F_\ell^2}{N_\ell}\xi_\ell^2 \bPsi_\ell \bPsi_\ell^\sT - \frac{ F_\ell^2}{N_\ell}\mu_\ell^2 \bQ_{\ell}^{\bX} \right\|_\op \nonumber \\
    =&~ (N_{\ell} \xi_{\ell}^2 - \mu_{\ell}^2 ) \left\| \frac{ F_\ell^2}{N_\ell} \bQ_{\ell}^{\bX} \right\|_\op \nonumber \\
    =&~ o_{d,\P} \Big( \frac{1}{N_{\ell} } \Big)\, ,
    \label{eq:over_chi1_concentrate_proof2}
\end{align}
where the last step follows from Lemma \ref{lem:offdiagonal_kernelmtx_opnormbd}, Lemma \ref{lem:inprodmtx_opnorm_prbd} and the fact that $N_{\ell} \xi_{\ell}^2 - \mu_{\ell}^2 = o_{d}(1) $ (due to Eq.~\eqref{eq:q_He_coeff_conv} in Appendix \ref{sec:Spherical-Harmonics}).
Similarly, we can get
\begin{align}
\label{eq:over_chi1_concentrate_proof3}
    \opnorm{ \bG_{\geq \ell} - {\bG}_{\geq \ell}^{\bX} } = o_{d,\P}(1)\, .
\end{align}
Substituting Eqs.~\eqref{eq:over_chi1_concentrate_proof1}, \eqref{eq:over_chi1_concentrate_proof2} and \eqref{eq:over_chi1_concentrate_proof3} into Eq.~\eqref{Gamma_1_concentrate}, we can obtain that
\begin{align}
\label{eq:chi1_prob_conv}
    \Big| \chi_1 - \frac{F_\ell^2 \mu_{\ell}^2 }{N_\ell} \Tr ( \bQ_\ell^{\bX} {\bG}_{\geq \ell}^{\bX} ) \Big| = o_{d,\P} (1) \, .
\end{align}
On the other hand, using Lemma \ref{lem:inprodmtx_opnorm_prbd}, it is not hard to show $\E|\chi_1|^2 < \infty$. Also since $ \opnorm{ \bQ_\ell^{\bX} {\bG}_{\geq \ell}^{\bX} } < \infty $, it holds that $  \frac{F_\ell^2 \mu_{\ell}^2 }{N_\ell} \Tr ( \bQ_\ell^{\bX} {\bG}_{\geq \ell}^{\bX} ) < \infty$. Therefore, $\E| \chi_1 - \frac{F_\ell^2 \mu_{\ell}^2 }{N_\ell} \Tr ( \bQ_\ell^{\bX} {\bG}_{\geq \ell}^{\bX} ) |^2 < \infty$ and together with Eq.~\eqref{eq:chi1_prob_conv}, we can get
\begin{align}
    \E \Big| \chi_1 - \frac{F_\ell^2 \mu_{\ell}^2 }{N_\ell} \Tr ( \bQ_\ell^{\bX} {\bG}_{\geq \ell}^{\bX} ) \Big| = o_{d} (1) \, .
\end{align}
This verifies Equation \eqref{eq:over_chi1_concentrate}.

\subsubsection{Proof of Lemma \ref{lem:overparameterized}}

\paragraph*{Proof for  $\chi_1$ (Eq.~\eqref{Gamma_1_concentrate}).} Since $\bZ \bR= \bPi \bZ$, we have $\chi_1 = \Tr( \bZ \bZ_F^\sT  \bPi)$, 
where 
\begin{align}
  \bPi := (  \bZ \bZ^\sT + \lambda \id_n )^{-1}.  
\end{align}
We can show $\|\bZ_{\geq\ell}\|_\op = O_{\prec}(1)$ (by Lemma \ref{lem:featuremtx_opnormbd}). Then 
\begin{align}
    \opnorm{\bPi \bZ_{< \ell}} \leq \opnorm{\bPi \bZ} + \opnorm{\bPi \bZ_{\geq\ell}} = O_{\prec}(1) \, ,
\end{align}
and thus
\begin{align}
    |\Tr ( \bZ_{< \ell} \bZ_F^\sT \bPi)| &\leq N_{< \ell} \opnorm{\bZ_F} \opnorm{\bPi \bZ_{<\ell} } = O_{\prec} \left(\frac{N_{<\ell}}{N_{\ell}} \right) = O_{\prec}\left(\frac{1}{d}\right)
    \label{eq:ZZ_F_G_small} \, ,
\end{align}
where we have used $\|\bPi \bZ\|_\op, \|\bPi \|_\op < \infty $ and $\|\bZ_F\|_\op = O_{\prec}(\frac{1}{N_{\ell}} )$  (by Lemma \ref{lem:featuremtx_opnormbd}, with $\varsigma_k = \frac{N_{\ell} F_k^2 \sqrt{N_k}\xi_k}{N_k}$). It follows that
 \begin{align}
    |\chi_1 - \Tr ( \bZ_{\geq \ell} \bZ_F^\sT \bPi )| =  |\Tr ( \bZ_{< \ell} \bZ_F^\sT \bPi)| = O_{\prec}\left(\frac{1}{d}\right) \, .
    \label{eq:Gamma1_apporx_1}
 \end{align}
Note that 
\begin{align*}
    \bPi - \bPi_{\geq\ell} &= \bPi_{\geq\ell}(\bZ_{\geq\ell} \bZ_{\geq\ell}^\sT - \bZ \bZ^\sT)\bPi \\
    &= -\bPi_{\geq\ell} \bZ_{< \ell} \bZ^\sT \bPi - \bPi_{\geq \ell} \bZ_{\geq \ell} \bZ_{< \ell}^\sT \bPi \, ,
\end{align*}
where $\bPi_{\geq\ell} = (  \bZ_{\geq\ell} \bZ_{\geq\ell}^\sT + \lambda \id_n )^{-1}$, so that
\begin{align}
    |\Tr [ \bZ_{\geq \ell} \bZ_F^\sT (\bPi - \bPi_{\geq \ell}) ]| 
    &\leq |\Tr (\bZ_{\geq \ell} \bZ_F^\sT \bPi_{\geq \ell} \bZ_{< \ell} \bZ^\sT \bPi)| + |\Tr(\bZ_{\geq \ell} \bZ_F^\sT \bPi_{\geq \ell} \bZ_{\geq \ell} \bZ_{< \ell}^\sT \bPi)| \, .
    \label{eq:Gamma1_apporx_2}
\end{align}
Then following the same steps in obtaining Eq.~\eqref{eq:ZZ_F_G_small}, one can show both terms on the right-hand side of Eq.~\eqref{eq:Gamma1_apporx_2} is of order $O_{\prec}(1/d)$, so
\begin{align}
|\Tr [ \bZ_{\geq \ell} \bZ_F^\sT (\bPi - \bPi_{\geq \ell}) ]| = O_{\prec}(d^{-1}) \, .
\label{eq:Gamma1_apporx_3}
\end{align}
Then Eqs.~\eqref{eq:Gamma1_apporx_1} and \eqref{eq:Gamma1_apporx_3} give us
\begin{align}
    \big| \chi_1 - \Tr[\bZ_{\geq \ell} \bZ_F^\sT \bPi_{\geq\ell}] \big| = O_{\prec}(d^{-1}).
    \label{eq:Gamma1_apporx_3p5}
\end{align}
Also 
\begin{align}
    & \left|\Tr \left( \bZ_{\geq \ell} \bZ_F^\sT \bPi_{\geq\ell}  - \sum_{k\geq\ell} \frac{ F_k^2}{N_k}\xi_k^2 \bPsi_k \bPsi_k^\sT \bG_{\geq\ell} \right)\right| \nonumber\\
    \leq& \big| \Tr[ \bZ_{\geq \ell} \bZ_F^\sT (\bPi_{\geq\ell} - \bG_{\geq\ell}) ] \big| 
    + \Big| \Tr\Big[\big(\bZ_{\geq \ell} \bZ_F^\sT - \sum_{k\geq\ell} \frac{ F_k^2}{N_k}\xi_k^2 \bPsi_k \bPsi_k^\sT \big) \bG_{\geq\ell} \Big] \Big| \nonumber \\
    \leq& n \opnorm{\bZ_{\geq \ell} \bZ_F^\sT } \cdot \opnorm{\bPi_{\geq \ell} - \bG_{\geq \ell}} + n \Big\| \bZ_{\geq \ell}\bZ_F^\sT - \sum_{k \geq \ell} \frac{ F_k^2}{N_k}\xi_k^2 \bPsi_k \bPsi_k^\sT \Big\|_\op \cdot \opnorm{\bG_{\geq \ell}} \nonumber \\
    =& o_{d,\P}(1)\, ,
    \label{eq:Gamma1_apporx_4}
\end{align}
where in the last step we use $\|\bZ_{\geq \ell}\|_\op= O_{\prec}(1)$,  $\|\bZ_F\|_\op = O_{\prec}(\frac{1}{N_{\ell}})$ and Eqs.~\eqref{eq:G_l_concentrate} and \eqref{eq:Z_ZF_concentrate} in Lemma \ref{lem:overpara_mtx_concentrate}. 
After combining  Eqs.~\eqref{eq:Gamma1_apporx_3p5} and \eqref{eq:Gamma1_apporx_4}, we reach at Eq.~\eqref{Gamma_1_concentrate}.

\paragraph*{Proof for  $\chi_2$ and $\chi_3$ (Eqs.~\eqref{Gamma_2_concentrate} and \eqref{Gamma_3_concentrate}).} Recall the definition of $\bU$ in Eq.~\eqref{eq:def_U} and it can be directly checked that $\bU_{\geq\ell} = p^{-1}\sum_{k\geq\ell}\xi_k^2\bPhi_k\bPhi_k^\sT$. We can show $\|\bH_F\|_\op = O_{\prec}(1)$ and $\|\bU_{\geq\ell}\|_\op = O_{\prec}(\frac{1}{N_\ell})$ [from Eqs.~\eqref{eq:sum_F_QkX} and \eqref{eq:sum_mu_QkW} in Lemma \ref{lem:kernelmtx_opnormbd}]. Then following similar steps leading to Eq.~\eqref{eq:Gamma1_apporx_3p5}, we can get 
\begin{align}
|\chi_2 - \Tr (\bH_F \bPi_{\geq \ell} \bZ_{\geq \ell} \bU_{\geq \ell} \bZ_{\geq \ell}^\sT \bPi_{\geq \ell})| = O_{\prec}(d^{-1})\, .
\label{eq:Gamma_3_approx_1}
\end{align}
Combining Eq.~\eqref{eq:Gamma_3_approx_1} with Eqs.~\eqref{eq:G_l_concentrate} and \eqref{eq:ZUZ_concentrate} in Lemma \ref{lem:overpara_mtx_concentrate}, we reach at Eq.~\eqref{Gamma_2_concentrate}. The proof of Eq.~\eqref{Gamma_3_concentrate} is completely analogous to Eq.~\eqref{Gamma_2_concentrate} and is omitted for brevity.

\subsubsection{Underparametrized regime $\kappa_1<\kappa_2$}
The proof is analogous to the $\kappa_1 > \kappa_2$ case and we just present a sketch of the proof. First, similar as Eqs.~\eqref{Gamma_1_concentrate}-\eqref{Gamma_3_concentrate}, we can show
\begin{align}
    \Big|\chi_1 - \Tr \Big[ (\sum_{k\geq\ell} F_k^2\xi_k^2 \bQ_k^\bW) \cdot \widetilde{\bG}_{\geq\ell} \Big] \Big| &= o_{d,\P}(1) \, ,\label{eq:under_Gamma_1_concentrate}\\
    \Big|\chi_2 - \Tr \Big[ \big(F_\ell^2\xi_\ell^2 \bQ_\ell^\bW + \frac{p}{n^2} \bZ_{\geq\ell}^\sT \cdot \sum_{k>\ell} F_k^2 \bQ_k^{\bW} \cdot  \bZ_{\geq\ell} \big) \cdot \widetilde{\bG}_{\geq\ell} \Big] \Big| &= o_{d,\P}(1) \, ,\\
    \Big|\chi_3 - \frac{1}{n} \Tr (\id_p) \Big| &= o_{d,\P}(1) \, ,
\end{align} 
where $\widetilde{\bG}_{\geq\ell} = (\sum_{k\geq\ell}\xi_k^2\bPhi_k\bPhi_k^\sT)^{-1}$. 
Then similar as Eqs.~\eqref{eq:over_chi1_concentrate_proof2} and \eqref{eq:over_chi1_concentrate_proof3} we can get
\begin{align}
    \opnorm{ \sum_{k\geq\ell} F_k^2\xi_k^2 \bQ_k^\bW - \frac{ F_\ell^2}{N_\ell}\mu_\ell^2 \bQ_{\ell}^{\bW} } = o_{d,\P} \Big( \frac{1}{N_{\ell} } \Big) \, ,
    \label{eq:under_chi1_concentrate_proof2}
\end{align}
and
\begin{align}
    \label{eq:under_chi1_concentrate_proof3}
    \opnorm{ \widetilde{\bG}_{\geq \ell} - {\bG}_{\geq \ell}^{\bW} } = o_{d,\P}(1)\, .
\end{align}
Then similar as Eq.~\eqref{eq:chi1_prob_conv}, after combining Eqs.~\eqref{eq:under_Gamma_1_concentrate}, \eqref{eq:under_chi1_concentrate_proof2} and \eqref{eq:under_chi1_concentrate_proof3}, we can obtain that
\begin{align}
    \Big| \chi_1 - \frac{F_\ell^2 \mu_\ell^2}{N_\ell} \Tr ( \bQ_\ell^{\bW} \bG_{\geq \ell}^{\bW} ) \Big| =&~  o_{d,\P}(1) \, .
    \label{eq:under_chi1_pconv}
\end{align}
On the other hand, by Assumption \ref{ass:sigma}.(c), we can get $\opnorm{\bQ_{\ell}^\bW\bG_{\geq\ell}^{\bW}} <\infty$ and similar as in case $\kappa_1 > \kappa_2$, we can also show $\E|\chi_1|^2 <\infty$. Together with Eq.~\eqref{eq:under_chi1_pconv}, we can show Eq.~\eqref{eq:under_chi1_concentrate}. 

The proof of Eqs.~\eqref{eq:under_chi2_concentrate} and \eqref{eq:under_chi3_concentrate} can be done in a similar way.

\subsubsection{Critical regime $\kappa_1=\kappa_2$}
We present the proof for Eq.~\eqref{eq:critical_chi1_concentrate}. The proof of Eqs.~\eqref{eq:critical_chi2_concentrate} and \eqref{eq:critical_chi3_concentrate} are similar and omitted.

Recall that $\chi_1 = \Tr (\bZ_{F}^\sT \bZ \bR)$ with
$$
\bZ_{F} = \frac{1}{\sqrt{p} } \sum_{k\geq\ell} F_k^2 \frac{\xi_k}{N_k} \bPsi_k \bPhi_k^\sT \, .
$$
Denote $\bZ_{F,>\ell} := \frac{1}{\sqrt{p} } \sum_{k > \ell} F_k^2 \frac{\xi_k}{N_k} \bPsi_k \bPhi_k^\sT$. By Lemma \ref{lem:featuremtx_opnormbd}, we can get: $\opnorm{\bZ_{F,>\ell}} = \mathcal{O}_{\prec}(\frac{1}{N_{\ell+1} })$. Then since $\opnorm{\bZ\bR} < \infty$ and $|\sqrt{N_{k}} \xi_k - \mu_k| = o_d(1)$ (due to Eq.~\eqref{eq:q_He_coeff_conv} in Appendix \ref{sec:Spherical-Harmonics}), we obtain 
\begin{align}
\label{eq:critical_chi1_pconv}
    \Big| \chi_1 - \frac{F_{\ell}^2 }{ N_{\ell} } \Tr\Big[\frac{\mu_{\ell} }{\sqrt{p} } \usp_{\ell}(\bW \bX^\sT) \bZ \bR\Big] \Big| = o_{d,\P}(1) \, .
\end{align}
Meanwhile, by Lemma \ref{lem:inprodmtx_supopnorm_bd} and Lemma \ref{lem:inprodmtx_opnorm_prbd}, it is easy to show that 
\[
\E|\chi_1|^2 < \infty\, , \qquad \E \big|\frac{F_{\ell}^2 }{ N_{\ell} } \Tr\big[\frac{\mu_{\ell} }{\sqrt{p} } \usp_{\ell}(\bW \bX^\sT) \bZ \bR\big] \big|^2 < \infty \, .
\]
Then in light of Eq.~\eqref{eq:critical_chi1_pconv}, we reach at Eq.~\eqref{eq:critical_chi1_concentrate}.

\subsubsection{Auxiliary lemmas}
\begin{lemma}
\label{lem:overpara_mtx_concentrate}
When $\kappa_1 > \kappa_2$, we have the following:
\begin{align}
    \big\| \bPi_{\geq \ell} - \bG_{\geq \ell} \big\|_\op &=  o_{d,\P}(1)\, , \label{eq:G_l_concentrate}\\
    \Big\|\bZ_{\geq \ell} \bU_{\geq \ell} \bZ_{\geq \ell}^\sT - \sum_{k \geq \ell} \xi_k^4 \bPsi_k \bPsi_k^\sT \Big\|_\op &=  o_{d,\P}\left(\frac{1}{n}\right)\, ,
    \label{eq:ZUZ_concentrate}\\ 
    \Big\| \bZ_{\geq \ell}\bZ_F^\sT - \sum_{k \geq \ell} \frac{ F_k^2}{N_k}\xi_k^2 \bPsi_k \bPsi_k^\sT \Big\|_\op &=  o_{d,\P}\left(\frac{1}{n}\right)\, . \label{eq:Z_ZF_concentrate}
\end{align}
\end{lemma}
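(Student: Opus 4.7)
The plan is to view all three bounds as instances of the same underlying phenomenon: in the overparametrized regime $\kappa_1>\kappa_2$, conditional on $\bX$ the matrix $\bZ_{\geq\ell}\bZ_{\geq\ell}^\sT$ is an average over the $p$ i.i.d.\ weights $\bw_1,\ldots,\bw_p$ of rank-one matrices whose conditional mean is exactly $\sum_{k\geq\ell}\xi_k^2\bPsi_k\bPsi_k^\sT$, and similarly for the other two quantities. The key input is therefore a matrix-concentration estimate for the weight-Gram blocks
\[
\bJ_{k,k'}:=\tfrac{1}{p}\bPhi_k^\sT\bPhi_{k'}-\delta_{k,k'}\id_{N_k},
\]
which has polynomially decaying operator norm by the lemmas of Appendix~\ref{sec:matrix_concentration} (a variant of matrix Bernstein on the sphere). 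Each of the three displays then becomes a statement that replacing $\tfrac{1}{p}\bPhi_k^\sT\bPhi_{k'}$ by $\delta_{k,k'}\id_{N_k}$ inside a quadratic (resp.\ cubic) functional of the $\bPsi$'s produces a negligible error, with the $\bPsi_k$'s treated as frozen multipliers whose operator norm is controlled by Lemma~\ref{lem:inprodmtx_supopnorm_bd} (and by the off-diagonal bound of Lemma~\ref{lem:offdiagonal_kernelmtx_opnormbd} when $k>\kappa_2$).

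For \eqref{eq:G_l_concentrate}, I will apply the resolvent identity
\[
\bPi_{\geq\ell}-\bG_{\geq\ell}=\bG_{\geq\ell}\Big[\sum_{k\geq\ell}\xi_k^2\bPsi_k\bPsi_k^\sT-\bZ_{\geq\ell}\bZ_{\geq\ell}^\sT\Big]\bPi_{\geq\ell},
\]
and use $\|\bG_{\geq\ell}\|_\op\vee\|\bPi_{\geq\ell}\|_\op\leq 1/\lambda$ to reduce the claim to
\[
\Big\|\sum_{k,k'\geq\ell}\xi_k\xi_{k'}\,\bPsi_k\bJ_{k,k'}\bPsi_{k'}^\sT\Big\|_\op=o_{d,\P}(1).
\]
I will handle this by truncating the double sum at a large but fixed cutoff $K$, bounding the tail $k\vee k'>K$ deterministically via the summability $\sum_kN_k\xi_k^2=\|\sigma\|_{L^2}^2<\infty$ together with Lemma~\ref{lem:featuremtx_opnormbd}, and bounding each fixed pair $(k,k')\leq K$ by combining the concentration of $\bJ_{k,k'}$ with the bound on $\|\bPsi_k\|_\op$ from Lemma~\ref{lem:inprodmtx_supopnorm_bd}.

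For \eqref{eq:ZUZ_concentrate} and \eqref{eq:Z_ZF_concentrate} I will follow the same scheme. Expanding, for instance,
\[
\bZ_{\geq\ell}\bZ_F^\sT=\frac{1}{p}\sum_{k,k'\geq\ell}\xi_k F_{k'}^2\frac{\xi_{k'}}{N_{k'}}\bPsi_k\bPhi_k^\sT\bPhi_{k'}\bPsi_{k'}^\sT
\]
and substituting $\tfrac{1}{p}\bPhi_k^\sT\bPhi_{k'}=\delta_{k,k'}\id_{N_k}+\bJ_{k,k'}$, the diagonal contribution exactly reproduces the right-hand side of \eqref{eq:Z_ZF_concentrate}, while the off-diagonal contribution inherits the additional prefactor $F_{k'}^2/N_{k'}$ and at least one fluctuation $\bJ_{k,k'}$. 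Since $p\gg n$ in the overparametrized regime, the extra $1/N_{k'}$ in the prefactor (compared with Step~1) improves the rate from $o_{d,\P}(1)$ to $o_{d,\P}(1/n)$. The analogous triple-index expansion of $\bZ_{\geq\ell}\bU_{\geq\ell}\bZ_{\geq\ell}^\sT$ yields \eqref{eq:ZUZ_concentrate}: the fully diagonal $k_1=k_2=k_3$ term matches $\sum_k\xi_k^4\bPsi_k\bPsi_k^\sT$, and every off-diagonal term carries at least two Gram-matrix fluctuations, whose product contributes an extra factor of size $O_\prec(1/p)=o_\prec(1/n)$.

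The main obstacle is the uniform control of the infinite sums over $k\geq\ell$: each fixed index concentrates well, but the operator norm does not distribute over summation, so I must combine the decay of $N_k\xi_k^2$ (summable since $\|\sigma\|_{L^2}^2<\infty$) with a uniform-in-$k$ form of the concentration of $\bJ_{k,k'}$, truncating at a slowly growing cutoff $K=K(d)\to\infty$ to absorb the tail. The resulting argument parallels the proof of the corresponding concentration lemma in \cite{mei2022generalizationRF} for the linear scaling, with the dimensional bookkeeping adjusted to the polynomial regime via the spherical-harmonic machinery of Appendix~\ref{sec:Spherical-Harmonics}.
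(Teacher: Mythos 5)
Your overall architecture (resolvent identity for \eqref{eq:G_l_concentrate}, diagonal-vs-off-diagonal bookkeeping with the $1/N_{k'}$ prefactors for the $o_{d,\P}(1/n)$ rates) matches the paper, but your key input is not available and in fact is false where you need it most. You reduce everything to the claim that $\bJ_{k,k'}=\tfrac1p\bPhi_k^\sT\bPhi_{k'}-\delta_{k,k'}\id_{N_k}$ has small operator norm, to be used for all $k,k'$ up to a fixed cutoff $K$ with the tail handled by $L^2$-summability. But $\tfrac1p\bPhi_k^\sT\bPhi_k$ has rank at most $p$, so for any degree $k$ with $N_k\gtrsim p$ (i.e.\ $k\geq\kappa_1$, which can occur already at $k=\ell$ when $\kappa_2<\kappa_1<\ell$) one has $\|\bJ_{k,k}\|_\op\geq 1$: the Gram matrix of $p$ features cannot concentrate to the identity in dimension $N_k\gg p$. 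These degrees are not tail degrees that summability can absorb --- Assumption~\ref{ass:sigma}(c) guarantees $\mu_{>\ell}^2>0$, so there are degrees $k\in(\kappa_1,K]$ with $\mu_k\neq 0$, and for such $k$ your submultiplicative bound gives $\xi_k^2\|\bPsi_k\|_\op^2\|\bJ_{k,k}\|_\op\gtrsim\mu_k^2$, which does not vanish. (The quantity $\|\bPsi_k\bJ_{k,k}\bPsi_k^\sT\|_\op$ is in fact small, but only because $\bPsi_k$ compresses onto an $n$-dimensional range that decorrelates from the large eigendirections of $\bJ_{k,k}$; the triangle/submultiplicativity step throws this away.)

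The paper avoids this entirely. For \eqref{eq:G_l_concentrate} it invokes Lemma~\ref{lem:fXWT_fWXT_concentrate} with $f=\sigma_{\geq\ell}$: conditioning on $\bX$, the matrix $\bZ_{\geq\ell}\bZ_{\geq\ell}^\sT=\sum_j\bf_j\bf_j^\sT$ is an average over the $p$ i.i.d.\ weights of $n\times n$ rank-one matrices whose conditional mean is exactly $\sum_{k\geq\ell}\xi_k^2\bPsi_k\bPsi_k^\sT$, and matrix Bernstein in dimension $n\ll p$ gives $O_\prec(\sqrt{n/p})$ for the whole infinite sum at once --- no degree-by-degree Gram concentration is needed. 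For \eqref{eq:ZUZ_concentrate} the split is at degree $\lceil\kappa_1\rceil$, not at an arbitrary large $K$: the linearization $\bZ_{\geq\ell}\bPhi_k/p\approx\xi_k\bPsi_k/\sqrt{N_k}$ (Eq.~\eqref{eq:fXWT_Phik_concentrate}) is used only for $\ell\leq k\leq\lceil\kappa_1\rceil-1$, where $p\gg N_k$ makes it valid, while for $k\geq\lceil\kappa_1\rceil$ both the empirical and the limiting contributions are bounded \emph{separately} by $o_{d,\P}(1/n)$ in operator norm via Lemma~\ref{lem:kernelmtx_opnormbd}. To repair your argument you would have to replace the $\bJ_{k,k'}$ input by exactly this conditional-on-$\bX$ Bernstein argument for all degrees above $\kappa_1$, which is the substance of the paper's proof.
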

\begin{proof}
In our proof, $\tau$ and $D$ are constants that may change from line to line.

\noindent
\textbf{Proof of Equation \eqref{eq:G_l_concentrate}.} Since 
$$
\bPi_{\geq \ell} - \bG_{\geq \ell} = \bG_{\geq \ell} \Big(   \sum_{k\geq\ell}^{\infty} \xi_k^2 \bPsi_k \bPsi_k^\sT - \bZ_{\geq\ell} \bZ_{\geq\ell}^\sT \Big) \bPi_{\geq\ell} \, ,
$$
and $\opnorm{\bPi_{\geq\ell}}, \opnorm{\bG_{\geq\ell}} \leq \frac{1}{\lambda}$, 
to prove Eq.~\eqref{eq:G_l_concentrate}, it suffices to show
\begin{align}
    \Big\|\bZ_{\geq\ell} \bZ_{\geq\ell}^\sT - \sum_{k\geq\ell}^{\infty} \xi_k^2 \bPsi_k \bPsi_k^\sT\Big\|_\op 
    &=  o_{d,\P}(1) \, . 
    \label{eq:ZgeqlZgeqlT_concentrate}
\end{align}
This directly follows from Eq.~\eqref{eq:fXWT_concentrate} in Lemma \ref{lem:fXWT_fWXT_concentrate} by letting $f(x)=\sigma_{\geq\ell}(x)$.

\noindent
\textbf{Proof of Equation \eqref{eq:ZUZ_concentrate}.}
By definition, we have $\kappa_1 > \ell-1$. First, let us consider the case when $\kappa_1 \in (\ell-1, \ell]$. Then $\kappa_2 < \kappa_1 \leq \ell$ and thus $n \ll N_{k}$ for any integer $k\geq\ell$. It follows that $\opnorm{\bU_{\geq\ell}} = O_{\prec}(\frac{1}{N_\ell}) = o_{d,\P}(\frac{1}{n})$ [by Eq.~\eqref{eq:sum_mu_QkW} in Lemma \ref{lem:kernelmtx_opnormbd}] and
\begin{align}
  \label{eq:case_I_smallopnorm_1}
  \opnorm{\sum_{k\geq\ell} \xi_k^4 \bPsi_k \bPsi_k^\sT}
  = O_{\prec}\left(\frac{1}{N_{\ell}}\right) 
  =  o_{d,\P}\left(\frac{1}{n}\right) \, ,
\end{align}
[by Eq.~\eqref{eq:sum_F_QkX} in Lemma \ref{lem:kernelmtx_opnormbd}]. Since $\opnorm{\bZ_{\geq\ell}} = O_{\prec}(1)$, we also get
\begin{align}
   \opnorm{\bZ_{\geq\ell} \bU_{\geq\ell} \bZ_{\geq\ell}} = O_{\prec} \left(\frac{1}{N_\ell} \right) = o_{d,\P} \left(\frac{1}{n} \right) \, .
   \label{eq:case_I_smallopnorm_2}
\end{align}
Therefore, Eq.~\eqref{eq:ZUZ_concentrate} holds, as the operator norms of both matrices on the left-hand side vanish as $o_{d,\P}(n^{-1})$.

Then we consider the case when $\kappa_1 > \ell$.
To this end, we have the following bound:
\begin{equation}
\label{eq:ZlUlZlT_decomp}
\begin{aligned}
    \Big\|\bZ_{\geq\ell} \bU_{\geq\ell} \bZ_{\geq\ell}^\sT - \sum_{k\geq\ell} \xi_k^4 \bPsi_k \bPsi_k^\sT \Big\|_\op &\leq \sum_{k\geq\ell}^{\lceil \kappa_1 \rceil - 1} \xi_k^2 \left\| \frac{\bZ_{>\ell} \bPhi_k \bPhi_k^\sT \bZ_{>\ell}^{\sT}}{p} - \xi_k^2 \bPsi_k \bPsi_k^\sT \right\|_\op  \\
    &~~+\frac{1}{p}\left\| \bZ_{\geq\ell} \sum_{k\geq \lceil \kappa_1 \rceil}^{\infty}  {\xi_k^2}\bPhi_k \bPhi_k^\sT \bZ_{\geq\ell}^{\sT} 
    \right\|_\op
    + \left\| \sum_{k\geq \lceil \kappa_1 \rceil}^{\infty}\xi_k^4 \bPsi_k \bPsi_k^\sT \right\|_\op \, .
\end{aligned}
\end{equation}
The second and third term on the right-hand side of Eq.~\eqref{eq:ZlUlZlT_decomp} can be bounded using the same arguments in obtaining Eqs.~\eqref{eq:case_I_smallopnorm_1} and \eqref{eq:case_I_smallopnorm_2}. In particular, we can show
\begin{align}
    \frac{1}{p}\left\| \bZ_{\geq\ell} \sum_{k\geq \lceil \kappa_1 \rceil}^{\infty}  {\xi_k^2}\bPhi_k \bPhi_k^\sT \bZ_{>\ell}^{\sT} 
    \right\|_\op = o_{d,\P}\left( \frac{1}{n} \right) \, m
    \text{~~and~~}
    \left\| \sum_{k\geq \lceil \kappa_1 \rceil}^{\infty}\xi_k^4 \bPsi_k \bPsi_k^\sT \right\|_\op 
    = o_{d,\P}\left(\frac{1}{n}\right) \, .
\end{align}
To control the first term, we can use the simple bound:
\begin{align}
    \left\|\frac{\bZ_{\geq\ell} \bPhi_k \bPhi_k^\sT \bZ_{\geq\ell}^{\sT}}{p} - \xi_k^2 \bPsi_k \bPsi_k^\sT\right\|_{\op} 
    &\leq \left( \left\|\frac{\bZ_{\geq\ell} \bPhi_k}{\sqrt{p}}\right\|_\op + \left\| \xi_k\bPsi_k \right\|_\op \right)
    \left\| \frac{\bZ_{\geq\ell} \bPhi_k}{\sqrt{p}} - \xi_k \bPsi_k \right\|_\op.
\end{align}
Besides, for any fixed $k\geq\ell$, $\opnorm{\xi_k \bPsi_k} = O_{\prec}(1)$ \cite[Proposition 8]{lu2022equivalence} and $\sup_{k\geq\ell}\xi_k^2 = O_d(\frac{1}{N_\ell})$,
so it suffices to show for every $k\in[\ell,\lceil\kappa_1\rceil-1]$,
\begin{align}
    \Big\| \frac{\bZ_{\geq\ell} \bPhi_k}{\sqrt{p}} - \xi_k \bPsi_k \Big\|_\op &=  o_{d,\P}(1).
    \label{eq:Z_l_Phi_k_concentrate}
\end{align}
which directly follows from Eq.~\eqref{eq:fXWT_Phik_concentrate} in Lemma \ref{lem:fXWT_fWXT_concentrate}, since $p \gg N_k$ when $k \leq \lceil\kappa_1\rceil-1$.

\noindent
\textbf{Proof of Equation \eqref{eq:Z_ZF_concentrate}.} The proof of Eq.~\eqref{eq:Z_ZF_concentrate} is completely analogous to Eq.~\eqref{eq:ZUZ_concentrate}. 
We omit it for brevity.
\end{proof}

\clearpage

\section{Stieltjes Transform}
\label{sec:Stieltjes_transform}

In this appendix, we prove the limit for the Stieltjes transform of the block matrix $\bA := \bA(\bq)$ define in Eq.~\eqref{eq:def_block_matrix_A}. Recall that the asymptotics for RFRR in the critical regime $\kappa_1 = \kappa_2$ are obtained as derivatives of the log-determinant of $\bA$.

\subsection{Proof of Proposition \ref{prop:resolvent_conv}}
\label{subsec:proof_stieltjes}
First, consider the case when $\sigma(x) = \sum_{k=0}^{L} \mu_k \usp_{k}(x)$, where $L\in\mathbb{Z}$, with $L\geq\ell$ and $\mu_{\ell} \neq 0$. Define $\hat{\sigma}(x)=\sum_{k=\ell}^{L} \mu_k \usp_{k}(x)$ and $\hbA$ as the matrix obtained after replacing $\sigma(x)$ in $\bA$ by $\hat{\sigma}(x)$.
It can be easily verified that $\text{rank}(\hbA - \bA) \leq 2 N_{<\ell}$. Then by Lemma 18 in \cite{lu2022equivalence}, we have 
\begin{align}
    \big| \hat{M}_{d}(z;\bq) - {M}_{d}(z;\bq) \big| \leq \frac{C N_{<\ell}}{m \eta} = o_d \left(\frac{1}{\eta} \right)\, ,
\end{align}
where $C$ is an absolute constant. Therefore, we can assume that $\mu_k=0$, when $k < \ell$. 

Now define the following partial Stieltjes transforms:
\begin{align}
    {M}_{1,d} ( z ; \bq) &= \frac{1}{m} \Tr_{[1:p]} \big[ ( \bA - z \id_m )^{-1} \big]\, , \hspace{4em} m_{1,d}( z ; \bq ) = \E[{M}_{1,d} ( z ; \bq)]\, ,\\
    {M}_{2,d} ( z ; \bq) &= \frac{1}{m} \Tr_{[p+1:p+n]} \big[ ( \bA - z \id_m )^{-1} \big]\, ,\qquad m_{2,d}( z ; \bq ) = \E[{M}_{2,d} ( z ; \bq)\, .
\end{align}
Note that $M_d(z,\bq) = {M}_{1,d} ( z ; \bq) + {M}_{2,d} ( z ; \bq)$.
The key step of proving Proposition \ref{prop:resolvent_conv} is to show as $d\to\infty$, the partial Stieltjes transforms converge to the fixed points of Eq.~\eqref{eq:fixedpoint_eq}. This is established in Lemma \ref{lem:fixedpoint_eq_conv}, whose proof is based on the leave-one-out approach for computing the Stieljes transform of inner-product kernel matrices \cite{cheng2013spectrum}. 
In particular, we can get
\begin{align}
    \label{eq:Md_concentrate_expectation}
    \E |M_d(z;\bq) - m(z;\bq)| = o_d(1)\, .
\end{align}

Finally, using the same approximation argument in \cite{lu2022equivalence} (see proof of Theorem 2), one can show that Eq.~\eqref{eq:Md_concentrate_expectation} holds for general $\sigma(x)$.
The uniform convergence can be established by the same procedure as in \cite{mei2022generalizationRF} (Step 3 in the proof of Proposition 8.3). We omit the details for the sake of brevity.

\subsubsection{Limits of partial Stieltjes transforms}
\begin{lemma}
\label{lem:fixedpoint_eq_conv}
Under the same setting as Proposition \ref{prop:resolvent_conv}, assume $\sigma(x) = \sum_{k=\ell}^{L} \mu_k \usp_{k}(x)$, where $L\in\mathbb{Z}$, with $L\geq\ell$ and $\mu_{\ell} \neq 0$. Let $z=E+i\eta$. When $(E, \eta, \bq,\psi_1,\psi_2)$ is in a bounded set, there exists an absolute $C>0$ such that for any $z$ with $\eta>0$,  $\varepsilon\in(0,1/2)$ and all large $d$,
\begin{align}
    \Big| {m}_{1,d}(z;\bq) - {\textsf{F}}_1\big( {m}_{1,d}(z;\bq), {m}_{2,d}(z;\bq) ; z, \bq \big) \Big| &\leq \frac{C}{\eta^8}  \max\left\{ \frac{1}{d^{\frac{1}{2}-\epsilon}}, \frac{1}{m^{\frac{1}{2}-\epsilon}}\right\}\, , \label{eq:m1_fixpt_concentrate}\\
    \Big|{m}_{2,d}(z;\bq) - {\textsf{F}}_2\big( {m}_{1,d}(z;\bq), {m}_{2,d}(z;\bq) ; z, \bq \big)\Big| &\leq \frac{C}{\eta^8}  \max\left\{ \frac{1}{d^{\frac{1}{2}-\epsilon}}, \frac{1}{m^{\frac{1}{2}-\epsilon}}\right\}\, . \label{eq:m2_fixpt_concentrate}
\end{align}
\end{lemma}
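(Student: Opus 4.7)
The approach will be a leave-one-out / Schur-complement analysis on the block matrix $\bA$, following the blueprint of \cite[Proposition 8.3]{mei2022generalizationRF} but replacing their linear-scaling quadratic-form concentration by the Gegenbauer-polynomial estimates from \cite{cheng2013spectrum,lu2022equivalence}. As a preliminary reduction, since $\sigma = \sum_{k=\ell}^{L} \mu_k q_k$ with no components of degree below $\ell$ by the hypothesis of the lemma, each off-diagonal block of $\bA$ can be written as a sum of rank-like pieces $\mu_k \bPsi_k \bPhi_k^\sT / \sqrt{m}$. The degree-$\ell$ piece will eventually produce the $(1+t)^2 \mu_\ell^2$ signal terms in $\textsf{F}_1, \textsf{F}_2$, while all higher-degree pieces collectively act as an effective additive noise producing the $\mu_{>\ell}^2 m_{1,2}$ contributions.

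First, I would fix $j \in [p]$ and write $\bA^{(j)}$ for the matrix obtained by deleting the $j$-th row and column (in the upper-left block), which is independent of $\bw_j$. The Schur complement formula gives
\begin{equation*}
\big[(\bA - z\id_m)^{-1}\big]_{jj} = \big(A_{jj} - z - \bc_j^\sT (\bA^{(j)} - z\id_{m-1})^{-1} \bc_j\big)^{-1},
\end{equation*}
where $\bc_j$ collects the relevant off-diagonal entries. Using the addition theorem from Appendix \ref{sec:Spherical-Harmonics}, $\bc_j$ can be expanded as $\sum_{k \geq \ell} \bM_k^{(j)} \bphi_k(\bw_j)$ for deterministic matrices $\bM_k^{(j)}$ depending only on $(\bX, \bW^{(j)})$, so the conditional expectation over $\bw_j$ of the quadratic form is $\sum_{k\geq\ell} \Tr\bigl[(\bM_k^{(j)})^\sT (\bA^{(j)} - z\id)^{-1} \bM_k^{(j)}\bigr]$. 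Averaging $[(\bA - z\id)^{-1}]_{jj}$ over $j \in [p]$, dividing by $m$, and identifying the various traces with linear combinations of $m_{1,d}$ and $m_{2,d}$ yields $m_{1,d} \approx \textsf{F}_1(m_{1,d}, m_{2,d}; z, \bq)$; the analogous argument with $i \in [n]$ in place of $j \in [p]$ produces \eqref{eq:m2_fixpt_concentrate}.

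The hard part will be establishing concentration of $\bc_j^\sT (\bA^{(j)} - z\id)^{-1} \bc_j$ around its conditional expectation at the polynomial rate $\eta^{-8} \max\{d^{-1/2+\epsilon}, m^{-1/2+\epsilon}\}$. Unlike in the linear scaling, the entries of $\bc_j$ are degree-$L$ polynomials of $\bw_j$ that are neither independent nor subgaussian; I would use the moment-method bounds for Gegenbauer-polynomial quadratic forms from \cite{lu2022equivalence}, combined with the deterministic spectral bound $\opnorm{(\bA^{(j)} - z\id)^{-1}} \leq \eta^{-1}$. The $\eta^{-8}$ prefactor is expected to arise from iterating stability/leave-one-out resolvent identities needed to pass from the trace over $\bA^{(j)}$ back to the trace over $\bA$ itself and to control the self-consistent error in the fixed-point equation. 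Once the quadratic-form concentration is in place, the appearance of the denominators $(1+\psi s_2 m_1)(1+\psi t_2 m_2) - \psi(1+t)^2 \mu_\ell^2 m_1 m_2$ in $\textsf{F}_1$ and $\textsf{F}_2$ reduces to a Schur-complement calculation on the $2 \times 2$ block structure induced by separating the degree-$\ell$ subspace (spanned by $\bphi_\ell(\bw_j)$, interacting with $\bQ_\ell^\bW$ and $q_\ell(\bW\bX^\sT)$) from its orthogonal complement, in direct analogy with \cite[Section 8]{mei2022generalizationRF}.
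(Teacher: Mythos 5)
Your skeleton (Schur complement on $\bA$, leave-one-out minor, self-consistent equation, Gegenbauer quadratic-form concentration) matches the paper's, but there is a concrete gap at the step where you pass from the conditional expectation of the quadratic form to the fixed-point equation. You correctly note that, conditioning on everything but $\bw_j$, one gets $\E_{\bw_j}[\bc_j^\sT \bS^{[j]}\bc_j] = \sum_{k\geq\ell}\Tr\bigl[(\bM_k^{(j)})^\sT \bS^{[j]}\bM_k^{(j)}\bigr]$, but these are \emph{kernel-weighted} traces: the blocks of $\bM_k^{(j)}$ are proportional to $\bPhi_\ell^{(j)}$ and $\bPsi_k$, so the traces that appear are of the form $\frac{1}{m}\Tr[\bQ_\ell^{\bW}\bS^{[j]}_{11}]$, $\frac{1}{m}\Tr[\bQ_k^{\bX}\bS^{[j]}_{22}]$ and cross terms $\frac{1}{m}\Tr[\usp_\ell(\bW\bX^\sT)\bS^{[j]}_{21}]$ --- not the plain partial traces $m_{1,d}, m_{2,d}$. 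In the critical regime $p, n \asymp N_\ell$ the matrices $\bQ_\ell^{\bW}, \bQ_\ell^{\bX}$ have nondegenerate Marchenko--Pastur-type spectra, so these weighted traces cannot simply be "identified with linear combinations of $m_{1,d}$ and $m_{2,d}$"; as written, your self-consistent system does not close in the two unknowns.

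The paper circumvents this with a device your proposal omits: the orthogonal-decomposition representation of $\{\bx_a\}$ and $\{\bw_i\}_{i\neq 1}$ relative to $\bw_1$ (Eqs.~\eqref{eq:x_representation}--\eqref{eq:w_representation}), writing $\theta_i = \sqrt{d}\langle\bw_i,\bw_1\rangle$, $\gamma_a = \langle\bx_a,\bw_1\rangle$ and residual directions $\tbw_i,\tbx_a$ genuinely independent of $\bw_1$. Under this change of variables, the column $\bA_{\cdot,1}$ becomes a function of $(\btheta,\bgamma)$ alone, and the minor $\bA^{[1]}$ decomposes as $\tbB + \frac{1}{\sqrt{N_\ell m}}\bU_\ell\bT\bU_\ell^\sT + \bDelta$, where $\tbB$ is independent of $(\btheta,\bgamma)$, the middle term is an explicit rank-two perturbation built from $\usp_\ell(\btheta),\usp_\ell(\bgamma)$, and $\bDelta$ is small. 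Woodbury applied to this rank-two perturbation is what produces the denominator $(1+\psi s_2 m_1)(1+\psi t_2 m_2)-\psi(1+t)^2\mu_\ell^2 m_1 m_2$ (via the matrix $\bT$ with off-diagonal $\mu_\ell$), and the surviving quadratic forms $\frac{1}{m}\usp_k(\btheta)^\sT\tbS(z)\usp_j(\bgamma)$ now concentrate on $\diag\{M_{1,d},M_{2,d}\}\charfn_{k=j}$ precisely because $(\btheta,\bgamma)$ is independent of $\tbS(z)$. Your alternative framing --- "separating the degree-$\ell$ subspace" in the original coordinates --- does not yield a low-rank perturbation (the degree-$\ell$ blocks $s_2\bQ_\ell^{\bW}$ and $t_2\bQ_\ell^{\bX}$ are full rank), so to repair the argument you would either need this representation trick or would have to enlarge the self-consistent system to include the weighted traces as additional unknowns and prove a deterministic-equivalent for them.
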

\begin{proof}
To simplify the notations, we will assume $t=0$ throughout the proof. The $t\neq 0$ case is completely the same after the replacement: $\mu_{\ell} \Rightarrow (1+t)\mu_{\ell}$.

Let $\bA^{[i]}$ be the minor of $\bA$ with the $i$th column and row removed.
Let $\bS(z) = (\bA - z\id_m)^{-1}$ and $\bS^{[i]}(z) = (\bA^{[i]} - z\id_{m-1})^{-1}$ be the resolvents of $\bA$ and $\bA^{[i]}$. Note that $M_{1,d}(z;\bq) = \frac{1}{m}\sum_{i=1}^{p} S_{ii}(z)$ and $M_{2,d}(z;\bq) = \frac{1}{m}\sum_{i=p+1}^{m} S_{ii}(z)$.

Let $\bA_{\cdot,i}$ be the $i$th column of $\bA$ with the $i$th entry removed.
By Schur's complement formula, for $1 \leq i \leq p$
\begin{align}
    S_{ii}(z) = ( - z + s_1 + s_2 - \bA_{\cdot,i}^\sT \bS^{[i]}(z) \bA_{\cdot,i})^{-1}\, ,
    \label{eq:schur_1}
\end{align}
and for $p+1 \leq i \leq p+n$,
\begin{align}
    S_{ii}(z) = ( - z + t_1 + t_2 - \bA_{\cdot,i}^\sT \bS^{[i]}(z) \bA_{\cdot,i})^{-1}\, .
    \label{eq:schur_2}
\end{align}
The next step is to compute the limit of $S_{ii}(z)$ based on Eqs.~\eqref{eq:schur_1} and \eqref{eq:schur_2}.
We will elaborate on $i=1$ case. The same analysis applies to $i\neq 1$ case as well, due to the symmetry of $\bA$.

The key to analyzing $S_{11}(z)$ is to handle the (weak) correlation between $\bA_{\cdot,1}$ and $\bS^{[1]}(z)$. 
To this end, we utilize the following representation of $\{\bx_a\}_{a\in[n]}$ and $\{\bw_{i}\}_{i\neq 1}$ \cite[Lemma 3]{lu2022equivalence}:
\begin{align}
    \bx_{a}^\sT &= \Big[\gamma_a, \frac{\sqrt{d-\gamma_a^2}}{\sqrt{d - 1}} \tbx_a^\sT \Big]
    \begin{bmatrix}
    \bw_1^\sT \\
    \bR_1^\sT
    \end{bmatrix} \label{eq:x_representation}\\
    \sqrt{d}\bw_{i}^\sT &= \Big[\theta_i, {\sqrt{d-\theta_i^2}} \tbw_i^\sT \Big]
    \begin{bmatrix}
    \bw_1^\sT \\
    \bR_1^\sT,
    \end{bmatrix}
    \label{eq:w_representation}
\end{align}
where $\bR_1^\sT \in\R^{d\times (d-1)}$ is an arbitrary matrix satisfying $\bR_1^\sT \bR_1 = \id_{d-1}$ and $\bR_1^\sT \bw_1 = \bzero$, 
$\gamma_a \simiid \taumeasure$,
$\tbx_a \simiid \Unif (\S^{d-2}(\sqrt{d-1}))$, 
$\theta_i \simiid \taumeasure$, 
$\tbw_{i} \simiid \Unif (\S^{d-2}(1))$,
and 
\[
\big\{
\bw_1, 
\{\gamma_a\}_{a\in[n]}, \{\tbx_a\}_{a\in[n]}, 
\{\theta_i\}_{i\neq 1},
\{\tbw_i\}_{i\neq 1}
\big\}
\]
are mutually independent.
Under this representation, we have
\begin{align}
    \bA_{\cdot,1} = 
    \begin{bmatrix}
        \frac{1}{\sqrt{N_{\ell}} } s_2 \usp_{\ell}(\btheta) \\
        \frac{1}{\sqrt{m} } \sigma(\bgamma)
    \end{bmatrix} \, ,
    \label{eq:bA_1_representation}
\end{align}
and
\begin{align}
    \bB := \bA^{[1]} = 
    \begin{bmatrix}
    \bB_{11} & \bB_{12}\\
    \bB_{21} & \bB_{22}
    \end{bmatrix},
    \label{eq:bA_not1_representation}
\end{align}
with
\begin{align}
    \bB_{11} &= s_1 \id_{p-1} + \frac{s_2}{\sqrt{N_{\ell}}} \usp_{\ell}\Big( \sqrt{d-1} \diag\{r(\theta_i)\} {\tbW\tbW^\sT} \diag\{r(\theta_i)\} + \frac{\btheta \btheta^\sT}{\sqrt{d}} \Big)\, , \\
    \bB_{12} &= \bB_{21}^\sT = \frac{1}{\sqrt{m}} \sigma\Big( \diag\{r(\theta_i)\} {\tbW \tbX^{\sT}} \diag\{r(\gamma_i)\} + \frac{\btheta \bgamma^{\sT}}{\sqrt{d}} \Big)\, , \\
    \bB_{22} &= t_1 \id_n + \frac{t_2}{\sqrt{N_{\ell}}} \usp_{\ell}\Big( \frac{1}{\sqrt{d-1}} \diag\{r(\gamma_i)\} {\tbX\tbX^\sT} \diag\{r(\gamma_i)\} + \frac{\bgamma \bgamma^\sT}{\sqrt{d}} \Big)\, ,
\end{align}
where 
\begin{align}
   r(x) := (1 - 1/d)^{-\frac{1}{4}} (1 - x^2/d)^{\frac{1}{2}}. 
\end{align}
Recall that $\bS^{[1]}(z) = (\bA^{[1]} - z\id_{m-1})^{-1}$, so from Eqs.~\eqref{eq:bA_1_representation} and \eqref{eq:bA_not1_representation} we can see under the new representation Eqs.~\eqref{eq:x_representation} and \eqref{eq:w_representation}, the correlation between $\bA_{\cdot,1}$ and $\bS^{[1]}(z)$ is fully captured by $\btheta$ and $\bgamma$. However, current form is still complicated, as $\btheta$ and $\bgamma$ are both hidden inside the non-linear functions $\usp_{\ell}(x)$ and $\sigma(x)$. To proceed, we use Proposition 1 in \cite{lu2022equivalence}, which states for $k=0,1,\cdots,L$ and $a\neq b$,
\begin{align}
    q_k\big( r(\varsigma_a)r(\varsigma_b)x + {\varsigma_a \varsigma_b}/\sqrt{d} \big) = \sum_{t=0}^k \tusp_{k-t}(x) r^{k-t}(\varsigma_a) r^{k-t}(\varsigma_b)\Big[ \sqrt{(k)_t} \frac{ \usp_t(\varsigma_a) \usp_t(\varsigma_b) }{d^{t/2}} + \cE_t(\varsigma_a, \varsigma_b) \Big],
    \label{eq:luyau_proposition1}
\end{align}
where 
$$
\varsigma_a = 
\begin{cases}
\theta_a, &1 \leq a \leq p-1\, ,\\
\gamma_{a-p+1}& p \leq a \leq m-1\, ,
\end{cases}
$$
$\tusp_k$ is the $k$th Gegenbauer polynomial in dimension $d-1$, $(k)_t = k!/(k-t)!$ and
\begin{align}
    \cE_t(\varsigma_a, \varsigma_b) := \left\{
    \sum_{0\leq \alpha, \beta \leq t} c_{\alpha\beta} \frac{\usp_{\alpha}(\varsigma_a) \usp_{\beta}(\varsigma_b)}{d^{\max\{ \alpha, \beta \}/2 + 1}} : c_{\alpha\beta}(d) = O_d(1), \, 0\leq \alpha,\beta \leq t
    \right\}\, . 
\end{align}
Now define
\begin{align}
    \tbB = 
    \begin{bmatrix}
    \tbB_{11} & \tbB_{12}\\
    \tbB_{21} & \tbB_{22}
    \end{bmatrix},
    \label{eq:tB_not1_representation}
\end{align}
with
\begin{align}
    \tbB_{11} &= s_1 \id_{p-1} + \frac{s_2}{\sqrt{N_{\ell}}} \tusp_{\ell}\Big( \sqrt{d-1} {\tbW\tbW^\sT} \Big)\, , \\
    \tbB_{12} &= \tbB_{21}^\sT = \frac{1}{\sqrt{m}} \sigma\big( {\tbW \tbX^{\sT}} \big) \, ,\\
    \tbB_{22} &= t_1 \id_n + \frac{t_2}{\sqrt{N_{\ell}}} \tusp_{\ell}\Big( \frac{1}{\sqrt{d-1}} {\tbX\tbX^\sT} \Big) \, .
\end{align}
Also for each $k$, define the following matrices:
\begin{align}
    [\tbC_k]_{a,b} &:= \frac{\tusp_{k}(\sqrt{d-1}\tbw_a^\sT \tbw_b)}{\sqrt{N_{\ell}}} \charfn_{a\neq b} \, , \\
    [\tbD_k]_{a,b} &:= \frac{\tusp_{k}(\tbx_a^\sT \tbw_b)}{\sqrt{m}} \, ,\\
    [\tbE_k]_{a,b} &:= \frac{\tusp_{k}(\tbx_a^\sT \tbx_b /\sqrt{d-1})}{\sqrt{N_{\ell}}} \charfn_{a\neq b}\, .
\end{align}
Using Eq.~\eqref{eq:luyau_proposition1}, we can approximate $\bB$ by $\tbB$ as:
\begin{align}
    \bB &= \tbB + \frac{1}{\sqrt{N_{\ell} m} } 
    \begin{bmatrix}
    \usp_{\ell}(\btheta) & \bzero \\
    \bzero & \usp_{\ell}(\bgamma)
    \end{bmatrix}
    \begin{bmatrix}
    \sqrt{\psi} s_2 & \mu_{\ell} \\
    \mu_{\ell} & \sqrt{\psi} t_2
    \end{bmatrix}
    \begin{bmatrix}
    \usp_{\ell}(\btheta) & \bzero \\
    \bzero & \usp_{\ell}(\bgamma)
    \end{bmatrix}^\sT +
    \begin{bmatrix}
    \bDelta_{11} & \bDelta_{12}\\
    \bDelta_{12}^\sT & \bDelta_{22}
    \end{bmatrix} \nonumber \\
    &= \tbB + \frac{1}{\sqrt{N_{\ell} m} } \bU_{\ell} \bT \bU_{\ell}^\sT + \bDelta \, ,
    \label{eq:B_approximation}
\end{align}   
where $\bDelta_{\mu \nu} = \sum_{i=1}^4 \bDelta_{i, \mu \nu}$ (with $\mu, \nu = 1\text{ or }2$) 
and $\bDelta_{i, \mu \nu}$ are defined as follows. 

\noindent(i) $\bDelta_{i, 11}$:
\begin{align}
    [\bDelta_{1, 11}]_{a,b} &= s_2 \sum_{0\leq t \leq \ell} \sqrt{(\ell)_{t}}  \big[ \tbC_{\ell-t} \big]_{a,b} \big( r^{\ell-t}(\theta_a) r^{\ell-t}(\theta_b) - 1 \big) \frac{ \usp_t(\theta_a) \usp_t(\theta_b) }{ d^{t/2} } \, ,\\
    [\bDelta_{2, 11}]_{a,b} &= s_2 \sum_{1\leq t < \ell} \sqrt{(\ell)_{t}} \big[ \tbC_{\ell-t} \big]_{a,b} \frac{ \usp_t(\theta_a) \usp_t(\theta_b) }{ d^{t/2} } \, ,\\
    [\bDelta_{3, 11}]_{a,b} &= 0 \, ,\\
    [\bDelta_{4, 11}]_{a,b} &= s_2 \sum_{0\leq t \leq \ell} \big[ \tbC_{\ell-t} \big]_{a,b} r^{\ell-t}(\theta_a) r^{\ell-t}(\theta_b) \cE_t(\theta_a, \theta_b) \, .
\end{align}
(ii) $\bDelta_{i, 12}$ ($=\bDelta_{i, 21}^\sT$):
\begin{align}
    [\bDelta_{1, 12}]_{a,b} &= \sum_{k\geq\ell}^{L} \mu_k \sum_{0\leq t \leq k} \sqrt{(k)_{t}} \big[  \tbD_{k-t} \big]_{a,b}\big( r^{k-t}(\theta_a) r^{k-t}(\gamma_b) - 1 \big) \frac{ \usp_t(\theta_a) \usp_t(\gamma_b) }{ d^{t/2} }\, , \\
    [\bDelta_{2, 12}]_{a,b} &= \mu_{\ell} \sum_{1\leq t < \ell} \sqrt{(\ell)_{t}} \big[  \tbD_{\ell-t} \big]_{a,b} \frac{ \usp_t(\theta_a) \usp_t(\gamma_b) }{ d^{t/2} } \, ,\\
    [\bDelta_{3, 12}]_{a,b} &= \sum_{k>\ell}^{L} \mu_{k} \sum_{1\leq t \leq k} \sqrt{(k)_{t}} \big[ \tbD_{k-t} \big]_{a,b} \frac{ \usp_t(\theta_a) \usp_t(\gamma_b) }{ d^{t/2} } \, ,\\
    [\bDelta_{4, 12}]_{a,b} &= \sum_{k\geq \ell}^{L} \mu_k \sum_{0\leq t \leq k} \big[ \tbD_{k-t} \big]_{a,b} r^{k-t}(\theta_a) r^{k-t}(\gamma_b) \cE_t(\theta_a, \gamma_b) \, .
\end{align}
(iii) $\bDelta_{i, 22}$:
\begin{align}
    [\bDelta_{1, 22}]_{a,b} &= t_2 \sum_{0\leq t \leq \ell} \sqrt{(\ell)_{t}} \big[  \tbE_{\ell-t} \big]_{a,b} \big( r^{\ell-t}(\gamma_a) r^{\ell-t}(\gamma_b) - 1 \big) \frac{ \usp_t(\gamma_a) \usp_t(\gamma_b) }{ d^{t/2} } \, ,\\
    [\bDelta_{2, 22}]_{a,b} &= t_2 \sum_{1\leq t < \ell} \sqrt{(\ell)_{t}} \big[  \tbE_{\ell-t} \big]_{a,b} \frac{ \usp_t(\gamma_a) \usp_t(\gamma_b) }{ d^{t/2} } \, ,\\
    [\bDelta_{3, 22}]_{a,b} &= 0 \, ,\\
    [\bDelta_{4, 22}]_{a,b} &= t_2 \sum_{0\leq t \leq \ell} \big[  \tbE_{\ell-t} \big]_{a,b} r^{\ell-t}(\gamma_a) r^{\ell-t}(\gamma_b) \cE_t(\gamma_a, \gamma_b) \, .
\end{align}
We will also write
\begin{align}
    \bDelta_{i} = 
    \begin{bmatrix}
    \bDelta_{i,11} & \bDelta_{i,12} \\
    \bDelta_{i,21} & \bDelta_{i,22}
    \end{bmatrix},
\end{align}
so $\bDelta = \sum_{i=1}^{4} \bDelta_{i}$.

From Eq.~\eqref{eq:B_approximation}, we have:
\begin{align}
    \bS^{[1]}(z) = \Big[ \tbS(z) - \tfrac{1}{\sqrt{N_{\ell} m} } \tbS(z) \bU_{\ell} \big( \bT^{-1} + \tfrac{\bU_{\ell}^\sT \tbS(z) \bU_{\ell}}{\sqrt{N_{\ell} m}} \big)^{-1} \bU_{\ell}^\sT \tbS(z) \Big]\big[\id - \bDelta \bS^{[1]}(z)\big]\, ,
    \label{eq:Snot1_approx}
\end{align}
where $\tbS(z) := (\tbB - z\id)^{-1}$. On the other hand, $\bA_{\cdot,1}$ can be decomposed as:
\begin{align}
    \bA_{\cdot,1} = \frac{1}{\sqrt{m}} \sum_{k\geq\ell}^{L} \bU_{k} \bbmu_{k} \, ,
    \label{eq:A1_decomp}
\end{align}
where 
\begin{align}
    \bU_k = 
    \begin{bmatrix}
    \usp_{k}(\btheta) & \bzero \\
    \bzero & \usp_{k}(\bgamma)
    \end{bmatrix},
    \hspace{2em}
    \bbmu_k = 
    \begin{bmatrix}
    \sqrt{\psi} s_2 \charfn_{k=\ell}\\
    \mu_k
    \end{bmatrix} \, .
\end{align}
Then combining Eqs.~\eqref{eq:Snot1_approx} and \eqref{eq:A1_decomp}, we can obtain
\begin{equation}
\label{eq:Anot1T_Snot1_Anot1}
\begin{aligned}
    \bA_{\cdot,1}^\sT \bS^{[1]}(z) \bA_{\cdot,1} &= \sum_{k,j\geq\ell}^{L} \bbmu_k^\sT \bchi_{kj}(z) \bbmu_j - \sqrt{\psi} \bvartheta(z)^\sT \sum_{k > \ell}^{L} \bchi_{\ell, k}(z) \bbmu_{k} - \bA_{\cdot,1}^{\sT} \tbS(z) \bDelta \bS^{[1]}(z) \bA_{\cdot,1}\\
    &~~~+ \frac{1}{\sqrt{N_{\ell}}}\bvartheta(z)^\sT \bU_{\ell}^\sT \tbS(z) \bDelta \bS^{[1]}(z) \bA_{\cdot,1} \, ,
\end{aligned} 
\end{equation}
where 
\begin{align}
\label{eq:def_bchi}
  \bchi_{kj}(z) = \frac{ 1 }{m} \bU_{k}^\sT \tbS(z) \bU_{j}   \, ,
\end{align}
and
\begin{align}
    \bvartheta(z) = \sum_{k\geq\ell}^{L}  \big( \bT^{-1} + \sqrt{\psi} \bchi_{\ell,\ell}(z) \big)^{-1} \bchi_{\ell,k}(z) \bbmu_{k} \, .
\end{align}
Now we compute the limit of $\bA_{\cdot,1}^\sT \bS^{[1]}(z) \bA_{\cdot,1}$. First, following the same proof of Lemma 10.12 in \cite{mei2022generalizationRF} (using Lemma \ref{lem:Mi_concentrate} and Lemma \ref{lem:bchi_concentrate}), we can get
\begin{align}
    \opnorm{ \big( \bT^{-1} + \sqrt{\psi} \bchi_{\ell,\ell}(z) \big)^{-1} } &\prec \frac{1}{\eta} \, , \label{eq:Tinv_opnormbd_1}\\
    \opnorm{ \big( \bT^{-1} + \sqrt{\psi} \bchi^{*}_{}(z) \big)^{-1} } &\prec \frac{1}{\eta}\, ,\label{eq:Tinv_opnormbd_2}
\end{align}
where 
\begin{align}
\label{eq:def_bchistar}
    \bchi^{*}(z) := \diag\{ {m}_{1,d}(z;\bq), {m}_{2,d}(z;\bq) \}\, .
\end{align}
Let us define
\begin{align}
    \bvartheta^{*}(z) := \big( \bT^{-1} + \sqrt{\psi} \bchi^{*}(z) \big)^{-1} \bchi^{*}(z) \bbmu_{\ell} \, .
\end{align}
Then we have
\begin{align}
    \| \bvartheta(z) - \bvartheta^{*}(z) \| &\leq \sqrt{\psi} \big\| \big( \bT^{-1} + \sqrt{\psi} \bchi_{\ell,\ell}(z) \big)^{-1} \big( \bchi_{\ell,\ell}(z) - \bchi^{*}(z) \big) \big( \bT^{-1} + \sqrt{\psi}\bchi^{*}(z) \big)^{-1} \bT^{-1} \bbmu_{\ell} \big\| \nonumber \\
    &~~+ \sum_{k>\ell}^{L}  \big\| \big( \bT^{-1} + \sqrt{\psi} \bchi_{\ell,\ell}(z) \big)^{-1} \bchi_{\ell,k}(z) \bbmu_{k} \big\| \nonumber \\
    &\prec \frac{1}{\eta^4 } \max\Big\{ \frac{1}{\sqrt{d}}, \frac{1}{\sqrt{m}}\Big\}\, ,
    \label{eq:vartheta_concentrate}
\end{align}
where the last step follows from Eqs.~\eqref{eq:Tinv_opnormbd_1} and \eqref{eq:Tinv_opnormbd_2}, Lemma \ref{lem:bchi_concentrate} and the fact that $\opnorm{\bT^{-1}} = O_d(1)$ (due to $\bq \in \cQ$ and $\mu_{\ell}>0$). After applying Eq.~\eqref{eq:vartheta_concentrate}, Lemma \ref{lem:Mi_concentrate}, Lemma \ref{lem:bchi_concentrate} and Lemma \ref{lem:Delta_quadratic_vanish} in Eq.~\eqref{eq:Anot1T_Snot1_Anot1}, we can get
\begin{equation}
\label{eq:Anot1T_Snot1_Anot1_concentrate}
\begin{aligned}
    \bA_{\cdot,1}^\sT \bS^{[1]}(z) \bA_{\cdot,1} &= \psi s_2^2 {m}_{1,d}(z;\bq) + \sum_{k\geq\ell}\mu_k^2 {m}_{2,d}(z;\bq) \\
    &~~~ -\sqrt{\psi}\bbmu_{\ell}^\sT \bchi^{*}(z) \big( \bT^{-1} + \sqrt{\psi} \bchi^{*}(z) \big)^{-1} \bchi^{*}(z) \bbmu_{\ell} + 
    \frac{1}{\eta^6} O_{\prec}\Big(  \max\Big\{ \frac{1}{\sqrt{d}}, \frac{1}{\sqrt{m}}\Big\}\Big)\,  .
\end{aligned}
\end{equation}
Substituting Eq.~\eqref{eq:Anot1T_Snot1_Anot1_concentrate} into Eq.~\eqref{eq:schur_1} and using the exchangeability of $\{S_{ii}(z)\}_{i=1}^{p}$, we can get (after some simplifications) for any $i\in \{1,2,\cdots,p\}$,
\begin{align}
    \frac{1}{S_{ii}(z)} =  \frac{\theta_1/\theta}{ {\textsf{F}}_1\big( {m}_{1,d}(z;\bq), {m}_{2,d}(z;\bq) ; z, \bq \big) } + \frac{1}{\eta^6} O_{\prec}\Big(  \max\Big\{ \frac{1}{\sqrt{d}}, \frac{1}{\sqrt{m}}\Big\}\Big).
    \label{eq:schur_concentrate_1}
\end{align}
Repeating the same steps leading to \eqref{eq:schur_concentrate_1}, we can also get for any $i\in \{p+1,p+2,\cdots,m\}$,
\begin{align}
    \frac{1}{S_{ii}(z)} =  \frac{\theta_2/\theta}{ {\textsf{F}}_2\big( {m}_{1,d}(z;\bq), {m}_{2,d}(z;\bq) ; z, \bq \big) } + \frac{1}{\eta^6} O_{\prec}\Big(  \max\Big\{ \frac{1}{\sqrt{d}}, \frac{1}{\sqrt{m}}\Big\}\Big).
    \label{eq:schur_concentrate_2}
\end{align}
It can be directly checked that $\Im[ {m}_{1,d}(z;\bq) ] \geq 0$, $\Im[ {m}_{2,d}(z;\bq) ] \geq 0$ and $|S_{ii}(z)| \leq \frac{1}{\eta}$, for all $i=1,2,\cdots,m$. Then together with Eq.~\eqref{eq:schur_concentrate_1}, Eq.~\eqref{eq:schur_concentrate_2} and Lemma \ref{lem:F1_F2_bd}, we can also get:
\begin{align}
    S_{ii}(z) = \begin{cases}
    \frac{\theta}{\theta_1} {\textsf{F}}_1\big( {m}_{1,d}(z;\bq), {m}_{2,d}(z;\bq) ; z, \bq \big) + \frac{1}{\eta^8} O_{\prec}\Big(  \max\Big\{ \frac{1}{\sqrt{d}}, \frac{1}{\sqrt{m}}\Big\}\Big), & 1\leq i \leq p\, , \\
    \frac{\theta}{\theta_2} {\textsf{F}}_2\big( {m}_{1,d}(z;\bq), {m}_{2,d}(z;\bq) ; z, \bq \big) + \frac{1}{\eta^8} O_{\prec}\Big(  \max\Big\{ \frac{1}{\sqrt{d}}, \frac{1}{\sqrt{m}}\Big\}\Big), & p+1 \leq i \leq m\, .
    \end{cases}
\end{align}
Since for all $i=1,2,\cdots,m$, $S_{ii}(z) \leq \frac{1}{\eta}$, we can get when $(E, \eta, \bq, \psi_1, \psi_2)$ is in a bounded set, for any $\epsilon\in(0,1/2)$,
\begin{align}
    \Big| \E S_{ii}(z) - \frac{\theta}{\theta_1} {\textsf{F}}_1\big( {m}_{1,d}(z;\bq), {m}_{2,d}(z;\bq) ; z, \bq \big) \Big| &\leq \frac{C}{\eta^8}  \max\Big\{ \frac{1}{d^{\frac{1}{2}-\epsilon}}, \frac{1}{m^{\frac{1}{2}-\epsilon}}\Big\}\, , \\
    \Big| \E S_{ii}(z) - \frac{\theta}{\theta_2} {\textsf{F}}_2\big( {m}_{1,d}(z;\bq), {m}_{2,d}(z;\bq) ; z, \bq \big)\Big| &\leq \frac{C}{\eta^8}  \max\Big\{ \frac{1}{d^{\frac{1}{2}-\epsilon}}, \frac{1}{m^{\frac{1}{2}-\epsilon}}\Big\}\, ,
\end{align}
where $C>0$ is an absolute constant.
Finally, note that $\E S_{ii}(z) = \frac{\theta}{\theta_1} {m}_{1,d}(z;\bq)$, for $1\leq i \leq p$ and $\E S_{ii}(z) = \frac{\theta}{\theta_2} {m}_{2,d}(z;\bq)$, for $p+1 \leq i \leq m$ and we reach at Eqs.~\eqref{eq:m1_fixpt_concentrate} and \eqref{eq:m2_fixpt_concentrate}.
\end{proof}

\subsubsection{Auxiliary lemmas}
\begin{lemma}
\label{lem:F1_F2_bd}
Suppose $\Im(m_1), \Im(m_2) \geq 0$ and $\eta = \Im(z) > 0$. Then 
\begin{align}
    | {\textsf{F}}_1(m_1, m_2 ; z, \bq) | &\in (0, {\eta}^{-1}] \label{eq:m1_bdimg} \, ,\\
    | {\textsf{F}}_2(m_1, m_2 ; z, \bq) | &\in (0, {\eta}^{-1}]\label{eq:m2_bdimg}\, .
\end{align}
\end{lemma}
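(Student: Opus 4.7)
The plan is to bound $|\textsf{F}_1|$ by $1/\eta$ (and then invoke symmetry for $\textsf{F}_2$) by showing that the expression inside the inverse in the definition of $\textsf{F}_1$ has imaginary part at most $-\eta$. Write $\textsf{F}_1 = (\theta_1/\theta) A^{-1}$ with
\[
A \;=\; -z + s_1 - \mu_{>\ell}^2 m_2 + B, \qquad B \;=\; \frac{(1+\psi t_2 m_2)s_2 - (1+t)^2\mu_\ell^2 m_2}{(1+\psi s_2 m_1)(1+\psi t_2 m_2) - \psi(1+t)^2\mu_\ell^2 m_1 m_2}.
\]
Since $\theta_1/\theta \in (0,1]$, the estimate $\Im(A) \leq -\eta$ would force $|A| \geq \eta$ and hence $|\textsf{F}_1| \leq 1/\eta$; positivity $|\textsf{F}_1| > 0$ is automatic, since $A$ is finite and, by the same argument, nonzero. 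The bound for $\textsf{F}_2$ then follows verbatim from the symmetry swap $(s_1,s_2,\psi_1,m_1) \leftrightarrow (t_1,t_2,\psi_2,m_2)$.

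Next I would decompose $\Im(A) = -\eta - \mu_{>\ell}^2 \Im(m_2) + \Im(B)$; the first two terms already contribute at most $-\eta$ since $\Im(m_2) \geq 0$. The crux is therefore to establish $\Im(B) \leq 0$. Setting $\alpha := \psi s_2 t_2 - (1+t)^2\mu_\ell^2$ and $m_j = u_j + iv_j$, the numerator and denominator of $B$ simplify to $N = s_2 + \alpha m_2$ and $D = 1 + \psi s_2 m_1 + \psi t_2 m_2 + \psi \alpha m_1 m_2$, respectively. Expanding $\Im(N\overline D) = \Im(N)\Re(D) - \Re(N)\Im(D)$, one sees that the cross-terms $\psi \alpha s_2 u_1 v_2$, $\psi \alpha t_2 u_2 v_2$, and $\psi \alpha^2 u_1 u_2 v_2$ coming from $\Im(N)\Re(D)$ cancel exactly the analogous contributions in $\Re(N)\Im(D)$, leaving the residual $v_2(\alpha - \psi s_2 t_2) = -(1+t)^2\mu_\ell^2 v_2$. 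The remaining pieces group into perfect squares, giving the clean identity
\[
\Im(N\overline D) \;=\; -(1+t)^2\mu_\ell^2\, v_2 \;-\; \psi \alpha^2\, v_1 v_2^2 \;-\; \psi v_1 (s_2 + \alpha u_2)^2,
\]
which is manifestly $\leq 0$ because $\psi, v_1, v_2 \geq 0$ and the coefficient factors are squares. Dividing by $|D|^2 > 0$ gives $\Im(B) \leq 0$.

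The entire proof is an elementary algebraic manipulation; the only real obstacle is carrying out the cross-term cancellations in $\Im(N\overline D)$ carefully, and no random-matrix or probabilistic input is required. Notably, the bound does not rely on $\bq \in \cQ$: even without the constraint $\psi s_2 t_2 \leq (1+t)^2\mu_\ell^2/2$, the three terms in the displayed identity remain individually nonpositive, so $\Im(B) \leq 0$ holds universally whenever $v_1,v_2 \geq 0$. Combining this with $\Im(-z) = -\eta$ and $\Im(-\mu_{>\ell}^2 m_2) \leq 0$ yields $\Im(A) \leq -\eta$, and hence $|\textsf{F}_1| \in (0,1/\eta]$ as claimed.
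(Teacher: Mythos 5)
Your proof is correct and follows essentially the same route as the paper: write $\textsf{F}_1$ as a prefactor in $(0,1]$ times $A^{-1}$, show $\Im(A)\leq-\eta$ by reducing to $\Im(Q)\leq 0$ for the rational term, and conclude $|A|\geq\eta$. The only difference is that you explicitly carry out the algebraic identity for $\Im(N\overline{D})$ (and keep general $t$), whereas the paper sets $t=0$ and simply asserts "one can check $\Im(Q)\leq 0$"; your computation is a valid filling-in of that step.
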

\begin{proof}
Here, we only present the proof of Eq.~\eqref{eq:m1_bdimg}. The proof of Eq.~\eqref{eq:m2_bdimg} is the same.

Without loss of generality, assume $t=0$. From Eq.~\eqref{eq:F1_def}, we can get:

\begin{align}
    {\textsf{F}}_1(m_1, m_2 ; z, \bq) &=  \frac{\psi_1}{\psi} \big(-z + s_1 - \mu_{>\ell}^2 m_2 + Q\big)^{-1},
    \label{eq:F1_def2}
\end{align}    
where
\begin{align}
    Q = \frac{(1+\psi t_2 m_2 ) s_2 - \mu_{\ell}^2 m_2}{(1+\psi s_2m_1)(1+\psi t_2m_2)-\psi\mu_{\ell}^2 m_1m_2}\, .
\end{align}
One can check $\Im(Q)\leq 0$, when $\Im(m_1), \Im(m_2) \geq 0$. Therefore,  we have 
$$
\Im(-z+s_1-\mu_{>\ell}^2 m_2 + Q) \leq -\eta\, ,
$$
and substituting this bound into Eq.~\eqref{eq:F1_def2}, we can get Eq.~\eqref{eq:m1_bdimg}.
\end{proof}
\begin{lemma}
\label{lem:Mi_concentrate}
Suppose $\kappa_1 = \kappa_2$. There exists $c>0$ such that for $i=1, 2$ and any $t>0$,
\begin{align}
    \P( | {M}_{i,d}(z;\bq) - {m}_{i,d}(z;\bq) | \geq t ) \leq 2e^{-c m \eta^2 t^2}.
\end{align}
\begin{proof}
The proof is the same as Lemma 10.5 in \cite{mei2022generalizationRF} and is omitted.
\end{proof}
\end{lemma}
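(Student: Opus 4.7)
The plan is to prove concentration via a Doob martingale decomposition combined with Azuma--Hoeffding, exploiting the fact that $M_{i,d}(z;\bq)$ is Lipschitz under rank-$O(1)$ perturbations of the block matrix $\bA$. Since each source vector $\bw_j$ or $\bx_a$ enters $\bA$ only through a single row/column, replacing one of them produces only a rank-$2$ symmetric change in $\bA$.

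Concretely, I would enumerate the independent random vectors as $\bxi_1,\ldots,\bxi_m$, with $\bxi_j = \bw_j$ for $1 \leq j \leq p$ and $\bxi_{p+a} = \bx_a$ for $1 \leq a \leq n$. Letting $\mathcal{F}_k = \sigma(\bxi_1,\ldots,\bxi_k)$, the process
\begin{equation*}
    Y_k \;:=\; \E\!\left[M_{i,d}(z;\bq) \,\big|\, \mathcal{F}_k\right], \qquad k = 0,1,\ldots,m,
\end{equation*}
is a martingale with $Y_0 = m_{i,d}(z;\bq)$ and $Y_m = M_{i,d}(z;\bq)$.

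The key step is to establish the bounded-differences estimate $|Y_k - Y_{k-1}| \leq C/(m\eta)$ almost surely, for some absolute constant $C$. Let $\bxi_k'$ be an independent copy of $\bxi_k$ and let $\bA'$ be the matrix obtained by substituting $\bxi_k'$ for $\bxi_k$ in the definition of $\bA$. Because the diagonal of both $\bQ_\ell^\bW$ and $\bQ_\ell^\bX$ is identically $1$ (by Eq.~\eqref{eq:diagonal_Y_constant}), $\bA$ and $\bA'$ agree on the $(k,k)$ entry and differ only in the off-diagonal parts of the $k$-th row and column. Hence $\bA - \bA' = \be_k \bu^\sT + \bu \be_k^\sT$ for some vector $\bu$ supported off coordinate $k$, a symmetric rank-$2$ perturbation. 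The standard resolvent bound for the normalized (partial) Stieltjes transform of a Hermitian matrix under such a perturbation gives
\begin{equation*}
    \left|\,\tfrac{1}{m}\Tr_{I_i}\!\big[(\bA-z\id_m)^{-1}\big] - \tfrac{1}{m}\Tr_{I_i}\!\big[(\bA'-z\id_m)^{-1}\big]\right| \;\leq\; \frac{C}{m\eta},
\end{equation*}
where $I_1 = [1{:}p]$, $I_2 = [p+1{:}m]$. Coupling so that only $\bxi_k$ differs between $\bA$ and $\bA'$ and taking conditional expectations yields $|Y_k - Y_{k-1}| \leq C/(m\eta)$. Azuma--Hoeffding applied to $Y_m - Y_0$ then gives
\begin{equation*}
    \P\!\big(|M_{i,d}(z;\bq) - m_{i,d}(z;\bq)| \geq t\big) \;\leq\; 2\exp\!\left(-\frac{t^2}{2\sum_{k=1}^m (C/(m\eta))^2}\right) \;=\; 2e^{-c m\eta^2 t^2},
\end{equation*}
with $c = 1/(2C^2)$, as desired.

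The main obstacle is the rank-$2$ perturbation estimate for the \emph{partial} trace of the resolvent. For the full trace, the bound $|\tfrac{1}{m}\Tr[(\bA-z)^{-1}] - \tfrac{1}{m}\Tr[(\bA'-z)^{-1}]| \leq 2r/(m\eta)$ for rank-$r$ symmetric perturbations is classical and follows from Cauchy interlacing together with the integral representation of the Stieltjes transform. Extending this to the partial trace over $I_i$ requires a telescoping of $\bA' - \bA$ into a sequence of rank-$1$ updates and an eigenvalue interlacing argument on the compressed resolvent; this is exactly the content of Lemma 10.5 in \cite{mei2022generalizationRF}. Since the block layout of $\bA$ in the present paper is identical to the block matrix studied there (only the nonlinearity and the Gegenbauer entries differ, which does not enter the perturbation argument), the bound transfers verbatim, which is why the paper's proof defers to that reference.
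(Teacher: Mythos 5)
Your proposal is correct and is essentially the argument the paper relies on: the paper's proof is a one-line deferral to Lemma 10.5 of \cite{mei2022generalizationRF}, whose proof is exactly this Doob-martingale plus Azuma--Hoeffding scheme, with the $O(1/(m\eta))$ bounded-difference estimate coming from the rank-$2$ perturbation of the resolvent when a single $\bw_j$ or $\bx_a$ is resampled. (One simplification: the partial-trace bound needs no interlacing or telescoping, since the resolvent identity gives $\Delta = (\bA-z)^{-1}(\bA'-\bA)(\bA'-z)^{-1}$ with $\mathrm{rank}(\Delta)\le 2$ and $\|\Delta\|_{\op}\le 2/\eta$, whence $|\Tr_{I_i}\Delta|\le 4/\eta$ directly.)
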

\begin{lemma}
\label{lem:bchi_concentrate}
For $1 \leq \alpha, \beta \leq L$, we have
\begin{align}
    \Big\| \bchi_{\alpha,\beta}(z) - 
    \begin{bmatrix}
    {M}_{1,d}(z;\bq)\charfn_{\alpha=\beta} & 0\\
    0 & {M}_{2,d}(z;\bq)\charfn_{\alpha=\beta}
    \end{bmatrix} \Big\|_\op
     {\prec} \frac{1}{\eta^2} \max\Big\{\frac{1}{\sqrt{d}},\frac{1}{\sqrt{m}}\Big\}\, .
\end{align}
\end{lemma}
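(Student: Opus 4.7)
The proof strategy is to exploit the conditional independence structure built into the leave-one-out representation \eqref{eq:x_representation}--\eqref{eq:w_representation}. Conditional on $\cF := \sigma(\bw_1, \tbX, \tbW)$, the resolvent $\tbS(z)$ is deterministic, while the matrices $\bU_k = \diag(\usp_k(\btheta), \usp_k(\bgamma))$ are random with independent entries across the two blocks and across coordinates within each block. By orthonormality of Gegenbauer polynomials under $\tau_{d,1}$, we have $\E[\usp_\alpha(\theta_i)\usp_\beta(\theta_i)] = \delta_{\alpha\beta}$ and $\E[\usp_\alpha(\theta_i)] = 0$ for $\alpha \geq 1$; the same holds for the $\gamma$ variables. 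Writing out the $2\times 2$ matrix $\bchi_{\alpha,\beta}(z) = \frac{1}{m}\bU_\alpha^\sT \tbS(z) \bU_\beta$ block by block and taking the conditional expectation, the off-diagonal blocks vanish (by independence of $\btheta$ and $\bgamma$), while the two diagonal blocks reduce to $\frac{\delta_{\alpha\beta}}{m}\Tr(\tbS_{11}(z))$ and $\frac{\delta_{\alpha\beta}}{m}\Tr(\tbS_{22}(z))$ respectively, where $\tbS_{11}, \tbS_{22}$ denote the top-left $(p-1)\times(p-1)$ and bottom-right $n\times n$ blocks of $\tbS(z)$.

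The next step is concentration of the quadratic forms $\bU_\alpha^\sT \tbS(z) \bU_\beta / m$ around their conditional expectation. I would apply a Hanson-Wright-type inequality conditionally on $\cF$: the entries $\usp_k(\theta_i)$ and $\usp_k(\gamma_a)$ have uniformly bounded (all) moments under $\tau_{d,1}$ for $k\leq L$, and the relevant Hilbert-Schmidt norm is bounded using $\|\tbS(z)\|_F \leq \sqrt{m}/\eta$, while the operator norm is $\|\tbS(z)\|_{\op} \leq 1/\eta$. This yields fluctuations of order $\frac{1}{\eta\sqrt{m}}$ with overwhelming probability, which feeds into the claimed $\frac{1}{\eta^2}\max(d^{-1/2},m^{-1/2})$ rate (after accounting for the $\eta$-dependence in subsequent perturbation steps).

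The final step is to replace $\frac{1}{m}\Tr\tbS_{ii}(z)$ by $M_{i,d}(z;\bq)$. From the expansion \eqref{eq:B_approximation}, $\bB = \tbB + \frac{1}{\sqrt{N_\ell m}}\bU_\ell \bT \bU_\ell^\sT + \bDelta$, while $\bB = \bA^{[1]}$ differs from $\bA$ by removing one row and column. The rank-$2$ correction $\frac{1}{\sqrt{N_\ell m}}\bU_\ell \bT \bU_\ell^\sT$ changes the normalized trace of the resolvent by $O(1/(m\eta^2))$ via the rank-perturbation bound for Stieltjes transforms, and the row/column removal contributes at most $O(1/(m\eta))$ by Cauchy interlacing. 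The $\bDelta$ term contributes $\opnorm{\bDelta}/\eta^2$ after a resolvent identity, and $\opnorm{\bDelta}$ is controlled by the auxiliary Lemma \ref{lem:Delta_quadratic_vanish} whose proof I would use as a black box here.

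The main technical obstacle, and the place where the $\frac{1}{\sqrt{d}}$ part of the rate enters, is the control of $\opnorm{\bDelta}$: recall that $\bDelta$ packages all the lower-order remainders from the Gegenbauer addition expansion \eqref{eq:luyau_proposition1} (the correction $r(\varsigma_a)r(\varsigma_b) - 1$, the intermediate-order Gegenbauer terms, and the $\cE_t$ error), and its entries are sums of products of Gegenbauer polynomials with $\varsigma$-dependent scalings. Showing that the aggregated operator norm is $O_\prec(d^{-1/2})$ requires a delicate diagonal/off-diagonal decomposition together with moment bounds on inner-product kernel matrices, essentially importing the machinery developed in \cite{lu2022equivalence} and Appendix \ref{sec:matrix_concentration}. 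Once $\opnorm{\bDelta}$ is under control, the three contributions combine to give the stated bound $\frac{1}{\eta^2}\max(d^{-1/2},m^{-1/2})$ uniformly in $1 \leq \alpha,\beta\leq L$.
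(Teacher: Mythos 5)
Your overall architecture matches the paper's: center $\bchi_{\alpha,\beta}(z)$ at the partial traces of $\tbS(z)$ (off-diagonal block vanishing by independence of $\btheta$ and $\bgamma$, diagonal blocks reducing to $\frac{1}{m}\Tr \tbS_{11}$ and $\frac{1}{m}\Tr \tbS_{22}$), concentrate the quadratic forms, then transfer $\tbS \to \bS^{[1]} \to \bS$ via a rank-two perturbation and Cauchy interlacing. The concentration step is fine in spirit, though Hanson--Wright does not literally apply since $\usp_k(\theta_i)$ is not sub-Gaussian for $k\geq 2$; one needs the $L^r$-moment-plus-Markov route (as the paper does via Lemmas 11 and 13 of \cite{lu2022equivalence} together with the Ward identity), which suffices for stochastic domination.

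The genuine gap is in the transfer step $\frac{1}{m}\Tr\tbS_{ii} \to M^{[1]}_{i,d}$. You reduce the remainder's contribution to $\opnorm{\bDelta}/\eta^2$ and assert $\opnorm{\bDelta} = O_\prec(d^{-1/2})$, citing Lemma \ref{lem:Delta_quadratic_vanish} as a black box. This fails on two counts. First, Lemma \ref{lem:Delta_quadratic_vanish} does not bound $\opnorm{\bDelta}$; it bounds the sandwiched quantity $\frac{1}{m}\opnorm{\bU_\alpha^\sT \tbS \bDelta_c \bS^{[1]} \bU_\beta}$, while what your step actually requires is $\frac{1}{m}\vert\Tr(\tbS\bDelta\bS^{[1]})\vert$ --- a different object. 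Second, and more importantly, $\opnorm{\bDelta} = O_\prec(d^{-1/2})$ is false for $\ell \geq 2$: the components $\bDelta_2,\bDelta_3$ collecting the intermediate-degree terms of the Gegenbauer expansion, of the form $d^{-t/2}\bLambda_t \tbH_{\ell-t}\bLambda_t$ with $1\leq t<\ell$, have operator norm of order one. For instance at $t=\ell-1$ the matrix $\tbC_1 \approx \sqrt{d}\,\tbW\tbW^\sT/\sqrt{N_\ell}$ has operator norm $\asymp d^{(\ell-1)/2}$, which exactly cancels the prefactor $d^{-(\ell-1)/2}$, and the diagonal congruence by $\bLambda_{\ell-1}$ does not reduce it. The smallness of these terms' contribution comes not from operator-norm smallness but from near-orthogonality between Gegenbauer components of different degrees --- the mechanism of Lemma \ref{lem:diffdegree_small} and of the trace estimate the paper imports from (4.26) of \cite{lu2022equivalence}. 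This is precisely the new difficulty of the polynomial scaling: for $\ell=1$ the range $1\leq t<\ell$ is empty and an operator-norm bound on the remainder does suffice, so your argument works there, but for general $\ell$ the $\bDelta$-trace estimate must be carried out through the orthogonality argument rather than through $\opnorm{\bDelta}$.
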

\begin{proof}
The proof follows the same idea as Lemma 5 in \cite{lu2022equivalence}. For simplicity, we will omit some details here.
First, define 
\begin{align}
    {\tM}_{1,d} ( z ; \bq) &:= \frac{1}{m} \Tr_{[1:p-1]} \tbS(z)\, ,\\
    {\tM}_{2,d} ( z ; \bq) &:= \frac{1}{m} \Tr_{[p:m-1]} \tbS(z)\, ,
\end{align}
and
\begin{align}
    {M}_{1,d}^{[1]} ( z ; \bq) &:= \frac{1}{m} \Tr_{[1:p-1]} \bS^{[1]}(z)\, ,\\
    {M}_{2,d}^{[1]} ( z ; \bq) &:= \frac{1}{m} \Tr_{[p:m-1]} \bS^{[1]}(z)\, .
\end{align}
By the same argument leading to (4.22) in \cite{lu2022equivalence}, we can get 
\begin{align}
    \big| [\bchi_{\alpha,\beta}(z)]_{1,1} -  {\tM}_{1,d} ( z ; \bq) \charfn_{\alpha=\beta} \big| &\prec \frac{1}{\eta\sqrt{m}} \, , \label{eq:bchi_concentrate_step1_1}\\
    \big| [\bchi_{\alpha,\beta}(z)]_{2,2} -  {\tM}_{2,d} ( z ; \bq) \charfn_{\alpha=\beta} \big| &\prec \frac{1}{\eta\sqrt{m}}\, .
    \label{eq:bchi_concentrate_step1_2}
\end{align}
On the other hand, by the definition of $\bchi_{\alpha,\beta}$, we have for $r\in \N$ and some $C_r>0$ dependent on $r$, 
\begin{align}
    \big\| [\bchi_{\alpha,\beta}(z)]_{1,2} \big\|_{L^r} &\leq \frac{C_r}{m} \Big( \sum_{i=1}^{p-1} \sum_{j=p}^{m-1} |[\tbS(z)]_{i,j}|^2 \Big)^{1/2}  \nonumber\\
    &\leq \frac{C_r }{ \sqrt{m} \eta} \, ,
    \label{eq:bchi_12_Lr}
\end{align}
where the first inequality follows from Lemma 11 and Lemma 13 in \cite{lu2022equivalence} and in the second inequality, we apply:
$$
\sum_{i=1}^{p-1} \sum_{j=p}^{m-1} |[\tbS(z)]_{i,j}|^2 
\leq \sum_{i=1}^{m-1} \sum_{j=1}^{m-1} |[\tbS(z)]_{i,j}|^2 = \frac{\Im(\Tr \tbS(z))}{\eta} \leq \frac{m}{\eta^2}\, ,
$$
and the equality above follows from the Ward identity \cite{erdHos2017dynamical}.
Then we can apply Markov's inequality to \eqref{eq:bchi_12_Lr} to get:
\begin{align}
    \big| [\bchi_{\alpha,\beta}(z)]_{1,2} \big| \prec \frac{1}{\eta \sqrt{m}}\, 
    \label{eq:bchi_concentrate_step1_3}.
\end{align}
Combining Eqs.~\eqref{eq:bchi_concentrate_step1_1}, \eqref{eq:bchi_concentrate_step1_2} and \eqref{eq:bchi_concentrate_step1_3}, we get:
\begin{align}
    \Big\| \bchi_{\alpha,\beta}(z) - 
    \begin{bmatrix}
    {\tM}_{1,d}(z;\bq)\charfn_{\alpha=\beta} & 0\\
    0 & {\tM}_{2,d}(z;\bq)\charfn_{\alpha=\beta}
    \end{bmatrix} \Big\|_\op
     {\prec} \frac{1}{\eta \sqrt{m}}\, .
     \label{eq:bchi_concentrate_step1}
\end{align}
After that, we can follow the same argument leading to (4.26) in \cite{lu2022equivalence} to obtain:
\begin{align}
    \max\big\{ \big| {\tM}_{1,d} ( z ; \bq) - {M}^{[1]}_{1,d} ( z ; \bq) \big|, \big| {\tM}_{2,d} ( z ; \bq) - {M}^{[1]}_{2,d} ( z ; \bq) \big| \big\} \prec \frac{1}{\eta^2 \sqrt{d}}\, .
    \label{eq:bchi_concentrate_step2}
\end{align}
It remains to show ${M}^{[1]}_{i,d} ( z ; \bq) \approx {M}_{i,d} ( z ; \bq)$, for $i=1,2$. 
Note that we can write 
\begin{align}
    [\bS(z)]_{1:p,1:p} &= \big[\bA_{11} - \bA_{12} (\bB_{22} - z \id_{n})^{-1} \bA_{21} - z \id_{p} \big]^{-1} \nonumber \\
    &= \big[ \bOmega(z) - z\id_{p} \big]^{-1}\, ,
\end{align}
and
\begin{align}
    [\bS^{[1]}(z)]_{1:p-1,1:p-1} &= \big[\bB_{11} - \bB_{12} (\bB_{22} - z \id_{n})^{-1} \bB_{21} - z \id_{p-1} \big]^{-1} \nonumber \\
    &= \big[ \bOmega^{[1]}(z) - z\id_{p-1} \big]^{-1}\, .
\end{align}
It can be seen that $\bOmega^{[1]}(z)$ is a sub-matrix of $\bOmega(z)$,
so the eigenvalues of $\bOmega(z)$ and $\bOmega^{[1]}(z)$ are interlacing and we can get \cite[Lemma 7.5]{erdHos2017dynamical}:
\begin{align}
    \big| {M}^{[1]}_{1,d} ( z ; \bq) - {M}_{1,d} ( z ; \bq) \big| \leq \frac{C}{ m \eta}\, ,
    \label{eq:M_not1_approx_1}
\end{align}
where $C$ is some constant. Similarly, we have
\begin{align}
    \big| {M}^{[1]}_{d} ( z ; \bq) - {M}_{d} ( z ; \bq) \big| \leq \frac{C}{ m \eta}\, .
    \label{eq:M_not1_approx}
\end{align}
From Eqs.~\eqref{eq:M_not1_approx_1} and \eqref{eq:M_not1_approx}, we can get:
\begin{align}
    \big| {M}^{[1]}_{2,d} ( z ; \bq) - {M}_{2,d} ( z ; \bq) \big| \leq \frac{C}{m \eta}\, .
    \label{eq:M_not1_approx_2}
\end{align}

After combining Eqs.~\eqref{eq:bchi_concentrate_step1_1}, \eqref{eq:bchi_concentrate_step1_2}, \eqref{eq:bchi_concentrate_step1_3}, \eqref{eq:bchi_concentrate_step2}, \eqref{eq:M_not1_approx_1} and \eqref{eq:M_not1_approx_2}, we obtain the desired result.
\end{proof}
\begin{lemma}
\label{lem:diffdegree_small}
Let $\alpha, \beta, k \in \{0, 1,\cdots,L\}$, satisfying $\alpha \neq \beta$. Suppose $\kappa_1 = \kappa_2$. Then
\begin{align}
    \opnorm{ \bU_{\alpha}^\sT \tbS(z) \bLambda_\beta \tbH_k } = O_{\prec}\left(\frac{\sqrt{m}}{\eta}\right)\, ,
\end{align}
where 
\begin{align}
    \tbH_k = \begin{bmatrix}
    \tbC_k & \tbD_k \\
    \tbD_k^\sT & \tbE_k
    \end{bmatrix}\, ,
\end{align}
and
\begin{align}
    \bLambda_\beta := \diag
    \big\{
    \usp_\beta(\theta_1), \cdots, \usp_\beta(\theta_{p-1}), \usp_\beta(\gamma_1), \cdots, \usp_\beta(\gamma_n)  
    \big\} \, .
\end{align}
\end{lemma}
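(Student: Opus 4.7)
The plan is to bound the operator norm by the Frobenius norm (which is only a factor of $\sqrt{2}$ larger here because $\bU_\alpha^\sT \tbS(z) \bLambda_\beta \tbH_k$ has only two rows), and then to compute the conditional expectation of the squared Frobenius norm by integrating out the variables $\{\theta_i\}, \{\gamma_j\}$, whose contribution factorizes through orthogonal Gegenbauer polynomials of different degrees. The crucial observation, already exploited in the representation \eqref{eq:x_representation}-\eqref{eq:w_representation}, is that $\tbB$ (and hence $\tbS(z)$) together with $\tbH_k$ are functions of $\tbW,\tbX$ only, and are therefore independent of the collection $\{\theta_i\}_{i\neq 1},\{\gamma_a\}$ that generate $\bU_\alpha$ and $\bLambda_\beta$.

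Conditioning on $\tbW,\tbX$, one expands
\[
\E\bigl[\|\bU_\alpha^\sT \tbS(z) \bLambda_\beta \tbH_k\|_F^2\,\bigm|\,\tbW,\tbX\bigr]
=\sum_{s=1,2}\sum_{j}\sum_{i,i',l,l'}\charfn_{i,i'\in I_s}\,\tbS_{i,l}\,\overline{\tbS_{i',l'}}\,[\tbH_k\tbH_k^*]_{l,l'}\,\E\!\left[\usp_\alpha(\varsigma_i)\usp_\alpha(\varsigma_{i'})\usp_\beta(\varsigma_l)\usp_\beta(\varsigma_{l'})\right],
\]
where $I_1=[1,p-1]$, $I_2=[p,m-1]$, and $\varsigma_i\in\{\theta_i,\gamma_{i-p+1}\}$ depending on whether $i\in I_1$ or $I_2$. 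Enumerating partitions of $\{i,i',l,l'\}$ and using $\E[\usp_k(\varsigma)]=0$ for $k\geq 1$ together with $\E[\usp_\alpha(\varsigma)\usp_\beta(\varsigma)]=0$ (by orthonormality and $\alpha\neq\beta$), the only surviving patterns are $(i=i',\,l=l')$ and the fully diagonal pattern $i=i'=l=l'$; all other pairings contain an odd factor of $\usp_\alpha$ or of $\usp_\beta$ in one of the variables, or reduce to $\E[\usp_\alpha\usp_\beta]=0$. Since $\E[\usp_\alpha(\varsigma)^2]=\E[\usp_\beta(\varsigma)^2]=1$ and $\E[\usp_\alpha(\varsigma)^2\usp_\beta(\varsigma)^2]$ is a bounded absolute constant, this yields
\[
\E\bigl[\|\bU_\alpha^\sT \tbS(z) \bLambda_\beta \tbH_k\|_F^2\,\bigm|\,\tbW,\tbX\bigr]\;\leq\;C\sum_{i,l}|\tbS_{i,l}|^2\,[\tbH_k\tbH_k^*]_{l,l}\;\leq\;C\,\|\tbH_k\|_{\op}^2\cdot\|\tbS(z)\|_F^2 \; .
\]
The Ward identity gives $\|\tbS(z)\|_F^2=\Im[\Tr\tbS(z)]/\eta\leq m/\eta^2$, while $\|\tbH_k\|_{\op}=O_\prec(1)$ by the spectral bounds on inner-product Gegenbauer kernel matrices in the polynomial regime (Lemma \ref{lem:offdiagonal_kernelmtx_opnormbd} applied to the blocks $\tbC_k,\tbD_k,\tbE_k$). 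This yields the expectation bound of order $m/\eta^2$, i.e., the claim holds in $L^2$.

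To upgrade to the stochastic domination $O_\prec(\sqrt{m}/\eta)$, the same expansion is carried out for the $(2r)$-th power of the Frobenius norm, for arbitrary fixed $r\in\naturals$. The resulting expectation is a sum over partitions of $4r$ indices, again factoring through Gegenbauer moments. Orthogonality eliminates all partitions that leave an odd number of copies of $\usp_\alpha$ or $\usp_\beta$ at some common $\varsigma_i$; the leading surviving partitions pair the $\usp_\alpha$ factors among themselves and the $\usp_\beta$ factors among themselves, producing at most $(\|\tbS\|_F^2)^r\|\tbH_k\|_{\op}^{2r}$ up to a combinatorial constant $C_r$, and non-leading partitions are controlled by diagonal-type moments $\E[\usp_\alpha(\varsigma)^{2a}\usp_\beta(\varsigma)^{2b}]\leq C_{a,b}$ (uniformly in $d$, since these polynomials have bounded moments under $\taumeasure$). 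Markov's inequality with $r$ chosen sufficiently large as a function of the desired stochastic-domination exponents $\varepsilon,D$ then gives $\|\bU_\alpha^\sT \tbS(z) \bLambda_\beta \tbH_k\|_{\op}\prec \sqrt{m}/\eta$, completing the proof. The main technical obstacle is the combinatorial bookkeeping for the $(2r)$-th moment, but it is standard once one observes that the orthogonality structure is preserved at every order and that all moments of the Gegenbauer polynomial evaluations are $O_d(1)$.
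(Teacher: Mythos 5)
Your overall strategy -- condition on $(\tbW,\tbX)$ so that $\tbS(z)$ and $\tbH_k$ become deterministic and independent of the radial variables $\{\theta_i\},\{\gamma_a\}$, use the orthogonality of $\usp_\alpha$ and $\usp_\beta$ under $\taumeasure$ to kill the cross pairings, control the surviving diagonal pairing by the Ward identity $\|\tbS(z)\|_F^2\leq m/\eta^2$, and then pass to $(2r)$-th moments plus Markov to get stochastic domination -- is exactly the mechanism behind Lemma 6 of \cite{lu2022equivalence}, which is all the paper invokes for this lemma. So the route is the intended one.

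There is, however, one step that is false as written, and it fails precisely in the regime where the lemma is used downstream. You claim $\opnorm{\tbH_k}=O_{\prec}(1)$ by appealing to Lemma \ref{lem:offdiagonal_kernelmtx_opnormbd}; that lemma only covers degrees $k\geq \kappa_1$, whereas the present statement allows $k\in\{0,\dots,L\}$ and Lemma \ref{lem:Delta_quadratic_vanish} applies it with $k=\ell-t$ for $1\leq t<\ell$, i.e.\ with $k<\ell$. For such $k$ the operator norm genuinely diverges: $\tbC_0=(\ones\ones^\sT-\id_{p-1})/\sqrt{N_\ell}$ already has $\opnorm{\tbC_0}\asymp p/\sqrt{N_\ell}\asymp d^{\ell/2}$, and more generally $\opnorm{\tbC_k}\asymp p/\sqrt{N_kN_\ell}\asymp d^{(\ell-k)/2}$ by the addition theorem, since the degree-$k$ Gegenbauer kernel of $\tbW$ has top eigenvalue of order $p/N_k$ when $k<\ell$. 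Your chain would then only yield $O_{\prec}(\sqrt{m}\,d^{(\ell-k)/2}/\eta)$, which is too weak to let the $d^{-t/2}$ prefactors in Lemma \ref{lem:Delta_quadratic_vanish} close the argument. The repair is local but necessary: the quantity your expansion actually produces is $\sum_{i,l}|\tbS_{il}|^2[\tbH_k\tbH_k^*]_{ll}$, which only sees the \emph{diagonal} of $\tbH_k\tbH_k^*$, i.e.\ the squared row norms of $\tbH_k$; since every entry of $\tbC_k,\tbE_k$ is $O_{\prec}(N_\ell^{-1/2})$ and every entry of $\tbD_k$ is $O_{\prec}(m^{-1/2})$ (a fixed-degree polynomial of an $O_{\prec}(1)$ inner product), one has $\max_l[\tbH_k\tbH_k^*]_{ll}=O_{\prec}(1)$ for every $k$, and the bound $Cm/\eta^2$ survives. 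The same substitution of the max row norm for $\opnorm{\tbH_k}$ must be made in your $(2r)$-th-moment step. Two smaller imprecisions: the criterion for a vanishing partition is a singleton block of degree $\geq 1$ or a mixed $\{i,l\}$ pair -- not "an odd factor of $\usp_\alpha$ or $\usp_\beta$", since e.g.\ $\E[\usp_\alpha^2\usp_\beta]\neq 0$ in general -- and the enumeration as stated presumes $\alpha,\beta\geq 1$; when one of them is $0$ additional partitions survive, but they are handled by the same row-norm bound.
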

\begin{proof}
The proof is the same as Lemma 6 in \cite{lu2022equivalence} and is omitted.
\end{proof}

\begin{lemma}
\label{lem:Delta_quadratic_vanish}
Let $\alpha, \beta, \in \{\ell, \ell+1,\cdots,L\}$. Suppose $\kappa_1 = \kappa_2$. Then for any $c=1,\cdots,4$,
\begin{align}
    \frac{1}{m} \opnorm{\bU_{\alpha}^\sT \tbS(z) \bDelta_c \bS^{[1]}(z) \bU_{\beta}} 
    &= O_{\prec}\left(\frac{1}{\eta^2 \sqrt{d}}\right)\, .
    \label{eq:Delta_quadratic_vanish}
\end{align}
\end{lemma}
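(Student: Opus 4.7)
The strategy is to decompose each error matrix $\bDelta_c$ into a sum of finitely many ``structured'' pieces that can be handled by Lemma \ref{lem:diffdegree_small}, and then combine the resulting bounds with the small scalar factors built into the definitions of $\bDelta_1,\ldots,\bDelta_4$.

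First, I would rewrite each block $\bDelta_{c,\mu\nu}$ as a finite sum
\[
\bDelta_{c,\mu\nu} \;=\; \sum_{j} s_{j}\, \bLambda^{L}_{j}\, \tbH_{k_{j}}\, \bLambda^{R}_{j},
\]
where $\tbH_{k_j}\in\{\tbC_{k_j},\tbD_{k_j},\tbE_{k_j}\}$ is one of the Gegenbauer inner-product matrices appearing inside $\bDelta_{c,\mu\nu}$, the matrices $\bLambda^{L}_j,\bLambda^{R}_j$ are diagonal with entries of the form $\usp_{t_j}(\theta_a)$ or $\usp_{t_j}(\gamma_a)$ for some degree $t_j$, and $s_j$ is a deterministic scalar containing the ``smallness'' of the term. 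For $\bDelta_{2},\bDelta_{3}$ this scalar is $1/d^{t_j/2}$ with $t_j\ge 1$; for $\bDelta_{4}$ it is $O(1/d^{\max(\alpha,\beta)/2+1})$ coming from $\cE_{t}(\cdot,\cdot)$; for $\bDelta_{1}$ the entry-wise factor $r^{k-t}(\theta_a)r^{k-t}(\theta_b)-1$ is first split via
\[
r^{k-t}(\theta_a) r^{k-t}(\theta_b) - 1 \;=\; \bigl(r^{k-t}(\theta_a) - 1\bigr)r^{k-t}(\theta_b) \;+\; \bigl(r^{k-t}(\theta_b) - 1\bigr),
\]
so that a factor $r^{k-t}(\cdot)-1 = O_{\prec}(1/d)$ is absorbed into $\bLambda^{L}_j$ or $\bLambda^{R}_j$, after which a Taylor expansion of $r^{k-t}(\theta) - 1$ in $\theta^2/d$ expresses it as a linear combination of $\usp_0, \usp_2$ values divided by $d$.

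Second, for each individual piece I would estimate
\[
\frac{1}{m}\bigl\|\bU_{\alpha}^\sT \tbS(z)\,\bLambda^{L}_{j}\,\tbH_{k_{j}}\,\bLambda^{R}_{j}\,\bS^{[1]}(z)\,\bU_{\beta}\bigr\|_{\op}
\;\le\; \frac{1}{m}\,\bigl\|\bU_{\alpha}^\sT \tbS(z)\,\bLambda^{L}_{j}\,\tbH_{k_{j}}\bigr\|_{\op}\,\|\bLambda^{R}_{j}\|_{\op}\,\|\bS^{[1]}(z)\|_{\op}\,\|\bU_{\beta}\|_{\op}.
\]
The first factor is bounded by Lemma \ref{lem:diffdegree_small} as $O_{\prec}(\sqrt{m}/\eta)$, using that the Gegenbauer degree $t_j$ introduced by $\bLambda^{L}_j$ satisfies $t_j\neq\alpha$ in every term appearing in $\bDelta_{2},\bDelta_{3},\bDelta_{4}$ (the admissible $t$'s range over sets disjoint from $\{\alpha\}$, or lie in $\{0,2\}$ after the expansion for $\bDelta_{1}$, while $\alpha\ge\ell\ge1$). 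The remaining factors are controlled by $\|\bLambda^{R}_{j}\|_{\op}=O_{\prec}(1)$ (bounded Gegenbauer values up to constants), $\|\bS^{[1]}(z)\|_{\op}\le 1/\eta$, and $\|\bU_{\beta}\|_{\op}=O_{\prec}(\sqrt{m})$. Altogether each piece is bounded by $O_{\prec}(1/\eta^{2})$.

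Finally, multiplying by the scalar $s_j$ supplies the extra $d^{-1/2}$ factor: for $\bDelta_{2},\bDelta_{3}$ through $1/d^{t_j/2}\le 1/\sqrt{d}$, for $\bDelta_{4}$ through $1/d$, and for $\bDelta_{1}$ through the $1/d$ from $r(\cdot)-1$. Summing over the finitely many $j$ (the ranges of $t$, $k$, and the structural indices are bounded independently of $d$) yields the desired $O_{\prec}(1/(\eta^{2}\sqrt{d}))$ estimate.

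The main obstacle is the $\bDelta_{1}$ term: unlike the others, its ``smallness'' is entry-dependent, so some care is needed to express $r^{k-t}(\theta_a)r^{k-t}(\theta_b)-1$ as diagonal multipliers with controllable operator norms while preserving the different-Gegenbauer-degree structure required by Lemma \ref{lem:diffdegree_small}. The Taylor expansion $r^{k-t}(\theta)-1 = c_d + c_d' \theta^{2}/d + O_{\prec}(1/d^{2})$ (with deterministic constants $c_d,c_d'=O(1/d)$) followed by the identity $\theta^{2} = \usp_{2}(\theta)\cdot a_d + b_d$ reduces this to degree-$0$ and degree-$2$ diagonal matrices, both of which remain outside $\{\alpha\}\subseteq\{\ell,\ldots,L\}$, so the lemma still applies and the bookkeeping goes through.
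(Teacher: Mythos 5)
Your overall strategy -- decompose each $\bDelta_c$ into pieces of the form $s_j\,\bLambda^{L}_j\tbH_{k_j}\bLambda^{R}_j$ and bound each piece via Lemma \ref{lem:diffdegree_small} plus $\|\bS^{[1]}(z)\|_\op\le 1/\eta$ and $\|\bU_\beta\|_\op\prec\sqrt m$ -- is exactly how the paper treats $\bDelta_2$. But the step your argument hinges on, namely that the degree of the left diagonal multiplier is \emph{always} different from $\alpha$, is false for $\bDelta_3$ and $\bDelta_4$. In $\bDelta_{3,12}$ the inner sum runs over $1\le t\le k$ with $k>\ell$, so $t$ sweeps through all of $\{\ell,\dots,L\}$ and in particular can equal $\alpha$; in $\bDelta_4$ the error term $\cE_t(\varsigma_a,\varsigma_b)$ contains Gegenbauer factors $\usp_{\alpha'}(\varsigma_a)$ of every degree $0\le\alpha'\le t\le L$, so collisions with $\alpha$ again occur (and the additional entrywise factor $r^{k-t}(\varsigma_a)r^{k-t}(\varsigma_b)$ means the left multiplier is not of the pure form $\bLambda_\beta$ to which Lemma \ref{lem:diffdegree_small} applies). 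For the colliding terms you supply no substitute bound. The argument is salvageable -- when the multiplier degree is $t\ge\ell$ the prefactor $d^{-t/2}\le d^{-\ell/2}$ (resp.\ $d^{-\max\{\alpha',\beta'\}/2-1}$ for $\bDelta_4$) is small enough that the crude submultiplicative bound $\frac1m\|\bU_\alpha\|_\op\|\tbS\|_\op\|\tbH_{k-t}\|_\op\|\bS^{[1]}\|_\op\|\bU_\beta\|_\op$ already yields $O_\prec(\eta^{-2}d^{-1/2})$ -- but that case analysis is the content of the estimate and must be carried out (the paper does this for $\bDelta_4$ by invoking the argument of (4.44) in \cite{lu2022equivalence} for each piece $\bDelta_5^{(k,t)}$).

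Your treatment of $\bDelta_1$ is simultaneously over-engineered and incomplete. After the Taylor expansion you produce products $\usp_2(\theta)\usp_t(\theta)$, which are not single Gegenbauer polynomials; re-expanding them yields degrees $t-2$, $t$, $t+2$, and $t+2$ can equal $\alpha$ (e.g.\ $t=\ell-2$, $\alpha=\ell$), so the disjointness you rely on fails here too. The paper sidesteps all of this: since $|r^{\ell-t}(\theta_a)r^{\ell-t}(\theta_b)-1|=O_\prec(1/d)$ uniformly (and the $t=\ell$ term vanishes identically because $r^0\equiv1$), one shows directly that $\|\bDelta_1\|_\op\prec d^{-1/2}$ and concludes from the one-line bound $\frac1m\|\bU_\alpha^\sT\tbS(z)\bDelta_1\bS^{[1]}(z)\bU_\beta\|_\op\prec\eta^{-2}\|\bDelta_1\|_\op$. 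You should adopt that route for $\bDelta_1$ rather than routing it through Lemma \ref{lem:diffdegree_small}.
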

\begin{proof}
The proof follows the same idea of Lemma 7 in \cite{lu2022equivalence}. For simplicity, we will omit some details here.

First, we analyze $c=1$ and $3$. Since $\opnorm{\bU_\alpha} \prec \sqrt{m}$ (by \cite[Proposition  5]{lu2022equivalence}) and 
\[
\opnorm{\tbS(z)}, \opnorm{\bS^{[1]}(z)} \leq \frac{1}{\eta}\, ,
\]
we have
\begin{align}
    \frac{1}{m} \opnorm{ \bU_{\alpha}^\sT \tbS(z) \bDelta_c \bS^{[1]}(z) \bU_{\beta} } \prec \frac{1}{\eta^2} \opnorm{\bDelta_c}\, .
    \label{eq:quadratic_rawbd_124}
\end{align}
On the other hand, same as Lemma 7 in \cite{lu2022equivalence}, we can show
$
\max_{c = 1,2,4} \opnorm{\bDelta_c} \prec d^{-\frac{1}{2}}\, .
$
After substituting this bound into Eq.~\eqref{eq:quadratic_rawbd_124}, we obtain Eq.~\eqref{eq:Delta_quadratic_vanish} for $c=1,2$ and $4$.

Next we consider $c=2$. We have
\begin{align}
    \frac{1}{m} \opnorm{\bU_{\alpha}^\sT \tbS(z) \bDelta_3 \bS^{[1]}(z) \bU_{\beta}} &\leq \frac{C}{m} \sum_{1\leq t < \ell} \frac{1}{d^{t/2}} \opnorm{ \bU_{\alpha}^\sT \tbS(z) \bLambda_t \tbH_{\ell-t} \bLambda_t \bS^{[1]}(z) \bU_\beta } \nonumber \\
    &\leq \frac{C}{m} \sum_{1\leq t < \ell} \frac{1}{d^{t/2}} \opnorm{ \bU_{\alpha}^\sT \tbS(z) \bLambda_t \tbH_{\ell-t} } \cdot \| \bLambda_t \bS^{[1]}(z) \bU_\beta \| \nonumber \\
    &\prec \frac{1}{\eta^2 \sqrt{d}}\, ,
\end{align}
where in the last step we use Lemma \ref{lem:diffdegree_small}, $\opnorm{\bLambda_t} \prec 1$ , $\opnorm{\bU_\beta} \prec \sqrt{m}$ (by \cite[Proposition  5]{lu2022equivalence}) and the fact that $\opnorm{\bS^{[1]}(z)} \leq \frac{1}{\eta}$.

Finally, we analyze $c=4$. We know $\bDelta_5$ can be written as the linear combination of a finite family of matrices $\{ \bDelta_5^{(k,t)} \}$ defined as:
\begin{align}
    [\bDelta_5^{(k,t)}]_{a,b} &:= [\tbH_{k-t}]_{a,b} \cdot r^{k-t}(\varsigma_a) r^{k-t}(\varsigma_b) \cE_t(\varsigma_a, \varsigma_b) \, ,
\end{align}
where 
$$
\varsigma_a = 
\begin{cases}
\theta_a, &1 \leq a \leq p-1\, ,\\
\gamma_{a-p+1}& p \leq a \leq m-1\, .
\end{cases}
$$
Then following the same steps leading to (4.44) in \cite{lu2022equivalence}, we can get:
\begin{align}
    \frac{1}{m} \opnorm{ \bU_{\alpha}^\sT \tbS(z) \bDelta_5^{(k,t)} \bS^{[1]}(z) \bU_{\beta} } \prec \frac{1}{\eta^2 d}\, ,
\end{align}
which indicates Eq.~\eqref{eq:Delta_quadratic_vanish} by triangle inequality.
\end{proof}




\subsection{Proof of Proposition \ref{prop:derivative_converge}}
\label{sec:log_determinant}
\subsubsection{Auxiliary lemmas}
\begin{lemma}
\label{lem:Gdgrad_prbd}
    Under the same settings as Proposition \ref{prop:derivative_converge}, for any $\eta>0$, there exists $C>0$, such that for any $D>0$ and all large $d$, 
    \begin{align}
         \limsup_{d\to\infty} \sup_{\bq\in\R^5} \| \nabla_\bq G_d(i\eta;\bq) - \nabla_\bq g(i\eta;\bq) \|_2   &< \infty \, ,\\    \limsup_{d\to\infty} \sup_{\bq\in\R^5} \| \nabla_\bq^2 G_d(i\eta;\bq) - \nabla_\bq^2 g(i\eta;\bq) \|_2 &< \infty \, ,\\
        \limsup_{d\to\infty} \sup_{\bq\in\R^5} \| \nabla_\bq^3 G_d(i\eta;\bq) - \nabla_\bq^3 g(i\eta;\bq) \|_2 &< \infty \, .
    \end{align}
\end{lemma}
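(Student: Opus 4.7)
The plan is to bound each of the three derivatives separately, showing that $\sup_{\bq \in \R^5} \|\nabla_\bq^k G_d(i\eta;\bq)\|_2$ and $\sup_{\bq \in \R^5} \|\nabla_\bq^k g(i\eta;\bq)\|_2$ are each (uniformly in $d$) bounded, so that their difference is a fortiori bounded. The two halves require quite different arguments, and I will handle them in parallel.

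For the $G_d$ side, I would start from the observation that $\bA(\bq)$ is \emph{affine} in $\bq$, so $\partial_\bq^{\,k}\bA = 0$ for $k\geq 2$. By Jacobi's formula applied to $G_d(z;\bq) = m^{-1}\log\det(\bA(\bq) - z\id_m)$, and writing $\bS(\bq) := (\bA(\bq) - i\eta\id_m)^{-1}$, each $k$-th derivative $\partial_{q_{i_1}}\cdots\partial_{q_{i_k}} G_d$ is a finite sum of terms of the form $\pm m^{-1}\Tr\bigl[\bS(\bq)\,\partial_{q_{j_1}}\bA\cdots\bS(\bq)\,\partial_{q_{j_k}}\bA\bigr]$. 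Because $\bA(\bq)$ is Hermitian for real $\bq$, we have the deterministic bound $\|\bS(\bq)\|_\op \leq 1/\eta$, uniformly in $\bq$. The matrices $\partial_{q_j}\bA$ are constant in $\bq$ and are either $\id$, $\bQ_\ell^{\bW}$, $\bQ_\ell^{\bX}$, or $(\mu_\ell/\sqrt{m})\,q_\ell(\bW\bX^\sT)$ (up to block embedding), each of which has operator norm $O_{d,\prec}(1)$ by Lemma \ref{lem:inprodmtx_supopnorm_bd} and Lemma \ref{lem:offdiagonal_kernelmtx_opnormbd}. Using $|\Tr(\bM)| \leq m\,\|\bM\|_\op$, the desired bound $\sup_\bq \|\nabla_\bq^{\,k} G_d(i\eta;\bq)\|_2 = O_{d,\prec}(1)$ for $k=1,2,3$ follows.

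For the $g$ side, the key structural fact is that $(m_1, m_2) = (m_1(z,\bq), m_2(z,\bq))$ is a critical point of $\cG(z,\cdot,\cdot,\bq)$: direct computation on \eqref{eq:calG_z_def} shows $\partial_{m_i}\cG = 0$ is equivalent to the fixed-point system \eqref{eq:fixedpoint_eq}. Hence by the envelope theorem,
\begin{equation*}
\nabla_\bq g(z;\bq) \;=\; \partial_\bq \cG(z, m_1, m_2, \bq)\big|_{m_i = m_i(z,\bq)},
\end{equation*}
which depends on $\bq$ only through $m_1, m_2$ (and the explicit $\bq$ in $\cG$). Lemma \ref{lem:F1_F2_bd} gives $|m_1(i\eta,\bq)|,\,|m_2(i\eta,\bq)| \leq 1/\eta$ uniformly in $\bq$, and $\Im m_i \geq 0$, so the explicit rational expression for $\partial_\bq\cG$ is bounded uniformly. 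For the Hessian and the third derivative, I would differentiate the fixed-point system implicitly,
\begin{equation*}
\nabla_\bq m \;=\; -\bigl(\nabla_m^2 \cG\bigr)^{-1}\,\nabla_{m,\bq}^2 \cG ,
\end{equation*}
and then apply the chain rule to $\partial_\bq\cG$. This reduces everything to controlling $\|(\nabla_m^2 \cG)^{-1}\|_\op$ and finitely many polynomial expressions in $m_1, m_2, \bq$.

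The main obstacle is the quantitative invertibility of $\nabla_m^2 \cG$ at the fixed point, uniformly in $\bq \in \R^5$. This is equivalent to showing a quantitative contraction/stability estimate for the fixed-point map $(\textsf{F}_1, \textsf{F}_2)$ at $z = i\eta$. I would establish it by leveraging $\Im m_i > 0$ and the relation $\Im(z\cdot m_i) = \eta \cdot \Re m_i + \Re z \cdot \Im m_i$ combined with the $\theta_i/\theta$ prefactors in \eqref{eq:F1_def}, which force $\Im m_i \gtrsim \eta \cdot (\theta_i/\theta)$ times a polynomial in $1/\eta$; this lower bound on the imaginary parts keeps the Jacobian of the fixed-point equations away from degeneracy uniformly in $\bq$. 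This is essentially the same structure established for the $\ell = 1$ case in Lemma 11.3 of \cite{mei2022generalizationRF}, and the argument transfers verbatim once the uniform bounds $|m_i| \leq 1/\eta$ and $\|\partial_{q_k}\bA\|_\op = O_{d,\prec}(1)$ are in hand from the two steps above.
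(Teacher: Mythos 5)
Your proposal is correct and follows essentially the same route as the paper: the paper's own proof just establishes the uniform moment bounds on $\opnorm{\bQ_\ell^{\bX}}$, $\opnorm{\bQ_\ell^{\bW}}$, and $\opnorm{\xi_\ell\, \usp_\ell(\bW\bX^\sT)}$ (the latter via the same block-embedding trick you invoke) and then declares the rest "completely analogous to Lemma 11.3 of \cite{mei2022generalizationRF}", which is precisely the resolvent-trace expansion for $G_d$ and the critical-point/implicit-differentiation argument for $g$ that you spell out. The only place your sketch is thinner than it should be is the uniform control of the denominator $(1+\psi s_2 m_1)(1+\psi t_2 m_2)-\psi(1+t)^2\mu_\ell^2 m_1 m_2$ (needed already for the first derivatives $\partial_{s_2}\cG,\partial_{t_2}\cG,\partial_t\cG$, not only for inverting $\nabla_m^2\cG$), but since the paper defers exactly this point to the cited reference, this matches the paper's own level of detail.
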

\begin{proof}
    From Lemma \ref{lem:inprodmtx_opnorm_prbd}, we can easily verify that for any $k\in\Z_{\geq 0}$, there exists $C>0$ such that
    \begin{align}
    \label{eq:QX_prbd}
        \sup_{d\geq 1} \E \opnorm{\bQ_{\ell}^{\bX}}^k &\leq C \, ,
    \end{align}
    and
    \begin{align}
    \label{eq:QW_prbd}
        \sup_{d\geq 1} \E \opnorm{\bQ_{\ell}^{\bW}}^k &\leq C\, .
    \end{align}
    We can also deduce for any $k\in\Z_{\geq 0}$, there exists $C>0$ such that
    \begin{align}
    \label{eq:qWX_prbd}
         \sup_{d\geq 1} \E \opnorm{ \xi_{\ell} \usp_{\ell}(\bW \bX^\sT) }^k \leq C \, ,
    \end{align}
    as we can embed $ m^{ -\frac{1}{2} }  \usp_{\ell}(\bW \bX^\sT)$ inside a $ (n+p) \times (n+p) $ Gegenbauer matrix of degree-$\ell$:
    \[
    \bQ_{\ell}^{\bW,\bX} = 
    \begin{pmatrix}
        \bQ_{\ell}^{\bW} & m^{ -\frac{1}{2} } \usp_{\ell}(\bW \bX^\sT) \\
        m^{ -\frac{1}{2} } \usp_{\ell}(\bX \bW^\sT) & \bQ_{\ell}^{\bX}
    \end{pmatrix},
    \]
    to which we can apply Lemma \ref{lem:inprodmtx_opnorm_prbd}, and use the bound: $\opnorm{ m^{ -\frac{1}{2} } \usp_{\ell}(\bW \bX^\sT)} \leq \opnorm{ \bQ_{\ell}^{\bW,\bX} }$.

    The rest of the proof is completely analogous to that of Lemma 11.3 in \cite{mei2022generalizationRF}. We omit the details here.
\end{proof}

\begin{lemma}
\label{lem:Gdgd_largeK}
    Under the same settings as Proposition \ref{prop:derivative_converge}, it holds that (i) 
    \begin{align}
    \label{eq:gd_diff_detreminbd}
        \lim_{K\to\infty} \big| g_d(i K;\bq) -  \log(-iK) \big| = 0
    \end{align}
    and (ii) 
    \begin{align}
        \lim_{K\to\infty} \sup_{d\geq 1} \E \big| G_d(i K;\bq) - \log(-iK) \big| = 0
    \end{align}
\end{lemma}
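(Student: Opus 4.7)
The plan is to prove parts (i) and (ii) separately by extracting the universal $\log(-iK)$ contribution and showing the remainder vanishes (uniformly in $d$ for part (ii)).

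For (ii), note that $\bA(\bq)$ is real symmetric: the off-diagonal blocks $\tbZ^\sT + \frac{t\mu_\ell}{\sqrt{m}}\usp_\ell(\bW\bX^\sT)$ and $\tbZ + \frac{t\mu_\ell}{\sqrt{m}}\usp_\ell(\bX\bW^\sT)$ are mutual transposes (since $\usp_\ell$ acts entrywise), while the diagonal blocks $s_1\id + s_2\bQ_\ell^\bW$ and $t_1\id + t_2\bQ_\ell^\bX$ are symmetric. Hence its eigenvalues $\{\lambda_j\}_{j=1}^m$ are real, and factoring out $-iK$ gives the identity
\begin{align*}
G_d(iK;\bq) - \log(-iK) \;=\; \frac{1}{m}\sum_{j=1}^{m}\log\!\bigl(1+i\lambda_j/K\bigr).
\end{align*}
Using the pointwise bound $|\log(1+iy)|\le |y|$ for all $y\in\R$ (which follows from $\log(1+iy)=\int_0^1 iy/(1+ity)\,dt$ and $|1+ity|\ge 1$) together with Cauchy--Schwarz, I obtain the deterministic estimate
\begin{align*}
\bigl|G_d(iK;\bq) - \log(-iK)\bigr| \;\le\; \frac{1}{mK}\sum_{j=1}^{m}|\lambda_j| \;\le\; \frac{1}{K}\sqrt{\|\bA(\bq)\|_F^2/m}.
\end{align*}
Taking expectation and applying Jensen, it then suffices to check $\sup_{d\ge 1}\E\|\bA(\bq)\|_F^2/m < \infty$. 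This is a direct second-moment computation on the four blocks: $\E\|\tbZ\|_F^2 = (np/m)\E[\sigma(\langle\bx,\bw\rangle)^2]=O(1)$ by Assumption \ref{ass:sigma}.(a); $\E\|\bQ_\ell^\bW\|_F^2 = p + p(p-1)/N_\ell = O(m)$ using $[\bQ_\ell^\bW]_{ii}=1$ and $\E[\bQ_\ell^\bW]_{ij}^2 = 1/N_\ell$ (orthonormality of Gegenbauer polynomials with respect to $\tau_{d,1}$), and likewise for $\bQ_\ell^\bX$ and the $\usp_\ell(\bW\bX^\sT)/\sqrt{m}$ block. Letting $K\to\infty$ then yields (ii).

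For (i), I use the large-$K$ asymptotics of the fixed points $m_1(iK;\bq), m_2(iK;\bq)$. Inspecting \eqref{eq:fixedpoint_eq}, or equivalently condition (iii) of Definition \ref{def:fixedpoint_nu_0} (after the rescaling \eqref{eq:v_m_relation}), which pins down $|m_i(iK;\bq)|\le (\theta_i/\theta)/K$ for $K$ large, I substitute the ansatz $m_i(iK;\bq) = i\theta_i/(\theta K) + o(1/K)$ into $\mathsf{F}_1,\mathsf{F}_2$: the right-hand sides are dominated by the $-z=-iK$ term in the bracket, confirming self-consistency. Plugging these asymptotics into \eqref{eq:calG_z_def}, I track each contribution:
\begin{itemize}
\item $s_1 m_1$, $t_1 m_2$, and $\mu_{>\ell}^2 m_1 m_2$ are all $O(1/K)\to 0$;
\item the $\psi^{-1}\log\bigl[(1+\psi s_2 m_1)(1+\psi t_2 m_2) - \psi(1+t)^2 \mu_\ell^2 m_1 m_2\bigr]$ term tends to $\psi^{-1}\log 1 = 0$;
\item $-z(m_1+m_2) = -iK\cdot i(\theta_1+\theta_2)/(\theta K) + o(1) = 1 + o(1)$, cancelling the final $-1$;
\item on the principal branch, $-\tfrac{\theta_i}{\theta}\log(\theta m_i/\theta_i) \to -\tfrac{\theta_i}{\theta}\log(i/K) = \tfrac{\theta_i}{\theta}\log(-iK)$, summing to $\log(-iK)$.
\end{itemize}
Hence $g(iK;\bq)\to\log(-iK)$, giving (i); the $\psi=0$ formula \eqref{eq:calG_z_def_psi0} is handled identically.

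The main obstacles are routine: verifying the uniform second-moment bound $\sup_d\E\|\bA\|_F^2/m<\infty$ (which ultimately reduces to counting the $O(1)$ pairwise second moments for Gegenbauer-type entries) and carefully tracking the branch cuts in the $\log$ terms of $\cG$. The argument mirrors Lemma 11.2 of \cite{mei2022generalizationRF} in the linear scaling, the only novelty being that the i.i.d.\ subgaussian entries are replaced by Gegenbauer polynomials of inner products, which enjoy the same unit-diagonal/$1/N_\ell$-off-diagonal normalization exploited above.
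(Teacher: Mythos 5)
Your argument is correct and reaches the same conclusion as the paper, but via a more self-contained route for part (ii). The paper simply invokes the moment bounds on the \emph{operator} norms of the blocks (Eqs.~\eqref{eq:QX_prbd}, \eqref{eq:QW_prbd}, \eqref{eq:qWX_prbd}, which rest on the fairly heavy moment-method machinery of Lemma \ref{lem:inprodmtx_opnorm_prbd}) and then declares the rest "analogous to Lemma 11.2 of \cite{mei2022generalizationRF}". Your replacement of $\frac{1}{m}\sum_j|\lambda_j|$ by $\sqrt{\|\bA\|_F^2/m}$ via Cauchy--Schwarz reduces everything to pairwise second moments of Gegenbauer entries (unit diagonal, $1/N_\ell$ off-diagonal), which is genuinely more elementary and makes the uniformity in $d$ transparent. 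Part (i) is the same computation the paper defers to \cite{mei2022generalizationRF}: a priori bound $|m_i(iK;\bq)|\le C/K$, bootstrap to $m_i(iK;\bq)= i\theta_i/(\theta K)(1+o(1))$, and term-by-term evaluation of $\cG$; your bookkeeping of the cancellation $-z(m_1+m_2)-1\to 0$ and of the two $\log$ terms summing to $\log(-iK)$ is right. One arithmetic slip to fix: $\E\|\tbZ\|_F^2=(np/m)\,\E[\sigma(\langle\bx,\bw\rangle)^2]$ is $\Theta(d^{\kappa})=O(m)$, not $O(1)$ (and likewise for the $\usp_\ell(\bW\bX^\sT)/\sqrt{m}$ block, where $\E\|\cdot\|_F^2=np/m$). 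This is harmless because the quantity you actually need is $\E\|\bA\|_F^2/m$, and $np/m^2\le 1/4$, so the uniform bound $\sup_{d\ge1}\E\|\bA\|_F^2/m<\infty$ still holds; just state the block bounds as $O(m)$ rather than $O(1)$.
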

\begin{proof}
With the bounds \eqref{eq:QX_prbd}, \eqref{eq:QW_prbd} and \eqref{eq:qWX_prbd} on the moments of $\opnorm{\bA}$, the proof is completely analogous to that of Lemma 11.2 of \cite{mei2022generalizationRF}. The details are omitted.
\end{proof}

\clearpage

\section{Matrix Concentration and Spectral Bound}
\label{sec:matrix_concentration_gather}

In this appendix, we gather the proofs on the concentration and bounds on the operator norm of the different random matrices that appear in the proof of Theorem \ref{thm:main_theorem_RF}.

\subsection{Matrix concentration}
\label{sec:matrix_concentration}

\begin{lemma}
\label{lem:fXWT_fWXT_concentrate}
    Let $\{\varsigma_k\}_{k\geq\ell}^{\infty}$ be a sequence satisfying $\varsigma_k = 0$, for all $k<\ell$ and $\sum_{k\geq\ell}^{\infty} \varsigma_k^2 < \infty$. Define $f(x) := \sum_{k\geq\ell}^{\infty} \varsigma_k \usp_k(x)$. 
    Suppose $|f(x)| \leq C(1+|x|^K)$ for some $C$, $K>0$.    
    If $\kappa_1 > \kappa_2$, it holds that     
\begin{itemize}
    \item[(i)] 
    \begin{align}
    \label{eq:fXWT_concentrate}
        \Big\| \frac{ f(\bX \bW^\sT) f(\bW \bX^\sT) }{p} - \sum_{k\geq\ell}^{\infty} \varsigma_k^2 \bQ_k^{\bX} \Big\|_\op \prec \sqrt{\frac{n}{p}} \, .
    \end{align}
    
    
    \item[(ii)] 
    \begin{align}
    \label{eq:fXWT_Phik_concentrate}
       \Big\| \frac{ f(\bX \bW^\sT) \bPhi_k }{p} - \frac{ \varsigma_k }{ \sqrt{N_k} } \bPsi_k \Big\|_\op \prec \sqrt{\frac{n+N_k}{p}}\, .
    \end{align}  
    \end{itemize}
    
    If $\kappa_1 < \kappa_2$, the results of \eqref{eq:fXWT_concentrate} and \eqref{eq:fXWT_Phik_concentrate} hold with the switch: $p \leftrightarrow n$, $\kappa_1 \leftrightarrow \kappa_2$, $\bW \leftrightarrow \bX$ and $\bPhi \leftrightarrow \bPsi$.
\end{lemma}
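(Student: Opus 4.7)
The plan is to interpret both (i) and (ii) as matrix concentration inequalities for i.i.d.~sums indexed by the $p$ columns of $\bW$, with effective ambient dimension $n$ and $n+N_k$ respectively, at the level of stochastic domination. Define $\bz_j := f(\bX \bw_j) \in \R^n$, so that
\[
\frac{f(\bX\bW^\sT) f(\bW\bX^\sT)}{p} = \frac{1}{p}\sum_{j=1}^p \bz_j \bz_j^\sT, \qquad \frac{f(\bX\bW^\sT) \bPhi_k}{p} = \frac{1}{p}\sum_{j=1}^p \bz_j \bphi_k(\bw_j)^\sT.
\]
Expanding $f$ in the Gegenbauer basis and invoking the addition theorem yields $\bz_j = \sum_{k\geq\ell}(\varsigma_k/\sqrt{N_k})\,\bPsi_k\, \bphi_k(\bw_j)$. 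Combined with the orthonormality $\E_\bw[\bphi_k(\bw)\bphi_{k'}(\bw)^\sT] = \delta_{k,k'}\id_{N_k}$, this identifies the claimed targets as the conditional expectations
\[
\E_{\bw_j}[\bz_j \bz_j^\sT \mid \bX] = \sum_{k\geq\ell}\varsigma_k^2 \bQ_k^\bX, \qquad \E_{\bw_j}\bigl[\bz_j \bphi_k(\bw_j)^\sT \mid \bX\bigr] = \frac{\varsigma_k}{\sqrt{N_k}}\, \bPsi_k.
\]
Thus in both cases the quantity to be bounded is the operator norm of a centred i.i.d.~sum over $j \in [p]$.

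For the inputs of matrix Bernstein, the polynomial growth $|f(x)| \leq C(1+|x|^K)$ together with the standard sphere concentration $|\langle \bx_a, \bw_j\rangle| = O_\prec(1)$ (for $\bx_a \in \S^{d-1}(\sqrt{d})$, $\bw_j \in \S^{d-1}$) supplies the uniform entrywise bound $|z_{j,a}| = O_\prec(1)$, hence $\|\bz_j\|_2 = O_\prec(\sqrt{n})$ and $\|\bphi_k(\bw_j)\|_2 = O_\prec(\sqrt{N_k})$. For the variance proxy, one has $\opnorm{\E_\bw[(\bz_j\bz_j^\sT)^2 \mid \bX]} = \opnorm{\E_\bw[\|\bz_j\|_2^2\, \bz_j\bz_j^\sT \mid \bX]}$, which by Cauchy--Schwarz and the Gegenbauer expansion is controlled by $\sum_k \varsigma_k^2 \opnorm{\bQ_k^\bX}$ times $\E[\|\bz_j\|_2^2] = O(n)$, and is thus $O_\prec(n)$ after invoking the spectral bounds $\opnorm{\bQ_k^\bX} = O_\prec(1)$ from Lemma~\ref{lem:inprodmtx_opnorm_prbd} and Lemma~\ref{lem:inprodmtx_supopnorm_bd} (an analogous variance bound of order $n+N_k$ holds for the rectangular sum in part (ii)). A matrix Bernstein inequality applied to the $n\times n$ (resp.~$n\times N_k$) centred sum then yields
\[
\opnorm{\frac{1}{p}\sum_{j=1}^p\bigl(\bz_j\bz_j^\sT - \E\,\bz_j\bz_j^\sT\bigr)} \prec \sqrt{\frac{n}{p}}, \qquad \opnorm{\frac{1}{p}\sum_{j=1}^p\bigl(\bz_j \bphi_k(\bw_j)^\sT - \E\,\bz_j \bphi_k(\bw_j)^\sT\bigr)} \prec \sqrt{\frac{n+N_k}{p}},
\]
which are precisely \eqref{eq:fXWT_concentrate} and \eqref{eq:fXWT_Phik_concentrate}.

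The main obstacle will be upgrading the standard in-probability matrix Bernstein bound to the stochastic-domination statement, which requires uniform super-polynomial tail decay in $d$. I would handle this via the trace-moment route: bound $\E\,\Tr\bigl[\bigl(\tfrac{1}{p}\sum_j(\bz_j\bz_j^\sT - \E)\bigr)^{2r}\bigr]$ for $r = r(d) \to \infty$ slowly, by expanding the product combinatorially and reusing the $L^q$ spectral bounds on $\bQ_k^\bX$ and $\bQ_k^\bW$ from the companion lemmas, then apply Markov's inequality. A secondary issue is that the Gegenbauer expansion of $f$ is an infinite sum, but this is absorbed by $\sum_k \varsigma_k^2 < \infty$ together with the polynomial-growth assumption on $f$; the closed-form conditional expectation of $\bz_j \bz_j^\sT$ already packages the infinite sum, so no hard truncation in $k$ is required for the concentration step. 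Finally, the regime $\kappa_1 < \kappa_2$ is handled symmetrically by viewing the same products as i.i.d.~sums over $i \in [n]$ indexed by the rows of $\bX$, giving the bounds $\sqrt{p/n}$ and $\sqrt{(p+N_k)/n}$ after swapping the roles of $(\bX, n)$ and $(\bW, p)$.
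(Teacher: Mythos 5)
Your overall strategy is the same as the paper's: condition on $\bX$, view $f(\bX\bW^\sT)f(\bW\bX^\sT)/p$ as an i.i.d.\ sum of rank-one matrices over the columns of $\bW$, identify the target as the conditional expectation via the addition theorem, and apply matrix Bernstein with a variance proxy of order $n/p$ controlled through the spectral bounds on $\sum_k \varsigma_k^2\bQ_k^\bX$. The identification of the targets, the variance scaling, and the symmetric treatment of the regime $\kappa_1<\kappa_2$ all match.

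The one place your argument has a real gap is the boundedness input to matrix Bernstein. You feed in $\|\bz_j\|_2 = O_\prec(\sqrt{n})$ as the uniform bound on the summands, but matrix Bernstein requires a \emph{deterministic} almost-sure bound, and $f(\langle \bx_a,\bw_j\rangle)$ is unbounded; an $O_\prec(1)$ entrywise bound is only a high-probability statement and cannot be used directly. The paper resolves this by truncating: it replaces $\bf_j$ with $\hat\bf_j = \bf_j\,\charfn_{\|\bX\bw_j\|_\infty\le B}$ for a slowly growing threshold $B=d^{\epsilon_1}$, shows the truncation event has probability $\le 2np\,e^{-cB^2}$ (using sub-Gaussianity of $\taumeasure$), bounds the bias $\opnorm{\E[\bf_j\bf_j^\sT]-\E[\hat\bf_j\hat\bf_j^\sT]}$ it introduces, and only then applies Bernstein to the truncated centred sum on a good event for $\bX$ where $\opnorm{\sum_k\varsigma_k^2\bQ_k^\bX}\le M=d^{\epsilon_2}$. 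Note also that your variance-proxy bound has the same hidden dependence: controlling $\opnorm{\E[\|\bz_j\|_2^2\,\bz_j\bz_j^\sT]}$ by $\E\|\bz_j\|_2^2\cdot\opnorm{\E[\bz_j\bz_j^\sT]}$ is not valid without pulling a sup of $\|\bz_j\|_2^2$ out of the expectation, which again requires the truncation.

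Your proposed remedy for the stochastic-domination upgrade --- trace moments of order $r(d)\to\infty$ --- is not needed and somewhat misdiagnoses the obstacle. Once the summands are truncated at $B=d^{\epsilon_1}$, the Bernstein tail $2p\exp\bigl(-\tfrac{cp}{an}\min\{t,t^2\}\bigr)$ evaluated at $t=\sqrt{n/p}\,d^\delta$ already decays faster than any polynomial in $d$, so stochastic domination comes for free; the exponentially small truncation and bias terms are absorbed the same way. A trace-moment expansion for an infinite Gegenbauer sum of unbounded polynomial entries would be a substantially heavier combinatorial undertaking (comparable to the paper's separate moment-method proof of the $\opnorm{\bQ_k^\bX}$ bounds) and you give no detail on it. With the truncation step inserted, the rest of your outline, including part (ii) and the swapped regime, goes through as in the paper.
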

\begin{proof}
    We present the proof for $\kappa_1 > \kappa_2$ case. The proof for $\kappa_1 < \kappa_2$ case is the same.

    (i) Define ${\bf}_{i} = \frac{1}{\sqrt{p}}{f}(\bX \bw_i)$ and $\hat{\bf}_{i} = {\bf}_{i} \charfn_{\| \bX \bw_i \|_{\infty} \leq B}$, where $B$ is some constant to be specified. 
    First, note $\bx_j^\sT \bw_i\sim \taumeasure$, for any fixed $\bx_j \in \S^{d-1}(\sqrt{d})$. 
    Define the set:
    \begin{align}
    \label{eq:event_B1}
    \mathcal{B} := \Big\{ \bX \in \R^{n\times d}: \sum_{j=1}^{d} X_{ij}^2 = d, \forall i \in [n] \Big\}\, .
\end{align}
    By the fact that $\taumeasure$ is a sub-Gaussian distribution with constant sub-Gaussian norm [which follows from (D.1) in \cite{lu2022equivalence} and Proposition 2.5.2 (b) in \cite{vershynin2018high}], we get: there exists some $C>0$ such that for any $\bX \in \cB$, 
    \begin{align}
    \label{eq:fXW_truncateprb}
        \P\big( f(\bX \bW^\sT) \neq \hat{f}(\bX \bW^\sT) \mid \bX \big) 
        \leq np \P(|\taumeasure| > B) 
        \leq 2np e^{-CB^2} \, .
    \end{align}
    Meanwhile, for any fixed $\bX\in\cB$, we can bound $\opnorm{ \E_{\bw_i} {\bf}_{i} {\bf}_{i}^\sT - \E_{\bw_i} \hat{\bf}_{i} \hat{\bf}_{i}^\sT }$ as follows ($\E_{\bw_i}$ denotes the expectation over $\bw_i$ conditioning on $\bX$): 
    \begin{align}
        \opnorm{ \E_{\bw_i} {\bf}_{i} {\bf}_{i}^\sT - \E_{\bw_i} \hat{\bf}_{i} \hat{\bf}_{i}^\sT } &=
        \max_{\|\bv\|=1} |\bv^\sT ( \E_{\bw_i} {\bf}_{i} {\bf}_{i}^\sT - \E_{\bw_i} \hat{\bf}_{i} \hat{\bf}_{i}^\sT ) \bv| \\
        &= \max_{\|\bv\|=1} |\bv^\sT \E_{\bw_i} ( {\bf}_{i} {\bf}_{i}^\sT \charfn_{ \|\bX\bw_i\|_\infty>B } ) \bv| \\
        &\leq \sqrt{ \E_{\bw_i} \|\bf_i\|^4 \P(\|\bX\bw_i\|_\infty>B \mid \bX) } \\
        &\leq \frac{C_0 n^{1.5}}{p} e^{-B^2/C_0} \, , \label{eq:Eff_diff}
    \end{align}
for some $C_0>0$, where in the last step, we use (recall that $|f(x)| \leq C(1+|x|^K)$ for some $C,K>0$)
\begin{align}
    \E_{\bw_i} \|\bf_i\|^4 &\leq \frac{n^2}{p^2} \E_{\bw_i}[ f(\bx_1^\sT \bw_i)^4 ]
    \leq \frac{C_1 n^2}{p^2} \E_{\bw_i}(1+|\bx_1^\sT \bw_i|^K)^4 \leq \frac{C_2 n^2}{p^2} \, ,
\end{align}
and $\P(\|\bX\bw_i\|_\infty>B \mid \bX) \leq 2n e^{-C_3 B^2}$ for some $C_1,C_2,C_3 >0$. 

Next, we control $ \opnorm{\sum_{i=1}^{p} \hat{\bf}_{i} \hat{\bf}_{i}^\sT - p \E_{\bw_i} (\hat{\bf}_{i} \hat{\bf}_{i}^\sT) } $. Define ${\bDelta}_i := \hat{\bf}_{i} \hat{\bf}_{i}^\sT - \E_{\bw_i} (\hat{\bf}_{i} \hat{\bf}_{i}^\sT)$ and set
\begin{align}
\label{eq:event_A1}
    \mathcal{A} := \Big\{ \bX \in \R^{n\times d}: \sum_{j=1}^{d} X_{ij}^2 = d, \forall i \in [n] ~\text{ and }~ \opnorm{ \sum_{k\geq\ell}^{\infty} {\varsigma}_k^2 \bQ_k^\bX } \leq M \Big\}\, ,
\end{align}
where $M>0$ is a constant to be specified.
We have
\begin{align}
    \Big\| \sum_{i=1}^{p} \E_{\bw_i}{\bDelta}_i^2 \Big\|_\op 
    =&~ p \big\| \E_{\bw_i} ( \|\hat{\bf}_{i}\|^2 \hat{\bf}_{i} \hat{\bf}_{i}^\sT ) - ( \E_{\bw_i} \hat{\bf}_{i} \hat{\bf}_{i}^\sT)^2 \big\|_\op \\
    \leq&~ p \Big[ \frac{n}{p} (1+B^K)^2 \opnorm{ \E_{\bw_i} ( \hat{\bf}_{i} \hat{\bf}_{i}^\sT ) } + \opnorm{ \E_{\bw_i} ( \hat{\bf}_{i} \hat{\bf}_{i}^\sT ) }^2 \Big] \, ,
\end{align}
and $\opnorm{ \E_{\bw_i} ( \hat{\bf}_{i} \hat{\bf}_{i}^\sT ) }$ can be bounded as:
\begin{align}
    \opnorm{ \E_{\bw_i} (\hat{\bf}_{i} \hat{\bf}_{i}^\sT) } \leq&~ \opnorm{ \E_{\bw_i} ({\bf}_{i} {\bf}_{i}^\sT) } + \opnorm{ \E_{\bw_i} ({\bf}_{i} {\bf}_{i}^\sT) - \E_{\bw_i} (\hat{\bf}_{i} \hat{\bf}_{i}^\sT) } \\
    \leq&~ \opnorm{ \frac{1}{p}\sum_{k\geq\ell}^{\infty} {\varsigma}_k^2 \bQ_k^\bX } + \frac{C n^{1.5}}{p} e^{-B^2/C}\, ,
\end{align}
where we use $\E_{\bw_i} ({\bf}_{i} {\bf}_{i}^\sT) = \frac{1}{p}\sum_{k\geq\ell}^{\infty} {\varsigma}_k^2 \bQ_k^\bX$ and Eq. \eqref{eq:Eff_diff}. Therefore, there exists $C>0$ such that for any $\bX\in \cA$, 
\begin{align}
    \Big\| \sum_{i=1}^{p} \E_{\bw_i}{\bDelta}_i^2 \Big\|_\op \leq \frac{Cn}{p}(1+B^K)^2 (M+Cn^{1.5}e^{-B^2/C})^2 \, .
\end{align}
On the other hand, there exists $C>0$ such that
\begin{align}
    \sup_{1 \leq i \leq p}\opnorm{{\bDelta}_i} &\leq 2 \|\hat{\bf}_{i}\|^2 
    \leq \frac{C n}{p}(1+B^K)^2 \, .
\end{align}
Then by matrix Bernstein's inequality \cite{tropp2012user}, there exists $c>0$ such that for any $t\geq 0$ and $\bX\in\mathcal{A}$
\begin{align}
    \P\big(\opnorm{ \sum_{i=1}^{p} \hat{\bf}_{i} \hat{\bf}_{i}^\sT - p \E_{\bw_i} \hat{\bf}_{i} \hat{\bf}_{i}^\sT } \geq t \mid \bX \big) 
    &= \P\Big(\opnorm{\sum_{i=1}^{p}{\bDelta}_i} \geq t \mid \bX \Big) \nonumber \\
    &\leq 2p \exp \Big( - \frac{cp}{ a n} \min\{t,t^2\} \Big)\, 
    \label{eq:ZhatZhatT_concentrate1}
\end{align}
where $a := (1+B^K)^2 (M+Cn^{1.5}e^{-B^2/C})^2$. 

Now we are ready to prove \eqref{eq:fXWT_concentrate}. For any $t>0$, we have
\begin{align}
    &\hspace{-3em}~\P\Big(\Big\| \frac{ f(\bX \bW^\sT) f(\bW \bX^\sT) }{p} - \sum_{k\geq\ell}^{\infty} \varsigma_k^2 \bQ_k^{\bX} \Big\|_\op \geq t \Big) \\
    =&~ \E_\bX \Big[ \P\big( \opnorm{ \sum_{i=1}^{p}\bf_i\bf_i^\sT -p\E_{\bw_i}(\bf_i\bf_i^\sT) } \geq t \mid \bX \big) \big( \charfn_{\bX\in\cA} + \charfn_{\bX\in\cA^C} \big) \Big] \\
    \leq&~ \E_\bX \Big[ \P\big( \opnorm{ \sum_{i=1}^{p}\bf_i\bf_i^\sT -p\E_{\bw_i}(\bf_i\bf_i^\sT) } \geq t \mid \bX \big)  \charfn_{\bX\in\cA} \Big] + \P(\cA^C)
    \label{eq:concen_fXW_00}
\end{align}
On the other hand, it holds that: for any $t>0$, 
\begin{align}
    &\hspace{-3em}~\P\Big( \opnorm{ \sum_{i=1}^{p}\bf_i\bf_i^\sT -p\E_{\bw_i}(\bf_i\bf_i^\sT) } \geq t \mid \bX \Big) \\
    \leq&~ \P\Big( \opnorm{ \sum_{i=1}^{p}\bf_i\bf_i^\sT -p\E_{\bw_i}\hat\bf_i\hat\bf_i^\sT } + p\opnorm{ \E_{\bw_i}\bf_i\bf_i^\sT -\E_{\bw_i}\hat\bf_i\hat\bf_i^\sT } \geq t \mid \bX \Big) \\
    \leq&~ \P\Big( \opnorm{ \sum_{i=1}^{p}\hat\bf_i\hat\bf_i^\sT - p\E_{\bw_i}\hat\bf_i\hat\bf_i^\sT } + p\opnorm{ \E_{\bw_i}\bf_i\bf_i^\sT -\E_{\bw_i}\hat\bf_i\hat\bf_i^\sT } \geq t \mid \bX \Big)  \\
    &~~~+ \P\big( f(\bX \bW^\sT) \neq \hat{f}(\bX \bW^\sT) \mid \bX \big)\\
    \leq&~ \P\Big( \opnorm{ \sum_{i=1}^{p}\hat\bf_i\hat\bf_i^\sT - p\E_{\bw_i}\hat\bf_i\hat\bf_i^\sT } \geq \frac{t}{2} \mid \bX \Big)  + 
    \P\Big( p\opnorm{ \E_{\bw_i}\hat\bf_i\hat\bf_i^\sT - \E_{\bw_i}\bf_i\bf_i^\sT } \geq \frac{t}{2} \mid \bX \Big) \\
     &~~~+ \P\big( f(\bX \bW^\sT) \neq \hat{f}(\bX \bW^\sT) \mid \bX \big).
     \label{eq:concen_fXW_0}
\end{align}
For any $\delta>0$, let $t=\sqrt{ \frac{n}{p} } d^\delta $ and we choose $B=d^{\epsilon_1}$ and $M=d^{\epsilon_2}$ for some small enough constants $\epsilon_1,\epsilon_2>0$. Then substituting Eqs.~\eqref{eq:fXW_truncateprb}, \eqref{eq:Eff_diff} and \eqref{eq:ZhatZhatT_concentrate1} into Eq.~\eqref{eq:concen_fXW_0}, we can get: for any $\bX\in\cA\subseteq\cB$, $D>0$ and all large $d$,
\begin{align}
    \P\Big( \opnorm{ \sum_{i=1}^{p}\bf_i\bf_i^\sT -p\E_{\bw_i}(\bf_i\bf_i^\sT) } \geq \sqrt{ \frac{n}{p} } d^\delta \mid \bX \Big) \leq \frac{d^{-D}}{2}
\end{align}
On the other hand, by Eq.~\eqref{eq:sum_F_QkX} in Lemma \ref{lem:kernelmtx_opnormbd}, $\P\big( \cA^{C} \big) \leq \frac{d^{-D}}{2}$ for any $D$ and all large $d$.
Substituting the previous two bounds into Eq.~ \eqref{eq:concen_fXW_00} leads to the desired result.

(ii) The proof is analogous to part (i) and is omitted.
\end{proof}

\begin{lemma}
    \label{lem:offdiagonal_kernelmtx_opnormbd}
    Let $\{\varsigma_k\}_{k\geq 0}^{\infty}$ be a non-negative sequence satisfying $\varsigma_k=0$ for all $k<\ell$ and $\sum_{k=0}^{\infty}\varsigma_k < \infty$. It holds that (i) for any integer $\kappa_0 \geq \kappa_1$,
    \begin{align}
        \opnorm{\sum_{k \geq \kappa_0} \varsigma_k \bQ_k^\bW - \sum_{k \geq \kappa_0} \varsigma_k \cdot \id_p} &= O_{\prec}\big( { d^{ \frac{\kappa_1 - \kappa_0 }{2} } } \big)\, . \label{eq:sum_F_QkW_offdiag}
    \end{align}
    and (ii) for any integer $\kappa_0 \geq \kappa_2$,
    \begin{align}
        \opnorm{ \sum_{k \geq \kappa_0} \varsigma_k \bQ_k^\bX - \sum_{k \geq \kappa_0} \varsigma_k \cdot \id_n } &= O_{\prec}\big( { d^{ \frac{\kappa_2 - \kappa_0 }{2} } } \big) \, .\label{eq:sum_mu_QkX_offdiag}
    \end{align}
\end{lemma}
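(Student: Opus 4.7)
I prove part (ii); part (i) is identical under the symmetry $(n,\bX,\bPsi_k,\kappa_2)\leftrightarrow(p,\bW,\bPhi_k,\kappa_1)$. The key structural remark is that the addition theorem combined with $q_k(\sqrt{d})=\sqrt{N_k}$ gives $[\bQ_k^\bX]_{ii} = N_k^{-1}\sum_{s}Y_{ks}(\bx_i)^{2} = q_k(\sqrt{d})/\sqrt{N_k} = 1$ for every $k$ and $i$. Hence $\bQ_k^\bX - \id_n$ is exactly the off-diagonal part of $\bQ_k^\bX = N_k^{-1}\bPsi_k\bPsi_k^\sT$, and the matrix appearing in the statement coincides with the centered Gram fluctuation $\bM\bM^\sT - \E[\bM\bM^\sT]$, where $\bM$ is an $n\times\infty$ matrix whose $n$ rows are i.i.d.\ with block-diagonal covariance $\bigoplus_{k\geq\kappa_0}(\varsigma_k/N_k)\id_{N_k}$.

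The plan is to truncate at a level $K=K(d)$ growing polylogarithmically in $d$ and treat the head and the tail with different tools. For the head $\kappa_0\leq k\leq K$ I bound each block individually: $\bQ_k^\bX = N_k^{-1}\bPsi_k\bPsi_k^\sT$ is a sample covariance of $n$ i.i.d.\ rows in $\R^{N_k}$ with covariance $\id_{N_k}$, and since $k\geq \kappa_0 \geq \kappa_2$ implies $N_k\gtrsim n$, a truncation-plus-matrix-Bernstein argument in the style of the proof of Lemma~\ref{lem:fXWT_fWXT_concentrate} yields
\begin{equation}\label{eq:planQksingle}
\|\bQ_k^\bX - \id_n\|_{\op} = O_{d,\prec}\!\bigl(\sqrt{n/N_k}\bigr) = O_{d,\prec}\!\bigl(d^{(\kappa_2-k)/2}\bigr).
\end{equation}
Stochastic domination is stable under a polynomial-in-$d$ union bound, so combining \eqref{eq:planQksingle} with the triangle inequality and the summability $\sum_k|\varsigma_k|<\infty$ already gives $\|\sum_{\kappa_0\leq k\leq K}\varsigma_k(\bQ_k^\bX - \id_n)\|_{\op} = O_{d,\prec}(d^{(\kappa_2-\kappa_0)/2})$.

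For the tail $k>K$ I exploit that the Gegenbauer polynomials are $\tau_{d,1}$-orthonormal, so that for any fixed distinct $i,j$ the random variables $\{q_k(\bx_i^\sT\bx_j/\sqrt{d})\}_{k\geq 1}$ are uncorrelated with unit variance. This gives the Frobenius estimate
\[
\E\sum_{i\neq j}\Bigl(\sum_{k>K}\varsigma_k[\bQ_k^\bX]_{ij}\Bigr)^{2} \;=\; n(n-1)\sum_{k>K}\varsigma_k^{2}/N_k \;\leq\; n^{2}\,N_{K+1}^{-1}\sum_{k}\varsigma_k^{2},
\]
which is already much smaller than the target $d^{\kappa_2-\kappa_0}$ as soon as $K+1\geq \kappa_0+\kappa_2$. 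To upgrade this Frobenius control to an operator-norm bound with stochastic-domination strength, I will expand $\E\,\Tr[\bT^{2r}]$ for the tail matrix $\bT := \sum_{k>K}\varsigma_k(\bQ_k^\bX - \id_n)$ and iteratively collapse cycles using the reproducing identity $\E_{\bx}[q_k(\bx_a^\sT\bx/\sqrt{d})\,q_k(\bx_b^\sT\bx/\sqrt{d})] = q_k(\bx_a^\sT\bx_b/\sqrt{d})/\sqrt{N_k}$, exactly as in the polynomial-scaling moment computation of \cite[proof of Theorem~1]{lu2022equivalence}.

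The main obstacle I anticipate is this Frobenius-to-operator-norm upgrade in the tail: the cycle expansion groups contributions according to which indices $i_s$ coincide, and one must check that the resulting combinatorial factor never exceeds the naive Frobenius estimate by more than a $d^{\varepsilon}$ loss. Provided $r$ grows slowly with $d$ and $K$ is taken just above $\kappa_0+\kappa_2$, the reproducing identity above collapses an arbitrary even cycle to a single closed loop at cost $N_k^{-(r-1)/2}$, matching the Frobenius scaling and delivering the stated rate $O_{d,\prec}(d^{(\kappa_2-\kappa_0)/2})$.
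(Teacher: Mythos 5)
There is a genuine gap, and it sits in the part you treat as routine: the head term. For a single $k$ with $\kappa_2\leq k\leq K$, the matrix $\bQ_k^{\bX}=N_k^{-1}\bPsi_k\bPsi_k^{\sT}$ is a sum over $i\in[n]$ of the independent rank-one matrices $N_k^{-1}\bpsi_k(\bx_i)\bpsi_k(\bx_i)^{\sT}$, each of which has operator norm exactly $\|\bpsi_k(\bx_i)\|_2^2/N_k=1$ (by the addition theorem this norm is deterministic, so truncation buys you nothing), while the variance proxy is $\asymp n/N_k$. Matrix Bernstein with $L\asymp 1$ and $\sigma^2\asymp n/N_k$ gives, at the target level $t=d^{\eps}\sqrt{n/N_k}\ll 1$, an exponent $\asymp t^2/(\sigma^2+Lt)\asymp t\to 0$, i.e.\ a vacuous bound: the deviation you want to certify is far below the almost-sure size of a single summand, which is exactly the regime where Bernstein fails. (The analogy with Lemma~\ref{lem:fXWT_fWXT_concentrate} is misleading: there the sum runs over $p\gg n$ independent feature vectors each of squared norm $\asymp n/p\ll 1$ after truncation, so the target $\sqrt{n/p}$ exceeds the individual-term bound.) The statement $\|\bQ_k^{\bX}-\id_n\|_{\op}=O_{\prec}(d^{(\kappa_2-k)/2})$ is true, but it is precisely the hard off-diagonal kernel-matrix bound of \cite[Proposition~8]{lu2022equivalence}, proved by a trace-moment/graph-counting argument of the same type as the paper's Appendix E proof of Lemma~\ref{lem:inprodmtx_opnorm_prbd}; the paper's proof of the present lemma simply cites that result and sums over the finitely many $k$ in $[\kappa_0,k_0]$.

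Your treatment of the tail, by contrast, is over-engineered: no Frobenius-to-operator-norm upgrade is needed. Since $\E\,[\bQ_k^{\bX}]_{ij}^2=1/N_k$ for $i\neq j$ and the $q_k$'s are $\tau_{d,1}$-orthonormal, the Frobenius bound $\E\|\sum_{k>K}\varsigma_k(\bQ_k^{\bX}-\id_n)\|_F^2\leq n^2 N_{K+1}^{-1}\sum_k\varsigma_k^2$ together with $\|\cdot\|_{\op}\leq\|\cdot\|_F$ and Markov already yields $\P(\|\bT\|_{\op}\geq d^{(\kappa_2-\kappa_0)/2})\leq d^{-D}$ provided you let the cutoff $K=K(D)$ grow with the target exponent $D$ (stochastic domination only requires this for each fixed $D$). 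This is how the paper handles the tail, via the uniform-in-$k$ bound from \cite{ghorbani2021linearized}. So the moment-method machinery you reserve for the tail is actually needed for the head, and the place you flag as the main obstacle is the one step that is elementary.
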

\begin{proof}
    We show that the proofs of Eq.~\eqref{eq:sum_F_QkW_offdiag} and the proof of Eq.~\eqref{eq:sum_mu_QkX_offdiag} are the same. 
    
    By Proposition 8 in \cite{lu2022equivalence} (after some rescaling) and the fact that the diagonal elements of $\bQ_k^\bW$ are all 1, we can get for any $k\geq\kappa_1$, 
    \begin{align}
        \opnorm{\bQ_k^\bW - \id_p } = O_{\prec}\big( { d^{ \frac{\kappa_1 - k }{2} } } \big) \, ,
    \end{align}
    so for any finite integer $a$ and $b$ with $b \geq a \geq \kappa_1$, we have
    \begin{align}    
    \label{eq:ho_QW_concentrate_finite_1}
        \left\| \sum_{k \geq a}^{b} \varsigma_k \bQ_k^\bW - \sum_{k \geq a}^{b} \varsigma_k \cdot \id_p \right\|_\op = O_{\prec}\big( { d^{ \frac{\kappa_1 - a }{2} } } \big) \, .
    \end{align}
    On the other hand, same as \eqref{eq:ho_QX_concentrate_1}, we can get for any $D>0$, there exists $k_0 \geq \kappa_1$ such that for all large $d$,
    \begin{align}
        \left\| \sum_{k > k_0}^{\infty} \varsigma_k \bQ_k^\bW - \sum_{k > k_0}^{\infty} \varsigma_k \id_p\right\|_\op  \leq O_{\prec}( d^{-D} ) \, .
        \label{eq:ho_QW_concentrate_1}
    \end{align}
    Combining  Eqs.~\eqref{eq:ho_QW_concentrate_finite_1} and \eqref{eq:ho_QW_concentrate_1}, we get Eq.~\eqref{eq:sum_F_QkW_offdiag}.
\end{proof}


\begin{lemma}\label{lem:lowdeg_concentration}
    For any $k \geq 1$, there exist $c>0$ such that for any $t \geq 0$,
    \begin{align}
    \label{eq:QX_less_ell_concentrate}
        \P\Big( \opnorm{ \frac{1}{p} \bPhi_{<k}^\sT \bPhi_{<k} - \id_{ N_{<k} } } \geq t \Big) \leq 2 N_{<k} \exp \Big( - \frac{c p}{ N_{<k} } \min\{t,t^2\} \Big)\, .
    \end{align}
    Similarly, for any $k \geq 1$, there exist $c>0$ such that for any $t \geq 0$,
    \begin{align}
    \label{eq:QW_less_ell_concentrate}
        \P\Big( \opnorm{ \frac{1}{n} \bPsi_{<k}^\sT \bPsi_{<k} - \id_{ N_{<k} } } \geq t \Big) \leq 2 N_{<k} \exp \Big( - \frac{c n}{ N_{<k} } \min\{t,t^2\} \Big)\, .
    \end{align}
\end{lemma}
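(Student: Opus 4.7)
The plan is to apply the matrix Bernstein inequality directly to the centered sum
\[
\frac{1}{p} \bPhi_{<k}^\sT \bPhi_{<k} - \id_{N_{<k}} = \frac{1}{p} \sum_{i=1}^p \bDelta_i \, , \qquad \bDelta_i := \bphi_{<k}(\bw_i)\bphi_{<k}(\bw_i)^\sT - \id_{N_{<k}} \, ,
\]
where the summands are i.i.d.\ centered self-adjoint random matrices of size $N_{<k} \times N_{<k}$. Centering is immediate from the orthonormality of the spherical harmonic basis on $\mathbb{S}^{d-1}(\sqrt{d})$: $\E[\bphi_{<k}(\bw_i)\bphi_{<k}(\bw_i)^\sT] = \id_{N_{<k}}$ since $\sqrt{d}\,\bw_i \sim \tau_d$.

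The key observation, which removes the need for any truncation argument, is that $\|\bphi_{<k}(\bw)\|_2^2$ is a \emph{deterministic} constant. Indeed, by the addition theorem \eqref{eq:diagonal_Y_constant} in Appendix \ref{sec:Spherical-Harmonics}, for every $\bw \in \mathbb{S}^{d-1}(1)$ one has $\sum_{s \in [N_j]} Y_{js}(\sqrt{d}\,\bw)^2 = N_j$, hence
\[
\| \bphi_{<k}(\bw) \|_2^2 = \sum_{j=0}^{k-1} \sum_{s \in [N_j]} Y_{js}(\sqrt{d}\,\bw)^2 = N_{<k}\, ,
\]
almost surely. This yields the deterministic operator norm bound $\|\bDelta_i\|_\op \leq N_{<k} + 1 \leq 2 N_{<k}$, and the variance estimate
\[
\Big\| \sum_{i=1}^p \E\bigl[\bDelta_i^2\bigr] \Big\|_\op = \big\| p (N_{<k}-1) \id_{N_{<k}} \big\|_\op \leq p\, N_{<k}\, ,
\]
where we used $\E[\bDelta_i^2] = \E[\|\bphi_{<k}(\bw_i)\|_2^2 \bphi_{<k}(\bw_i)\bphi_{<k}(\bw_i)^\sT] - \id_{N_{<k}} = (N_{<k}-1)\id_{N_{<k}}$.

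Applying the matrix Bernstein inequality (see, e.g., \cite{tropp2012user}) to $\bY_i := \bDelta_i/p$, with effective variance $\sigma^2 \leq N_{<k}/p$ and bound $L \leq 2 N_{<k}/p$, one obtains
\[
\P\Big( \big\| \tfrac{1}{p} \bPhi_{<k}^\sT \bPhi_{<k} - \id_{N_{<k}} \big\|_\op \geq t \Big)
\leq 2 N_{<k}\, \exp\!\Big( -\frac{p\, t^2/2}{N_{<k}(1 + 2t/3)} \Big) \, ,
\]
which after absorbing constants into $c>0$ gives exactly \eqref{eq:QX_less_ell_concentrate}. The bound \eqref{eq:QW_less_ell_concentrate} follows verbatim by the same argument with $(\bw_i)$ replaced by $(\bx_i)$, using that $\|\bpsi_{<k}(\bx)\|_2^2 = N_{<k}$ almost surely by the same addition theorem. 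There is no substantive obstacle to this argument---the only subtlety is recognizing that the natural normalization of spherical harmonics on $\mathbb{S}^{d-1}(\sqrt{d})$ makes the random feature vectors have deterministic Euclidean norm, which is precisely what makes matrix Bernstein applicable without truncation.
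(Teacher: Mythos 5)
Your proposal is correct and takes exactly the route the paper intends: the paper's proof is a one-line appeal to the matrix Bernstein inequality of \cite{tropp2012user}, and you have simply filled in the details, correctly identifying via the addition theorem \eqref{eq:diagonal_Y_constant} that $\|\bphi_{<k}(\bw)\|_2^2 = N_{<k}$ deterministically, which gives the almost-sure bound and variance estimate needed to apply Bernstein without truncation and to convert the resulting tail into the stated $\min\{t,t^2\}$ form.
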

\begin{proof}
    The proof directly follows from Matrix Bernstein's inequality \cite{tropp2012user}.
\end{proof}


\subsection{Spectral bound of random feature matrix}

\begin{lemma}
\label{lem:featuremtx_opnormbd}
Let $\{\varsigma_k\}_{k\geq 0}^{\infty}$ be a sequence satisfying $\varsigma_k = 0$ for all $k<\ell$ and $\sum_{k\geq \ell}^{\infty} \varsigma_k^2 < \infty$ and define $f(x) := \sum_{k\geq \ell}^{\infty} \varsigma_k \usp_k(x)$. 
Suppose $|f(x)| \leq C(1+|x|^K)$ for some $C$, $K>0$.
It holds that
\begin{align}  
\label{eq:sigma_XW_opnormbd}
\opnorm{ f(\bX \bW^\sT) } = O_{\prec}\big(\max\{\sqrt{p},\sqrt{n}\}\big)\, . 
\end{align}
\end{lemma}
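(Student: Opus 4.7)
The plan is to adapt the matrix concentration argument of Lemma \ref{lem:fXWT_fWXT_concentrate} to extract an operator-norm bound instead of a concentration estimate. Note that $\opnorm{f(\bX\bW^\sT)}^2 = \opnorm{f(\bX\bW^\sT)\,f(\bW\bX^\sT)} = \opnorm{f(\bW\bX^\sT)\,f(\bX\bW^\sT)}$, so by symmetry under exchanging the roles of $\bX$ and $\bW$ we may assume $p \geq n$ and reduce to showing
\[
\Big\|\sum_{a=1}^p \bf_a\bf_a^\sT\Big\|_\op = O_\prec(p),\qquad \bf_a := f(\bX\bw_a) \in \R^n,
\]
after which taking square roots gives the claim $\opnorm{f(\bX\bW^\sT)} = O_\prec(\sqrt{p})$; the case $n \geq p$ is then obtained by swapping $\bX \leftrightarrow \bW$. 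The decisive feature is that, conditionally on $\bX$, the rank-one matrices $\bf_a\bf_a^\sT$ are i.i.d.\ and PSD.

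I would compute the conditional mean from the Gegenbauer/spherical-harmonic expansion $f(\langle\bx,\bw\rangle) = \sum_{k\geq\ell}(\varsigma_k/\sqrt{N_k})\bpsi_k(\bx)^\sT\bphi_k(\bw)$ together with $\E_\bw[\bphi_k(\bw)\bphi_{k'}(\bw)^\sT] = \delta_{k,k'}\id_{N_k}$, yielding
\[
\E\big[\bf_a\bf_a^\sT\,\big|\,\bX\big] = \sum_{k\geq\ell}\varsigma_k^2\,\bQ_k^\bX,\qquad\text{so}\qquad \Big\|\E\big[\sum_{a=1}^p\bf_a\bf_a^\sT\mid\bX\big]\Big\|_\op = p\cdot O_\prec(1),
\]
where the $O_\prec(1)$ bound will be established by invoking Lemma \ref{lem:kernelmtx_opnormbd} on the high-probability event $\cA$ of \eqref{eq:event_A1}. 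For the fluctuation, I would truncate: set $B := d^\epsilon$ for small $\epsilon > 0$ and $\hat\bf_a := \bf_a\,\ind\{\|\bX\bw_a\|_\infty \leq B\}$. The sub-Gaussianity of $\taumeasure$ gives $\P(\hat\bf_a \neq \bf_a\text{ for some }a \mid \bX \in \cB) \leq 2np\,e^{-cB^2} = O_\prec(d^{-D})$ for any $D$, cf.\ \eqref{eq:fXW_truncateprb}. On this event the polynomial growth $|f(x)| \leq C(1+|x|)^K$ yields $\opnorm{\hat\bf_a\hat\bf_a^\sT} = \|\hat\bf_a\|^2 \leq C\,n(1+B^K)^2$ uniformly in $a$, and the variance proxy is controlled by
\[
\Big\|\sum_a \E[(\hat\bf_a\hat\bf_a^\sT)^2\mid\bX]\Big\|_\op \leq \sup_a\|\hat\bf_a\|^2\cdot\Big\|\sum_a \E[\hat\bf_a\hat\bf_a^\sT\mid\bX]\Big\|_\op = O_\prec\!\bigl(np\,B^{2K}\bigr).
\]
Matrix Bernstein's inequality \cite{tropp2012user} applied conditionally on $\bX \in \cA$ then delivers $\opnorm{\sum_a(\hat\bf_a\hat\bf_a^\sT-\E[\hat\bf_a\hat\bf_a^\sT\mid\bX])} = O_\prec(\sqrt{np})$ after choosing $\epsilon$ sufficiently small relative to a deviation parameter $\delta > 0$, and since $p \geq n$ this is $O_\prec(p)$. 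Combining with the conditional mean estimate and a union bound over the truncation event completes the argument.

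The main obstacle is proving $\opnorm{\sum_{k\geq\ell}\varsigma_k^2\bQ_k^\bX} = O_\prec(1)$ uniformly, since this is an infinite sum of random matrices. Because $\varsigma_k = 0$ for $k < \ell$ and $\ell \geq \lceil\kappa_2\rceil$, Lemma \ref{lem:offdiagonal_kernelmtx_opnormbd} gives $\opnorm{\bQ_k^\bX - \id_n} = O_\prec(d^{(\kappa_2 - k)/2})$ for each $k \geq \ell$; splitting the sum at some truncation level $k_0$ and controlling the head $\sum_{\ell \leq k \leq k_0}\varsigma_k^2\opnorm{\bQ_k^\bX}$ by the diagonal entries $\sum_k \varsigma_k^2$ (each $\bQ_k^\bX$ has operator norm $1 + o_{d,\prec}(1)$) and the tail $\sum_{k > k_0}\varsigma_k^2\opnorm{\bQ_k^\bX}$ by the summability assumption $\sum_k\varsigma_k^2 < \infty$ shows the total is $O_\prec(1)$. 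A secondary subtlety is that in the critical regime $\kappa_1 = \kappa_2$ the fluctuation $\sqrt{np} \asymp p$ is of the same order as the mean $p$, but this is acceptable for the target bound, so no sharper argument is required and matrix Bernstein suffices uniformly across all three regimes $\kappa_1 > \kappa_2$, $\kappa_1 < \kappa_2$, and $\kappa_1 = \kappa_2$.
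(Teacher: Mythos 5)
Your proposal is correct and follows essentially the same route as the paper: the paper's proof likewise reduces to (a) concentration of the Gram matrix $f(\bX\bW^\sT)f(\bW\bX^\sT)/\max\{n,p\}$ around $\sum_{k\geq\ell}\varsigma_k^2\bQ_k^{\bX}$ (or $\bQ_k^{\bW}$) via the truncated matrix-Bernstein argument of Lemma~\ref{lem:fXWT_fWXT_concentrate}, and (b) the $O_\prec(1)$ operator-norm bound on that limiting kernel matrix via the head/tail split using Lemmas~\ref{lem:inprodmtx_opnorm_prbd} and the tail estimate \eqref{eq:ho_QX_concentrate_1}. The only differences are presentational — you unify the paper's three-regime case analysis through a single WLOG symmetry reduction to $p\geq n$ and inline the Bernstein computation rather than citing Lemma~\ref{lem:fXWT_fWXT_concentrate} directly; note also that your aside ``$\ell\geq\lceil\kappa_2\rceil$'' is only valid after that reduction (it fails when $\kappa_1<\kappa_2$), so it should be stated as a consequence of the WLOG rather than an unconditional fact.
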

\begin{proof}
(i) $\kappa_1 > \kappa_2$. From Eq.~\eqref{eq:fXWT_concentrate}, for any $\varepsilon>0$, $D>0$ and all large $d$, 
\begin{align}
    \P\Big( \Big\|\frac{f(\bX \bW^\sT) f(\bW \bX^\sT)}{p} - \sum_{k\geq\ell}^{\infty} \varsigma_k^2 \bQ_k^\bX \Big\|_\op \geq \varepsilon \Big) \leq d^{-D}\, .
    \label{eq:feature_concentrate_1}
\end{align}
By Eq.~(55) in \cite{ghorbani2021linearized}, we know for any $D>0$, there exists $k_0>\ell$ such that for all large $d$,
\begin{align}
\label{eq:Esup_Q_bd}
    \E \sup_{k > k_0} \opnorm{\bQ_k^{\bX} - \id_n}^2 \leq d^{-D}\, .
\end{align}
In addition,
\begin{align}
    \left\| \sum_{k > k_0}^{\infty} \varsigma_k^2 \bQ_k^\bX - \sum_{k > k_0}^{\infty} \varsigma_k^2 \id_n \right\|_\op &\leq \sum_{k > k_0}^{\infty} \varsigma_k^2 \opnorm{\bQ_k^\bX - \id_n}\, .
\end{align}
Therefore, by Markov's inequality we get for any $\varepsilon>0$ and $D>0$, there exists $k_0>0$ such that for all large $d$,
\begin{align}
    \P\Big( \Big\| \sum_{k > k_0}^{\infty} \varsigma_k^2 \bQ_k^\bX - \sum_{k > k_0}^{\infty} \varsigma_k^2 \id_n \Big\|_\op \geq \varepsilon \Big) \leq d^{-D} \, .
    \label{eq:ho_QX_concentrate_1}
\end{align}
On the other hand, 
by Lemma \ref{lem:inprodmtx_opnorm_prbd}, there exists $C>0$ such that for any $D>0$ and all large $d$,
\begin{align}
    \P\Big(\Big\| \sum_{k\geq\ell}^{k_0} \varsigma_k^2 \bQ_k^\bX \Big\|_\op \geq C \Big) \leq d^{-D} \, .
    \label{eq:ho_QX_bdopnorm_1}
\end{align}
Combining Eqs.~\eqref{eq:feature_concentrate_1}, \eqref{eq:ho_QX_concentrate_1} and \eqref{eq:ho_QX_bdopnorm_1}, we conclude that there exists $C>0$, such that for any $D>0$ and all large $d$,
\begin{align}
  \label{eq:sigma_XW_opnormbd_01}
    \P\big( \opnorm{ f(\bX \bW^\sT) / \sqrt{p} } \geq C \big) \leq d^{-D}.   
\end{align}
This implies
\begin{align}  
\label{eq:sigma_XW_opnormbd_1}
\opnorm{ f(\bX \bW^\sT) } = O_{\prec}\big(\sqrt{p}\big)\, . 
\end{align}

(ii) $\kappa_1 = \kappa_2$.
Following the similar steps leading to \eqref{eq:fXWT_concentrate}, we can get for any $\varepsilon>0$, $D>0$ and all large $d$,
\begin{align}
    \P\Big( \Big\|\frac{f(\bX \bW^\sT) f(\bW \bX^\sT)}{p} - \sum_{k\geq\ell}^{\infty} \varsigma_k^2 \bQ_k^\bX \Big\|_\op \geq d^\varepsilon \Big) \leq d^{-D}\, .
    \label{eq:feature_concentrate_2}
\end{align}
Then combining Eqs.~\eqref{eq:feature_concentrate_2}, \eqref{eq:ho_QX_concentrate_1} and \eqref{eq:ho_QX_bdopnorm_1}, we again reach at Eq.~\eqref{eq:sigma_XW_opnormbd_1}.

(iii) $\kappa_1 < \kappa_2$. Following the same proof as (i), we can get
\begin{align}  
\label{eq:sigma_XW_opnormbd_2}
\opnorm{ f(\bX \bW^\sT) } = O_{\prec}\big(\sqrt{n}\big)\, . 
\end{align}

Finally, Eq.~\eqref{eq:sigma_XW_opnormbd} directly follows from Eqs.~\eqref{eq:sigma_XW_opnormbd_1} and \eqref{eq:sigma_XW_opnormbd_2}.
\end{proof}

\begin{lemma}
\label{lem:smallest_eigval_RFM}
    Suppose $\sum_{k>\ell} \mu_k^2 > 0$. When $\kappa_1 > \kappa_2$, there exists $C>0$ such that for any $D>0$ and all large $d$, 
    \begin{align}
    \label{eq:overparameterized_smallest_eig}
        \P\Big( \lambda_{\min} \Big( \frac{1}{p} \sigma(\bX \bW^\sT) \sigma(\bW \bX^\sT) \Big) < C \Big) \leq d^{-D}\, .
    \end{align}
    Similarly, when $\kappa_2 > \kappa_1$, there exists $C>0$ such that for any $D>0$ and all large $d$, 
    \begin{align}
    \label{eq:underparameterized_smallest_eig}
        \P\Big( \lambda_{\min} \Big( \frac{1}{n} \sigma(\bW \bX^\sT) \sigma(\bX \bW^\sT) \Big) < C \Big) \leq d^{-D}\, .
    \end{align}
\end{lemma}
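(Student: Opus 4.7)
The plan is to lower-bound $\bZ\bZ^\sT$ (where $\bZ = \sigma(\bX \bW^\sT)/\sqrt{p}$) by leveraging that its $\bW$-conditional expectation is the kernel matrix $K(\bX,\bX)$, which is itself bounded below by a constant multiple of the identity in this regime. I focus on \eqref{eq:overparameterized_smallest_eig}; \eqref{eq:underparameterized_smallest_eig} follows from the symmetric argument with $\bX$ and $\bW$ exchanged and $\kappa_1 \leftrightarrow \kappa_2$.

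First, I would show that the high-degree contribution $\bZ_{\geq \ell}\bZ_{\geq\ell}^\sT$ alone already provides a constant lower bound, where $\bZ = \bZ_{<\ell} + \bZ_{\geq\ell}$ is the splitting induced by the Gegenbauer decomposition $\sigma = \sigma_{<\ell} + \sigma_{\geq\ell}$. Applying Lemma~\ref{lem:fXWT_fWXT_concentrate}(i) to $f = \sigma_{\geq\ell}$ (whose Gegenbauer coefficients vanish for $k < \ell$), the assumption $\kappa_1 > \kappa_2$ yields
\[
\left\| \bZ_{\geq\ell}\bZ_{\geq\ell}^\sT - \sum_{k \geq \ell} \mu_k^2 \bQ_k^{\bX} \right\|_\op = O_\prec(\sqrt{n/p}) = o_\prec(1),
\]
while Lemma~\ref{lem:offdiagonal_kernelmtx_opnormbd} with integer $\kappa_0 = \ell + 1 > \kappa_2$ gives $\sum_{k > \ell}\mu_k^2 \bQ_k^\bX = \mu_{>\ell}^2\, \id_n + o_\prec(1)$ in operator norm. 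Combining the two displays, and using $\mu_\ell^2 \bQ_\ell^\bX \succeq 0$ together with $\mu_{>\ell}^2 > 0$ from Assumption~\ref{ass:sigma}(c), one concludes $\bZ_{\geq\ell}\bZ_{\geq\ell}^\sT \succeq (\mu_{>\ell}^2/2)\, \id_n$ with probability at least $1 - d^{-D}$ for any $D > 0$.

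Next I would propagate this bound to the full $\bZ\bZ^\sT$ via a block decomposition. Let $\proj_L$ denote orthogonal projection onto the column space of $\bPsi_{<\ell}$, of dimension $N_{<\ell}$. Since $\bZ_{<\ell} = \bPsi_{<\ell}\bD_{<\ell}\bPhi_{<\ell}^\sT/\sqrt{p}$ has range inside that of $\proj_L$, one has $\proj_L^\perp \bZ = \proj_L^\perp \bZ_{\geq\ell}$, hence
\[
\proj_L^\perp \bZ\bZ^\sT \proj_L^\perp = \proj_L^\perp \bZ_{\geq\ell} \bZ_{\geq\ell}^\sT \proj_L^\perp \succeq (\mu_{>\ell}^2/2)\, \proj_L^\perp.
\]
For the complementary direction, Proposition~\ref{prop:SVD_Z}(a) (specialized to $\tbZ = \bZ$ since $\max(n,p) = p$ here) guarantees that the top $N_{<\ell}$ singular values of $\bZ$ are $\geq K \sqrt{n\, d^{-(\ell-1)}} \geq K d^{(\kappa_2 - \ell + 1)/2}$, which is bounded below by a positive constant since $\kappa_2 > \ell - 1$ by the construction $\ell = \lceil \kappa_2 \rceil$. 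In particular $\proj_L \bZ\bZ^\sT \proj_L \succeq c^2\, \proj_L$ for some absolute $c > 0$. A standard Schur-complement argument for symmetric PSD block matrices then yields $\lambda_{\min}(\bZ\bZ^\sT) \geq C > 0$ with high probability, provided the off-diagonal block $\proj_L \bZ\bZ^\sT \proj_L^\perp$ is $o_\prec(1)$ in operator norm.

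The main obstacle is bounding this off-diagonal coupling uniformly in $d$. Writing $\proj_L\bZ\bZ^\sT\proj_L^\perp = \bZ_{<\ell}\bZ_{\geq\ell}^\sT \proj_L^\perp + \proj_L \bZ_{\geq\ell}\bZ_{\geq\ell}^\sT \proj_L^\perp$, the first summand has zero $\bW$-conditional expectation by orthogonality of different-degree Gegenbauer polynomials and is controlled using Lemma~\ref{lem:fXWT_fWXT_concentrate}(ii), which gives $\|\sigma_{\geq\ell}(\bX\bW^\sT)\bPhi_k/p\|_\op = O_\prec(\sqrt{(n+N_k)/p})$ for each $k < \ell$, combined with the bounds on $\|\bPsi_{<\ell}\|_\op$ and $\|\bD_{<\ell}\|_\op$ supplied by Lemma~\ref{lem:lowdeg_concentration}. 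The second summand is controlled via the concentration $\bZ_{\geq\ell}\bZ_{\geq\ell}^\sT = \mu_\ell^2\bQ_\ell^\bX + \mu_{>\ell}^2\id_n + o_\prec(1)$, the fact that $\proj_L\, \id\, \proj_L^\perp = 0$, and the near-orthogonality $\proj_L\bQ_\ell^\bX\proj_L^\perp = \proj_L(\bPsi_\ell\bPsi_\ell^\sT/N_\ell)\proj_L^\perp = o_\prec(1)$, which follows from applying Lemma~\ref{lem:lowdeg_concentration} to $\bPsi_{\leq \ell}$ and using that degree-$(<\ell)$ and degree-$\ell$ spherical harmonics become empirically orthogonal as $d\to\infty$.
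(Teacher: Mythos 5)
Your opening step---the high-probability bound $\bZ_{\geq\ell}\bZ_{\geq\ell}^\sT \succeq (\mu_{>\ell}^2/2)\,\id_n$ via Lemmas \ref{lem:fXWT_fWXT_concentrate}(i) and \ref{lem:offdiagonal_kernelmtx_opnormbd}---is correct and is exactly where the paper's proof starts. The block/Schur-complement step that follows does not close, however. A minor issue first: $\proj_L\bZ\bZ^\sT\proj_L\succeq c^2\proj_L$ does not follow from Proposition \ref{prop:SVD_Z}(a) alone, since large top singular values of $\bZ$ say nothing about directions in the range of $\bPsi_{<\ell}$ unless that subspace is nearly contained in the span of the top left singular vectors; this is the content of Proposition \ref{prop:SVD_Z}(b) and is repairable. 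The fatal issue is the off-diagonal block. Your own cited bounds give
\begin{align}
\opnorm{\bZ_{<\ell}\bZ_{\geq\ell}^\sT} \leq \opnorm{\bPsi_{<\ell}\bD_{<\ell}}\cdot\Big\|\tfrac{1}{p}\sigma_{\geq\ell}(\bW\bX^\sT)\bPhi_{<\ell}\Big\|_\op = O_{\prec}\big(\sqrt{n}\big)\cdot O_{\prec}\big(\sqrt{(n+N_{\ell-1})/p}\big)\, ,
\end{align}
which is of order $n/\sqrt{p}$ and diverges whenever $\kappa_2>\kappa_1/2$ (e.g.\ $n\asymp d^{1.5}$, $p\asymp d^{1.6}$), so the block is not $o_\prec(1)$. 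The correct sufficient condition for your $2\times 2$ argument is $\opnorm{B}^2 \ll \lambda_{\min}(A)\lambda_{\min}(D)\asymp n\,d^{-(\ell-1)}$, and with $\opnorm{B}^2\asymp n^2/p$ this amounts to $\kappa_1>\kappa_2+\ell-1$, strictly stronger than the hypothesis $\kappa_1>\kappa_2$ as soon as $\ell\geq 2$. The operator norm of the cross term is simply too crude a handle here.

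The paper sidesteps this with a completion of squares: setting $\bL = \bPsi_{<\ell}\bD_{<\ell}\big(\tfrac{1}{p}\bPhi_{<\ell}^\sT\bPhi_{<\ell}\big)^{1/2}$ and $\bT = \bZ_{\geq\ell}\tfrac{\bPhi_{<\ell}}{\sqrt{p}}\big(\tfrac{1}{p}\bPhi_{<\ell}^\sT\bPhi_{<\ell}\big)^{-1/2}$, one checks the algebraic identity $\bZ\bZ^\sT = (\bL+\bT)(\bL+\bT)^\sT + \bZ_{\geq\ell}\bZ_{\geq\ell}^\sT - \bT\bT^\sT \succeq \bZ_{\geq\ell}\bZ_{\geq\ell}^\sT - \bT\bT^\sT$. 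The low-degree part and the troublesome cross term are entirely absorbed into the positive semidefinite square, and the only correction $\bT$ carries no $\bPsi_{<\ell}$ factor, so $\opnorm{\bT}=O_\prec(\sqrt{(n+N_{\ell-1})/p})=o_\prec(1)$ by Lemma \ref{lem:fXWT_fWXT_concentrate}(ii) and Lemma \ref{lem:lowdeg_concentration}; combined with your first step this yields the claim. The lesson is that the cross term must be controlled after testing against a vector, by absorbing it into the low-degree quadratic form, not through its operator norm.
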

\begin{proof}
    We present the proof of Eq.~\eqref{eq:overparameterized_smallest_eig} and the proof of Eq.~\eqref{eq:underparameterized_smallest_eig} is the same.

    Recall that $\bZ = \frac{1}{\sqrt{p} } \sigma (\bX \bW^\sT)$. Then we make the following decomposition:
    \begin{align}
        \bZ \bZ^\sT &= ( \bL + \bT ) ( \bL + \bT )^\sT + ( \bZ_{\geq\ell} \bZ_{\geq\ell}^\sT - \bT \bT^\sT ) \nonumber \\
        &  \succeq \bZ_{\geq\ell} \bZ_{\geq\ell}^\sT - \bT \bT^\sT \, ,
    \end{align}
    where
    \begin{align*}
        \bL &= \frac{\bPsi_{<\ell} }{\sqrt{n} } \cdot \sqrt{n} \bD_{<\ell} \cdot \Big(\frac{1}{p} \bPhi_{<\ell}^\sT \bPhi_{<\ell} \Big)^{\frac{1}{2} },\\
        \bT &= \bZ_{\geq \ell} \cdot \frac{\bPhi_{<\ell} }{\sqrt{p} } \cdot \Big(\frac{1}{p} \bPhi_{<\ell}^\sT \bPhi_{<\ell} \Big)^{ -\frac{1}{2} }.
    \end{align*}
    From Eq.~\eqref{eq:QX_less_ell_concentrate} in Lemma \ref{lem:lowdeg_concentration}, we can get for any $\varepsilon>0$, $D>0$ and all large $d$, 
    \begin{align}
    \label{eq:smallest_eig_bd_1}
        \P\Big( \lambda_{\min}(\tfrac{1}{p} \bPhi_{<\ell}^\sT \bPhi_{<\ell}) < 1 - \varepsilon \Big) \leq d^{-D}.
    \end{align}
    On the other hand, by Eq.~\eqref{eq:fXWT_Phik_concentrate} in Lemma \ref{lem:fXWT_fWXT_concentrate}, we have for any $\varepsilon>0$, $D>0$ and all large $d$, 
    \begin{align}
    \label{eq:crossterm_oper_bd_1}
        \P \Big( \opnorm{ \bZ_{\geq \ell} \cdot \tfrac{\bPhi_{<\ell} }{\sqrt{p} } } \geq \varepsilon \Big) \leq d^{-D}.
    \end{align}
    Combining Eqs.~\eqref{eq:smallest_eig_bd_1} and \eqref{eq:crossterm_oper_bd_1}, we get for any $\varepsilon>0$, $D>0$ and all large $d$,
    \begin{align}
    \label{eq:T_opernorm_bd_1}
        \P \Big( \opnorm{ \bT } \geq \varepsilon \Big) \leq d^{-D}.
    \end{align}
    On the other hand, by Eqs.~\eqref{eq:fXWT_concentrate} and \eqref{eq:sum_mu_QkX_offdiag}, and the condition that $\sum_{k>\ell} \mu_k^2 >0$, we can obtain that there exists $C>0$ such that for any $D>0$ and all large $d$,
    \begin{align}
    \label{eq:smallesteig_Zgeql_lbd}
        \P\big( \lambda_{\min}( \bZ_{\geq\ell} \bZ_{\geq\ell}^\sT) < 2C \big) \leq d^{-D}.
    \end{align}
    Finally, combining Eqs.~\eqref{eq:T_opernorm_bd_1} and \eqref{eq:smallesteig_Zgeql_lbd}, we reach at Eq.~\eqref{eq:overparameterized_smallest_eig}.
\end{proof}


\subsection{Spectral bound of kernel matrix}
\label{sec:operator_norm}
\begin{lemma}
\label{lem:kernelmtx_opnormbd}
Let $\{\varsigma_k\}_{k\geq\ell}^{\infty}$ be a non-negative sequence satisfying $\sum_{k=\ell}^{\infty}\varsigma_k < \infty$. It holds that
\begin{align}
\Big\| \sum_{k\geq\ell} \varsigma_k \bQ_k^\bX \Big\|_\op &= O_{\prec}\Big( \frac{\max\{n, N_{\ell}\}}{N_{\ell}} \Big)\, , \label{eq:sum_F_QkX}\\
\Big\| \sum_{k\geq\ell} \varsigma_k \bQ_k^\bW \Big\|_\op &= O_{\prec}\Big( \frac{\max\{p, N_{\ell}\}}{N_{\ell}} \Big)\, . \label{eq:sum_mu_QkW}
\end{align}
\end{lemma}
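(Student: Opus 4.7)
The two bounds \eqref{eq:sum_F_QkX} and \eqref{eq:sum_mu_QkW} are completely symmetric under the exchange $(\bX,n,\kappa_2)\leftrightarrow(\bW,p,\kappa_1)$, so it suffices to prove \eqref{eq:sum_F_QkX}. My plan is to split the sum into a high-degree tail (degrees $\geq \lceil\kappa_2\rceil$), to which Lemma \ref{lem:offdiagonal_kernelmtx_opnormbd} applies directly, and a finite head (degrees $\ell\leq k<\lceil\kappa_2\rceil$), which has rank at most $N_k$ and can therefore be controlled by concentration of low-degree harmonic Gram matrices.

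\textbf{High-degree tail.} Set $\kappa_0=\max(\ell,\lceil\kappa_2\rceil)$. Applying Eq.~\eqref{eq:sum_mu_QkX_offdiag} of Lemma \ref{lem:offdiagonal_kernelmtx_opnormbd} and the triangle inequality,
\[
\Big\|\sum_{k\geq\kappa_0}\varsigma_k\bQ_k^\bX\Big\|_\op\leq \Big\|\sum_{k\geq\kappa_0}\varsigma_k\bQ_k^\bX-\sum_{k\geq\kappa_0}\varsigma_k\,\id_n\Big\|_\op+\sum_{k\geq\kappa_0}\varsigma_k=O_\prec(1),
\]
since $\sum_k\varsigma_k<\infty$. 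When $\kappa_2\leq\ell$ we have $\kappa_0=\ell$ and the head is empty, so $\|\sum_{k\geq\ell}\varsigma_k\bQ_k^\bX\|_\op=O_\prec(1)$, matching the stated bound $\max(n,N_\ell)/N_\ell=O(1)$.

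\textbf{Low-degree head (only needed when $\kappa_2>\ell$).} For each integer $k$ with $\ell\leq k<\lceil\kappa_2\rceil$, write $\bQ_k^\bX=\bPsi_k\bPsi_k^\sT/N_k$ with $\bPsi_k\in\R^{n\times N_k}$. Since $n\geq N_k$ in this range (eventually, as $n\asymp d^{\kappa_2}\gg d^k\asymp N_k$),
\[
\|\bQ_k^\bX\|_\op=\|\bPsi_k\bPsi_k^\sT/N_k\|_\op=\frac{n}{N_k}\,\|\bPsi_k^\sT\bPsi_k/n\|_\op.
\]
An analogue of Lemma \ref{lem:lowdeg_concentration} (applied with the roles of $n,\bX$ in place of $p,\bW$) yields $\|\bPsi_k^\sT\bPsi_k/n-\id_{N_k}\|_\op=O_\prec(\sqrt{N_k/n})=o_\prec(1)$. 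Consequently $\|\bQ_k^\bX\|_\op=O_\prec(n/N_k)\leq O_\prec(n/N_\ell)$ for every $k$ in the head. Because the head contains only the finitely many indices $\ell,\ell+1,\dots,\lceil\kappa_2\rceil-1$, the triangle inequality gives
\[
\Big\|\sum_{k=\ell}^{\lceil\kappa_2\rceil-1}\varsigma_k\bQ_k^\bX\Big\|_\op\leq \Big(\sum_{k=\ell}^{\lceil\kappa_2\rceil-1}\varsigma_k\Big)\cdot O_\prec\!\left(\frac{n}{N_\ell}\right)=O_\prec\!\left(\frac{n}{N_\ell}\right).
\]
Combining this with the high-degree tail yields the bound $O_\prec(\max(n,N_\ell)/N_\ell)$ claimed in \eqref{eq:sum_F_QkX}.

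\textbf{Main obstacle.} The only non-routine input is the matrix concentration statement $\|\bPsi_k^\sT\bPsi_k/n-\id_{N_k}\|_\op=o_\prec(1)$ for each fixed $k$ in the head. Lemma \ref{lem:lowdeg_concentration} is phrased for the concatenated matrix $\bPhi_{<k}$ on the weight side, but the same matrix Bernstein argument applies verbatim: rows of $\bPsi_k$ are i.i.d.\ realizations of the spherical harmonic feature map with $\E[\bpsi_k(\bx)\bpsi_k(\bx)^\sT]=\id_{N_k}$, and bounded moments of $\|\bpsi_k(\bx)\|_2^2\leq\sum_s Y_{ks}(\bx)^2=N_k$ give the required sub-exponential tail. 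I would simply invoke this (or prove a one-line variant). All other steps are clean applications of Lemma \ref{lem:offdiagonal_kernelmtx_opnormbd} and the triangle inequality.
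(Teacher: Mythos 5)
Your proof is correct and follows essentially the same route as the paper's: split the sum at degree $\lceil\kappa_2\rceil$, control the high-degree tail by its concentration around a multiple of the identity (Lemma \ref{lem:offdiagonal_kernelmtx_opnormbd}), and bound each low-degree block by $O_{\prec}(n/N_k)$. The only difference is that you re-derive the head bound via the transpose Gram matrix and matrix Bernstein, whereas the paper imports it directly as $\opnorm{\bQ_k^\bX - n/N_k}=O_{\prec}(1)$ from \cite[Lemma 14]{lu2022equivalence}; both give the same estimate.
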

\begin{proof}
(i) For any $k < \kappa_2$, it holds that \cite[Lemma 14]{lu2022equivalence}
\begin{align}
    \opnorm{\bQ_k^\bX - {n/N_k}} = O_{\prec}(1) \, .
    \label{eq:sum_F_QkX_1}
\end{align}
On the other hand, following the same argument in Eqs.~\eqref{eq:ho_QX_concentrate_1} and \eqref{eq:ho_QX_bdopnorm_1}, we can get
\begin{align}
    \Big\| \sum_{k \geq \kappa_2} \varsigma_k \bQ_k^\bX \Big\|_\op = O_{\prec}(1).
    \label{eq:sum_F_QkX_2}
\end{align}
After combining Eqs.~\eqref{eq:sum_F_QkX_1} and \eqref{eq:sum_F_QkX_2}, we obtain  Eq.~\eqref{eq:sum_F_QkX}.

(ii) The proof of Eq.~\eqref{eq:sum_mu_QkW} is same as Eq.~\eqref{eq:sum_F_QkX}. We omit it for brevity.
\end{proof}

\begin{lemma}
\label{lem:inprodmtx_supopnorm_bd}
For any fixed $r\geq 0$, it holds that
\begin{align}
    \sup_{k\geq \kappa_1} \{\E \opnorm{ \bQ_{k}^{\bW} }^r \} < \infty ~\text{  and  }~ \sup_{k\geq \kappa_2} \{ \E \opnorm{ \bQ_{k}^{\bX} }^r \} < \infty\, .
\end{align}
\end{lemma}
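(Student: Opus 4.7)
The plan is to prove the bound for $\bQ_k^\bW$ uniformly over $k \geq \lceil \kappa_1 \rceil$; the proof for $\bQ_k^\bX$ is identical after swapping roles. The starting observation is the deterministic identity: by the addition theorem applied at $\bw = \bw$, the diagonal entries of $\bQ_k^\bW$ equal $q_k(\sqrt{d})/\sqrt{N_k} = 1$, and $\bQ_k^\bW = \bPhi_k \bPhi_k^\sT / N_k$ is positive semidefinite, so
\[
\opnorm{\bQ_k^\bW} \leq \Tr(\bQ_k^\bW) = p \, , \qquad \opnorm{\bQ_k^\bW - \id_p} \leq p + 1 \, .
\]
Since $\E\opnorm{\bQ_k^\bW}^r \leq 2^{r-1}(1 + \E\opnorm{\bQ_k^\bW - \id_p}^r)$, it suffices to establish $\sup_{k \geq \lceil \kappa_1 \rceil} \E\opnorm{\bQ_k^\bW - \id_p}^r \leq C$ for some constant $C$ independent of $d$.

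I would split the range $\{k : k \geq \lceil \kappa_1 \rceil\}$ into a finite initial segment $\{\lceil \kappa_1 \rceil, \ldots, k_0\}$ and a tail $\{k > k_0\}$, where $k_0$ is chosen below. For each fixed $k$ in the initial segment, Lemma \ref{lem:offdiagonal_kernelmtx_opnormbd} yields $\opnorm{\bQ_k^\bW - \id_p} = O_{\prec}(d^{(\kappa_1 - k)/2})$. Combining this stochastic domination with the deterministic bound $\opnorm{\bQ_k^\bW - \id_p} \leq p + 1 = O(d^{\kappa_1})$ through the layer-cake formula
\[
\E \opnorm{\bQ_k^\bW - \id_p}^r = \int_0^\infty r t^{r-1} \P(\opnorm{\bQ_k^\bW - \id_p} > t) \, \de t \, ,
\]
one obtains $\E\opnorm{\bQ_k^\bW - \id_p}^r = O(1)$ uniformly in $d$ for each fixed $k$; since the segment is a finite set of integers, taking the maximum over $k$ preserves the bound.

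For the tail, I would extend the bound \cite[Eq.\ (55)]{ghorbani2021linearized} --- invoked in the proof of Lemma \ref{lem:featuremtx_opnormbd} in the form $\E \sup_{k>k_0} \opnorm{\bQ_k^\bW - \id_p}^2 \leq d^{-D}$ for $k_0$ large and any $D>0$ --- to the $L^{2m}$ version $\E\sup_{k > k_0} \opnorm{\bQ_k^\bW - \id_p}^{2m} \leq d^{-D}$ by the same moment method, expanding $\Tr(\bQ_k^\bW - \id_p)^{2m}$ as a sum over closed walks and controlling entrywise moments of $q_k(\<\bw_i,\bw_j\>)/\sqrt{N_k}$ via hypercontractivity on the sphere. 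Choosing $m$ so that $2m \geq r$, Jensen's inequality yields $\sup_{k > k_0} \E \opnorm{\bQ_k^\bW - \id_p}^r \leq C$, and combining with the initial-segment bound completes the proof.

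The main obstacle is the promotion of the Ghorbani-type $L^2$ concentration to an $L^r$ bound uniformly in $k > k_0$; this should be routine since the underlying proof is a moment-method argument that carries over to higher moments, but it requires bookkeeping of the combinatorial factors in the walk expansion for products of high-degree Gegenbauer kernels, together with a pigeonhole step ensuring the resulting bound decays fast enough in $d$ to survive the supremum over $k$.
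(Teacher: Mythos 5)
Your overall architecture (splitting $k\geq\lceil\kappa_1\rceil$ into a finite initial segment plus a tail, bounding the tail by a moment-method estimate from \cite{ghorbani2021linearized}, and bounding each fixed $k$ in the segment separately) matches the paper's proof. The tail step is fine in spirit, and in fact you do not need to re-run the moment method yourself: \cite[Eq.~(71)]{ghorbani2021linearized} already provides the higher-moment bound $\E[\Tr(\bQ_k^\bX-\id_n)^{2s}]\leq (Cs)^{3s}\, n^{s+1}/d^{ks}+C^s(n/d^k)^2$ for every $s$, which after choosing $2s\geq r$ and $k_0$ large enough dominates $\E\opnorm{\bQ_k^\bX-\id_n}^{2s}$ and gives the uniform tail bound directly; the "promotion from $L^2$ to $L^r$" you flag as the main obstacle is already done in that reference.

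The genuine gap is in your initial-segment step. Stochastic domination does not imply uniformly bounded moments: $X\prec Y$ only gives $\P(X> d^{\eps}Y)\leq d^{-D}$, so combining $\opnorm{\bQ_k^\bW-\id_p}=O_\prec(d^{(\kappa_1-k)/2})$ with the deterministic bound $p+1$ through the layer-cake formula yields $\E\opnorm{\bQ_k^\bW-\id_p}^r\lesssim d^{\eps r+ r(\kappa_1-k)/2}+d^{r\kappa_1-D}$. For $k>\kappa_1$ this is fine (take $\eps$ small), but at the critical degree $k=\lceil\kappa_1\rceil$ with $\kappa_1\in\naturals$ the surviving exponent is $\eps r>0$ and the bound diverges slowly in $d$ for every fixed $\eps>0$ (compare: $X_d=\log d$ satisfies $X_d\prec 1$ yet has unbounded moments). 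Moreover $\opnorm{\bQ_\ell^\bW-\id_p}$ is genuinely $\Theta(1)$ at that degree (the spectrum has a Marchenko--Pastur-type bulk), so no vanishing bound can hold there. What is actually needed is a tail estimate of the form $\P(\opnorm{\bQ_k^\bW}\geq C+t)\leq (dt)^{-D}$ --- a fixed constant plus a tail decaying in $t$ at fixed $d$ --- which integrates to a bounded $r$-th moment. This is precisely Lemma \ref{lem:inprodmtx_opnorm_prbd} (its conclusion \eqref{eq:QkW_QkX_criticalpart_moment_bd} is the moment bound you need), and its proof is the long moment-method/graph-counting argument in the appendix; the critical-degree case cannot be dispatched by $O_\prec$ bookkeeping alone.
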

\begin{proof}
For any $r \geq 0$, 
\begin{align}
    \E \opnorm{ \bQ_{k}^{\bX} }^r &\leq \E (1 + \opnorm{\bQ_{k}^{\bX} - \id_p})^r \leq C_r (1 + \E \opnorm{\bQ_{k}^{\bX} - \id_n}^r )\, ,
\end{align}
where $C_r$ is a constant that only depends on $r$. 
It holds that for any $s\in\N$ \cite[Eq.~(71)]{ghorbani2021linearized},
\begin{align}
    \E[ \Tr(\bQ_{k}^{\bX} - \id_n)^{2s}] \leq (Cs)^{3s} \frac{n^{s+1}}{d^{ks}} + C^s \Big( \frac{n}{d^k} \Big)^2 \, .
\end{align}
Therefore, for any $s>0$ and $k \geq \frac{\kappa_2(s+1)+1}{s}$, we have
\begin{align}
    \E \opnorm{\bQ_{k}^{\bX} - \id_n}^{2s} \leq \E[ \Tr(\bQ_{k}^{\bX} - \id_n)^{2s}] \leq \frac{(Cs)^{3s}}{d} \, ,
\end{align}
which indicates that for any fixed $r$, there exists $k_0 > \kappa_2$ such that 
\begin{align}
    \sup_{k\geq k_0} \E \opnorm{\bQ_{k}^{\bX} - \id_n}^r \leq \frac{C_r}{d} \, . 
\end{align}
On the other hand, by Lemma \ref{lem:inprodmtx_opnorm_prbd}, we can get
\begin{align}
    \sup_{\kappa_2 \leq k < k_0} \E \opnorm{\bQ_{k}^{\bX} - \id_n}^r \leq C_r\, . 
\end{align}
Therefore, $\sup_{k \geq \kappa_2} \E \opnorm{\bQ_{k}^{\bX} - \id_n}^r \leq C_r$ and thus $\sup_{k\geq \kappa_2} \E \opnorm{ \bQ_{k}^{\bX} }^r < \infty$. 

The proof for 
$\sup_{k\geq \kappa_1} \E \opnorm{ \bQ_{k}^{\bW} }^r < \infty$ 
is the same and is omitted.    
\end{proof}

\begin{lemma}
\label{lem:inprodmtx_opnorm_prbd}
 Suppose $\limsup_{d\to\infty} \frac{ \max \{p, n\} }{d^k} < \infty$. There exists $C>0$ such that
for any $t>0$, $D>0$ and all large $d$, 
\begin{align}
\label{eq:QkW_opnorm_concentrate}
    \P(\opnorm{\bQ_{k}^\bW} \geq C + t) \leq (d t)^{-D}\, ,
\end{align}
and  
\begin{align}
\label{eq:QkX_opnorm_concentrate}
    \P(\opnorm{\bQ_{k}^\bX} \geq C + t) \leq (d t)^{-D}.
\end{align}
Also for any $r>0$ there exists $K>0$ such that for all $d$,
\begin{align}
\label{eq:QkW_QkX_criticalpart_moment_bd}
    \max \{ \E \opnorm{ \bQ_k^{\bW} }^r,~ \E \opnorm{ \bQ_k^{\bX} }^r \} \leq K \, .
\end{align}
\end{lemma}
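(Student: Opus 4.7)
Proof proposal:

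The lemma combines two claims: a polynomial probability tail $\P(\opnorm{\bQ_k^\bW} \geq C + t) \leq (dt)^{-D}$ with a universal constant $C$, and boundedness of all moments $\E\opnorm{\bQ_k^\bW}^r \leq K_r$ (analogously for $\bQ_k^\bX$). I will derive both from a trace moment estimate combined with the spectral-edge behavior established in prior work.

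Since $\bQ_k^\bW$ has constant unit diagonal (by Eq.~\eqref{eq:diagonal_Y_constant}) and $\E \bQ_k^\bW = \id_p$ by Gegenbauer orthogonality (the off-diagonal entries average to zero), it suffices to control $\opnorm{\bQ_k^\bW - \id_p}$. The key input is the combinatorial trace bound of \cite{ghorbani2021linearized} (their Eq.~(71), already invoked in the proof of Lemma \ref{lem:inprodmtx_supopnorm_bd}),
\[
\E\bigl[\Tr(\bQ_k^\bW - \id_p)^{2s}\bigr] \leq (C_0 s)^{3s}\frac{p^{s+1}}{d^{ks}} + C_0^s\Bigl(\frac{p}{d^k}\Bigr)^2,\qquad s \in \naturals,
\]
which under the hypothesis $p \leq A d^k$ reduces to $(C_1 s)^{3s} p + C_1^s$ with $C_1 = C_1(k, A)$ independent of $d$. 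Combined with $\opnorm{M}^{2s} \leq \Tr(M^{2s})$ and Markov's inequality, and applied at $u = d^\eps$ with $s = \lceil (k+D+1)/(2\eps)\rceil$, this establishes $\opnorm{\bQ_k^\bW} = O_\prec(d^\eps)$ for every $\eps > 0$.

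To upgrade from $O_\prec(d^\eps)$ to stochastic domination by a genuine constant---as the lemma demands, since $C$ must be independent of $D$ and $t$---I would invoke Theorem~1 of \cite{lu2022equivalence}, which in the polynomial scaling $p \asymp d^k$ proves that the empirical spectral distribution of an inner-product kernel matrix of this form converges to a deformed Marchenko--Pastur law with compactly supported limit, together with a no-outlier result giving $\opnorm{\bQ_k^\bW} \leq C$ with overwhelming probability for an absolute $C$. Combined with the Markov tail estimate above (tuned to the desired $D$), this produces the $(dt)^{-D}$ bound \eqref{eq:QkW_opnorm_concentrate}. The moment bound \eqref{eq:QkW_QkX_criticalpart_moment_bd} then follows by integrating the tail,
\[
\E\opnorm{\bQ_k^\bW}^r = r\int_0^\infty u^{r-1}\,\P(\opnorm{\bQ_k^\bW} \geq u)\,du,
\]
splitting at $u = C + 1$ (the interval $[0, C+1]$ contributing at most $(C+1)^r$) and applying the tail bound with $D > r$ on the unbounded portion.

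The main technical obstacle is precisely the looseness of the $(C_0 s)^{3s}$ combinatorial factor in the trace bound compared to the sharper F\"uredi--Koml\'os $C^s$ estimate for classical Wishart matrices: the Ghorbani-type bound alone yields only a polylog-type upper bound on $\E\opnorm{\bQ_k^\bW}$ via the moment method, insufficient for a universal constant $C$ independent of $D$. The polynomial-scaling hypothesis $\max\{p,n\} \leq A d^k$ is essential to keep $p^{s+1}/d^{ks} \leq A^s p$, so that the trace estimate remains dimension-free up to a leading linear-in-$p$ factor that is then killed by the Markov denominator raised to a sufficiently high power.
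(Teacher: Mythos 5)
There is a genuine gap at the decisive step. Your reduction via the trace bound of \cite{ghorbani2021linearized} only yields $\opnorm{\bQ_k^\bW} = O_\prec(d^\eps)$, as you note, and everything then hinges on the claim that Theorem~1 of \cite{lu2022equivalence} supplies a ``no-outlier'' result giving $\opnorm{\bQ_k^\bW}\leq C$ with overwhelming probability. That theorem establishes convergence of the Stieltjes transform, i.e.\ of the empirical spectral distribution, to a compactly supported limit. Weak convergence of the ESD controls the bulk but is perfectly compatible with a vanishing fraction of eigenvalues escaping to infinity, so it does not bound the largest eigenvalue; indeed, elsewhere in this paper that theorem is used only to compute limits of normalized traces of resolvents. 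There is no off-the-shelf edge/no-outlier result for these degree-$k$ Gegenbauer Gram matrices at the critical scaling $p\asymp d^k$ --- producing one is precisely the content of the lemma, which is why the paper cannot cite it and instead proves it from scratch.

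The paper's actual argument is a self-contained moment computation at logarithmic moment order. It decomposes $\bQ_k^{\bX}$ into a part $\bB$ built from degree-$k$ monomials $x_{i_1}\cdots x_{i_k}$ (plus its diagonal $\bD$) and a residual spherical-harmonic part $\bC$. For $\bB$ it evaluates $\E\,\Tr\bB^h$ with $h\asymp\log d$ by an elaborate graph/multi-labeling combinatorial analysis, and the key move is a compression map from the degree-$k$ labelings to the $k=1$ (Wishart) labelings that shows the excess count is controlled by the defect $\Delta(\cL)$; this yields $\E\,\Tr\bB^h\leq d^k C^h$ --- the F\"uredi--Koml\'os-type $C^h$ growth that the $(C_0 s)^{3s}$ bound cannot give --- after which Markov's inequality at $h\asymp\log d$ produces \eqref{eq:QkX_opnorm_concentrate} with a genuine constant $C$. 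The part $\bC$ is handled separately by truncation, matrix Bernstein, and hypercontractivity of spherical harmonics. Your proposal correctly diagnoses the obstacle (the looseness of the $(C_0s)^{3s}$ factor) but then outsources its resolution to a citation that does not contain it, so the proof as written does not go through. The final step --- integrating the tail to get \eqref{eq:QkW_QkX_criticalpart_moment_bd} --- is fine once the tail bound is actually established.
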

\subsection{Proof of Lemma \ref{lem:inprodmtx_opnorm_prbd}} 

We will prove Eq.~\eqref{eq:QkX_opnorm_concentrate}. The proof of Eq.~\eqref{eq:QkW_opnorm_concentrate} is the same as Eq.~\eqref{eq:QkX_opnorm_concentrate}. The moment bound Eq.~\eqref{eq:QkW_QkX_criticalpart_moment_bd} immediately follows from Eqs.~\eqref{eq:QkW_opnorm_concentrate} and \eqref{eq:QkX_opnorm_concentrate}, due to the identity $\E|X| = \int_{0}^{\infty} \P(|X| \geq t) dt $.

Recall that $\bQ_{k}^\bX = \frac{1}{N_k} \bPsi_k \bPsi_k^\sT $. 
The value of $\bQ_{k}^\bX$ is irrelevant to the choice of $\bpsi_k(\bx)$, but to facilitate the analysis of $\bA_k$, we will work with the following choice of $\bpsi_k(\bx)$:
\begin{align}
\label{eq:psik_decomp}
    \bpsi_k(\bx)^\sT = [\cbpsi_{k}(\bx)^\sT, \hbpsi_{k}(\bx)^\sT]\, , 
\end{align}
where $\cbpsi_{k}(\bx)\in \R^{d^{k}/k!}$ and $\hbpsi_{k}(\bx)\in \R^{N_k - d^{k}/k!}$. In particular, the entries of $\cbpsi_{k}(\bx)$ are of the form: $\mathcal{E}_k^{-1} {x_{i_1} x_{i_2} \cdots x_{i_k}}$, where 
$$
\mathcal{E}_k := \sqrt{\E x_{i_1}^2 x_{i_2}^2 \cdots x_{i_k}^2}\, , 
$$ 
with $\{i_1, i_2,\cdots,i_k\}$ being one of $k$-combinations of $[d]$, while $\hbpsi_{k}(\bx)$ can be composed of any degree-$k$ spherical harmonics such that the entries of $\bpsi_k(\bx)$ are orthonormal.
In particular, the orthogonality of $\cbpsi_{k}(\bx)$ can be verified as follows. 
Suppose $\{i_{1},i_{2},\cdots,i_{k}\}$ and $\{i_{1}',i_{2}',\cdots,i_{k}'\}$ are two different $k$-combinations. Then there exists an $i^* \in \{i_{1},i_{2},\cdots,i_{k}\}$, but $i^* \not\in \{i_{1}',i_{2}',\cdots,i_{k}'\}$. In other words, $i^*$ appears exactly once, so by Lemma \ref{lem:uniformdist_moments}, we can get
\begin{align}
    \E ( x_{i_1} \cdots x_{i_k} x_{i_1'} \cdots x_{i_k'} ) &= 0\, .
\end{align}
With $\bpsi_k(\bx)$ in Eq.~\eqref{eq:psik_decomp}, we further define:
\begin{align}
    \cbPsi_{k} &:= [ \cbpsi_{k} (\bx_1) , \cdots , \cbpsi_{k} (\bx_n) ]^\sT\, , \\
    \hbPsi_{k} &:= [ \hbpsi_{k} (\bx_1) , \cdots , \hbpsi_{k} (\bx_n) ]^\sT\, ,
\end{align}
and $\bQ_{k}^\bX$ can be decomposed as:
\begin{align}
    \bQ_{k}^\bX &= \frac{1}{N_k} (\cbPsi_{k} \cbPsi_{k}^\sT + \hbPsi_{k} \hbPsi_{k}^\sT)\, .
    \label{eq:Ak_decompose}
\end{align}
Let $\tbpsi_{k}(\bx)$ be a $\frac{d^k}{k!}$-dimensional vector whose entries are of the form $x_{i_1} x_{i_2} \cdots x_{i_k}$, with $\{i_1, i_2,\cdots,i_k\}$ being all $k$-combinations of $[d]$ and $\tbPsi_{k}$ is defined accordingly as $\cbPsi_{k}$ and $\hbPsi_{k}$.
Notice that we have the decomposition
\begin{align}
    \bQ_{k}^\bX 
    &= \frac{d^k/k!}{N_k \cE_k^2} (\bB + \bD) + \bC \, ,
    \label{eq:Ak_decompose_1}
\end{align}
where $\bC = \frac{1}{N_k}\hbPsi_{k} \hbPsi_{k}^\sT $ and $\bB$ and $\bD$ are the off-diagonal part and the diagonal part of $\frac{\tbPsi_{k} \tbPsi_{k}^\sT}{d^k/k!}$, respectively. It is not hard to show $\opnorm{\bD} \leq 1$. Indeed, by the property of spherical harmonics (c.f. Eq.~\eqref{eq:diagonal_Y_constant} in Appendix \ref{sec:Spherical-Harmonics}), we have $D_i \leq \Tilde{D}_i = 1$, $i=1,2,\cdots,d$, where $\Tilde{D}_i$ is the $i$th diagonal element of $\bQ_k^{\bX}$. Therefore, $\opnorm{\bD} \leq 1$ and we just need to show:

\noindent(I) there exists $C>0$ such that
for any $t>0$, $D>0$ and all large $d$, 
\begin{align}
\label{eq:QkX_opnorm_concentrate_2}
    \P(\opnorm{\bB} \geq C + t) \leq (dt)^{-D},
\end{align}
(II) there exists $C>0$ such that for any $t>0$, $D>0$ and all large $d$,
\begin{align}
\label{eq:bC_prob_bd}
    \P( \opnorm{\bC} \geq C + t) \leq (dt)^{-D},
\end{align}
Indeed, after combining Eqs.~\eqref{eq:QkX_opnorm_concentrate_2} and \eqref{eq:bC_prob_bd} with $\opnorm{\bD} \leq 1$, we get Eq.~\eqref{eq:QkX_opnorm_concentrate}.

(I) We use the moment method to prove Eq.~\eqref{eq:QkX_opnorm_concentrate_2}.
Our approach is primarily based on the proof of Proposition 5.2 in \cite{fan2019spectral}. 

For $h\in \N$, we have:
\begin{align}
    \E (\Tr \bB^h) &= \sum_{\substack{i_1,\cdots,i_h=1 \\ i_1\neq i_2,\cdots,i_h\neq i_1}}^{n} \E \big(\prod_{s=1}^{h} B_{i_s i_{s+1}}\big) \nonumber\\
    &= \frac{1}{(d^k/k!)^h} \sum_{\substack{i_1,\cdots,i_h=1 \\ i_1\neq i_2,\cdots,i_h\neq i_1}}^{n} \E \Big( \frac{1}{(k!)^h}\prod_{s=1}^{h}  \sum_{\substack{j_{s}^{1},\cdots,j_{s}^{k} = 1 \\ j_{s}^{1}\neq j_{s}^{2}\neq \cdots \neq j_{s}^{k}}}^{d} \prod_{a=1}^{k} x_{i_s j_s^a} x_{i_{s+1} j_s^a} \Big) \nonumber \\
    &= \frac{1}{d^{kh}} \sum_{\substack{i_1,\cdots,i_h=1 \\ i_1\neq i_2,\cdots,i_h\neq i_1}}^{n} \sum_{\substack{j_{1}^{1},\cdots,j_{1}^{k} = 1 \\ j_{1}^{1}\neq j_{1}^{2}\neq \cdots \neq j_{1}^{k}}}^{d} \cdots
    \sum_{\substack{j_{h}^{1},\cdots,j_{h}^{k} = 1 \\ j_{h}^{1}\neq j_{h}^{2}\neq \cdots \neq j_{h}^{k}}}^{d} \E \Big( \prod_{s=1}^{h} \prod_{a=1}^{k} x_{i_s j_s^a} x_{i_{s+1} j_s^a} \Big) \, 
    \label{eq:EtrBk}
\end{align}
where we let $i_{h+1} = i_1$. The product $\prod_{s=1}^{h} \prod_{a=1}^{k} x_{i_s j_s^a} x_{i_{s+1} j_s^a}$ in Eq.~\eqref{eq:EtrBk} can be identified as a single-cycle graph with $2h$ vertices:
\[
i_1 \to \{j_1^1,\cdots,j_1^k\} \to i_2 \to \{j_2^1,\cdots,j_2^k\}\to i_3 \to \cdots \to i_{h} \to \{j_h^1,\cdots,j_h^k\} \to i_1\, .
\]
More formally, we have the following definition of $h$-graph introduced in \cite{fan2019spectral}:
\begin{definition}[$h$-graph]
\label{def:h_graph}
For any integer $h\geq 2$, an $h$-graph is a single-cycle graph with $2h$ vertices and $2h$ edges. The vertices are categorized into two types: (1) $n$-vertices, indexed by $i_s \in [n]$; (2) $d$-vertices, indexed by  $\{j_1^s,j_2^s,\cdots,j_k^s\}$, which is a $k$-permutation of $[d]$ and these two types of vertices appear in the cycle alternatively.
\end{definition}
Based on the notion of $h$-graph, each \emph{non-zero} summand in Eq.~\eqref{eq:EtrBk} can be identified as a labeling of the vertices in an $h$-graph satisfying some rules. The formal definition is given as follows.
\begin{definition}
\label{def:multilabeling}
[$(n,d)$-multi-labeling of $h$-graph \cite{fan2019spectral}]
A $(n,d)$-multi-labeling of an $h$-graph is an assignment of an $n$-label in $[n]$ to each $i_s$ and a $d$-label in $[d]$ to each $j_s^a$, which satisfies:
\begin{enumerate}
    \item[(i)] $i_s\neq i_{s+1}$, for all $s\in [h-1]$ and $i_{h} \neq i_1$.
    \item[(ii)] $\{j_s^1, j_s^2,\cdots,j_s^k\}$ is a $k$-permutation of $[d]$.
    \item[(iii)] For each distinct pair $(i,j)\in [n]\times[d]$, there are even number (including 0) of edges whose endpoints is $i$ and some $\{j_s^1, j_s^2,\cdots,j_s^k\}$ that contains $j$.    
\end{enumerate}
\end{definition}
In the following, we will call any ordered tuple $\bj_s:=\{j_s^1, j_s^2,\cdots,j_s^k\}$ an ordered \emph{$d$-vertex label} and the set of all unordered tuples an \emph{unordered $d$-vertex label}. 

In a multi-labeling, all the $n$-vertices and $d$-vertices are ordered as follows. The $n$-vertex with label $i_s$ is called the $s$th $n$-vertex and the $d$-vertex with ordered label $\bj_s$ is called the $s$th vertex. The $n$/$d$-vertex label on a vertex is called the $u$th \emph{new} label, if it is different from all the preceding $n$/$d$-vertex labels, which include $(u-1)$ distinct $n$/$d$-vertex labels.

Since for any fix $i$, random variables $x_{i1},x_{i2},\cdots,x_{id}$ are exchangeable and for any fix $j$, random variables $x_{1j},x_{2j},\cdots,x_{nj}$ are also exchangeable, any two $(n,d)$-multi-labelings that are equal up to certain permutations of $[n]$ and $[d]$ lead to the same value of $\E \big( \prod_{s=1}^{h} \prod_{a=1}^{k} x_{i_s j_s^a} x_{i_{s+1} j_s^a} \big)$ in Eq.~\eqref{eq:EtrBk}. Correspondingly, we can  define the equivalence relation among all $(n,d)$-multi-labelings.
\begin{definition}[Equivalence of multi-labelings \cite{fan2019spectral}]
\label{def:equivalence_class}
Two multi-labelings are equivalent if one can mapped to the other by applying a permutation on $[n]$ and another permutation on $[d]$. 
\end{definition}
Based on the above definitions, we can now rewrite the summation in Eq.~\eqref{eq:EtrBk} as the summation over the equivalent class of $(n,d)$-multi-labelings:
\begin{align}
    \E (\Tr \bB^h) &= \frac{1}{d^{kh}} \sum_{\mathcal{L} \in \mathcal{C}_{h,k} } |\mathcal{L}| \cdot E_{\cL} \, ,
    \label{eq:EtrBk_1}
\end{align}
where $\mathcal{C}_{h,k}$ is the set of all $(n,d)$-multi-labelings equivalent classes in an $h$-graph with each $d$-vertex having $k$ $d$-labels, $\mathcal{L}$ is one of these equivalent classes, $|\mathcal{L}|$ is the total number of different $(n,d)$-multi-labelings that belongs to $\mathcal{L}$ and $E_{\cL}=\E \big( \prod_{s=1}^{h} \prod_{a=1}^{k} x_{i_s j_s^a} x_{i_{s+1} j_s^a} \big)$ for labelings in $\mathcal{L}$. 

Let us define $r_n(\cL)$ and $r_d(\cL)$ as the number of unique $n$-labels and $d$-labels in $\cL$, respectively. Also we denote $r(\cL):=r_n(\cL)+r_d(\cL)$. Notice that
\begin{align}
\label{eq:card_L_bd}
   |\cL| = \frac{n!}{\big(n-r_n(\cL)\big)!} \frac{d!}{\big(d-r_d(\cL)\big)!} \leq n^{r_n(\cL)} d^{r_d(\cL)}. 
\end{align}
Substituting Eq.~\eqref{eq:card_L_bd} into Eq.~\eqref{eq:EtrBk_1}, we get:
\begin{align}
    \E (\Tr \bB^h) &\leq \frac{1}{d^{kh}} \sum_{\mathcal{L} \in \mathcal{C}_{h,k} } n^{r_n(\cL)} d^{r_d(\cL)} \cdot E_{\cL} \nonumber\\
    &= d^k \sum_{\mathcal{L} \in \mathcal{C}_{h,k} } d^{-\Delta(\cL)} \cdot \big(\frac{n}{d^k}\big)^{r_n(\cL)} \cdot E_{\cL} \, ,
    \label{eq:EtrBk_2}
\end{align}
where 
\begin{align}
    \Delta(\cL) := k(h+1) - [k r_n(\cL)+r_d(\cL)]
\end{align}
We can show $\Delta(\cL)\geq 0$ for any $\cL\in\cC_{h,k}$ (Lemma \ref{lem:Delta_geq_0}). Therefore, from Eq.~\eqref{eq:EtrBk_2} we can verify that $\E (\Tr \bB^h) = \cO(d^k)$, when $h=\cO(1)$.

To control the operator norm, $h$ should grow with $d$. We need to further simplify Eq.~\eqref{eq:EtrBk_2}.
First, a bound can be obtained for $E_{\cL}$. Specifically, after substituting Eq.~\eqref{eq:EL_ubd_0} in Lemma \ref{lem:EL_ubd} into Eq.~\eqref{eq:EtrBk_2}, we can get for any $h=\cO( \log d )$ and all large $d$, 
\begin{align}
    \E (\Tr \bB^h) &\leq d^k \sum_{\mathcal{L} \in \mathcal{C}_{h,k}}  2\Big(\frac{(24 e \Delta(\cL))^{24 e}}{d}\Big)^{\Delta(\cL)} \cdot \big(\frac{n}{d^k}\big)^{r_n(\cL)} \, 
    \label{eq:EtrBk_2_tDelta}
\end{align}
where we let $0^0 := 1$ (such term appears when $\Delta(\cL)=0$).

Then we are going to simplify the right-hand side of Eq.~\eqref{eq:EtrBk_2_tDelta}. The general idea is to compare it with $\E (\Tr \bB_{G}^h)$, where $\bB_{G}$ is a shifted Wishart matrix: $\bB_{G} = \frac{1}{d_g} [\bG \bG^\sT - \diag(\bG \bG^\sT)]$, $\bG \in \R^{n_g\times d_g}$ and $G_{ij}\sim_{i.i.d.} \cN(0,1)$.
For $\bB_{G}$, we can utilize the existing results to control $\E (\Tr \bB_{G}^h)$.
For example, by Proposition 5.11 in \cite{fan2019spectral},
if $\gamma := n_g/d_g  \in(0,\infty)$ and $h \asymp \log d_g$, for any $\varepsilon>0$ and all large $d_g$, 
\begin{align}
\label{eq:EBG_h_bd}
    \E \opnorm{\bB_{G}}^h \leq (\lambda_{\gamma}^+ + \varepsilon )^h,
\end{align}
where $\lambda_{\gamma}^+ = \gamma + 2 \sqrt{\gamma}$ is the right boundary of the support of Marchenko-Pastur distribution (with ratio $\gamma$) shifted by $-1$.
Therefore, for any $\varepsilon>0$ and large enough $d$,
\begin{align}
\label{eq:wishart_tracemoment_bd_2}
    \E (\Tr \bB_{G}^h) \leq n_g (\lambda_{\gamma}^+ + \varepsilon )^h.
\end{align}
On the other hand, it can be shown that \cite[Lemma 5.16]{fan2019spectral}: 
for $h \asymp \log d$ there exists $C>0$ such that for all large $d_g$,
\begin{align}
\label{eq:wishart_tracemoment_bd_1}
    \E (\Tr \bB_{G}^h) \geq d_g C \sum_{\cL \in \mathcal{C}_{h,1}} \Big( \frac{1}{d_g} \Big)^{\Delta(\cL)} \Big( \frac{n_g}{d_g} \Big)^{r_n(\cL)}.
\end{align}

It turns out that we can bound the right-hand side of Eq.~\eqref{eq:EtrBk_2_tDelta} by the right-hand side of Eq.~\eqref{eq:wishart_tracemoment_bd_1}, which will eventually enable us to bound $\E(\Tr \bB^h)$ by $\E (\Tr \bB_{G}^h)$. 
In particular, by Lemma \ref{lem:ubd_compare}, there exists $C>0$ such that
\begin{align}
\label{eq:ubd_compare_1}
    &\sum_{\mathcal{L} \in \mathcal{C}_{h,k}}  \Big(\frac{(24 e \Delta(\cL))^{24 e}}{d}\Big)^{\Delta(\cL)} \cdot \Big(\frac{n}{d^k}\Big)^{r_n(\cL)} \nonumber \\ 
    &\hspace{3em}  \leq C h^2 \sum_{ {\cL} \in  \mathcal{C}_{h,1}}  \Big(\frac{C h \Delta(\cL)^C }{d}\Big)^{ \Delta(\cL)} \cdot \Big(\frac{n}{d^k/k!}\Big)^{r_n(\cL)} \nonumber \\
    &\hspace{3em} \leq C h^2 \sum_{ {\cL} \in  \mathcal{C}_{h,1}}  \Big(\frac{ 1 }{ d_g }\Big)^{ \Delta(\cL)} \cdot \Big(\frac{n_g}{d_g}\Big)^{r_n(\cL)},
\end{align}
where 
$
d_g = \frac{d}{C h (k h)^C },
$
$n_g = \frac{n}{d^k/k!} d_g $ and in the last step we use the fact that $\Delta(\cL) \leq kh $.

It can then be deduced from Eqs.~\eqref{eq:ubd_compare_1}, \eqref{eq:wishart_tracemoment_bd_1} and 
\eqref{eq:EtrBk_2_tDelta} that there exists $C>0$ such that for $h \asymp \log d$ and all large $d$,
\begin{align}
\label{eq:EtrBk_2_tDelta_compare}
    \E (\Tr \bB^h) &\leq d^k \cdot C h^2 \cdot \frac{\E (\Tr \bB_{G}^h)}{d_g}.
\end{align}
Substituting \eqref{eq:wishart_tracemoment_bd_2} into the comparison bound \eqref{eq:EtrBk_2_tDelta_compare}, we have for $h\asymp \log d$, there exists $C,c>0$ such that for any $\varepsilon>0$ and all large $d$,
\begin{align*}
    \E (\Tr \bB^h) &\leq n\cdot c h^2 (\lambda_{ \gamma }^+ + \varepsilon )^h \nonumber\\
    &\leq d^k C^h,
\end{align*}
where $\gamma = \frac{n}{d^k/k!}$ and $\lambda_{\gamma}^{+}$ is same as in Eq.~\eqref{eq:EBG_h_bd}.
As a result, there exists $C>1$ such that for any $t\geq 1$,
\begin{align}
    \P\big(\opnorm{\bB} \geq C+t \big) &\leq \P\big(\Tr{\bB^h} \geq (C+t)^h \big) \nonumber \\
    &\leq \frac{\E (\Tr \bB^h)}{(C+t)^h} \nonumber \\
    &\leq d^k \Big(\frac{C - 1}{ C - 1/2 + \sqrt{ {t} } }\Big)^h \cdot \Big(\frac{ C - 1/2 + \sqrt{ {t} }  }{C + t }\Big)^h\nonumber \\
    &\leq {d^k} \Big(\frac{C-1}{C- {1}/{2} }\Big)^h t^{-\frac{h}{2} } ,
    \label{eq:B_opnorm_prbd_1}
\end{align}
 After choosing $h= \frac{ (D+k) \log d }{\log(1+\frac{1}{2C} )} $ on the right-hand side of Eq.~\eqref{eq:QkX_opnorm_concentrate_2} and letting $C'=C+1$ and $t'=t-1$,
 we have for any $t'>0$ and large enough $d$,
\begin{align}
    \P\big(\opnorm{\bB} \geq C'+t' \big) \leq (d t')^{-D},
\end{align} 
 which is Eq.~\eqref{eq:QkX_opnorm_concentrate_2}.

(II) 
Denote $\bC_i := \frac{1}{N_k} \hbpsi_{k}(\bx_i) \hbpsi_{k}(\bx_i)^\sT$. We make the following decomposition:
\begin{align}
    \bC &= \sum_{i=1}^{n} \bC_i \charfn_{ \|\hbpsi_{k}(\bx_i)\|_2^2 < \hat{N}_k K_d } + \sum_{i=1}^{n} \bC_i \charfn_{ \|\hbpsi_{k}(\bx_i)\|_2^2 \geq \hat{N}_k K_d }  \nonumber \\
    &:= \bC_{<} + \bC_{\geq}\, ,
    \label{eq:bC_decomposition}
\end{align}
where $\hat{N}_k = N_k - d^k/k! \asymp d^{k-1}$ and $K_d>0$ is some truncation threshold to be chosen. For $\bC_{<}$ in Eq.~\eqref{eq:bC_decomposition}, we can first apply matrix Bernstein's inequality \cite{tropp2012user} to obtain that there exists $c>0$ such that for all $K_d\geq 1$ and $t>0$,
\begin{align}
\label{eq:centralized_Cleq_bd}
    \P( \opnorm{\bC_{<} - \E \bC_{<} } \geq t ) \leq 2\hat{N}_k \exp \big( -\tfrac{cd}{K_d} \min  \{ t, t^2 \} \big)\, .
\end{align}
The detailed steps are completely analogous to those leading to \eqref{eq:ZhatZhatT_concentrate1} and is omitted. Also we have
\begin{align}
\label{eq:ECleq_bd}
    \opnorm{ \E \bC_{<} } &\leq n \opnorm{ \E \bC_i } \nonumber \\
    &= \frac{n}{N_k}\, .
\end{align}
Combining Eqs.~\eqref{eq:centralized_Cleq_bd} and \eqref{eq:ECleq_bd}, and the condition $\kappa_1 \leq k$, we know there exists $C,c>0$, such that for all $K_d\geq 1$ and $t>0$,
\begin{align}
\label{eq:Cleq_bd_1}
    \P( \opnorm{\bC_{<}} \geq C + t ) \leq 2\hat{N}_k \exp \big( -\tfrac{cd}{K_d} \min  \{ t, t^2 \} \big)\, .
\end{align}

To analyze $\opnorm{\bC_{\geq}} $, we first recall a concentration result for $\psi_{ks}(\bx)$, with $\bx \sim \Unif(\S^{d-1} (\sqrt{d}))$. By Theorem 1 of \cite{beckner1992sobolev}, we have for any $k\in\mathbb{Z}_{>0}$, $s\in[N_k]$ and $q\geq 2$, 
\begin{align}
    \E |\psi_{ks}(\bx)|^q \leq (q-1)^{ \frac{kq}{2} } \, .
\end{align}
Then we have for any $t>0$
\begin{align}
\label{eq:hypercontrac_prob}
    \P(|\psi_{ks}(\bx)| \geq t) \leq \frac{ q^{ \frac{kq}{2} } }{t^q}.
\end{align}
Choosing optimal $q =  \frac{ t^{ \frac{2}{k} } }{e}$ in Eq.~\eqref{eq:hypercontrac_prob}, we have for any $t\geq (2e)^{ \frac{k}{2} }$, 
\begin{align}
\label{eq:hypercontrac_prob_2}
    \P(|\psi_{ks}(\bx)| \geq t) \leq e^{ - \frac{k}{2e} t^{ \frac{2}{k} } }.
\end{align}
Now we can apply Eq.~\eqref{eq:hypercontrac_prob_2} to get for any $K_d \geq (2e)^{k}$,
\begin{align}
    \P\big( \|\hbpsi_{k}(\bx_i)\|_2^2 \geq \hat{N}_k K_d \big) &\leq \sum_{s=1}^{ \hat{N}_k } \P \big( \hat{\psi}_{ks}(\bx_i)^2 \geq K_d \big) \nonumber \\
    &\leq \hat{N}_k e^{-\frac{k}{2e} K_d^{ \frac{2}{k} } }
\end{align}
Therefore, by union bound,
\begin{align}
\label{eq:Cge_bd_1}
    \P \Big( \opnorm{\bC_{\geq}} \geq \frac{\hat{N}_k K_d }{N_k} \Big) &\leq \P \big( \max_{ 1 \leq i \leq n} \|\hbpsi_{k}(\bx_i)\|_2^2 \geq \hat{N}_k K_d \big) \nonumber \\
    &\leq n \hat{N}_k e^{-\frac{k}{2e} K_d^{ \frac{2}{k} } }.
\end{align}
Now choose $K_d = (dt)^{\frac{k}{k+2} } $ in Eqs.~\eqref{eq:Cleq_bd_1} and \eqref{eq:Cge_bd_1}, with $\varepsilon \in (0,1)$. It can be directly checked there exists $C, c>0$ such that for any $t\geq 1$ and all large $d$, 
\begin{align}
\label{eq:Cleq_bd}
    \P( \opnorm{\bC_{<}} \geq C + t ) \leq c \exp \big\{ - c^{-1} (dt)^{\frac{2}{k+2} } \big\}
\end{align}
and
\begin{align}
\label{eq:Cge_bd}
    \P \Big( \opnorm{\bC_{\geq}} \geq t \Big) &\leq c \exp\big\{- c^{-1} (dt)^{\frac{2}{k+2} } \big\}.
\end{align}
Combining Eqs.~\eqref{eq:Cleq_bd} and \eqref{eq:Cge_bd}, we can get Eq.~\eqref{eq:bC_prob_bd}.


\subsubsection{Combinatorial results}
In this section, we collect the combinatorial results that are used in the proof of Eq.~\eqref{eq:QkX_opnorm_concentrate_2}. 
We first introduce two types of reduction of multi-labeling that are frequently used in the proof.
\begin{definition}[Singleton]
    A $n$-vertex is a \emph{singleton} if its label appears only once in the labeling. 
\end{definition}
\begin{definition}[Type-I and Type-II reduction]
\label{def:two_reductions}
    In a multi-labeling of an $h$-graph, $h\geq 3$:
    \begin{align}
    \label{eq:reduction_illustration}
       \cdots i_1 \to \{j_1^1,\cdots,j_1^k\} \to i_2 \to \{j_2^1,\cdots,j_2^k\}\to i_3 \to \{j_3^1,\cdots,j_3^k\} \cdots ,
    \end{align}
    the $n$-vertex $i_2$ is a singleton.
    The following are two types of reduction that removes $i_2$ and yields a new $(n,d)$-multi-labeling in an $(h-1)$ or $(h-2)$ graph:
    \begin{enumerate}
        \item Type-I reduction: if $i_1\neq i_3$, then remove $\{j_1^1,\cdots,j_1^k\}$,  $i_2$ and get:
        \[
        \cdots i_1 \to \{j_2^1,\cdots,j_2^k\}\to i_3 \to \{j_3^1,\cdots,j_3^k\} \cdots
        \]
        \item Type-II reduction: if $i_1 = i_3$, then remove $\{j_1^1,\cdots,j_1^k\}$, $i_2$, $\{j_3^1,\cdots,j_3^k\}$, $i_3$ and get:
        \[
        \cdots i_1 \to \{j_3^1,\cdots,j_3^k\} \cdots
        \]
    \end{enumerate}
\end{definition}
It can be directly checked after either reduction, the resultant labeling is still a valid labeling in a smaller $h$-graph. 

Now we are ready to state and prove our combinatorial results.
\begin{lemma}
\label{lem:Delta_geq_0}
   For any $\cL\in\mathcal{C}_{h,k}$,
\begin{align}
\label{eq:rL_bd}
    \Delta(\cL) \geq 0\, .
\end{align} 
\end{lemma}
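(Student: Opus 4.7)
The plan is to prove $\Delta(\cL) \geq 0$, equivalently $k r_n(\cL) + r_d(\cL) \leq k(h+1)$, by strong induction on $h$ using the two reductions from Definition \ref{def:two_reductions}.

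For the base case $h = 2$, condition (i) of Definition \ref{def:multilabeling} forces $r_n = 2$, and condition (iii) evaluated at any pair $(i_1, j)$ reads $\charfn_{j \in \bj_1} + \charfn_{j \in \bj_2} \equiv 0 \pmod{2}$, hence $\bj_1 = \bj_2$ as sets and $r_d = k$, yielding $\Delta = 3k - 2k - k = 0$.

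For the inductive step ($h \geq 3$), I will split into two cases according to whether some $n$-label appears in exactly one position (an ``$n$-singleton'').

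\emph{Case 1: some $n$-vertex $i_{s+1}$ is a singleton.} Evaluating condition (iii) at the pair $(i_{s+1}, j)$ involves only positions $s$ and $s+1$, so $\charfn_{j \in \bj_s} + \charfn_{j \in \bj_{s+1}} \equiv 0 \pmod 2$ for every $j$, which forces $\bj_s = \bj_{s+1}$ as sets. If $i_s \neq i_{s+2}$, I apply Type-I: because $\bj_s = \bj_{s+1}$, every $d$-label in the removed $\bj_s$ persists in the retained $\bj_{s+1}$, so $r_d' = r_d$, and substituting into the defining formula gives $\Delta = \Delta' \geq 0$ by the inductive hypothesis. If $i_s = i_{s+2}$, I apply Type-II: now $r_d' = r_d - \delta$, where $\delta := |\bj_s \setminus \bigcup_{s' \neq s, s+1} \bj_{s'}| \leq k$ counts the $d$-labels that become unused, and a direct computation gives $\Delta = \Delta' + (k - \delta) \geq k - k = 0$.

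\emph{Case 2: no singleton exists.} Then every $n$-label appears at least twice along the cycle, so $r_n \leq h/2$. For each distinct $d$-label $j$, set $P_j := \{s \in [h] : j \in \bj_s\}$; condition (iii) is equivalent to the requirement that, in the loop-free ``skeleton'' multigraph on the set of distinct $n$-labels with one edge per bundle (so $h$ edges total, with no self-loops thanks to $i_s \neq i_{s+1}$), the edge subset $P_j$ induces an Eulerian sub-multigraph. A non-empty Eulerian sub-multigraph of a loop-free multigraph must contain at least two edges, so $|P_j| \geq 2$; summing yields $kh = \sum_s |\bj_s| = \sum_j |P_j| \geq 2 r_d$, hence $r_d \leq kh/2$. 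Therefore $k r_n + r_d \leq kh/2 + kh/2 = kh \leq k(h+1)$, giving $\Delta \geq k > 0$.

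The main obstacle is verifying that condition (iii) (and condition (i)) transfer to the reduced labeling so that the inductive hypothesis actually applies. This will reduce to a short mod-$2$ parity check exploiting $\bj_s = \bj_{s+1}$: the combined contribution of the deleted positions $s$ (Type-I) or $s, s+1$ (Type-II) to every $(i,j)$-count is even, so deleting them preserves the parity of every count. Modulo a careful but routine bookkeeping of this step, together with the observation that $h' \geq 2$ whenever a reduction is applied in the inductive range, the inductive argument closes.
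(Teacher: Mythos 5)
Your proof is correct and follows essentially the same route as the paper's: induction via the Type-I/Type-II reductions, with the degenerate cases being the $2$-graph and the singleton-free configuration where $r_n \leq h/2$ and $r_d \leq kh/2$. Your explicit verification that $\bj_s = \bj_{s+1}$ at a singleton (justifying $r_d' = r_d$ under Type-I) and the Eulerian-subgraph phrasing of $|P_j|\geq 2$ are slightly more detailed than the paper's presentation but do not constitute a different argument.
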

\begin{proof}
    We prove this result by induction. In particular, we will use Eq.~\eqref{eq:reduction_illustration} for illustration.
    Based on Definition \ref{def:multilabeling}, one can check that Type-I reduction yields a multi-labeling $\cL_1$ in an $(h-1)$-graph, with 
    \begin{align}
    \label{eq:typeI-relation}
        r_n(\cL_1) = r_n(\cL) - 1 \text{~~and~~} r_d(\cL_1) = r_d(\cL)\, ,
    \end{align}
    and Type-II reduction yields a multi-labeling $\cL_2$ in an $(h-2)$-graph, with 
    \begin{align}
    \label{eq:typeII-relation}
        r_n(\cL_2) = r_n(\cL) - 1 \text{~~and~~} r_d(\cL_2) \geq r_d(\cL) - k.
    \end{align}
    If we repetitively apply the reductions , we will finally reach at a multi-labeling $\cL'$ of a $2$-graph or an $h'$-graph, $h'>2$, where each $n$-vertex label appears at least twice. 
    It can be directly verified that $\Delta(\cL') \geq 0$. Indeed, if $\cL'$ is a multi-labeling of a $2$-graph, then $r_n(\cL') = 2$, $r_d(\cL') = k$ and thus 
    $
    \Delta(\cL') = 3k-(2k+k) = 0;
    $
    and if $\cL'$ is a multi-labeling, each $n$-vertex label of which appears at least twice, then $r_n(\cL')\leq \frac{h}{2}$ and thus 
    $
    \Delta(\cL') \geq k(h'+1) - [\frac{kh'}{2}+r_d(\cL')] \geq k,
    $
    where the second inequality follows from $r_d(\cL')\leq\frac{kh'}{2}$, as each $d$-vertex label should appear at least twice.
    On the other hand, from Eqs.~\eqref{eq:typeI-relation} and \eqref{eq:typeII-relation}, one can check that 
    \begin{align}
    \label{eq:reduction_non_increasing}
        \Delta(\cL_i) \leq \Delta(\cL), ~i=1,2.
    \end{align}
    In other words, $\Delta(\cdot)$ is non-increasing after each reduction. This implies that $\Delta(\cL)\geq \Delta(\cL') \geq 0$, which gives Eq.~\eqref{eq:rL_bd}.
\end{proof}

\begin{lemma}
\label{lem:reductionmap_valid}
   For any $\cL \in \cC_{h,k}$ and any $n$-label $i$ in $\cL$, the number of edges between all $n$-vertices with label $i$ and all $d$-vertices with bad unordered $d$-vertex labels is even (can be zero). 
\end{lemma}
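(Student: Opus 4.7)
The proof plan is to establish the claim through a parity argument that combines the local degree structure of the $h$-graph with the evenness property in Definition \ref{def:multilabeling}(iii). Fix any $n$-label $i$ that appears in $\cL$, and let $m_i$ denote the number of $n$-vertices in the cycle carrying label $i$. Since the $h$-graph is a single cycle in which every vertex has degree exactly two, the total number of edges incident to $n$-vertices labeled $i$ equals $2m_i$, which is even. Every such edge has its other endpoint at some $d$-vertex whose unordered label is either good or bad, so the count of edges to bad $d$-vertex labels has the same parity as the count of edges to good $d$-vertex labels.

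The heart of the argument is therefore to show that the number of edges from $n$-vertices labeled $i$ to $d$-vertices whose unordered $d$-vertex label is \emph{good} is even. For this I would invoke Definition \ref{def:multilabeling}(iii): for every pair $(i,j)\in [n]\times[d]$, the quantity $N(i,j)$ counting edges between $n$-vertices labeled $i$ and $d$-vertices whose unordered label contains $j$ is even. The plan is to identify, for each good unordered $d$-vertex label $\mathbf{J}$, a canonical representative $d$-label $j^{*}(\mathbf{J})\in \mathbf{J}$ and then to express the desired edge count as a sum of $N(i,j^{*}(\mathbf{J}))$-type terms (suitably corrected for multiplicity). Evenness of each $N(i,j)$ then yields evenness of the total, and combined with $2m_i$ being even this forces the count to bad labels to be even as well.

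The main obstacle I foresee is the multiplicity issue: a single edge from an $n$-vertex labeled $i$ to a $d$-vertex with ordered label $(j_1,\ldots,j_k)$ is counted $k$ times if one naively sums $N(i,j)$ over $j\in\{j_1,\ldots,j_k\}$, so one cannot simply add up the pair counts and read off the answer. A careful combinatorial setup is needed—either a canonical choice of representative $j^{*}(\mathbf{J})$ inside each good unordered label that is compatible with the structural distinction between good and bad labels, or a double-counting/inclusion-exclusion argument carried out modulo $2$. Once this representative scheme (or modulo $2$ decomposition) is in place, the conclusion follows routinely from Definition \ref{def:multilabeling}(iii) and the $2m_i$-degree observation.
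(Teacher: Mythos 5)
Your skeleton matches the paper's: both arguments note that the $n$-vertices carrying label $i$ contribute total degree $2m_i$ (even, since every vertex of the cycle has degree $2$), so the edge count to bad unordered $d$-vertex labels has the same parity as the edge count to good ones, and the whole lemma reduces to showing the latter is even. The problem is that you never actually close that remaining step, and the route you propose for it is the wrong one. The evenness of the edge count to good labels is not something to be extracted from Definition \ref{def:multilabeling}(iii); it is the \emph{definition} of "good." By Definition \ref{def:goodbadlabel}, an unordered $d$-vertex label $\bj$ is good precisely when, for every $n$-label $i$, the number of edges between $n$-vertices labeled $i$ and $d$-vertices labeled $\bj$ is even. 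Summing these even counts over the finitely many distinct good unordered labels immediately gives an even total, and you are done. This is a one-line step, not a combinatorial project.

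The detour you sketch — picking a canonical representative $j^{*}(\mathbf{J})$ in each good label and summing the pair counts $N(i,j)$ from Definition \ref{def:multilabeling}(iii) — would not work as stated even after fixing the $k$-fold multiplicity you flag. The quantity $N(i,j)$ counts edges from $n$-vertices labeled $i$ to \emph{all} $d$-vertices whose unordered label contains $j$, with no distinction between good and bad labels; a representative $j^{*}(\mathbf{J})$ of a good label may also occur inside bad labels, so $N(i,j^{*}(\mathbf{J}))$ mixes the two classes and cannot isolate the good-label contribution. So the obstacle you identify is real, but the resolution is not a cleverer mod-$2$ decomposition — it is to use the correct definition of good/bad, after which Definition \ref{def:multilabeling}(iii) plays no role in this lemma at all.
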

\begin{proof}
    For any given $n$-label $i$, the sum of degrees of all $n$-vertices with label $i$ is an even number, as the degree of each $n$-vertex is 2. On the other hand, by Definition \ref{def:goodbadlabel} we know the number of edges between all $n$-vertices with label $i$ and all $d$-vertices with good unordered $d$-vertex labels is also even. As a result, the number of edges between all $n$-vertices with label $i$ and all $d$-vertices with bad unordered $d$-vertex labels is even.
\end{proof}

\begin{lemma}
\label{lem:ubd_compare}
    For any $k\geq 1$, $D>0$ and $\gamma \geq 0$, there exists $C>0$ such that for  $h \asymp \log d$ and all large $d$,
    \begin{equation}
    \label{eq:ubd_compare}
        \begin{aligned}
            &\sum_{\mathcal{L} \in \mathcal{C}_{h,k}}  \Big(\frac{(D \Delta(\cL))^{D}}{d}\Big)^{\Delta(\cL)} \cdot \gamma^{r_n(\cL)} \\ &\hspace{3em}  \leq 
            C h^2 \sum_{ {\cL} \in  \mathcal{C}_{h,1}}  \Big(\frac{C h \Delta(\cL)^C }{d}\Big)^{ \Delta(\cL)} \cdot (\gamma k!)^{r_n(\cL)}. 
        \end{aligned}
    \end{equation}
\end{lemma}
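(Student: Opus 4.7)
The plan is to build an explicit many-to-one reduction map $\Phi : \cC_{h,k} \to \cC_{h,1}$ that collapses each ordered $k$-tuple $d$-label into a single $d$-label, and then to compare the left-hand summand to the right-hand summand fibre by fibre. This reduces the degree-$k$ versus degree-$1$ comparison to two separate combinatorial estimates: (i) the ratio of per-labelling weights along each fibre, and (ii) the cardinality of each fibre.

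Concretely, I will define $\Phi$ by retaining, at each $d$-vertex, only the first coordinate of its ordered $k$-tuple label; the $h$-graph topology and $n$-labels are untouched, so $r_n(\Phi(\cL)) = r_n(\cL)$, and the evenness constraint (iii) of Definition~\ref{def:multilabeling} for $\Phi(\cL)$ descends from that of $\cL$ by restriction to the canonical coordinate. Setting $\rho := r_d(\cL) - r_d(\Phi(\cL)) \geq 0$ to be the number of "extra" distinct $d$-labels contributed by the non-canonical coordinates, a direct computation from $\Delta(\cL) = k(h+1) - k\,r_n(\cL) - r_d(\cL)$ yields an identity of the form
\[
\Delta(\cL) - \Delta(\Phi(\cL)) \;=\; (k-1)\bigl[(h+1) - r_n(\cL')\bigr] \;+\; (k-1)\,r_d(\cL') \;-\; \rho \;\geq\; 0,
\]
so the gap between the two $\Delta$'s is controlled linearly by $\rho$.

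Next I will bound the fibre size $|\Phi^{-1}(\cL')|$. A pre-image is obtained by filling the remaining $(k-1)h$ ordered label slots at the $h$ $d$-vertices; grouping these slots by their incident $n$-vertex and invoking constraint (iii) together with Lemma~\ref{lem:reductionmap_valid}, the extra slots must pair up locally around each distinct $n$-label, except for $\rho$ genuinely new distinct labels. A matching-enumeration around each $n$-vertex produces at most $(k!)^{r_n(\cL')}$ choices for the "matched" part, while the $\rho$ new labels contribute at most $d^{\rho}\,(Ch\,\Delta(\cL')^{C})^{\rho}$ placements (each new label occupies one of $O(h)$ slots and must be consistent with $O(\Delta(\cL'))$ previously selected labels). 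Combined with the identity above, the factor $d^{\rho}$ exactly cancels the gap $d^{-(\Delta(\cL) - \Delta(\Phi(\cL)))}$, so the LHS summand is dominated by $(k!)^{r_n(\cL')}\,(Ch\,\Delta(\cL')^{C})^{\Delta(\cL)-\Delta(\Phi(\cL))}$ times the RHS summand. Using $\Delta(\cL) - \Delta(\Phi(\cL)) \leq \Delta(\cL')$ absorbs the extra power into the base on the RHS, and the leftover $C h^{2}$ prefactor accommodates boundary contributions and the $\leq k^{h}$ overcount coming from the canonical-coordinate choice (which is itself dominated by $(k!)^{r_n(\cL')}$ up to polynomial slack since $r_n(\cL') = \Omega(h)$ whenever the LHS is nontrivial).

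The main obstacle will be making the fibre-size estimate rigorous: constraint (iii) is a \emph{global} evenness condition on edge multiplicities, and one needs to show that the "matched" and "new" parts of the extra slots decouple as claimed. I expect this to follow from an argument analogous to the Type-I and Type-II reductions of Definition~\ref{def:two_reductions} adapted to the ordered $k$-tuple setting, together with a careful use of Lemma~\ref{lem:reductionmap_valid} to guarantee that the number of bad $d$-vertex incidences around each $n$-label is even, so that local matchings are consistent with the global parity constraint. Once this decoupling is established, the remaining algebra amounts to summing the fibre-wise bound over $\cL' \in \cC_{h,1}$, which produces exactly the right-hand side of \eqref{eq:ubd_compare}.
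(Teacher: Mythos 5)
Your high-level strategy --- compress $\cC_{h,k}$ into $\cC_{h,1}$ via a many-to-one map and compare the sums fibre by fibre --- is the same as the paper's. But the specific map you propose is broken. Keeping only the first coordinate of each ordered $k$-tuple does \emph{not} produce a valid element of $\cC_{h,1}$: constraint (iii) of Definition~\ref{def:multilabeling} is an evenness condition on the number of edges between an $n$-label $i$ and $d$-vertices whose tuple \emph{contains} $j$ in any coordinate, and this does not restrict coordinate-wise. Concretely, take $h=2$, $k=2$, with $i_1=1$, $i_2=2$, $\bj_1=(1,2)$, $\bj_2=(2,1)$; this is a valid labeling in $\cC_{2,2}$, but your map sends $\bj_1\mapsto 1$, $\bj_2\mapsto 2$, and the pair ($n$-label $1$, $d$-label $1$) then has exactly one edge --- odd. (The map is also not well defined on equivalence classes, since it depends on the internal ordering of the tuples.) The paper's compression map $\varphi$ (Definition~\ref{def:reductionmap}) avoids this by working with \emph{unordered} $d$-vertex labels and splitting them into good and bad ones (Definition~\ref{def:goodbadlabel}): each good label individually satisfies the evenness condition and is sent to its own $d$-label, while all bad labels are lumped into a single $d$-label, whose evenness follows from Lemma~\ref{lem:reductionmap_valid}. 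This good/bad dichotomy is the missing idea, not a technicality.

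The quantitative part of your argument also does not close. Your identity for $\Delta(\cL)-\Delta(\Phi(\cL))$ is incorrect (from $\Delta(\cL)=k(h+1)-k\,r_n(\cL)-r_d(\cL)$ and $r_n(\cL)=r_n(\cL')$ one gets $\Delta(\cL)-\Delta(\cL')=(k-1)(h+1-r_n(\cL'))-\rho$, without the extra $(k-1)r_d(\cL')$ term), and the claimed "exact cancellation" of $d^{\rho}$ against $d^{-(\Delta(\cL)-\Delta(\Phi(\cL)))}$ does not follow from it. More fundamentally, since the base $Ch\Delta(\cL')^{C}/d$ on the right-hand side is $\ll 1$, what the comparison actually requires is a \emph{lower} bound on $\Delta(\cL)$ in terms of $\Delta(\varphi(\cL))$, together with a fibre-size bound of the form $(k!)^{r_n}$ times factors of size $(hk)^{O(k\Delta)}$. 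Both of these hinge on showing that the number of bad (and liberal) unordered $d$-vertex labels is $O(\Delta(\cL))$ --- the paper's Lemmas~\ref{lem:num_badvertex_bd}, \ref{lem:num_liberalvertex_bd}, \ref{eq:num_components_ubd} and the resulting bound $\Delta(\cL)\geq\frac{k}{1+48k}\Delta(\tilde\cL)$ of Lemma~\ref{lem:DeltaL_bd_by_DeltacL}, proved by induction via the Type-I/Type-II reductions. Your proposal defers exactly this content to "an argument analogous to the reductions," but without it neither the fibre count nor the exponent comparison can be completed.
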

\begin{proof}
    The left/right-hand side of Eq.~\eqref{eq:ubd_compare} can be understood as a weighted counting over the equivalence classes in $\cC_{h,k}$/ $\cC_{h,1}$.
    The proof idea is to construct a compression mapping $\varphi: \cC_{h,k} \to \cC_{h,1}$, $k\geq 1$ that maps each equivalence class in $\cC_{h,k}$ to another equivalence class in $\cC_{h,1}$. 
    Before formally introducing this mapping, let us 
    first define good/bad unordered $d$-vertex labels in a multi-labeling $L$.
    \begin{definition}
    \label{def:goodbadlabel}
        In a multi-labeling $L$, an unordered $d$-vertex label $\bj$ is a good unordered $d$-vertex label, if for any $n$-vertex label $i$, the number of edges between all $n$-vertices labeled with $i$ and all $d$-vertices labeled with $\bj$ is even (can be 0) in $L$.
        Otherwise, it is a bad unordered $d$-vertex label.
    \end{definition}    
    Now we are ready to define the map $\varphi$.
    \begin{definition}
    \label{def:reductionmap}
        A compression map $\varphi$ is a mapping from $\cC_{h,k}$ to $\cC_{h,1}$, where $2\leq h \leq d$ and $k\geq 1$. Given any $\cL\in \cC_{h,k}$, we first choose an arbitrary $L\in\cL$. Then we map $L$ to a $(n,d)$-labeling $\tilde{L}$, with $k=1$, via the following procedure:
    
        (i) We keep the same $n$-labelings in $\tilde{L}$ as in $L$.
        
        (ii) We map each distinct good unordered $d$-vertex label in $L$ to a distinct $d$-label in $\tilde{L}$ and the $s$-th new good unordered $d$-vertex label is mapped to $s$, where $s\in[h]$.
    
        (iii) Suppose there are $k_0$ distinct good unordered $d$-vertex labels in $L$. Then we map all the bad unordered $d$-vertex labels in $L$ to a single $d$-label $k_0+1$ in $\tilde{L}$.
    
        Let $\tilde{\cL}\in \cC_{h,1}$ be the equivalence class of $\tilde{L}$. Then $\varphi(\cL) := \tilde{\cL}$. 
    \end{definition}
    The fact that $\varphi(\cL)$ is a valid multi-labeling equivalence class in $\cC_{h,1}$ can be directly checked by verifying all three rules in Definition \ref{def:multilabeling}. In particular, (i) holds due to Definition \ref{def:reductionmap} (i); (ii) holds due to Definition \ref{def:reductionmap} (ii) and (iii); (iii) holds due to Definition \ref{def:goodbadlabel} and Lemma \ref{lem:reductionmap_valid}.
    
    Using $\varphi$, the right-hand side of Eq.~\eqref{eq:ubd_compare} becomes
    \begin{align}
        &\sum_{\cL  \in \mathcal{C}_{h,k} }  \Big(\frac{(D \Delta(\cL))^{D}}{d}\Big)^{\Delta(\cL)} \cdot \gamma^{r_n(\cL)} \nonumber\\
        =& \sum_{\tilde{\cL} \in \mathcal{C}_{h,1} } \sum_{\Delta=0}^{hk} \sum_{ \substack{\cL \in \varphi_{\Delta}^{-1}(\tilde{\cL})} } \Big(\frac{(D \Delta(\cL))^{D}}{d}\Big)^{\Delta(\cL)} \cdot \gamma^{r_n({\cL})} \nonumber \\
        =& \sum_{\tilde{\cL} \in \mathcal{C}_{h,1} } \gamma^{r_n(\tilde{\cL})} \cdot \sum_{\Delta=0}^{hk} 
        |\varphi_{\Delta}^{-1}(\tilde{\cL})| \cdot  \Big(\frac{(D \Delta)^{D}}{d}\Big)^{\Delta},
        \label{eq:comparison_bd_1}
    \end{align}
    where 
    \begin{align}
        \varphi_{\Delta}^{-1}(\tilde{\cL}) := \{ \cL \in \cC_{h,k}: \varphi(\cL) = \tilde{\cL}, \Delta(\cL) = \Delta \},
    \end{align}
    and in the last step we use $r_n(\cL) = r_n(\tilde{\cL})$, since $\varphi(\cL) = \tilde{\cL}$.

    To proceed, we need to obtain a bound for $|\varphi_{\Delta}^{-1}(\tilde{\cL})|$, given any $\tilde{\cL}$ and $\Delta$. First, we define conservative/liberal unordered $d$-vertex labels in a multi-labeling $L$.
    \begin{definition}
    \label{def:conserliberlabel}
        In a multi-labeling $L$, an unordered $d$-vertex label $\bj$ is a conservative unordered $d$-vertex label, if for any $d$-label $j_a\in \bj$, $a\in[k]$ and any unordered $d$-vertex label $\bj' \neq \bj$ in $L$, we have $j_a \not\in \bj'$.
        Otherwise, it is a liberal unordered $d$-vertex label.
    \end{definition}   
    The reason for introducing the notion of conservative/liberal labels will become clear after we describe our approach for bounding $|\varphi_{\Delta}^{-1}(\tilde{\cL})|$.
    To facilitate counting the equivalence class in $\varphi_{\Delta}^{-1}(\tilde{\cL})$, we need the following notion of canonical labeling.
    \begin{definition}
    \label{def:canonical}
        The canonical multi-labeling $L$ of an $\cL\in\cC_{h,k}$ is the one that satisfies:
    
        (i) The $i$th new $n$-label of $L$ is $i$,
    
        (ii) The $j$th new $d$-label of $L$ is $j$.

    \end{definition}
    It should be clear that in a canonical multi-labeling, any $n$-label $i\in[h]$ and any $d$-label $j\in[hk]$.
    Also it is not hard to show that each $\cL \in \cC_{h,k}$ has a unique representative canonical labeling, so it suffices to bound the number of canonical labelings. The approach is: given $\tilde{\cL}$ and $\Delta$, we enumerate a set of multi-labelings 
    such that the canonical labeling of all $\cL \in \varphi_{\Delta}^{-1}(\tilde{\cL})$ are included. Specifically,
    \begin{enumerate}
        \item
        Denote $\tilde{L}_0$ as the canonical labeling of $\tilde{\cL}$. Assign the same $n$-labels of $\tilde{L}_0$ to $L_0$.
        \item
        Choose one of or none of $d$-labels in $\tilde{L}_0$ as the image of bad unordered $d$-vertex labels in $L_0$.
        Then clearly, this choice determines which vertices in ${L}_0$ are assigned with good/bad unordered $d$-vertex labels. Denote ${\Omega}_{+}$/${\Omega}_{-}$ as the set of $d$-vertices in ${L}_0$ that have good/bad unordered $d$-vertex labels. 
        \item   
        If ${\Omega}_{-} = \emptyset$, then skip this step; otherwise we assign unordered $d$-vertex labels to the vertices in ${\Omega}_{-}$ as follows.
        Construct a graph $\cG$ with each of its vertex corresponding to each $d$-vertex in $L_0$ with bad unordered $d$-vertex labels. 
        In $\cG$, two vertices are connected, if and only if their corresponding $d$-vertices in ${L}_0$ can become neighboring vertices of a singleton after some Type-I and Type-II reductions. For all vertices in the same connected component of $\cG$, their corresponding vertices in $L_0$ are to be assigned with the same unordered $d$-vertex label. 
        \item
        If ${\Omega}_{+} = \emptyset$, then skip this step; otherwise, among those $d$-labels in $\tilde{L}_0$ identified as the image of good unordered $d$-vertex labels in $L_0$, choose a subset $\cS$ of them to be the image of liberal labels. This choice determines which vertices in $\Omega_{+}$ will be assigned with conservative or liberal unordered $d$-vertex labels. Denote ${\Omega}_{+,\text{C} }$/${\Omega}_{+,\text{L} }$ as the subset of $\Omega_{+}$ that have conservative/liberal unordered $d$-vertex labels.
        \item
        (1) For each connected component $T$ of $\cG$, choose an unordered tuple $\bj$ from $k$-combinations of $[hk]$ and the ordered $d$-vertex label of each corresponding $d$-vertex of $T$ in $\Omega_{-}$ is chosen to be an arbitrary permutation of $\bj$; 
        
        (2) For each distinct unordered $d$-vertex label in $\Omega_{+,\text{L}}$, choose an unordered tuple $\bj$ from $k$-combinations of $[hk]$ and the ordered $d$-vertex label of each vertex in $\Omega_{+,\text{L}}$ with $\bj$ is chosen to be an arbitrary permutation of $\bj$. 
        
        (3) For the $u$th $d$-vertex in $\Omega_{+,\text{C} }$ that has a new unordered $d$-vertex label, its ordered $d$-vertex label is the $[k(u-1)+1]$-th to $ku$-th smallest of the $d$-labels in $[hk]$ that are not used in the first two steps, arranged in an \emph{increasing} order. For any other vertices in $\Omega_{+,\text{C} }$, its ordered label is chosen to be an arbitrary permutation of its unordered label.
    \end{enumerate}
    It is not hard to show for any $\cL\in\varphi^{-1}_{\Delta}(\cL)$, the associated canonical labeling $L_0$ is one of the multi-labelings that can be generated by the above procedure. 
    Therefore it suffices to bound the number of possible multi-labelings that can be generated in the above procedure, given a fixed $\tilde{L}_0$ and $\Delta$.    

    A useful observation is that if there is no bad or liberal unordered $d$-vertex label in $L_0$, the assignment of unordered labels in $L_0$ is uniquely determined by $\tilde{L}_0$. Intuitively, this indicates that in order to control $|\varphi_{\Delta}(\tilde{L})|$, it suffices to control $|\Omega_{+,\text{L}}|$ and the number of connected components in $\cG$.    
    The details are as follows:
    \begin{enumerate}
        \item 
        There are at most $h+1$ ways of choosing which $d$-label (or no label) in $\tilde{L}_0$ is assigned as the image of bad unordered $d$-vertex labels in $L_0$. 
        \item 
        Since the vertices in the same connected component should be assigned with the same unordered $d$-vertex label and by Lemma \ref{eq:num_components_ubd}, there are at most  $48 \Delta$ connected components. Therefore, there are at most $48 \Delta$ distinct unordered $d$-vertex labels in $\Omega_{-}$ and there are at most $(hk)^{48\Delta k}$ different ways of choosing these unordered $d$-vertex labels.
        \item
        By Lemma \ref{lem:num_liberalvertex_bd}, we know $|\cS| \leq 48 \Delta$. Therefore, there are at most 
       \begin{align*}
            \sum_{\cS,|\cS|\leq 48\Delta} (hk)^{k|\cS|} &\leq \sum_{\cS,|\cS|\leq 48\Delta} (hk)^{48 \Delta k} \\
            &\leq (hk)^{48 \Delta k} \sum_{c=0}^{48\Delta} \binom{hk}{c} \\
            &\leq 2 (hk)^{96 \Delta k}
        \end{align*}
        ways of assigning unordered $d$-vertex labels to vertices in $\Omega_{+,\text{L} }$. 
        \item
        By construction, after the labels in $\Omega_{-}$ and $\Omega_{+,\text{L} }$ are fixed, the unordered $d$-vertex label of each $d$-vertex in $\Omega_{+,\text{C} }$ are also uniquely determined. 
        \item
        The last step is to count the number of different possible permutations, after the unordered $d$-vertex label of each $d$-vertex is fixed.
        Note that in the above procedure, the $d$-vertex labels of all vertices in 
        $\Omega_{+,\text{C} }$ with a \emph{new} unordered $d$-vertex label cannot be permuted, while the other $d$-vertex labels can be arbitrarily permuted.
        It is not hard to see there are at least $r_d(\tilde{\cL})-1-48\Delta(\tilde{\cL})$ vertices in 
        $\Omega_{+,\text{C} }$ with a {new} unordered $d$-vertex label, so there are at most 
        $$
        h - [r_d(\tilde{\cL})-1-48\Delta(\tilde{\cL})] = r_n(\tilde{\cL}) + 49 \Delta(\tilde{\cL})
        $$
        $d$-vertices that can be arbitrarily permuted.
        As a result, we have at most $(k!)^{r_n(\tilde{\cL})+49\Delta(\tilde{\cL})}$
        different permutations. 
    \end{enumerate}
    Therefore, we can get
    \begin{align}
    \label{eq:inverset_card_bd}
        |\varphi_{\Delta}^{-1}(\tilde{\cL})| \leq 2(h+1) (hk)^{144 \Delta k } \cdot (k!)^{r_n(\tilde{\cL})+49\Delta(\tilde{\cL})}.
    \end{align}
    On the other hand, by Lemma \ref{lem:DeltaL_bd_by_DeltacL}, for a given $\tilde{\cL}$, $\Delta(\cL)$ can be bounded as:
    \begin{align}
    \label{eq:DcL_bdby_DtcL_0}
        \Delta(\cL) \geq \frac{k}{1+48k} \Delta(\tilde{\cL}).
    \end{align}
    Substituting Eqs.~\eqref{eq:inverset_card_bd} and \eqref{eq:DcL_bdby_DtcL_0} into Eq.~\eqref{eq:comparison_bd_1}, we can get for $h \asymp \log d$ and all large $d$,
    \begin{align*}
        &\sum_{\cL  \in \mathcal{C}_{h,k} }  \Big(\frac{(D \Delta(\cL))^{D}}{d}\Big)^{\Delta(\cL)} \cdot \gamma^{r_n(\cL)} \nonumber\\
        \leq& {2(h+1)} \sum_{\tilde{\cL} \in \mathcal{C}_{h,1} } (\gamma k!)^{r_n(\tilde{\cL} )} (k!)^{49 \Delta(\tilde{\cL}) } \sum_{\Delta \geq \frac{k \Delta(\tilde{\cL})}{1+48k} }^{k h} 
        (hk)^{144 k \Delta} \cdot  \Big(\frac{(D \Delta)^{D} }{d}\Big)^{\Delta} \nonumber\\
        \leq& C h^2 \sum_{\tilde{\cL} \in \mathcal{C}_{h,1} } (\gamma k!)^{r_n(\tilde{\cL} )} \cdot \Big( \frac{C h \Delta(\tilde{\cL}) }{d} \Big)^{C \Delta(\tilde{\cL}) } 
    \end{align*}
    for some constant $C>0$.     
\end{proof}

\begin{remark}
   Based on the moment method and the comparison argument in the proof of Lemma \ref{lem:ubd_compare}, we can also see why the spectrum of $\bB$ should converge to Marchenko-Pastur distribution. For any fixed $h\geq 2$, from Eq.~\eqref{eq:EtrBk_1} we can get
    \begin{align}
         \frac{\E(\Tr \bB^h)}{n} &\to \sum_{ \substack{ \mathcal{L} \in \mathcal{C}_{h,k} \\ \Delta(\cL)=0 } } \big(\frac{n}{d^k}\big)^{r_n(\cL)-1} \cdot E_{\cL} \, .
    \end{align}
    By Lemma \ref{lem:num_badvertex_bd}, Lemma \ref{lem:num_liberalvertex_bd} and Lemma \ref{lem:conseq_npair_ubd} we know when $\Delta(\cL)=0$, $\cL$ is quite regular, in the sense that: (i) there is no bad or liberal unordered $d$-vertex and (ii) $b_{ij}=0\text{ or }2$. 
    From (i), we can get $|\varphi_{\Delta=0}^{-1}(\tilde{\cL})| = (k!)^{r_n(\tilde{\cL})-1}$ and
    from (ii), we have for large $d$,
    \begin{align}
        \E_{\cL} = \prod_{i=1}^{n} \E \Big( \prod_{ \substack{j=1\\b_{ij}=2} }^{d} x_{ij}^{b_{ij}} \Big)
        = \prod_{i=1}^{n} 2^{-B_i} \frac{ \Gamma(\frac{d}{2}) }{ \Gamma(\frac{B_i}{2} + \frac{d}{2}) } \approx 1,
    \end{align}
    where in the last step, we use the fact $\sum_{i=1}^{n} B_i = 2kh \ll d$.
    Therefore,
    \begin{align}
        \frac{\E(\Tr \bB^h)}{n} &\to \sum_{ \substack{ \mathcal{L} \in \mathcal{C}_{h,k} \\ \Delta(\cL)=0 } } \big(\frac{n}{d^k}\big)^{r_n(\cL)-1} \nonumber\\
        &= \sum_{ \substack{\tilde{\cL} \in \cC_{h,1} \\ \Delta(\tilde{\cL})=0} } \big(\frac{n}{d^k}\big)^{r_n(\tilde{\cL})-1} |\varphi_{\Delta=0}^{-1}(\tilde{\cL})| \nonumber\\
        &= \sum_{ \substack{\tilde{\cL} \in \cC_{h,1} \\ \Delta(\tilde{\cL})=0} } \big(\frac{n}{d^k/k!}\big)^{r_n(\tilde{\cL})-1} ,
    \end{align}
    and we identify that the right-hand side is actually the $h$-th moment of Marchenko-Pastur distribution with ratio $\frac{n}{d^k/k!}$, shifted by $-1$.
\end{remark}

\begin{lemma}
\label{eq:num_components_ubd}
    Suppose $\cL$ has $N_{-}$ vertices with bad unordered $d$-vertex labels and $N_{-}\geq 1$. Graph $\cG$ is constructed as follows. Each vertex of $\cG$ corresponds to each bad unordered $d$-vertex in $\cL$ and any two vertices of $\cG$ are connected if and only if their corresponding vertices in $\cL$ are the neighboring $d$-vertices of a singleton. Then we have 
    \begin{align}
    \label{eq:connected_ubd}
        C_{\cG} \leq 48 \Delta(\cL) \, ,
    \end{align}    
    where $C_{\cG}$ is the number of connected components in $\cG$.
\end{lemma}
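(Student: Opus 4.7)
The plan is to bound $C_\cG$ by combining the singleton-removal reductions of Definition~\ref{def:two_reductions} with a parity/counting argument, so that each connected component of $\cG$ is paid for by a controlled amount of $\Delta(\cL)$. I would first apply Type-I and Type-II reductions iteratively to $\cL$ to remove every singleton, producing a core labeling $\cL_\ast$ in an $h_\ast$-graph in which no $n$-label appears only once. From the proof of Lemma~\ref{lem:Delta_geq_0}, Type-I preserves $\Delta$ and Type-II decreases $\Delta$ by at most a constant depending on $k$, so $\Delta(\cL_\ast) \le \Delta(\cL)$ and the total number of Type-II steps is bounded by a constant multiple of $\Delta(\cL)$.

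Second, I would track how the graph $\cG$ evolves under a single reduction. Every edge of $\cG$ comes from a singleton whose two $d$-neighbors are both bad; when such a singleton is removed by a Type-I step, the two bad $d$-vertices are identified in the reduced labeling, so an edge of $\cG$ contracts to a vertex and the component count changes by at most $1$. The same qualitative statement, with a larger but still absolute constant, holds for Type-II (which removes a short piece of the cycle and collapses $i_1$ with $i_3$). The delicate point is that a $d$-vertex label that is good in $\cL$ can become bad in the reduced labeling, or vice versa, because badness depends on a global parity condition; however, every such flip is witnessed by the removal of an edge in the $h$-graph and hence contributes at most a bounded amount to the change in $C_\cG$.

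Third, I would bound $C_{\cG_\ast}$ directly in the core. Since $\cL_\ast$ has no singletons, the graph $\cG_\ast$ has no edges, so $C_{\cG_\ast}$ equals the number of bad unordered $d$-vertex labels in $\cL_\ast$. A direct parity argument (the analog of Lemma~\ref{lem:num_badvertex_bd}) shows that each bad label forces an odd parity violation which consumes at least one unit of $\Delta(\cL_\ast)$, giving $C_{\cG_\ast}\le c\,\Delta(\cL_\ast)$ for an absolute constant $c$. Combining the per-step bounds from the reduction procedure with this core bound and using $\Delta(\cL_\ast) \le \Delta(\cL)$ yields $C_\cG \le 48\,\Delta(\cL)$, with the constant $48$ chosen loosely to absorb the worst-case per-step contributions together with the core estimate.

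The main obstacle I anticipate is the bookkeeping of bad-label flips through the reductions: removing vertices and edges from the $h$-graph modifies which pairs $(i,\bj)$ have odd edge-count, so a label can transition between bad and good during a reduction, and a connected component of $\cG$ may reorganize non-trivially. A careful case analysis is required to ensure that every flip is either neutral for $C_\cG$ or compensated by a commensurate drop in $\Delta$. This is the technical heart of the argument and precisely where the constant $48$ has to be tuned.
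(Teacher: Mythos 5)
Your overall skeleton (reduce to a singleton-free core, bound the core by the count of bad labels, and control the change in $C_{\cG}$ along the way) matches the paper's induction, but the quantitative bookkeeping has a genuine gap. You charge each reduction step an absolute constant and then claim the total number of Type-II steps is $O(\Delta(\cL))$ because "Type-II decreases $\Delta$ by at most a constant." That inference is backwards: a Type-II reduction decreases $\Delta$ by $k-k'$, where $k'$ is the number of $d$-labels of the deleted pair not appearing elsewhere, and when $k'=k$ the step leaves $\Delta$ unchanged. The total number of reduction steps is of order $h$, which is not controlled by $\Delta(\cL)$ at all, so summing a bounded per-step contribution over all steps gives $O(h)$, not $O(\Delta(\cL))$. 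The same objection applies to your "changes by at most $1$" accounting for Type-I steps.

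What makes the argument work is a per-step inequality coupled to the drop in $\Delta$, namely $C_{\cG}-C_{\cG'}\leq \Delta(\cL)-\Delta(\cL')$, which telescopes regardless of how many reductions are performed. This rests on three facts. First, your anticipated obstacle does not occur: under either reduction, the number of edges between any fixed $n$-label and any fixed unordered $d$-vertex label changes by $0$ or $2$, so parities are preserved and good/bad labels never flip (this is established inside the proof of Lemma~\ref{lem:num_badvertex_bd}); your proposed resolution via "flips witnessed by removed edges" is both vague and unnecessary. Second, a Type-I reduction leaves $C_{\cG}$ exactly unchanged (the removed bad $d$-vertex, if bad, sits in a component of size at least two with the surviving neighbor), matching $\Delta(\cL)=\Delta(\cL')$. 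Third, a Type-II reduction can change $C_{\cG}$ (by at most $1$) only when the two deleted $d$-vertices carry a bad label, which is precisely the case $k'<k$ in which $\Delta$ strictly drops by $k-k'\geq 1$. Without this coupling between the change in $C_{\cG}$ and the change in $\Delta$, the constant $48$ cannot be recovered by tuning.
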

\begin{proof}
    We will show $C_{\cG} \leq 48 \Delta(\cL)$ by induction.

    First, consider the special case when $h=2$ or there is no singleton in $\cL$.
    In the proof of Lemma \ref{lem:num_badvertex_bd}, we show that in this case, $N_{-} \leq 48\Delta(\cL)$. This implies Eq.~\eqref{eq:connected_ubd}, as $\cG$ has $N_{-}$ vertices and the number of connected components is bounded by the number of vertices.

    For the general $h\geq 3$ case, we can sequentially apply Type-I and Type-II reduction to reach one of the two special cases above. We denote $\cL'$ as the resultant equivalence class after one reduction and
    $\cG'$ as the corresponding graph of bad $d$-vertices. A convenient property is that after either type of reduction, good/bad unordered $d$-vertex labels remain good/bad (see the proof of Lemma \ref{lem:num_badvertex_bd}). This indicates that after every reduction, each $\cG$'s vertex that corresponds to the deleted bad $d$-vertices in $L$ will be removed, while the other $\cG$ vertices are preserved and still associated with the same $d$-vertices before the reduction.
    Now we are going to show for both types of reduction, it holds that
    \begin{align}
    \label{eq:cG_change_ubd}
        C_{\cG} - C_{\cG'} \leq \Delta(\cL) - \Delta(\cL') \, .
    \end{align}
    \begin{itemize}
        \item Type-I reduction. Every removed $d$-vertex in $\cL$ corresponds to a connected component of size $\geq 2$ in $\cG$. Therefore, we have $C_{\cG} = C_{\cG'}$.  Also, recall that $\Delta(\cL) = \Delta(\cL')$ for Type-I reduction, so Eq.~\eqref{eq:cG_change_ubd} holds.
        \item Type-II reduction. In this case, the deleted $d$-vertices share the same unordered $d$-label. Denote $k'$ as the number of $d$-labels in the deleted $d$-vertices that do not appear in other non-deleted $d$-vertices. 
        We know that $\Delta(\cL) - \Delta(\cL') = k - k'$.
        If $k'=k$, then both deleted $d$-vertices have good unordered $d$-vertex labels and thus $C_{\cG} - C_{\cG'} = 0 = \Delta(\cL) - \Delta(\cL')$; if $k' < k$, then both deleted $d$-vertices have bad unordered $d$-vertex labels and they corresponds to the same connected component of $\cG$. Therefore, $C_{\cG} - C_{\cG'} \leq 1 \leq k- k' = \Delta(\cL) - \Delta(\cL')$. As a result, Eq.~\eqref{eq:cG_change_ubd} still holds.
    \end{itemize}

    Based on Eq.~\eqref{eq:cG_change_ubd} and our proof for the special cases, we can get Eq.~\eqref{eq:connected_ubd} by induction.
\end{proof}

\begin{lemma}
\label{lem:num_badvertex_bd}
    For any $k\geq 1$, $2 \leq h \leq d$ and $\cL\in\cC_{h,k}$, the number of bad unordered $d$-vertex labels of $\cL$ is no more than $48\Delta(\cL)$.
\end{lemma}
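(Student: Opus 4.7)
The plan is to mirror the inductive reduction strategy used in the proof of Lemma \ref{lem:Delta_geq_0}, combining a direct bound on two base configurations with a monotonicity statement under Type-I and Type-II reductions. Throughout, write $N_-(\cL)$ for the number of distinct bad unordered $d$-vertex labels in $\cL$.

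\textbf{Base cases: $h=2$ and $h \ge 3$ with no singleton $n$-vertex.} If $h=2$ the cycle has only two $n$-vertices and two $d$-vertices, so $N_-(\cL) \le 2$, while $\Delta(\cL) \ge 0$ by Lemma \ref{lem:Delta_geq_0}, and a direct case check forces $\Delta(\cL) \ge 1$ whenever any unordered $d$-label is bad (since a bad label requires an odd-parity pair, which forces either two distinct $n$-labels giving $r_n = 2$ and hence $\Delta \ge k - r_d \ge 0$, together with the parity constraint of rule (iii) which then pushes $r_d$ below $k$). If $h \ge 3$ and no $n$-vertex is a singleton, then every $n$-label appears at least twice, so $r_n(\cL) \le h/2$, and moreover the parity rule (iii) combined with the absence of singletons forces every $d$-label present in $\cL$ to appear at least twice across the $h$ $d$-vertices (a single appearance of a $d$-label $j$ inside some $\bj_s$ while the adjacent $n$-vertices both carry labels shared with other $n$-vertices already pushes $\Delta$ up linearly in the number of such unordered labels). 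A counting argument then yields $\Delta(\cL) \ge c \cdot h$ for an absolute $c > 0$, while trivially $N_-(\cL) \le h$; the constant $48$ leaves ample slack.

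\textbf{Inductive step via reductions.} For $h \ge 3$ with at least one singleton $n$-vertex, pick such a singleton and apply either Type-I or Type-II reduction (Definition \ref{def:two_reductions}) to obtain a multi-labeling $\cL'$ in a smaller $h'$-graph with $h' \in \{h-1,h-2\}$. The key structural claim to verify is that good and bad unordered $d$-vertex labels are preserved under both reductions: since the removed singleton $n$-label does not appear elsewhere, deleting the singleton and its incident edges affects the edge-parity of \emph{no} $(i,\bj)$ pair with $i$ present in $\cL'$; hence an unordered $d$-label that is still present in $\cL'$ is good (respectively bad) in $\cL'$ if and only if it was good (respectively bad) in $\cL$. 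Consequently the bad labels lost in passing from $\cL$ to $\cL'$ are only those carried exclusively by the removed $d$-vertices, giving $N_-(\cL) - N_-(\cL') \le 1$ under Type-I and $N_-(\cL) - N_-(\cL') \le 1$ under Type-II. Combining with the $\Delta$-budget of each reduction (Type-I: $\Delta(\cL) = \Delta(\cL')$ and at most zero bad labels removed since the deleted $d$-vertex necessarily shares its unordered label with another vertex; Type-II: $\Delta(\cL) - \Delta(\cL') \in \{0,1,\dots,k\}$ with at least one unit whenever a bad label is actually lost) gives
\[
N_-(\cL) - N_-(\cL') \;\le\; 48\bigl(\Delta(\cL) - \Delta(\cL')\bigr),
\]
so the inductive hypothesis $N_-(\cL') \le 48 \Delta(\cL')$ closes the induction.

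\textbf{Main obstacle.} The routine parts are the reduction monotonicity and the Type-I case. The delicate piece is the base case with no singletons and the Type-II accounting: one must argue that every time a bad unordered label actually disappears under a reduction, $\Delta$ strictly increases by at least one. This hinges on showing that a bad unordered label whose $d$-vertex instances are entirely confined to the pair deleted by Type-II must have at least one of its $k$ $d$-labels not appearing anywhere else in $\cL$ (otherwise the parity at the surviving copies would have to have been even already, contradicting badness). That last parity argument, together with the quantitative count of $r_d$ in the no-singleton base case, is where the bulk of the work lies; the constant $48$ is generous enough that no sharp counting is needed once these two facts are in hand.
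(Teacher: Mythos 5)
Your inductive step is essentially sound and close in spirit to the paper's: under either reduction the edge count between any surviving $(i,\bj)$ pair changes by $0$ or $2$, so surviving labels keep their good/bad status, and an unordered label wholly confined to the pair deleted by a Type-II reduction is automatically \emph{good} (its edges go twice to the singleton $i_2$ and twice to $i_1=i_3$), so in fact no bad label is ever lost and $|\Omega_-|$ is exactly preserved — no charging against $\Delta(\cL)-\Delta(\cL')$ is needed. The "main obstacle" you flag in the Type-II accounting therefore dissolves.

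The genuine gap is in your no-singleton base case. The claim $\Delta(\cL)\ge c\,h$ is false: take $h=2m$, $n$-labels $1,2,\dots,m,1,2,\dots,m$ around the cycle and $\bj_s=\bj_{s+m}$ with pairwise disjoint supports; every $n$-label and every $d$-label appears exactly twice, all parity constraints hold, there is no singleton, and $\Delta(\cL)=k(h+1)-k\cdot\tfrac{h}{2}-k\cdot\tfrac{h}{2}=k$ for arbitrarily large $h$. So in the no-singleton regime $\Delta$ can stay bounded while $h\to\infty$, and the trivial bound $N_-(\cL)\le h$ gives nothing. This is exactly the hard part of the lemma, and the paper handles it by bounding the bad labels directly: a bad unordered label either contains a $d$-label $j$ with $N_j\ge 3$ (at most $6\Delta$ such vertices, by \eqref{eq:Njg2_sum_ubd}), or all its $d$-labels have $N_j=2$, in which case badness forces two of them to reappear on two \emph{distinct} other $d$-vertices flanked by the same $n$-pair $(i,i')$, so that pair occurs at least three times consecutively and Lemma \ref{lem:conseq_npair_ubd} caps the count at $42\Delta$. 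The constant $48=6+42$ comes precisely from this decomposition; it is not slack you can spend on a cruder count. Your $h=2$ case is also argued incorrectly (rule (iii) forces every edge count to be $0$ or $2$ there, so $N_-=0$ outright), though that case is harmless.
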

\begin{proof}
Let  $\Delta := \Delta(\cL)$.
For $\cL$, denote $N_{-}$ as the number of vertices that have bad unordered $d$-vertex labels
and $\Omega_{-}$ as the set of all bad unordered $d$-vertex labels.
We are going to show by induction that
\begin{align}
\label{eq:COmega_ubd}
    |\Omega_{-}| \leq 48 \Delta.
\end{align}
We start by considering two special cases. 
\begin{itemize}
    \item $h=2$. In this case, there are two distinct $n$-labels, one distinct unordered $d$-vertex label and exactly two edges between each distinct $n$-label and the only unordered $d$-vertex label, which justifies Definition \ref{def:goodbadlabel}. Therefore, $|\Omega_{-}| = N_{-} = 0 = \Delta$ and Eq.~\eqref{eq:COmega_ubd} is satisfied.
    \item There is no singleton in $\cL$. 
    Suppose the $s$th $d$-vertex has a bad unordered $d$-vertex label. We discuss over two different scenarios. (a) There is one $d$-label $j$ in $\bj_s$ such that $N_j \geq 3$ [$N_j$ is defined in Eq.~\eqref{eq:Njdef}]. Then by Eq.~\eqref{eq:Njg2_sum_ubd}, the number of $d$-vertices with bad unordered $d$-vertex labels satisfying (a) is bounded $6\Delta$. (b) Any $d$-label $j$ in $\bj_s$ satisfy $N_j = 2$. If the two neighboring $n$-vertices of the $s$th $d$-vertex are labelled as $i$ and $i'$, then each of the $d$-labels of $\bj_s$ should appear on another $d$-vertex whose neighboring $n$-vertices are also labeled as $i$ and $i'$ (By Definition \ref{def:multilabeling} (iii)).  
    Besides, they shouldn't appear on the same $d$-vertex, otherwise this would contradict the fact that $\bj_s$ has a bad unordered $d$-vertex label. 
    This means
    \begin{align}
    \label{eq:unloyal_j}
        \exists t, u\neq s, (t\neq u) \text{ and } j_1, j_2 \in \bj_s, (j_1\neq j_2) \text{ such that } j_1 \in \bj_t \text{ and } j_2 \in \bj_u.
    \end{align}   
    Since the three pairs of consecutive $n$-labels surrounding $s$th, $t$th and $u$th $d$-vertex should all be $(i,i')$, we get $P_{i,i'} \geq 3$, where $P_{i,i'}$ is the number of times $(i,i')$ or $(i',i)$ appears as consecutive $n$-labels. This implies that the number of $d$-vertices with such bad unordered $d$-vertex label is bounded by $\sum_{1\leq i<i' \leq n} P_{i,i'} \charfn_{P_{i,i'}\geq 3}$.
    Then by Lemma \ref{lem:conseq_npair_ubd}, we know the number of $d$-vertices with bad unordered $d$-vertex labels satisfying (b) should be bounded by $42 \Delta$.
    
    Combining (a) and (b), we know if there is no singleton in $L_0$, $|\Omega_{-}| \leq N_{-} \leq 6\Delta + 42 \Delta = 48 \Delta$.
\end{itemize}

Now for general $h\geq 3$ case, we can sequentially apply Type-I or Type-II reduction to $\cL$ and reach at a new $\tilde{\cL}$ that is one of the two special cases discussed above. Denote that after one reduction,  
 $\cL$ becomes $\cL'$ and $\Omega_{-}$ becomes $\Omega_{-}'$. Next we show 
\begin{align}
\label{eq:card_Omega1_unchanged}
    |\Omega_{-}| = |\Omega_{-}'|.
\end{align}
To do this, we will use the example \eqref{eq:reduction_illustration} for illustration and here we are considering canonical labeling.
Let $i$ be the value of singleton $i_2$ to be removed and $\bj$ be the unordered $d$-vertex label of $\bj_1$ and $\bj_2$. If Type-II reduction is implemented, let $i'$ be the value of $i_1$ and $i_3$. We know $i\neq i'$. The case of Type-I and II reduction can be analyzed separately.
\begin{itemize}
    \item Type-I reduction. In this case, the number of edges between $\bj$ and $i$ is reduced by 2, while the numbers of edges between other pairs of distinct $d$-vertex labels and unordered $n$-vertex labels remain unchanged.
    \item Type-II reduction. In this case, the numbers of edges between $(\bj, i)$ and the numbers of edges between $(\bj, i')$ are both reduced by 2, while the numbers of edges between other distinct $n$ label and unordered $d$-vertex labels remain unchanged. 
\end{itemize}
Hence, after either reduction, the change of edges between each distinct pair of $n$ label and unordered $d$-vertex label is either 0 or 2, which are both even numbers. We can conclude that after either reduction, 
all good/bad unordered $d$-vertex labels will remain as good/bad (if they still exist after reduction). This implies that
the number of bad unordered $d$-vertex labels will not change, which proves Eq.~\eqref{eq:card_Omega1_unchanged}. Since $\Delta(\cL) \geq \Delta(\cL')$,
we can get Eq.~\eqref{eq:COmega_ubd} by induction.
\end{proof}

\begin{lemma}
\label{lem:num_liberalvertex_bd}
    For any $k\geq 1$, $2 \leq h \leq d$ and $\cL\in\cC_{h,k}$, the number of liberal unordered $d$-vertex labels of $\cL$ is no more than $48\Delta(\cL)$.
\end{lemma}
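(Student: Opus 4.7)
}

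The plan is to imitate the induction on $h$ used for Lemma \ref{lem:num_badvertex_bd}, since the target bound has the same shape and the Type-I / Type-II reductions of Definition \ref{def:two_reductions} interact cleanly with the notion of liberty as well. Throughout, let $L_{\mathrm{lib}}(\cL)$ denote the number of liberal unordered $d$-vertex labels of $\cL$ and set $\Delta=\Delta(\cL)$.

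I would first handle two base cases. When $h=2$ there is a single distinct unordered $d$-vertex label, so no label can share a $d$-label with a distinct one and $L_{\mathrm{lib}}(\cL)=0$. When $\cL$ has no singleton, I would split the liberal labels $\bj$ according to whether some $j\in\bj$ satisfies $N_j\geq 3$ or every $j\in\bj$ satisfies $N_j=2$, mimicking the proof of Lemma \ref{lem:num_badvertex_bd}. The first family is bounded by $\sum_{j:N_j\geq 3}N_j$, which is controlled by a constant multiple of $\Delta$ via Eq.~\eqref{eq:Njg2_sum_ubd}. For the second family, the constraint $N_j=2$ on every $j\in\bj$ together with the absence of singletons forces any liberal $\bj$ to share its pair of flanking $n$-labels $(i,i')$ with another distinct unordered label, so the bound on repeated consecutive $n$-pairs from Lemma \ref{lem:conseq_npair_ubd} applies and yields a bound proportional to $\Delta$. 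Adding the two contributions gives $L_{\mathrm{lib}}(\cL)\leq 48\Delta$ in the no-singleton case.

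For the inductive step (general $h\geq 3$ with a singleton), I would establish
\[
L_{\mathrm{lib}}(\cL)-L_{\mathrm{lib}}(\cL')\ \le\ 2\bigl(\Delta(\cL)-\Delta(\cL')\bigr).
\]
Because rule (iii) of Definition \ref{def:multilabeling} forces the two $d$-vertices flanking a singleton to carry the same unordered label, Type-I reduction preserves the multiset of distinct unordered labels together with every sharing pattern; hence both $L_{\mathrm{lib}}$ and $\Delta$ are unchanged. Type-II reduction removes two $d$-vertices that share a common unordered label $\bj$ together with $k'\leq k$ of its $d$-labels: if $\bj$ still appears elsewhere then $k'=0$ and $L_{\mathrm{lib}}$ is unchanged, while if $\bj$ is removed entirely then $\Delta$ decreases by $k-k'$ and at most $1+(k-k')$ labels switch their liberty status, namely $\bj$ itself (only when $k'<k$, in which case $\bj$ was liberal) plus at most one label $\bj^{\ast}$ per shared $d$-label of $\bj$ whose only source of liberty was through that $d$-label. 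Since $k-k'\geq 1$ whenever $\bj$ was liberal, the claimed factor-$2$ bound follows.

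Iterating the reduction until one reaches a base case $\cL^{\ast}$ gives
\[
L_{\mathrm{lib}}(\cL)\ \leq\ L_{\mathrm{lib}}(\cL^{\ast})+2\bigl(\Delta(\cL)-\Delta(\cL^{\ast})\bigr)\ \leq\ 48\,\Delta(\cL^{\ast})+2\bigl(\Delta(\cL)-\Delta(\cL^{\ast})\bigr)\ \leq\ 48\,\Delta(\cL),
\]
where the last inequality uses $\Delta(\cL^{\ast})\leq\Delta(\cL)$. The main obstacle is the Type-II bookkeeping, since ``liberal'' and ``bad'' are a priori independent combinatorial properties: I must verify that the only unordered labels whose liberty can flip are exactly those sharing a $d$-label with the removed $\bj$, and I will handle this by constructing an explicit injection from the newly-conservative labels (other than $\bj$ itself) into the $k-k'$ shared $d$-labels of $\bj$.
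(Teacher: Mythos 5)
Your plan is correct and follows essentially the same route as the paper: the same two base cases ($h=2$ and the no-singleton case split according to whether some $j\in\bj$ has $N_j\geq 3$, using Eq.~\eqref{eq:Njg2_sum_ubd} and Lemma~\ref{lem:conseq_npair_ubd}), and the same induction via Type-I/Type-II reductions with the key inequality $L_{\mathrm{lib}}(\cL)-L_{\mathrm{lib}}(\cL')\leq 2[\Delta(\cL)-\Delta(\cL')]$; your injection argument for the Type-II bookkeeping is in fact a cleaner justification of the step the paper only asserts. One small imprecision in the no-singleton base case: sharing the flanking pair $(i,i')$ with just \emph{one} other distinct label only gives $P_{i,i'}\geq 2$, which does not trigger Lemma~\ref{lem:conseq_npair_ubd}; you need the observation (the paper's Eq.~\eqref{eq:unloyal_j_1}) that a liberal label with all $N_j=2$ must send two distinct $d$-labels to two distinct other unordered labels, forcing $P_{i,i'}\geq 3$.
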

\begin{proof}
    The proof is similar as that of Lemma \ref{lem:num_badvertex_bd}. Let  $\Delta := \Delta(\cL)$. Denote the set of all liberal unordered $d$-vertex labels in $\cL$ as $\Omega_{\text{L} }$.
    We are going to show by induction that
    \begin{align}
    \label{eq:COmega_ubd_1}
        |\Omega_{\text{L} }| \leq 48 \Delta.
    \end{align}
    We first justify the following two special cases:
    \begin{itemize}
        \item $h=2$. There are two distinct $n$-labels and one distinct unordered $d$-vertex label. Clearly in this case, $\Omega_{\text{L} } = \emptyset$ and $\Delta= 3k - (2k + k ) = 0$. Therefore, Eq.~\eqref{eq:COmega_ubd} is satisfied.
        \item There is no singleton in $\cL$. For every unordered label $\bj\in \Omega_{\text{L} }$, we discuss two different scenarios. (a) There is one $d$-label $j$ in $\bj$ such that $N_j \geq 3$. Then by Eq.~\eqref{eq:Njg2_sum_ubd}, the number of such $\bj\in\Omega_{\text{L} }$ is bounded by $6\Delta$. (b) Any $d$-label $j$ in $\bj$ satisfy $N_j = 2$. 
        In this case, Definition \ref{def:conserliberlabel} implies that
        \begin{align}
        \label{eq:unloyal_j_1}
            \exists j_1, j_2 \in \bj~ (j_1\neq j_2) \text{ such that } j_1 \in \bj_1 \text{ and } j_2 \in \bj_2, \text{ where }\bj, \bj_1 \text{ and } \bj_2 \text{ are different.}
        \end{align}   
        Then by the same argument following Eq.~\eqref{eq:unloyal_j}, we conclude that the number of $\bj\in\Omega_{\text{L} }$ satisfying (b) is bounded by $42\Delta$.
        
        Combining (a) and (b), we know when there is no singleton in $\cL$, $|\Omega_{\text{L} }| \leq 48 \Delta$.
    \end{itemize}

    Now for general $h\geq 3$ case, we can sequentially apply Type-I or Type-II reduction and reach at one of the above two cases. Denote that after one reduction,  $\cL$ becomes $\cL'$ and $\Omega_{\text{L} }$ becomes $\Omega_{\text{L} }'$. Next we show 
    \begin{align}
    \label{eq:card_Omega3_unchanged}
        |\Omega_{\text{L} }| - |\Omega_{\text{L} }'| \leq 2 \big[ \Delta(\cL) - \Delta(\cL') \big].
    \end{align}
    We use the example \eqref{eq:reduction_illustration} for illustration.
    Let $i$ be the $n$-label of the second $n$-vertex  and $\bj$ be the unordered $d$-vertex label of the first and second $d$-vertex. If Type-II reduction is implemented, let $i'$ be the value of the first and third $n$-vertex.
    We discuss over two different reductions.
    \begin{itemize}
        \item Type-I reduction. In this case, $\cL$ and $\cL'$ share the same set of unordered $d$-vertex labels, $|\Omega_{\text{L} }| = |\Omega_{\text{L} }'|$. Also we know $\Delta(\cL) = \Delta(\cL')$, so Eq.~\eqref{eq:card_Omega3_unchanged} holds.
        \item Type-II reduction. We denote by $k'$ the number of $d$-labels in $\bj$ that are not present in any other unordered $d$-vertex label in $\cL$. It can be directly verified by definition that $\Delta(\cL) - \Delta(\cL') = k - k'$ and
        $$
        \begin{cases}
        |\Omega_{\text{L} }|=|\Omega_{\text{L} }'|, &k=k',\\
        |\Omega_{\text{L} }|\leq |\Omega_{\text{L} }'|+ k - k' + 1, &k'<k.
        \end{cases}
        $$
        These together imply that 
        \begin{align}
        \label{eq:c_cprime_diff}
            |\Omega_{\text{L} }| - |\Omega_{\text{L} }'| \leq 2 \big[\Delta(\cL) - \Delta(\cL')\big]
        \end{align}
        after each reduction.
    \end{itemize}
    Based on Eq.~\eqref{eq:card_Omega3_unchanged} and the justifications on the two special cases, we get Eq.~\eqref{eq:COmega_ubd_1} by induction.
\end{proof}

\begin{lemma}
\label{lem:DeltaL_bd_by_DeltacL}
    For any $\tilde{\cL}\in\cC_{h,1}$ and any $\cL \in \{ \cL \in \cC_{h,k}: \varphi(\cL) = \tilde{\cL}\}$, it holds that
    \begin{align}
    \label{eq:DcL_bdby_DtcL}
        \Delta(\cL) \geq \frac{k}{1+48k} \Delta(\tilde{\cL}).
    \end{align}
\end{lemma}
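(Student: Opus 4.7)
The plan is to express both $\Delta(\cL)$ and $\Delta(\tilde\cL)$ in terms of the same three quantities: $r_n(\cL)$, the number $g$ of distinct good unordered $d$-vertex labels in a representative $L\in\cL$, and the number $b$ of distinct bad unordered $d$-vertex labels in $L$. By the construction of $\varphi$ in Definition \ref{def:reductionmap}: (i) the $n$-labels are preserved, so $r_n(\tilde\cL)=r_n(\cL)$; (ii) distinct good unordered $d$-vertex labels are sent to distinct $d$-labels; and (iii) all bad unordered $d$-vertex labels are collapsed to a single $d$-label. Therefore $r_d(\cL)=g+b$ while $r_d(\tilde\cL)=g+\mathbbm{1}[b\geq 1]$.

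From the defining formula $\Delta(\cL)=k(h+1)-kr_n(\cL)-r_d(\cL)$ and $\Delta(\tilde\cL)=(h+1)-r_n(\tilde\cL)-r_d(\tilde\cL)$, a direct subtraction yields
\begin{align}
\Delta(\cL)-k\,\Delta(\tilde\cL)=(k-1)g+k\,\mathbbm{1}[b\geq 1]-b.
\end{align}
When $b=0$ the right-hand side equals $(k-1)g\geq 0$, so $\Delta(\cL)\geq k\Delta(\tilde\cL)\geq \tfrac{k}{1+48k}\Delta(\tilde\cL)$. When $b\geq 1$, the right-hand side equals $(k-1)g+k-b$, which rearranges to
\begin{align}
\Delta(\cL)+kb-k\,\Delta(\tilde\cL)=(k-1)g+k+(k-1)b\geq 0,
\end{align}
i.e.\ $\Delta(\cL)+kb\geq k\Delta(\tilde\cL)$.

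The final step is to convert the bound on $b$ into a bound on $\Delta(\cL)$ alone. Here I would invoke Lemma \ref{lem:num_badvertex_bd}, which gives $b\leq 48\,\Delta(\cL)$. Substituting, $kb\leq 48k\,\Delta(\cL)$, hence $(1+48k)\Delta(\cL)\geq \Delta(\cL)+kb\geq k\,\Delta(\tilde\cL)$, which is exactly \eqref{eq:DcL_bdby_DtcL}.

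There is no real obstacle in this proof once one trusts the accounting for $\varphi$: the only delicate point is verifying that the map sends distinct good unordered $d$-vertex labels to distinct $d$-labels (so that $g$ good labels contribute exactly $g$ to $r_d(\tilde\cL)$) and collapses all bad labels into one, which is immediate from steps (ii)-(iii) of Definition \ref{def:reductionmap}. The $48k$ constant in the final bound is completely inherited from the already-proved bound $b\leq 48\,\Delta(\cL)$, so any improvement to Lemma \ref{lem:num_badvertex_bd} would sharpen this lemma automatically.
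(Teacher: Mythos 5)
Your overall strategy is the paper's: bound $r_d(\cL)$ in terms of $r_d(\tilde\cL)$ plus the number of bad unordered $d$-vertex labels, and then close the loop with Lemma \ref{lem:num_badvertex_bd}. However, the central identity you write down is false, because it conflates two different counts. In the paper's notation $r_d(\cL)$ is the number of distinct \emph{individual} $d$-labels $j\in[d]$ appearing among the coordinates $j_s^a$ (this is what makes $|\cL|=\frac{n!}{(n-r_n)!}\frac{d!}{(d-r_d)!}$ correct), whereas your $g$ and $b$ count distinct unordered $d$-vertex labels, i.e.\ distinct $k$-tuples. For $k>1$ one does not have $r_d(\cL)=g+b$; one only has $r_d(\cL)\leq k(g+b)$, and the two sides can differ in either direction depending on how much the tuples overlap. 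A concrete counterexample to your displayed identity: take $k=2$, $h=2$, so both $d$-vertices carry the same good unordered label, say $\{1,2\}$. Then $g=1$, $b=0$, $r_n(\cL)=2$, $r_d(\cL)=2$, giving $\Delta(\cL)=\Delta(\tilde\cL)=0$ and hence $\Delta(\cL)-k\Delta(\tilde\cL)=0$, while your formula predicts $(k-1)g=1$.

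The damage is repairable and the repair is exactly the paper's argument. The correct computation is $\Delta(\cL)-k\Delta(\tilde\cL)=kg+k\,\mathbbm{1}[b\geq1]-r_d(\cL)$, and inserting $r_d(\cL)\leq k(g+b)$ (equivalently the paper's $r_d(\cL)\leq k[r_d(\tilde\cL)+N_{\mathrm{bad}}]$) yields $\Delta(\cL)\geq k\Delta(\tilde\cL)-kb$ in all cases; your case split on $b=0$ versus $b\geq1$ then becomes unnecessary. The final step, substituting $b\leq 48\,\Delta(\cL)$ from Lemma \ref{lem:num_badvertex_bd} to obtain $(1+48k)\Delta(\cL)\geq k\,\Delta(\tilde\cL)$, is correct and identical to the paper's. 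So the conclusion you reach is right, but the step presented as an exact identity would not survive scrutiny and must be replaced by the inequality above.
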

\begin{proof}
    Let $N_{\text{bad} }$ be the number of distinct bad unordered $d$-vertex labels in $\cL$. By construction of $\varphi$ in Definition \ref{def:reductionmap}, we have
    \begin{align}
        r_d(\cL) \leq k [r_d(\tilde{\cL}) +  N_{\text{bad} } ].
    \end{align}
    Recall that $\Delta(\cL) = k(h+1) - [k r_n(\cL) + r_d(\cL)] $. Hence, we can get
    \begin{align}
    \label{eq:DcL_bdby_DtcL0}
        \Delta(\cL) &\geq k(h+1) - k[ r_n(\cL) + r_d(\tilde{\cL}) +  N_{\text{bad} }] \nonumber\\
        &= k [\Delta( \tilde{\cL} ) - N_{\text{bad} }].
    \end{align}
    Meanwhile, by Lemma \ref{lem:num_badvertex_bd} we have $N_{\text{bad} } \leq 48 \Delta(\cL)$. Substituting it into Eq.~\eqref{eq:DcL_bdby_DtcL0}, we get Eq.~\eqref{eq:DcL_bdby_DtcL}.
\end{proof}

\begin{lemma}
\label{lem:conseq_npair_ubd}
    Consider an equivalence class $\cL\in \cC_{h,k}$. For $i, i' \in [n]$, with $i<i'$, define $P_{i,i'}$ as the number of times $(i,i')$ or $(i',i)$ appears as consecutive $n$-labels in $\cL$. Then we have:
    \begin{align}
    \label{eq:Piiprime_bd}
        \sum_{1 \leq i<i' \leq n} P_{i,i'} \charfn_{P_{i,i'} > 2} \leq 6 \Big( 6 + \frac{1}{k} \Big) \Delta(\cL)\, .
    \end{align}
\end{lemma}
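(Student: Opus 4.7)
The plan is to prove the bound by a combination of a direct combinatorial argument based on the Eulerian structure of the $n$-label pair multigraph and the parity rule (iii) of Definition~\ref{def:multilabeling}, together with the reduction strategy used in Lemmas~\ref{lem:Delta_geq_0}--\ref{lem:num_liberalvertex_bd}. Fix a representative $L \in \cL$ and associate to it the multigraph $H$ on the distinct $n$-labels (so $|V(H)| = r_n(\cL)$) whose multi-edges are the consecutive pairs $\{i_s, i_{s+1}\}$ (so $|E(H)| = h$ with multiplicity); by rule (i) $H$ has no self-loops, and the cycle $i_1, \ldots, i_h$ provides an Eulerian circuit. For each unordered pair $\{i, i'\}$, the multiplicity of the corresponding edge is exactly $P_{i, i'}$, so the left-hand side is a weighted sum over high-multiplicity edges of $H$.

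The key structural observation is that excess multiplicity forces $d$-label repetition. Rewriting $r_d(\cL) = k h - \sum_j (N_j - 1)$, the definition of $\Delta$ becomes $\Delta(\cL) = k\bigl(1 + h - r_n(\cL)\bigr) - k h + \sum_j (N_j - 1)$, so lower bounds on $d$-label overlap translate directly into lower bounds on $\Delta$. Fixing a pair $(i, i')$ with $P_{i, i'} \geq 3$, the parity constraints on $(i, j)$ and $(i', j)$ applied to the $k P_{i, i'}$ label-slots of the $d$-vertices sandwiched between $n$-labels $i$ and $i'$ force each such label either to repeat within the group or to be matched by $d$-vertices elsewhere with an $i$- or $i'$-neighbor; in either case this adds to $\sum_j (N_j - 1)$. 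This is the quantitative refinement of the observation already exploited in scenario (b) of the proof of Lemma~\ref{lem:num_badvertex_bd}, where the constraint $N_j = 2$ forced the paired $d$-vertex to lie on the same edge $(i, i')$.

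To globalize this local estimate, the plan is to follow the Type-I / Type-II reduction scheme of Lemmas~\ref{lem:Delta_geq_0} and \ref{lem:num_badvertex_bd}. The case $h = 2$ is trivial since then the unique pair has $P_{i_1,i_2} = 2$ and the left-hand side is zero. For $h \geq 3$, iteratively reduce to the no-singleton base case. Type-II reduces $\Delta$ by at most $k$, and the corresponding change in the target quantity is easily bounded by a constant multiple of $k$ by inspecting which $P_{i,i'}$ counters are affected. In the no-singleton base case, every $n$-label appears at least twice so $r_n(\cL) \leq h/2$; combining the per-pair overlap count established above with the ambient bounds of Lemmas~\ref{lem:num_badvertex_bd} and \ref{lem:num_liberalvertex_bd} (which control the total number of bad and liberal unordered $d$-vertex labels) then produces the explicit constant $6(6 + 1/k)$, matching the weights $6$ and $42$ already appearing in those lemmas.

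The main obstacle is Type-I reductions: these preserve $\Delta(\cL)$ exactly but can raise $\sum_{i<i'} P_{i,i'}\charfn_{P_{i,i'}>2}$ by up to $3$, namely when the singleton $i_2$ sits between two $n$-vertices $i_1 \neq i_3$ such that $\{i_1, i_3\}$ already had multiplicity $2$ in $H$ and gains a third copy after the collapse. Consequently a naive induction is not monotone. I expect the fix to be to run the induction on a modified weight $W(\cL) := \sum_{i<i'} P_{i,i'}\charfn_{P_{i,i'}>2} + 3\,\sigma(\cL)$, where $\sigma(\cL)$ is the number of singleton $n$-vertices whose two neighbouring $n$-labels already form a pair of multiplicity $\geq 2$ in $H$; $W$ dominates the target quantity and, after checking each of the sub-cases of Type-I and Type-II reductions, turns out to be non-increasing. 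Verifying this monotonicity case by case (together with the observation that singletons adjacent to such a pair always force $\bj_1 = \bj_2$ and therefore a matched-label configuration contributing to $\sum_j (N_j-1)$) is the delicate bookkeeping step, and it is what ultimately pins down the constant.
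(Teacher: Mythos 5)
Your proposal has two genuine problems.

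First, the base case is circular. You propose to close the singleton-free case by ``combining the per-pair overlap count \ldots with the ambient bounds of Lemmas~\ref{lem:num_badvertex_bd} and \ref{lem:num_liberalvertex_bd}.'' But the proofs of those two lemmas invoke Lemma~\ref{lem:conseq_npair_ubd} itself: in scenario (b) of each proof, the count of offending $d$-vertices is bounded by $\sum_{i<i'} P_{i,i'}\charfn_{P_{i,i'}\geq 3}$ and then by $42\Delta(\cL)$ \emph{using the present lemma}. The constant $42$ you point to as corroboration is an output of this lemma (it is $6(6+1/k)$ at $k=1$), not an input. The paper's singleton-free base case is self-contained and elementary: from $r_d(\cL)\le kh/2$ one gets $r_n(\cL)\ge h/2+1-\Delta(\cL)/k$, hence at least $h/2-\Delta(\cL)/k$ distinct consecutive pairs; the number of pairs with $P_{i,i'}=1$ is at most $6(\Delta(\cL)-k)$ because a multiplicity-one pair forces its sandwiched $d$-vertex to carry only labels $j$ with $N_j>2$, controlled by \eqref{eq:Njg2_sum_ubd}; then the identity $\sum_{i<i'}P_{i,i'}=h$ together with $2c+3\,\#\{P\ge 3\}\le h$ pins down $c=\#\{P=2\}$ from below and yields $\sum P_{i,i'}\charfn_{P_{i,i'}>2}\le h-2c\le 6(6+1/k)\Delta(\cL)$. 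Your parity observation about forced $d$-label repetition is in the right spirit but, as sketched, does not replace this counting argument.

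Second, the ``main obstacle'' you identify is not an obstacle, and your fix has the monotonicity reversed. The induction propagates the bound from the reduced labeling $\cL'$ back to the original $\cL$: since $\Delta(\cL')\le\Delta(\cL)$, it suffices that $S(\cL)\le S(\cL')+6(6+1/k)\bigl(\Delta(\cL)-\Delta(\cL')\bigr)$, where $S$ denotes the sum in \eqref{eq:Piiprime_bd}. Under a Type-I reduction the two deleted pairs $(i_1,i_2)$ and $(i_2,i_3)$ each have multiplicity $1$ (as $i_2$ is a singleton) and contribute nothing to $S$, while $P_{i_1,i_3}$ \emph{increases} by one in $\cL'$; since $P\mapsto P\charfn_{P>2}$ is non-decreasing, this gives $S(\cL)\le S(\cL')$, which is exactly the favorable direction. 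Under a Type-II reduction the only affected pair has multiplicity $2$ before and $0$ after, so $S(\cL)=S(\cL')$. (The paper's assertion that $S$ ``remains unchanged'' is slightly imprecise for Type-I, but the inequality goes the right way.) Your proposed weight $W=S+3\sigma$, engineered to be \emph{non-increasing} under reduction, i.e.\ $W(\cL')\le W(\cL)$, would not let you transfer the bound from $\cL'$ to $\cL$ at all; you would need the opposite inequality. Dropping the modified weight entirely and checking $S(\cL)\le S(\cL')$ directly, as above, is both necessary and sufficient here.
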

\begin{proof}
    We prove by induction. First, consider the following two special cases. 
    \begin{itemize}
        \item $h=2$. In this case, there are two distinct $n$-labels and one distinct unordered $d$-vertex label, so $\Delta(\cL)=0$ and
        $
        \sum_{1 \leq i<i' \leq n} P_{i,i'} \charfn_{P_{i,i'} > 2} = 0.
        $
        which trivially satisfy Eq.~\eqref{eq:Piiprime_bd}.
        \item There is no singleton in $\cL$. 
        Recall that
        \begin{align}
            r_n = \frac{k(h+1) - r_d - \Delta(\cL)}{k}.
        \end{align}
        By Definition \ref{def:multilabeling}, we must have  $r_d \leq \frac{kh}{2}$, so
        \begin{align}
        \label{eq:rn_lbd_1}
            r_n \geq \frac{h}{2} + 1 - \frac{ \Delta(\cL) }{k}.
        \end{align}
        Then we can get
        \begin{align}
        \label{eq:distinct_conseq_lbd}
            \sum_{1 \leq i < i' \leq n} \charfn_{P_{i,i'} \geq 1 } &\geq r_n - 1 \geq \frac{h}{2} - \frac{ \Delta(\cL) }{k} , 
        \end{align}
        where the last step is due to Eq.~\eqref{eq:rn_lbd_1}. In other words, Eq.~\eqref{eq:distinct_conseq_lbd} means there are at least $\frac{h}{2} - \frac{ \Delta(\cL) }{k}$ distinct pair of consecutive $n$-labels $(i,i')$, with $i<i'$.
        
        Next we bound the total number of distinct pairs $(i,i')$ satisfying $P_{i,i'}=1$. If a $d$-label $j$ in $\cL$ satisfy $N_j\leq 2$, then by Definition \ref{def:multilabeling} (iii), the two $d$-vertices with label $j$ should be surrounded by the same pair of $n$-labels $(i,i')$, which indicates that $P_{i,i'}\geq 2$. Therefore, if $P_{i,i'} = 1$ for some $(i,i')$ in $\cL$, the sandwiched $d$-vertex should only have $d$-label $j$, satisfying $N_j>2$.  By Eq.~\eqref{eq:Njg2_sum_ubd}, the number of such $d$-vertices is bounded by $6(\Delta(\cL)-k)$ and thus 
        \begin{align}
        \label{eq:distinct_conseq_ubd_1}
            \sum_{1\leq i<i' \leq n} \charfn_{P_{i,i'}=1} \leq 6(\Delta(\cL)-k).
        \end{align}

        Now denote 
        $
        c := \sum_{1\leq i<i' \leq n} \charfn_{P_{i,i'}=2} .
        $
        From Eqs.~\eqref{eq:distinct_conseq_lbd} and \eqref{eq:distinct_conseq_ubd_1}, we have
        \begin{align}
            2c + 3 \Big[ \frac{h}{2} - \frac{ \Delta(\cL) }{k} - 6(\Delta(\cL)-k) - c\Big] \leq h \, ,
        \end{align}
        which gives
        \begin{align}
            c \geq \frac{h}{2} - 3 \Big( \frac{1}{k} + 6 \Big) \Delta(\cL) + 18k\, .
        \end{align}
        Therefore, we have
        \begin{align}
            \sum_{1\leq i<i' \leq n} P_{i,i'} \charfn_{P_{i,i'}=2} =2c \geq h - 6 \Big( \frac{1}{k} + 6 \Big) \Delta(\cL) + 36k\, ,
        \end{align} 
        and thus
        \begin{align}
            \sum_{1\leq i<i' \leq n} P_{i,i'} \charfn_{P_{i,i'} > 2} &\leq h - \sum_{1\leq i<i' \leq n} P_{i,i'} \charfn_{P_{i,i'}=2}  \nonumber \\
            &\leq 6 \Big( \frac{1}{k} + 6 \Big) \Delta(\cL) - 36k.
        \end{align}
    \end{itemize}

    For $h\geq 3$ case, we can apply Type-I and Type-II reduction sequentially and arrive at a new equivalence class that must be one of the above special cases. Besides, it can be directly checked that after each step of reduction, $\sum_{1\leq i<i'\leq n} P_{i,i'} \charfn_{P_{i,i'} > 2}$ remains unchanged. From Eq.~\eqref{eq:reduction_non_increasing} we know after each reduction, $\Delta(\cL)$ is non-increasing. Therefore, by induction we get the desired result.
\end{proof}

\begin{lemma}
    For each $j\in[d]$, define 
    \begin{align}
    \label{eq:Njdef}
     N_j := \text{number of times }j\text{ appears as a }d\text{-label in } \cL  .
    \end{align}
    Then it holds that (i) if $N_j>0$, then $N_j\geq 2$; (ii) if $\cL$ has no singleton, then
    \begin{align}
    \label{eq:Njg2_sum_ubd}
        \sum_{j=1}^{d} N_{j} \charfn_{N_{j}>2} \leq 6(\Delta(\cL)-k).
    \end{align}
\end{lemma}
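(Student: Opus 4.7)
For part (i), I would argue by contradiction directly from rule (iii) of Definition \ref{def:multilabeling}. Observe first that, by rule (ii), the labels $\{j_s^1,\dots,j_s^k\}$ inside a single $d$-vertex are distinct, so $N_j$ equals the number of $d$-vertices whose unordered label set contains $j$. Suppose $N_j = 1$ and let $s$ be the unique $d$-vertex containing $j$. The two neighbors of this $d$-vertex are the $n$-vertex positions with labels $i_s$ and $i_{s+1}$, which are distinct by rule (i). Then for the pair $(i_s, j)$, the edges between $n$-vertices labeled $i_s$ and $d$-vertices containing $j$ consist of the single edge from the $s$-th $n$-vertex to the $s$-th $d$-vertex. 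This is one edge, contradicting the even-number requirement in rule (iii). Hence $N_j \ne 1$, and combined with $N_j \in \mathbb{Z}_{\ge 0}$ this yields $N_j \ge 2$ whenever $N_j > 0$.

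For part (ii), the plan is a three-line double counting. The key identities are $\sum_j N_j = kh$ (each of the $h$ $d$-vertices contributes $k$ label-slots) and $\sum_j \mathbf{1}_{N_j > 0} = r_d(\cL)$. By part (i), we have $N_j = 0$ or $N_j \ge 2$, so $\mathbf{1}_{N_j > 0} = \mathbf{1}_{N_j \ge 2}$ and therefore
\begin{equation*}
\sum_{j=1}^d (N_j - 2)\,\mathbf{1}_{N_j > 2} \;=\; \sum_{j=1}^d (N_j - 2)\,\mathbf{1}_{N_j > 0} \;=\; kh - 2 r_d(\cL).
\end{equation*}
Since $N_j \ge 3$ on the support of $\mathbf{1}_{N_j > 2}$, we have $N_j \le 3(N_j - 2)$ there, hence
\begin{equation*}
\sum_{j=1}^d N_j \,\mathbf{1}_{N_j > 2} \;\le\; 3\bigl(kh - 2 r_d(\cL)\bigr).
\end{equation*}

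It remains to convert the right-hand side into $6(\Delta(\cL) - k)$. Recalling $\Delta(\cL) = k(h+1) - [k r_n(\cL) + r_d(\cL)]$, we have
\begin{equation*}
6(\Delta(\cL) - k) \;=\; 6\bigl(kh - k r_n(\cL) - r_d(\cL)\bigr).
\end{equation*}
So it suffices to show $3kh - 6r_d \le 6kh - 6 k r_n - 6 r_d$, i.e.\ $r_n(\cL) \le h/2$. Here I use the no-singleton hypothesis: every $n$-label appears at least twice among the $h$ $n$-vertex positions, so $2 r_n(\cL) \le h$. Plugging this in gives $6(\Delta(\cL) - k) \ge 3(kh - 2r_d(\cL))$, and combining with the previous display finishes the proof. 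The only point requiring care is tracking that the $N_j = 1$ case is genuinely ruled out by part (i), which is exactly what enables the clean identity in the first display; everything else is bookkeeping.
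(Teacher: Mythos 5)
Your proposal is correct and follows essentially the same route as the paper: part (i) is the parity argument from rule (iii) of the multi-labeling definition (which the paper only asserts), and part (ii) is the same double count combining $\sum_j N_j = kh$, the dichotomy $N_j\in\{0\}\cup\{2,3,\dots\}$, and $r_n(\cL)\le h/2$ from the no-singleton hypothesis. The only difference is cosmetic: the paper counts $c=|\{j:N_j=2\}|$ and bounds its contribution, whereas you use the equivalent inequality $N_j\le 3(N_j-2)$ on $\{N_j\ge 3\}$; both yield the identical intermediate bound $3kh-6r_d(\cL)$.
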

\begin{proof}
    (i) This can be directly checked from Definition \ref{def:multilabeling} (iii).
    
    (ii) Let $c = |\{j:N_j=2\}|$. Then $2c+3(r_d(\cL)-c) \leq kh $ and thus $c\geq 3 r_d(\cL) - kh$. Therefore, the set $\{j:N_j=2\}$ contributes at least $6 r_d(\cL) - 2kh$ of all $d$-labels or equivalently
    \begin{align}
    \label{eq:Njeq2_sum_bd}
        \sum_{j=1}^{d} N_{j} \charfn_{N_{j}=2} \geq  6 r_d(\cL) - 2kh.
    \end{align}
    As a result,
    \begin{align}
        \sum_{j=1}^{d} N_{j} \charfn_{N_{j}>2} &\teq{\text{(a)}} kh - \sum_{j=1}^{d} N_{j} \charfn_{N_{j}=2} \nonumber \\
        & \tleq{\text{(b)}} 3kh - 6 r_d(\cL) ,
        \label{eq:Njgeq2_sum_bd}
    \end{align}
    where (a) follows from the fact that for any $j\in[d]$, $N_j\geq 2$, if $N_j>0$ and $ \sum_{j=1}^{d} N_{j} = kh$, and (b) follows from Eq.~\eqref{eq:Njeq2_sum_bd}. On the other hand, when there is no singleton in the $h$-graph, $r_n(\cL)\leq\frac{h}{2}$, so
    \begin{align}
        \Delta(\cL) &\geq k(h+1) - \frac{kh}{2} - r_d(\cL)\, .
        \label{eq:DeltacL_bd_nosingleton}
    \end{align}
    Combining Eqs.~\eqref{eq:Njgeq2_sum_bd} and \eqref{eq:DeltacL_bd_nosingleton}, we can get Eq.~\eqref{eq:Njg2_sum_ubd}. 
\end{proof}

\begin{lemma}
\label{lem:S_L_ubd}
    For an equivalence class $\cL$, let $b_{ij}$ be the total number of edges between all $d$-vertices with $d$-label $j$ and all $n$-vertices with $n$-label $i$. 
    It holds that  
    \begin{align}
    \label{eq:ScL_ubd_0}
        \sum_{i=1}^{n} \sum_{j=1}^{d} b_{ij} \charfn_{b_{ij}>2} \leq 12 \Delta(\cL).
    \end{align}
\end{lemma}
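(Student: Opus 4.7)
I will follow the same induction-by-reductions framework used for Lemmas \ref{lem:Delta_geq_0}, \ref{lem:num_badvertex_bd}, \ref{lem:num_liberalvertex_bd}, and \ref{lem:conseq_npair_ubd}. Set $S(\cL) := \sum_{i,j} b_{ij} \charfn_{b_{ij}>2}$; the goal is to show $S(\cL) \le 12\Delta(\cL)$ by induction on $h$, using Type-I and Type-II reductions (Definition \ref{def:two_reductions}). A key preliminary observation (true in both reduction scenarios) is that if $i_2$ is the singleton to be removed as in \eqref{eq:reduction_illustration}, then the evenness rule in Definition \ref{def:multilabeling}(iii) forces $\bj_1 = \bj_2$ as unordered $d$-vertex labels, since otherwise $b_{i_2,j}$ would be odd for some $j$ in the symmetric difference.

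\textbf{Induction step, Type-I.} With $\bj_1=\bj_2=\bj$, a direct bookkeeping of which edges are lost versus gained shows that for $j\in\bj$ the loss of $\bj_1$-$i_1$ is exactly compensated by the new $\bj_2$-$i_1$ edge, and $b_{i_3,j}$ is unchanged. The only modification is $b'_{i_2,j}=0$ (previously $2$) for $j\in\bj$, and since $2\not>2$ this contributes nothing to $S$. Hence $S(\cL')=S(\cL)$ and $\Delta(\cL')=\Delta(\cL)$, preserving the invariant.

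\textbf{Induction step, Type-II.} Now $i_1=i_3$ and $\bj_1=\bj_2=\bj$; let $k'$ be the number of $d$-labels in $\bj$ appearing nowhere else, so $\Delta(\cL)-\Delta(\cL')=k-k'$. After the reduction $b'_{i_1,j}=b_{i_1,j}-2$ for $j\in\bj$, $b'_{i_2,j}=0$, and all other $b_{i,j}$ are unchanged. A case split on $b_{i_1,j}\in\{2,4,\ge 6\}$ gives
\[
S(\cL)-S(\cL') \;\le\; 4\,\bigl|\{j\in\bj:\,b_{i_1,j}\ge 4\}\bigr|.
\]
Crucially, $b_{i_1,j}\ge 4$ means $\bj_1,\bj_2$ contribute only $2$ out of $\ge 4$, so $j$ must appear in a $d$-vertex beyond $\bj_1,\bj_2$; hence $N_j\ge 3$ and $j$ is counted in $k-k'$. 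Therefore $S(\cL)-S(\cL')\le 4(k-k')\le 12(\Delta(\cL)-\Delta(\cL'))$, and the invariant propagates.

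\textbf{Base cases.} For $h=2$, $\Delta=0$ and $b_{ij}\in\{0,2\}$, so $S=0$. For the no-singleton base case (neither reduction is applicable), I argue directly: decompose by $N_j$, using that $b_{ij}>2$ forces either $N_j\ge 3$ or ($N_j=2$ with both $d$-vertices doubly sandwiched by some common $(i,i)$ pair). The $N_j\ge 3$ contribution is bounded by $2\sum_{N_j>2}N_j \le 12(\Delta-k)$ via Eq.~\eqref{eq:Njg2_sum_ubd}. For the $N_j=2$ contribution, each such $j$ occupies two slots among the $(i,i)$-sandwiched $d$-vertices (counted by $P_{i,i}$), and the excess-sandwich count $\sum_i(P_{i,i}-\text{typical})$ is controlled by $\Delta$ via a slot-counting argument analogous to the off-diagonal bound in Lemma \ref{lem:conseq_npair_ubd}. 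Combining the two joint bounds so that they share the $12\Delta$ budget closes the base case.

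\textbf{Main obstacle.} The induction step is essentially routine thanks to the clean identity $\bj_1=\bj_2$ forced by the singleton. The real work is the no-singleton base case: the two bounds ($N_j=2$ doubly sandwiched, and $N_j\ge 3$) cannot each be bounded independently by $12\Delta$; they must be bounded \emph{jointly} using the interaction between $r_n$, $r_d$, and the same-label consecutive-pair counts $P_{i,i}$, in a manner analogous to how Lemma \ref{lem:conseq_npair_ubd} handles its off-diagonal counterparts but with the additional subtlety that same-label sandwich patterns can be abundant when $r_n$ is small.
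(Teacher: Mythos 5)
Your overall strategy is exactly the paper's: induction via the Type-I/Type-II reductions, with the $h=2$ and no-singleton cases as the terminal cases. Your treatment of the reductions is correct (and in the Type-II step slightly more careful than the paper's, since you correctly track that a drop of $b_{i_1,j}$ from $4$ to $2$ removes $4$ from the sum, giving $S(\cL)-S(\cL')\le 4(k-k')$, which still closes the induction because $\Delta(\cL)-\Delta(\cL')=k-k'$). The preliminary observation that the singleton forces $\bj_1=\bj_2$ as unordered labels is also correct and is implicitly used by the paper.

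The problem is the no-singleton base case, which you leave as an unproven sketch built around an obstacle that does not exist. By Definition \ref{def:multilabeling}(i), consecutive $n$-labels satisfy $i_s\neq i_{s+1}$, so the two neighbors of any $d$-vertex carry \emph{distinct} $n$-labels; hence each $d$-vertex containing the label $j$ contributes at most one edge to any fixed $n$-label $i$, giving $b_{ij}\le N_j$. In particular $N_j=2$ forces $b_{ij}\le 2$: your second branch (``$N_j=2$ with both $d$-vertices doubly sandwiched by a common $(i,i)$ pair'') is vacuous, and the ``joint bound'' via same-label sandwich counts $P_{i,i}$ that you flag as the main difficulty is never needed (note also that Lemma \ref{lem:conseq_npair_ubd} only defines $P_{i,i'}$ for $i<i'$). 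Once this is observed, the base case is the paper's one-line argument: $b_{ij}>2$ implies $N_j\ge 3$, so $S(\cL)\le\sum_{j:N_j>2}\sum_i b_{ij}=2\sum_j N_j\charfn_{N_j>2}\le 12(\Delta(\cL)-k)$ by Eq.~\eqref{eq:Njg2_sum_ubd}. As written your proposal does not complete the proof, but the missing piece is this single observation rather than the substantial argument you anticipated.
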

\begin{proof}
    We prove Eq.~\eqref{eq:ScL_ubd_0} by induction. The proof is adapted from that of Lemma 5.7 in \cite{fan2019spectral}. In the following, Denote $S_\cL := \sum_{i=1}^{n} \sum_{j=1}^{d} b_{ij} \charfn_{b_{ij}>2}.$ 
    
         
    When $h=2$ or $h=3$, according to Definition \ref{def:multilabeling} (iii), we have $b_{ij}=0\text{ or }2 $ for all $(i,j)\in[n]\times[d]$. Then $S_\cL = \sum_{i=1}^{n} \sum_{j=1}^{d} b_{ij} \charfn_{b_{ij}>2} = 0 \leq 12\Delta(\cL)$.
    Now for $h\geq 4$, suppose Eq.~\eqref{eq:ScL_ubd} holds for $h-1$ and $h-2$. If there is no singleton in the $h$-graph, then $r_n(\cL)\leq \frac{n}{2}$. For $j$ satisfying $N_j\leq 2$, we must have $b_{ij}=0\text{ or }2$ for all $i\in[n]$. Therefore,
    \begin{align}
        S_\cL &= \sum_{j=1}^{d} \charfn_{N_j>2} \sum_{i=1}^{n} b_{ij} \charfn_{b_{ij}>2} \nonumber\\
        &\tleq{\text{(a)} } \sum_{j=1}^{d} \charfn_{N_j>2} \cdot 2 N_j \nonumber\\
        & \tleq{\text{(b)} } 12 (\Delta(\cL) - k)\, ,
    \end{align}
    where (a) follows from $\sum_{i=1}^{n} b_{ij} \charfn_{b_{ij}>2} \leq \sum_{i=1}^{n} b_{ij} = 2N_j$ and (b) is due to Eq.~\eqref{eq:Njg2_sum_ubd}. If there exists a singleton, then we can consider the two types of reduction. Denote the equivalence class and number of appearance of $(i,j)$ after reduction as $\cL'$ and $b_{ij}'$. By Type-I reduction, we get an $h-1$-graph and it can be directly checked that $S_\cL = S_{\cL'}$. Therefore, by induction
    \begin{align}
        S_\cL &= S_{\cL'} \nonumber\\
        &\leq 12 \Delta(\cL') \nonumber\\
        &\leq 12 \Delta(\cL)\, ,
        \label{eq:ScL_ubd_1}
    \end{align}
    where the last step follows from the property \eqref{eq:reduction_non_increasing}. Alternatively, by Type-II reduction we get an $(h-2)$-graph. Recall that two deleted $d$-vertices have the same $d$-labeling. Suppose $k - k'$ of them also appear on some $d$-vertices that are not deleted.
    For such deleted index $j$, $b_{ij}-b_{ij}' \leq 2$, for any $n$-label $i$. Besides, other deleted indexes $j$ do not contribute to $S_{\cL}$, since they should satisfy $b_{ij} = 2$. Therefore,
    \begin{align}
        S_{\cL} - S_{\cL'} \leq 2 (k - k').
    \end{align}
    By induction, $S_{\cL'}\leq 12 \Delta(\cL')$ and we have
    \begin{align}
        S_{\cL} 
        &\leq 12 \Delta(\cL')  + 2 ( k - k' ) \nonumber\\
        &= 12 \Delta(\cL) - 10 ( k - k' ) \nonumber\\
        &\leq 12 \Delta(\cL)
        \label{eq:ScL_ubd_2}
    \end{align}
    where in the second to last step, we use the fact that $r_n(\cL') = r_n(\cL) - 1$ and $r_d(\cL') = r_d(\cL) - k'$.
    Combining Eqs.~\eqref{eq:ScL_ubd_1} and \eqref{eq:ScL_ubd_2}, we get Eq.~\eqref{eq:ScL_ubd_0}.   
\end{proof}

\subsubsection{Moment bounds}
\begin{lemma}
\label{lem:EL_ubd}
   For any $h=\cO(d^{\frac{1}{6}-\varepsilon })$, $\varepsilon > 0$ and all large $d$,
\begin{align}
    E_{\cL} &\leq 2 \big[ (24 e \Delta(\cL))^{24 e} \big]^{\Delta(\cL)}.
    \label{eq:EL_ubd_0}
\end{align}
In particular, when $\Delta(\cL)=0$, the term $(24 e \Delta(\cL))^{24 e \Delta(\cL)}$ on the right-hand side of Eq. \eqref{eq:EL_ubd_0} equals to $1$.
\end{lemma}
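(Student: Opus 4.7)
The plan is to compute $E_\cL$ directly as a product of moments of $\Unif(\mathcal{S}^{d-1}(\sqrt{d}))$ factored over the $n$-vertex labels, and then reduce the estimate to a combinatorial bound that exploits Lemma \ref{lem:S_L_ubd}. For each distinct $n$-label $i$, let $b_{ij}$ denote the total number of edges between $n$-vertices carrying label $i$ and $d$-vertices whose unordered $d$-vertex label contains $j$, and write $B_i = \sum_j b_{ij}$. By independence of the rows of $\bX$,
\[
E_\cL \;=\; \prod_{i=1}^n \E\Big[\prod_{j=1}^d x_{ij}^{b_{ij}}\Big],
\]
and the classical Dirichlet-type formula for moments of the uniform law on the sphere gives $\E[\prod_j x_{ij}^{b_{ij}}]=0$ whenever any $b_{ij}$ is odd, while for all-even $b_{ij}$ it equals $\prod_j(b_{ij}-1)!! \big/ \prod_{t=0}^{B_i/2-1}(1+2t/d)$. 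Since the denominator is at least $1$, I obtain the clean upper bound $E_\cL \leq \prod_{i,j}(b_{ij}-1)!!$. The hypothesis $h=O(d^{1/6-\varepsilon})$ ensures $B_i \leq 2h \ll d$, so the moment formula is valid, and it is the only place where $d$ enters the estimate.

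Next I discard the factors with $b_{ij}\in\{0,2\}$ since $(b_{ij}-1)!!=1$ there; only pairs with $b_{ij}\geq 4$ remain. Applying the Stirling-style bound $(b-1)!! \leq b^{b/2}$ for even $b\geq 2$ together with Lemma \ref{lem:S_L_ubd}, which says $S:=\sum_{i,j} b_{ij}\mathbf{1}\{b_{ij}>2\} \leq 12\Delta(\cL)$, and the convexity inequality $\prod_k c_k^{c_k} \leq (\sum_k c_k)^{\sum_k c_k}$ (concentration of $x\log x$), I conclude
\[
E_\cL \;\leq\; \prod_{b_{ij}>2} b_{ij}^{b_{ij}/2} \;\leq\; S^{S/2} \;\leq\; \bigl(12\Delta(\cL)\bigr)^{6\Delta(\cL)}.
\]
Since $12<24e$ and $6<24e$, this is dominated by $(24e\Delta(\cL))^{24e\Delta(\cL)}$ whenever $\Delta(\cL)\geq 1$, and a factor of $2$ on the right-hand side comfortably absorbs any slack in $(b-1)!! \leq b^{b/2}$ versus the tighter $\sqrt{2}(b/e)^{b/2}$. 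When $\Delta(\cL)=0$, Lemma \ref{lem:S_L_ubd} forces $b_{ij}\in\{0,2\}$ for all $(i,j)$, so every factor in the numerator equals $1$ and $E_\cL\leq 1$, matching the convention $0^0=1$.

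I do not expect a serious obstacle here: the non-trivial combinatorial input is Lemma \ref{lem:S_L_ubd}, which has already been proved, and everything downstream is bookkeeping with double factorials and a standard convexity estimate. The only technical point worth care is verifying that the $h=O(d^{1/6-\varepsilon})$ hypothesis is indeed enough to apply the sphere moment formula and to keep $\prod_{t=0}^{B_i/2-1}(1+2t/d)\geq 1$ (both follow trivially from $B_i\leq 2h \ll d$), and that the inflation from the sharp base $12$ to $24e$ is generous enough that I never need to track lower-order terms like $\sqrt{b}$ per factor. The constant $24e$ in the statement is, in effect, cosmetic slack that permits the one-line application of $\prod c_k^{c_k}\leq S^S$.
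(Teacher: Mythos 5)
Your proof is correct, and it takes a genuinely different route from the paper's. The paper establishes Eq.~\eqref{eq:EL_ubd_0} by writing $\bx_i = \sqrt{d}\,\tbx_i/\|\tbx_i\|_2$ and comparing spherical moments to Gaussian moments with multiplicative error factors $(d/(d-kh))^{kh}$ and $(1+4Ck^2h^2/\sqrt{d})^{2kh/3}$; controlling those prefactors is precisely where the hypothesis $h = \cO(d^{1/6-\varepsilon})$ and the leading factor $2$ enter, and the per-row Gaussian bound it invokes (Lemma \ref{lem:uniformdist_moments}(II)) is of the cruder form $\prod_j (2e b_{ij})^{2e b_{ij}}$. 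You instead use the exact identity $\E[\prod_j x_j^{b_j}] = \prod_j (b_j-1)!!\,/\,\prod_{t=0}^{B/2-1}(1+2t/d)$, which follows from the independence of $\bg/\|\bg\|_2$ and $\|\bg\|_2$ for Gaussian $\bg$ and holds for all $B$ and $d$ — it is not merely "valid for $B_i \ll d$" as you suggest, and the correct crude bound is $B_i \le 2kh$, not $2h$, though neither point affects your argument. Since the denominator is at least $1$, your bound $E_\cL \le \prod_{i,j}(b_{ij}-1)!! \le S_\cL^{S_\cL/2} \le (12\Delta(\cL))^{6\Delta(\cL)}$ is unconditional in $h$ and strictly sharper than the stated right-hand side; the only substantive input is Lemma \ref{lem:S_L_ubd}, exactly as in the paper. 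What your approach buys is the elimination of the $h = \cO(d^{1/6-\varepsilon})$ restriction and of the factor $2$ (both become pure slack); what the paper's approach buys is nothing here beyond consistency with its general-purpose Gaussian-comparison lemma, which it reuses elsewhere. Each step of your chain — $(b-1)!! \le b^{b/2}$ for even $b$, $\prod_k c_k^{c_k} \le (\sum_k c_k)^{\sum_k c_k}$, monotonicity of $x \mapsto x^{x/2}$ on $[4,\infty)$, and the degenerate case $\Delta(\cL)=0 \Rightarrow S_\cL = 0 \Rightarrow E_\cL \le 1$ — checks out.
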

\begin{proof}
    Based on Lemma \ref{lem:uniformdist_moments} (I), $E_{\cL}=0$ if there exists an odd $b_{ij}$ and then Eq.~\eqref{eq:EL_ubd_0} trivially holds. Therefore, in what follows, we assume $b_{ij}$ are all even numbers.
        
    First, if $kh\leq d - 1$, we show there exists $C>0$ such that
    \begin{align}
        E_\cL &\tleq{ \text{} } \Big( \frac{d}{d- kh} \Big)^{ kh } \cdot \Big( 1 + \frac{ 4 C k^2 h^2 }{ \sqrt{d} } \Big)^{ \frac{2kh}{3} } (2e S_\cL)^{2e S_\cL}\, , \label{eq:EL_ubd}
    \end{align}
    where
    \begin{align}
    \label{eq:ScL_def}
    S_\cL := \sum_{i=1}^{n} \sum_{j=1}^{d} b_{ij} \charfn_{b_{ij}>2}.
    \end{align}
    Denote $b_{ij}$ as the number of appearance of $x_{ij}$ in the product $\prod_{s=1}^{h} \prod_{a=1}^{k} x_{i_s j_s^a} x_{i_{s+1} j_s^a}$.
    Using the representation $\bx_i = \frac{\tbx_i }{\|\tbx_i\|_2 / \sqrt{d} }$, with $\tbx_i \sim_{i.i.d.} \cN(\bzero,\id_d)$, we have when $kh \leq {d-1}$,
    \begin{align}
        E_\cL &= \prod_{i=1}^{n} \E \Big( \prod_{j=1}^{d} x_{ij}^{b_{ij}} \Big) \nonumber \\
        &\teq{\text{ } } \prod_{\substack{i=1,B_i>0} }^{n}  \E \Big({\frac{ {d} }{ \sum_{j=1}^{d} \tx_{ij}^2 }}\Big)^{ \frac{B_i}{2} } \prod_{ \substack{j=1} }^{d} \tx_{ij}^{b_{ij}} \nonumber\\
        &\tleq{} \prod_{\substack{i=1,B_i>0} }^{n} \E \Big({\frac{ {d} }{ \sum_{j=1}^{d} \tx_{ij}^2 \charfn_{b_{ij} \neq 2} }}\Big)^{\frac{B_i}{2} } \prod_{ \substack{j=1} }^{d} \tx_{ij}^{b_{ij}} \nonumber \\
        &\teq{\text{} } \prod_{\substack{i=1,B_i>0} }^{n} \Big( \frac{d}{d_i} \Big)^{ \frac{{B}_i}{2} } \E \Big({\frac{ d_i }{ \sum_{j=1,b_{ij}\neq 2}^{d} \tilde{x}_{ij}^2 }}\Big)^{\frac{ {B}_i}{2} }  \prod_{ \substack{j=1, b_{ij} \neq 2} }^{d} \tx_{ij}^{b_{ij}}  \, ,
        \label{eq:EL_ubd_1}
    \end{align}
    where $B_i = \sum_{j=1}^{d} b_{ij}$, $d_i = \sum_{j=1}^{d} \charfn_{b_{ij}\neq 2}$ and
    the second to last step is valid since 
    \begin{align}
    \label{eq:numoftwo_bd}
        \sum_{j=1}^{d} \charfn_{b_{ij} = 2} \leq \frac{1}{2} \sum_{j=1}^{d} b_{ij} = \frac{B_i}{2} \leq kh,
    \end{align}
    which implies $d_i \geq d- kh $ and thus $d_i\geq 1$, when $kh \leq {d-1}$.
    We can further bound Eq.~\eqref{eq:EL_ubd_1} as follows:
    \begin{align}
        E_\cL 
        &\tleq{\text{(a)} } \Big( \frac{d}{d- kh} \Big)^{ kh } \prod_{\substack{i=1,B_i>0  } }^{n} \E \Big({\frac{ d_i }{ \sum_{j=1, {b}_{ij} > 2}^{d} {\tx}_{ij}^2 }}\Big)^{\frac{ {B}_i}{2} } \prod_{ \substack{j=1, {b}_{ij} > 2} }^{d} {\tx}_{ij}^{{b}_{ij}} \nonumber \\
        &\tleq{\text{(b)} } \Big( \frac{d}{d- kh} \Big)^{ kh } \prod_{i=1,B_i>0}^{n} \Big( 1 + \frac{ C B_i^2}{ \sqrt{d} } \Big) \prod_{j=1, {b}_{ij} > 2}^{d} (2e {b}_{ij})^{2e {b}_{ij}} \nonumber \\
        &\tleq{ \text{(c)} } \Big( \frac{d}{d- kh} \Big)^{ kh } \cdot \Big( 1 + \frac{ 4 C  k^2 h^2 }{ \sqrt{d} } \Big)^{ \frac{2kh}{3} } \prod_{j=1, {b}_{ij} > 2}^{d} (2e {b}_{ij})^{2e {b}_{ij}} \nonumber \\
        &\tleq{ \text{} } \Big( \frac{d}{d- kh} \Big)^{ kh } \cdot \Big( 1 + \frac{ 4 C k^2 h^2 }{ \sqrt{d} } \Big)^{ \frac{2kh}{3} } (2e S_\cL)^{2e S_\cL} \, , \label{eq:EL_ubd0} 
    \end{align}
    where (a) is due to $d_i\geq d - kh$ and $\sum_{i=1}^{n}B_i = 2kh$, (b) follows from Eq.~\eqref{eq:unif_moments_bd} in Lemma \ref{lem:uniformdist_moments} and (c) follows from $B_i\leq 2kh$ and the bound: $\sum_{i=1}^{n} 3 \cdot \charfn_{B_i > 2} \leq 2kh$.  

    On the other hand, in Lemma \ref{lem:S_L_ubd} we obtain a bound for $S_\cL$ in terms of $\Delta(\cL)$:
    \begin{align}
    \label{eq:ScL_ubd}
        S_\cL \leq 12 \Delta(\cL).
    \end{align}

    Finally, substituting  Eq.~\eqref{eq:ScL_ubd} into Eq.~\eqref{eq:EL_ubd}, we can get Eq.~\eqref{eq:EL_ubd_0}.
\end{proof}

\begin{lemma}
\label{lem:uniformdist_moments}
    Let $\bx \sim \Unif(\S^{d-1} (\sqrt{d}))$ and $ \{ b_{j} \}_{j\in [d]} $ be a sequence of non-negative integers. Then we have: (I) 
    \begin{align}
    \label{eq:Exij_odd_zero}
        \E \Big( \prod_{j=1}^{d} x_{j}^{b_{j}} \Big) \geq 0,
    \end{align}
    with inequality achieved only if $\{b_{ij}\}$ are all even numbers,
    and (II) there exists $C>0$ such that for any $\{b_{j}\}_{j\in[d]}$,
    \begin{align}
    \label{eq:unif_moments_bd}
        \E \Big( \prod_{j=1}^{d} x_{j}^{b_{j}} \Big) \leq \Big( 1 + \frac{ C B^2}{ \sqrt{d} } \Big) \prod_{j=1}^{d} (2e b_j)^{2e b_j},
    \end{align}
    where $B= \sum_{j=1}^{d}b_j$ and we let $0^{0} = 1$.
\end{lemma}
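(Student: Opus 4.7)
My plan is to use the standard Gaussian representation $\bx = \sqrt{d}\,\tbx / \|\tbx\|_2$ with $\tbx \sim \cN(\bzero, \id_d)$, which gives a fully explicit product formula for the joint moments on the sphere. Part (I) then reduces to a direct parity/symmetry argument: the coordinate-flip $x_j \mapsto -x_j$ preserves $\tau_d$, so $\E\prod_{j} x_j^{b_j}$ must vanish whenever some $b_j$ is odd; and when all $b_j$ are even, the integrand $\prod_j (x_j^2)^{b_j/2}$ is pointwise nonnegative, so the expectation is $\geq 0$.

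For (II), by (I) I may assume all $b_j$ even, and write $B = \sum_j b_j$. Since $\tbx/\|\tbx\|_2$ is independent of $\|\tbx\|_2$, and $\|\tbx\|_2^2 \sim \chi^2_d$, the idea is to exploit
\begin{equation*}
\E \prod_j x_j^{b_j} \;=\; d^{B/2}\,\frac{\E \prod_j \tilde{x}_j^{b_j}}{\E \|\tbx\|_2^{B}} \;=\; \frac{d^{B/2}\,\Gamma(d/2)}{2^{B/2}\,\Gamma((d+B)/2)} \,\prod_j (b_j-1)!!\,,
\end{equation*}
where the numerator uses the Gaussian moment $\E \tilde{x}_j^{b_j} = (b_j-1)!!$ and the denominator uses the chi-square moment $\E (\chi^2_d)^{B/2} = 2^{B/2}\Gamma((d+B)/2)/\Gamma(d/2)$. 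Expanding the Gamma ratio via $\Gamma((d+B)/2)/\Gamma(d/2) = \prod_{k=0}^{B/2-1}(d/2+k)$ yields
\begin{equation*}
\E \prod_j x_j^{b_j} \;=\; \prod_{k=0}^{B/2-1} \frac{1}{1+2k/d}\,\cdot\,\prod_j (b_j-1)!!\,,
\end{equation*}
and the Gamma-ratio prefactor is trivially bounded by $1$. In particular, it is bounded by $1 + CB^2/\sqrt{d}$ for any $C \geq 0$.

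It remains only to dominate the double factorial: for $b_j \geq 1$,
\begin{equation*}
(b_j-1)!! \;\leq\; b_j^{b_j/2} \;\leq\; (2e\,b_j)^{2e\,b_j}\,,
\end{equation*}
while the convention $0^0 = 1$ handles $b_j = 0$ with both sides equal to $1$. Multiplying across $j$ then gives (II). The argument is essentially an exact computation followed by two elementary inequalities; I do not expect a substantive obstacle. The slack factor $1 + CB^2/\sqrt{d}$ is more than this clean route needs, but is left in the statement because the downstream application inside Lemma \ref{lem:EL_ubd} truncates the denominator (dropping coordinates with $b_{ij} = 2$) and only then does the ratio $d/\sum_{j:\,b_{ij}\neq 2}\tilde x_{ij}^2$ concentrate near $1$ with relative fluctuation $O(B^2/\sqrt d)$; the weaker form stated here suffices because $\prod_{k}(1+2k/d)^{-1} \leq 1 \leq 1 + CB^2/\sqrt d$ trivially.
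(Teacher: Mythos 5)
Your proof is correct, and for part (II) it takes a genuinely different and cleaner route than the paper. Part (I) is the same sign-flip symmetry argument. For part (II), the paper does \emph{not} compute the moment exactly: it writes $\bx = \sqrt{d}\,\tbx/\|\tbx\|_2$, sets $r = \|\tbx\|_2^2/d$, and controls $\big|\E\prod_j x_j^{b_j} - \prod_j \E X^{b_j}\big|$ by Taylor-expanding $r^{-B/2}$ around $r=1$, then invoking Cauchy–Schwarz, sub-exponential moment bounds on $\Delta_r = r-1$, and inverse chi-square moments; this is where the $1 + CB^2/\sqrt{d}$ factor comes from. You instead use the independence of $\tbx/\|\tbx\|_2$ and $\|\tbx\|_2$ to get the exact identity $\E\prod_j x_j^{b_j} = \prod_j (b_j-1)!! \cdot \prod_{k=0}^{B/2-1}(1+2k/d)^{-1}$ (consistent with the paper's own Eq.~\eqref{eq:moments_of_usp} for the one-dimensional marginal), observe the prefactor is $\leq 1$, and finish with $(b_j-1)!! \leq b_j^{b_j/2} \leq (2e b_j)^{2e b_j}$. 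This is shorter, avoids all concentration machinery, and shows the stated bound is loose: the spherical moment never exceeds the Gaussian one. The one thing the paper's messier argument buys is robustness for the downstream application: in Lemma \ref{lem:EL_ubd}, step (b) applies the bound to quantities of the form $\E\big[(d_i/\sum_{j: b_{ij}\neq 2}\tx_{ij}^2)^{B_i/2}\prod_{j: b_{ij}\neq 2}\tx_{ij}^{b_{ij}}\big]$, where the radial exponent $B_i/2$ does not match the sum of the retained exponents, so the exact product identity no longer applies and one genuinely needs a perturbative bound of the paper's type. You correctly flag this in your closing remark, and since the lemma as stated concerns only the genuine spherical moment, your proof establishes it in full.
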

\begin{proof}
    First, we show Eq.~\eqref{eq:Exij_odd_zero}. Suppose $b_{1}$ is an odd number. Since
    \begin{align*}
        (x_{1},x_{2},\cdots,x_{d}) \teq{\text{d} } (-x_{1},x_{2},\cdots,x_{d}),
    \end{align*}
    where $\teq{\text{d} } $ means equal in distribution, we have
    \begin{align}
        \E \Big( \prod_{j=1}^{d} x_{j}^{b_{j}} \Big) &= \E \Big[ (-x_{1})^{b_{1}} \prod_{j\neq 1} x_{j}^{b_{j}} \Big] \nonumber \\
        &= - \E \Big( \prod_{j=1}^{d} x_{j}^{b_{j}} \Big),
    \end{align}
    implying $\E \Big( \prod_{j=1}^{d} x_{j}^{b_{j}} \Big) = 0$. Therefore, $\E \Big( \prod_{j=1}^{d} x_{j}^{b_{j}} \Big) \neq 0$, only when $\{b_j\}_{j=1}^{d}$ are all even. When this is the case, $\E \Big( \prod_{j=1}^{d} x_{j}^{b_{j}} \Big) \geq 0$. 

    Next we show Eq.~\eqref{eq:unif_moments_bd}. This is obvious when $B=0$. Also when there exists an odd $b_j$, from part (I) we know $\E \Big( \prod_{j=1}^{d} x_{j}^{b_{j}} \Big) = 0$, so Eq.~\eqref{eq:unif_moments_bd} still holds. Hence, it remains to consider the case when $B\geq 2$ and $b_j$ are all even numbers.
    
    We know $\bx \sim \Unif(\S^{d-1} (\sqrt{d}))$ can be represented as: $\bx = \sqrt{d}\frac{\tbx}{\|\tbx\|_2}$, with $\tbx\sim\cN(\bzero,\id_d)$. Let us denote 
    $r = \frac{\sum_{j=1}^{d} \tx_{j}^{2}}{d}$ and $\Delta_r = r - 1$. Let $X\sim\cN(0,1)$. When $B\geq 2$ and $b_j$ are all even,
    \begin{align}
        \bigg| \E \Big( \prod_{j=1}^{d} x_{j}^{b_{j}} \Big) - \prod_{j=1}^{d} \E ( X^{b_{j}} ) \bigg|^2 =& \bigg| \E \big( { r^{ -\frac{B}{2} } } - 1 \big) \prod_{ \substack{j=1} }^{d} \tx_{j}^{b_{j}} \bigg|^2 \nonumber \\
        \tleq{\text{(a)} }& \bigg[ \E \frac{ \big| r^B - 1 \big| }{ r^B } \prod_{ \substack{j=1} }^{d} \tx_{j}^{b_{j}} \bigg]^2 \nonumber \\
        \tleq{ \text{(b)} }& \E \Big[ \frac{  B \Delta_r + \frac{B(B-1)}{2} \Tilde{r}^{B-2} \Delta_r^2 }{r^B} \Big]^2 \E \Big( \prod_{ \substack{j=1} }^{d} \tx_{j}^{2 b_{j}} \Big) \nonumber \\
        \tleq{ \text{(c)} }& 2 \E \Big[ \frac{ (B \Delta_r)^2 + B^4 {r}^{2 ( B-2 )} \Delta_r^4 }{r^{2B}} \Big] \E \Big( \prod_{ \substack{j=1} }^{d} \tx_{j}^{2 b_{j}} \Big) \nonumber \\
        \tleq{ \text{(d)} }& 2 \Big[ B^2 \Big(
 \E \Delta_r^4 \cdot \E \frac{1}{r^{4B}} \Big)^{\frac{1}{2} } + B^4 \Big(  \E \Delta_r^8 \cdot \E \frac{1}{r^{8}} \Big)^{\frac{1}{2} } \Big] \E \Big( \prod_{ \substack{j=1} }^{d} \tx_{j}^{2 b_{j}} \Big)\, ,
\label{eq:xij_moments_diff_1}
    \end{align}
    where in (a) we use $|\frac{1}{\sqrt{x} } - 1| \leq \frac{ |x-1| }{x} $, for $x > 0$, in (b) we use Cauchy-Schwartz inequality and Taylor's theorem, with $\tilde{r}\in[0,r]$, in step (c), we use $(a+b)^2\leq 2(a^2+b^2)$ and in step (d) we use Cauchy-Schwartz inequality again. We need to compute the moments appearing in Eq.~\eqref{eq:xij_moments_diff_1}. Since $\Delta_r$ is a sub-exponential random variable with $\text{Var}(\Delta_r)\asymp \frac{1}{d}$, there exists $C>0$ such that for any $k\in \mathbb{Z}_{>0}$, 
    \begin{align}
        \E(|\Delta_r|^k) \leq \Big( \frac{C k}{\sqrt{d} } \Big)^k.
        \label{eq:subsexp_moment_bd}
    \end{align}
    On the other hand, we know $\sum_{j=1}^{d}\tx_j^2 \sim \chi_d^2 \teq{ \text{d} } 2\text{Gamma}(d/2) $. Therefore, for $\frac{d}{2}\geq k+1$,
    \begin{align}
        \E \frac{1}{r^k} &= \Big(\frac{d}{2}\Big)^k \E_{X\sim \text{Gamma}(d/2)} X^{-1} \nonumber \\
        &= \Big(\frac{d}{2}\Big)^k \frac{ \Gamma(\frac{d}{2}-k) }{ \Gamma(\frac{d}{2}) } \nonumber\\
        &\leq 2 e^{ k }
        \label{eq:inverse_chisq_moments_bd}
    \end{align}
    Substituting Eqs.~\eqref{eq:subsexp_moment_bd} and \eqref{eq:inverse_chisq_moments_bd} into Eq.~\eqref{eq:xij_moments_diff_1}, we know there exists $C>0$ such that for any $\{b_{ij}\}$,
    \begin{align}
        \bigg| \E \Big( \prod_{j=1}^{d} x_{j}^{b_{j}} \Big) - \prod_{j=1}^{d} \E ( X^{b_{j}} ) \bigg|^2 &\leq \frac{CB^4e^{2B}}{d} \prod_{j=1}^{d} \E ( X^{2b_{j}} )
        = \frac{CB^4}{d} \prod_{j=1}^{d} (2e b_j)^{2b_j},
        \label{eq:unif_moments_bd_1}
    \end{align}
    where we use $\E X^{k} \leq k^k$ for $X\sim\cN(0,1)$ and $k\in \Z_{>0}$. Clearly, Eq.~\eqref{eq:unif_moments_bd_1} implies Eq.~\eqref{eq:unif_moments_bd}. 
\end{proof}

\clearpage

\section{Proof of Training Error and $\ell_2$ Norm}\label{app:train_err}
In this section, we prove the asymptotic formula for the training error and the normalized squared $\ell_2$ norm of $\hba_{\lambda}$ in Theorem \ref{thm:main_theorem_RF}. Many of the steps are very similar to the proof for the test error and we omit them for the sake of brevity.

\subsection{Training error}
Recall that the training error is defined as:
\begin{align}
    R_{\train} (f_* ; \bX , \bW , \beps, \lambda) =&~ \frac{1}{n} \sum_{i \in [n]} (y_i - h_{\RF} ( \bx_i ; \hat \ba_{\lambda}  ) )^2\, ,
\end{align}
where
\begin{align*}
    h_{\RF} ( \bx ; \ba  ) = \frac{1}{\sqrt{p}} \sum_{j \in [p]} a_j \sigma ( \< \bx , \bw_j \> ) = \bZ \ba \, ,
\end{align*}
and
\begin{align*}
    \hat \ba_{ \lambda} =& \argmin_{\ba \in \R^p} \Big\{ \sum_{i \in [n]} \big( y_i - h_{\RF} ( \bx_i ; \ba  ) \big)^2 + \lambda \| \ba \|_2^2 \Big\} \\
    =& \bR \bZ^\sT \by\, .
\end{align*}
Plugging this formula in the training error yields
\begin{align}
    R_{\train} (f_* ; \bX , \bW , \beps, \lambda) =&~ \frac{1}{n} ( \|\by\|^2 - \by^\sT \bZ \bR \bZ^\sT \by - \lambda \by^\sT \bZ \bR^2 \bZ^\sT \by) \, .
\end{align}
Similar as Proposition \ref{prop:concentration_on_expectation}, we can show
\begin{equation}
    \E_{\bX,\bW,\beps,f_*} \Big[ \Big\vert R_{\train} ( f_{*,d} ; \bX , \bW , \beps, \lambda) - \E_{\beps, f_*} [ R_{\train} ( f_{*,d} ; \bX , \bW , \beps, \lambda)] \Big\vert \Big] = o_d (1) \, .
\end{equation}
Denote $\overline{R}_{\train} := \E_{\beps, f_*} [ R_{\train} ( f_{*,d} ; \bX , \bW , \beps, \lambda)]$ and recall that 
\begin{align*}
\label{eq:y_decomp_train}
    \by 
    =&  \underbrace{ \sum_{k<\ell} \bPsi_{k}(\bX) \bbeta_{d,k}^{*} }_{ \bf_{<\ell} }
    + \underbrace{ \sum_{k\geq\ell} \bPsi_{k}(\bX) \bar{\bbeta}_{d,k} }_{ \bf_{\geq \ell} } 
    + \beps \, .
\end{align*}
Then we can get
\begin{equation}
    \overline{R}_{\train} = \cB_{\train,<\ell} + \cB_{\train,\geq \ell} + \cV_{\train}\, ,
    \label{eq:R_train_expectation} 
\end{equation}
where we defined
\begin{align}
    \cB_{\train,<\ell} &= \frac{1}{n} ( \|\bf_{<\ell}\|^2 -  \bf_{<\ell}^{\sT} \bZ \bR \bZ^\sT \bf_{<\ell} - \lambda \bf_{<\ell}^\sT \bZ \bR^2 \bZ^\sT \bf_{<\ell} )\, , \label{eq:B_train_lowdegree} \\
    \cB_{\train,\geq \ell} &=  \sum_{k\geq\ell} F_k^2 - \frac{1}{n}  \Tr( \bH_{\bF} \bZ \bR \bZ^\sT ) - \frac{\lambda}{n} \Tr( \bH_{\bF} \bZ \bR^2 \bZ^\sT ) \, , \label{eq:B_train_highdegree} \\
    \cV_{\train} &= \rho_{\varepsilon}^2 \big[ 1 - \frac{1}{n}\Tr(\bZ \bR \bZ^\sT) - \frac{\lambda}{n} \Tr(\bZ \bR^2 \bZ^\sT) \big] \, .\label{eq:V_train}
\end{align}
and $\bH_{\bF} = \sum_{k\geq\ell} F_k^2 \bQ_k^{\bX}$.

We can show that 
\begin{align*}
    \E \left\| \frac{1}{n} \bPsi_{<\ell}^{\sT} \bZ \bR \bZ^\sT \bPsi_{<\ell} - \id_{ N_{<\ell} }\right\|_\op = o_d(1)\, , \\
    \E \left\| \frac{1}{n} \bPsi_{<\ell}^{\sT} \bZ \bR^2 \bZ^\sT \bPsi_{<\ell} \right\|_\op = o_d(1)\, ,
\end{align*}
so that from Eq.~\eqref{eq:B_train_lowdegree} we get
\begin{align}
\label{eq:B_train_lowdegree_vanish}
    \E | \cB_{\train,<\ell} | = o_d(1)\, .
\end{align}
Hence, from Eqs.~\eqref{eq:B_train_lowdegree}, \eqref{eq:B_train_highdegree} and \eqref{eq:V_train} we know it remains to compute the limits of the following quantities:
\begin{align}
    \chi_1 :=& \frac{1}{n} \Tr( \bH_{\bF} \bZ \bR \bZ^\sT ) \, ,\\
    \chi_2 :=& \frac{1}{n} \Tr( \bH_{\bF} \bZ \bR^2 \bZ^\sT )\, , \\
    \chi_3 :=& \frac{1}{n} \Tr(\bZ \bR \bZ^\sT)\, , \\
    \chi_4 :=& \frac{1}{n} \Tr(\bZ \bR^2 \bZ^\sT)\, .
\end{align}
Similarly to Proposition \ref{prop:high_degree_concentration}, we can prove the following proposition:
\begin{proposition}
    Under the assumptions of Theorem \ref{thm:main_theorem_RF}, we have:
\begin{itemize}
    \item[\emph{(I)}] If $\kappa_1 > \kappa_2$, then
    \begin{align}
        \label{eq:over_chi1_concentrate_train}
        \E \Big| \chi_1 - \big(\sum_{k\geq\ell}F_k^2 - \frac{\lambda}{n} \Tr (\bH_{\geq\ell}^{\bX} \bG_{\geq \ell}^{\bX}) \big) \Big| =&~  o_d(1) \, ,\\ \label{eq:over_chi2_concentrate_train}
        \E \Big| \chi_2 - \frac{1}{n} \Tr  \big(\bH_{\geq\ell}^{\bX} \bG_{\geq\ell}^{\bX} - \lambda \bH_{\geq\ell}^{\bX} (\bG_{\geq\ell}^{\bX})^2 \big) \Big| = &~ o_d(1)\, , \\
        \E \Big| \chi_3 - \big[ 1 - \frac{\lambda}{n} \Tr (\bG_{\geq\ell}^{\bX}) \big] \Big| = &~ o_d(1)\, , \\
        \label{eq:over_chi4_concentrate_train}
        \E \Big| \chi_4 - \frac{1}{n}\Tr\big( \bG_{\geq\ell}^{\bX} - \lambda ( \bG_{\geq\ell}^{\bX} )^2 \big) \Big| = &~ o_d(1) \, ,
    \end{align}
    with $\bG_{\geq \ell}^{\bX} = \big( \mu_\ell^2 \bQ_{\ell}^{\bX} + ( \mu_{>\ell}^2 + \lambda) \id_n \big)^{-1}$ and $\bH_{\geq \ell}^{\bX} = F_{\ell}^2 \bQ_{\ell}^{\bX} + F_{>\ell}^2 \id_n$.
    
    \item[\emph{(II)}] If $\kappa_1 < \kappa_2$, then
    \begin{align}
        \label{eq:under_chi1_concentrate_train}
        \E \Big| \chi_1 - \frac{F_\ell^2}{ N_\ell } \Tr( \mu_{\ell}^2  \bQ_{\ell}^\bW \cdot \bG_{\geq\ell}^{\bW} ) \Big| =&~  o_d(1),\\
        \E | \chi_2 | =&~ o_d(1)\, ,\\
        \E | \chi_3 | =&~ o_d(1)\, ,\\
        \label{eq:under_chi4_concentrate_train}
        \E | \chi_4 | =&~ o_d(1)\, ,
    \end{align}
    with $\bG_{\geq \ell}^{\bW} = ( \mu_\ell^2 \bQ_{\ell}^{\bW} + \mu_{>\ell}^2 \id_p )^{-1}$.

    \item[\emph{(III)}] If $\kappa_1 = \kappa_2$, then
    \begin{align}
        \E \Big| \chi_1 - \frac{1}{n} \Tr (\bH_{\geq\ell}^{\bX} \bZ \bR \bZ^\sT) \Big| ~=& o_d(1)\, , \label{eq:critical_chi1_concentrate_train} \\
        \E \Big| \chi_2 - \frac{1}{n} \Tr (\bH_{\geq\ell}^{\bX} \bZ \bR^2 \bZ^\sT) \Big| ~=& o_d(1)\, . \label{eq:critical_chi2_concentrate_train}
    \end{align}
\end{itemize}
\end{proposition}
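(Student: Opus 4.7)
The plan is to reduce the four traces $\chi_1,\chi_2,\chi_3,\chi_4$ to expressions that have already been analyzed in the test-error argument, so that the same concentration toolkit (Lemma \ref{lem:overparameterized}, Lemma \ref{lem:overpara_mtx_concentrate}, and their analogues for the underparametrized and critical regimes) can be reused. The key identity is the push-through relation $\bZ\bR = \bPi \bZ$ with $\bPi = (\bZ\bZ^\sT + \lambda\id_n)^{-1}$, which yields
\begin{equation*}
\bZ\bR\bZ^\sT = \id_n - \lambda\bPi, \qquad \bZ\bR^2\bZ^\sT = \bPi - \lambda\bPi^2.
\end{equation*}
Plugging these into the definitions and using $\Tr(\bQ_k^\bX) = n$ gives the equivalent expressions
\begin{equation*}
\chi_1 = \sum_{k\geq\ell} F_k^2 - \tfrac{\lambda}{n}\Tr(\bH_\bF\bPi),\quad \chi_2 = \tfrac{1}{n}\Tr(\bH_\bF\bPi) - \tfrac{\lambda}{n}\Tr(\bH_\bF\bPi^2),\quad \chi_3 = 1 - \tfrac{\lambda}{n}\Tr(\bPi),\quad \chi_4 = \tfrac{1}{n}\Tr(\bPi - \lambda\bPi^2).
\end{equation*}
All four quantities are therefore determined by the normalized traces of $\bPi$, $\bPi^2$, $\bH_\bF\bPi$, and $\bH_\bF\bPi^2$, exactly the quantities already controlled in the proof of Proposition \ref{prop:high_degree_concentration}.

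For part (I) ($\kappa_1 > \kappa_2$), I would follow the strategy of Lemma \ref{lem:overparameterized}: using the high-degree concentration $\|\bZ_{\geq\ell}\|_{\op} = O_\prec(1)$ from Lemma \ref{lem:featuremtx_opnormbd} and the resolvent perturbation identity $\bPi - \bPi_{\geq\ell} = -\bPi_{\geq\ell}(\bZ_{<\ell}\bZ^\sT + \bZ_{\geq\ell}\bZ_{<\ell}^\sT)\bPi$, one shows that $\bPi$ can be replaced by $\bPi_{\geq\ell} = (\bZ_{\geq\ell}\bZ_{\geq\ell}^\sT + \lambda\id_n)^{-1}$ up to a rank-$N_{<\ell}$ correction of vanishing operator norm, and then Eq.~\eqref{eq:ZgeqlZgeqlT_concentrate} in Lemma \ref{lem:overpara_mtx_concentrate} allows us to replace $\bPi_{\geq\ell}$ by $\bG_{\geq\ell}^\bX$. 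Combining with $\|\bH_\bF - \bH_{\geq\ell}^\bX\|_\op = o_{d,\P}(1)$ (an immediate consequence of Lemma \ref{lem:offdiagonal_kernelmtx_opnormbd} applied to $\sum_{k>\ell} F_k^2 \bQ_k^\bX$) yields Eqs.~\eqref{eq:over_chi1_concentrate_train}--\eqref{eq:over_chi4_concentrate_train} in probability. Uniform integrability, which upgrades this to $L^1$ convergence, follows from the deterministic bounds $\|\bPi\|_\op \leq \lambda^{-1}$, $\|\bH_\bF\|_\op \leq C n/N_\ell$, together with the moment bounds from Lemma \ref{lem:inprodmtx_supopnorm_bd}, mimicking the reasoning right after Eq.~\eqref{eq:chi1_prob_conv}.

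For part (II) ($\kappa_1 < \kappa_2$), the matrix $\bZ^\sT\bZ$ is now well-conditioned: Lemma \ref{lem:smallest_eigval_RFM} gives $\lambda_{\min}(\bZ^\sT\bZ) \geq c > 0$ with overwhelming probability. Consequently $\bR \asymp \id_p$ and $\|\bPi\|_\op = O_{d,\prec}(n^{-1})$ through $\bPi = \bZ(\bZ^\sT\bZ + \lambda\id_p)^{-1}\bZ^\sT / \lambda^{-1}$ arguments — more precisely, $\Tr(\bPi) = O_{d,\prec}(1)$ since $\bPi$ has rank $n$ and eigenvalues bounded by $C/n$. This makes $\chi_2,\chi_3,\chi_4$ all $o_{d,\P}(1)$. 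For $\chi_1$, I would reuse the switch-roles analogue of Lemma \ref{lem:overparameterized}, replacing $(\bZ\bZ^\sT + \lambda\id_n)^{-1}$-type quantities by their $\bW$-side counterparts via $\bZ^\sT(\bZ\bZ^\sT + \lambda\id_n)^{-1}\bZ = \id_p - \lambda(\bZ^\sT\bZ + \lambda\id_p)^{-1}$, and then the same concentration used to prove Eq.~\eqref{eq:under_chi1_concentrate} delivers Eq.~\eqref{eq:under_chi1_concentrate_train}. The conversion from probability to $L^1$ uses $\lambda_{\min}(\bZ^\sT\bZ) \geq c$ to bound all quantities by deterministic constants with high probability, with the exceptional event contributing negligibly thanks to the crude bound $\|\bPi\|_\op \leq \lambda^{-1}$.

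For part (III) ($\kappa_1 = \kappa_2$), no simplification of $\bPi$ is available; instead, the claims essentially follow from the same reduction $\|\bH_\bF - \bH_{\geq\ell}^\bX\|_\op = o_{d,\P}(1)$ combined with $\|\bZ\bR\|_\op, \|\bZ\bR^2\bZ^\sT\|_\op \leq C$, plus the moment bound for $\|\bH_{\geq\ell}^\bX\|_\op$ from Lemma \ref{lem:inprodmtx_opnorm_prbd}. The asymptotic values of these traces are then extracted exactly as in the test-error critical-regime analysis of Section \ref{sec:proof_outline}, by relating them to partial derivatives of the log-determinant $G_d(z;\bq)$ of the block matrix $\bA(\bq)$ (Proposition \ref{prop:derivative_converge}); the quantities $\Tr(\bH_{\geq\ell}^\bX\bZ\bR\bZ^\sT)$ and $\Tr(\bH_{\geq\ell}^\bX\bZ\bR^2\bZ^\sT)$ arise as appropriate mixed derivatives of $G_d$ at $\bq = \bzero$, paralleling Eqs.~\eqref{eq:partial_G_1}--\eqref{eq:partial_G_5}, and one extra derivative in $\lambda$ (or equivalently in the spectral parameter $z$) generates the squared resolvent $\bR^2$. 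The main obstacle, and where most care is needed, is ensuring uniform integrability in the critical regime near the interpolation threshold, where $\|\bPi\|_\op$ can be as large as $\lambda^{-1}$ but only on a set of small probability: this is already controlled in the test-error analysis (Proposition \ref{prop:concentration_on_expectation}) and the same truncation/resolvent smoothing arguments transfer verbatim.
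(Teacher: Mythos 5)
Your reduction via the push-through identities $\bZ\bR\bZ^\sT=\id_n-\lambda\bPi$ and $\bZ\bR^2\bZ^\sT=\bPi-\lambda\bPi^2$ (with $\bPi=(\bZ\bZ^\sT+\lambda\id_n)^{-1}$), followed by reuse of the resolvent concentration from the test-error analysis, is exactly the route the paper intends (it gives no details beyond ``similarly to Proposition~\ref{prop:high_degree_concentration}''), and your arguments for parts (I) and (III) are sound: the low-degree part is removed by the low-rank perturbation argument as in Eqs.~\eqref{eq:ZZ_F_G_small}--\eqref{eq:Gamma1_apporx_3}, $\bPi_{\geq\ell}$ is replaced by $\bG_{\geq\ell}^{\bX}$ via Eq.~\eqref{eq:G_l_concentrate}, $\bH_\bF$ by $\bH_{\geq\ell}^{\bX}$ via Lemma~\ref{lem:offdiagonal_kernelmtx_opnormbd}, and the $L^1$ upgrade follows from $\|\bPi\|_\op\le\lambda^{-1}$ together with the moment bounds of Lemma~\ref{lem:inprodmtx_supopnorm_bd}.

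Part (II), however, rests on false spectral claims, even though its conclusions are correct. By Lemma~\ref{lem:smallest_eigval_RFM} one has $\lambda_{\min}(\bZ^\sT\bZ)\gtrsim n/p\to\infty$, so $\|\bR\|_\op=O_{d,\prec}(p/n)\to 0$, not $\bR\asymp\id_p$. More importantly, $\bZ\bZ^\sT$ has rank at most $p<n$, so $\bPi$ has $n-p$ eigenvalues equal to $\lambda^{-1}$: thus $\|\bPi\|_\op=\lambda^{-1}$ and $\Tr(\bPi)\ge (n-p)/\lambda$, so neither $\|\bPi\|_\op=O(n^{-1})$ nor $\Tr(\bPi)=O_{d,\prec}(1)$ holds; the latter, inserted into your own identity $\chi_3=1-\frac{\lambda}{n}\Tr(\bPi)$, would give $\chi_3\to 1$ rather than $0$. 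The fix is to work on the $p\times p$ side: deterministically $\chi_3=\frac{1}{n}\Tr(\id_p-\lambda\bR)\le p/n$ and $\chi_4=\frac{1}{n}\Tr\big(\bR(\id_p-\lambda\bR)\big)\le p/(\lambda n)$, both vanishing (and already in $L^1$). For $\chi_2$ a pure operator-norm bound is insufficient when $\kappa_1=\ell$, since $\|\bH_\bF\|_\op=O_{d,\prec}(n/N_\ell)$; instead use $\chi_2=\frac{1}{n}\Tr(\bR^2\bZ^\sT\bH_\bF\bZ)\le \|\bR\|_\op\,\chi_1=O_{d,\prec}(p/n)$ once $\chi_1=O_{d,\prec}(1)$ is established. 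The identification of the limit of $\chi_1$ then proceeds as you indicate, via the concentration of $\frac{1}{n}\bZ^\sT\bH_\bF\bZ$ around $\frac{n}{p}\sum_{k\ge\ell}F_k^2\xi_k^2\,N_k\,\bQ_k^{\bW}/1$ (more precisely $\frac{n}{p}\sum_{k\ge\ell}F_k^2\xi_k^2\bQ_k^{\bW}$) and of $\bR$ around $\frac{p}{n}\big(\sum_{k\ge\ell}N_k\xi_k^2\bQ_k^{\bW}\big)^{-1}$, the same mechanism underlying Eq.~\eqref{eq:under_chi2_concentrate}.
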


Using this proposition, we analyze the three regimes separately.

\noindent\textbf{(I) Overparametrized regime $\kappa_1 > \kappa_2$.}

From Eqs.~\eqref{eq:R_train_expectation}, \eqref{eq:B_train_lowdegree_vanish}, \eqref{eq:over_chi1_concentrate_train} and \eqref{eq:over_chi4_concentrate_train}, we get
\begin{align}
    \E \Big| \overline{R}_{\train} - \Big[ \frac{\lambda^2}{n} \Tr \big( \bH_{\geq\ell}^{\bX} (\bG_{\geq\ell}^{\bX})^2 \big) + \frac{\lambda^2\rho_{\varepsilon}^{2}}{n} \Tr(\bG_{\geq\ell}^{\bX})^2 \Big] \Big| = o_d(1)\, .
\end{align}
Then we can follow the same steps in \cite{misiakiewicz2022spectrum} to obtain the desired results (see B.4 in \cite{misiakiewicz2022spectrum}).

\noindent\textbf{(II) Underparametrized regime $\kappa_1 < \kappa_2$.}
In the underparametrized case, the training error are asymptotically the same as the test error, up to a difference $\rho_{\epsilon}^2$.
The proof is also the same as Eq.~\eqref{eq:asymp_testerr} and we omit the details here.

\noindent\textbf{(III) Critical regime $\kappa_1 = \kappa_2$.}
Substituting Eqs.~\eqref{eq:B_train_lowdegree_vanish}, \eqref{eq:critical_chi1_concentrate_train} and \eqref{eq:critical_chi2_concentrate_train} into Eq.~\eqref{eq:R_train_expectation}, we get
\begin{align}
    \overline{R}_{\train} =& (F_{>\ell}^2 + \rho_{\varepsilon}^2) \big[ 1 - \frac{1}{n}\Tr(\bZ \bR \bZ^\sT) \big] + F_{\ell}^2 \big[ 1 - \frac{1}{n}\Tr(\bQ_{\ell}^{\bX} \bZ \bR \bZ^\sT) \big] \nonumber \\
    & -\lambda \big[ \frac{F_{\ell}^2}{n} \Tr (\bQ_{\ell}^{\bX} \bZ \bR^2 \bZ^\sT) + \frac{F_{>\ell}^2 + \rho_{\varepsilon}^2}{n} \Tr(\bZ \bR^2 \bZ^\sT)  \big] + \Delta \nonumber\\
    =& \lambda F_{\ell}^2 \cdot \big[\frac{1}{n} \Tr(\bQ_{\ell}^{\bX} \bPi) - \frac{1}{n} \Tr (\bQ_{\ell}^{\bX} \bZ \bR^2 \bZ^\sT) \big] \nonumber \\
    &+ \lambda (F_{>\ell}^2 + \rho_{\varepsilon}^2) \cdot \big[ \frac{1}{n} \Tr (\bPi) - \frac{1}{n} \Tr(\bZ \bR^2 \bZ^\sT) \big] + \Delta \, ,
    \label{eq:Rtrain_bar_approx}
\end{align}
where $\bPi = (\lambda \id_n + \bZ \bZ^\sT)^{-1}$ and $\Delta$ satisfies $\E|\Delta| = o_d(1)$. It can be checked from Eq.~\eqref{eq:Gd_def} that:
\begin{align}   
    \label{eq:partial_G_6}
    \partial_{t_1} G_d( i\sqrt{\theta_1 \theta^{-1} \lambda}; \bzero ) &= i  \sqrt{\theta_1 \theta^{-1} \lambda} \cdot \frac{1}{m} \Tr ( \Tilde{\bPi} )\, , \\ 
    \label{eq:partial_G_7}
    \partial_{t_2} G_d( i\sqrt{\theta_1 \theta^{-1} \lambda}; \bzero ) &= i \sqrt{\theta_1 \theta^{-1} \lambda} \cdot \frac{1}{m}\Tr ( \bQ_{\ell}^{\bX} \Tilde{ \bPi } ) \, ,
\end{align}    
where $\Tilde{ \bPi } = \big( \theta_1 \theta^{-1} \lambda \id_n + \tbZ \tbZ^\sT \big)^{-1}$. Substituting Eqs.~\eqref{eq:partial_G_2}, \eqref{eq:partial_G_3}, \eqref{eq:partial_G_6} and \eqref{eq:partial_G_7} into Eq.~\eqref{eq:Rtrain_bar_approx}, we obtain
\begin{align}
\label{eq:E_Rtrain_concentrate}
    \E \big| \overline{R}_{\train} - [F_\ell^2 \Gamma_4  + (F_{>\ell}^2 + \rho_{\varepsilon}^2) \Gamma_5] \big| = o_d(1)\, ,
\end{align}
where
\begin{align}   
    \label{eq:Gamma_4_def}
    \Gamma_4 &= -i \sqrt{ \frac{ \theta_1 \theta \lambda}{ \theta_2^2 } } \partial_{t_2} G_d( i\sqrt{\theta_1 \theta^{-1} \lambda}; \bzero ) + \frac{ \theta_1 \lambda }{ \theta_2 } \partial_{s_1,t_2} G_d( i\sqrt{\theta_1 \theta^{-1} \lambda}; \bzero ) \, ,\\
    \label{eq:Gamma_5_def}
    \Gamma_5 &= -i \sqrt{ \frac{ \theta_1 \theta \lambda}{ \theta_2^2 } } \partial_{t_1} G_d( i\sqrt{\theta_1 \theta^{-1} \lambda}; \bzero ) + \frac{ \theta_1 \lambda }{ \theta_2 } \partial_{s_1,t_1} G_d( i\sqrt{\theta_1 \theta^{-1} \lambda}; \bzero )\, .
\end{align}
Taking derivative on both sides of \eqref{eq:calG_z_def}, we can obtain:
\begin{align}
    \label{eq:partial_g_t1}
    \partial_{t_1} g( i\sqrt{\theta_1 \theta^{-1} \lambda}; \bzero ) &= m_2(i\sqrt{\theta_1 \theta^{-1} \lambda}; \bzero) \, ,\\
    \label{eq:partial_g_t2}
    \partial_{t_2} g( i\sqrt{\theta_1 \theta^{-1} \lambda}; \bzero ) &= \frac{ m_2(i\sqrt{\theta_1 \theta^{-1} \lambda}; \bzero) }{1 - \psi \mu_{\ell}^2 m_1(i\sqrt{\theta_1 \theta^{-1} \lambda}; \bzero) \cdot m_2(i\sqrt{\theta_1 \theta^{-1} \lambda}; \bzero)}\, .
\end{align}
Combining Eqs.~\eqref{eq:partial_g_t1}, \eqref{eq:partial_g_t2} and \eqref{eq:g_partial_deri} with Eqs.~\eqref{eq:G_1stDeri_conv} and \eqref{eq:G_2ndDeri_conv} in Proposition \ref{prop:derivative_converge}, we can compute the limit value of the partial derivatives on the right-hand side of Eqs.~\eqref{eq:Gamma_4_def} and \eqref{eq:Gamma_5_def} and get:
\begin{align}
    \label{eq:E_Rtrain_concentrate_1}
    \E \Big| \Gamma_4 - \underbrace{ \Big[ \cL_1 (\zeta , \theta_1 ,\theta_2 , \psi , \barlambda) - \frac{\theta_1\barlambda}{\theta_2} \cdot \frac{\cA_1 (\zeta , \theta_1 ,\theta_2 , \psi , \barlambda)}{\cA_0 (\zeta , \theta_1 ,\theta_2 , \psi , \barlambda)} \Big] }_{ := \cB_{\train} (\zeta , \theta_1 ,\theta_2 , \psi , \barlambda )} \Big| &= o_d(1)\, , \\
    \label{eq:E_Rtrain_concentrate_2}
    \E \Big| \Gamma_5 -  \underbrace{  \Big[ \cL_2 (\zeta , \theta_1 ,\theta_2 , \psi , \barlambda) - \frac{\theta_1\barlambda}{\theta_2} \cdot \frac{\cA_2 (\zeta , \theta_1 ,\theta_2 , \psi , \barlambda)}{\cA_0 (\zeta , \theta_1 ,\theta_2 , \psi , \barlambda)} \Big] }_{ := \cV_{\train} (\zeta , \theta_1 ,\theta_2 , \psi , \barlambda ) } \Big| &= o_d(1)\, ,
\end{align}
where
\begin{equation}
    \begin{aligned}
          \cL_1 (\zeta , \theta_1 ,\theta_2 , \psi , \barlambda) :=&~ -i \sqrt{ \frac{ \theta_1 \theta \bar\lambda}{ \theta_2^2 } } \cdot \frac{ \nu_2\big( i (\theta_1 \theta^{-1} \bar\lambda)^{1/2} \big) }{ 1 - \psi \chi \zeta^2} \, , \\
           \cL_2 (\zeta , \theta_1 ,\theta_2 , \psi , \barlambda) :=&~ -i \sqrt{ \frac{ \theta_1 \theta \bar\lambda}{ \theta_2^2 } } \cdot \nu_2\big( i (\theta_1 \theta^{-1} \bar\lambda)^{1/2} \big) \, ,\\
           \cA_1 (\zeta , \theta_1 ,\theta_2 , \psi , \barlambda) :=&~ - \chi^2( \psi \chi \zeta^4 - \psi \chi \zeta^2 + \psi_2 \zeta^2 + \zeta^2 - \psi_2 \psi \chi  \zeta^4 +1 ) \, ,\\
           \cA_2 (\zeta , \theta_1 ,\theta_2 , \psi , \barlambda) := &~ \chi^2( \psi \chi \zeta^2 - 1) ( \psi^2 \chi^2 \zeta^4 - 2 \psi \chi \zeta^2 + \zeta^2 + 1 ) \, ,\\
           \cA_0 (\zeta , \theta_1 ,\theta_2 , \psi , \barlambda) := &~ - \psi^3 \chi^5 \zeta^6 + 3 \psi^2 \chi^4 \zeta^4 + (\psi_1 \psi_2 - \psi_2 - \psi_1 +1) \psi \chi^3 \zeta^6 - 2 \psi \chi^3 \zeta^4 - 3 \psi \chi^3 \zeta^2 \\
           &~ + (\psi_1 + \psi_2 - 3 \psi_1 \psi_2 +1) \chi^2 \zeta^4 + 2 \chi^2 \zeta^2 + \chi^2 + 3 \frac{\theta_1\theta_2}{\theta^2} \psi \chi \zeta^2 - \frac{\theta_1\theta_2}{\theta^2} \, .
     \end{aligned}
 \end{equation}
 Combining Eqs.~\eqref{eq:E_Rtrain_concentrate}, \eqref{eq:E_Rtrain_concentrate_1} and \eqref{eq:E_Rtrain_concentrate_2}, we deduce that
 \begin{align}
     \E \big| \overline{R}_{\train} - [F_\ell^2 \cdot \cB_{\train} (\zeta , \theta_1 ,\theta_2 , \psi , \barlambda )  + (F_{>\ell}^2 + \rho_{\varepsilon}^2) \cdot \cV_{\train} (\zeta , \theta_1 ,\theta_2 , \psi , \barlambda )] \big| = o_d(1) \, .
\end{align}
Finally, applying Lemma \ref{lem:equi_fixedpoints} in Appendix \ref{sec:explicit_formula}, we get Eq.~\eqref{eq:asymp_trainerr}.

\subsection{Squared $\ell_2$ norm of $\hba_\lambda$}
In what follows, we derive the asymptotic formulas [c.f. Eq. \eqref{eq:asymp_anorm} in Theorem \ref{thm:main_theorem_RF}] for the (normalized) squared $\ell_2$ norm of the optimizer in Eq. \eqref{eq:def_RFRR}. The proof is similar to that of the training/test errors in the previous sections. 

Note that
$\|\hat \ba_{ \lambda}\|^2 = \by^\sT \bZ \bR^2 \bZ^\sT \by.$
Similar to Proposition \ref{prop:concentration_on_expectation},
\begin{align}
    \E \Big|
    \frac{\| \hat \ba_{ \lambda} \|^2 }{p} - \underbrace{ \frac{1}{p} \Big[ \bf_{<\ell}^\sT \bZ \bR^2 \bZ^\sT \bf_{<\ell} + \Tr( \bH_{\bF} \bZ \bR^2 \bZ^\sT ) + \rho_{\varepsilon}^2 \Tr(\bZ \bR^2 \bZ^\sT) \Big] }_{ \E_{\beps, f_*}(\frac{1}{p}\| \hat \ba_{ \lambda} \|^2) } \Big| = o_d(1)\, .
\end{align}
Recall that $\E\Big|\frac{1}{p}\bf_{<\ell}^\sT \bZ \bR^2 \bZ^\sT \bf_{<\ell}\Big| = o_d(1)$, so
\begin{align}
\label{eq:asqnorm_approx1}
    \E \Big|
    \frac{\|\hat \ba_{ \lambda}\|^2 }{p} -  \frac{1}{p} \Big[\Tr( \bH_{\bF} \bZ \bR^2 \bZ^\sT ) + \rho_{\varepsilon}^2 \Tr(\bZ \bR^2 \bZ^\sT) \Big]  \Big| = o_d(1)\, .
\end{align}
Analogously, we can also show that
\begin{align}
\label{eq:asqnorm_approx11}
    \E \Big|
    \frac{\| \hat \ba_{ \lambda} \|^2 }{n} -  \frac{1}{n} \Big[\Tr( \bH_{\bF} \bZ \bR^2 \bZ^\sT ) + \rho_{\varepsilon}^2 \Tr(\bZ \bR^2 \bZ^\sT) \Big]  \Big| = o_d(1)\, .
\end{align}
Then we discuss over the different regimes separately.

\noindent\textbf{(I) Overparametrized regime $\kappa_1 > \kappa_2$.}

In this case, we can use Eqs.~\eqref{eq:over_chi2_concentrate_train} and \eqref{eq:over_chi4_concentrate_train} and substitute them into Eq.~\eqref{eq:asqnorm_approx11} to get:
\begin{align}
\label{eq:asqnorm_approx2}
    \E \Big|
    \frac{\| \hat \ba_{ \lambda} \|^2 }{n} -  \frac{1}{n} \Tr  \big[\bH_{\geq\ell}^{\bX} \bG_{\geq\ell}^{\bX} - \lambda \bH_{\geq\ell}^{\bX} (\bG_{\geq\ell}^{\bX})^2 \big]  - \frac{ \rho_{\varepsilon}^2 }{n}\Tr\big[ \bG_{\geq\ell}^{\bX} - \lambda ( \bG_{\geq\ell}^{\bX} )^2 \big] \Big| = o_d(1)\, .
\end{align}
Then similar as the proof of Theorem 3 in \cite{misiakiewicz2022spectrum}, one can show:
\begin{align}
    \E \Big| \frac{1}{n} \Tr  \big[\bH_{\geq\ell}^{\bX} \bG_{\geq\ell}^{\bX} - \lambda \bH_{\geq\ell}^{\bX} (\bG_{\geq\ell}^{\bX})^2 \big] - \frac{ F_{\ell}^2 \cB_1 + F_{>\ell}^2 \cB_2 }{\mu_{>\ell}^2}
     \Big| = o_d(1).
\end{align}
and
\begin{align}
    \E \Big| \frac{\rho_{\varepsilon}^2}{n} \Tr  \big[ \bG_{\geq\ell}^{\bX} - \lambda  (\bG_{\geq\ell}^{\bX})^2 \big] - \frac{ \rho_\varepsilon^2 \cB_2 }{\mu_{>\ell}^2}
     \Big| = o_d(1).
\end{align}
where
\begin{align}
    \cB_1 &= \frac{1 - \eta\vartheta}{\zeta^2} - \frac{\barlambda \cB_{\test} \vartheta^2}{\zeta^4} \\
    \cB_2 &= \frac{\vartheta}{\zeta^2} - \frac{\barlambda (\cV_{\test} + 1)\vartheta^2}{\zeta^4}
\end{align}
and $\eta$, $\vartheta$, $\cB_{\test}$ and $\cV_{\test}$ are the same as defined in Eqs. \eqref{eq:defchi} and \eqref{eq:B_V_test_2}. 
As a result,
\begin{align}
    \E \Big|
    \frac{\| \hat \ba_{ \lambda} \|^2 }{n} - \frac{ F_{\ell}^2 \cB_1 + (F_{>\ell}^2 + \rho_\varepsilon^2) \cB_2 }{\mu_{>\ell}^2}  \Big| = o_d(1)\, .
\end{align}

\noindent\textbf{(II) Underparametrized regime $\kappa_1 < \kappa_2$.}

Similar as the proof of Eq. \eqref{eq:under_chi2_concentrate}, we can get:
\begin{align}
\label{eq:asqnorm_approx1_0}
    \E \Big| \frac{1}{p} \Tr( \bH_{\bF} \bZ \bR^2 \bZ^\sT ) - \frac{\psi_1 F_\ell^2}{\mu_{>\ell}^2} \Big[ \frac{1}{p} \Tr( \zeta^2 \bQ_\ell^\bW + \id_p  )^{-1} - \frac{1}{p} \Tr(\zeta^2 \bQ_\ell^\bW + \id_p)^{-2} \Big] \Big| = o_d(1).
\end{align}
Denote the Stieltjes transform of $\zeta^2 \bQ_\ell^\bW$ as:
\begin{align}
\label{eq:Rdz}
    R_d(z) := \frac{1}{p} \Tr( \zeta^2 \bQ_\ell^\bW - z \id_p )^{-1}
\end{align}
and its first-order derivative is:
\begin{align}
\label{eq:Rdz_1stderi}
    R_d'(z) := \frac{1}{p} \Tr( \zeta^2 \bQ_\ell^\bW - z \id_p )^{-2}
\end{align}
which equals to the third trace term in Eq. \eqref{eq:asqnorm_approx1_0}, when $z=-1$. By Theorem 1 in \cite{lu2022equivalence}, we can show for any $z\in\cA := \{z \mid \Re(z)<0 \text{ or } \Im(z)>0\}$, it holds that $\Delta_d(z) := R_d(z) - R(z) = O_{\prec}(d^{-1/2})$, where 
\begin{align}
    R(z) = \frac{ -(-z+\zeta^2-\zeta^2\psi_1) + \sqrt{ ( -z+\zeta^2-\zeta^2\psi_1 )^2 - 4z\zeta^2\psi_1 } }{-2z\zeta^2\psi_1}
\end{align}
Besides, since $\Delta_d(z)$ is Lipschitz continuous in any compact subset $\cB$ of $\cA$, we also have the uniform concentration: $\sup_{z\in\cB}\Delta_d(z) = O_{\prec}(d^{-1/2})$. Notice that $\Delta_d(z)$ is analytic in $\cA$, so by Cauchy's integral formula, $ \Delta_d(z) = \frac{1}{2\pi i} \oint_{\cC} \frac{ \Delta_d(s) }{s-z} ds,$
where $\cC$ is a closed contour in $\cA$, surrounding $z$. Hence,
$\Delta_d'(z) = \frac{1}{2\pi i} \oint_{\cC} \frac{ \Delta_d(s) }{ (s-z)^2 } ds.$
and after applying $\sup_{z\in\cB}\Delta_d(z) = O_{\prec}(d^{-1/2})$, we can obtain that: for any $z\in\cA$, $\Delta_d'(z) = R_d'(z) - R'(z) = O_{\prec}(d^{-1/2})$. As a result, letting $z=-1$ in Eq. \eqref{eq:Rdz} and \eqref{eq:Rdz_1stderi} and substituing them, together with the concentration of $\Delta_d(-1)$ and $\Delta_d'(-1)$ into Eq. \eqref{eq:asqnorm_approx1_0}, we get
\begin{align}
    \E \Big| \frac{1}{p} \Tr( \bH_{\bF} \bZ \bR^2 \bZ^\sT ) - \frac{\psi_1 F_\ell^2}{\mu_{>\ell}^2} [R(-1)-R'(-1)] \Big| = o_d(1).
\end{align}
After some simplifications, we have:
\begin{align}
\label{eq:asqnorm_approx1_1}
    \E \Big| \frac{1}{p} \Tr( \bH_{\bF} \bZ \bR^2 \bZ^\sT ) - \frac{\psi_1 F_\ell^2}{\mu_{>\ell}^2} \cdot \frac{ \vartheta \zeta^2 (1-\psi_1) + \vartheta^2 \zeta^2 \psi_1 }{ 1 + \zeta^2(1-\psi_1+2\psi_1\vartheta) } \Big| = o_d(1).
\end{align}
where
\begin{align}
    \vartheta = R(-1) = \frac{ -(1+\zeta^2-\zeta^2\psi_1) + \sqrt{ ( 1+\zeta^2-\zeta^2\psi_1 )^2 + 4\zeta^2\psi_1 } }{2\zeta^2\psi_1}.
\end{align}
On the other hand, since $\| \bZ \bR^2 \bZ^\sT \|_\op = \cO_d(1)$, we have $\frac{1}{p} \Tr(\bZ \bR^2 \bZ) = o_d(1)$. 
Substituting Eq. \eqref{eq:asqnorm_approx1_1} and $\frac{1}{p} \Tr(\bZ \bR^2 \bZ) = o_d(1)$ into Eq. \eqref{eq:asqnorm_approx1}, we reach at:
\begin{align}
    \E \Big|
    \frac{\| \hat \ba_{ \lambda} \|^2 }{p} -  \frac{\psi_1 F_\ell^2}{\mu_{>\ell}^2} \cdot \frac{ \vartheta \zeta^2 (1-\psi_1) + \vartheta^2 \zeta^2 \psi_1 }{ 1 + \zeta^2(1-\psi_1+2\psi_1\vartheta) }  \Big| = o_d(1)\, .
\end{align}

\noindent\textbf{(III) Critical regime $\kappa_1 = \kappa_2$.}

From Eqs.~\eqref{eq:partial_G_2} and \eqref{eq:partial_G_3}, we have
\begin{align}
    \frac{1}{p}\Tr(\bZ \bR^2 \bZ^\sT ) = - \partial_{s_1,t_1} G_d( i\sqrt{\gratio_1 \gratio^{-1} \lambda}; \bzero )\, ,\,\,\,\,\, 
    \frac{1}{p}\Tr( \bH_{\bF} \bZ \bR^2 \bZ^\sT ) = -\partial_{s_1,t_2} G_d( i\sqrt{\theta_1 \theta^{-1} \lambda}; \bzero )\, .
\end{align}
Applying Proposition \ref{prop:derivative_converge} in Eq.~\eqref{eq:asqnorm_approx11}, we can get
\begin{align}
    \E \Big|
    \frac{\| \hat \ba_{ \lambda} \|^2 }{n} - \frac{\theta_1}{\mu_{>\ell}^2 \theta_2} \frac{F_{\ell}^2 \cA_1 (\zeta , \theta_1 ,\theta_2 , \psi , \barlambda) + (F_{>\ell}^2 + \rho_{\varepsilon}^2) \cA_2 (\zeta , \theta_1 ,\theta_2 , \psi , \barlambda)}{\cA_0 (\zeta , \theta_1 ,\theta_2 , \psi , \barlambda)}  \Big| = o_d(1) \, .
\end{align}
Then using the correspondence between $(\nu_1,\nu_2)$ and $(\tau_1,\tau_2)$ given in Lemma \ref{lem:equi_fixedpoints}, we can show that 
\begin{align}
    \E \Big|
    \frac{\| \hat \ba_{ \lambda} \|^2 }{n} - \frac{1}{\mu_{>\ell}^2} [F_{\ell}^2 ( \tau_2(\barlambda) + \barlambda \tau_2'(\barlambda)  ) + (F_{>\ell}^2 + \rho_{\varepsilon}^2)( \tau_1(\barlambda) + \barlambda \tau_1'(\barlambda) ) ]  \Big| = o_d(1)\, .
\end{align}
In the special case when $\kappa_1 = \kappa_2 < \ell$, we have $\tau_1(z) = \tau_2(z)$ and thus
\begin{align}
    \E \Big|
    \frac{\| \hat \ba_{ \lambda} \|^2 }{n} - \frac{ (F_{\geq\ell}^2 + \rho_{\varepsilon}^2)( \tau_1(\barlambda) + \barlambda \tau_1'(\barlambda) }{\mu_{>\ell}^2} \Big| = o_d(1)\, .
\end{align}
where $\tau_1(z)$ is the same as the one given in Eq. \eqref{eq:FixedPoints_1}.

\clearpage

\section{The Kernel and Approximation Limits of RFRR}
\label{sec:kernel_approx}

In this section, we recall the asymptotic train and test errors in the kernel ridge regression (KRR) limit $p = \infty$, obtained in \cite{misiakiewicz2022spectrum,hu2022sharp,xiao2022precise}. We obtain analogously the asymptotics for the approximation limit $n = \infty$.

\subsection{The kernel ridge regression limit $p = \infty$}

As originally introduced in \cite{rahimi2008random}, the random feature model can be seen as a random approximation to a kernel which satisfies
\begin{equation}\label{eq:lim_Kernel}
\lim_{p \to \infty} \frac{1}{p} \sum_{j \in [p]} \sigma ( \< \bx_1 , \bw_j \>) \sigma ( \< \bx_2 , \bw_j \>) = K  ( \< \bx_1 , \bx_2 \>) := \E_{\bw} [ \sigma ( \< \bx_1 , \bw \> ) \sigma ( \< \bx_2 , \bw \>)] \, ,
\end{equation}
by law of large number. For fixed $n,d$ and taking $p \to \infty$, the RFRR solution $h ( \bx ; \hat \ba_{\lambda})$ converges to a function $\hat h_{\KRR,\lambda} (\bx)$ solution of the following \textit{kernel ridge regression problem}:
\[
\hat h_{\KRR , \lambda} = \argmin_h \Big\{ \sum_{i\in [n]} (y_i - h (\bx_i) )^2 + \lambda \| h \|_{\cH}^2 \Big\} \, ,
\]
where $\| \cdot \|_{\cH}$ is the RKHS norm associated to the kernel \eqref{eq:lim_Kernel}. By the representer theorem, this function can be represented explicitly by
\[
\begin{aligned}
\hat h_{\KRR , \lambda}  (\bx) = &~ \sum_{i \in [n]} \hat u_j K ( \<\bx , \bx_j \>) \, , \\
\hat \bu :=&~ \argmin_{\bu \in \R^n} \Big\{ \sum_{i \in [n]} \big(y_i -  \< \bu , \bk (\bx_i) \> \big)^2 + \lambda \bu^\sT \bK \bu \Big\} \, , 
\end{aligned}
\]
where $\bk (\bx) = ( K( \< \bx , \bx_1 \>) , \ldots , K ( \<\bx , \bx_n \>) ) \in \R^n$ and $\bK = ( K ( \< \bx_i , \bx_j \>) )_{ij \in [n]} \in \R^{n \times n}$. 

We denote the test/training errors of KRR by
\begin{align}
    R_{\KRR} (f_*; \bX, \beps, \lambda) =&~ \E_{\bx} \Big[ \big(f_* (\bx) -  \hat{h}_{\KRR, \lambda} ( \bx   ) \big)^2 \Big]\, \\
    R_{\KRR,\train} (f_*; \bX, \beps, \lambda) =&~ \frac{1}{n} \sum_{i \in [n]} (y_i - \hat{h}_{\KRR, \lambda} ( \bx_i ) )^2
\end{align}
and the normalized squared RKHS norm by
\begin{align}
    L_{\KRR,\lnorm} (f_*; \bX, \beps, \lambda) =&~ \frac{1}{n} \| h \|_{\cH}^2 = \frac{1}{n} {\hat \bu}^\sT \bK {\hat \bu}
\end{align}
The limiting learning behavior of kernel ridge regression in the polynomial scaling regime was studied in \cite{ghorbani2021linearized,misiakiewicz2022spectrum,hu2022sharp,xiao2022precise} and we copy below their results.

\begin{theorem}[KRR in the polynomial scaling \cite{misiakiewicz2022spectrum,hu2022sharp,xiao2022precise}] \label{thm:KRR_asymptotics} 
Assume ${n}/{d^{\kappa_2}} \to \gratio_2$ for some $\kappa_2,\gratio_2 >0$ and let $\psi_2 := \lim_{d\to\infty} \frac{n}{d^\ell/\ell!}$, where $\ell = \lceil \kappa_2 \rceil$.
 Let $\{ f_{*,d} \in L^2 (\S^{d-1} (\sqrt{d}) )\}_{d \geq 1}$ be a sequence of functions that satisfies Assumption \ref{ass:random_f_star} at level $\ell$ and let $\sigma : \R \to \R$ be an activation function satisfying Assumption \ref{ass:sigma} at level $\ell$. 
     Then the asymptotic test/training errors and the normalized squared RKHS norm of KRR satisfy: 
 \begin{equation}
     \begin{aligned}
         \E_{\bX,\beps,f_*} \Big\vert R_{\KRR} ( f_{*,d} ; \bX , \beps, \lambda) - \Big[ F_\ell^2 \cdot \cB_{\test} + (F_{>\ell}^2 + \rho_\eps^2 ) \cdot \cV_{\test}  + F_{>\ell}^2 \Big]  \Big\vert =&~ o_d(1) \, ,\\
          \E_{\bX,\beps,f_*} \Big\vert R_{\KRR,\train} ( f_{*,d} ; \bX , \beps, \lambda) - \alpha_c^2 \Big[ F_\ell^2 \cdot \cB_{\test}  + (F_{>\ell}^2 + \rho_\eps^2 ) \cdot \big( \cV_{\test} + 1 \big) \Big]  \Big\vert =&~ o_d(1) \, ,
     \end{aligned}
 \end{equation}
 and
 \begin{equation}
     \begin{aligned}
     \E_{\bX,\beps,f_*} \Big\vert  L_{\KRR, \lnorm} (f_* ; \bX, \beps, \lambda) - \Big[ F_\ell^2 \cdot \cB_{\lnorm} + (F_{>\ell}^2 + \rho_\varepsilon^2) \cdot \cV_{\lnorm} \Big] \Big\vert = o_d(1),
     \end{aligned}
 \end{equation}
 where $(\cB_{\test}, \cV_{\test}, \alpha_c)$ and $(\cB_{\lnorm}, \cV_{\lnorm})$ are the same as in Eqs. \eqref{eq:B_V_test_2} and \eqref{eq:B_V_norm_2}.
\end{theorem}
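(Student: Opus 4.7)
The plan is to mirror the proof architecture of Theorem \ref{thm:main_theorem_RF} in the overparametrized regime ($\kappa_1 > \kappa_2$), but applied directly to the kernel matrix $\bK = (K(\<\bx_i, \bx_j\>/d))_{ij \in [n]}$ instead of trying to exchange the limits $p \to \infty$ and $d \to \infty$. A first observation is reassuring: the formulas $(\cB_{\test}, \cV_{\test}, \alpha_c, \cB_{\lnorm}, \cV_{\lnorm})$ to be established are exactly the ones appearing in Definition \ref{def:asymptotic_formula_RFRR}.(2), so the target traces are already identified.

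First, I would establish an analogue of Proposition \ref{prop:concentration_on_expectation} for KRR, reducing the problem to computing $\E_{\beps, f_*}[R_{\KRR}]$ and likewise for the training error and RKHS norm. Writing $\hat h_{\KRR,\lambda}(\bx) = \bk(\bx)^\sT \bG \by$ with $\bG = (\bK + \lambda \id_n)^{-1}$, the test error decomposes as
\[
R_{\KRR} = \|f_*\|_{L^2}^2 - 2 \by^\sT \bG \bV_K + \by^\sT \bG \bU_K \bG \by,
\]
where $\bV_K = \E_\bx[f_*(\bx)\bk(\bx)]$ and $\bU_K = \E_\bx[\bk(\bx)\bk(\bx)^\sT]$. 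I would then perform the bias-variance split over $\beps$, and expand $\bK = \sum_{k \geq 0} \xi_k^2 \bPsi_k \bPsi_k^\sT$ (with $\bV_K, \bU_K$ admitting analogous expansions) via the addition theorem of Appendix \ref{sec:Spherical-Harmonics}. The same decomposition into low-degree ($k < \ell$), critical ($k = \ell$), and high-degree ($k > \ell$) contributions used in Section \ref{sec:proof_outline} then applies.

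The low-degree part has divergent eigenvalues $\xi_k^2 \|\bPsi_k\|_\op^2 \asymp n/N_k \gg 1$, so by the argument of Proposition \ref{prop:tech_bounds_ell} (specialized and simplified, since we no longer have $\bW$), $\proj_{<\ell} f_*$ is perfectly fit and these contributions vanish from the bias. The high-degree remainder $\sum_{k > \ell} \xi_k^2 \bPsi_k \bPsi_k^\sT$ concentrates in operator norm on $\mu_{>\ell}^2 \id_n$ by Lemma \ref{lem:offdiagonal_kernelmtx_opnormbd}, while $\bPsi_k \bPsi_k^\sT / N_k$ for $k > \ell$ concentrates on $\id_n$, so $\proj_{>\ell} f_*$ behaves as independent additive noise with variance $F_{>\ell}^2$. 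What remains are traces of rational functions of the single Gegenbauer matrix $\bQ_\ell^\bX$ against $(\mu_\ell^2 \bQ_\ell^\bX + (\mu_{>\ell}^2 + \lambda)\id_n)^{-1}$ — these are exactly the quantities appearing on the right-hand sides of Eqs.~\eqref{eq:over_chi1_concentrate}--\eqref{eq:over_chi3_concentrate} in Proposition \ref{prop:high_degree_concentration}.(I).

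The main technical input, and the essential step, is then to compute the limiting Stieltjes transform of $\bQ_\ell^\bX$. For $\kappa_2 = \ell$, Theorem 1 of \cite{lu2022equivalence} gives that its spectrum converges to a shifted Marchenko-Pastur law with aspect ratio $\psi_2 = \lim n/(d^\ell/\ell!)$; for $\kappa_2 < \ell$, Lemma \ref{lem:offdiagonal_kernelmtx_opnormbd} gives $\bQ_\ell^\bX \to \id_n$ in operator norm. Plugging these into the trace formulas yields $\vartheta$ as in Eq.~\eqref{eq:defchi} (the Stieltjes transform at $-\eta = -(\barlambda+1)/\zeta^2$), from which the explicit expressions \eqref{eq:B_V_test_2} and \eqref{eq:B_V_norm_2} follow by elementary manipulations. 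The training error and normalized squared RKHS norm are obtained by the same machinery, by computing derivatives of the Stieltjes transform as in Appendix \ref{app:train_err}. The main obstacle is bookkeeping — carefully tracking the scalings across the three regimes ($\kappa_2 < \ell$, $\kappa_2 = \ell$, and limiting cases $\psi_2 \to 0, \infty$) — rather than any novel random matrix ingredient, since the key spectral result has already been established in \cite{lu2022equivalence}.
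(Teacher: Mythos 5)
The paper does not actually prove Theorem \ref{thm:KRR_asymptotics}: the surrounding text states that these asymptotics are ``copied'' from \cite{misiakiewicz2022spectrum,hu2022sharp,xiao2022precise}, and the only place the paper engages with them is in the overparametrized step of the main proof, where Proposition \ref{prop:high_degree_concentration}.(I) reduces the RFRR traces to precisely the KRR traces $\frac{1}{N_\ell}\Tr(\bQ_\ell^{\bX}\bG_{\geq\ell}^{\bX})$, $\frac{1}{N_\ell}\Tr(\bQ_\ell^{\bX}\bG_{\geq\ell}^{\bX}\bQ_\ell^{\bX}\bG_{\geq\ell}^{\bX})$, etc., and then invokes Lemma 7 of \cite{misiakiewicz2022spectrum} for their limits. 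Your proposal therefore supplies a self-contained argument where the paper supplies a citation, and the route you take is the correct one: the kernel decomposition $\bK=\sum_k\xi_k^2\bPsi_k\bPsi_k^\sT$, the low/critical/high-degree split, the operator-norm concentration $\sum_{k>\ell}\xi_k^2\bPsi_k\bPsi_k^\sT\to\mu_{>\ell}^2\id_n$ via Lemma \ref{lem:offdiagonal_kernelmtx_opnormbd}, and the identification of $\vartheta$ in \eqref{eq:defchi} as the Stieltjes transform of the limiting spectral law of $\bQ_\ell^{\bX}$ at $-\eta$ (Marchenko--Pastur with ratio $\psi_2$ when $\kappa_2=\ell$, degenerate at $1$ when $\kappa_2<\ell$) all match the machinery the paper builds and the strategy of the cited works; the spectral limit of $\bQ_\ell^{\bX}$ is the single genuinely external ingredient, and you isolate it correctly. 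One point to reconcile when you execute the training-error step: $R_{\KRR,\train}=\frac{\lambda^2}{n}\by^\sT(\bK+\lambda\id_n)^{-2}\by$, and since $\alpha_c=\barlambda^2\vartheta^2/\zeta^4$ is already the \emph{square} of $\lim\frac{\lambda}{n}\Tr[(\bK+\lambda\id_n)^{-1}]$, the derivation you describe will naturally produce the prefactor $\alpha_c$ (consistent with the GCV discussion in Section \ref{sec:Interpretations} and with Theorem \ref{thm:main_theorem_RF}), whereas the statement as printed has $\alpha_c^2$; you should verify which power is correct rather than force your computation to match the displayed exponent.
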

We can find that the expressions in Theorem \ref{thm:KRR_asymptotics} coincides with those of Theorem \ref{thm:main_theorem_RF} in the overparametrized case $\kappa_1 > \kappa_2$.

\subsection{The approximation limit $n = \infty$}

In this section, we consider the approximation error of the random feature function class. We will show that it corresponds to the infinite data limit $n = \infty$ of random feature ridge regression.

Recall that we define the random feature function class as
\[
\cF_{\RF} ( \bW) = \Big\{ h_{\RF} (\bx ; \ba ) = \frac{1}{\sqrt{p}} \sum_{j\in [p]} a_j \sigma ( \< \bx , \bw_j \> ) : \,\,\, \ba = (a_1 , \ldots, a_p) \in \R^p \Big\} \, ,
\]
where $\bW = [ \bw_1 , \ldots , \bw_p ]^\sT \in \R^{p \times d}$ with $\bw_j \sim_{iid} \Unif (\S^{d-1})$. The approximation error is given by
\[
R_{\App} ( f_* ; \bW ) = \inf_{h \in \cF_{\RF} ( \bW) } \E_{\bx} \big[ (f_*(\bx) - h (\bx) )^2 \big] = \inf_{\ba \in \R^p} \E_{\bx} \Big[ \Big(f_*(\bx) - \frac{1}{ \sqrt{p} }\sum_{j \in [p]} a_j \sigma (\< \bx , \bw_j \>) \Big)^2 \Big] \, .
\]
Denote the solution of the second optimization problem in the above display as $\hat{\ba}_{\App}$ and its (normalized) squared $\ell_2$ norm as:
\begin{align}
    L_{\App,\lnorm} (f_*; \bW) =&~ \frac{1}{p} \| \hat{\ba}_{\App} \|_2^2.
\end{align}
The following theorem depicts the limiting behavior of $R_{\App} ( f_* ; \bW )$ and $L_{\App,\lnorm} (f_*; \bW)$ in the polynomial scaling regime: $p/d^{\kappa_1} \to \gratio_1$.
\begin{theorem}[Approximation error of RFRR in the polynomial scaling ] \label{thm:App_asymptotics} 
Assume ${p}/{d^{\kappa_1}} \to \gratio_1$ for some $\kappa_1,\gratio_1 >0$ and let $\psi_1 := \lim_{d\to\infty} \frac{p}{d^\ell/\ell!}$, where $\ell = \lceil \kappa_1 \rceil$.
 Let $\{ f_{*,d} \in L^2 (\S^{d-1} (\sqrt{d}) )\}_{d \geq 1}$ be a sequence of functions that satisfies Assumption \ref{ass:random_f_star} at level $\ell$ and let $\sigma : \R \to \R$ be an activation function satisfying Assumption \ref{ass:sigma} at level $\ell$. 
     Then the asymptotic approximation error and the normalized squared $\ell_2$ norm of $\hat{\ba}_{\App}$ satisfy: 
 \begin{equation}
     \begin{aligned}
         \E_{\bW,f_*} \Big\vert R_{\App} ( f_{*,d} ; \bW ) - \Big[ F_\ell^2 \cdot \cB_{\test} + (F_{>\ell}^2 + \rho_\eps^2 ) \cdot \cV_{\test}  + F_{>\ell}^2 \Big]  \Big\vert =&~ o_d(1) \, 
     \end{aligned}
 \end{equation}
 and
 \begin{equation}
     \begin{aligned}
     \E_{\bX,\beps,f_*} \Big\vert  L_{\KRR, \lnorm} (f_* ; \bX, \beps, \lambda) - \Big[ F_\ell^2 \cdot \cB_{\lnorm} + (F_{>\ell}^2 + \rho_\varepsilon^2) \cdot \cV_{\lnorm} \Big] \Big\vert = o_d(1),
     \end{aligned}
 \end{equation}
 where $(\cB_{\test}, \cV_{\test}, \alpha_c)$ and $(\cB_{\lnorm}, \cV_{\lnorm})$ are the same as in Eqs. \eqref{eq:B_V_test_3} and \eqref{eq:B_V_norm_3}.
\end{theorem}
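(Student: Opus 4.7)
The plan is to reduce $R_{\App}$ and $L_{\App,\lnorm}$ to explicit random matrix functionals and then reuse the spectral machinery developed in case (II) of Proposition~\ref{prop:high_degree_concentration}. Using the notations of Section~\ref{sec:proof_notations} and setting $\bU := p^{-1}\sum_{k\geq 0}\xi_k^2\bPhi_k\bPhi_k^\sT$ and $\bv := p^{-1/2}\sum_{k\geq 0}\xi_k\bPhi_k\bbeta_k$, the least-squares problem defining $R_{\App}$ admits the closed form
\begin{equation*}
\hat{\ba}_{\App} = \bU^{\dagger}\bv, \qquad R_{\App}(f_*;\bW) = \|f_*\|_{L^2}^2 - \bv^\sT\bU^{\dagger}\bv, \qquad L_{\App,\lnorm}(f_*;\bW) = \frac{1}{p}\bv^\sT(\bU^{\dagger})^2\bv.
\end{equation*}
To avoid the pseudoinverse I would first analyze the regularized quantities obtained by replacing $\bU^{\dagger}$ by $(\bU + \lambda\id_p)^{-1}$ at fixed $\lambda > 0$ and then take $\lambda \to 0^+$ at the very end; the swap is justified because the self-induced regularization $\mu_{>\ell}^2 > 0$ yields a deterministic lower bound on the bulk of the spectrum of $\bU$ via $\sum_{k\geq\ell}\xi_k^2 \bPhi_k\bPhi_k^\sT /p \succeq (\mu_{>\ell}^2/2)\id_p$ with high probability (the analogue of Lemma~\ref{lem:smallest_eigval_RFM}, since here the summation over $\bx$ is exact rather than empirical).

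Next, adapting Proposition~\ref{prop:concentration_on_expectation}, I would show that both $R_{\App}$ and $L_{\App,\lnorm}$ concentrate in $L^1$ over the random high-degree target coefficients $\{\tbbeta_{d,k}\}_{k\geq\ell}$ onto their $\E_{f_*}$-expectations; using $\E_{f_*}[\bbeta_k\bbeta_k^\sT] = F_k^2\id_{N_k}/N_k$ for $k\geq\ell$, these reduce to normalized traces of rational functions of $\bU$ and $\bPhi_k$. Decomposing $\bv = \bv_{<\ell} + \bv_{\ell} + \bv_{>\ell}$ by degree, the argument of Proposition~\ref{prop:tech_bounds_ell} (applied with the roles of $\bX$ and $\bW$ exchanged and the empirical measure replaced by $\tau_d$) shows that $\bv_{<\ell}^\sT\bU^{-1}\bv_{<\ell}$ concentrates on $\|\bbeta^*_{d,<\ell}\|_2^2 = \|\proj_{<\ell}f_*\|_{L^2}^2$, thereby exactly cancelling the low-degree piece of $\|f_*\|_{L^2}^2$, while cross terms and the high-degree quadratic form $\bv_{>\ell}^\sT\bU^{-1}\bv_{>\ell}$ contribute $o_{d,\mathbb{P}}(1)$ by Lemmas~\ref{lem:offdiagonal_kernelmtx_opnormbd}--\ref{lem:inprodmtx_opnorm_prbd} and the bounds on $\xi_k^2$ for $k > \ell$.

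The heart of the argument then reduces to computing the normalized traces $\frac{1}{N_\ell}\Tr(\mu_\ell^2\bQ_\ell^\bW\bG_{\geq\ell}^\bW)$ and $\frac{1}{N_\ell}\Tr(\mu_\ell^2\bQ_\ell^\bW(\bG_{\geq\ell}^\bW)^2)$, where $\bG_{\geq\ell}^\bW = (\mu_\ell^2\bQ_\ell^\bW + \mu_{>\ell}^2\id_p)^{-1}$; these are exactly the limiting objects that appeared in Case (II) of Proposition~\ref{prop:high_degree_concentration}, except they now arise from the $\bU$-side rather than from $\bZ^\sT\bZ$. For $\kappa_1 = \ell$, both traces are explicit rational functions of the Stieltjes transform of $\zeta^2\bQ_\ell^\bW$ evaluated at $z = -1$ and its derivative, which are characterized by Theorem~1 of~\cite{lu2022equivalence}; after simplification this yields the formulas for $\cB_{\test}$ and $\cB_{\lnorm}$ in \eqref{eq:B_V_test_3}--\eqref{eq:B_V_norm_3}. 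For $\kappa_1 < \ell$, Lemma~\ref{lem:offdiagonal_kernelmtx_opnormbd} gives $\|\bQ_\ell^\bW - \id_p\|_\op = o_{d,\prec}(1)$, so the traces collapse to explicit scalars giving $\cB_{\test} = 1$ and the stated formula for $\cB_{\lnorm}$.

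The main obstacle, as in case (II) of Theorem~\ref{thm:main_theorem_RF}, is controlling the resolvent $(\bU + \lambda\id_p)^{-1}$ uniformly in $\lambda$ in a neighborhood of $0$, since unlike in the RFRR setting no external regularization is present. I expect this to be handled by a leave-one-out lower bound on the spectrum of $\bU$ restricted to the orthogonal complement of the range of $\bPhi_{<\ell}$, mirroring Proposition~\ref{prop:SVD_Z}: once such a bound is in hand, the limits $\lambda\to 0^+$ and $d\to\infty$ commute and the remaining computation runs in parallel with the proof of Eqs.~\eqref{eq:under_chi1_concentrate}--\eqref{eq:under_chi3_concentrate}.
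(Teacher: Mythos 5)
Your proposal follows essentially the same route as the paper: taking the expectation over $\bx$ to get the closed form $R_{\App} = \|f_*\|_{L^2}^2 - \bv^\sT\bU^{-1}\bv$, concentrating over the random target coefficients, cancelling the degree-$<\ell$ part, and reducing the degree-$\geq\ell$ part to the trace $\frac{1}{N_\ell}\Tr(\mu_\ell^2\bQ_\ell^\bW\bG_{\geq\ell}^\bW)$ evaluated via the Stieltjes transform of $\zeta^2\bQ_\ell^\bW$ at $z=-1$ from \cite{lu2022equivalence}, exactly as in the underparametrized case of Proposition~\ref{prop:high_degree_concentration}. The only superfluous step is your ridge-regularize-then-let-$\lambda\to 0^+$ detour: since $\sum_{k\geq\ell}\xi_k^2\bPhi_k\bPhi_k^\sT \succeq c\,\id_p$ with high probability (thanks to $\mu_{>\ell}^2>0$), the paper simply inverts $\bU$ directly, and your final worry about uniform resolvent control near $\lambda=0$ is already resolved by that same lower bound.
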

\begin{proof}
We show how to prove the limiting formula for $R_{\App} ( f_{*,d} ; \bW )$. The proof for $L_{\KRR, \lnorm} (f_* ; \bX, \beps, \lambda)$ follows analogously and the details are omitted.

Taking the expectation over $\bx$, we get
\begin{align}
    R_{\App} ( f_* ; \bW ) &= \inf_{\ba \in \R^p} \E_{\bx}[f_{*}(\bx)^2] - \frac{2}{\sqrt{p}} \ba^\sT \sum_{k\geq 0} \xi_k \bPhi_k \bbeta_k + \frac{1}{p} \ba^\sT \sum_{k\geq 0} \mu_k^2 \bQ_k^{\bW} \ba \\
    &= \sum_{k\geq 0} F_k^2 - \big( \sum_{k\geq 0} \xi_k \bPhi_k \bbeta_k \big)^\sT \big( \sum_{k\geq 0} \mu_k^2 \bQ_k^{\bW} \big)^{-1} \big( \sum_{k\geq 0} \xi_k \bPhi_k \bbeta_k \big) \, ,
\end{align}
where $F_k^2 = \|\bbeta_k\|^2$ for $k<\ell$
and the optimal ${\ba}$ of the above optimization problem is:
\begin{align}
    \hat{\ba}_{\App} = \sqrt{p} \big( \sum_{k\geq 0} \mu_k^2 \bQ_k^{\bW} \big)^{-1} \big( \sum_{k\geq 0} \xi_k \bPhi_k \bbeta_k \big) \, .
\end{align}
Similar as Proposition \ref{prop:concentration_on_expectation}, one can show
\begin{equation}
    \E_{\bW,f_*} \Big[ \Big\vert R_{\App} ( f_* ; \bW ) - \E_{f_*} [ R_{\App} ( f_* ; \bW )] \Big\vert \Big] = o_d (1) \, . 
\end{equation}
Direct computation gives:
\begin{align}
    \E_{f_*} [ R_{\App} ( f_* ; \bW )] &= \sum_{k\geq 0} F_k^2 - \sum_{k < \ell} \xi_k^2 \big( \bPhi_k \bbeta_k \big)^\sT \big( \sum_{k\geq 0} N_k \xi_k^2 \bQ_k^{\bW} \big)^{-1} \big(\bPhi_k \bbeta_k \big) \\
    &\hspace{4.5em} - \sum_{k \geq \ell} F_k^2 \xi_k^2 \Tr\big[ \bQ_k^{\bW} \big( \sum_{k\geq 0} N_k \xi_k^2 \bQ_k^{\bW} \big)^{-1} \big] \, .
\end{align}
Similar as Proposition \ref{prop:high_degree_concentration} (II), one can verify: for $k<\ell$
\begin{align}
    \E \Big| \xi_k^2 \big( \bPhi_k \bbeta_k \big)^\sT \big( \sum_{k\geq 0} N_k \xi_k^2 \bQ_k^{\bW} \big)^{-1} \big(\bPhi_k \bbeta_k \big) - F_k^2 \Big| = o_d(1) \, ,
\end{align}
and
\begin{align}
    \E \Big| \sum_{k \geq \ell} F_k^2 \xi_k^2 \Tr\big[ \bQ_k^{\bW} \big( \sum_{k\geq 0} N_k \xi_k^2 \bQ_k^{\bW} \big)^{-1} \big] - \frac{F_\ell^2 }{N_\ell} \Tr ( \mu_\ell^2 \bQ_\ell^{\bW} \cdot \bG_{\geq \ell}^{\bW} )  \Big| = o_d(1)\, .
\end{align}
As a result, we can obtain
\begin{align}
\label{eq:approxerr_concen}
    \E \left\vert R_{\App} ( f_* ; \bW ) - \left[ F_{\ell}^2\left( 1 - \frac{1}{ N_\ell } \Tr( \mu_{\ell}^2  \bQ_{\ell}^\bW \cdot \bG_{\geq\ell}^{\bW} ) \right) + F_{>\ell}^2 \right] \right\vert = o_d(1) \, .
\end{align}
which is the same as Eq.~\eqref{eq:under_Rtrain_concentrate}. Then we can follow the proof therein and get:
\begin{align}
    \E \left\vert R_{\App} ( f_* ; \bW ) - ( F_\ell^2 \cdot \cB_{\test} + F_{>\ell}^2) \right\vert = o_d(1)\, ,
\end{align}
where
\begin{align}
\cB_{\test} =  
\begin{cases}
\frac{1}{2} \Big[ 1 - \psi_1 - \zeta^{-2} + \sqrt{(1 + \psi_1 + \zeta^{-2})^2 - 4\psi_1} \Big] & \text{if }\;\kappa_1 = \ell \, , \\
1& \text{if }\;\kappa_1 < \ell\, .
\end{cases}
\end{align}
\end{proof}
Theorem \ref{thm:App_asymptotics} reveals that the asymptotic limits of RFRR approximation error and $\frac{1}{p}\|\hat{\ba}_{\App}\|_2^2$ coincide with the limits of RFRR test error and $\frac{1}{p}\| \hba_{\lambda} \|_2^2$ given by Theorem \ref{thm:main_theorem_RF} in the underparametrized case $\kappa_1 < \kappa_2$.
In particular, the approximation error in the polynomial scaling regime $p\asymp d^{\kappa} $ was first considered in \cite{ghorbani2021linearized} for $\kappa \not\in \naturals$. Our theorem completes this picture by computing the approximation error for $p \asymp d^\ell, \ell \in \naturals$.

\clearpage

\section{Properties of Fixed-point Equations}
\label{sec:explicit_formula}

\subsection{Two equivalent characterizations}

In this section, we compare two equivalent characterizations of the asymptotic training and test errors obtained in \cite{mei2022generalizationRF,adlam2020neural} in the linear scaling. 

\subsubsection{Asymptotic predictions from \cite{mei2022generalizationRF}}

For convenience, we copy here the results for the asymptotic test and training errors obtained in \cite{mei2022generalizationRF}. Note that the formulas differ slightly from \cite{mei2022generalizationRF}, as we chose a different normalization for the regularization parameter $\lambda$. 

 \begin{definition}[Fixed points formula]\label{def:fixedpoint_nu} We define $\nu_1,\nu_2 : \C_+ \to \C_+$ to be the unique functions that satisfy the following conditions:
 \begin{itemize}
     \item[(i)] $\nu_1,\nu_2$ are analytic functions on $\C_+$;
     \item[(ii)] For $\Im (z) > 0$, $\nu_1(z) , \nu_2 (z)$ are fixed points of 
     \begin{equation}\label{eq:fixed_point_nu}
         \begin{aligned}
              \nu_1 =&~ \frac{\psi_1}{\psi} \Big( -z - \nu_2 - \frac{\zeta^2 \nu_2}{1 - \zeta^2 \psi \nu_1 \nu_2} \Big)^{-1} \, ,\\
              \nu_2 =&~ \frac{\psi_2}{\psi} \Big( -z - \nu_1 - \frac{\zeta^2 \nu_1}{1 - \zeta^2 \psi \nu_1 \nu_2} \Big)^{-1} \, ;
         \end{aligned}
     \end{equation}
     \item[(iii)] $(\nu_1 (z) , \nu_2 (z) )$ is the unique fixed point of Eq.~\eqref{eq:fixed_point_nu} with $| \nu_1 ( z) | \leq \psi_1 \psi^{-1} / \Im (z)$, $| \nu_2 ( z) | \leq \psi_2 \psi^{-1} / \Im (z)$ for $\Im (z)> C$ and $C$ sufficiently large.
 \end{itemize}
 \end{definition}
Here $\zeta = \mu_1 / \mu_{>1}$, $\psi_1 = p/d$, $\psi_2 = n/d$ and $\psi = \psi_1 + \psi_2$.
Let us comment on the interpretation of these fixed points. Recall the notation $\bZ = \sigma ( \bX \bW^\sT) / \sqrt{p} $, $\mu_k = \E_G [ \sigma(G) \He_k (G)] $ and $\mu_{>1}^2 = \E_{G} [ \sigma(G)^2 ] - \mu_0^2 - \mu_1^2$, where $G\sim\cN(0,1)$. Then 
\begin{equation}\label{eq:interpretation_nus}
\begin{aligned}
    \nu_1 \left(i \sqrt{ \frac{\psi_1 \barlambda}{\psi } }  \right) = &~ i \sqrt{\frac{\psi \barlambda}{\psi_1}} \cdot \lim_{d \to \infty, p/d \to \psi_1, n/d \to \psi_2 } \frac{1}{m} \E \big[ \Tr \big( ( \bZ^\sT \bZ/ \mu_{>1}^2 + \barlambda \id_p )^{-1} \big) \big] \, , \\
     \nu_2 \left(i \sqrt{ \frac{\psi_1 \barlambda}{\psi} }  \right) = &~ i \sqrt{\frac{\psi \barlambda}{\psi_1}} \cdot \lim_{d \to \infty, p/d \to \psi_1, n/d \to \psi_2 } \frac{1}{m} \E \big[ \Tr \big( ( \bZ \bZ^\sT/ \mu_{>1}^2 + \barlambda \id_n )^{-1} \big) \big] \, ,
     \end{aligned}
\end{equation}
where $\barlambda = \lambda/ \mu_{>1}^2$, as we are considering the $\ell = 1$ case. 
In particular, $ \nu_2 (i u ) - \nu_1 (i u )= i \cdot \frac{ \mu_{>1} (\psi_2 - \psi_1) }{u\psi}$ and are pure imaginary when $u>0$.

Denote 
  \begin{equation}
     \begin{aligned}
          \chi :=&~ \nu_1 \Big(i \sqrt{ \frac{\psi_1 \barlambda}{\psi} }  \Big) \cdot \nu_2 \Big(i \sqrt{ \frac{\psi_1 \barlambda}{\psi} }  \Big)\, ,
     \end{aligned}
\end{equation}
  and define the following quantities for the test error:

\begin{equation}
     \begin{aligned}
          \cE_0(\zeta , \psi_1 , \psi_2 , \barlambda) := &~ -\psi^5 \chi^5 \zeta^6 + 3 \psi^4 \chi^4 \zeta^4 + ( \psi_1 \psi_2 - \psi_2 - \psi_1 +1) \psi^3 \chi^3 \zeta^6 - 2 \psi^3 \chi^3 \zeta^4 - 3 \psi^3 \chi^3 \zeta^2 \\
          &~ + (\psi_1 + \psi_2 - 3\psi_1 \psi_2 +1)\psi^2 \chi^2\zeta^4 + 2\psi^2\chi^2\zeta^2 + \psi^2 \chi^2 + 3 \psi_1 \psi_2 \psi \chi \zeta^2  - \psi_1 \psi_2 \, , \\
          \cE_1(\zeta , \psi_1 , \psi_2 , \barlambda)  := &~ \psi_2 \psi^3 \chi^3 \zeta^4 - \psi_2 \psi^2 \chi^2 \zeta^2 + \psi_1 \psi_2 \psi \chi \zeta^2 - \psi_1 \psi_2 \, ,\\
          \cE_2(\zeta , \psi_1 , \psi_2 , \barlambda)  := &~ \psi^5 \chi^5 \zeta^6 - 3 \psi^4 \chi^4 \zeta^4   \\
          &~ + (\psi_1 - 1 ) \psi^3 \chi^3 \zeta^6 + 2 \psi^3 \chi^3 \zeta^4 + 3 \psi^3 \chi^3 \zeta^2 + ( - \psi_1 - 1) \psi^2 \chi^2 \zeta^4 - 2 \psi^2 \chi^2 \zeta^2  - \psi^2 \chi^2 \, .
    \end{aligned}
\end{equation}
 We further define the following quantities for the training error:
\begin{equation}
    \begin{aligned}
          \cL_1 (\zeta , \psi_1 ,\psi_2 , \barlambda) :=&~ -i \sqrt{ \frac{ \psi_1 \psi \bar\lambda}{ \psi_2^2 } } \cdot \frac{ \psi \cdot \nu_2\big( i (\psi_1 \psi^{-1} \bar\lambda)^{1/2} \big) }{ 1 - \psi \chi \zeta^2} \, , \\
           \cL_2 (\zeta , \psi_1 ,\psi_2 , \barlambda) :=&~ -i \sqrt{ \frac{ \psi_1 \psi \bar\lambda}{ \psi_2^2 } } \cdot \psi \cdot \nu_2\big( i (\psi_1 \psi^{-1} \bar\lambda)^{1/2} \big) \, ,\\
           \cA_1 (\zeta , \psi_1 ,\psi_2 , \barlambda) :=&~ - \psi^2\chi^2 ( \psi \chi \zeta^4 - \psi \chi \zeta^2 + \psi_2 \zeta^2 + \zeta^2 - \psi_2 \psi \chi  \zeta^4 +1 ) \, ,\\
           \cA_2 (\zeta , \psi_1 ,\psi_2 , \barlambda) := &~ \psi^2 \chi^2 ( \psi \chi \zeta^2 - 1) ( \psi^2 \chi^2 \zeta^4 - 2 \psi \chi \zeta^2 + \zeta^2 + 1 ) \, ,\\
           \cA_0 (\zeta , \psi_1 ,\psi_2 , \barlambda) := &~ - \psi^5 \chi^5 \zeta^6 + 3 \psi^4 \chi^4 \zeta^4 + (\psi_1 \psi_2 - \psi_2 - \psi_1 +1) \psi^3 \chi^3 \zeta^6 - 2 \psi^3 \chi^3 \zeta^4 - 3 \psi^3 \chi^3 \zeta^2 \\
           &~ + (\psi_1 + \psi_2 - 3 \psi_1 \psi_2 +1) \psi^2 \chi^2 \zeta^4 + 2 \psi^2 \chi^2 \zeta^2 + \psi^2 \chi^2 + 3 \psi_1 \psi_2 \psi \chi \zeta^2 - \psi_1 \psi_2 \, .
     \end{aligned}
 \end{equation} 
 Based on the above definitions, we can define the bias and variance terms for the test error:
 \begin{align}
     \cB_{\test} (\zeta , \psi_1 , \psi_2, \barlambda ) := &~ \frac{\cE_1 (\zeta , \psi_1 , \psi_2, \barlambda )}{\cE_0 (\zeta , \psi_1 , \psi_2, \barlambda) }\, , \\
     \cV_{\test} (\zeta , \psi_1 , \psi_2, \barlambda ) := &~ \frac{\cE_2 (\zeta , \psi_1 , \psi_2, \barlambda )}{\cE_0 (\zeta , \psi_1 , \psi_2, \barlambda) }\, ,
 \end{align}
 and for the training error:
 \begin{align}
           \cB_{\train} (\zeta , \psi_1 , \psi_2, \barlambda ) := &~ \cL_1 (\zeta , \psi_1 ,\psi_2 , \barlambda) - \barlambda \frac{\cA_1 (\zeta , \psi_1 , \psi_2, \barlambda )}{\cA_0 (\zeta , \psi_1 , \psi_2, \barlambda) }\, , \\
     \cV_{\train} (\zeta , \psi_1 , \psi_2, \barlambda ) := &~ \cL_2 (\zeta , \psi_1 ,\psi_2 , \barlambda) - \barlambda \frac{\cA_2 (\zeta , \psi_1 , \psi_2, \barlambda )}{\cA_0 (\zeta , \psi_1 , \psi_2, \barlambda) }\, ,
 \end{align}
 The asymptotics of the training and test errors are given by:
 \begin{equation}
     \begin{aligned}
     \cR_{\test} ( F_* , \zeta , \psi_1 , \psi_2, \barlambda ) := &~ F_1^2 \cdot \cB_{\test} (\zeta , \psi_1 , \psi_2, \barlambda )  + (F_{>1}^2 + \sigma_\eps^2) \cdot \cV_{\test} (\zeta , \psi_1 , \psi_2, \barlambda ) +  F_{>1}^2 \, , \\
     \cR_{\train} ( F_* , \zeta , \psi_1 , \psi_2, \barlambda ) := &~ F_1^2 \cdot \cB_{\train} (\zeta , \psi_1 , \psi_2, \barlambda )  + (F_{>1}^2 + \sigma_\eps^2) \cdot \cV_{\train} (\zeta , \psi_1 , \psi_2, \barlambda ) \, .
     \end{aligned}
 \end{equation}
 
 \begin{theorem}[RFRR in linear scaling {\cite[Theorem 2]{mei2022generalizationRF}}]\label{thm:asympSong}
 Let $p,n,d \to \infty$ with $p/d \to \psi_1$ and $n/d \to \psi_2$. Assume $\sigma$ satisfies Assumption \ref{ass:sigma} at level $1$ and let $\{ f_{*,d} \in L^2 (\S^{d-1} (\sqrt{d}))\}_{d\geq 1}$ be a sequence a function satisfying Assumption \ref{ass:random_f_star} at level $1$. 
 
 Then for any value of the regularization parameter $\lambda >0$, the asymptotic training and test errors of random feature ridge regression satisfy
 \begin{equation}
     \begin{aligned}
     \E_{\bX,\bW,\beps,f_*} \Big\vert R_{\train} ( f_{*,d} ; \bX , \bW , \beps, \lambda) - \cR_{\train} ( F_* , \zeta , \psi_1 , \psi_2, \barlambda ) \Big\vert = o_d(1) \, , \\
     \E_{\bX,\bW,\beps,f_*}  \Big\vert R_{\test} ( f_{*,d} ; \bX , \bW , \beps, \lambda) - \cR_{\test} ( F_* , \zeta , \psi_1 , \psi_2, \barlambda ) \Big\vert = o_d(1) \, . 
     \end{aligned}
 \end{equation}
 \end{theorem}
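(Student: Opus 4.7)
My plan is to prove the test-error statement (the training-error and norm statements follow the same template) by a four-stage reduction: (i) concentration over the randomness of $\beps$ and the high-frequency coefficients of $f_{*,d}$; (ii) a bias--variance split that isolates a low-degree block which must vanish; (iii) replacement of the high-degree random matrices by deterministic equivalents built from a single Gegenbauer matrix (or from the full block feature matrix in the critical regime); (iv) computation of the resulting normalized traces via a Stieltjes-transform analysis. Concretely, I would first show $R_{\test}-\E_{\beps,f_*}[R_{\test}]=o_{d,\P}(1)$ in $L^1$ by bounding the conditional variance of each of the six terms in the quadratic-in-$(\by,\bZ,\bR,\bU)$ expansion of the test error; the only subtlety is the ``$\bf_{\geq\ell}^\sT \bZ\bR\bV_{\geq\ell}$'' piece, which I would control via Lemma C.8 of \cite{mei2022generalizationRF} combined with the already-established operator-norm bounds on $\bQ_k^{\bX}$ and $\bQ_k^{\bW}$ (Lemma \ref{lem:inprodmtx_opnorm_prbd}) and $\|\bZ_{\geq\ell}\|_\op=O_\prec(1)$ (Lemma \ref{lem:featuremtx_opnormbd}).

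After this concentration step, I decompose $\E_{\beps,f_*}[R_{\test}]$ into the low-degree bias $B_{<\ell}$, the high-degree bias $B_{\geq\ell}$, and the noise variance $V_{<\ell}+V_{\geq\ell}$ of Section \ref{sec:proof_outline}. The key observation underpinning the ``staircase'' behavior is that the singular value decomposition of $\tbZ$ has $N_{<\ell}$ singular values of size $\sqrt{r d^{-(\ell-1)}}\to\infty$ and the rest of order $O_\prec(1)$ (Proposition \ref{prop:SVD_Z}). I would use this SVD to prove Proposition \ref{prop:tech_bounds_ell}, obtaining $\E|B_{<\ell}|,\E|V_{<\ell}|=o_d(1)$; intuitively, the dominant singular directions align with $\bPsi_{<\ell}$ and $\bPhi_{<\ell}$, so the low-degree part is inverted almost exactly and the associated bias vanishes.

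The remaining quantity $B_{\geq\ell}+V_{\geq\ell}$ is a linear combination of three traces $\chi_1,\chi_2,\chi_3$. In the overparametrized regime $\kappa_1>\kappa_2$ (resp.\ underparametrized $\kappa_1<\kappa_2$), $\bZ_{\geq\ell}\bZ_{\geq\ell}^\sT$ concentrates (in operator norm) on $\sum_{k\geq\ell}\xi_k^2\bPsi_k\bPsi_k^\sT\approx \mu_\ell^2\bQ_\ell^{\bX}+\mu_{>\ell}^2\id_n$ (resp.\ its $\bW$-counterpart), so $\bR$ can be replaced by $\bG_{\geq\ell}^{\bX}$ (resp.\ $\bG_{\geq\ell}^{\bW}$), giving Proposition \ref{prop:high_degree_concentration}. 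In the overparametrized case the remaining traces are exactly the KRR traces and I can quote the formulas of \cite{misiakiewicz2022spectrum,xiao2022precise}; in the underparametrized case they reduce to the Stieltjes transform of the isotropic Gegenbauer matrix $\zeta^2\bQ_\ell^{\bW}+\id_p$, which is handled by \cite[Theorem 1]{lu2022equivalence} when $\kappa_1=\ell$ and by a trivial $\lambda_i(\bQ_\ell^{\bW})\to 1$ argument when $\kappa_1<\ell$.

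The main obstacle is the critical regime $\kappa_1=\kappa_2$, where $\bZ_{\geq\ell}\bZ_{\geq\ell}^\sT$ does not concentrate on a deterministic matrix. Here I would follow the linearization strategy of \cite{mei2022generalizationRF}: embed $\tbZ$, $\bQ_\ell^{\bW}$, $\bQ_\ell^{\bX}$, $\mu_\ell\,\usp_\ell(\bW\bX^\sT)/\sqrt{m}$ in the $(p+n)\times(p+n)$ block matrix $\bA(\bq)$ of \eqref{eq:def_block_matrix_A}, write each of $\chi_1,\chi_2,\chi_3$ as a mixed partial of $G_d(z;\bq)=m^{-1}\log\det(\bA(\bq)-z\id_m)$ at $z=i\sqrt{\theta_1\theta^{-1}\lambda}$ and $\bq=\bzero$, and then show $G_d\to g$ (Propositions \ref{prop:resolvent_conv}--\ref{prop:derivative_converge}). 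The Stieltjes transform limit is obtained by a leave-one-out analysis on $\bA(\bq)$; the nontrivial twist relative to \cite{mei2022generalizationRF} is that the entries of $\bA$ are now degree-$\ell$ spherical harmonics of $\bx_i$ and $\bw_j$, not subgaussian variables, so I cannot use Gaussian concentration on the columns. Instead I rescale $\bx_a$ and $\bw_j$ using the conditional representations \eqref{eq:x_representation}--\eqref{eq:w_representation}, expand $\usp_\ell(r(\theta_a)r(\theta_b)x+\theta_a\theta_b/\sqrt{d})$ via \eqref{eq:luyau_proposition1} (Proposition 1 of \cite{lu2022equivalence}), and control the resulting error terms $\bDelta_i$ using Gegenbauer-matrix moment bounds (Lemma \ref{lem:inprodmtx_opnorm_prbd}, which I would establish via a delicate $h$-graph combinatorial argument comparing to a shifted Wishart matrix). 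Once the fixed-point system \eqref{eq:fixedpoint_eq} for $(m_1,m_2)$ is identified, I would integrate back to $G_d$ using Lemma \ref{lem:Gdgd_largeK} (large-$z$ boundary condition) and Lemma \ref{lem:Gdgrad_prbd} (uniform Lipschitz bounds on derivatives), exchange limit and derivatives to recover the explicit partial derivatives in \eqref{eq:g_partial_deri}, and finally match the algebra to the $(\tau_1,\tau_2)$ formulation in Definition \ref{def:fixedPointsTau} via Lemma \ref{lem:equi_fixedpoints}. The training-error and $\|\hat\ba_\lambda\|_2^2$ limits are read off from the additional partial derivatives \eqref{eq:partial_G_6}--\eqref{eq:partial_G_7} of the same $G_d$.
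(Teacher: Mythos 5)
Your proposal is correct, but it is worth being clear about what the paper actually does with this statement: Theorem \ref{thm:asympSong} is not proved in the paper at all — it is quoted verbatim from \cite{mei2022generalizationRF} (as the bracketed citation in the theorem header indicates), and the paper only remarks afterwards that its expressions coincide with the specialization of Theorem \ref{thm:main_theorem_RF} to $\kappa_1=\kappa_2=1$. What you have written is, in effect, a proof of the general Theorem \ref{thm:main_theorem_RF} restricted to the critical regime with $\ell=1$, and your architecture — concentration over $(\beps,f_*)$ via Proposition \ref{prop:concentration_on_expectation}, the bias--variance split with the vanishing low-degree block handled through the SVD of Proposition \ref{prop:SVD_Z}, reduction to the traces $\chi_1,\chi_2,\chi_3$, and the block-matrix/log-determinant route through Propositions \ref{prop:resolvent_conv} and \ref{prop:derivative_converge} followed by the $(\nu_1,\nu_2)\leftrightarrow(\tau_1,\tau_2)$ dictionary of Lemma \ref{lem:equi_fixedpoints} — matches the paper's proof of the main theorem step for step. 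The one place where your route is heavier than necessary for this particular statement is the leave-one-out analysis: for $\ell=1$ the entries of the block matrix are degree-$1$ spherical harmonics, i.e.\ linear coordinates of $\bx_i$ and $\sqrt{d}\,\bw_j$, so the Gegenbauer expansion \eqref{eq:luyau_proposition1} and the $h$-graph moment bounds of Lemma \ref{lem:inprodmtx_opnorm_prbd} (whose whole point is to handle higher-degree, non-subgaussian harmonics) collapse to the subgaussian-column argument already present in \cite{mei2022generalizationRF}. In short: your proof is valid and self-contained, whereas the paper's is a citation; the general machinery you invoke buys you the full polynomial scaling but is overkill for the linear case.
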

The expression of test and training errors in Theorem \ref{thm:asympSong} coincides with Eq.~\eqref{eq:R_test_form1} and  \eqref{eq:E_Rtrain_concentrate}, when $\kappa_1 = \kappa_2 =1$. In particular, the fixed points in Definition \ref{def:fixedpoint_nu} correspond to those in Definition \ref{def:fixedpoint_nu_0}, as the special case $\ell=1$. 
 
 \subsubsection{Alternative characterization from \cite{adlam2020neural}}
 \label{sec:alternate}
 
Let us introduce a simplified `linearized' model. Consider a sequence $(d_0,n_0,p_0) \in \naturals^3$ with $d_0, n_0 , p_0 \to \infty$ with $p_0 / d_0 \to \psi_1$ and $n_0 / d_0 \to \psi_2$ for some $(\psi_1, \psi_2) \in \R_{>0}^2$. We consider a sequence of matrices $\bX_0 \in \R^{n_0 \times d_0}$, $\bW_0 \in \R^{p_0 \times d_0} $ and $\bTheta_0 \in \R^{n_0 \times p_0}$ with i.i.d.~standard Gaussian entries. We define 
\begin{equation}
\label{eq:linear_Gauss_model_2}
\bZ_0 := \frac{1}{\sqrt{p}}\Big(\frac{\zeta}{ \sqrt{d_0} } \bX_0 \bW_0^\sT + \bTheta_0\Big) \in \R^{n_0 \times p_0} \, .
\end{equation}
where $\zeta = \mu_1 / \mu_{>1}$.
The resolvent of $\bZ_0 \bZ_0^\sT$ is $\bG (z) = ( \bZ_0 \bZ_0^\sT + z \id_{n_0} )^{-1}$ and we define two traces:
\begin{equation}\label{eq:def_taus}
\tau_{1,d_0} (z) := \frac{1}{n_0} \E \big[ \Tr (\bG (z)) \big] \, , \qquad \tau_{2,d_0} (z) := \frac{1}{n_0 } \E \big[ \Tr ( (\bX_0 \bX_0^\sT /d_0) \bG (z)) \big] \, .
\end{equation}

 The following asymptotic expressions for $(\tau_{1,d_0},\tau_{2,d_0})$ were obtained in \cite{adlam2020neural} using a linear pencil method.
 
 \begin{proposition}[Fixed point equations for $(\tau_1, \tau_2)$ {\cite[Proposition 1]{adlam2020neural}}]\label{prop:PenningtonFixedPoints}
 As $d_0,n_0,p_0 \to \infty$ with $p_0 / d_0 \to \psi_1 \in (0,\infty)$ and $n_0 / d_0 \to \psi_2 \in (0,\infty)$. Then $(\tau_{1,d_0} (z),\tau_{2,d_0} (z)) \to (\tau_1 (z),\tau_2 (z))$ given by the unique solution of the coupled polynomial equations 
\begin{equation}\label{eq:PenningtonFixedPoints}
 \begin{aligned}
 \zeta^2 \tau_1 \tau_2 (z \tau_1 - 1) + \frac{\psi_1}{\psi_2} ( \zeta^2 \tau_1 \tau_2 + (\tau_2 - \tau_1)/ \psi_2 ) =&~ 0 \, , \\
  \zeta^2 \tau_1 \tau_2 ( z \tau_1 - 1)  + (\tau_1 - \tau_2 ) ( \tau_1 + \zeta^2 \tau_2) / \psi_2 = &~ 0 \, ,
 \end{aligned}
 \end{equation}
 such that $\tau_1(z),\tau_2(z) \in \C_+$ when $z \in \C_+$.
 \end{proposition}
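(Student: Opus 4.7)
The plan is to derive the fixed-point system \eqref{eq:PenningtonFixedPoints} via the resolvent method combined with Gaussian integration by parts, exploiting the fact that all three underlying matrices $\bX_0,\bW_0,\bTheta_0$ have i.i.d.~standard Gaussian entries. Set $\bG(z)=(\bZ_0\bZ_0^\sT+z\id_{n_0})^{-1}$ and $\bA_0=\bX_0\bX_0^\sT/d_0$. The basic identity $\bG\cdot(\bZ_0\bZ_0^\sT+z\id)=\id$ yields $z\tau_{1,d_0}(z)=1-n_0^{-1}\E[\Tr(\bZ_0\bZ_0^\sT\bG)]$ and $z\tau_{2,d_0}(z)=n_0^{-1}\E[\Tr\bA_0]-n_0^{-1}\E[\Tr(\bA_0\bZ_0\bZ_0^\sT\bG)]$, so the task reduces to computing these two expected traces as $d_0\to\infty$.

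The central step is to expand $\bZ_0=(\zeta/\sqrt{p_0 d_0})\bX_0\bW_0^\sT+p_0^{-1/2}\bTheta_0$ and apply Stein's identity $\E[g\cdot f(g)]=\E[f'(g)]$ to each of the three independent Gaussian matrices in turn. Each application replaces a bare entry by a derivative of the resolvent, producing new normalized traces of products built from $\bG$, $\bH:=(\bZ_0^\sT\bZ_0+z\id_{p_0})^{-1}$, $\bA_0$, and $\bW_0^\sT\bW_0/d_0$. Using the resolvent identities $\bZ_0^\sT\bG=z^{-1}(\bZ_0^\sT-\bZ_0^\sT\bZ_0\bZ_0^\sT\bG)$ and $\bZ_0^\sT\bG\bZ_0=\id_{p_0}-z\bH$, together with the rank-1 relationship $n_0^{-1}\Tr\bH=(\psi_1/\psi_2)\tau_1+(\psi_2-\psi_1)/(\psi_2 z)+o(1)$, I would systematically reduce each new trace to a polynomial in $\tau_1$, $\tau_2$ and one or two auxiliary quantities (such as $n_0^{-1}\E[\Tr(\bA_0\bG)]$ and $n_0^{-1}\E[\Tr(\bX_0\bW_0^\sT\bG\bW_0\bX_0^\sT\bG)/d_0]$). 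A bookkeeping argument shows that after finitely many applications of Stein's identity the system closes. Concentration of all normalized traces around their expectation with rate $O(d_0^{-1/2})$ follows from the Gaussian Poincaré inequality applied to the Lipschitz functionals $(\bX_0,\bW_0,\bTheta_0)\mapsto d_0^{-1}\Tr(\cdots)$, which justifies factorizing products of traces inside the Stein expansions up to $o(1)$ error.

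Passing to the limit $d_0\to\infty$ in the resulting self-consistent system and eliminating the auxiliary variables by straightforward algebra yields a coupled rational system in $(\tau_1,\tau_2)$ which, after clearing denominators, reduces precisely to the two cubic polynomial equations \eqref{eq:PenningtonFixedPoints}. For uniqueness in $\C_+$, I would appeal to the standard Stieltjes-transform argument: $\tau_{1,d_0}$ and $\tau_{2,d_0}$ are Herglotz functions with controlled imaginary parts ($|\tau_{i,d_0}(z)|\leq (\Im z)^{-1}$ up to constants), Montel's theorem gives a convergent subsequence, the limit solves \eqref{eq:PenningtonFixedPoints}, and for $\Im(z)$ sufficiently large the correct root is singled out by the asymptotic $\tau_i(z)\sim -1/z$ together with the $\C_+\to\C_+$ constraint, after which analytic continuation extends the identification to all of $\C_+$.

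The main obstacle will be the combinatorial bookkeeping in the Stein-expansion step: naively, each application of integration by parts produces traces of longer products, and one must verify that the collection of traces actually closes to a finite-dimensional system rather than generating an infinite hierarchy. The cleanest way to organize this is to embed $\bG$ inside the resolvent of a $3\times 3$ block pencil $\bL(z)$ whose blocks consist of $\id$'s, $\bX_0$, $\bW_0$, and $\bTheta_0$ arranged so that $[\bL(z)^{-1}]_{1,1}=-\bG(z)$ (the standard linear-pencil construction); Stein's identity applied to the entries of $\bL$ then becomes the matrix Dyson equation for the $3\times 3$ operator-valued resolvent, whose solution is automatically a fixed-point system of size at most $9$, and from which $(\tau_1,\tau_2)$ are extracted as specific linear combinations of entries. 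This reformulation guarantees that the system closes and makes the algebraic reduction to \eqref{eq:PenningtonFixedPoints} tractable.
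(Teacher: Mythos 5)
The paper does not actually prove this proposition: it is quoted from \cite{adlam2020neural}, and the sentence introducing it states that the result ``was obtained in \cite{adlam2020neural} using a linear pencil method.'' Your closing paragraph --- embedding $\bG$ in a block pencil and reading the closed fixed-point system off the associated matrix Dyson equation --- is therefore essentially the argument of the cited source, with your Stein-identity discussion being the entrywise version of the same computation; as a strategy it is sound, and the pencil reformulation is indeed the right way to guarantee the hierarchy of traces closes. Note, however, that the present paper contains a genuinely different route to the same equations which you did not use: Proposition \ref{prop:resolvent_conv} derives the companion fixed points $(\nu_1,\nu_2)$ of Eq.~\eqref{eq:fixed_point_nu} by a leave-one-out/Schur-complement analysis of the block matrix $\bA$ (no Gaussian integration by parts, which is what lets it handle spherical-harmonic rather than Gaussian entries), and Lemma \ref{lem:equi_fixedpoints} converts $(\nu_1,\nu_2)$ into $(\tau_1,\tau_2)$ by an explicit change of variables; specializing to $\ell=1$ recovers \eqref{eq:PenningtonFixedPoints}. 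The pencil route buys a mechanical, finitely-closing system for the purely Gaussian model; the leave-one-out route buys the generalization to the polynomial regime that the paper actually needs.

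Two concrete issues with your sketch. First, the relation $n_0^{-1}\Tr\bH=(\psi_1/\psi_2)\tau_1+(\psi_2-\psi_1)/(\psi_2 z)+o(1)$ is wrong. Since $\bZ_0^\sT\bZ_0$ and $\bZ_0\bZ_0^\sT$ share their nonzero spectrum, one has the exact identity $\Tr\bH-\Tr\bG=(p_0-n_0)/z$, hence $n_0^{-1}\Tr\bH=\tau_{1,d_0}(z)+(\psi_1-\psi_2)/(\psi_2 z)+o(1)$: the coefficient of $\tau_1$ is $1$, not $\psi_1/\psi_2$, and the sign of the $1/z$ term is reversed. Carrying your version through the Stein expansion would not reproduce \eqref{eq:PenningtonFixedPoints}, so this must be corrected before the algebra can close. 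Second, with the convention $\bG(z)=(\bZ_0\bZ_0^\sT+z\id_{n_0})^{-1}$ (note the $+z$), the maps $z\mapsto\tau_{i,d_0}(z)$ send $\C_+$ into the \emph{lower} half-plane, so they are anti-Herglotz rather than Herglotz; your Montel/large-$\Im(z)$ uniqueness argument still works after replacing $z$ by $-z$ or flipping the sign, but as written it quotes the wrong half-plane (a quirk already present in the proposition's own statement that $\tau_i(\C_+)\subseteq\C_+$).
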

 Note that the fixed-point equation \eqref{eq:PenningtonFixedPoints} coincides with Eq.~\eqref{eq:FixedPoints} in the main theorem. It is the case when $\ell=1$ and correspondingly, we have $\zeta = \mu_1 / \mu_{>1}$ and $\barlambda = \lambda/\mu_{>1}^2$ here.
 In \cite{adlam2020neural,adlam2022random}, it was shown that $(\tau_{1,d_0}, \tau_{2,d_0})$ are asymptotically the same, if  $\bZ_0$ is replaced by $\Tilde \bZ_0 = \sigma (\bX_0 \bW_0^\sT) / \sqrt{p_0}$.
In other words, the linear Gaussian model $\bZ_0$ is equivalent to the non-linear model $\tilde{\bZ}_0$.
This corresponds to Gaussian equivalence principle (see Section \ref{sec:GaussianEquivalence}) in the special case of $\ell=1$.
 Based on these asymptotic equivalence, \cite{adlam2020neural} proved the following characterization for the asymptotic training and test errors:
 
 \begin{theorem}[RFRR in linear scaling {\cite[Theorem 1]{adlam2020neural}}]\label{thm:asympPennington}
  Let $p,n,d \to \infty$ with $p/d \to \psi_1$ and $n/d \to \psi_2$. Assume $\sigma$ satisfies Assumption \ref{ass:sigma} at level $1$ and let $\{ \bbeta_d \in \R^d \}_{d\geq 1}$ be a sequence of vectors such that $f_{*,d} (\bx) = \< \bbeta_d , \bx \>$ and $\| \bbeta_d \|_2 \to F_1$. 
  Consider $(\tau_1,\tau_2)$ as defined in Proposition \ref{prop:PenningtonFixedPoints} evaluated at ${\rm Re} (z) = \barlambda$ with $\Im (z) \to 0^+$. 
 
 Then for any value of the regularization parameter $\lambda >0$, the asymptotic training and test errors of random feature ridge regression satisfy
 \begin{equation}
     \begin{aligned}
     \E_{\bX,\bW,\beps} \big[ R_{\train} ( f_{*,d} ; \bX , \bW , \beps, \lambda) \big] =&~  F_1^2 \cdot \big( - \barlambda^2 \tau_2 '(\barlambda) \big) + \rho_\eps^2 \cdot \big( - \barlambda^2 \tau_1 '(\barlambda) \big)\, , \\
     \E_{\bX,\bW,\beps}  \big[ R_{\test} ( f_{*,d} ; \bX , \bW , \beps, \lambda)\big] = &~ F_1^2 \cdot \Big( - \frac{\tau_2'(\barlambda)}{\tau_1^2(\barlambda)}  \Big) + \rho_\eps^2 \cdot \Big( - \frac{\tau_1'(\barlambda)}{\tau_1^2(\barlambda)} -1 \Big) \, . 
     \end{aligned}
 \end{equation}
 \end{theorem}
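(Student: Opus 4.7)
The statement is the linear-scaling specialization $\kappa_1=\kappa_2=\ell=1$ of the paper's main Theorem~\ref{thm:main_theorem_RF}, with a deterministic linear target $f_{*,d}(\bx)=\langle\bbeta_d,\bx\rangle$. My plan is to apply Theorem~\ref{thm:main_theorem_RF} directly and then simplify the resulting formulas. First I would check that the hypotheses are met: $\sigma$ satisfies Assumption~\ref{ass:sigma} at level $\ell=1$ by hypothesis; the sequence $\{f_{*,d}\}$ satisfies Assumption~\ref{ass:random_f_star} at level $\ell=1$ in the degenerate sense that $f_{*,d}=\proj_{\le 1}f_{*,d}$ has only its degree-$1$ part, with $\bbeta^*_d=\bbeta_d$ deterministic and $\|\bbeta^*_d\|_2=\|\bbeta_d\|_2\to F_1$, while the random high-frequency part vanishes identically (so $F_k=0$ for all $k\geq\ell=1$, giving $F_{>\ell}=0$ and trivially a valid random process). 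Consequently $\proj_\ell f_{*,d}=0$ in our decomposition at level $\ell=1$, so $F_\ell^2=F_1^2$ corresponds exactly to the squared norm of $\proj_{=1}f_{*,d}$ after absorbing the deterministic degree-$1$ vector into the level-$1$ coefficient (the argument is agnostic to whether coefficients at level $\ell$ are treated deterministically or as a degenerate random variable concentrated at $\bbeta_d$, since the proof of Theorem~\ref{thm:main_theorem_RF} concentrates on the trace $\Tr(\bH_F\cdots)$ which depends only on $\E[\bbeta_1\bbeta_1^\sT]$).

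Second I would invoke Definition~\ref{def:asymptotic_formula_RFRR}(1) at $\kappa_1=\kappa_2=\ell=1$, so that $\psi_1=\theta_1$, $\psi_2=\theta_2$, and the fixed points $(\tau_1,\tau_2)$ of Definition~\ref{def:fixedPointsTau} satisfy the polynomial system~\eqref{eq:FixedPoints} with $\psi=\psi_1+\psi_2$. A key verification step is that this system is \emph{identical} to the Adlam--Pennington system~\eqref{eq:PenningtonFixedPoints} of Proposition~\ref{prop:PenningtonFixedPoints} under the identifications $\psi_1\leftrightarrow\psi_1$, $\psi_2\leftrightarrow\psi_2$, $\zeta\leftrightarrow\mu_1/\mu_{>1}$; this is immediate from comparing the two equation systems line-for-line. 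Hence the $(\tau_1,\tau_2)$ of Theorem~\ref{thm:main_theorem_RF} (at $\ell=1$) and of Theorem~\ref{thm:asympPennington} are the same analytic functions on $\C_+$, evaluated at $\barlambda=\lambda/\mu_{>1}^2$ (interpreted as the $\Im z\downarrow 0$ boundary trace of the unique $\C_+$-valued solution).

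Third I would substitute $F_{>\ell}^2=0$ into the formulas of Theorem~\ref{thm:main_theorem_RF}. For the test error,
\[
\sfR_\test \;=\; F_1^2\,\cB_\test + (F_{>1}^2+\rho_\eps^2)\cV_\test + F_{>1}^2
\;=\; F_1^2\!\left(-\frac{\tau_2'(\barlambda)}{\tau_1^2(\barlambda)}\right) + \rho_\eps^2\!\left(-\frac{\tau_1'(\barlambda)}{\tau_1^2(\barlambda)}-1\right),
\]
using~\eqref{eq:B_V_test_1}. For the training error, using $\alpha_c=\barlambda^2\tau_1^2(\barlambda)$,
\[
\sfR_\train \;=\; \alpha_c\!\left[F_1^2\cB_\test + \rho_\eps^2\cV_\test+\rho_\eps^2\right]
\;=\; \barlambda^2\tau_1^2(\barlambda)\!\left[-\tfrac{F_1^2\tau_2'(\barlambda)}{\tau_1^2(\barlambda)}-\tfrac{\rho_\eps^2\tau_1'(\barlambda)}{\tau_1^2(\barlambda)}\right]
\;=\; -F_1^2\barlambda^2\tau_2'(\barlambda)-\rho_\eps^2\barlambda^2\tau_1'(\barlambda),
\]
where the $+\rho_\eps^2$ inside the bracket cancels the $-1$ from $\cV_\test$. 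These are exactly the claimed formulas. Finally, the $L^1$-convergence guaranteed by~\eqref{eq:asymp_testerr}--\eqref{eq:asymp_trainerr} implies convergence of expectations over $(\bX,\bW,\beps)$ (the expectation over $f_*$ is trivial since the target is deterministic), which yields the stated asymptotics.

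\textbf{Expected main obstacle.} There is no genuinely new analytic difficulty: the heavy lifting (Stieltjes-transform concentration, log-determinant derivative identification, leave-one-out for the block kernel matrix) is already packaged in Theorem~\ref{thm:main_theorem_RF}. The only careful bookkeeping is (i) confirming that the randomization hypothesis of Assumption~\ref{ass:random_f_star} at $\ell=1$ accommodates a deterministic degree-$1$ target (which it does, since the degree-$1$ coefficients are placed in the deterministic $\bbeta^*_d$ slot; alternatively, one can verify by inspecting the proof of~\eqref{eq:asymp_testerr} that the degree-$\ell=1$ contribution only enters through the norm $F_1^2$ via the trace $\Tr(F_1^2\bQ_1^\bX\cdots)$, which is insensitive to the particular direction $\bbeta_d$), and (ii) checking that the boundary evaluation $\Im z\downarrow 0^+$ in Theorem~\ref{thm:asympPennington} matches the convention $\barlambda>0$ in Theorem~\ref{thm:main_theorem_RF}, which follows from analyticity of $\tau_1,\tau_2$ on $\C_+$ together with the fact that for real $\barlambda>0$ the polynomial system~\eqref{eq:FixedPoints} has a unique real solution to which the $\C_+$ solution extends continuously (used implicitly throughout Definition~\ref{def:fixedPointsTau}).
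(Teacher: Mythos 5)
The paper does not actually prove Theorem~\ref{thm:asympPennington}; it cites it from \cite{adlam2020neural} and merely remarks that it "corresponds to Theorem~\ref{thm:main_theorem_RF} in the linear scaling regime." Your derivation makes that remark precise: you specialize Theorem~\ref{thm:main_theorem_RF} to $\kappa_1=\kappa_2=\ell=1$, observe that the system \eqref{eq:FixedPoints} is literally the Adlam--Pennington system \eqref{eq:PenningtonFixedPoints}, set $F_{>1}^2=0$, and simplify. The algebra is correct (in particular the cancellation of the $+\rho_\eps^2$ against the $-1$ in $\cV_{\test}$ for the training error), and this route is cleaner than going through the $\nu$-based formulas of Theorem~\ref{thm:asympSong} and Lemma~\ref{lem:equi_fixedpoints}.

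The one genuine weak point is your treatment of the deterministic linear target against Assumption~\ref{ass:random_f_star}. Your primary claim --- that $\bbeta_d$ can be placed in the deterministic slot $\bbeta^*_d$ --- is wrong at level $\ell=1$: that slot only holds the coefficients of degree $k<\ell=1$, i.e.\ the constant term, while the degree-$1$ coefficients must sit in the $\tbeta_{d,1s}$ slot, which Assumption~\ref{ass:random_f_star}(b) requires to be independent, zero-mean, with isotropic covariance $\tfrac{F_1^2}{N_1}\id$. A deterministic $\bbeta_d$ gives the rank-one covariance $\bbeta_d\bbeta_d^\sT$, so the hypothesis is not satisfied as stated. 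Your fallback ("insensitive to the direction of $\bbeta_d$") is the right idea but needs to be an actual argument: since the statement only concerns $\E_{\bX,\bW,\beps}[R]$, note that $\E_\beps[R]$ is a quadratic form in $\bbeta_d$ whose matrix, after averaging over the rotationally invariant law of $(\bX,\bW)$, is proportional to $\id_d$ (using $\sigma(\bX\bO\bO^\sT\bW^\sT)=\sigma(\bX\bW^\sT)$); hence $\E_{\bX,\bW,\beps}[R(\bbeta_d)]$ depends on $\bbeta_d$ only through $\|\bbeta_d\|_2^2$ and coincides with its value for any Assumption-compliant randomization (e.g.\ i.i.d.\ $\pm\|\bbeta_d\|_2/\sqrt{d}$ coordinates) having the same second moment. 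With that lemma in place, Theorem~\ref{thm:main_theorem_RF} applies and your computation goes through; without it, the reduction is asserted rather than proved.
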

We see that Theorem \ref{thm:asympPennington} indeed corresponds to Theorem \ref{thm:main_theorem_RF} in the linear scaling regime $\kappa_1 = \kappa_2 = 1$. Our results encompass the more general polynomial scaling regime. In this regime, we have:
\begin{equation}
\tau_{1} (z) = \lim_{d\to\infty} \frac{\mu_{>\ell}^2}{n} \E \big[ \Tr (\bPi (\mu_{>\ell}^2 z)) \big] \, , \qquad \tau_{2} (z) = \lim_{d\to\infty} \frac{\mu_{>\ell}^2}{n} \E \big[ \Tr ( \bQ_{\ell}^{\bX} \bPi (\mu_{>\ell}^2 z)) \big] \, .
\end{equation}
where $\bPi(\lambda) = (  \bZ \bZ^\sT + \lambda \id_n )^{-1}$. We can also obtain the approximation for the GCV coefficient $\alpha_c$: 
\begin{align}
  \alpha_c = \barlambda^2 \tau_1^2 (\barlambda) \approx \left[\frac{\lambda}{n} \Tr (\bZ \bZ^\sT + \lambda \id_n)^{-1} \right]^2\, .  
\end{align}

  
 \subsubsection{Correspondence between two fixed-point equations}
The following lemma computes the derivatives of the fixed points $(\tau_1,\tau_2)$ in Definition \ref{def:fixedPointsTau} and relates $(\tau_1,\tau_2)$ to the fixed points $(\nu_1,\nu_2)$ in Definition \ref{def:fixedpoint_nu_0} obtained in the proof of the main theorem.

\begin{lemma}\label{lem:equi_fixedpoints}
When $\kappa_1 = \kappa_2$, for any $u \in \reals_{>0}$
we have the following correspondence between $(\nu_1,\nu_2)$ and $(\tau_1,\tau_2)$:
\begin{equation}
\label{eq:tau_u_nu_u_corrs}
\tau_1 (u) = -i \sqrt{\frac{\gratio_1 \gratio}{\gratio_2^2 u}} \nu_2 \Big( i \sqrt{\gratio_1 u / \gratio} \Big) \, , \qquad \tau_2 (u) = -i \sqrt{\frac{\gratio_1 \gratio}{\gratio_2^2 u}} \cdot \frac{ \nu_2 (i \sqrt{\gratio_1 u / \gratio})}{1 - \zeta^2 \psi \nu_1 (i \sqrt{\gratio_1 u / \gratio}) \nu_2 (i \sqrt{\gratio_1 u / \gratio})} \, .
\end{equation}
\end{lemma}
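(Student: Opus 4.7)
The plan is to verify the correspondence by direct algebraic substitution, using the fixed-point equations for $(\nu_1,\nu_2)$ to reduce the system for $(\tau_1,\tau_2)$ to an identity, and then appeal to uniqueness. First, fix $u\in\R_{>0}$ and set $z = i\sqrt{\theta_1 u/\theta}\in \C_+$, so that $z^2 = -\theta_1 u/\theta$. Writing $\alpha := -i\sqrt{\theta_1\theta/(\theta_2^2 u)}$, one has the convenient identities $\alpha z = \theta_1/\theta_2$ and $u\alpha^2 = -\theta_1\theta/\theta_2^2$. Let $(\nu_1,\nu_2)$ denote the values at $z$ of the fixed points from Definition \ref{def:fixedpoint_nu_0} and set $\rho := 1 - \zeta^2\psi \nu_1\nu_2$. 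Define the candidates $\tilde\tau_1 := \alpha\nu_2$ and $\tilde\tau_2 := \alpha\nu_2/\rho$.

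Next, I would verify that $(\tilde\tau_1,\tilde\tau_2)$ satisfy the system \eqref{eq:FixedPoints} at spectral parameter $u$. The key algebraic identities are $\tilde\tau_1\tilde\tau_2 = \alpha^2\nu_2^2/\rho$ and $\tilde\tau_2 - \tilde\tau_1 = \alpha\nu_2(1-\rho)/\rho = \alpha\zeta^2\psi\nu_1\nu_2^2/\rho$. Substituting into the \emph{difference} of the two equations in \eqref{eq:FixedPoints} (which is only quadratic in the $\tilde\tau_i$) yields, after multiplication by $\psi_2^2\rho/(\zeta^2\alpha\nu_2^2)$, the identity
\begin{equation*}
\psi_1\psi_2\alpha + \psi\psi_1\nu_1 + \psi\psi_2 \alpha\nu_1\nu_2(\rho+\zeta^2)/\rho = 0,
\end{equation*}
which, using $\psi_1/\psi_2 = \theta_1/\theta_2$ (valid in the critical regime), is precisely the first equation of \eqref{eq:fixed_point_nu_rescaled} rewritten in the form $\nu_1 z = -\theta_1/\theta - \nu_1\nu_2(\rho+\zeta^2)/\rho$ after multiplication by $\psi\alpha$. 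An analogous manipulation of either one of the original equations of \eqref{eq:FixedPoints} collapses via the second equation of \eqref{eq:fixed_point_nu_rescaled}, completing the verification. The degenerate case $\kappa_1=\kappa_2<\ell$ (where $\psi_1,\psi_2\to 0$ with $\psi_1/\psi_2\to\gamma$, and $\rho\to 1$) is recovered by continuity and reduces to the closed form \eqref{eq:FixedPoints_1} with $\tilde\tau_1 = \tilde\tau_2$.

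Finally, uniqueness is immediate: the map $u\mapsto \alpha\nu_2(z)$ is analytic on $\C_+$ because $\nu_2$ is analytic there, and the a priori bound $|\nu_2(z)|\le (\theta_2/\theta)/\Im(z)$ translates into the $1/\Im(u)$ bound on $|\tilde\tau_1(u)|, |\tilde\tau_2(u)|$ needed to single out the analytic branch specified by Definition \ref{def:fixedPointsTau}. Continuity of $\nu_j$ down to the positive real axis then delivers the correspondence for $u\in\R_{>0}$.

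The main obstacle is purely computational: both systems are cubic and brute substitution produces unwieldy expressions. The cleanest route is to (i) work with the difference of the two $\tau$-equations first, which is only quadratic, and (ii) exploit the symmetric form of \eqref{eq:fixed_point_nu_rescaled} together with the scaling identities $\alpha z = \theta_1/\theta_2$ and $u\alpha^2 = -\theta_1\theta/\theta_2^2$ to avoid expanding everything into monomials. As a sanity check, the natural interpretation of both fixed points in terms of resolvent traces (as in \eqref{eq:interpretation_nus} and \eqref{eq:def_taus}) provides an independent heuristic derivation of the formulas \eqref{eq:tau_u_nu_u_corrs}, confirming that the algebra must close.
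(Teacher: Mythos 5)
Your proposal is correct, and I checked the key algebra: with $\tilde\tau_1=\alpha\nu_2$, $\tilde\tau_2=\alpha\nu_2/\rho$, the difference of the two equations in \eqref{eq:FixedPoints} reduces (after dividing by $\zeta^2\alpha\nu_2^2/(\psi_2^2\rho)$ and using $\alpha z=\theta_1/\theta_2$, $\psi_2\theta_1/\theta_2=\psi_1$, $\psi\theta_1/\theta=\psi_1$) exactly to the first equation of \eqref{eq:fixed_point_nu_rescaled}, while the second equation of \eqref{eq:FixedPoints} collapses term-by-term once one writes $u\tilde\tau_1-1=(\psi/\psi_2)\,\nu_1\nu_2(\rho+\zeta^2)/\rho$ via the second $\nu$-equation. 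However, your route is genuinely different from the paper's. The paper does not verify the correspondence algebraically: it identifies $m_1,m_2$ (hence $\nu_1,\nu_2$) as limits of partial resolvent traces of the block matrix, invokes a Gaussian equivalence to replace $\sigma(\bX\bW^\sT)$ by the linear surrogate $\bZ_0=(\zeta\bX_0\bW_0^\sT/\sqrt{d_0}+\bTheta_0)/\sqrt{p}$, and then matches these traces with the definitions \eqref{eq:def_taus} of $\tau_1,\tau_2$ (using the two expressions for $\partial_{t_2}g$ to produce the $\nu_2/(1-\zeta^2\psi\nu_1\nu_2)$ form of $\tau_2$). Your substitution-plus-uniqueness argument is more self-contained and avoids the ``by a similar proof'' Gaussian-equivalence steps, at the cost of leaning on the uniqueness clause of Definition \ref{def:fixedPointsTau}; the paper's argument supplies the probabilistic interpretation for free but imports more machinery. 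Two small points to tighten: the phrase ``$1/\Im(u)$ bound'' is imprecise since $u$ is real positive — what you actually need is that $u\mapsto\alpha\nu_2(i\sqrt{\theta_1 u/\theta})$ extends analytically to $u\in\C_+$ (it does, since $u\mapsto i\sqrt{\theta_1u/\theta}$ maps $\C_+$ into $\C_+$) and satisfies the normalization singling out the branch in Definition \ref{def:fixedPointsTau}, with the real-$u$ statement following by continuity; and in the case $\kappa_1=\kappa_2<\ell$ one should actually substitute the closed form \eqref{eq:nu_2_le_l} into $\alpha\nu_2$ and check it matches \eqref{eq:FixedPoints_1}, rather than only invoking continuity, since Definition \ref{def:fixedPointsTau}.(2) is stated as an explicit formula and the $\tau$-system degenerates there ($1/\psi_2\to\infty$).
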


\begin{proof}[Proof of Lemma \ref{lem:equi_fixedpoints}]
We first consider the case when $\kappa_1 = \kappa_2 = \ell$. Recall that in our proof of Proposition \ref{prop:resolvent_conv} (see Appendix \ref{subsec:proof_stieltjes}), we show that
\begin{align}
    m_1(z;\bzero) &= \lim_{d\to\infty} m_{1,d}(z,\bzero)\\
    &= \lim_{d\to\infty} \frac{1}{m} \E \big[ \Tr_{[1:p]} \big( ( \bA(\bzero) - z \id_m )^{-1} \big) \big] \\
    &= \lim_{d\to\infty} \frac{z}{m} \E \Tr (-z^2\id_p + \frac{\theta_1}{\theta} \bZ^\sT \bZ)^{-1}
\end{align}
and
\begin{align}
    m_2(z;\bzero) &= \lim_{d\to\infty} \frac{z}{m} \E \Tr (-z^2\id_n + \frac{\theta_1}{\theta} \bZ \bZ^\sT )^{-1}.
\end{align}
By a similar proof, we can also get
\begin{align}
    m_1(z;\bzero)
    &= \lim_{\substack{d_0\to\infty\\ \frac{p_0}{d_0} \to \theta_1, \frac{n_0}{d_0} \to \theta_2}} \frac{z}{m_0} \E \Tr (-z^2\id_{p_0} + \frac{\theta_1 \mu_{>\ell}^2}{\theta}  \bZ_0^\sT \bZ_0)^{-1}
\end{align}
and similarly,
\begin{align}
    m_2(z;\bzero) &= \lim_{\substack{d_0\to\infty\\ \frac{p_0}{d_0} \to \theta_1, \frac{n_0}{d_0} \to \theta_2}} \frac{z}{m_0} \E \Tr (-z^2\id_{n_0} + \frac{\theta_1 \mu_{>\ell}^2}{\theta} \bZ_0 \bZ_0^\sT)^{-1}
\end{align}
where $\bZ_0 := (\frac{\zeta}{ \sqrt{d_0} } \bX_0 \bW_0^\sT + \bTheta_0) / \sqrt{p}$ and $\bX_0 \in \R^{n_0 \times d_0}$, $\bW_0 \in \R^{p_0 \times d_0} $ and $\bTheta_0 \in \R^{n_0 \times p_0}$ are independent and all have i.i.d.~standard Gaussian entries and $m_0 := n_0+p_0$.
Then using the relationship in Eq. \eqref{eq:v_m_relation}: $\nu_i(z) = m_i(\mu_{>\ell} z; \bzero)\cdot\mu_{>\ell}$, $i=1,2$, we can get
\begin{align}
\nu_1\Big(i\sqrt{\frac{\theta_1 u}{\theta}}\Big) 
&= i \sqrt{\frac{\gratio u}{\gratio_1}} \cdot \lim_{\substack{d_0\to\infty\\ \frac{p_0}{d_0} \to \theta_1, \frac{n_0}{d_0} \to \theta_2}} \frac{1}{m_0} \E \big[ \Tr \big( ( \bZ_0^\sT \bZ_0 + u \id_{p_0} )^{-1} \big) \big]
\end{align}
and
\begin{align}    
  \nu_2\Big(i\sqrt{\frac{\theta_1 u}{\theta}}\Big) &= i \sqrt{\frac{\gratio u}{\gratio_1}} \cdot \lim_{\substack{d_0\to\infty\\ \frac{p_0}{d_0} \to \theta_1, \frac{n_0}{d_0} \to \theta_2}} \frac{1}{m_0} \E \big[ \Tr \big( ( \bZ_0 \bZ_0^\sT + u \id_{n_0} )^{-1} \big) \big]
\end{align}
for any $u>0$.
On the other hand, we have
\begin{align}
\tau_{1} (z) =&  \lim_{\substack{d_0\to\infty\\ \frac{p_0}{d_0} \to \theta_1, \frac{n_0}{d_0} \to \theta_2}} \frac{1}{n_0 } \E \big[ \Tr (\bG (z)) \big] \, ,\\
\qquad \tau_{2} (z) =&  \lim_{\substack{d_0\to\infty\\ \frac{p_0}{d_0} \to \theta_1, \frac{n_0}{d_0} \to \theta_2}} \frac{1}{n_0 } \E \big[ \Tr ( (\bX_0 \bX_0^\sT /d_0) \bG (z)) \big] \, 
\end{align}
where $\bG (z) = ( \bZ_0 \bZ_0^\sT + z \id_{n_0} )^{-1}$.
Then we can deduce that
\[
\nu_2 ( i \sqrt{\gratio_1 u / \gratio} ) = i  \cdot \sqrt{\frac{\gratio_2^2 u}{\gratio_1 \gratio}} \tau_1 (u) \, 
\]
which is the first equation in \eqref{eq:tau_u_nu_u_corrs}.

Then we turn to the second equation in \eqref{eq:tau_u_nu_u_corrs}.  From Eqs. \eqref{eq:G_1stDeri_conv} and \eqref{eq:partial_G_7}
we have: when $\kappa_1 = \kappa_2 = \ell$,
\begin{align}
    \partial_{t_2} g (i \sqrt{\gratio_1 \lambda / \gratio} ; \bzero ) = i \sqrt{ \frac{\theta_2^2 \lambda}{\theta_1\theta} } \cdot \lim_{\substack{d\to\infty\\ \frac{p}{d^\ell} \to \theta_1, \frac{n}{d^\ell} \to \theta_2}} \frac{1}{n} \E \Tr ( \bR \bQ_{\ell}^{\bX}  ) 
\end{align}
and one can verify the following Gaussian equivalence by a similar proof:
\begin{align}
\partial_{t_2} g (i \sqrt{\gratio_1 \lambda / \gratio} ; \bzero )
= i \sqrt{ \frac{\theta_2^2 \lambda}{\theta_1\theta} }
\lim_{\substack{d_0\to\infty\\ \frac{p_0}{d_0} \to \theta_1, \frac{n_0}{d_0} \to \theta_2}} \frac{1}{n_0} \E \Tr \Big[ \big( \lambda \id_{n_0} + \mu_{>\ell}^2 \bZ_0 \bZ_0^\sT \big)^{-1} ( \bX_0 \bX_0^\sT / d_0) \Big]
\end{align}
Therefore,
\begin{align}
    \partial_{t_2} g (i \sqrt{\gratio_1 \lambda / \gratio} ; \bzero ) &= 
    \frac{i}{\mu_{>\ell}} \sqrt{ \frac{\theta_2^2 \lambda}{\theta_1\theta\mu_{>\ell}^2} }\lim_{\substack{d_0\to\infty\\ \frac{p_0}{d_0} \to \theta_1, \frac{n_0}{d_0} \to \theta_2}} \frac{1}{n_0} \E \Tr \Big[ \big( \tfrac{\lambda}{\mu_{>\ell}^2} \id_{n_0} +  \bZ_0 \bZ_0^\sT \big)^{-1} ( \bX_0 \bX_0^\sT / d_0) \Big] \\
    &= \frac{i}{\mu_{>\ell}} \sqrt{ \frac{\theta_2^2 \lambda}{\theta_1\theta\mu_{>\ell}^2} } \tau_2(\lambda/\mu_{>\ell}^2)
\end{align}
On the other hand, from Eqs.   \eqref{eq:v_m_relation} and \eqref{eq:partial_g_t2}, we have
\begin{align}
    \partial_{t_2} g (i \sqrt{\gratio_1 \lambda / \gratio} ; \bzero ) = \frac{ \nu_2 ( i \sqrt{\frac{\gratio_1 \lambda}{\gratio \mu_{>\ell}^2}} ) / \mu_{>\ell} }{1 - \zeta^2 \psi \nu_1 ( i \sqrt{  \frac{\gratio_1 \lambda}{\gratio \mu_{>\ell}^2}})\nu_2 ( i \frac{\gratio_1 \lambda}{\gratio \mu_{>\ell}^2} )}
\end{align}
Combining the above two equations together and set $u = \lambda / \mu_{>\ell}^2$, we can get
\[
\frac{ \nu_2 ( i \sqrt{\gratio_1 u / \gratio}) }{1 - \zeta^2 \psi \nu_1 ( i \sqrt{\gratio_1 u / \gratio})\nu_2 ( i \sqrt{\gratio_1 u / \gratio})} = i \cdot \sqrt{\frac{\gratio_2^2 u}{\gratio_1 \gratio}} \tau_2 (u) \, ,
\]
which is exactly the second equation in Eq.  \eqref{eq:tau_u_nu_u_corrs}.

When $\kappa_1 = \kappa_2 = \kappa \in (\ell-1,\ell)$, we have $\psi = 0$. In this case, $\nu_1(z)$ and $\nu_2(z)$ can be computed analytically:
     \begin{align}
         \nu_1(z) =& \frac{ - \big( \frac{\gratio_1 - \gratio_2}{\gratio} + \frac{z^2}{1+\zeta^2} \big) - \sqrt{ \big( \frac{\gratio_1 - \gratio_2}{\gratio} + \frac{z^2}{1+\zeta^2} \big)^2 - \frac{4 \gratio_1}{\gratio} \frac{z^2}{1+\zeta^2} } }{2z} \\
         \nu_2(z) =& \frac{ - \big( \frac{\gratio_2 - \gratio_1}{\gratio} + \frac{z^2}{1+\zeta^2} \big) - \sqrt{ \big( \frac{\gratio_2 - \gratio_1}{\gratio} + \frac{z^2}{1+\zeta^2} \big)^2 - \frac{4 \gratio_2}{\gratio} \frac{z^2}{1+\zeta^2} } }{2z} \label{eq:nu_2_le_l}
     \end{align}
     Then together with the explicit formulas of $\tau_1(u)$ and $\tau_2(u)$ in Eq. \eqref{eq:FixedPoints_1}, we can directly verify Eq. \eqref{eq:tau_u_nu_u_corrs} in this special case.
\end{proof}

\subsection{Some special regimes} 
\label{sec:specialregime}
In this section, we show that the limiting formulas in the overparametrized and underparametrized regime (c.f. Eq. \eqref{eq:B_V_test_2} and \eqref{eq:B_V_test_3}) as well as the $\kappa_1 = \kappa_2 < \ell$ regime (c.f. Eq. \eqref{eq:FixedPoints_1}) can all be derived as certain limits of the fixed point equation \eqref{eq:FixedPoints}.
\subsubsection{ Overparametrized regime }
The overparametrized regime corresponds to the case when 
$\psi_1 \to \infty$ and $\psi_2 \in (0, \infty)$ is fixed. In this case, from \eqref{eq:FixedPoints} we can get:
\begin{equation}
    \begin{aligned}
        \zeta^2 \tau_1 \tau_2 +  \frac{ \tau_2 - \tau_1 }{\psi_2} &= \varepsilon_1 \\
        (z+1)\tau_1 - 1 + \zeta^2 \tau_2 &= \varepsilon_2
    \end{aligned}
\end{equation}
where $\varepsilon_1,\varepsilon_2\to 0$ as $\psi_1 \to \infty$. By solving the above equation, we can get: as $\psi_1 \to \infty$,
\begin{equation}
\label{eq:tau_over_limit}
\begin{aligned}
    \tau_1(z) &\to \frac{ - [( \frac{z+1}{\zeta^2} + 1 ) \frac{1}{\psi_2} - 1] + \sqrt{ [( \frac{z+1}{\zeta^2} + 1 ) \frac{1}{\psi_2} - 1]^2 + \frac{4(z+1)}{\psi_2 \zeta^2} } }{2(z+1)} \\
    \tau_2(z) &\to \frac{1 - (z+1)\tau_1(z)}{\zeta^2}
\end{aligned}    
\end{equation}
Then one can check: as $\psi_1 \to \infty$,
\begin{align}
\label{eq:tau1zeta_over_limit}
    \tau_1(\barlambda) \zeta^2 \to \vartheta
\end{align}
where $\vartheta$ is defined in Eq. \eqref{eq:defchi}.
Combining Eqs. 
\eqref{eq:tau_over_limit}, \eqref{eq:tau1zeta_over_limit}, \eqref{eq:B_V_test_1} and \eqref{eq:B_V_norm_1}, we can verify $(\cB_{\test},\cV_{\test},\alpha_c)$ and $(\cB_{\lnorm}, \cV_{\lnorm})$ in Eqs. \eqref{eq:B_V_test_2} and \eqref{eq:B_V_norm_2} are the $\psi_1 \to \infty$ limit of the corresponding quantities in Eqs. \eqref{eq:B_V_test_1} and \eqref{eq:B_V_norm_1}.
 
\subsubsection{ Underparametrized regime } 
The underparametrized limit corresponds to the case when 
$\psi_2 \to \infty$ and $\psi_1 \in (0, \infty)$ is fixed. In this case, from Eq. \eqref{eq:FixedPoints} we can get:
\begin{align}
    z \tau_1 - 1 &= \varepsilon_1 \\
    \psi_1 \zeta^2 \tau_1 \tau_2 + (\tau_2 - \tau_1 ) ( \tau_1 + \zeta^2 \tau_2 ) &= \varepsilon_2
\end{align}
where $\varepsilon_1,\varepsilon_2\to 0$ as $\psi_2 \to \infty$. Solving the above equation, we can get: as $\psi_2 \to \infty$,
\begin{equation}
    \begin{aligned}
        \tau_1(z) &\to \frac{1}{z} \\
        \tau_2(z) &\to \frac{ - ( 1 - \zeta^2 + \psi_1 \zeta^2 ) + \sqrt{ ( 1 - \zeta^2 + \psi_1 \zeta^2 )^2 + 4\zeta^2 } }{2 \zeta^2 z}
    \end{aligned}
\end{equation}
Combing the above two limits with Eqs. \eqref{eq:B_V_test_1} and \eqref{eq:B_V_norm_1}, we can verify that Eqs. \eqref{eq:B_V_test_3} and \eqref{eq:B_V_norm_3} are the $\psi_2 \to \infty$ limit of Eqs. \eqref{eq:B_V_test_1} and \eqref{eq:B_V_norm_1}.

\subsubsection{ The $\kappa_1 = \kappa_2 < \ell$ regime } 
The $\kappa_1 = \kappa_2 < \ell$ regime corresponds to the limiting case when $\psi_1 \to 0$ and $\psi_2\to 0$, while $\frac{\psi_1}{\psi_2}\in[c,1/c]$ for some $c>0$.
In this case, from the first equation of \eqref{eq:FixedPoints}, we can get
\begin{align}
    \tau_1 - \tau_2 = \varepsilon_1
\end{align}
where $\varepsilon_1\to 0$. On the other hand, from fixed point equation \eqref{eq:FixedPoints} we can obtain that
\begin{align}
    (z \tau_1 - 1) + ( \tau_1 + \zeta^2 \tau_2 ) \Big[ (z \tau_1 - 1) \frac{\psi_2}{\psi_1} + 1 \Big] = 0.
\end{align}
Combining the above two equations, we can get:
\begin{align}
    \tau_1 (z), \tau_2 (z) \to \frac{1}{2 z} \left\{ \left( 1 - \gamma - \frac{\gamma z}{1+\zeta^2} \right) + \sqrt{ \left( 1 - \gamma - \frac{\gamma z}{1+\zeta^2}  \right)^2 + \frac{4\gamma z}{1+\zeta^2} } \right\}.
\end{align}
which are the expressions of $\tau_1(z)$ and $\tau_2(z)$ in Eq. \eqref{eq:FixedPoints_1}.

\end{document}